\documentclass[a4paper]{ociamthesis}
\usepackage[utf8]{inputenc}
\usepackage{amsthm, amsmath, amsfonts}
\usepackage{wrapfig}
\usepackage{bm}
\usepackage{thm-restate}

\usepackage{etoolbox}

\AtBeginEnvironment{subappendices}{%
\clearpage
  \addcontentsline{toc}{section}{Appendices}
  \counterwithin{figure}{section}
  \counterwithin{table}{section}
}

\newtheorem{definition}{Definition}[section]
\newtheorem{assumption}{Assumption}
\newtheorem{theorem}{Theorem}[section]
\newtheorem{proposition}[theorem]{Proposition}
\newtheorem{corollary}[theorem]{Corollary}
\newtheorem{lemma}[theorem]{Lemma}
\usepackage{natbib}

\newcommand{\ginv}{$\grp$-invariant}
\newcommand{\bbR}{\mathbb{R}} 
\newcommand{\bbE}{\mathbb{E}} 
\newcommand{\variance}{\text{\rm{Var}}}
\newcommand{\Var}{\variance}
\newcommand{\calX}{\mathcal{X}} 
\newcommand{\calY}{\mathcal{Y}} 
\newcommand{\symm}{S} 
\newcommand{\T}{\mathcal{T}}  
\newcommand{\grp}{\mathcal{G}} 
\newcommand{\calE}{\mathcal{E}}
\newcommand{\scD}{\mathscr{D}}
\newcommand{\calL}{\mathcal{L}}
\newcommand{\data}{\mathcal{D}}
\newcommand{\equdist}{\overset{D}{=}}
\newcommand{\bX}{\mathbf{X}} 
\newcommand{\calD}{\mathcal{D}}
\newcommand{\dgd}{P_{\calD}} 
\newcommand{\bw}{\mathbf{w}}

\newcommand{\haar}{\lambda}
\newcommand{\trdata}{\mathcal{D}^n}
\newcommand{\fclass}{F} 
 %
\newcommand{\invf}[1]{#1^{\circ}} 


\newcommand{\borel}{\mathcal{B}} 
\newcommand{\indicator}{\mathds{\indicator}}
\newcommand{\Orbit}{\Phi}

\newcommand{\argdot}{{\,\vcenter{\hbox{\tiny$\bullet$}}\,}}

\def\tmu{\tilde{\mu}}
\def\tnu{\tilde{\nu}}


\newcommand{\loss}{\ell} 
\newcommand{\risk}{R_{\loss}} 
\newcommand{\eRisk}{\widehat{R}_{\loss}} 
\newcommand{\eRiskAug}{\invf{\widehat{R}}_{\loss}} 

\newcommand{\eRiskAugMC}{\widehat{R}_{\loss}^{\widehat{\circ}}} 
\newcommand{\KL}[2]{\text{\rm{KL}}(#1 \; || \; #2)} 
\newcommand{\invfMC}[1]{#1^{\widehat{\circ}}}

\newcommand{\statespace}{\mathcal{X}}
\newcommand{\actionspace}{\mathcal{A}}
\newcommand{\mdp}{\mathcal{M}}


\newcommand{\repdim}{K}
\newcommand{\repix}{k}
\newcommand{\qrdqnlosstau}{\mathcal{L}_{\kappa}(\hat{\bm{\tau}})}


\newcommand{\updatematrix}{\mathbf{I}_{\Delta}}
\newcommand{\xtest}{x_{\mathrm{test}}}

\newcommand{\Xtest}{X_{\mathrm{test}}}
\newcommand{\Xtrain}{X_{\mathrm{train}}}
\newcommand{\train}{{\mathrm{train}}}
\newcommand{\test}{{\mathrm{test}}}
\newcommand{\ttheta}{\tilde{\theta}}
\newcommand{\TD}{\mathrm{TD}}

\newcommand{\argmax}{\mathrm{arg max}}
\newcommand{\argmin}{\mathrm{arg min}}
\usepackage{mathrsfs}


\newcommand{\envs}{\mathcal{E}}
\newcommand{\states}{\mathcal{X}}


\newcommand{\cP}{\mathcal{P}}
\newcommand{\cX}{\mathcal{X}}

\newcommand{\cA}{\mathcal{A}}
\newcommand{\model}{\mathcal{M}}

\newcommand{\effdim}{\text{feature rank}\xspace} 
\newcommand{\Effdim}{\text{Feature rank}\xspace} 
 
\newcommand{\infer}{\pyoi}
\newcommand{\pyoi}{\text{InFeR}\xspace} 

\newcommand{\gradint}{\mathrm{I}_{\nabla}}
\newcommand{\deltaint}{\mathrm{I}_{\Delta}}

\newcommand{\bx}{\mathbf{x}}
\newcommand{\by}{\mathbf{y}}
\newcommand{\btheta}{\bm{\theta}}

\newcommand{\keyinsight}[1]{

\mdfsetup{%
backgroundcolor=Cerulean!3,
linecolor=NavyBlue,
linewidth=3pt}
\begin{mdframed}
\begin{minipage}[t]{\linewidth}
{\color{NavyBlue}\textbf{Key insight.}}\\
{#1}
\end{minipage}
\end{mdframed}
}
\newcounter{hyp}
\counterwithin{hyp}{section}

\newcommand*{\hypothesis}[2]{
\vspace{2mm}
\mdfsetup{%
backgroundcolor=Orange!5,
linecolor=BrickRed,
linewidth=3pt}
\begin{mdframed}
\begin{minipage}{\linewidth}
\refstepcounter{hyp}
{\color{BrickRed}\textbf{Hypothesis {\thehyp}:} }
{#2}
\end{minipage}
\end{mdframed}
}

\usepackage{cleveref}
\usepackage[framemethod=TikZ]{mdframed}
\usepackage[ruled,vlined,algosection]{algorithm2e}
\usepackage{float}
\usepackage{algorithmic}
\usepackage{bbm}
\usepackage{bibentry}
\usepackage{chngcntr}
\usepackage{etoolbox}
\usepackage{lipsum}

\AtBeginEnvironment{subappendices}{%
\chapter*{Appendix}
\addcontentsline{toc}{chapter}{Appendices}
\counterwithin{figure}{section}
\counterwithin{table}{section}
}

\title{Generalization Through the Lens of Learning Dynamics}
\author{Clare Lyle}
\college{University College}
\degree{Doctor of Philosophy}

\usepackage[toc]{glossaries}

\begin{document}
\setlength{\textbaselineskip}{22pt plus2pt}

\setlength{\frontmatterbaselineskip}{17pt plus1pt minus1pt}

\setlength{\baselineskip}{\textbaselineskip}

\maketitle
\thispagestyle{empty}
\begin{romanpages}
\begin{abstract}
    A machine learning (ML) system must learn not only to match the output of a target function on a training set, but also to generalize to novel situations in order to yield accurate predictions at deployment. In most practical applications, the user cannot exhaustively enumerate every possible input to the model; strong generalization performance is therefore crucial to the development of ML systems which are performant and reliable enough to be deployed in the real world. While generalization is well-understood theoretically in a number of hypothesis classes, the impressive generalization performance of deep neural networks has stymied theoreticians. In deep reinforcement learning (RL), our understanding of generalization is further complicated by the conflict between generalization and stability in widely-used RL algorithms. This thesis will provide insight into generalization by studying the learning dynamics of deep neural networks in both supervised and reinforcement learning tasks. 

We begin with a study of generalization in supervised learning. We propose new PAC-Bayes generalization bounds for invariant models and for models trained with data augmentation. We go on to consider more general forms of inductive bias, connecting a notion of training speed with Bayesian model selection. This connection yields a family of marginal likelihood estimators which require only sampled losses from an iterative gradient descent trajectory, and analogous performance estimators for neural networks.
We then turn our attention to reinforcement learning, laying out the learning dynamics framework for the RL setting which will be leveraged throughout the remainder of the thesis. We identify a new phenomenon which we term capacity loss, whereby neural networks lose their ability to adapt to new target functions over the course of training in deep RL problems, for which we propose a novel regularization approach. Follow-up analysis studying more subtle forms of capacity loss reveals that deep RL agents are prone to memorization due to the unstructured form of early prediction targets, and highlights a solution in the form of distillation. We conclude by calling back to a different notion of invariance to that which started this thesis, presenting a novel representation learning method which promotes invariance to spurious factors of variation in the environment. 

\end{abstract}

\begin{acknowledgements}
First and foremost, this thesis would not have been possible without the invaluable guidance and support of my supervisors, Yarin Gal and Marta Kwiatkowska. Their advice, mentorship, and insight over the past almost four years has been vital to my development as a scientist. I am deeply grateful to both for being so incredibly generous with their time, particularly in the early years of the DPhil, and for giving me the freedom to set my own research direction and explore a broad range of topics.  
Thanks are also due to Marc Bellemare, Pablo Samuel Castro, and Prakash Panangaden for their mentorship prior to the start of my DPhil. I learned many invaluable lessons about how to do good research from their example, which have served me well throughout my DPhil. 

I have had the good fortune to engage in a number of collaborations while at Oxford. Many thanks go to Amy Zhang, with whom I wrote the first published paper of my DPhil, for modelling how to organize and execute a research project. I am also indebted to Mark Rowland and Will Dabney, who have been a joy to work with on many of the papers that appear in this document. This thesis would not have been possible without valuable discussions and collaborations with Benjamin  Bloem-Reddy, Mark van der Wilk, Benjie Wang, Angelos Filos, Natasha Jaques, Greg Farquhuar, Lisa Schut, Robin Ru, Aidan Gomez, Lorenz Kuhn, Jannick Kossen, Neil Band, Georg Ostrovski, Shagun Sodhani, and Andreas Kirsch.

Beyond direct collaborations, I also benefited immensely from being part of the broader machine learning community at Oxford. My perspective as a researcher was enriched by having a group of brilliant people with whom I could run over nascent ideas, debug a tricky experiment, and gripe about Reviewer 2. Thanks go in particular to Joost, Milad, Seb, Panos, Tim, Jan, Freddie, Pascal, Matt, Pascale, Rhiannon, Luca, Andrea, Emi, Michael, Sahra, Charline, and Jakob. Thanks as well go to the many people I interacted with at DeepMind, including Daniel, Dave, Remi, Diana, Anna, Bilal, Bernardo, and Mo.

Finally, I would be remiss to omit my family, whose unconditional support has provided the solid foundation on which I have been able to take risks and and explore as a researcher. Thanks go as well to the friends I've made in Oxford: Alex, Caitlin, Nayani, Colin, Matt, Rahul, Imi, Becca, and my teammates from OUBbC and UCBC. Thanks in particular to the individual who did not want to be named in this document for challenging me to address meaningful problems and for proofreading many chapters of this thesis.

\end{acknowledgements}

\setcounter{tocdepth}{1}
\dominitoc 
\tableofcontents
\listoffigures 
\listoftables
\end{romanpages}

\dominitoc 
\adjustmtc 
\adjustmtc
\chapter{Introduction}
\label{chp:introduction}

\minitoc

\section{Learning to generalize}

The ability to generalize a lesson from the classroom to the real world is what separates \textit{learning} from \textit{memorization}. In a range of tasks ranging from mathematics to language, humans are remarkably skilled at identifying abstract patterns, and applying these patterns to novel contexts. The reader is unlikely to have previously encountered the sentence `the armadillo tipped its blue hat and returned to the game of marbles', and yet would likely have no trouble interpreting it, or answering questions concerning the colour of the armadillo's hat. However, the range of domains in which humans exhibit this ability to learn and generalize is limited. A human can easily parse natural language, but will struggle to identify structure in strings of base pairs arising from a genome. In these settings, we can benefit from computational tools. Machine learning approaches, in particular the training of deep neural networks on large datasets, present a promising direction towards the development of general algorithms which can identify patterns in data and solve a range of problems. 

In a typical machine learning pipeline, the practitioner collects data (a \textit{training set}) which is then fed into a machine learning algorithm, with the hope that a system which can accurately model this data will have captured the underlying structure of the task. The training set will not encompass the set of all possible inputs a model may receive at deployment; a learned predictor must \textit{extrapolate} from the data it was trained on in order to make useful predictions. Generalization is crucial both to obtain good performance and to ensure the safety and reliability of these systems when they encounter data that was not seen during training. Yet how do we ensure that these highly expressive systems are {learning} and not {memorizing}? This question is a central concern of a long line of literature, and of this thesis.

\subsection{Defining generalization}

The machine learning community broadly distinguishes between two classes of generalization: within-distribution generalization, and out of distribution (OOD) generalization. While both types of generalization concern the performance of a predictor on data not seen during training, they differ in their structural assumptions on how this data is generated. Within-distribution generalization assumes that the process by which the training data was collected will also generate the data on which we will evaluate the trained model. This assumption is used in a rich theoretical literature which provides provable guarantees on the generalization performance of certain classes of learning algorithms. 
However, it is not reflected in many of the settings found in practice, where the procedure by which the training data is collected differs from how data is generated at evaluation; this difference is widely referred to as \textit{distribution shift}. 

The real world is full of nonstationarities which can induce distribution shift: a user's taste in films will evolve as they age, slang terms enter and leave common usage, and interest in advertisements for winter boots may vary with the seasons. Data collected one month may quickly fall out of date and cease to be reflective of the phenomenon being modelled within a matter of weeks. Even the ways in which data is collected may introduce biases into the training set. Many image datasets, for example, include the main subject centred nicely in the middle of the frame, signalling to a learning algorithm that the corners of an image do not contain relevant information. These nonstationarities and biases can result in large {distribution shifts} when the model is deployed. The technical difficulty of overcoming such distribution shifts will depend on their structure and magnitude. Work on developing models which are robust to distribution shifts typically must make explicit assumptions on the type of shift being considered.

\subsection{The importance of generalization}

While we often marvel at great feats of memorization, such as memory champions who can recite the order of a shuffled deck of cards after a minute's concentration or taxi drivers who can recall the street maps of large cities by heart, memorization is discouraged when it occurs in place of understanding. The difference between the two is one of generalization: memorizing multiplication tables enables quick recall, but learning how to multiply numbers together algorithmically enables generalization to previously unseen number pairs. While we hesitate to anthropomorphize a linear regression model in saying that it `understands' the relationship between inputs and outputs, we nonetheless seek out a similar phenomenon in our machine learning systems. It is not sufficient to make perfect predictions on the training data: the model must be able to apply the relationship between input and output to new contexts, and make accurate predictions in these contexts. 

This generalization is crucial if we hope to see the power of machine learning deployed in real-world settings, where mistakes on novel inputs can have catastrophic consequences. Unforeseen distribution shifts in medical data, such as replacing an imaging device in a hospital, can result in a decline in performance that has serious ramifications for patients' health. Cross-validation, the canonical approach to evaluating generalization performance, will not identify the model's robustness to distribution shifts that the developer did not already foresee and include in the evaluation set. This presents a particular challenge for practitioners; even if a decline in performance is detected, the best we can hope to do is re-train or fine-tune the model to improve its performance on the new data. This can become much more expensive than if the model had learned relationships that generalized well off the bat. 

\subsection{Deep learning}
\begin{figure}
    \centering
    \includegraphics[width=13cm]{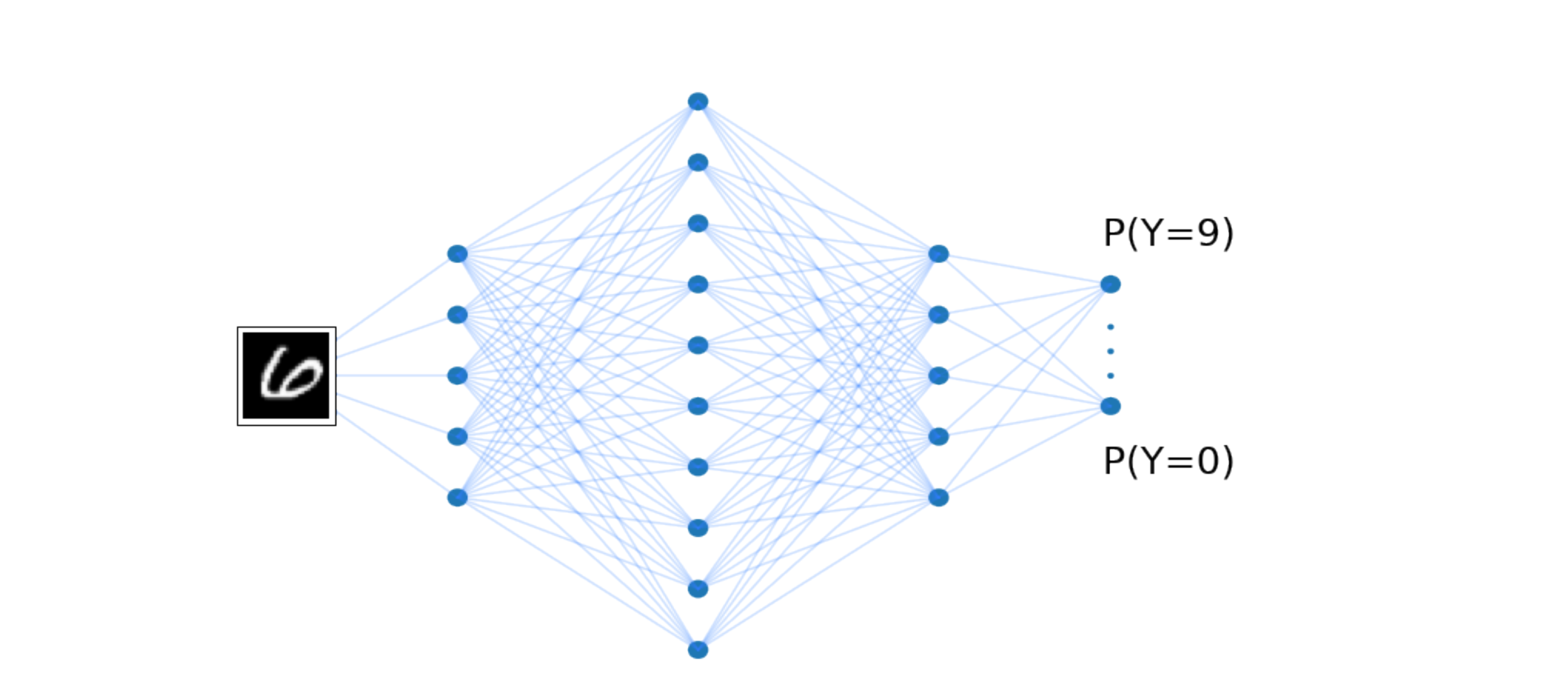}
    \caption[A visualization of a fully connected multi-layer perceptron.]{A visualization of a fully connected multi-layer perceptron. The neural network consists of a series of layers, where each layer applies a non-linear activation to a linear combination of activations from the previous layer. Each circle in this figure corresponds to an activation unit, while the lines correspond to weights connecting the left hand side layer to the right. The output of the network in this case corresponds to a probability distribution indicating the likelihood that the input image corresponds to a given digit.}
    \label{fig:dnn}
\end{figure}

The deep learning revolution has spurred the growth of several international conferences, the creation of multiple industrial AI labs, and the allocation of three Turing awards. This is due to the success of deep neural networks (DNNs, see Figure~\ref{fig:dnn}) at modelling a wide range of data modalities. The applications of DNNs range from the benign, such as helping people with visual impairments navigate a street and translating text from one language to another, to those with the potential for malicious use, such as identifying human faces in surveillance footage. Strikingly, DNNs often obtain impressive generalization performance and are considered robust enough to be used in many commercial applications.

The success of these models has brought attention to the discipline, but also stymied theoreticians. Compared to the expressivity of deep neural networks, traditional learning algorithms seek to model data by searching over a relatively small class of functions. Theoretical analysis of these algorithms crucially depends on the size of the function class in order to provide guarantees on the expected error of the function found by the algorithm on new data. In contrast, the set of functions expressible by a given neural network architecture is so large as to result in vacuous results when traditional analysis is applied to deep learning. This has opened up a number of exciting approaches to study the generalization of DNNs which often have a more experimental flavour than prior work on learning theory.

\section{Learning to act}

We will be particularly interested in studying machine learning systems which are capable of \textit{doing things} in the world, rather than passively outputting predictions. This is captured by the {reinforcement learning} (RL) framework (see Figure~\ref{fig:rl_formulation}). At its core, a reinforcement learning problem consists of an \textit{agent} which can interact with an \textit{environment} with the goal of maximizing the cumulative \textit{reward} signal that it receives. Just as a trainer can teach a dog to sit by providing suitable rewards to reinforce the desired behaviour, we can apply the power of machine learning algorithms to maximize a prespecified reward function in the RL framework. We use the terminology \textit{behaviour policy} to refer to the distribution over actions that the agent takes in each state of the environment. The \textit{optimal policy} is the action-selection rule which maximizes the expected cumulative reward from each state.

\subsection{The reinforcement learning problem}

Learning how to behave optimally is often aided by learning to predict the expected cumulative reward the agent will receive after it visits a state and then follows some behaviour policy.
Our usage of the word `learn' differs from that used in other areas of machine learning, where it refers to the identification of a relationship between inputs and outputs from a data set. An RL agent does not receive explicit information about the optimal policy from the environment; this policy must be deduced from the reward and transition structure via planning. Reinforcement learning is thus closely related the problem of \textit{optimal control}. Control problems assume an environment, modelled as a Markov Decision Process (MDP), with known transition and reward structure and seek to identify an optimal behaviour policy. Reinforcement learning also seeks to obtain an optimal policy, but does not assume that the structure of the MDP is known in advance. Instead, the agent must interact with the MDP in order to obtain information about the reward and transition structure, and use this information to identify an approximately optimal policy. The number of interactions with the environment needed for an agent to identify an approximately optimal policy, its \textit{sample complexity}, is a key criterion by which RL algorithms are evaluated, and which distinguishes reinforcement learning from optimal control, where the dynamics of the world are known a priori.
\begin{figure}
    \centering
    \includegraphics[width=9cm]{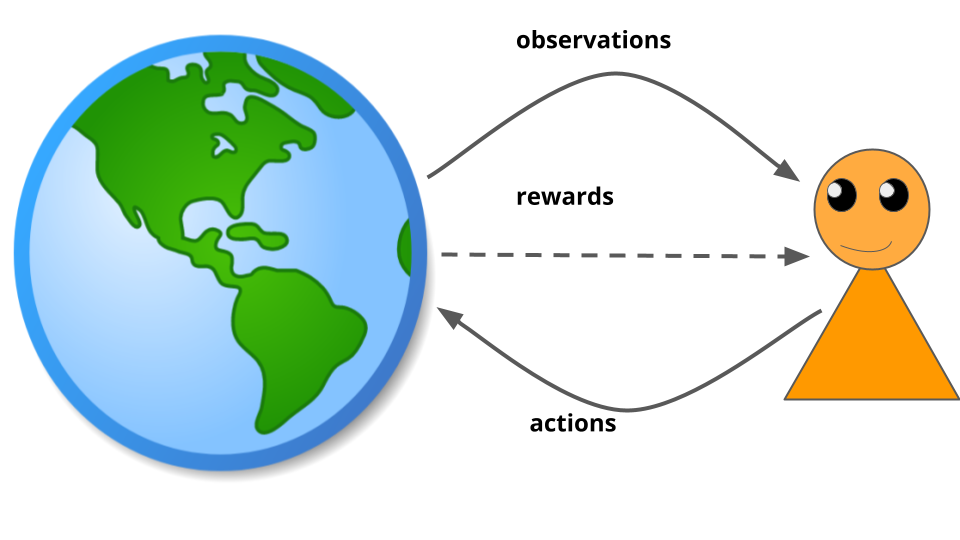}
    \caption{A figure visualizing the RL problem; featuring the environment (pictured on the left) and agent (right) dichotomy.}
    \label{fig:rl_formulation}
\end{figure}
\subsection{Generalization in reinforcement learning}

Many problems of interest in reinforcement learning do not require generalization, i.e. it is assumed that the agent will only encounter states at deployment that it encountered during training. Indeed, the field contains a rich literature on the analysis of \textit{tabular} problems, whereby the states of the MDP are simply an enumeration of integers and learning consists of updating a lookup table.
In many \textit{rich observation} settings, however, there may be an interesting functional relationship between the state observations emitted by the environment and the reward and transition structure at that state. While lookup tables are sufficient for small state spaces, most applications of RL to real-world data necessitate generalization either due to the magnitude of the state space or in order to be robust to distribution shifts. For example, the angles and torques of a robot's actuators can take on a continuum of values, and the set of possible image inputs far exceeds the size that can be fit into computer memory as a lookup table. Further, even if such a representation of the value function were possible  it is not clear whether that would be desirable. Visual similarity between states can provide information about the optimal policy and value function which may be useful to the agent. A function approximator with an appropriate inductive bias will be able to leverage this similarity to accelerate the learning process.

However, a dark side of generalization arises in reinforcement learning problems: instability. Instability is particularly problematic in some of the most popular algorithms in the deep RL literature, where careful hyper-parameter tuning and engineering tricks are needed to prevent the network parameters from diverging to infinite values. Excessive generalization can also slow down learning if the inductive bias encoded by the function approximator is not aligned with the structure of the environment. A bias towards smooth functions might thus result in a network that fails to accurately distinguish between states with large differences in value. The tension between generalization and stability in deep reinforcement learning presents an additional layer of difficulty to deep RL, as compared to supervised deep learning, and is a problem we will explore in later chapters.

\section{Understanding the learning process}
 
Throughout this thesis, we will seek to understand how a model will generalize by studying the optimization trajectory it took during training. In contrast, most theoretical results in the literature characterize generalization using only properties of the final outputs of a learning algorithm, i.e. the neural network's final trained parameters. Studying the trajectory of a learning algorithm (which we will refer to as its \textit{learning dynamics}) gives us the opportunity to gain insights into a model that cannot be obtained by only considering the final trained parameters. We will leverage these insights in later chapters to obtain novel estimators with significant predictive power over the ranking of a model's final generalization performance.

\subsection{Learning to generalize between data points}

Many neural network training procedures leverage large datasets which cannot fit onto a single GPU. To accelerate training, learning algorithms often partition the data into subsets called \textit{minibatches}. The learner then updates its predictions for each minibatch iteratively. This procedure provides extremely useful information about whether the agent is learning to generalize or to memorize, by revealing whether the learner's update based on one minibatch has \textit{generalized} to improve its predictions on the other minibatches. 

A key intuition throughout this thesis is that generalization between disjoint subsets of the training set can be indicative of generalization to the test set. If an update to the network intended to improve its predictions for one minibatch also improves the network's predictions on many other data points, this is likely to result in an improvement to the learner's predictions on novel inputs drawn from the same process that generated the training data. In contrast, if an update does not improve the learner's predictions on the other data points in the training set, it is unlikely to improve the agent's predictions on the data it will see at deployment. This relationship will be explored in greater depth in Chapter \ref{chp:supervised}.

\subsection{Stability vs extrapolation}

Our study of learning dynamics in RL agents will be particularly enlightening, as these dynamics are much more complex than their supervised counterparts. Supervised learning algorithms tend to induce well-behaved dynamics that, even if they correspond to a non-convex loss surface, nonetheless at least come with reasonable guarantees on the convergence of learning algorithms to local minima. Reinforcement learning, in contrast, incorporates nonstationarity as part of the learning process. This nonstationarity has many forms: the distribution of states that the agent visits will change as its policy improves, but so will the target values that the agent is trying to predict. This results in a dynamical system that superficially resembles the stable gradient descent regime of supervised learning algorithms, but lacks convergence guarantees many problem settings of interest. 

Even more pernicious, reinforcement learning agents must also face the challenge that the very properties of a function approximator which are associated with better generalization in the supervised learning setting are precisely those which can cause divergence in RL algorithms, as we will see in later sections. This means that deep RL agents must overcome two distinct hurdles in order to learn an optimal behaviour policy which also generalizes to new settings. First, they must learn to behave optimally in the training environment, while avoiding issues of divergence and other pathologies of function approximation in RL. Second, having achieved a high-performing policy, they must then ensure that the policy is robust to superficial changes to the observations they receive from the environment. 
 
\section{Thesis contributions and structure}

\subsection{Contributions}
Broadly speaking, this thesis presents a set of novel empirical and theoretical tools to predict, understand, and improve generalization in deep neural networks in a range of problem settings. Crucial to these results will be an analysis of the \textit{dynamics} of learning algorithms in various settings. The primary contributions of this thesis are enumerated as follows:
\begin{enumerate}
    \item A characterization of the relationship between invariance, training speed, and generalization, with theoretical results complemented by empirical validation of our main findings in practically relevant settings. 
    \begin{enumerate}
          \item A theoretical analysis of the effect of invariance on generalization via PAC-Bayes bounds, allowing an explicit characterization of the role of symmetries in generalization bounds via a quantity we term the \textit{symmetrization gap}. This theoretical analysis is complemented by an empirical study which highlights the limitations of the types of approximate invariance promoted by data augmentation to generalize to out of distribution inputs.
        \item A new estimator of the marginal likelihood which complements the analysis of upper bounds on generalization error to provide useful rankings of models for architecture search and hyperparameter selection. The analysis of this estimator reveals a deep connection between training speed and Bayesian model selection, and yields a novel performance estimator for architecture search in deep neural networks.
    \end{enumerate}
  
    \item A theoretical framework for the study of representation dynamics in deep reinforcement learning along with practical insights derived thereof. These insights concern both the ability of a learned feature representation to linearly approximate a prediction target, and its ability to generalize to novel prediction objectives over the course of training.
    \begin{enumerate}
        \item A theoretical model and analysis of the representation learning dynamics of value-based reinforcement learning algorithms, yielding an analytic characterization of the effect of auxiliary tasks on agents' learned representations. 
        \item The identification of the `capacity loss' phenomenon in deep RL which characterizes the tendency of neural networks to catastrophically overfit to early prediction targets in sparse-reward environments. 
        \item A new regularization method based on the insights from this previous analysis that improves generalization to new prediction objectives even after long training periods. 
    \end{enumerate}
    \item Theoretical analysis leveraging the above framework to provide insight into and algorithmic improvements to generalization to novel observations and environments.
    \begin{enumerate}
        \item A theoretically grounded explanation for prior empirical observations of overfitting in the broader deep RL literature, including dense-reward problems, and a set of recommendations for principled approaches to reduce memorization and improve generalization between observations. 
    \item A representation-learning objective which goes beyond the single-environment setting to enable zero-shot generalization to novel environments sharing underlying structure with the training environments.
    \end{enumerate}
     
\end{enumerate}

\subsection{Warmup: supervised learning}
The first two content chapters of this thesis will lay the groundwork for our understanding of generalization in deep neural networks. Chapter \ref{chp:invariance} presents a novel PAC-Bayes generalization bound for invariant models, characterizing the effect of invariance on generalization through a quantity which we term the \textit{symmetrization gap}. We empirically verify that the symmetrization gap appears in computations of PAC-Bayes bounds for invariant deep neural networks, and further show a strong correlation between the rankings of these upper bounds and the rankings given by final generalization performance. This chapter further contributes new empirical analysis of the optimization trajectories of DNNs trained to exhibit approximate invariances via data augmentation, illustrating the limitations of these trained invariances to {extrapolate} beyond their training distribution. The importance of invariance to generalization in reinforcement learning will be revisited in Chapter~\ref{chp:icp}.

A failing of PAC-Bayesian generalization bounds is that they do not offer predictions about which of a set of models will generalize best. In order to achieve correctness, these bounds pay the price of predictive power. Chapter~\ref{chp:supervised} shifts focus to consider performance estimators which can \textit{predict} the relative performance of different neural network architectures. It presents a novel marginal likelihood estimator which can be used to enable Bayesian model selection for a broader class of models from which we only require accurate posterior samples, overcoming the normative limitations of generalization bounds and computational challenges of the exact marginal likelihood. This estimator has a number of appealing properties, chief among which is that it can be applied to a subset of training losses from a gradient descent trajectory. This illustrates a deep relationship between generalization and learning dynamics, as a model's generalization performance can thus be said to depend on its \textit{training speed}. These chapters will principally be based on the following papers:
\nobibliography*
\begin{itemize}
    \item \bibentry{lyle2020bayesian}
    \item \bibentry{lyle2020benefits}
\end{itemize}
with supporting evidence drawn from
\begin{itemize}
    \item \bibentry{ru2020revisiting}
\end{itemize}
\subsection{Generalization in reinforcement learning}
The remaining chapters present an analysis of the learning dynamics of RL algorithms and explore a number of applications of this analysis to representation learning and generalization in deep RL. Chapter~\ref{chp:rl-dynamics} lays down a novel \textit{learning dynamics} framework which will be leveraged throughout the remainder of the thesis, and reveals that the dynamics followed by temporal difference methods can cause the learned representation to reflect aspects of the transition structure of the environment \citep{lyle2021effect}. In Chapter~\ref{chp:rep-learning}, we explore one application of this analysis which yields a novel regularization approach, $\pyoi$ \citep{lyle2021understanding}. This method enables deep RL agents to attain nontrivial return in the notoriously difficult Montezuma's Revenge game using only a naive $\epsilon$-greedy exploration algorithm. This section is based on the following papers:
\begin{itemize}
    \item \bibentry{lyle2021effect}
    \item \bibentry{lyle2021understanding}
\end{itemize}

Chapter~\ref{chp:gen-rl} explores implications of the previous chapters on generalization, providing novel analysis and insight into why value-based deep reinforcement learning often produces highly brittle agents \citep{lyle2022generalization}, as well as identifying principled approaches to remedy the tendency of deep RL methods to overfit. Chapter~\ref{chp:icp} revisits the discussion of invariance from Chapter~\ref{chp:invariance} with a new perspective grounded in causal inference, presenting a novel representation learning method to encourage generalization in some classes of MDPs by promoting invariance to spurious factors of variation in the environment \citep{zhang2020invariant, lyle2021causal}. These chapters are based on the following papers.
\begin{itemize}
    \item \bibentry{lyle2022generalization}
    \item \bibentry{zhang2020invariant}
\end{itemize}

A number of papers that I worked on during my PhD did not fit into this thesis, including the following.

\begin{itemize}
    \item \bibentry{wang2021provable}
    \item \bibentry{filos2021psiphi}
    \item \bibentry{kossen2021self}
    \item \bibentry{bellemare2019geometric}
    \item \bibentry{lyle2019comparative}
\end{itemize}

A discussion of contributions to joint work can be found at the end of the thesis.

The key idea driving this thesis is that properties of a network's training trajectory can tell us a great deal about how it will generalize to novel inputs. Chapters 3 and 4 ground this idea in the supervised learning regime by studying generalization between minibatches and validate its utility by developing both novel generalization bounds and practical model selection tools. Chapters 5 and 6 identify key properties of the training dynamics of reinforcement learning agents that differ from the supervised setting, and show how these properties can be both beneficial and detrimental to representation learning. Finally, Chapters 7 and 8 apply the notions of invariance and within-training-set generalization from Chapters 3 and 4 to the RL problem, leveraging the theoretical and empirical tools presented in Chapters 5 and 6 to quantify memorization and improve generalization in reinforcement learning. We will conclude with a discussion of how these ideas can be (and in some cases have already been) further leveraged in the pursuit of learning systems which can effectively learn and generalize across a range of tasks.

\chapter{Background \& literature review}
\label{chp:background}

\minitoc
This chapter will provide the high-level background and literature necessary to contextualize the contributions of this thesis, and set out a standard set of notation that will be used in the following chapters. Where necessary, individual chapters may also contain a background section; these sections will relay information that pertains only to the contents of the chapter that contains them.

\section{Learning frameworks}
\label{bkgd:learning-frameworks}
A learning algorithm can be applied to a wide variety of problems, and it is often useful to categorize learning problems based on the information available to the algorithm. We might task a learning system with identifying a mapping between input-label pairs (supervised learning), with constructing an embedding of inputs that captures relevant structure (unsupervised learning), with generating samples from some distribution (generative modelling), or with maximizing a reward signal via interaction with an environment (reinforcement learning). The types of algorithms we can deploy in each of these situations differ not only in terms of the types of outputs they produce, but also in the stability of their learning dynamics. This thesis will focus primarily on the distinction between supervised learning, where learning dynamics of gradient descent algorithms are relatively straightforward to analyze, and reinforcement learning, wherein analogues of even simple methods such as linear regression can suffer from instability and divergence.

\subsection{Supervised learning}
\label{bkgd:supervised-learning}
Supervised learning is concerned with characterizing a functional relationship between inputs and labels. This framework assumes that the data takes the form of input-label pairs $(\bx, y)$ generated by sampling from some distribution $\dgd$. The objective of the learning algorithm is to find a function $f$ such that $f(\bx) = y$. The task of finding such a function is nontrivial: one must both propose a suitable class of functions over which to search, and an effective means of identifying functions from this class which are likely to capture the target relationship.

\subsubsection{Empirical risk minimization}
\label{bkgd:erm}
We first consider the task of identifying a candidate function $f$ with the property that $f(\bx) = y$ on $(\bx, y)$ pairs sampled from $\dgd$, including those not seen during training. We use a loss function $\ell$ to quantify the quality of $f(\bx)$ as an approximator to $y$; we will also refer to the expectation of this quantity as the \textit{risk}. When $y$ belongs to a continuous space, we call this a \textit{regression} problem. When $y$ belongs to a finite set, we have a \textit{classification} problem.

For a class of functions $F= \{f : \calX \to \calY\}$, a set $\trdata = (\bx_i, y_i)_{i=1}^n \sim \dgd$, and a loss function $\loss : \calY \times \calY \to \bbR$, we define the expected risk $\risk$ as
\begin{align}
  \risk(f) &= \mathbb{E}_{(\bX,Y)\sim\dgd}[\ell (f(\bX), Y)] \; .\label{eq:risk} \\
  \intertext{ Similarly, we define the empirical risk $\eRisk$ as}
  \eRisk(f, \trdata) &= \textstyle\frac{1}{n}\textstyle\sum_{i=1}^n \ell(f(\bx_i), y_i) \; .\label{eq:eRisk}
\end{align}

In regression problems, $\ell$ is typically set to be the squared error $(f(\bx) - y)^2$. In classification, it is usually the cross-entropy loss between a categorical distribution $p(\cdot | \bx)$ and the Dirac delta distribution at the label $y$. In this case, the hypothesis class $\fclass$ will consist of mappings from inputs to \textit{distributions} over labels in $\calY$.

When the hypothesis class $\fclass$ contains the true functional relationship $f$, the learning problem is realizable. However, most settings of interest are not realizable, and so we seek instead a function $f^*$ which minimizes the expected loss over the data-generating distribution, which can be expressed formally as follows.
\begin{equation}
f^* = \argmin_{f \in \fclass} \risk(f) = \argmin_{f \in \fclass}  \mathbb{E}_{(\bX,Y)\sim\dgd}[\ell (f(\bX), Y)]
\end{equation}

In practice, computing this expectation is impossible and we instead use samples to estimate its true value. The empirical risk minimization framework assumes a finite sample $(\bx_i, y_i)_{i=1}^n = \trdata \sim \dgd$, and proposes to find a function $f \in \fclass$ that minimizes the empirical expectation of the loss over this sample, i.e. the empirical risk. Concretely, the function $\hat{f}$ is called the empirical risk minimizer if the following holds:

\begin{equation}
    \hat{f} = \argmin_{f \in \fclass} \eRisk(f) = \argmin_{f \in \fclass} \frac{1}{n} \sum_{i=1}^n \ell(f(\bx_i), y_i) \; .
\end{equation}

The empirical risk minimization principle has seen widespread application in the machine learning literature  \citep{vapnik1991principles, donini2018empirical}. \citet{vapnik1968uniform} characterizes a number of appealing asymptotic properties of the empirical risk minimizer under certain conditions on the function class $\fclass$; this analysis has formed the basis for the work on generalization bounds which will be discussed in Section~\ref{bkgd:generalization-bounds}. This principle is agnostic to the choice of function class, and does not give guidance on how to obtain a minimizer $f^*$ when such classes are too large for exhaustive search. The following discussion will focus on one such class and search procedure: neural networks trained with gradient-based optimization.

\subsubsection{Deep learning}
\label{bkgd:deep-learning}
Deep neural networks (DNNs) form a powerful and expressive class of function approximators \citep{raghu2017expressive}. A DNN is a parameterized function $f_{\theta}$ which consists of layer-wise computations going from input to output. Some neural architectures include recurrent connections, where the output of the network is fed back into itself as input \citep{hochreiter1997long}; this thesis will focus exclusively on feedforward architectures, where only a single pass through the network is executed in order to obtain the function outputs. Feedforward neural networks constitute a rich and widely-used class of models whose dynamics are more amenable to analysis, including fully-connected multi-layer perceptrons (MLPs), convolutional neural networks, transformers \citep{vaswani2017attention}, and ResNets \citep{he2016deep}. The function $f^k$ computed by each layer $k$ of a feedforward neural network typically consists of a linear transformation of the output of the previous layer, followed by a non-linear activation function. A variety of activations have been used in DNNs; one popular example is the Rectified Linear Unit (ReLU), of the form $\sigma(x) = \max (0, x)$. The output of  a neural network can thus be expressed as a composition of layer-wise operations,
\begin{equation}
    f_\theta(\bx) = f_\theta^L \circ \dots \circ f_\theta^1(\bx) \; .
\end{equation}
Deep neural networks are trained using gradient-based optimization algorithms. The most fundamental of these is \textit{stochastic gradient descent}. In this setting, the data $\trdata$ is uniformly at random divided into minibatches of size $k$, $((\bx_{b_1}, y_{b_1}), \dots, (\bx_{b_k}, y_{b_k}))_{b=1}^{\lfloor n/k \rfloor}$. At each minibatch, we compute the gradient of the loss to obtain an update direction $g$ as follows,
\begin{equation*}
   g(\theta, \data_b) = -\nabla_\theta \frac{1}{k} \sum_{i=1}^k \ell(f_\theta(\bx_{b_i}), y_{b_i})\; .
\end{equation*}
This yields an iterative algorithm where the parameters $\theta$ are updated according to the gradient for each successive minibatch. In most cases, we use a step-size parameter $\alpha \in (0, 1]$ to improve the stability and convergence properties of the algorithm. The iteration step typically takes the following general form:
\begin{equation}
    \theta_{t+1} \gets \theta_t + \alpha_t g(\theta_t, \data_t) \; .
\end{equation}
Many formulations of gradient descent allow the step size to depend on the iteration $t$; such dependence on the iteration is crucial to obtain convergence guarantees \citep{robbins1951stochastic}. Many adaptive optimization schemes \citep{duchi2011adaptive, kingma2014adam} further accumulate parameter-dependent learning rates, and allow the update direction $g(\theta_t, \data_t)$ to also depend on prior gradients. While these optimizers will feature in the empirical analysis that appears later, their precise form is not important to our discussion. 

\subsection{Reinforcement learning}
\label{bkgd:rl}
Whereas the supervised learning framework seeks to model a functional relationship between two variables, the reinforcement learning framework models an agent's interaction with an environment with the goal of identifying a behaviour policy which maximizes some reward signal. We model the environment as a Markov Decision Process (MDP) $\mathcal{M} = (\statespace, \actionspace, R, P, \gamma)$, where $\cX$ denotes the state space, $\cA$ the action space, $R:\statespace \rightarrow \mathbb{R}$ the reward function, $P:\statespace \times \actionspace \rightarrow \mathscr{P}(\statespace)$ the transition probability function, and $\gamma$ the discount factor. The agent obtains observation $x$ which indicates the environment's state. It may then take an action $a \in \cA$, after which the environment outputs a new observation $x'$ and a reward $r$. The agent's objective is to maximize the cumulative discounted reward it receives over time.  In \textit{finite-horizon} environments, the agent may only take a finite number of steps; in \textit{continuing} or \textit{infinite-horizon} environments, on which this thesis will predominantly focus, the agent may take an unlimited number of steps in the environment. The discount factor $\gamma$ determines the degree to which near-term rewards are preferred, resulting in the maximization target $\sum_{k=0}^{\infty} \gamma^k R_k$, called the \textit{return}. The return is a random variable that depends on the sequence of states and actions taken by the agent. The value of a state-action pair under some action-selection policy $\pi : \statespace \rightarrow \mathscr{P}(\actionspace)$ is equal to the expected value of the return starting at some state-action pair $(x,a)$ and following the policy $\pi$. This can be expressed by the action-value function $Q^\pi:\cX \times \cA \rightarrow \mathbb{R}$, defined as
\begin{equation}
    Q^\pi(x, a) = \mathbb{E}_{\pi , \cP}[\sum_{k=0}^\infty \gamma^k R(x_k, a_k)|x_0=x, a_0=a ] \;.
\end{equation} 

\textbf{Value-based methods} seek to learn the (resp. action-) value function $V^\pi:\cX\rightarrow \mathbb{R}$ (resp. $Q^\pi : \cX \times \cA \rightarrow \mathbb{R}$) associated with some policy $\pi$ \citep{sutton2018reinforcement}. In particular, we are interested in learning the value function associated with the optimal policy $\pi^*$ which maximizes the expected discounted sum of rewards from any state. Such a value function is then straightforward to translate into an optimal behaviour policy: the agent need only take the action with the highest predicted value at each state. 

At the core of value-based RL are the \emph{Bellman operators} \citep{puterman}. The Bellman policy evaluation operator $T^\pi : \mathbb{R}^{\statespace} \rightarrow \mathbb{R}^{\statespace}$ is defined with respect to a policy $\pi$ and provides a method to \textit{update} a predicted value function $V$ to more closely resemble the value $V^\pi$ of the policy $\pi$. It is defined as
\begin{align*}
    (T^\pi V)(x) = \mathbb{E}_{X_1 \sim P(\cdot|x,\pi(x))}[ R(x) + \gamma V(X_1)] \, .
\end{align*}
Introducing a matrix notation of the transition operator $P^\pi \in \mathbb{R}^{\statespace\times\statespace}$ defined by $P^\pi[x, x'] = \sum_{a \in \mathcal{A}}\pi(a|x)P(x'|x, a)$, and the expected reward vector $R^\pi \in \mathbb{R}^{\statespace}$ defined by $R^\pi(x) = \mathbb{E}_\pi[R_0|X_0=x]$, this can be expressed even more succinctly in matrix notation as
\begin{align*}
    T^\pi V = R^\pi + \gamma P^\pi V \, .
\end{align*}
$T^\pi$ is a contraction on the space of value functions \citep{puterman}, and so repeated application of $T^\pi$ to any initial value function converges to $V^\pi$ \citep{bertsekas1996neuro}. For control problems, where we seek to obtain the value of an unknown optimal policy, learning requires not only estimating the value of a policy but also improving that policy to increase its expected return. In this setting, we leverage an analogous operator termed the Bellman optimality operator $T^* : \mathbb{R}^{\statespace\times\actionspace} \rightarrow \mathbb{R}^{\statespace\times\actionspace}$. The action of $T^*$ on value functions is defined by
\begin{align*}
    (T^* Q)(x,a) \!=\! \mathbb{E}_{x' \sim P(x,a)}[R(x) \!+\! \gamma \max_{a' \in \mathcal{A}} Q(x', a')] \, .
\end{align*}

The Bellman optimality operator attains similar convergence guarantees as the policy evaluation operator, and can be shown in tabular state spaces to converge to the value of the optimal policy. In principle both $T^*$ and $T^\pi$ can be defined over action-value or state-value functions, but this thesis will predominantly consider the application of $T^\pi$ to value functions and $T^*$ to action-value functions We will refer to the value $T^\pi V$ or $T^* Q$ as the \textit{Bellman target} associated with the (resp. action-) value function $V$ (resp. Q), where the operator in use will be clear from context. 

In most settings of interest, we do not have access to the expected reward vector $R^\pi$ or the environment transition matrix $P$. As a result, value-based RL agents must use sampled transitions of the form ($x_t, a_t, r_t, x_{t+1}, a_{t+1}$) to approximate these updates. In the case of the policy evaluation operator, this sample-based approximation takes the form of the SARSA update, so called due to its use of (\textbf{S}tate-\textbf{A}ction-\textbf{R}eward-\textbf{S}tate-\textbf{A}ction) transitions. The SARSA algorithm assumes that the transition has been sampled from some fixed behaviour policy $\pi$, and estimates $Q^\pi$ by iteratively applying the update rule

\begin{equation}
    Q_{t+1}(x_t,a_t) = Q_{t}(x_t,a_t) + \alpha [r_t + \gamma Q_t(x_{t+1}, a_{t+1}) - Q_t(x_t, a_t)] \;.
\end{equation} 

Sample-based methods will use some step size $0<\alpha < 1$ in order to average out noise in the Bellman targets due to stochasticity in the environment. The seminal Q-learning algorithm \citep{watkins1992q}, which forms the basis of many deep RL agents \citep{mnih2015human}, can similarly be viewed as approximating the iterative application of $T^*$ and related operators \citep{tsitsiklis1994asynchronous,jaakola1994convergence,bertsekas1996neuro}. Q-learning is based on the following update
\begin{equation}
    Q_{t+1}(x_t,a_t) = Q_{t}(x_t,a_t) + \alpha [r_t + \gamma \max_{a' \in \actionspace} Q_t(x_{t+1}, a') - Q_t(x_t, a_t)] \; .
\end{equation}

\textbf{Policy gradient methods} \citep{sutton2000policy} operate directly on a parameterized policy $\pi_\theta$. We let $d^\pi$ denote the stationary distribution induced by a policy $\pi$ over states in the MDP. Rather than first going to the trouble of learning a value function, policy gradient methods directly optimize the parameters $\theta$ of the policy so as to maximize the expected return $J(\pi_\theta) = \mathbb{E}_{s_0 \sim P_{\mdp}(s_0)} V^{\pi_{\theta}}(s_0) $. The gradient of this loss can be estimated from sampled trajectories when expressed as follows:
\begin{equation}
    \nabla_\theta J(\pi_\theta) = \mathbb{E}_{x_t, a_t \sim P(\cdot | \pi_\theta)}[\nabla_\theta \log \pi_\theta(a_t|x_t) Q^{\pi_\theta}(x_t, a_t)] \; .
\end{equation}
Variations on this learning rule include \textit{actor-critic} methods \citep{konda2000actor}, which use a baseline given by a value-based learner to reduce update variance, and trust-region based methods, such as Trust Region Policy Optimization \citep{schulman2015trust} and Proximal Policy Optimization (PPO) \citep{schulman2017proximal}. 

\textbf{Function approximation} schemes enable RL agents to generalize their knowledge about the value or policy at one state to other states in the environment. Linear function approximation assumes state-action pairs are embedded as features $\phi(x,a) \in \mathbb{R}^d$, and some linear map $\bw \in \mathbb{R}^d$ is used to approximate the value function $Q^\pi(x,a) = \langle \phi(x,a), \bw \rangle$. This regime has been the study of a rich literature exploring the stability and sample complexity of RL in the presence of function approximation \citep{jin2020provably, wang2020optimism, precup2001off, tsitsiklis1996analysis}.

A second, more widely-used family of algorithms involve the use of neural networks as function approximators of the form $Q_\theta: \statespace \times \actionspace \rightarrow \mathbb{R}$. This is the \textit{deep RL} regime. This approach is well-suited to an array of complex tasks, ranging from systems control problems \citep{degrave2022magnetic} to video games \citep{mnih2015human}. At its core, value-based deep RL involves training a neural network to approximate the Bellman targets of a predicted value function using (semi-, see e.g. \citep{sutton2018reinforcement}) gradient descent. This approach computes a semi-gradient update direction $f(\theta)$ of the following form, where $a^* = \max_{a}(Q_\theta(x_{t+1}, a))$.
\begin{equation}
    f(\theta) = (\nabla_\theta Q_\theta) [r_t + \gamma Q_\theta(x_{t+1}, a^*) - Q_\theta(x_t, a_t)]
\end{equation}
Because the final layer of a neural network is usually linear, it is possible to express the output $Q_\theta(x,a)$ in the form $Q_{\theta}(x,a) = \langle \phi_{\theta'}(x,a) , \bw \rangle$, and $\theta = \theta' \oplus \bw$ where $\oplus$ denotes concatenation. Under this parameterization, the map $\phi_{\theta'}$ is referred to as the \textit{feature map}. This framework has been used to study the learned representations of deep reinforcement learning agents in a number of recent works \citep{kumar2021implicit, lan2022generalization, lyle2019comparative}.

\section{Generalization in supervised learning}

We now turn our attention to the principal object of interest in this thesis: generalization. The study of generalization in supervised learning problems spans decades, from the seminal work of \citet{vapnik1968uniform} to recent exciting developments in the kernel analysis of deep networks \citep{jacot2018neural} and the interpolation regime \citep{bartlett2020benign}. We will begin by presenting classical bounds on the generalization error of a learning algorithm's output. However, these classical approaches, based on quantifying the complexity of an algorithm's hypothesis class, fail to account for the generalization performance of deep neural networks, motivating more recent empirical approaches. We will conclude with an overview of the current state of the art of our understanding of generalization in deep learning. 

\subsection{Generalization bounds}
\label{bkgd:generalization-bounds}

The empirical risk minimization (ERM) framework puts forward the maxim: `always pick a hypothesis which minimizes the risk on the training set'. This hypothesis will not in general attain the lowest risk on the underlying data-generating distribution, however. A long line of work~\citep{vapnik1968uniform, vapnik1999nature, bousquet2002stability} has characterized upper bounds on the gap between the empirical risk of a hypothesis and its true risk, yielding \textit{generalization bounds} which provide high-probability guarantees on the expected risk of a hypothesis. Of particular interest to us will be the application of such bounds to hypothesis classes generated by neural networks \citep{baum1989size, dziugaite2017nonvacuous, bartlett2017spectrally}. While generalization bounds for any hypothesis class are almost always looser than the upper bound on the expected risk given by computing the model's validation loss on held-out data, the study of generalization bounds continues to thrive as a means of developing theoretical insight into learning algorithms, motivating their inclusion in our discussion.
 
\subsubsection{Formalism}
Most generalization bounds in the literature share a similar structure, consisting of the sum of a hypothesis' empirical risk and a complexity measure scaled by a function (typically the inverse square root) of the number of samples. We use the notation of \Cref{bkgd:supervised-learning}, and let $f$ be a function output by some learning algorithm with access to data $\trdata$ of size $n$, drawn from hypothesis class $\fclass$. We let $\risk$ and $\eRisk$ be defined as in Equations~\ref{eq:risk} and \ref{eq:eRisk} respectively. Letting $C(\fclass)$ denote some complexity measure (for example, the logarithm of the number of functions in the hypothesis class) and $g(\cdot)$ some non-negative function defined over $\mathbb{R}$, usually $g(x) = \log(\frac{1}{x})$ or something similar, we obtain the generic form
\begin{equation}
    \risk(f) \leq \eRisk(f) + \sqrt{\frac{C(\fclass) + g(\delta)}{n}} \text{ with probability $1-\delta$}.
\end{equation}

The magnitude of the complexity measure $C(\fclass)$ is crucial to the tightness of a bound. In some cases the complexity measure which bounds the expected risk may take a value so large that it dwarfs the maximal value the risk can obtain, resulting in bounds that are \textit{vacuous}. For example, a vacuous bound would guarantee that a neural network's probability of making a classification error on new samples from the the MNIST dataset will be less than 500\%. In neural networks, bounds based on the margin around the decision boundary \citep{wei2019improved}, the VC dimension \citep{harvey2019nearly}, and the spectral norm of the network weights \citep{bartlett2017spectrally} all become vacuous for network architectures of the scale typically applied to popular benchmarks such as CIFAR-10 or ImageNet \citep{bartlett1997valid, dziugaite2017nonvacuous}. However, recent work has found complexity measures which \textit{can} capture a reasonable notion of simplicity on neural networks -- at least to the point where bounds are non-vacuous \citep{dziugaite2017nonvacuous}. 

\subsubsection{Complexity and Occam's razor}
The term $C(\fclass)$ can be interpreted as a form of Occam's razor applied to the hypothesis class: given hypotheses drawn from two classes which attain the same empirical risk, we should prefer the hypothesis drawn from the simpler class. This simple idea drives most results in model selection and generalization in machine learning \citep{rasmussen2001occam}. 
However, while generalization bounds of the flavour shown above characterize the complexity of the entire class of hypotheses, one might also prefer to use Occam's razor as a tool to select hypotheses within a single class. This is the core idea behind PAC-Bayes generalization bounds \citep{mcallester1999, langford2003pac, leveretal2013tighterPACbayes, catoni}, which allow us to assign a complexity penalty that distinguishes between different hypotheses within a function class for randomized predictors. Leveraging a notion of hypothesis-level complexity is a powerful tool \citep{bartlett1997valid}, yielding some of the first non-vacuous generalization bounds for over-parameterized neural networks \citep{dziugaite2017nonvacuous}. 

Occam's razor does not give us an out-of-the-box definition of simplicity, however. Two such definitions are favoured by the machine learning community to explain generalization of deep networks. The first of these is flatness of the local minima to which gradient-based optimization tends to converge \citep{hochreiter1997flat}. Much of the folk wisdom surrounding generalization in deep learning hypothesizes that the reason neural networks generalize is because stochastic gradient descent drives the learned weights towards flat regions of the loss landscape \citep{keskar2016large}. This picture is slightly complicated by the observation of \citet{dinh2017sharp} showing that sharp minima can also generalize, but nonetheless demonstrates strong predictive power in empirical evaluations \citep{jiang2020fantastic}. Optimization algorithms which deliberately add noise to the training process, such as entropy-SGD and the direct optimization of a PAC-Bayes bound \citep{chaudhari2016entropy, dziugaite2017entropy, dziugaite2018dependent}, have been shown to increase flatness and improve the tightness of some generalization bounds on neural networks. PAC-Bayes risk bounds are also deeply connected to the Bayesian notion of marginal likelihood \citep{germain2016pac}. 

A second widely-used notion of simplicity is model compressibility, whereby models which can be compressed to a smaller length are said to be simpler. This is also referred to as the minimum description length (MDL) principle \citep{Akaike1998, hinton1993keeping}. Compressibility and flatness overlap significantly: parameters in a flat region of the loss landscape can be perturbed, for example by a quantization step of a compression algorithm, without significantly affecting the loss. However, the techniques used to measure the two quantities differ, with compression approaches typically being more compute-intensive \citep{zhou2018nonvacuous, ullrich2017soft}.
PAC-Bayes bounds are appealing as either the flatness of minima \citep{neyshabur2018the, neyshabur2017exploring}  or the compressibility \citep{zhou2018nonvacuous} notion of simplicity can be used to define the model complexity term in the bound. 

\subsubsection{The overparameterized regime}
At the core of the challenge of applying results from statistical learning theory to neural networks is the expressiveness of neural network function classes; the overparameterized models of recent years are capable of memorizing even uniform random labels of the data \citep{zhang2016understanding}. Uniform convergence bounds which depend on worst-case analysis are thus limited in the guarantees they can provide these networks.  Indeed, \citet{nagarajan2019uniform} argue that certain forms of uniform convergence results may be fundamentally incapable of explaining generalization in deep learning, presenting a simple example of a learning problem where it is impossible to use uniform convergence guarantees to characterize the generalization performance of a neural network function class. While \citet{negrea2020defense} and \citet{bartlett2020benign} show that modified analysis can still yield uniform convergence results for overparameterized predictors in similar contexts, these results require studying either a surrogate predictor, or a restricted problem setting. 

Even more intriguing has been the observation that, contrary to the received wisdom in learning theory that overparameterized models will overfit to their training data, increasing overparameterization can lead to \textit{improved} generalization \citep{neyshabur2014search}. These empirical observations have prompted the study of interpolating predictors \citep{bartlett2020benign}, which seeks to outline conditions by which a model which can attain zero empirical risk may still obtain near-optimal risk or robustness properties on the data-generating distribution \citep{bubeck2021universal, koehler2021uniform}. This work is intricately connected to the double descent phenomenon, whereby the risk of a model, when plotted against the number of parameters, exhibits two descent regions: one in the underparameterized regime, and one in the overparameterized regime \citep{advani2020high, belkin2018reconciling, nakkiran2019deep}. 

A gap remains, however, in leveraging these insights to attain tight bounds on the generalization error of modern neural network architectures trained on real-world datasets.
Even worse, \citet{jiang2020fantastic} show that many of the complexity measures that appear in generalization bounds are \textit{negatively} correlated with generalization. This presents a double blow to the argument that generalization bounds might provide insight into what makes neural networks generalize well: not only are such bounds too loose to give practically relevant information, but they do not even offer an accurate \textit{ranking} of models. 


\subsection{Explaining generalization without bounds}
Why does a given neural network generalize well or poorly? This is a question that generalization bounds cannot (currently) answer, yet is crucial to the principled development and application of neural networks to real-world datasets. A large value of a complexity measure does not entail that a learning algorithm will not generalize; it simply states that there is not sufficient information to guarantee a small generalization gap. To address questions of \textit{why} particular models generalize well, we must look to the particulars of the network architectures, training procedures, and datasets that arise from the natural world. The works discussed in this section will present such an empirical study. While sometimes similar notions of complexity to those leveraged in generalization bounds may be used, the philosophical underpinnings of this research differ fundamentally from the formal guarantees of learning theory.

\subsubsection{Scientific vs mathematical rigor}\label{sec:background-science}
The scientific study of any phenomenon depends on a mixture of empirical experiments, during which data is gathered, and theory-building, where an explanation of the phenomenon is proposed \citep{popper1968logic}. A scientific theory is one that makes falsifiable predictions, which can then be tested by experiments. The generalization bounds of \Cref{bkgd:generalization-bounds} can be viewed as theory-building in a loose sense; however, the goal of a generalization bound is to make a statement that will be \textit{guaranteed} to hold with high probability, accepting that this may result in a large gap between the predicted upper bound and the empirical realization of the generalization error in some settings. This runs counter to the qualities of a good scientific theory, whose goal is to explain a phenomenon with as simple a model as possible, and to expose itself to the risk of falsification in doing so. 

One notable step towards this framework of formulating and falsifying hypotheses arises from the work of \citet{jiang2020fantastic} and \citet{dziugaite2020search}, who conduct a large-scale empirical study of the correlation between \textit{complexity measures} and generalization in deep neural networks. This line of work seeks to translate the formal guarantees of generalization bounds into testable scientific theories which make predictions about which models will generalize best. The key to doing so is to treat the ranking over models given by a complexity measure as a \textit{prediction}, rather than an upper bound. If the generalization measure $C$ is higher for network A than for network B, we interpret this as a prediction that network A should generalize worse than network B stemming from the theory that a model which generalizes well must do so \textit{because} $C$ is low. 

\citet{dziugaite2020search} focus on identifying experimental settings where a measure \textit{fails} to predict generalization, seeking to identify instances where the theory can be falsified. Both works show that complexity measures based on flatness, such as PAC-Bayes generalization bounds \citep{dziugaite2017nonvacuous}, are highly predictive of generalization across a range of experimental settings. However, none of the generalization measures studied by \citet{dziugaite2020search} robustly predicts generalization in neural networks under all experimental conditions, motivating further investigation of quantites which can predict -- and ideally also {explain} -- generalization in neural networks.

\subsubsection{The loss landscape of neural networks}
Much theoretical work studying neural networks has considered questions on the \textit{existence} of parameters that allow the network to represent a function \citep{hornik1989multilayer, raghu2017expressive, dong2020expressivity}, but less attention has been paid to the search process by which such parameters might be found. To this end, recent work studying the loss landscape of neural network seeks to understand the properties of the optimization problem faced by neural networks in practice. These results have found that the initialization schemes used today \citep{he2015delving, glorot2010understanding} push the parameters towards a well-behaved region of the loss landscape that is relatively convex \citep{fort2019goldilocks} and induces stable learning dynamics in suitable architectures \citep{yang2017mean}. Indeed, many architecture design choices such as residual connections \citep{he2016deep} and batch normalization \citep{ioffe2015batch} that improve performance can be shown to increase the smoothness of the loss landscape \citep{li2018visualizing, santurkar2018does}, leading to minima which are flatter and should thus generalize better. 

Much recent work has focused in particular on the geometry of the loss landscape around the minima found by gradient descent \citep{maddox2020rethinking}. One intriguing property identified recently is that of \textit{linear mode connectivity}, whereby the convex combination of two locally optimal parameter vectors attains a similarly low loss as the two minima from which it was derived \citep{benton2021loss}. \citet{frankle2020linear} show that stochasticity in the optimization process \textit{early} in training leads the trajectory towards disconnected regions of parameter space, while stochasticity late in training results in a perturbation within a linearly connected loss basin. This geometric perspective is consistent with prior work studying the importance of the early learning period in determining the network's ability to represent certain functions \citep{achille2018critical}.
Flatness properties of local minima have also been studied in the context of Bayesian deep learning \citep{izmailov2018averaging}, highlighting the connection between flat minima and approximations of the Bayesian marginal likelihood \citep{daxberger2021laplace}. 

\subsubsection{Correlates of generalization in deep learning}

Much empirical work on generalization in deep learning seeks to identify measurable quantities which correlate with generalization in neural networks \citep{neyshabur2017exploring}. This work has found such quantities in the robustness of minima to perturbations \citep{keskar2016large} and to pruning \citep{bartoldson2020generalization}. Relatedly, \citet{morcos2018importance} identify a relationship between the dependence of a network on single dimensions of activation space and generalization. \citet{neyshabur_norm-based_2015} propose that implicit norm-based regularization may contribute to the generalization performance of deep networks; some evidence suggests, however, that the mechanism by which overparameterization regularizes norm may not apply to fully general problem constructions \citep{hanin2019deep}. 

Further attention has been paid to regularization methods which improve generalization such as dropout \citep{srivastava2014dropout}, data augmentation \citep{wu2020on}, batch normalization \citep{ioffe2015batch}, and weight decay \citep{arpit2017closer}. The study of early stopping \citep{duvenaud2016early} has further revealed intriguing connections between variational inference and flatness of the loss landscape. However, the precise relationship between training speed, early stopping, and generalization has yet to be fully understood: \citet{hardt2015train} provide a theoretical and empirical analysis of the relationship between training time and generalization error; in contrast, \citet{hoffer2017train} observe that longer training can improve regularization provided a suitable optimization procedure is used. This is emblematic of a broader lack of granular understanding of generalization in deep neural networks. While many correlates of generalization and techniques to improve generalization are widely used, the precise mechanisms by which they improve generalization have yet to be elucidated. 

\subsubsection{Studying the learning trajectory}
\label{sec:bkgd-trajectory}

The study of how networks evolve over the course of training has revealed a number of intriguing insights which promise to shed light on some of these mechanisms.
A number of works have endeavoured to explain the efficacy of early stopping by arguing that neural networks learn functions of increasing complexity over the course of training \citep{rahaman2019spectral, arpit2017closer, kalimeris2019sgd}, with similar biases towards other notions of simplicity also widely observed in the mapping between parameters and the resulting output functions of neural networks \citep{valle2018deep, de2019random}. 

Approximating a discrete gradient descent trajectory as a continuous-time process in the limit of infinite layer width has facilitated closed-form analysis of the trajectory of deep neural networks \citep{jacot2018neural, lee2019wide}, which has sparked a number of intriguing insights into generalization \citep{arora2019fine}, posterior sampling \citep{he2020bayesian}, and feature learning \citep{yang2021tensor}. The analysis of \citet{jacot2018neural} allows the optimization trajectory of a neural network to be modelled by a kernel gradient descent procedure with respect to a specific kernel called the neural tangent kernel (NTK). \citet{smith2020origin} and \citet{barrett2021implicit} analyze a similar continuous-time approximation of gradient descent to identify an explicit regularization term that biases gradient descent algorithms towards flatter regions of the parameter space, providing insight into the relationship between the flatness of the region of the loss landscape traversed by gradient descent and sources of stochasticity such as minibatch sizes and finite learning rates. 

\subsection{Summary}
As this section has shown, the question of why neural networks generalize has sparked a wide-ranging literature characterizing generalization and optimization dynamics in DNNs. In spite of this rich literature, the answer to this question remains open. Generalization bounds present a domain of great theoretical interest, but fail to \textit{explain} generalization in DNNs. Meanwhile, empirical work has identified many quantities corresponding to model complexity terms in generalization bounds that correlate with generalization \citep{jiang2020fantastic}. However, no single quantity studied thus far has been found to predict generalization performance under \textit{all} possible experimental conditions \citep{dziugaite2020search}. Complementary lines of work have shed significant insight into the optimization landscape and generalization properties of DNNs, but this picture is far from complete. A number of recent theoretical results, such as the analytic form of optimization dynamics in the infinite-width limit of DNNs \citep{jacot2018neural}, the double descent phenomenon \citep{belkin2018reconciling}, and benign overfitting \citep{bartlett2020benign}, hint at a theoretical explanation of the benefits of overparameterization for generalization, but these findings do not currently apply to modern deep learning training regimes. Similarly, empirical investigation into the loss landscapes and gradient structure in DNNs has yielded principled insights into the optimization dynamics of deep networks, but still admits a gap between theory and practice. One contribution of this thesis will be to propose a family of performance estimators based on a network's training speed which are predictive of generalization performance rankings in a number of experimental settings. These estimators will be motivated by a theoretical and empirical analysis linking generalization bounds, invariance, and training speed, providing a potential mechanism to explain their success.

\section{Generalization in reinforcement learning}
The supervised learning regime has proven to be fertile ground for the study of generalization. However, reinforcement learning agents also benefit from accurate extrapolation to unseen inputs, and moreover existing approaches are particularly prone to overfitting \citep{zhang2018study}. Though many of the subproblems of generalization in reinforcement learning mirror those found in supervised learning \citep{cobbe2019quantifying}, the nonstationarity of the training objectives used in deep RL adds additional challenges \citep{igl2021transient} which we will discuss in this section. 

\subsection{Single environment}
Generalization is not always necessary for reinforcement learning: in small state spaces, for example, one may assume that all states the agent will encounter are known in advance and that the value function can be represented as a lookup table; the scope for generalization in this setting is limited to the context of learning an optimal policy under new reward functions \citep{dayan1993improving}. However, in large spaces generalization is crucial for sample efficiency, enabling faster convergence to an optimal policy and, in cases where the state space is so large that not all states will be visited during training, ensuring robust performance in previously-unseen states at evaluation time. This thesis will explore two different perspectives on generalization in a single environment: \textit{state abstractions}, which can be leveraged to develop theoretical insights into the complexity of certain classes of environments, and \textit{interference}, which studies how deep neural networks disentangle and distinguish between states.

\begin{figure}
    \centering
    \includegraphics[width=0.485\linewidth]{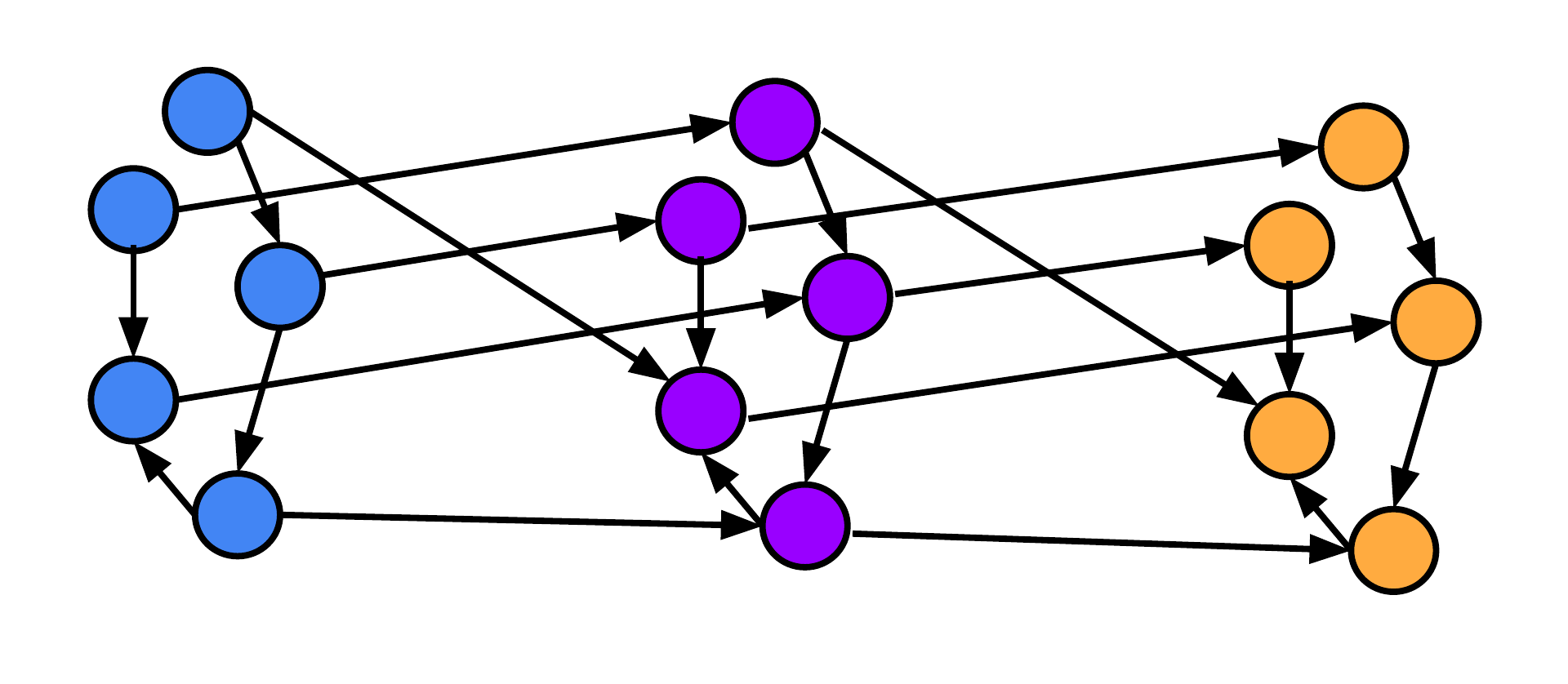}\vline 
    \includegraphics[width=0.485\linewidth]{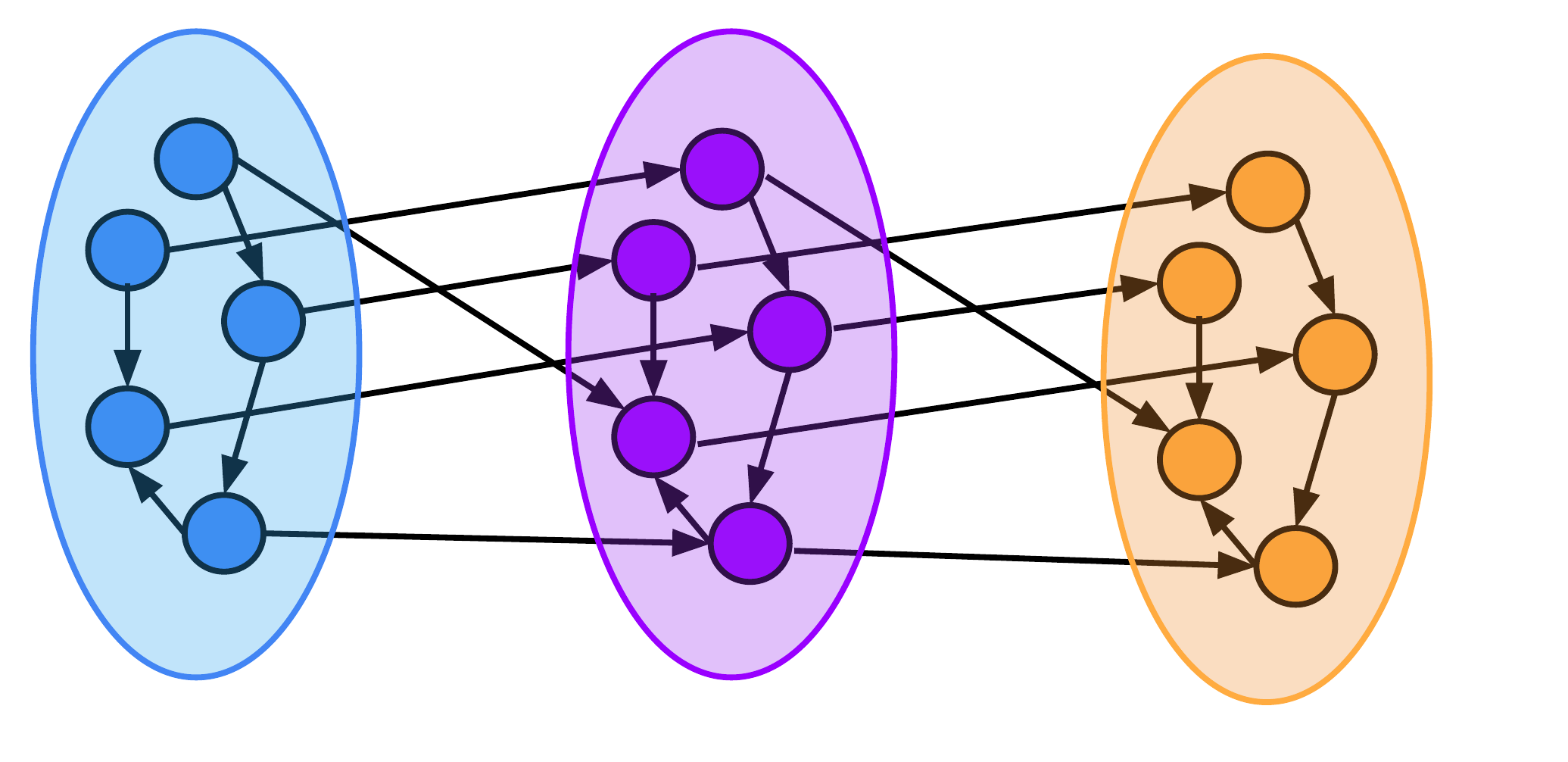}
    \caption[State abstractions allow an agent to treat similar states as though they were equivalent, potentially simplifying planning and generalization.]{State abstractions allow an agent to treat similar states as though they were equivalent, potentially simplifying planning and generalization. Here, we see an example of grouping together states with similar transition dynamics, reducing the size of the state space from 15 states to 3.}
    \label{fig:state_abstractions}
\end{figure}

\subsubsection{State abstractions}
\label{sec:background:stateabstractions}
State abstractions are used in reinforcement learning to simplify planning and to enable generalization in large or infinite state spaces; see Figure~\ref{fig:state_abstractions} for an illustration. Mathematically, a state abstraction is a mapping $\phi: \states \rightarrow \bar{\states}$, where $\bar{\states}$ is some simplified space (for example, $\mathbb{R}^d$). A state abstraction is useful for reinforcement learning if it captures the similarity between states in the environment. That is, $\phi(s) = \phi(s')$ if and only if $s$ and $s'$ have similar values or optimal policies. \citet{li2006towards} characterize a hierarchy of such abstractions, with the finest being a bisimulation of the original MDP and the coarsest being a partition of states according to the optimal action at each state. We will be particularly interested in the former, as our study of representation learning in Chapter~\ref{chp:icp} will focus on this class of state abstractions, known as \textit{model-irrelevance} state abstractions. The following definition is for finite state spaces, but can be generalized to continuous state spaces.

\begin{definition}\label{def:misa}
A \emph{model-irrelevance state abstraction} on a Markov Decision Process $\mdp = (\states, \actionspace, R, P, \gamma)$ is a mapping $\phi: \states \rightarrow \bar{\states}$ such that if $\phi(x_1) = \phi(x_2) = \bar{x}$, the following holds, where $x' \in \states$, and $\bar{x'} \in \bar{\states}$.
\begin{enumerate}
    \item $R(x_1,a) = R(x_2,a)$ $\forall a \in \actionspace$
    \item $\sum_{x' \in \phi^{-1}(\bar{x}')} P(x'|x_1, a) =\sum_{x' \in \phi^{-1}(\bar{x}')} P(x'|x_2, a) $, $\forall a \in \actionspace$
\end{enumerate}
\end{definition}

There are a range of approaches to discovering and selecting state abstractions: \citet{jong2005state} eliminate irrelevant variables from inputs where $\states \subset \mathbb{R}^n$; other approaches leverage statistical \citep{jiang2015abstraction} and information-theoretic \citep{abel2019state} tools to aggregate states, while \citet{taylor2008bounding} construct state abstractions based on state similarity in the MDP. State aggregation is further leveraged in algorithms for RL in rich-observation settings \citep{misra2020kinematic}.

In deep reinforcement learning, the state abstraction framework begins to merge with the feature-learning and representation-learning frameworks of deep learning. One can interpret the latent feature representation of the neural network as a state abstraction, and the remaining layers as a value function approximator. \citet{gelada2019deepmdp} and \citet{zhang2020learning} use ideas from the state abstraction literature to construct representation-learning objectives for DNNs which allow them to capture the structure of the environment. Such structure can be made explicit in the architecture of the neural network used as a function approximator to obtain guarantees on the equivariance of the learned value function to symmetries that arise in the MDP \citep{van2020mdp, van2020plannable}.

\subsubsection{Representations in deep RL}

A major challenge present in deep reinforcement learning is the sparsity of the reward signal relative to the rich structure of the environment. A broad range of auxiliary tasks have been developed over the years to overcome this sparsity in order to improve generalization and representation learning in deep RL \citep{jaderberg2016reinforcement, veeriah2019discovery, gelada2019deepmdp, machado2017eigenoption}. These contributions have in common the goal of enriching the scalar reward signal to encourage the network to pick up additional structure in the environment, either with the downstream objective of using this information later for policy improvement, or as a more stable learning signal for the network. Additional work has analyzed the geometry \citep{bellemare2019geometric} and stability \citep{ghosh2020representations} of the learned features in deep RL agents, along with their linear algebraic properties \citep{kumar2021implicit, gogianu2021spectral}. 

A second challenge in the application of deep neural networks to RL arises from the nonstationarity of the learning objective. The challenge of training a network on a sequence of tasks has been studied in great depth in both reinforcement learning \citep{schaul2019ray, teh2017distral, igl2021transient} and supervised learning settings \citep{sharkey1995analysis, ash2020warm, beck2021effective}. Of particular interest has been the problem of catastrophic forgetting, with prior work proposing novel training algorithms using regularization \citep{kirkpatrick2017overcoming, bengio2013empirical, lopez2017gradient} or distillation \citep{schwarz2018progress, silver2002task, li2017learning} approaches. \citet{benjamin2018measuring}, apply a function-space regularization approach, but require saving input-output pairs into a memory bank. However, in reinforcement learning the converse problem must also be considered: forward-interference, whereby networks may see deteriorating ability to solve later tasks encountered during training. Methods which involve re-initializing a new network have seen particular success at reducing interference between tasks in deep reinforcement learning \citep{igl2021transient, teh2017distral, rusu2016policy, fedus2020catastrophic}.

\subsubsection{Interference}\label{sec:bkgd-interference}
Even in the absence of an explicit abstraction-learning objective, generalization between states in deep RL will naturally arise from the network's inductive bias. In many problem settings, such as in large observation spaces or procedurally generated environments, some degree of generalization is necessary in order to obtain good performance at test time \citep{kirk2021survey}. However, when an update to the predicted value of one state exerts excessive influence on the network's output at other states, this form of generalization (which we will also refer to as \textit{interference}) can lead to instability \citep{jiang2021emphatic, van2018deep}.
The introduction of instability as a result of interference between states is not unique to deep RL; even in the case of \textit{linear} function approximation, off-policy temporal difference learning is not guaranteed to converge to the optimal value function \citep{baird1993advantage, tsitsiklis1996analysis}. Interference has been identified as a barrier to learning progress in both policy-based and value-based problems \citep{schaul2019ray, fedus2020catastrophic}. 

There are two principal approaches by which we will quantify this notion of interference. To a first-order approximation, interference is equivalent to \textit{gradient alignment}. To formalize this concept, we let $f$ be a function which takes as input an observation $\bx$ and a set of parameters $\theta$ (for example, $f$ might be the loss function of a neural network, or one dimension of its output).
We define gradient alignment $\mathrm{I}_\nabla(\bx, \by; \theta)$ (where $\nabla$ indicates that we are studying the \textit{gradient} of the loss) with respect to some loss function and network architecture parameterized by $\theta$ and taking inputs $\bx$ and $\by$ as follows.
\begin{equation}
  \gradint(\bx, \by; \theta) \overset{\text{def}}{=} \langle \nabla_\theta f(\bx; \theta), \nabla_\theta f(\by; \theta) \rangle  \label{eq:gradint}
\end{equation}
We may instead measure the effect of a gradient step computed on data point $(\bx_1, y_1)$ on the loss of another data point $\bx_2 \neq \bx_1$ with corresponding target $y_2$. Letting the targets $y$ be left implicit we obtain the following, where $f$ is some function as before and $\ell$ is the optimization objective (which may or may not be equal to $f$). We let $g_\ell$ be the update computed by some optimizer which may or may not be equal to the gradient $\nabla_\theta \ell(\bx_1, y_1; \theta)$, and which may also depend on the optimizer state $\eta$.
\begin{equation}
   \mathrm{I}_{\Delta}(\bx, \by; \theta)  \overset{\text{def}}{=} f(\by; \theta) - f(\by, \theta') \text{ where } \theta' = \theta + \alpha g_\ell (\bx; \theta, \eta) \label{eq:deltaint}
\end{equation}

Interference has been studied under a variety of names. \citet{fort2019stiffness} refer to the gradient alignment between data points (a measure related to $I_\nabla$ as `stiffness', while \citet{he2019local} call a notion that more closely resembles $\mathrm{I}_\Delta$ `elasticity'.

A number of approaches which specifically reduce the effect of a gradient update for state $s$ on the target $V(s')$ have been shown to improve the stability and robustness of off-policy algorithms \citep{ghassian2020improving, lo2019overcoming}. Prior works have also endeavoured to rigorously define and analyze interference in deep RL \citep{liu2020measuring, liu2020towards, bengio2020interference}, and to study its role in the stability of offline algorithms \citep{kumar2021dr3}.
Similarly, some recent methods \citep{shao2020self, pohlen2018observe} include an explicit penalty which discourages gradient updates from affecting the target values used in TD updates, though this is incidental to the principal contribution of these works. Tuning the degree of interference exhibited by the function approximator to maximize performance thus remains an open problem.

\subsection{Multiple environments}
\begin{figure}
    \centering
    \includegraphics[width=0.85\linewidth]{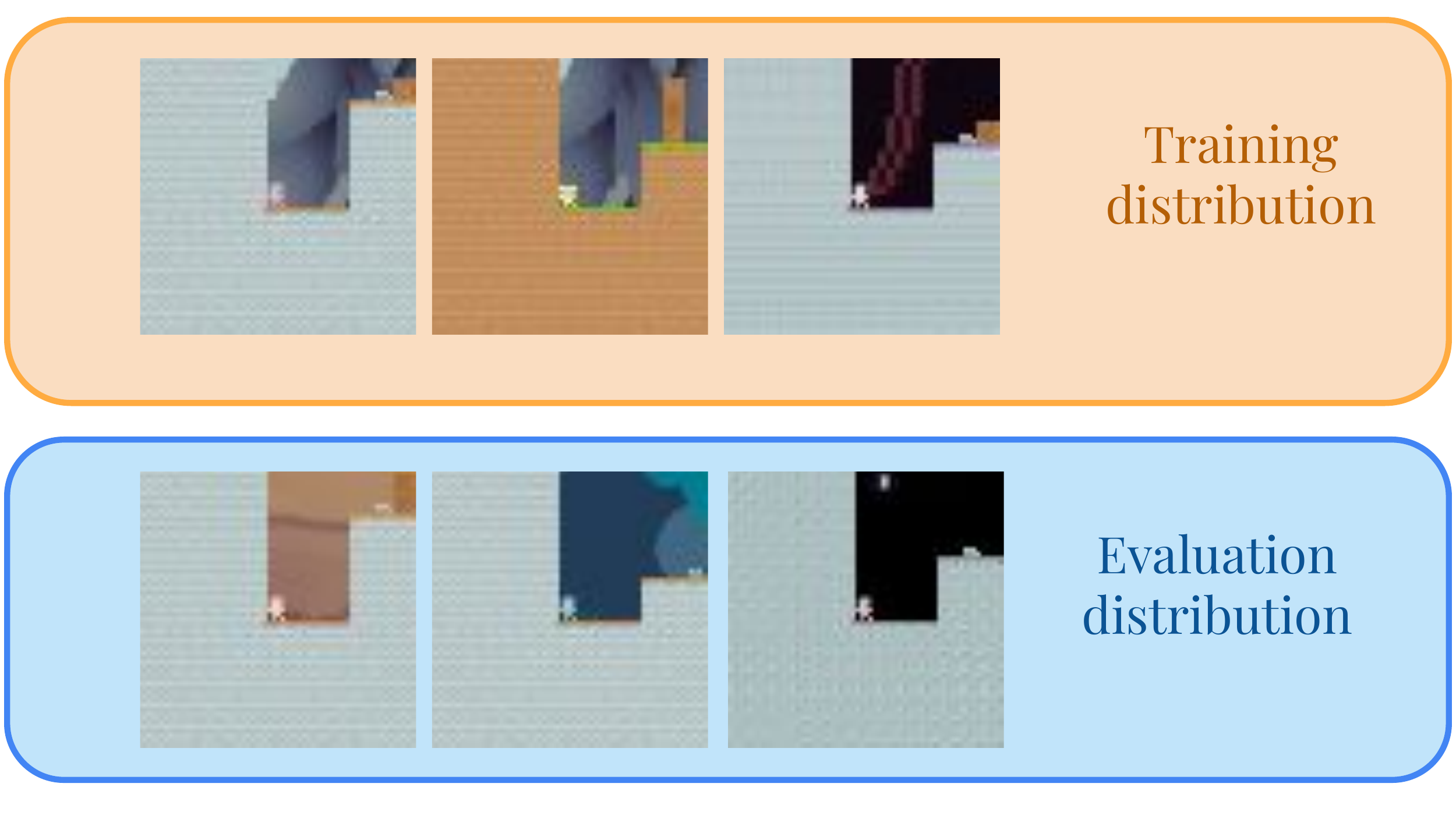}
    \caption{Visualization of multi-environment generalization framework with coinrun levels sampled from the ProcGen benchmark \citep{cobbe2020leveraging}.}
    \label{fig:multi-environment}
\end{figure}
Most standard formulations of generalization in RL do not concern generalization to unseen states within the training environment, but rather generalization to novel environments, MDPs with different observations, transition dynamics, and reward functions than those seen during training \citep{kirk2021survey}. This generalization problem in deep RL can be decomposed into two categories: generalization of the function approximator to novel observations from an MDP which follows the same underlying dynamics as the training environment, and generalization to novel tasks, where some combination of the reward function, observations, and transition dynamics are allowed to vary. 
Many techniques that benefit one of these categories also benefit the other \citep{packer2018genrl}. \citet{farebrother2018generalization} argue, for instance, that an agent which has learned a policy which generalizes well will be robust to changes in the difficulty of a game in addition to more superficial changes in the observation space.

\subsubsection{Formalism}
We will be concerned with the generalization gap incurred by a policy trained on a set of training environments $\mathcal{E}_{\train}$ when deployed to a novel \textit{test} environment.
\begin{equation}
   \mathbb{E}_{\mathcal{E}_{\train}, \pi} [ \sum_{t=0}^\infty \gamma^t R(x_t, a_t)] - \mathbb{E}_{\mathcal{E}_{\test}, \pi}[ \sum_{t=0}^\infty \gamma^t R(x_t, a_t)]
\end{equation}
Typically $\mathcal{E}_{\test}$ will be assumed to share some structure with $\mathcal{E}_{\train}$, but the nature of this shared structure may vary between problem settings. In large observation spaces, $\mathcal{E}_{\test}$ may be equal to $\mathcal{E}_{\train}$ with a different initial state distribution, while for multi-environment problems $\mathcal{E}_{\train}$ may be homomorphic to $\mathcal{E}_{\test}$ \citep{zhang2020invariant}. In the worst case, the evaluation environments may be chosen adversarially so as to make generalization difficult; we will not consider this setting, instead focusing predominantly on test environments that share some characteristics with the training environment.

\textit{Procedurally generated} environments \citep{cobbe2020leveraging, kuttler2020nethack, samvelyan2021minihack} enable us to sample an almost-unlimited number of evaluation environments which are visually diverse but which share important structural characteristics. A visualization of frames from different procedurally-generated levels of the \textit{CoinRun} game is given in Figure~\ref{fig:multi-environment}. Being able to generate an unlimited set of environments brings us closer to the setting of supervised learning, where training and evaluation data points are sampled from the same distribution, by allowing us to sample training and evaluation \textit{environments} from the same distribution. This regime opens interesting research questions concerning the optimal sampling of environments during training so as to maximize the agent's learning progress and generalization performance \citep{jiang2021prioritized, parker2022evolving}. However, the number of training environments in these settings is typically orders of magnitude smaller than the number of training data points seen in standard supervised learning benchmarks, making generalization more challenging.

\subsubsection{Regularization}
Regularization has been widely shown to improve generalization in deep learning \citep{srivastava2014dropout, krogh1991simple}, and its application to reinforcement learning tasks has a rich history extending back to even before the explosion of the deep RL paradigm \citep{farahmand2011regularization}. More recent works focus specifically on the regularization of neural networks trained on RL tasks \citep{cobbe2019quantifying, farebrother2018generalization}, and show that many regularization methods used to improve generalization in supervised deep learning, such as dropout and $\ell_2$ regularization, are also applicable to generalization in reinforcement learning. For example, \citet{schrittwieser2020mastering} use $\ell_2$ regularization in their model-based algorithm which attains state-of-the-art results in many game environments. Many recent works have further sought to leverage the benefits of data augmentation to improve generalization and robustness to input perturbations in reinforcement learning agents \citep{raileanu2021automatic, laskin2020curl,laskin2020reinforcement, hansen2021generalization}. \citet{li2021functional} use low-frequency learned Fourier features to regularize deep Q-networks towards smooth functions. Trust region-based methods further leverage a form of explicit regularization on a learned policy to stabilize learning \citep{schulman2015trust}.

The stochasticity induced by some regularization methods serves a dual purpose as a tool for exploration. \citet{igl2019generalization} use dropout masks to induce stochasticity into the exploratory policy of an agent, while \citet{fortunato2018noisy} apply a Gaussian perturbation to the network parameters. This approach is also leveraged by the Rainbow \citep{hessel2018rainbow} architecture. \citet{goyal2018transfer} use an information bottleneck architecture both to improve generalization and to generate exploration strategies.

\subsubsection{Transfer learning and meta-RL}

In some problem settings, we seek to train an agent that is capable of quickly adapting to novel environments. In the case where we wish to generalize to novel rewards under the same transition dynamics as we saw during training, this can be achieved via the \textit{successor representation} \citep{dayan1993improving, barreto2017successor}. In deep RL, we approximate the successor representation using \textit{successor features}, which have been shown to improve sample efficiency and generalization in both single-agent RL and multi-agent settings \citep{machado2017eigenoption, filos2021psiphi}. Recently, \citet{abdolshah2021new} present an approach to learning successor features which can generalize across different environments.

However, in many cases the novel environment may feature a novels observation space with different transition dynamics from those seen at training. With the advent of powerful function approximation architectures and the development of effective meta-learning algorithms \citep{finn2017model, zintgraf2019fast}, generalization across multiple environments exhibiting less shared structure may also be approached using transfer- and meta-learning techniques. \citet{teh2017distral} and \citet{schmitt2018kickstarting}, for example, use policy distillation to improve transfer learning to new tasks. Other approaches use context-dependent policies \citep{rakelly2019efficient, sodhani2021multi}, or encourage invariance across training environments \citep{zhang2020learning}.

\subsection{Summary}
The study of generalization in the reinforcement learning literature has historically been restricted to the framework of \textit{state abstractions}, which consider arbitrary aggregations of states rather than explicit inductive biases on some input space \citep{li2006towards}. Some attempts have been made to define generalization with respect to input observations in single-environment RL, but these definitions run into limitations due to the nonstationary nature of RL, wherein improving prediction accuracy does not necessarily entail improved performance \citep{bengio2020interference, liu2020measuring}. One contribution of this thesis will be to present explicit definitions of interference and learning capacity in deep RL that are less dependent on the chaotic relationship between performance and prediction error, which will be used to study the evolution of learned representations in deep RL agents. We will explore both the benefits, e.g. improved robustness to input distribution shifts, and the drawbacks, e.g. decreased ability to adapt to new reward signals, of generalization between inputs. This exploration will result in algorithmic tools to accelerate learning and improve generalization in deep RL agents. While recent analysis has studied interference between different regions of the state space as a source of instability and a cause of performance plateaus, this line of work largely ignores the beneficial effects of generalization. 

In contrast, the benefits of generalization are widely acknowledged in the context of multi-environment RL, where an agent is evaluated on previously-unseen environments at test time. However, the current state of the art in this regime requires prohibitive sample complexity in order to generalize well outside of their training set \citep{cobbe2019quantifying}, largely due to the weak assumptions placed on the generative process from which environments are sampled.
While some approaches from the supervised learning literature, such as weight decay, dropout, and data augmentation, have been shown to improve generalization in deep RL, these approaches do not have as great an effect as in supervised learning \citep{raileanu2021automatic}. Orthogonal approaches from the meta- and transfer-learning literature have seen more success as they allow for fine-tuning on the test set, though these methods encounter similar sample complexity issues when applied to zero-shot generalization. In Chapter~\ref{chp:icp} we will consider a characterization of families of environments in which generalization from a handful of training environments is theoretically possible, and present algorithms for identifying or learning state abstractions for which an optimal policy will generalize zero-shot to any novel environment from this family.

\chapter{The role of invariance in generalization}
\label{chp:invariance}
\minitoc 

\section{Introduction}
The first object of interest in our study of learning dynamics and generalization will be \textit{invariance}. 
Invariance arises naturally as a part of the data generating process in many real-world datasets, such as point clouds (invariant to permutation), speech recognition (invariant to the average pitch of the speaker's voice), natural language (different sentences may be semantically equivalent), and objects in natural images (often rotation- and translation-invariant). In many instances, an invariance can be precisely characterized in terms of the action of a specified group, meaning that the input distribution is partitioned into equivalence classes, for example all rotations of an image or all permutations of elements in a set. This structure yields straightforward desiderata on the generalization properties of a model trained to fit such a data-generating distribution: learning about one element of an equivalence class should be sufficient for the model to correctly identify any other element in the equivalence class. In this sense invariance presents an idealized form of \textit{interference}: updates to an invariant model's output on one element of an equivalence class will necessarily have an identical effect across all inputs in this class.

It is not surprising then that models which capture the invariant structure of a given distribution tend to perform better than those which do not on a variety of tasks \citep{Cohen:Welling:2016, fawzi2016adaptive, salamon2017deep}. Yet while the empirical benefits of invariant models have been widely corroborated, work which seeks to explicitly quantify and understand how invariance can benefit generalization is scarce. 
Part of this challenge stems from the breadth of approaches by which invariance can be incorporated into a model class. One can build the invariance into the network as a convolution or weight-tying scheme, average network predictions over transformations of the input (feature averaging), or simply train on a dataset augmented with these transformations (data augmentation). Comparison of these different approaches requires different tools, as each incorporates confounding factors into the learning process which are distinct from the effect of the training method on invariance.

This chapter will employ a diverse range of techniques to gain insight into how invariance benefits generalization. The analysis presented here will lay the foundation for our study of more nebulous forms of inductive bias in Chapter~\ref{chp:supervised}, and our study of generalization in reinforcement learning in Chapters~\ref{chp:gen-rl} and \ref{chp:icp}. We will provide both an empirical and theoretical analysis contrasting neural networks trained to exhibit approximate invariance via data augmentation and networks which exhibit exact invariance via architectural design or feature averaging. Our empirical analysis will study the degree to which data augmentation yields models whose invariances generalize outside of the training set, evaluating the effect of these methods on interference between inputs in the same equivalence class. We further characterize conditions under which data augmentation will provably induce convergence to an invariant model parameterization in Section~\ref{sec:da-optim}, and study the differential effects of data augmentation and feature averaging on {interference} in Section~\ref{sec:variance-reduction}.   

In our theoretical analysis, we will shed light onto the role of invariant structure in PAC-Bayes generalization bounds. Our main contribution will be a ranking of a set of PAC-Bayes bounds on models trained with data augmentation and with feature averaging, resulting in the practical conclusion that feature averaging at test time improves upon data augmentation, which in turn outperforms training on data sampled independently from $\dgd$.
We find that the differential effects of feature averaging and data augmentation on interference mirror those on the model complexity term in our PAC-Bayes bound. This analogy between model simplicity and interference in the setting of group-theoretic invariances grounds our further study of interference in the learning problems of Chapters~\ref{chp:supervised}, \ref{chp:gen-rl}, and \ref{chp:icp} where more nuanced inductive biases must be identified.

\section{Background} \label{sec:background-invar}

``Invariance'' has been used to describe a number of related but distinct phenomena in the machine learning and statistics literature concerning the stability of a function's output under some set of transformations of its input \citep{zou2012deep, raj2017orbit_embeddings, jaderberg2015spatial, van2018learning, chen2019invariance}. 
We focus on invariance under the action of a group $\grp$, a setting shared by several prior works \citep[e.g.,][]{Cohen:Welling:2016,Kondor:Trivedi:2018,invariantdistributions}. The {action} of $\grp$ on a set $\calX$ is a mapping $\alpha : \grp \times \calX \to \calX$ which is compatible with the group operation, i.e. ${\alpha(e, x) = x}$ and ${\alpha(g_1, \alpha(g_2, x)) = \alpha(g_1g_2, x)}$. We will abbreviate this operation as $\alpha(g,x) = gx$ when the form of $\alpha$ is not important. The {orbit} of any $x\in\calX$ is the subset $\grp_x$ of $\calX$ that can be obtained by applying an element of $\grp$ to $x$, $\grp_x = \{ gx : g \in \grp \}$. We note that the sets $\grp_x$ are equivalence classes which partition the underlying set $\calX$, and will refer to them as the orbits of $\grp$.

For mathematical simplicity, we assume $\grp$ to be compact, with
(unique) normalized Haar measure denoted by $\haar$.\footnote{$\haar$ is analogous to the uniform distribution on $\grp$ \citep{haar1933massbegriff}.} Critically, any finite group is compact under the discrete topology. For the purposes of this chapter, the reader may therefore substitute `compact' with `finite' and `Haar measure' with `uniform distribution over $\grp$' and preserve the correctness (though not the full generality) of the theoretical results.

We denote a random element of $\grp$ by $G$. 
A mapping $f : \calX \to \calY$ is {invariant} under $\grp$ (or $\grp$-invariant) if
\begin{align} \label{eq:invariant:function}
  f(gx) = f(x) \;, \quad \forall \, g\in \grp,\ x \in \calX \;.
\end{align}
The act of transforming an arbitrary function $f: \calX \to \bbR$ to be $\grp$-invariant is known as {symmetrization}, and can be performed by averaging the output of $f$ over the orbits of $\grp$. We define this symmetrization operator $\symm_{\grp}$ as follows:
\begin{align} \label{eq:symmetrization:def}
  \invf{f}(x) := \symm_{\grp}f(x) = \bbE_{G\sim\haar}[f(G x)] \;, \quad x \in \calX \;.
\end{align}
While other symmetrization methods exist, such as taking the supremum over the group's orbit, we will always use $\invf{f}$ to refer to the \textit{average} of $f$ over the orbit of its input.
We consider a typical machine learning scenario, with a training data set $\trdata$ of $n$ observations $(X_i, Y_i)_{i=1}^n\in (\calX,\calY)^n$ sampled i.i.d.\ from some (unknown) probability distribution $\dgd$. 
Additionally, $\dgd$ \emph{is assumed to be $\grp$-invariant}, i.e.
\begin{align} \label{eq:invariant:dgd}
  \dgd(gX, Y) = \dgd(X ,Y ) \;, \quad g \in \grp \;.
\end{align}
For example, $X$ may be an image of an animal, $Y$ a label of the animal, and $\grp$ the group of two-dimensional rotations. 

Given a group action on the space $\cX$, we can partition it into disjoint \textit{orbits} of the form $
\{g\Orbit : g \in \grp \}$, where $\Orbit \in \cX$ denotes an orbit representative. The $\grp$-invariance of $\dgd$ enables the decomposition of $\dgd$ into a distribution over orbits $P_{\Orbit}$ along with a conditional distribution $P_{X|\Orbit} = \haar(g)$ where $g\Orbit = X$ \citep[see, e.g.,][]{invariantdistributions}. Letting $G \sim \haar(\grp)$, this entails $(X,Y) \equdist (G\Orbit,Y)$ and $\dgd = P_{\Orbit} \times P_{X | \Orbit} \times P_{Y|X}$. In other words, the distribution $\dgd$ can be interpreted as first sampling an equivalence class $\Orbit$, then sampling an element $X$ uniformly at random from the set $\Orbit$, and then sampling $Y$ based on the conditional distribution $\dgd(Y|X)$.

This decomposition will play a key role in our analysis in later sections, as it will allow us to iterate expectations over $\dgd$ as
  \begin{align} \label{eq:iterated:expectations}
  	& \bbE_{(X,Y)\sim\dgd}[f(X,Y)] = \bbE_{\Orbit\sim P_{\Orbit}}[ \bbE_{G\sim\haar(\grp)}[\bbE_{Y\sim P_{Y|X}}[f(G\Orbit,Y) \mid \Orbit,Y] \mid Y ] ] \;. \nonumber
  \end{align}
  This will allow us to marginalize over and condition on orbits of $\grp$, making it possible to reason about the expected behaviour of a function within and between orbits.
  
\subsection{Modes of invariance}

When the data-generating distribution and target function are known to be $\grp$-invariant, incorporating this invariance into the model class is an intuitive approach to accelerate convergence and improve generalization. This can be done in a variety of ways.

\textbf{Trained invariance} is implemented as data augmentation (DA) \citep{fawzi2016adaptive,Cubuk2018}: elements $G_{ij}$ of $\grp$ are applied to each observation $X_i$ of the training data, with the label $Y_i$ left unchanged. The result is an augmented dataset $\trdata_{\grp} = ( (G_{ij}  X_i, Y_i)_{j \leq m} )_{i\leq n}$ used to minimize the augmented empirical risk
\begin{equation} \label{eq:aug:risk}
  \eRiskAug(f,\trdata) = \frac{1}{n}\sum_{i=1}^n  \bbE_{G\sim\haar}[\loss(f(G  X_i),Y_i) ] \end{equation}
which can be approximated using samples as follows 
\begin{equation}\label{eq:approx:aug:risk}  \eRiskAug(f,\trdata) \approx \frac{1}{nm} \sum_{i=1}^n \sum_{j=1}^m \loss(f(G_{ij} X_i),Y_i) = \eRiskAugMC(f, \trdata) \; .
\end{equation}
DA is now a standard method in the practitioner's toolkit \citep{iyyer-etal-2014-neural,zhou2015predicting,salamon2017deep, machinehealth}, and recent theoretical work has further established connections to variance reduction methods \citep{kerneltheory}. 
A major benefit of data augmentation is its versatility: while our study in this chapter focuses on augmentations that can be expressed as the action of a group, data augmentation can also be used to promote other properties of the target function such as smoothness by leveraging a broader class of transformations. 

\textbf{Architectural invariance} restricts the function class being learned to contain only invariant functions, typically through either feature averaging (FA) or symmetric function composition. FA symmetrizes the output of an arbitrary neural network by computing an average over $\grp$ at one or more layers, such that the overall network is invariant under the action of $\grp$. 
In practice, averaging is typically done at the penultimate or final layer, resulting in a $\grp$-invariant network ${f}^{\circ}$. A network $f$ with $D$ layers is written as the composition of $h_D \circ \dots \circ h_1$, with the shorthand $h_{d}^{d'}$ referring to the composition of layers $d$ through $d'$. The empirical risk of a network with FA at layer $d$ evaluated on $\trdata$ is
\begin{align*} 
  \eRisk(\invf{f},\trdata) 
    = \frac{1}{n}\sum_{i=1}^n \loss\big(h_{d}^D\circ  \bbE_{G\sim\haar}[ h_{1}^{d-1} (G X_i) ], Y_i \big) \;.
\end{align*}

The development of \textit{invariant} or \textit{equivariant} neural network architectures enables users to avoid the potentially costly computation of this expression for a number of different group invariances. A body of literature of varying degrees of generality has developed to propose new architectures which characterize invariant function classes under a variety of group actions, and to study their properties \citep{Wood:ShaweTaylor:1996,ravanbakhsh2017equivariance,Kondor:Trivedi:2018,invariantdistributions,cohenetal2019generaltheory, Cohen:Welling:2016}.

\subsection{PAC-Bayes generalizations bounds}
\label{sec:pac:bayes}
As discussed in Section~\ref{bkgd:generalization-bounds}, PAC-Bayes bounds characterize the risk of a randomized prediction rule; the randomization can be interpreted as a posterior distribution over functions $Q$ that may depend on $\trdata$, though this distribution need not correspond to the exact posterior of a Bayesian model. A PAC-Bayes bound on generalization error is typically expressed in terms of a sum of the empirical risk and the Kullback-Leibler (KL) divergence \citep{kullback1951} between $Q$ and a fixed prior distribution $P$, which as mentioned in Chapter~\ref{chp:background} can be interpreted as characterizing the \textit{complexity} of functions in the support of $P$.

We define the risk of a probability distribution $Q$ defined on a function class $\fclass$ in the natural way,
\begin{equation*}
    \risk(Q) = \mathbb{E}_{f \sim Q} \risk(f) \quad \text{ and analogously } \quad \eRisk(Q) = \mathbb{E}_{f \sim Q} \eRisk(f)  \; .
\end{equation*}

PAC-Bayes bounds are highly correlated with generalization performance \citep{jiang2020fantastic}, and in some cases provide non-vacuous results for even highly over-parameterized neural networks \citep{dziugaite2017nonvacuous,dziugaite2018dependent,zhou2018nonvacuous}.
The following is a standard bound due to \citet{catoni}, which holds for general data generating distributions $\dgd$ evaluated on the 0-1 loss.
\begin{theorem}[\citet{catoni}]\label{thm:catoni:bound}
  Let $\mathcal{D}^n$ be sampled i.i.d. from $\dgd$, and let $\loss$ be the binary 0-1 loss. 
  For any prior $P$ and any $\delta\in (0,1)$, with probability $1-\delta$ over samples $\trdata$,  for all posteriors $Q$ and for all $\beta > 0$,
	\begin{equation} \label{eq:catoni:bound}
	     \risk(Q) \leq \frac{ 1 - e^{-\beta \eRisk(Q, \trdata) - \frac{1}{n}(\KL{Q}{P} + \log \frac{1}{\delta})} }{1 - e^{-\beta}} \;.
	\end{equation}
\end{theorem}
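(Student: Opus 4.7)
The plan is to combine three standard PAC-Bayesian ingredients: the Donsker-Varadhan change-of-measure inequality, an exponential moment computation specific to Bernoulli random variables (exploiting that $\loss$ is the 0-1 loss), and Markov's inequality. The target of the proof is most conveniently expressed by defining the convex function $g_\beta(r) := -\log(1 - r(1-e^{-\beta}))$ on $[0,1]$, because Catoni's bound is equivalent to the rearranged inequality
\begin{equation*}
 g_\beta(\risk(Q)) \;\leq\; \beta\,\eRisk(Q,\trdata) \;+\; \tfrac{1}{n}\bigl(\KL{Q}{P} + \log\tfrac{1}{\delta}\bigr).
\end{equation*}
Convexity of $g_\beta$ (routine from $g_\beta''(r) = (1-e^{-\beta})^2/(1-r(1-e^{-\beta}))^2>0$) will let us pull expectations over $Q$ through $g_\beta$ via Jensen at the end.

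First I would fix $f \in \fclass$ and compute the moment generating function of $-\beta n \eRisk(f,\trdata)$. Since each $\loss(f(X_i),Y_i)$ is Bernoulli with mean $\risk(f)$ under $\dgd$, independence gives
\begin{equation*}
  \bbE_{\trdata}\bigl[e^{-\beta n \eRisk(f,\trdata)}\bigr] \;=\; \bigl(1-\risk(f)(1-e^{-\beta})\bigr)^n \;=\; e^{-n\,g_\beta(\risk(f))}.
\end{equation*}
Equivalently, $\bbE_{\trdata}[e^{n g_\beta(\risk(f)) - \beta n \eRisk(f,\trdata)}] = 1$. By Fubini this survives averaging over $f\sim P$, so $\bbE_{\trdata}\bbE_{f\sim P}[e^{n g_\beta(\risk(f)) - \beta n \eRisk(f,\trdata)}] = 1$, and Markov's inequality yields that with probability at least $1-\delta$ over $\trdata$,
\begin{equation*}
  \bbE_{f\sim P}\!\bigl[e^{n g_\beta(\risk(f)) - \beta n \eRisk(f,\trdata)}\bigr] \;\leq\; \tfrac{1}{\delta}.
\end{equation*}

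Next I would invoke the Donsker-Varadhan variational characterization of KL, $\bbE_{f\sim Q}[\phi(f)] \leq \KL{Q}{P} + \log\bbE_{f\sim P}[e^{\phi(f)}]$, with $\phi(f) = n g_\beta(\risk(f)) - \beta n \eRisk(f,\trdata)$. On the high-probability event this gives, simultaneously for every posterior $Q$,
\begin{equation*}
 n\,\bbE_{f\sim Q}[g_\beta(\risk(f))] - \beta n\,\eRisk(Q,\trdata) \;\leq\; \KL{Q}{P} + \log\tfrac{1}{\delta}.
\end{equation*}
Jensen's inequality, using convexity of $g_\beta$, replaces $\bbE_Q[g_\beta(\risk(f))]$ by $g_\beta(\risk(Q))$, after which dividing by $n$ and inverting $g_\beta$ produces exactly \eqref{eq:catoni:bound}.

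I expect two points of mild friction rather than deep difficulty. The first is the quantifier "for all $\beta > 0$": the argument above gives the bound uniformly in $Q$ (thanks to Donsker-Varadhan) but for a single pre-specified $\beta$, so upgrading to uniformity in $\beta$ requires either treating $\beta$ as data-independent or a union-bound discretization argument over a countable grid with $\delta$ split accordingly. The second is making the Donsker-Varadhan step rigorous when the exponential moment is random: since the preceding Markov step controls $\bbE_P[e^{\phi}]$ on a high-probability event, the variational inequality is applied pointwise on that event, which is standard but worth flagging. Everything else reduces to the Bernoulli MGF identity and convexity of $g_\beta$, both of which are elementary.
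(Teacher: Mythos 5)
Your proof is correct and is essentially the paper's own argument: the thesis obtains Catoni's bound as a special case of the Lever et al.\ change-of-measure theorem with the convex function $\scD_C(q,p) = -\log\big(1-p(1-e^{-C})\big) - Cq$, and the only substantive verification there is exactly your Bernoulli moment-generating-function identity showing the Laplace transform $\calL_P$ equals one, with the Donsker--Varadhan inequality, Markov/Fubini, and Jensen steps you spell out being precisely what that general lemma packages up. Your caveat about uniformity in $\beta$ is well taken --- both your argument and the paper's deliver the bound for a single data-independent $\beta$, so the ``for all $\beta>0$'' quantifier in the statement strictly requires a union bound over a countable grid with $\delta$ split accordingly, a looseness the paper inherits from the standard presentation of this result.
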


For bounded loss functions, analogous bounds rely on a sum of the empirical risk and a function of the KL divergence \citep[see, e.g.,][]{dziugaite2017nonvacuous,dziugaite2018dependent}. We state all results only for variations of Catoni's bound \eqref{eq:catoni:bound}, but analogous findings hold for bounds of the form proposed by \citet{mcallester1999}. 

\section{Invariance and optimization dynamics}
\label{sec:invar-optim}
We begin with an analysis of the manner in which data augmentation and feature averaging influence the optimization dynamics and resulting function output by gradient descent. This analysis will yield two main insights, both concerning the setting of convex losses: first, that while data augmentation can cause gradient descent to converge to a $\grp$-invariant solution in certain realizable settings, in general it leads to networks that are only approximately invariant. In fact, these networks can overfit to the training dataset such that their variance over orbits of out-of-distribution data increases over the course of training. Second, we will show that feature averaging, in addition to ensuring $\grp$-invariance, reduces the variance of both the loss and its gradients. We will relate this finding to the notion of gradient interference discussed in Section~\ref{sec:bkgd-interference}, illustrating the effect of trained and architectural invariance on interference within and between orbits of a group $\grp$.

\subsection{Data augmentation and trained invariance}
\label{sec:da-optim}
To begin our study of optimization dynamics and invariance, we lay out some basic properties of the symmetrized risk $\eRiskAug$ and its interaction with invariant models. When the loss function is convex, Jensen's inequality can be applied to the augmented risk to compare DA and FA risk estimates. We note that the relevant notion of convexity to this line of argument is with respect to the loss function's first argument, not the parameters of the model. Many (though not all) widely used loss functions are convex in this way (e.g., squared error, cross-entropy). We provide counterexamples highlighting the necessity of this property in Appendix~\ref{appx:counterexamples}.

\begin{restatable}{proposition}{propEmpRisk} \label{prop:empirical:risk:order}
  Let $\loss : \bbR \times \bbR \to \bbR_+$ be a loss function that is convex in its first argument. 
  Then for any $f : \calX \to \calY$, 
  \begin{align*} 
    \eRisk(\invf{f},\trdata) = \eRiskAug(\invf{f},\trdata)  \leq \eRiskAug(f,\trdata) \;,
  \end{align*}
  and therefore analogous inequalities hold for $\eRisk(\invf{Q},\trdata)$, $\risk(f)$, and $\risk(Q)$. 
  Furthermore, if $\loss(f(\argdot),\argdot) \in L_2(\dgd)$ (i.e., has finite second moment), then
  \begin{align*}
  	\Var_{\trdata\sim\dgd^n}\big[ \eRisk(\invf{f},\trdata)  \big] \leq \Var_{\trdata\sim\dgd^n}\big[ \eRiskAug(f,\trdata)  \big] \;.
  \end{align*}
\end{restatable}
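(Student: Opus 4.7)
The plan is to combine $\grp$-invariance with Jensen's inequality in a straightforward termwise argument, and then exploit the i.i.d.\ structure of $\trdata$ for the variance statement. First I would establish the equality $\eRisk(\invf{f},\trdata) = \eRiskAug(\invf{f},\trdata)$: because $\invf{f}$ is $\grp$-invariant, $\ell(\invf{f}(gX_i), Y_i) = \ell(\invf{f}(X_i), Y_i)$ for every $g \in \grp$, so the inner Haar average leaves each summand unchanged. The inequality $\eRiskAug(\invf{f},\trdata) \leq \eRiskAug(f,\trdata)$ then follows from a termwise Jensen argument: since $\invf{f}(X_i) = \bbE_G[f(GX_i)]$ by definition and $\ell(\,\cdot\,, Y_i)$ is convex in its first argument, we get $\ell(\invf{f}(X_i), Y_i) \leq \bbE_G[\ell(f(GX_i), Y_i)]$, and summing over $i$ delivers the bound.

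The analogous statements for $\eRisk(\invf{Q}, \trdata)$, $\risk(f)$, and $\risk(Q)$ follow by pushing the same pointwise Jensen inequality through additional linear operations. For the posterior versions I would simply integrate the inequality over $f \sim Q$ using Fubini. For the population risks I would use the $\grp$-invariance of $\dgd$, namely $(X, Y) \equdist (GX, Y)$ for $G \sim \haar$, to rewrite $\risk(f) = \bbE_{(X,Y)}[\bbE_G[\ell(f(GX), Y)]]$ and then apply the Jensen bound pointwise in $(X, Y)$; integrating once more over $f \sim Q$ handles $\risk(Q)$.

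For the variance inequality, the plan is to reduce to a per-sample comparison using i.i.d.\ structure: both $\eRisk(\invf{f},\trdata)$ and $\eRiskAug(f,\trdata)$ are averages of $n$ i.i.d.\ terms, so their variances equal $n^{-1}$ times the single-sample variances of $\ell(\invf{f}(X), Y)$ and $\bbE_G[\ell(f(GX), Y)]$ respectively. Under $\grp$-invariance of $\dgd$ both per-sample quantities depend on $(X, Y)$ only through the orbit $\Orbit$ and $Y$, so the orbit decomposition $\dgd = P_{\Orbit} \times P_{X|\Orbit} \times P_{Y|X}$ from Section~\ref{sec:background-invar} lets me iterate expectations over orbits and apply the law of total variance to the decomposition of $\ell(f(X), Y)$ into its orbit-conditional mean and residual. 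The main obstacle I anticipate is this last step: a pointwise Jensen bound does not translate directly into a variance bound, so extracting the required ordering will rely on carefully matching the conditional structure $\bbE_G[\ell(f(GX), Y)] = \bbE_{\dgd}[\ell(f(X), Y) \mid \Orbit, Y]$ against that of $\ell(\invf{f}(X), Y)$, rather than on the pointwise inequality alone.
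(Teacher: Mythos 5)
Your treatment of the equality and of the inequality is correct and is exactly the paper's argument: $\grp$-invariance of $\invf{f}$ makes the inner Haar average in $\eRiskAug(\invf{f},\trdata)$ trivial, giving the equality, and the termwise Jensen step $\loss(\bbE_{G}[f(GX_i)],Y_i)\le \bbE_{G}[\loss(f(GX_i),Y_i)]$ gives the inequality; the extensions to $Q$ and to the population risks follow by linearity, Fubini, and the $\grp$-invariance of $\dgd$, just as you describe.

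The variance claim is where your proposal stops short, and the gap you flag is real. Reducing to per-sample variances via the i.i.d.\ structure is fine, but the law-of-total-variance decomposition of $\loss(f(X),Y)$ about its orbit-conditional mean delivers $\Var[\bbE_G[\loss(f(GX),Y)]]\le\Var[\loss(f(X),Y)]$, i.e.\ a comparison between the augmented and \emph{unaugmented} risks of the same $f$ --- not the comparison in the statement. The two quantities actually being compared, $\loss(\invf{f}(X),Y)$ and $\bbE_G[\loss(f(GX),Y)]$, are both already measurable with respect to $(\Orbit,Y)$, so conditioning on the orbit is a no-op for each of them and the law of total variance has nothing left to give; the only relation between them is the pointwise ordering from Jensen, which, as you correctly observe, does not order variances. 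The paper's own proof closes this step by writing $\Var[\eRiskAug(f,\trdata)]=\Var[\bbE[\eRiskAug(f,\trdata)\mid\Orbit^n]]\ge\Var[\bbE[\eRisk(\invf{f},\trdata)\mid\Orbit^n]]$ and attributing the middle inequality to ``the conditional Jensen's inequality''; your suspicion about precisely this step is well founded, since a pointwise ordering of two nonnegative orbit-level functions does not imply an ordering of their variances (with squared loss and $Y\equiv 0$, take one orbit on which $f$ averages to zero and another on which $f$ is constant: $\loss(\invf{f}(X),Y)$ then varies across orbits while $\bbE_G[\loss(f(GX),Y)]$ is constant). So your proposal is missing the idea needed to finish the variance inequality, and no amount of careful conditioning of the kind you sketch will supply it; the variance-reduction statement that genuinely follows from the Rao--Blackwell argument is $\Var[\eRiskAug(f,\trdata)]\le\Var[\eRisk(f,\trdata)]$.
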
  
The proof of this result can be found in Appendix~\ref{apx:invar-proofs}; it follows from Jensen's inequality and the invariance of the data-generating distribution to augmentations. The beneficial effect of an invariant function on the expected risk stems from its reduced variance over orbits. This observation can be flipped to characterize the detrimental effect of incorporating an invariance in the model which is not present in the data: by mapping all elements of an orbit to the same output, the model's expected risk can be lower bounded by a function of the average variance of the true labels over orbits. 

In principle, symmetrizing the risk function via data augmentation should regularize the training procedure towards invariant functions; however, in most settings it is not possible to guarantee that the learned function will be exactly $\grp$-invariant, particularly on inputs that differ from those seen during training. Neural networks trained with data augmentation will in general exhibit low but nonzero variance over equivalence classes, and their failure to generalize this low variance to out-of-distribution is illustrated in Figure~\ref{fig:ood}. One notable setting in which it \textit{is} possible to obtain such guarantees is linear function approximation with groups admitting a linear representation over the input space. To state the result, we define a data-generating distribution $\dgd$ supported on the set $\calX \times \calY$, where $\calY \subseteq \mathbb{R}$ and $ \calX \subseteq V \simeq \mathbb{R}^d$ is a subset of a $d$-dimensional vector space over $\bbR$, $V$, with dual vector space $V^*$. We assume that $\calX$ spans $V$. Furthermore, let $\grp$ admit a linear representation \citep{serre1977linear}, $\rho : \grp \to GL(V)$, with corresponding dual $\rho^*_g = \rho_{g^{-1}}^{\top}$. Finally, we assume that $\dgd$ is invariant to the action of $\grp$ and that $\mathbb{E}[y | \bx] = \langle \bw^*, \bx \rangle$ for some unique $\bw^*$.

\begin{restatable}{proposition}{propSymmGD}\label{theorem:symmgd}
   Let $\dgd$ and $\bw^*$ be defined as above, and let $\ell(\bw) = \mathbb{E}_{\bx, y \sim \dgd}[ (\langle \bw, \bx \rangle - y)^2]$. Then the (global) minimizer $\bw^*$ of $\ell$ satisfies $\rho_g^* \bw^*= \bw^*$ for $\haar$-almost all $g\in\grp$. In particular, the function $f(\bx) = \langle \bw^*, \bx \rangle$ is $\grp$-invariant.
\end{restatable}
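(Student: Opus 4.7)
The plan is to exploit the uniqueness of the minimizer $\bw^*$ together with the invariance of $\dgd$ to show that $\rho_g^*\bw^*$ is also a minimizer, and hence must equal $\bw^*$. First I would note that the loss $\ell(\bw) = \mathbb{E}[(\langle \bw, \bx\rangle - y)^2]$ is a (nonnegative) quadratic form in $\bw$, and that the spanning assumption on $\calX$ implies the covariance operator $\mathbb{E}[\bx\bx^\top]$ is strictly positive definite on $V^*$. Consequently $\ell$ is strictly convex and has a unique global minimizer; by the standard normal-equations argument this minimizer coincides with the $\bw^*$ characterizing $\mathbb{E}[y \mid \bx]$.

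The core step is a change-of-variables calculation. Fix $g \in \grp$. Using the defining relation $\langle \rho_g^*\bw, \rho_g\bx\rangle = \langle \bw, \bx\rangle$ (which follows from $\rho_g^* = \rho_{g^{-1}}^\top$) together with $\grp$-invariance of $\dgd$, which allows us to replace $(\bx, y)$ by $(\rho_g\bx, y)$ inside the expectation without changing its value, I would compute
\begin{align*}
\ell(\rho_g^*\bw^*)
&= \mathbb{E}_{(\bx,y)\sim\dgd}\bigl[(\langle \rho_g^*\bw^*, \bx\rangle - y)^2\bigr] \\
&= \mathbb{E}_{(\bx,y)\sim\dgd}\bigl[(\langle \rho_g^*\bw^*, \rho_g\bx\rangle - y)^2\bigr] \\
&= \mathbb{E}_{(\bx,y)\sim\dgd}\bigl[(\langle \bw^*, \bx\rangle - y)^2\bigr] = \ell(\bw^*)\;.
\end{align*}
Hence $\rho_g^*\bw^*$ is also a global minimizer of $\ell$, and by strict convexity $\rho_g^*\bw^* = \bw^*$. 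The "$\haar$-almost all" qualifier allows for the possibility that the map $g \mapsto \rho_g^*\bw^*$ is only defined or measurable up to a $\haar$-null set; if $\rho$ is continuous this holds for every $g$.

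To conclude invariance of $f(\bx) = \langle \bw^*, \bx\rangle$, I would rewrite $\rho_g^*\bw^* = \bw^*$ as $\rho_{g^{-1}}^\top\bw^* = \bw^*$, which (letting $g$ range over all of $\grp$) is equivalent to $\rho_g^\top\bw^* = \bw^*$ for $\haar$-almost all $g$. Then $f(\rho_g\bx) = \langle \bw^*, \rho_g\bx\rangle = \langle \rho_g^\top\bw^*, \bx\rangle = \langle \bw^*, \bx\rangle = f(\bx)$, giving $\grp$-invariance of $f$.

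The only real obstacle is justifying uniqueness and handling the dual-action bookkeeping cleanly; the spanning assumption on $\calX$ is what makes the quadratic strictly convex (and hence the minimizer genuinely unique), and the identity $\langle \rho_g^*\bw, \rho_g\bx\rangle = \langle \bw, \bx\rangle$ is what turns $\grp$-invariance of $\dgd$ into invariance of $\ell$ under the dual action. Everything else is a one-line change of variables.
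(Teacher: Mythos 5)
Your proof is correct, but it takes a genuinely different route from the one in the paper. You exploit the fact that the population risk is invariant under the dual action --- a change of variables using $\grp$-invariance of $\dgd$ plus the identity $\langle \rho_g^*\bw, \rho_g\bx\rangle = \langle\bw,\bx\rangle$ shows the entire orbit $\{\rho_g^*\bw^*\}$ consists of global minimizers, and uniqueness collapses the orbit to a point. The paper instead argues by symmetrization and Jensen's inequality: supposing the minimizer $\hat\bw$ is not invariant, it forms $\invf{\hat{\bw}} = \bbE_{G\sim\haar}[\rho_G^*\hat\bw]$ and uses strict convexity of the loss to show $\invf{\hat{\bw}}$ achieves strictly smaller (augmented) risk, a contradiction. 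The two arguments buy slightly different things. Your orbit argument needs only \emph{uniqueness} of the minimizer, not convexity of the loss, so it extends to any loss with a unique minimizer; but it relies essentially on exact $\grp$-invariance of $\dgd$ to perform the change of variables. The paper's Jensen argument is run on the \emph{augmented} risk $\eRiskAug$ over a finite sample, so it establishes invariance of the data-augmentation minimizer without needing the sampling distribution itself to be invariant --- which is the operationally relevant statement for SGD on augmented data --- and it fits the chapter's recurring symmetrization-plus-Jensen template (cf.\ Proposition~\ref{prop:empirical:risk:order}). One small caveat in your write-up: spanning of the ambient set $\calX$ does not by itself make $\mathbb{E}[\bx\bx^\top]$ strictly positive definite (you need the support of the law of $\bx$ to span $V$); this is harmless here since the proposition's hypotheses already posit a unique $\bw^*$, but it is worth stating the uniqueness as an assumption rather than deriving it from the spanning condition.
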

In brief, this result shows that in the case of linear models, data augmentation with respect to certain types of symmetry can induce \textit{exact} invariance in the learned function. The proof of Propsition~\ref{theorem:symmgd} can be found in Appendix~\ref{apx:invar-proofs}. It follows straightforwardly from the condition that the group act linearly on the input space along with the linearity of the function $f(\bx) = \langle \bw, \bx \rangle$, as in this case the symmetrization of the parameters $\bw$ is equivalent to the symmetrization of the function, so by Jensen's inequality any minimizer of $\ell$ must be invariant to the group action. Under suitable step-size conditions (e.g., the Robbins--Monro conditions) this proposition implies that SGD will converge to an invariant set of weights. Thus, to learn a predictor that exhibits the desired invariance on the entire dataset, it is sufficient to train with SGD on augmented data with a convex loss.


\begin{figure}[t]
    \centering
    \includegraphics[width=0.58\textwidth]{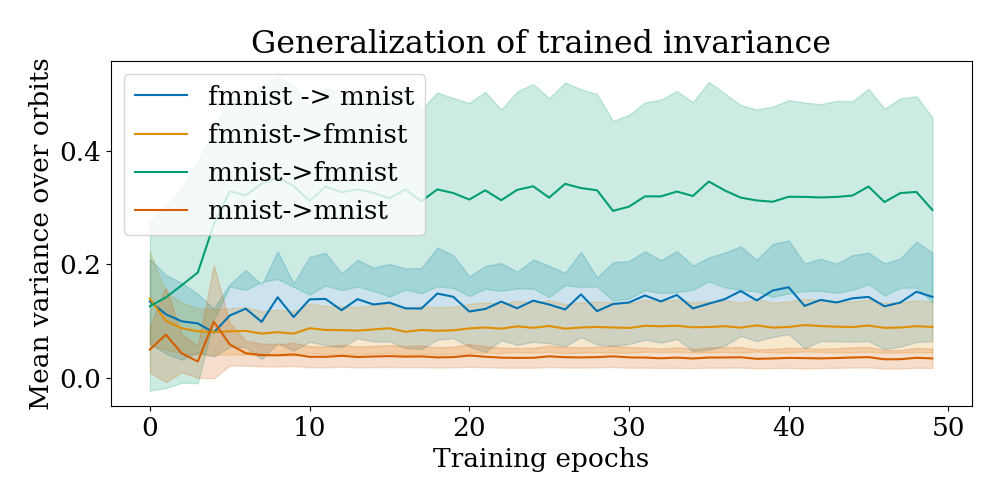}
    \caption[A visualization of a fully connected multi-layer perceptron.]{Variance of predictions with respect to 90 degree rotations of an input over the course of training. Evaluations performed on either the same dataset as used during training, or on out-of-distribution images. Labels are of the form: training set $\rightarrow$ evaluation set, where \texttt{fmnist} refers to FashionMNIST and \texttt{mnist} to MNIST.}
    \label{fig:ood}
\end{figure}

 Once non-linearity is introduced into the model architecture ,Propsition~\ref{theorem:symmgd} will no longer hold, and in the worst case data augmentation may simply result in the function approximator memorizing an invariance on its training set. The example depicted in Figure~\ref{fig:ood} demonstrates such a failure: the learned function appears to capture the target invariance on the training data, but, having not learned the appropriate symmetry in weight space, fails to generalize this invariance to out-of-distribution data. We measure the generalization of a learned invariance by computing the variance of the network outputs over orbits of data drawn from a dataset distinct from the training distribution. In this case, we train fully connected neural networks using DA on one of two related datasets: MNIST and FashionMNIST ($28 \times 28$ pixel black and white images of handwritten digits and clothing categories respectively), each augmented by rotations of multiples of 90 degrees. We then evaluate the variance of the outputs over orbits (rotations by 0, 90, 180, and 270 degrees) in the test set. Finally, we evaluate the two networks on orbits in the complementary dataset. In one sense the results are predictable: networks attain lower variance over orbits of within-distribution as compared to out-of-distribution data points. Somewhat surprising is the discrepancy in out-of-distribution generalization between models trained on MNIST and FashionMNIST. As the FashionMNIST inputs are more visually diverse than the MNIST inputs, one explanation of this phenomenon is that the learned invariances are capable of interpolating between training points but struggle to extrapolate, for some loose notion of interpolation and extrapolation. 

\keyinsight{Data augmentation can lead to invariant parameter configurations in restricted settings, but is prone to memorize invariances on its training distribution, leading to out-of-distribution generalization failures.}



\subsection{Symmetrization}
\label{sec:variance-reduction}
Unlike the approximate invariance induced by data augmentation, architectural invariance will trivially generalize to any data-generating distribution. Architectural invariance may also improve computational efficiency by reducing the number of parameters needed to fit the training data. 

\subsubsection{Feature averaging and Jensen's inequality}

\begin{figure*}[bt]
    \begin{center}
    \includegraphics[width=0.43\textwidth]{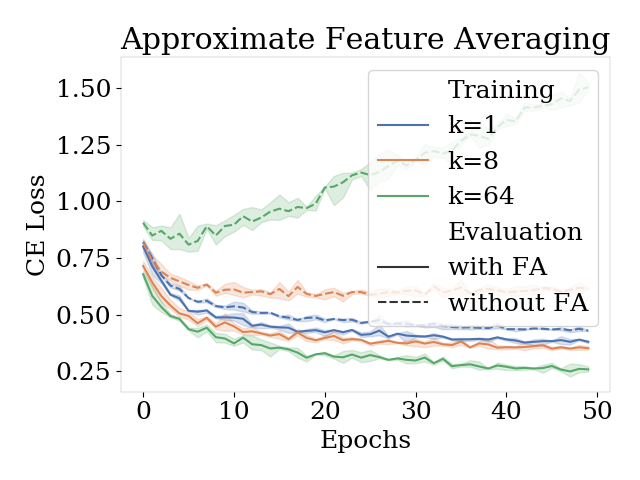}
    \includegraphics[width=0.42\textwidth]{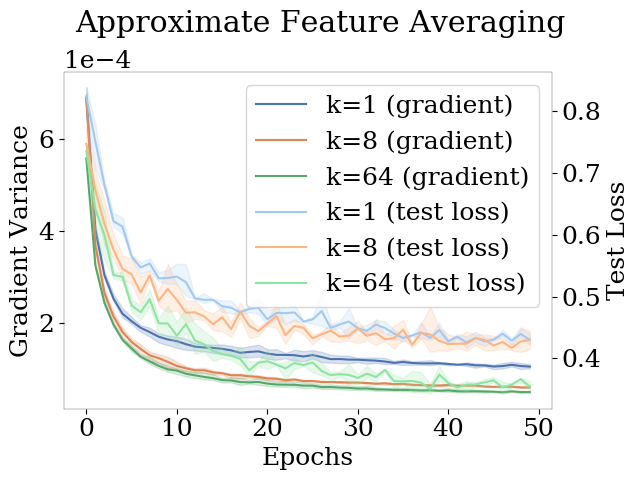}
    \caption[Measurements over the course of training a CNN using different DA and FA approaches.]{Measurements over the course of training a CNN using different DA and FA approaches. \emph{Left}: Models are trained with approximate FA using $k$ rotations sampled from the range $\{1, \dots, 360\}$, and then evaluated with and without approximate FA. Note that for $k=1$, the approximate versions of DA and FA are equivalent. \emph{Right}: Average per-epoch gradient variance and test loss in the same setting. }
    \label{fig:acc}
    \end{center}
\end{figure*}

Jensen's inequality suggests that incorporating FA at evaluation time will improve performance on convex losses. It does not, however, directly yield insights into whether it is worthwhile to also deploy FA during training. Feature averaging, even when performed approximately, changes the mapping between network parameters and functions, and it is not immediately obvious whether this will benefit the optimization dynamics of the learning process. One potential detractor of training with FA is that the network may not learn parameters which can attain a low loss in its absence.

\hypothesis{H1}{models trained with feature averaging will be more dependent on FA to attain a low loss compared to those trained with data augmentation. \label{hyp:variance}}

To test Hypothesis~\ref{hyp:variance}, we train a series of convolutional neural networks on an augmentation of the FashionMNIST dataset. The labels of this dataset are invariant to rotations: put simply, there is no way of rotating a shoe such that it can be mistaken for a t-shirt. We augment FashionMNIST via the group $\grp$ of rotations in the set $\{1^\circ, \dots, 360^\circ\}$, and perform approximate FA with $k$ samples, where $k << |\grp|$.
We see in Figure~\ref{fig:acc} that feature averaging profoundly changes the optimization dynamics of the network. The left plot demonstrates that the model trained with FA becomes increasingly dependent on feature averaging to obtain a low loss: the loss of the network's output in the absence of FA increases during training, particularly for larger $k$ (better approximations of FA); it is only when averaging over orbits that the network attains the lowest loss. The right hand side confirms that the models trained with feature averaging exhibit a variance reduction in their gradients and a reduced test loss. The models trained with approximate FA thus converge to regions of parameter space which are distinct from those learned by data augmentation, and which do not yield accurate predictions in the absence of FA. 

\subsubsection{Invariance and interference}

A hallmark of memorization occurs when the network's predictions on its training data match the target function, but its gradients are orthogonal, suggesting it is not able to use information from one input to improve its predictions on others. We therefore study the alignment of gradients, i.e. \textit{interference}, within and between orbits of $\grp$; this will amount to generalizing the quantity $\gradint$ described in Equation~\ref{eq:gradint} to apply to \textit{sets} of states. Colinearity of gradients is a much stronger notion of invariance than predictive variance over an orbit. While the latter can easily be set to zero by simple memorization, gradient alignment requires that the network correctly generalize changes in its predictions from one input to others in its orbit. The gradients of a network evaluated on inputs from the same equivalence class under FA will by necessity be colinear; we now investigate whether this property also holds in models trained with DA.

We study gradient colinearity by computing the rank of the matrix of gradients of inputs computed on a minibatch $\bX$ of independently sampled data points, along with the rank of the matrix of gradients of inputs in a batch which contains the full orbit of a subset of these inputs $\bX_{\Phi}$. We subsample $\bX$ when generating $\bX_{\Phi}$ to ensure that both batches contain the same number of inputs. Concretely, given a batch of data $\bX, \by$ of size $n$ and parameters $\theta \in \mathbb{R}^d$ we are interested in the matrix $\mathbf{I}_\nabla(\bX) \in \mathbb{R}^{n \times d}$ defined as follows
\begin{equation}
    [\mathbf{I}_\nabla(\bX)]_{i,j} \overset{\text{def}}{=} \nabla_{\theta_j} \ell (f_\theta(\bX_i), \by_i)
\end{equation}
where $\ell$ is a loss function and $f_\theta$ is the function computed by the neural network with parameters $\theta$.  
By considering $\bX$ sampled independently and $\bX_{\Phi}$ containing entire equivalence classes, we can distinguish between gradient interference over the whole dataset, and gradient interference specifically along an orbit. An invariant neural network architecture will produce the same gradient for every input from a given equivalence class, and thus the rank of $\mathbf{I}_\nabla(\bX_{\Phi})$ will be upper bounded by the number of distinct equivalence classes. A non-invariant model might in the worst case assign orthogonal gradients to elements of the same equivalence class; however, such a model could also assign colinear gradients to inputs from different equivalence classes, thus attaining a low rank on $\bX_{\Phi}$ despite exhibiting no invariance. In this case, the network would also attain low rank on $\bX$, giving us a means of identifying whether a low gradient rank is due to interference across or within orbits. 

\begin{figure}
    \centering
    \includegraphics[width=\textwidth]{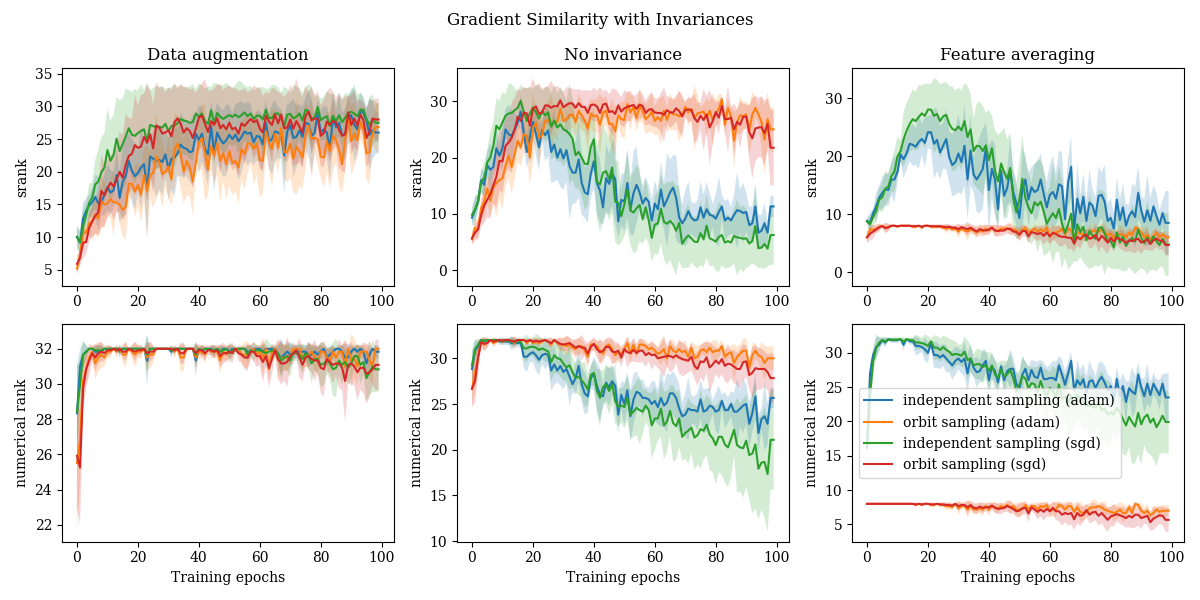}
    \caption[Comparing the ranks of gradients computed on minibatches consisting of 32 independent random input samples, or 8 random samples from the training set augmented with all 90 degree rotations (resulting in 32 non-independent samples).]{Comparing the ranks of gradients computed on minibatches consisting of 32 independent random input samples, or 8 random samples from the training set augmented with all 90 degree rotations (resulting in 32 non-independent samples). While the gradients for inputs from the same orbit are colinear for the feature averaging networks, as evidenced by the low rank of the gradients of augmented minibatches compared to those on independently sampled inputs, the networks without this architectural invariance produce linearly independent gradients. Indeed, the model trained only on non-rotated images exhibits greater independence on rotations than it does on independent random samples from the training set.}
    \label{fig:cifar-invariances}
\end{figure}

In order to be robust to small perturbations of the matrix $\mathbf{I}_\nabla$, we use the numerical rank~\citep{golub1976rank} and srank~\citep{kumar2021implicit}, which count the number of (resp. normalized by the maximum) singular values of a matrix that lie above some threshold $\epsilon$. In our evaluations we set $\epsilon=0.01$ in accordance with \citet{kumar2021implicit}. Note that the singular values of $\mathbf{I}_\nabla$ are equal to the eigenvalues of the matrix whose entries are given by the dot product $\gradint(\bx_i, \bx_j)$. We plot these values over the course of training in Figure~\ref{fig:cifar-invariances}. Our training data consists of the CIFAR-10 dataset augmented by rotations of 90 degrees and the evaluations are run on both stochastic gradient descent (SGD) and adaptive (Adam \citep{kingma2014adam}) optimizers. For computational tractability, we use a batch size of 32 and subsample parameters in the network to compute the gradient dot products. Despite the batch size being orders of magnitude lower than the number of parameters, we still find that the gradient matrices in most cases do not exhibit full row rank. 

We see in Figure~\ref{fig:cifar-invariances} that networks trained with FA naturally have colinear gradients over orbits, while networks trained without FA do not. In particular, the network trained with DA does not appear to distinguish between elements from the same equivalence class and elements from different classes, at least as far as gradient alignment is concerned. Intriguingly, the network trained without data augmentation exhibits \textit{higher} gradient rank on the augmented minibatches than on the independently sampled data, exhibiting the opposite trend of the network trained with FA. In combination with Proposition~\ref{theorem:symmgd}, this suggests that while data augmentation is capable of learning invariant functions over its training set, it does not capture the same degree of invariance in its gradient structure as FA.

\section{Invariance and generalization bounds}
\label{sec:data:augmentation}
Our previous analysis showed that data augmentation alone is not sufficient to induce invariances in a model that generalize to arbitrary inputs. Instead, models trained with data augmentation have the potential to learn an invariance only on the training distribution, in a sense memorizing the invariant structure. We now explore whether this phenomenon is mirrored in generalization bounds. We will focus in particular on PAC-Bayes bounds, which can be decomposed into a \textit{data fit} and a \textit{model complexity} term. The previous section showed that symmetrization improves the data fit (when this term is given by a convex loss function) of a function class on invariant data-generating distributions. In this section, we will show that invariance can also reduce a notion of model complexity, and combine these two findings to show that invariance leads to lower PAC-Bayes generalization bounds. These results will be complementary to our previous analysis of the training dynamics of neural networks trained with data augmentation and feature averaging. 



\subsection{Validity of the augmented risk}

The PAC-Bayes bound in Theorem~\ref{thm:catoni:bound} holds for a binary classification loss. However, the average of the 0-1 loss over an orbit will lie in the continuum, i.e. $\bbE_{G\sim\haar}[\loss(f(GX),Y)] \in [0,1]$. Monte Carlo approximations of $\bbE_{G\sim\haar}[\loss(f(GX),Y)]$ also violate the assumptions of Theorem~\ref{thm:catoni:bound} because the augmented data set is not identically distributed (note that sampling two elements of the same orbit in the augmented dataset will be guaranteed, whereas this event will have vanishing probability under $\dgd$).
Encouragingly, the following theorem shows that neither of these issues prevents us from leveraging the augmented risk in PAC-Bayes bounds.

\begin{restatable}{theorem}{thmpacbayesda}
  \label{thm:pac:bayes:da}
  Assume that $\dgd$ is \ginv. Then Theorem~\ref{thm:catoni:bound} holds with $\eRiskAug(Q,\trdata)$ as in \eqref{eq:aug:risk} substituted for $\eRisk(Q,\trdata)$.
\end{restatable}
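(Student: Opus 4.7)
The plan is to reduce the claim to a direct application of Theorem~\ref{thm:catoni:bound}, by exhibiting a reformulation of the augmented empirical risk as a standard empirical mean of a bounded, i.i.d.\ per-sample loss whose expectation coincides with the true risk $\risk(f)$. Concretely, define the symmetrized per-sample loss
\begin{equation*}
\invf{\loss}(f; x, y) \;:=\; \bbE_{G \sim \haar}[\loss(f(Gx), y)] \;,
\end{equation*}
so that $\eRiskAug(f,\trdata) = \tfrac{1}{n}\sum_{i=1}^n \invf{\loss}(f; X_i, Y_i)$, and this identity extends to randomized predictors $Q$ by linearity in $f$.

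The first step is to verify three properties of $\invf{\loss}$ which together match the hypotheses used in the proof of Theorem~\ref{thm:catoni:bound}: (i) $\invf{\loss}(f; x, y) \in [0,1]$, since it is an average of $\{0,1\}$-valued quantities; (ii) the random variables $\{\invf{\loss}(f; X_i, Y_i)\}_{i=1}^n$ are i.i.d.\ under $\dgd^n$, because the pairs $(X_i, Y_i)$ are; (iii) crucially, the assumed $\grp$-invariance of $\dgd$ gives
\begin{equation*}
\bbE_{(X,Y)\sim\dgd}[\invf{\loss}(f; X, Y)] \;=\; \bbE_{G}\bbE_{(X,Y)}[\loss(f(GX), Y)] \;=\; \bbE_{(X,Y)}[\loss(f(X), Y)] \;=\; \risk(f) \;,
\end{equation*}
where the swap of expectations is justified by Fubini (the integrand is bounded) and the second equality uses $(GX, Y) \equdist (X,Y)$ from the decomposition in \eqref{eq:iterated:expectations}.

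The second step is to observe that Catoni's proof of Theorem~\ref{thm:catoni:bound} only uses the three properties above about per-sample losses. In particular, it relies on (a) the elementary inequality $e^{-\beta z} \geq 1 - (1-e^{-\beta})z$ for $z \in [0,1]$ applied to each per-sample loss, (b) i.i.d.\ sampling to factorize the moment generating function over the training set, and (c) the equality between the expected per-sample loss and the target quantity being upper-bounded. None of these steps appeals to the discreteness of the 0-1 loss, only to boundedness in $[0,1]$. Running the original argument verbatim with $\invf{\loss}$ in place of $\loss$, and with $\risk(Q)$ left unchanged on the left-hand side by property (iii), yields the claim with $\eRiskAug(Q,\trdata)$ substituted for $\eRisk(Q,\trdata)$.

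The main obstacle is essentially bookkeeping: ensuring that the replacement $\loss \mapsto \invf{\loss}$ is consistent throughout Catoni's argument and that the PAC-Bayes change of measure (applied against the same prior $P$ and posterior $Q$ as in the original theorem) commutes with the inner expectation over $G$. Since the inner symmetrization is pointwise bounded and independent of the $(P,Q)$ pair over functions, this commutation is routine. The continuum-valued objection flagged in the preceding paragraph of the paper is resolved precisely by property (i), and the non-i.i.d.\ Monte Carlo issue is side-stepped entirely by working with the exact augmented risk rather than its sampled approximation $\eRiskAugMC$.
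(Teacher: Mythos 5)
Your proposal is correct and follows essentially the same route as the paper: both arguments hinge on (a) the $\grp$-invariance of $\dgd$ making the symmetrized per-sample loss $\invf{\loss}(f;X,Y)=\bbE_{G\sim\haar}[\loss(f(GX),Y)]$ an i.i.d.\ sequence with mean $\risk(f)$, and (b) a convexity argument showing the resulting moment generating function is dominated by the Bernoulli one, so that the Laplace transform $\calL_P$ in the Lever et al.\ template is at most $1$ (the paper gets this by applying Jensen's inequality in the form $e^{-C\,\bbE_G[\loss(f(GX_i),Y_i)]}\leq\bbE_G[e^{-C\loss(f(GX_i),Y_i)}]$ followed by Fubini and invariance, whereas you invoke the bounded-loss extension of Catoni directly; these are two packagings of the same convexity step). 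The one concrete fix needed is the direction of your chord inequality: convexity of $z\mapsto e^{-\beta z}$ gives $e^{-\beta z}\leq 1-(1-e^{-\beta})z$ for $z\in[0,1]$, not $\geq$ as written, and it is this $\leq$ direction that upper-bounds $\bbE[e^{-C\invf{\loss}}]$ by $1-\risk(f)(1-e^{-C})$ and hence makes the argument close.
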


The proof of Theorem~\ref{thm:pac:bayes:da} can be found in Appendix~\ref{apx:invar-proofs}. The key step in obtaining this result is to define a convex function which measures the difference between $\eRisk(f)$ and $\risk(f)$ (which can be translated to a bound on the generalization gap), and then apply Jensen's inequality to show that substituting $\eRiskAug$ for $\eRisk$ in this expression will yield a lower bound on this difference, which can then be used to obtain the bound of Theorem~\ref{thm:catoni:bound}.
\subsection{Invariance and model simplicity}
\label{sec:invar-gap}
In the case of architectural invariance, particularly via FA, we can go beyond the validity of the PAC-Bayes bound and show a provable reduction in the model complexity term, resulting in a {symmetrization gap} in the PAC-Bayes bound. Given a function class $\fclass$, we consider the set $\invf{\fclass}$, denoting the class of $\grp$-invariant functions obtained by symmetrizing the functions belonging to $\fclass$. Symmetrization is a surjective map from $\fclass$ to $\invf{\fclass}$ by its definition. However, symmetrization is not necessarily injective, meaning that $\invf{\fclass}$ is in a sense smaller than $\fclass$, and so a distribution over this set should be \textit{simpler}. 
It is then intuitive that symmetrization should also reduce the model complexity term appearing in a generalization bound. This section will characterize these intuitions rigorously. In order to do so, we must first construct an analogous symmetrization operator on probability measures as we saw for functions in \eqref{eq:symmetrization:def}. 
This is fortunately straightforward: 
for any probability measure $P$ on $\fclass$, the induced probability measure on $\invf{\fclass}$ is the image of $P$ under $\symm_{\grp}$, $\invf{P} = P \circ \symm_{\grp}^{-1}$. Intuitively, incorporating invariances into an architecture should reduce its complexity by forcing $P$ and $Q$ to agree on this property; this intuition is quantified in the following lemma.


\begin{restatable}{lemma}{lempushforwardKL} \label{lem:pushforward:KL}
  Suppose that $(E_i,\calE_i)$, $i=1,2$, are two measurable spaces, the second of which is standard, $\mu$ and $\nu$ are two probability measures on $(E_1,\calE_1)$, and $\psi : (E_1,\calE_1) \to (E_2,\calE_2) $ is a measurable map. Then
  \begin{align} \label{eq:kl:inequality:pushforward}
  	\KL{\mu\circ\psi^{-1}}{\nu\circ\psi^{-1}} \leq \KL{\mu}{\nu} \;.
  \end{align}
  Furthermore, if $\mu\ll \nu$ with density $m$, then $\mu\circ\psi^{-1} \ll \nu\circ\psi^{-1}$ with density $m_{\psi}$, and the {$\psi$-gap} is
  \begin{align}
  	\Delta_{\psi}(\mu\ ||\ \nu)  : & = \KL{\mu}{\nu} - \KL{\mu\circ\psi^{-1}}{\nu\circ\psi^{-1}} \nonumber \\
  	  & = \int_{E_1} \mu(dx) \log \frac{m(x)}{(m_{\psi}\circ\psi)(x)} \;. \label{eq:psi-gap}
  \end{align}
\end{restatable}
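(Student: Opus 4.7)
The plan is to prove the two parts in sequence, with the main work being a conditional Jensen's inequality argument. The first inequality is the classical data-processing inequality for KL divergence, so the content of the lemma is essentially to write it down carefully in a form that yields the explicit gap expression \eqref{eq:psi-gap}.

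First I would dispose of the trivial case: if $\mu \not\ll \nu$ then $\KL{\mu}{\nu} = +\infty$ and both inequalities hold vacuously, so it suffices to assume $\mu \ll \nu$ with density $m = d\mu/d\nu$. Under this assumption, for every $B \in \calE_2$,
\begin{equation*}
(\mu\circ\psi^{-1})(B) = \int_{\psi^{-1}(B)} m(x)\,\nu(dx) = \int_B \bbE_\nu[m(X) \mid \psi(X)=y]\,(\nu\circ\psi^{-1})(dy),
\end{equation*}
so $\mu\circ\psi^{-1} \ll \nu\circ\psi^{-1}$ with density $m_\psi(y) = \bbE_\nu[m(X)\mid \psi(X)=y]$. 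The assumption that $(E_2,\calE_2)$ is standard is precisely what guarantees the existence of a regular version of this conditional expectation (equivalently, of a disintegration of $\nu$ along $\psi$), so $m_\psi$ is a well-defined measurable function, not merely an equivalence class.

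Next I would apply Jensen's inequality to the convex function $\phi(t) = t\log t$ on $[0,\infty)$, conditionally on $\psi$: pointwise in $y$,
\begin{equation*}
\phi(m_\psi(y)) = \phi\bigl(\bbE_\nu[m(X)\mid \psi(X)=y]\bigr) \leq \bbE_\nu[\phi(m(X))\mid \psi(X)=y].
\end{equation*}
Integrating against $\nu\circ\psi^{-1}$ and using the tower property gives
\begin{equation*}
\KL{\mu\circ\psi^{-1}}{\nu\circ\psi^{-1}} = \int \phi(m_\psi)\,d(\nu\circ\psi^{-1}) \leq \int \phi(m)\,d\nu = \KL{\mu}{\nu},
\end{equation*}
which is \eqref{eq:kl:inequality:pushforward}.

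Finally, to identify the gap, I would unwind the left-hand side of \eqref{eq:psi-gap} via the change-of-variables formula. Writing
\begin{equation*}
\int m\log\frac{m}{m_\psi\circ\psi}\,d\nu = \int m\log m\,d\nu - \int \log(m_\psi\circ\psi)\,d\mu,
\end{equation*}
the first term equals $\KL{\mu}{\nu}$, and for the second I would push forward through $\psi$ to obtain $\int\log m_\psi\,d(\mu\circ\psi^{-1}) = \int m_\psi\log m_\psi\,d(\nu\circ\psi^{-1}) = \KL{\mu\circ\psi^{-1}}{\nu\circ\psi^{-1}}$, using that $m_\psi$ is the density of the pushforward. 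Subtracting yields the claimed identity for $\Delta_\psi(\mu\,||\,\nu)$. The main subtlety throughout is the existence and measurability of $m_\psi$ as a genuine function (not an a.e.-defined class), which is exactly where the standardness of $(E_2,\calE_2)$ is used; once this is in hand the rest is a routine conditional-Jensen plus change-of-variables computation.
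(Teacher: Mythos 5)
Your proof is correct, but it follows a genuinely different route from the paper's. The paper constructs the ``graph coupling'' $\tmu(A\times B)=\mu(A\cap\psi^{-1}B)$ (and likewise $\tnu$) on the product space $E_1\times E_2$ and applies the chain rule of relative entropy in both directions: conditioning on the first coordinate the conditionals are the same Dirac mass $\delta_{\psi(x_1)}$, so $\KL{\tmu}{\tnu}=\KL{\mu}{\nu}$; conditioning on the second coordinate yields $\KL{\mu\circ\psi^{-1}}{\nu\circ\psi^{-1}}$ plus a nonnegative conditional term, which is then identified with the gap. You instead identify the pushforward density directly as $m_\psi=\bbE_\nu[m\mid\psi]$, get the inequality from conditional Jensen applied to $t\mapsto t\log t$, and obtain the gap identity by a change-of-variables computation. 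What each buys: the paper's route gives the gap an information-theoretic reading as an expected conditional relative entropy over the fibers of $\psi$ (and its nonnegativity comes for free from the chain rule), whereas your route is more self-contained --- it needs no joint measure and no chain rule as a black box --- and it actually makes the standardness hypothesis on $(E_2,\calE_2)$ inessential, since the conditional expectation of the single integrable function $m$ and its Doob--Dynkin factorization through $\psi$ exist for arbitrary measurable target spaces; standardness is only genuinely needed if one wants full disintegrations, as in the paper's conditioning on $X_2$. Two small points worth making explicit in a final write-up: the splitting $\int m\log\frac{m}{m_\psi\circ\psi}\,d\nu=\int m\log m\,d\nu-\int\log(m_\psi\circ\psi)\,d\mu$ should be justified by first noting $\KL{\mu\circ\psi^{-1}}{\nu\circ\psi^{-1}}\in[0,\KL{\mu}{\nu}]$ so that no $\infty-\infty$ arises when $\KL{\mu}{\nu}<\infty$; and your computation corrects what appears to be a typo in the paper's own proof, where the gap is written as $\bbE_\mu[\log\frac{m}{m\circ\psi}]$ rather than $\bbE_\mu[\log\frac{m}{m_\psi\circ\psi}]$.
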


We provide a sketch here for intuition, and defer the full proof of Lemma~\ref{lem:pushforward:KL} to Appendix~\ref{apx:invar-proofs}. The lemma leverages the chain rule of the relative entropy to decompose the KL divergence between two distributions into a component which marginalizes over equivalence classes of $\psi$ in the space $E_1$, and one which conditions on them. We will use the pushforward measure $\mu \circ \psi^{-1}$ (and anlogously $\nu \circ \psi^{-1}$), defined over sets in $\calE_2$ as $(\mu \circ \psi^{-1})(A) = \mu(\psi^{-1}(A))$, to `marginalize' over equivalence classes. Concretely, for probability measures $P$ and $Q$ on the measurable space $\mathcal{E}_1 \times \mathcal{E}_2$, under mild assumptions to guarantee that the following terms are well-defined, we have 
\begin{equation} \small
    \KL{P(X_1,X_2)}{Q(X_1,X_2)} = \KL{P(X_1)}{Q(X_1)} + \KL{P(X_2|X_1)}{Q(X_2|X_1)} \; .\label{eq:simple-chain-rule}
\end{equation}

By constructing the joint space $(\calE_1,\calE_2)$ to correspond to $(\invf{\fclass}, \fclass)$, we can design a probability measure on this space with the property that the left hand side of the equality in \eqref{eq:simple-chain-rule} will be equal to $\KL{\mu}{\nu}$, while the right hand side will consist of the sum $\KL{\mu \circ \psi^{-1}}{\nu \circ \psi^{-1}}$, the KL divergence of the pushforward measures $\mu \circ \psi^{-1}$ and $\nu \circ \psi^{-1}$, and $\Delta_{\psi}(\mu \ll \nu)$, which measures the degree to which the density of $\mu$ varies over equivalence classes of $\psi$. If the density $m$ is constant wherever it is not zero, then this gap will be zero.

\textbf{The symmetrization gap.} Lemma~\ref{lem:pushforward:KL} simply states that when we compress the input space $E_1$ via a mapping $\psi$, we analogously compress probability distributions over this space and obtain a subsequent reduction in the KL divergence of the corresponding pushforward measures. The precise gap in the KL divergence has the closed form \eqref{eq:psi-gap} which measures the degree to which the measure $\mu$ varies over the equivalence classes induced by $\psi$. This indicates that symmetrization can reduce the KL divergence term in the PAC-Bayes bound \eqref{eq:catoni:bound}, as we show in the following theorem. 

\begin{theorem} \label{lemma:KL:gen}
  Let $\mathcal{X}$ be a compact metric space and $\mathcal{Y}$ a Polish space, $\grp$ a group acting measurably on $\mathcal{X}$, and $\fclass = C(\mathcal{X},\mathcal{Y})$ the class of continuous functions $\mathcal{X}\to \mathcal{Y}$.\footnote{The result can hold for other function classes $\fclass$; the key requirement is that conditioning is properly defined in $\fclass$ and $\invf{\fclass}$.}  
  Let $Q$ and $P$ be probability measures on $\fclass$ such that $Q \ll P$ with density $q$, and $\invf{Q}\ll\invf{P}$ (density $\invf{q}$) their images under $\symm_{\grp}$ on $\invf{\fclass}$. Then 
  \begin{equation*}
    \KL{\invf{Q}}{\invf{P}} \leq \KL{Q}{P} \;.
  \end{equation*}
  Furthermore, the {symmetrization gap} is 
  \begin{align} \label{eq:symm:gap}
  	\invf{\Delta}(Q \ ||\ P) = \bbE_{f\sim Q}\bigg[ \log \frac{q(f)}{\invf{q}(\symm_{\grp} f)}  \bigg] \;.
  \end{align}
\end{theorem}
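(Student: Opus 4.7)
The plan is to derive this theorem as an essentially direct corollary of Lemma~\ref{lem:pushforward:KL}, applied with the map $\psi = \symm_{\grp}$, the source space $(E_1,\calE_1) = (\fclass, \borel(\fclass))$, and the target space $(E_2, \calE_2) = (\invf{\fclass}, \borel(\invf{\fclass}))$. By construction $\invf{Q} = Q \circ \symm_{\grp}^{-1}$ and $\invf{P} = P \circ \symm_{\grp}^{-1}$, so once the hypotheses of the lemma are verified, both the inequality and the closed-form expression for the gap follow immediately from \eqref{eq:kl:inequality:pushforward} and \eqref{eq:psi-gap}.

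First, I would verify the standardness hypothesis on the target space. Since $\calX$ is a compact metric space and $\calY$ is Polish, the function space $\fclass = C(\calX,\calY)$ equipped with the topology of uniform convergence is Polish, hence standard Borel. The subset $\invf{\fclass}$ of $\grp$-invariant continuous functions is closed under uniform limits (each pointwise constraint $f(gx) = f(x)$ passes to uniform limits of continuous functions), so $\invf{\fclass}$ inherits a Polish, hence standard Borel, structure, which verifies the required condition on $(E_2,\calE_2)$.

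Next, I would argue that $\symm_{\grp}: \fclass \to \invf{\fclass}$ is measurable. This is most cleanly shown via continuity in the uniform topology: for any $f_1, f_2 \in \fclass$, the bound $\sup_{x \in \calX}|\symm_{\grp} f_1(x) - \symm_{\grp} f_2(x)| \leq \|f_1 - f_2\|_\infty$ follows because $\haar$ is a probability measure. Continuity of $\symm_{\grp} f$ itself in its argument $x$, which places the image in $\invf{\fclass} \subset \fclass$, follows from continuity of the group action together with dominated convergence, using that $f$ is bounded on the compact set $\calX$. Applying Lemma~\ref{lem:pushforward:KL} then yields the inequality $\KL{\invf{Q}}{\invf{P}} \leq \KL{Q}{P}$, and specialising \eqref{eq:psi-gap} to $m = q$ and $m_\psi = \invf{q}$ gives
\begin{equation*}
\invf{\Delta}(Q\ ||\ P) = \int_{\fclass} Q(df)\, \log \frac{q(f)}{\invf{q}(\symm_{\grp} f)} = \bbE_{f \sim Q}\!\left[ \log \frac{q(f)}{\invf{q}(\symm_{\grp} f)} \right] \; .
\end{equation*}

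The main obstacle is topological/measure-theoretic rather than probabilistic: the lemma does all the analytical heavy lifting, so the only real work is ensuring that $\fclass$ and $\invf{\fclass}$ are standard Borel and that the symmetrization map is measurable. If one wished to generalize beyond $C(\calX,\calY)$, as hinted by the theorem's footnote, the nontrivial step would be to reproduce the Polish-subspace argument for $\invf{\fclass}$ in the more general class, since proper conditioning on the orbit structure is what makes the pushforward identification $\invf{Q} = Q \circ \symm_{\grp}^{-1}$ and the existence of the density $\invf{q}$ meaningful in the first place.
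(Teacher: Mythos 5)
Your proposal is correct and follows essentially the same route as the paper's own proof: both reduce the theorem to Lemma~\ref{lem:pushforward:KL} with $\psi = \symm_{\grp}$ and then verify that $\fclass$ and $\invf{\fclass}$ are standard Borel and that $\symm_{\grp}$ is measurable. You in fact supply somewhat more detail than the paper does (the explicit uniform-norm contraction argument for measurability and the closedness of the invariant subspace), which only strengthens the argument.
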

\begin{proof}[Proof of Theorem~\ref{lemma:KL:gen}]
  The proof of this result follows straightforwardly from Lemma~\ref{lem:pushforward:KL}; we need only show that the conditions of the lemma hold. To see this, we note that for $\calX$ a compact metric space and $\calY$ a Polish space, the space $\fclass = C(\calX,\calY)$ of continuous functions $f : \calX \to \calY$ is a Polish space, and therefore it (along with its Borel $\sigma$-algebra $\borel(C(\calX,\calY))$) is a standard Borel space. 
  For a group $\grp$ acting measurably on $\calX$, the symmetrization operator $\symm_{\grp} : \fclass \to \invf{\fclass}$ is measurable, and the product space $(\fclass \times \invf{\fclass}, \borel(\fclass)\otimes\borel(\invf{\fclass}))$ is a standard Borel space. Thus, the conditions of Lemma~\ref{lem:pushforward:KL} are satisfied and the result follows.
\end{proof}



In practice, computing the expectation over orbits required by exact feature averaging may be intractable. Instead, one may sample a set of $k$ transformations with which to average the function output. While this will not output the exact expectation, it still takes advantage of a simplification of the function space via Lemma~\ref{lem:pushforward:KL}, by aggregating functions that have some probability of being mapped to the same approximately averaged function. To formalize the idea, let $g^k = \{g_1,g_2,\dotsc,g_k\}$ be a set of $k$ elements of $\grp$, and $G^k$ a random realization sampled i.i.d.\ from $\haar$. Let $\symm_{g^k} f(x) = k^{-1} \sum_{j\leq k}f(g_j x)$ denote the approximate symmetrization of $f$ by $g^k$. Finally, let $\invfMC{Q}_{g^k} = Q \circ \symm_{g^k}^{-1}$ denote the image of a distribution $Q$ on $\fclass$ under $\symm_{g^k}$. The following result is a consequence of the fact that Lemma~\ref{lem:pushforward:KL} is true for every $g^k$, and that for $g^{k+1}=g^k\cup\{g_{k+1}\}$, $\symm_{g^{k+1}}f(x) = f(g_{k+1}x) + \frac{k}{k+1}\symm_{g^k}f(x)$.

\begin{proposition} \label{prop:kl:chain}
  Assume the conditions of Lemma~\ref{lemma:KL:gen}. Let $G_{s}=G_1,G_2,\dotsc$ be a sequence of elements sampled i.i.d.\ from $\haar$. Then with probability one over $G_s$, 
  \begin{align*}
  	 \KL{Q}{P} & \geq \KL{\invfMC{Q}_{G^1}}{\invfMC{P}_{G^1}} \geq \dotsb \\ 
  	   & \geq \KL{\invfMC{Q}_{G^k}}{\invfMC{P}_{G^k}} \geq \dotsb \\
  	   & \geq \KL{\invf{Q}}{\invf{P}} \;.
  \end{align*}
\end{proposition}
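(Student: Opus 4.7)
The plan is to iterate Lemma~\ref{lem:pushforward:KL} along the sequence of approximate symmetrization maps indexed by the growing sample $G^k$, producing a data-processing chain that terminates at the exact symmetrization. I would split the argument into three pieces: the top inequality $\KL{Q}{P} \geq \KL{\invfMC{Q}_{G^1}}{\invfMC{P}_{G^1}}$, the intermediate chain $\KL{\invfMC{Q}_{G^k}}{\invfMC{P}_{G^k}} \geq \KL{\invfMC{Q}_{G^{k+1}}}{\invfMC{P}_{G^{k+1}}}$, and the bottom inequality $\KL{\invfMC{Q}_{G^k}}{\invfMC{P}_{G^k}} \geq \KL{\invf{Q}}{\invf{P}}$.

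For the top inequality I would verify that, for each realization of $G^1$, the operator $\symm_{G^1}$ coincides with the pre-composition $T_{G_1}$, which is measurable since $\grp$ acts measurably on $\calX$ and hence on $\fclass = C(\calX,\calY)$. Lemma~\ref{lem:pushforward:KL} applied with $\psi = \symm_{G^1}$ then yields the claim pointwise in $G^1$, a fortiori almost surely. For the bottom inequality I would use the strong law of large numbers on the i.i.d.\ sequence $\{T_{G_j} f\}_{j\geq 1}$ in $L^\infty(\calX,\calY)$: for $\haar^{\infty}$-almost every sequence $G_s$, $\symm_{G^k} f \to \invf f$ pointwise, so the pushforward measures $\invfMC{Q}_{G^k}$ converge weakly to $\invf{Q}$ (and similarly for $P$). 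Lower semicontinuity of the KL divergence under joint weak convergence of probability measures then gives $\liminf_k \KL{\invfMC{Q}_{G^k}}{\invfMC{P}_{G^k}} \geq \KL{\invf{Q}}{\invf{P}}$.

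For the intermediate step I would exploit the recursion
\[
\symm_{g^{k+1}} f = \tfrac{k}{k+1}\,\symm_{g^k} f + \tfrac{1}{k+1}\, T_{g_{k+1}} f
\]
to view $\symm_{g^{k+1}}$ as a further aggregation of $\symm_{g^k}$ on an enlarged probability space carrying the independent Haar-random element $G_{k+1}$. Concretely, I would introduce the joint map $\Psi_k: f \mapsto (\symm_{g^k} f,\, T_{g_{k+1}} f) \in \fclass \times \fclass$, express $\symm_{g^{k+1}}$ as the affine image $L_k \circ \Psi_k$ with $L_k(h_1,h_2) = \tfrac{k}{k+1} h_1 + \tfrac{1}{k+1} h_2$, and combine Lemma~\ref{lem:pushforward:KL} with the chain rule of relative entropy on the product space. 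Integrating out the Haar-distributed $G_{k+1}$ (independent of the history $G^k$) should yield the monotone reduction of the KL divergence between successive $\invfMC{Q}_{G^k}$ and $\invfMC{P}_{G^k}$.

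The hardest part will be this intermediate inductive step. The recursion does \emph{not} directly factor $\symm_{g^{k+1}}$ through $\symm_{g^k}$, because recovering $T_{g_{k+1}} f$ from the $k$-sample average is in general impossible. Making the step rigorous therefore requires either conditioning on the extra randomness $G_{k+1}$ and invoking Lemma~\ref{lem:pushforward:KL} on the joint probability space, or using an exchangeability argument among the $k+1$ leave-one-out averages $\{\symm_{g^{k+1}\setminus\{g_i\}} f\}_{i=1}^{k+1}$ (whose barycentre is exactly $\symm_{g^{k+1}} f$) together with joint convexity of the KL divergence. Ensuring that the resulting bound holds pointwise for $\haar^{\infty}$-almost every realization of $G_s$, rather than only in expectation over the Haar draws, is the principal technical subtlety.
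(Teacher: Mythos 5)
You have correctly located the crux: the recursion $\symm_{g^{k+1}}f = \tfrac{k}{k+1}\symm_{g^k}f + \tfrac{1}{k+1}f(g_{k+1}\,\cdot\,)$ does \emph{not} exhibit $\symm_{g^{k+1}}$ as a measurable function of $\symm_{g^k}$, so Lemma~\ref{lem:pushforward:KL} cannot be applied link-by-link down the chain; this is precisely the step the paper's own one-sentence justification (which cites exactly that recursion) glosses over, and neither of your proposed repairs closes it. Conditioning on $G_{k+1}$ and running the chain rule on $\grp\times\fclass$ only bounds the \emph{expected} stage-$(k+1)$ divergence by $\KL{Q}{P}$, not by the stage-$k$ quantity, because $\symm_{g^{k+1}}f$ still cannot be recovered from $(G_{k+1},\symm_{g^k}f)$; and the leave-one-out idea fails because the pushforward of $Q$ under a barycentre of maps is not a mixture of pushforwards, so joint convexity of the KL divergence has nothing to act on. In fact the intermediate inequalities are false pointwise over realizations: take $\grp=\mathbb{Z}/2=\{e,g\}$ acting on $\calX=\{1,2\}$ by swapping, so $\fclass\cong\mathbb{R}^2$, and let $Q=\mathcal{N}((1,-1),I)$, $P=\mathcal{N}(0,I)$. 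On the event $(G_1,G_2,G_3)=(e,g,e)$, which has probability $1/8$, $\symm_{G^2}$ coincides with exact symmetrization, so $\KL{\invfMC{Q}_{G^2}}{\invfMC{P}_{G^2}}=0$, while $\symm_{G^3}f=\tfrac{1}{3}(2f+f\circ g)$ is an injective linear map, so $\KL{\invfMC{Q}_{G^3}}{\invfMC{P}_{G^3}}=\KL{Q}{P}=1$. The monotone decrease from $k$ to $k+1$ therefore cannot be rescued by a cleverer argument; what actually holds (and what the bound ordering in Theorem~\ref{thm:bound:order} uses) is only that each $\KL{\invfMC{Q}_{G^k}}{\invfMC{P}_{G^k}}$ is sandwiched between $\KL{Q}{P}$ and $\KL{\invf{Q}}{\invf{P}}$.

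Those two outer inequalities do follow from Lemma~\ref{lem:pushforward:KL} alone, and for the lower one you are working much harder than necessary while delivering less. Left-invariance of $\haar$ gives $\symm_{\grp}(f(g_j\,\cdot\,))=\symm_{\grp}f$ for every $g_j$, and $\symm_{\grp}$ is linear, so $\symm_{\grp}\circ\symm_{g^k}=\symm_{\grp}$ for every fixed $g^k$; hence $\invf{Q}=\invfMC{Q}_{g^k}\circ\symm_{\grp}^{-1}$, and Lemma~\ref{lem:pushforward:KL} applied with $\psi=\symm_{\grp}$ yields $\KL{\invfMC{Q}_{G^k}}{\invfMC{P}_{G^k}}\geq\KL{\invf{Q}}{\invf{P}}$ deterministically, for every $k$ and every realization. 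Your route via the strong law of large numbers and lower semicontinuity of KL under weak convergence is not only heavier but insufficient on its own: it controls only the $\liminf$ over $k$, which bounds each individual term only if the (unavailable) intermediate monotonicity is already in hand. The upper inequality should likewise be taken directly from Lemma~\ref{lem:pushforward:KL} applied to the measurable map $\psi=\symm_{G^k}$ for each fixed realization, exactly as you do for $k=1$.
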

As the results of Section~\ref{sec:da-optim} suggest, the interplay between SGD and symmetrization remains an open question. However, Proposition~\ref{prop:kl:chain} makes it clear that at least at evaluation time, feature averaging is preferred. Indeed, an intriguing feature of this result is that the model simplicity improvement in the PAC-Bayes bound holds even if FA was not applied during training, meaning that the final bound benefits from both the variance reduction on the empirical risk and the $\psi$-gap provided by symmetrization. 


\keyinsight{Feature averaging, even when only performed approximately, reduces the model complexity term in PAC-Bayes generalization bounds. The \textit{symmetrization gap} quantifies the degree to which feature averaging simplifies the resulting function class.}

Though its exact computation will be intractable for FA in neural networks, the symmetrization gap in the theoretical bounds corresponds to real improvements of generalization, as shown by our empirical analysis in Section~\ref{sec:empirical-pacbayes}. However, the question of causality remains open. Do invariant models generalize better \textit{because} of the reduced KL penalty? Or do they generalize better and obtain better generalization bounds for some other reason? Understanding the answer to this type of question has been the focus of the long line of literature outlined in Section~\ref{sec:background-science}. It is likely that the underlying mechanism relating the two also appears in our analysis from Section~\ref{sec:da-optim}: the symmetrization gap arises because an invariant function's output on one element of an equivalence class uniquely determines its output over the whole class. This makes the class of functions computed by a given network architecture smaller, which is reflected in the reduced KL divergence. A similar notion of simplicity when FA is used during training can be seen in the gradient structure illustrated by Figure~\ref{fig:cifar-invariances}, where an update to one element of an equivalence class automatically generalizes to all other elements in that class. 

\subsection{Ordering of PAC-Bayes bounds}
\label{sec:ordering}

Combining the previous results yields an ordering of the PAC-Bayes generalization upper bounds. Let $B_0$ be the upper bound on the right-hand side of \eqref{eq:catoni:bound}, with $B_{\text{\rm DA}}$ and $B_{\text{\rm FA}}$ corresponding to the upper bounds for DA (using the augmented empirical risk $\eRiskAug(Q,\trdata)$) and FA (using $\KL{\invf{Q}}{\invf{P}}$), respectively. Finally, let $B_{\text{\rm DA *}}$ denote the computationally intractable bound for DA given in Appendix~\ref{appx:tighter:pacbayes:da}. 

\begin{theorem} \label{thm:bound:order}
  Assume the conditions of Theorem~\ref{thm:catoni:bound}, and also that $\dgd$ is \ginv. Then $B_{\text{\rm FA}} \leq B_{\text{\rm DA *}} \leq B_{\text{\rm DA}} = B_{0}$.
\end{theorem}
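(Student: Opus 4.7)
The plan is to establish the chain of comparisons working from the rightmost equality leftward, exploiting throughout the fact that Catoni's bound in \eqref{eq:catoni:bound} is monotone increasing in both its empirical risk and its KL divergence arguments. All four quantities $B_0, B_{\text{DA}}, B_{\text{DA}*}, B_{\text{FA}}$ instantiate the same Catoni-type template; the only moving parts are which empirical risk appears (standard, augmented, or symmetrized) and which KL term appears ($\KL{Q}{P}$ versus $\KL{\invf{Q}}{\invf{P}}$).

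First I would handle $B_{\text{DA}} = B_0$. Both bounds are valid by Theorem~\ref{thm:catoni:bound} and Theorem~\ref{thm:pac:bayes:da} respectively, and they share the identical complexity term $\KL{Q}{P}$. The $\grp$-invariance of $\dgd$ gives $(GX_i,Y_i) \equdist (X_i,Y_i)$ for $G\sim\haar$, and applying the iterated expectation identity from Section~\ref{sec:background-invar} to each summand shows $\bbE_{\trdata}[\eRisk(Q,\trdata)] = \bbE_{\trdata}[\eRiskAug(Q,\trdata)]$. The two Catoni bounds are therefore equal in expectation over $\trdata$; interpreted in this natural sense (the benefit of data augmentation being variance reduction rather than bias reduction on the empirical risk estimate), this yields $B_{\text{DA}} = B_0$.

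Next, I would invoke the refinement $B_{\text{DA}*} \leq B_{\text{DA}}$ directly from Appendix~\ref{appx:tighter:pacbayes:da}. The sharper bound $B_{\text{DA}*}$ is obtained by pushing Jensen's inequality one level deeper into the Catoni functional rather than merely substituting $\eRiskAug$ for $\eRisk$; this produces a tighter (but computationally intractable, as noted in the theorem statement) quantity that is nonetheless an upper bound for $B_{\text{FA}}$, as we show in the third step.

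Finally, for the key inequality $B_{\text{FA}} \leq B_{\text{DA}*}$, I would reduce both Catoni inputs simultaneously. On the empirical side, Proposition~\ref{prop:empirical:risk:order} gives
\begin{equation*}
  \eRisk(\invf{Q},\trdata) \;=\; \eRiskAug(\invf{Q},\trdata) \;\leq\; \eRiskAug(Q,\trdata) \;,
\end{equation*}
so replacing $Q$ by $\invf{Q}$ can only shrink the risk term. On the complexity side, Theorem~\ref{lemma:KL:gen} gives $\KL{\invf{Q}}{\invf{P}} \leq \KL{Q}{P}$, with the symmetrization gap $\invf{\Delta}(Q\,\|\,P)$ of \eqref{eq:symm:gap} as the explicit margin. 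Monotonicity of Catoni's bound in both arguments then promotes these two simultaneous reductions to $B_{\text{FA}} \leq B_{\text{DA}*}$, completing the ordering.

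The main obstacle is the equality $B_{\text{DA}} = B_0$: pointwise in $\trdata$ the two bounds are genuinely different random variables (since $\eRisk$ and $\eRiskAug$ only coincide for already-invariant predictors), so one has to be explicit that the equality is at the level of expectations and cannot naively be pushed through the nonlinear Catoni functional without the $\grp$-invariance of $\dgd$ at hand. Once that subtlety is addressed, the remaining two comparisons are routine applications of monotonicity combined with the earlier propositions.
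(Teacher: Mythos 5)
Your proof assembles exactly the ingredients the paper itself invokes --- Theorem~\ref{thm:pac:bayes:da} for validity of the augmented risk, Proposition~\ref{prop:empirical:risk:order} for the risk reduction under symmetrization, and Theorem~\ref{lemma:KL:gen} for the KL reduction --- and since the paper's own proof is just the remark that the result ``follows directly from our previous results,'' your route is essentially the same. The only soft spot, which the paper shares, is the middle inequality $B_{\text{FA}} \leq B_{\text{DA}*}$: your monotonicity argument compares the FA exponent against the \emph{unstarred} Catoni template (same $\log(1/\delta)$ term) and therefore directly yields only $B_{\text{FA}} \leq B_{\text{DA}}$, whereas $B_{\text{DA}*}$ carries the additional nonpositive $\log \mathcal{L}_P$ correction of Appendix~\ref{appx:tighter:pacbayes:da}, so placing $B_{\text{FA}}$ below it would require a further argument (e.g., that the Laplace-transform gain for DA is dominated by the combined risk and KL gains for FA) that neither you nor the paper supplies.
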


The proof of Theorem~\ref{thm:bound:order} follows directly from our previous results. Importantly, this ordering on upper bounds does not imply a strict ordering on generalization error. It is nonetheless encouraging, as prior work has shown that many design choices which reduce PAC-Bayes risk bounds result in a reduced generalization gap. We investigate whether this finding holds for feature averaging in Section~\ref{sec:empirical-pacbayes}. 

\subsection{PAC-Bayes bounds for invariant architectures}
\label{sec:empirical-pacbayes}

We conclude with a demonstration of the effect of invariance on PAC-Bayes bounds for neural networks. We use the ModelNet10 dataset, which consists of LiDAR point cloud data for 10 classes of household objects. This dataset exhibits permutation invariance: the LiDAR reading is stored as a sequence of points defined by $\{x,y,z\}$ coordinates, and the order in which the points are listed is irrelevant to the class.  We consider three different architectures: a PointNet-like architecture \citep{qi2017pointnet}, which is invariant to permutations; a partitioned version of the PointNet architecture which is invariant to subgroups of the permutation group (details in Appendix~\ref{appx:invar-pb}); and a fully connected model where the invariant pooling operation in the PointNet is replaced by a fully connected layer. The invariance in the network is implemented via a max-pooling layer instead of an averaging layer and so is not a direct application of feature averaging; however, the results of \eqref{eq:kl:inequality:pushforward} still apply due to the injective mapping over functions induced by max-pooling.

\begin{table}
	\caption{Generalization performance for a permutation-invariant point cloud classification task (see text for details).}
    \label{table:decomp}
    \begin{center}
    \resizebox{0.75\textwidth}{!}{
    \begin{tabular}{ccccc}
    \toprule
         \textbf{Network} & \textbf{Train} & \textbf{Test} &  \textbf{KL} &  \textbf{PAC-Bayes} \\
          & \textbf{Error} & \textbf{Error} & \textbf{Divergence} & \textbf{Bound} \\
         \midrule
         Fully connected & 0.002 & 0.65 & 24957 & 1.75 \\
         Partial-PointNet & 0.172 & 0.248 & 1992 & 0.67\\
         PointNet &  0.24 & 0.245 & 944 & 0.533 \\
         \bottomrule
    \end{tabular}
    }
    \end{center}
\end{table}
We compute the PAC-Bayes bounds following the procedure in \citet{dziugaite2017nonvacuous}: we convert a deterministic network to a stochastic network by adding Gaussian noise to the weights, and then train this stochastic model using a differentiable surrogate loss that bounds the true PAC-Bayes bound. After this training procedure converges, we then compute the PAC-Bayes bound. We attain an ordering consistent with our theoretical results: the invariant architecture attains the lowest bound, followed by the partially invariant architecture, and finally followed by the fully connected network. We provide a decomposition of the distinct terms in the bound in Table~\ref{table:decomp}. While the fully-connected architecture obtains the lowest loss on the training set, it significantly overfits to this data and we see a large generalization gap which is mirrored in the PAC-Bayes bound. In contrast, the invariant architectures exhibit a reduced PAC-Bayes bound and improved accuracy on the test set, despite obtaining a greater loss on the training set.

\section{Conclusions} \label{sec:conclusion}

This chapter has posed a simple question: how do different approaches to incorporating invariance into a model influence its generalization performance? To answer this question, we have proposed a novel PAC-Bayes generalization bound which applies to models trained with or without data augmentation. Our theoretical analysis gives the answer that invariant model classes are simpler, where this change in simplicity can be quantified by the relative `size' of the invariance relative to the size of the data generating distribution. We have further characterized settings under which gradient descent optimization will converge to solutions which satisfy the invariances  exhibited by the data generating distribution. 

To gain deeper insight into the interplay between invariance and optimization, this chapter further presented an empirical analysis of the effects of data augmentation and feature averaging on the optimization dynamics of neural networks.
For non-convex problems, this analysis reveals that networks develop a notion of approximate invariance that holds on held-out points from the data-generating distribution, but can exhibit increasing variance on the orbits of out-of-distribution data points over the course of training, suggesting that neural networks are able to pick up approximate invariance properties that generalize well within distribution but do not correspond to architectural invariance. Our analysis in this setting has also revealed a variance reduction effect of both data augmentation and feature averaging.

The contributions of this chapter exhibit two principal limitations: we consider only model selection problems concerning the selection of exact invariances, and our generalization bounds are not sufficiently tight to be useful for model selection in most settings of interest. The former allowed us to obtain a precise characterization of the benefit of incorporating symmetries into a model class, but limited the scope of inductive bias which we could consider. The latter is a generic failing of current generalization bounds for neural networks, as outlined in Section~\ref{sec:background-science}. While realizations of these bounds were correlated with generalization in our experiments, the bounds are not intended to directly translate to a model ranking. The following chapter will address both of these issues by performing \textit{marginal likelihood estimation} rather than generalization bound computation, and will be applicable to a broad range of hyperparameter and inductive bias selection problems beyond symmetries.

\chapter{Training speed and model selection}
\label{chp:supervised}
\minitoc

\section{Introduction}
\label{sec:introduction-supervised}
\begin{figure}
    \centering
     \includegraphics[width=0.65\linewidth]{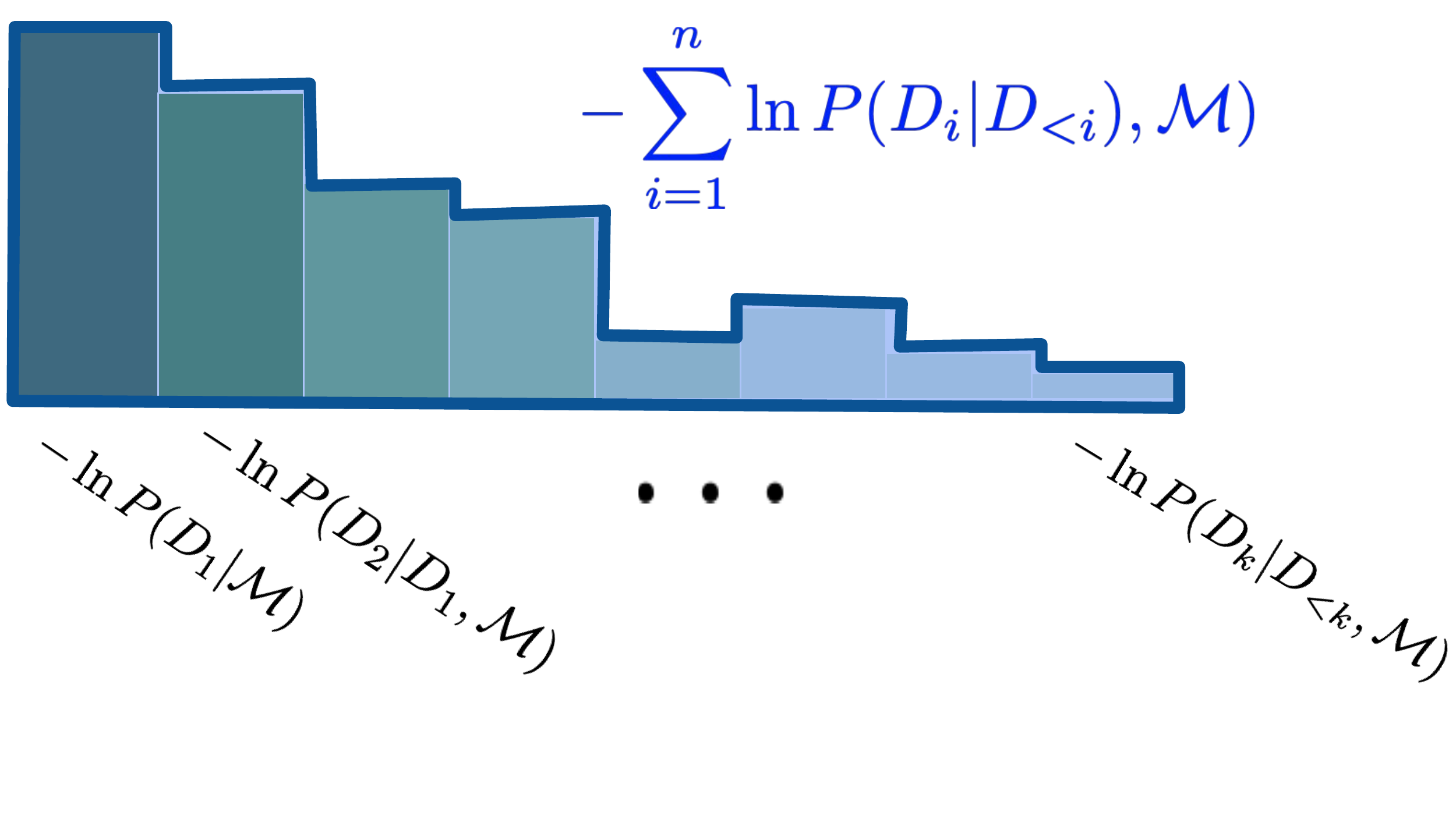}
    \caption[Visualization of the relationship between the Bayesian marginal likelihood and training speed.]{Visualization of the relationship between the Bayesian marginal likelihood and training speed. The log predictive likelihood of each data point $D_i$ in a dataset after conditioning the model $\model$ on earlier indices $D_{<i}$ is plotted on the left hand side in green. The shaded blue area on the right corresponds to the area under this curve, i.e. value of the negative log marginal likelihood. This yields the interpretation of the negative log ML as a measure of training speed.}
    \label{fig:sotl-auc}
\end{figure}
As we saw in the previous chapter, choosing the right inductive bias for a machine learning model, in particular incorporating {symmetries} exhibited by the data, is critical for good generalization. The problem of \emph{Bayesian model selection} concerns itself with identifying good inductive biases for a given dataset, including but not limited to symmetries. Model selection arises in both Bayesian inference, which seeks to identify a probabilistic model that best captures some distribution $\dgd$, and in deep learning in the form of neural architecture search. This chapter will show that computing a measure of \textit{training speed}, the area under a model's learning curve, provides a principled approach to Bayesian model selection. Further, we will show that an analogue of this measure adapted to deep learning, the area under a network's training curve, produces competitive performance estimators for neural architecture search.

Critical to obtaining these results is our attention to the Bayesian learning paradigm, which is outlined in Section~\ref{sec:background-bayes}. In Bayesian inference, the marginal likelihood (ML) provides a principled tool for model selection. In contrast to cross-validation, for which computing gradients is cumbersome, the ML can be conveniently maximised using gradients when its computation is tractable.
Unfortunately, computing the marginal likelihood for complex models such as neural networks is typically \textit{in}tractable. Workarounds such as variational inference suffer from expensive optimization of many parameters in the variational distribution and differ significantly from standard training methods for deep neural networks, which optimize a single parameter sample from initialization. A method for estimating the ML that closely follows standard optimization schemes would pave the way for new practical model selection procedures, yet remains an open problem.

The Bayesian model selection perspective is complementary to the study of PAC-Bayes generalization bounds as a tool with which to rank models \citep{dziugaite2020search}, though the two are deeply connected \citep{germain_pac-bayesian_2016}. In parallel, theoretical analysis has hinted at a connection between training speed and generalization error \citep{hardt2015train}, though the resulting generalization bounds from this analysis are only valid for extremely short training budgets. These results, while providing insight into the stability of stochastic gradient descent, do not present an \textit{explanation} (in the sense of that described in Section~\ref{sec:background-science}) of the empirical observation that models which train faster tend to generalize better. This is because the stability-based generalization bounds of \citet{hardt2015train} implicitly propose a notion of hypothesis class complexity that depends on the number of steps taken in the optimization trajectory, a measure which becomes vacuous after the first training epoch. However, \citet{jiang2020fantastic} show strong empirical performance of some measures of training speed at predicting generalization performance in DNNs, suggesting that this relationship may provide clues towards understanding generalization in deep learning.

The principal contribution of this chapter is to show that Bayesian model selection, training speed, and generalization are in fact deeply connected. To do so, we take a prequential coding perspective on the marginal likelihood, framing the log ML as the sum of predictive log likelihoods of data points, conditioned on preceding data in the dataset. This perspective reveals a family of estimators of the log ML which depend only on predictions sampled from the posterior of an iterative Bayesian updating procedure. 
We study the proposed estimators in the context of linear models, where we can conclusively analyze their theoretical properties.
Leveraging the fact that gradient descent can produce exact posterior samples for linear models \citep{matthews2017} and the infinite-width limit of deep neural networks \citep{matthews2018gaussian,lee2018deep}, we show that this estimator can be viewed as the sum of a subset of the model's training losses in an iterative optimization procedure. 
This immediately yields an interpretation of marginal likelihood estimation as measuring a notion of training speed in linear models, visualized in Figure~\ref{fig:sotl-auc}. 

We demonstrate the utility of this estimator through empirical evaluations on a range of model selection problems, confirming that it can effectively approximate the marginal likelihood of a Bayesian model in a manner useful for model selection. We go on to investigate whether our theoretical results for linear models may have explanatory power over generalization in deep learning. We construct and justify an analogue of our training speed estimator which can be applied to neural networks trained with stochastic gradient descent, and show that this estimator is predictive of final test accuracy in neural architecture search problems. These findings suggest that studying generalization between points on a training set can be informative of generalization to unseen data, an observation that will inspire our study of interference in reinforcement learning in Chapter~\ref{chp:gen-rl}.

\section{Background on Bayesian inference}
\label{sec:background-bayes}
Philosophically, Bayesian model selection addresses the question \textit{which out of this set of models is most likely to have generated my data?} This superficially resembles empirical risk minimization, which seeks to answer the question \textit{which set of {(hyper-)}parameters minimizes the loss function on my training data?} However, whereas empirical risk minimization is a principle for selecting a single predictor with the goal of minimizing its risk on the data-generating distribution, Bayesian model selection concerns itself with fitting probability distributions over parameters and corresponding functions, without explicit concern for the model's future predictions. 

\subsection{Bayesian modelling}

A Bayesian model $\model$ is defined by a prior distribution over parameters $\theta$, $P(\theta | \model)$, and a prediction map from parameters $\theta$ to a likelihood over the data $\data$, $P(\data|\theta, \model)$.
Parameter fitting in the Bayesian framework entails finding the posterior distribution $P(\theta|\data)$, which yields robust and principled uncertainty estimates. 
Though exact inference is possible for certain models such as Gaussian processes (GPs) \citep{rasmussen2003gaussian}, it is intractable for DNNs \citep{neal2012bayesian}. Here approximate methods such as variational inference \citep{blei2017variational, gal2016dropout, blundell2015weight,mackay1992bayesian, graves2011practical, duvenaud2016early} and Laplace approximations \citep{daxberger2021laplace, mackay1998choice} are used to improve robustness and obtain useful uncertainty estimates. 

Variational approximations require optimization over the parameters of the approximate posterior distribution. 
This optimization over distributions changes the loss landscape, and is significantly slower than the pointwise optimization used in standard DNNs.
Pointwise optimization methods inspired by Bayesian posterior sampling can produce similar uncertainty estimates as variational inference, while improving computational efficiency \citep{welling2011bayesian,mandt2017stochastic,maddox2019simple}.
An appealing example of this is ensembling \citep{lakshminarayanan2017simple}, which works by training a collection models in the usual pointwise manner, starting from $k$ independently initialized parameter values.

In the case of linear models, marginalization over an ensemble of models trained with gradient descent is exactly equivalent to Bayesian inference, as this sample-then-optimize approach yields exact posterior samples \citep{matthews2017, osband2018randomized}. \citet{he2020bayesian} extend this approach to obtain posterior samples from DNNs in the infinite-width limit.

\subsection{Model selection}
\label{sec:bayesian-model-selection}
In addition to finding model parameters, Bayesian inference can also perform \textit{model selection} over different inductive biases, which are specified through both model structure (e.g. convolutional vs fully connected Bayesian neural network architectures) and the prior distribution on parameters. The Bayesian approach relies on finding the posterior over models conditioned on the data $P(\model|\data)$, which uses the \emph{marginal likelihood} (ML) as its likelihood function:
\begin{equation}\label{eq:marg-lik}
P(\data | \model) = \int_\theta P(\data|\theta)P(\theta|\model_i)d\theta = \mathbb{E}_{P(\theta|\model)} [P(\data | \theta)] \,.
\end{equation}
Instead of computing the full posterior, it is common to select the model with the highest marginal likelihood. This is known as type-II maximum likelihood \citep{mackay1992bayesian,mackay2003information} and is less prone to overfitting than performing maximum likelihood over the parameters and model combined. This is because the marginal likelihood is able to trade off between model fit and model complexity \citep{rasmussen2001occam}, while addressing a problem with fewer degrees of freedom than that of parameter fitting.
Maximizing the ML is standard procedure when it is easy to compute. For example, in Gaussian processes it used to set simple model parameters like smoothness \citep{rasmussen2003gaussian}, while recent work has demonstrated that complex inductive biases in the form of invariances can also be learned \citep{van2018learning}.

For many deep models, computing Equation~\ref{eq:marg-lik} is intractable, and obtaining approximations that are accurate enough for model selection and that scale to complex models is an active area of research \citep{khan2019approximate}. In general, variational lower bounds that scale are too loose when applied to DNNs \citep{blundell2015weight} to provide useful estimates for model selection. Deep Gaussian processes provide a case where these bounds do work \citep{damianou13a,dutordoir20a}, but heavy computational load holds performance several years behind that of deep learning. While ensembling methods provide useful uncertainty estimates and improve the computational efficiency of the variational approach, they have not yet provided a solution for Bayesian model selection.

Further work on generalization in deep learning has examined training dynamics as a tool for understanding how deep neural networks generalize. Many of these works point obliquely at a connection with the marginal likelihood. For example, \citet{arora2019fine} obtain generalization bounds for shallow ReLU neural networks that bear striking resemblance to the marginal likelihood of a Gaussian process with kernel given by the neural tangent kernel \citep{jacot2018neural}. 
The marginal likelihood also appears in other analyses of stochastic gradient descent. \citet{duvenaud2016early} use a single run of stochastic gradient descent to approximate the marginal likelihood of a Bayesian model, and then use tools for model selection to define a criterion for early stopping. Similarly, \citet{smith2018} use the marginal likelihood to identify the optimal batch size for a learning problem, using the width of the minimum found by stochastic gradient descent as a Laplace approximation for the marginal likelihood of the model.

\section{Marginal likelihood estimation with training statistics}\label{sec:ml-estimation}
In this section, we investigate the equivalence between the marginal likelihood (ML) and a notion of training speed in models trained with an exact Bayesian updating procedure. For linear models and infinitely wide neural networks, exact Bayesian updating can be performed using gradient descent optimization. For these cases, we derive an estimator of the marginal likelihood which
\begin{enumerate}
\item is related to how quickly a model learns from data,
\item only depends on statistics that can be measured during pointwise gradient-based parameter estimation, and
\item becomes tighter for ensembles consisting of multiple parameter samples.
\end{enumerate} We also investigate how gradient-based optimization of a linear model combination can implicitly perform approximate Bayesian model selection in Appendix~\ref{sec:optimize-then-prune}.

\subsection{Training speed and the marginal likelihood} \label{sec:decomposing_ML}
\label{sec:speed-and-ml}
We begin by developing intuition on the marginal likelihood. Let $\data$ denote a dataset of the form $\data = (\data_i)_{i=1}^n = (x_i, y_i)_{i=1}^n$, and let $\data_{<i}=(\data_j)_{j=1}^{i-1}$ with $\data_{<1}=\emptyset$.
We will abbreviate $P(\trdata|\model) \equiv P(\trdata)$ when considering a single model $\model$. We observe that $P(\trdata) = \prod_{i=1}^n P(\trdata_i|\trdata_{<i})$ to get the following form of the \textit{log} marginal likelihood:

\begin{equation}
    \log P(\data) = \log \prod_{i=1}^n P(\data_i|\data_{<i}) = \sum_{i=1}^n \log P(\data_i | \data_{<i}) = \sum_{i=1}^n \log [\mathbb{E}_{P(\theta|\data_{<i})} P(\data_i|\theta) ].
\end{equation}

If we define training speed as the number of data points required by a model to form an accurate posterior, then models which train faster -- i.e. whose posteriors assign high likelihood to the data after conditioning on only a few data points -- will obtain a higher marginal likelihood. Interpreting the negative log posterior predictive probability $\log P(\data_i|\data_{<i})$ of each data point as a loss function, the log ML then takes the form of the sum over the losses incurred by each data point during training, i.e. the area under a training curve defined by a Bayesian updating procedure. Crucially, this notion of area under a curve differs from the number of steps required to attain a pre-specified average conditional likelihood or loss. Under the area-under-curve definition of training speed, if model $\model_1$ trains faster than $\model_2$, then for some accuracy $\epsilon$ the number of steps required for $\model_1$ to attain average conditional log likelihood $\epsilon$ will be lower than for $\model_2$. There is no fixed value of $\epsilon$ under which this statement will hold for all possible model pairs; instead, the area under a training curve can be thought of as averaging out many values of $\epsilon$ to arrive at a ranking.

\keyinsight{The marginal likelihood measures the degree to which a posterior update on each data point increases the likelihood of not-yet-seen data points. It can be computed by taking the area under the loss curve of a particular updating procedure, yielding a notion of \textit{training speed}.}
\subsection{Unbiased estimation of a lower bound} \label{sec:LB}
\label{sec:unbiased-estimation}

In practice, computing $\log P(\data_i|\data_{<i})$ may be intractable, necessitating approximate methods to estimate the model evidence. In our analysis, we are interested in estimators of $\log P(\data)$ computed by drawing $k$ samples of $\theta \sim P(\theta|\data_{<i})$ for each $i=1, \dots, n$. We can directly estimate a lower bound $\mathcal{L}(\data) = \sum_{i=1}^n\mathbb{E}[\log P(\data_i|\data_{<i})]$ using the log likelihoods of these samples, yielding the estimator
\begin{equation}
    \hat{\mathcal{L}}(\data) = \sum_{i=1}^n \frac{1}{k}\sum_{j=1}^k\log P(\data_{i}|\theta^i_j).
\end{equation}
This will produce a biased estimate of the log marginal likelihood due to Jensen's inequality. We can get a tighter lower bound by first estimating $\mathbb{E}[\log P(\data_i|\theta)]$ using our posterior samples before applying the logarithm, obtaining

\begin{equation}\label{eq:estimators}
    \hat{\mathcal{L}}_k(\data) = \sum_{i=1}^n \log \frac{1}{k}\sum_{j=1}^k P(\data_{i}|\theta^i_j).
\end{equation} 
The two estimators converge when $\mathcal{L}$ is estimated with a single sample and $\hat{\mathcal{L}}_k$ is evaluated at $k=1$, however we consider them separately in the following theorem due to their disagreement on larger sample sizes.
\begin{restatable}{proposition}{PropLk}\label{prop:lk}
Both $\hat{\mathcal{L}}$ and $\hat{\mathcal{L}}_k$ as defined in Equation~\ref{eq:estimators} are estimators of lower bounds on the log marginal likelihood; that is
\begin{equation}
   \mathbb{E}[\hat{\mathcal{L}}(\data)] = \mathcal{L}(\data) \leq  \log P(\data) \quad \text{ and } \quad 
    \mathbb{E}[\hat{\mathcal{L}}_k(\data)] = \mathcal{L}_k(\data) \leq \log P(\data) \; .
\end{equation}
Further, the bias term in $\mathcal{L}$ can be quantified as follows.
\begin{equation}\mathcal{L}(\data) = \log P(\data) - \sum_{i=1}^n \KL{(P( \theta | \data_{<i})}{ P(\theta|\data_{< {i+1}}))} \label{eq:lb-decomp}
\end{equation}
\end{restatable}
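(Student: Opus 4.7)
The plan is to use the prequential decomposition $\log P(\data) = \sum_{i=1}^n \log P(\data_i|\data_{<i})$ from Section~\ref{sec:decomposing_ML}, combined with Jensen's inequality and Bayes' rule, to verify the three claims. The two unbiasedness statements and the $\mathcal{L}_k \leq \log P(\data)$ bound are essentially routine; the heart of the proof is the KL identity, from which $\mathcal{L}(\data) \leq \log P(\data)$ drops out as a corollary.

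First I would dispense with the easy parts. By linearity of expectation, since each $\theta^i_j$ is drawn independently from $P(\theta|\data_{<i})$, the expectation of $\hat{\mathcal{L}}(\data)$ is exactly $\sum_i \mathbb{E}_{P(\theta|\data_{<i})}[\log P(\data_i|\theta)] = \mathcal{L}(\data)$. For $\hat{\mathcal{L}}_k$, I would apply Jensen's inequality to the concave logarithm term-by-term,
\begin{equation*}
\mathbb{E}\left[\log \tfrac{1}{k}\sum_{j=1}^k P(\data_i|\theta^i_j)\right] \leq \log \mathbb{E}\left[\tfrac{1}{k}\sum_{j=1}^k P(\data_i|\theta^i_j)\right] = \log P(\data_i|\data_{<i}) \; ,
\end{equation*}
and sum over $i$ to conclude $\mathbb{E}[\hat{\mathcal{L}}_k(\data)] = \mathcal{L}_k(\data) \leq \log P(\data)$.

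The main content is the KL identity. I would start from Bayes' rule, $P(\theta|\data_{<i+1}) = P(\data_i|\theta) P(\theta|\data_{<i}) / P(\data_i|\data_{<i})$. Taking logs and rearranging yields the pointwise identity
\begin{equation*}
\log P(\data_i|\theta) = \log P(\data_i|\data_{<i}) + \log \frac{P(\theta|\data_{<i+1})}{P(\theta|\data_{<i})} \; .
\end{equation*}
Taking expectation under $\theta \sim P(\theta|\data_{<i})$ converts the second term into $-\KL{P(\theta|\data_{<i})}{P(\theta|\data_{<i+1})}$, so that
\begin{equation*}
\mathbb{E}_{P(\theta|\data_{<i})}[\log P(\data_i|\theta)] = \log P(\data_i|\data_{<i}) - \KL{P(\theta|\data_{<i})}{P(\theta|\data_{<i+1})} \; .
\end{equation*}
Summing over $i$ and invoking the prequential decomposition of $\log P(\data)$ gives Equation~\ref{eq:lb-decomp}, and nonnegativity of the KL divergence then delivers $\mathcal{L}(\data) \leq \log P(\data)$ without extra work.

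The main obstacle is really only bookkeeping: keeping the direction of the KL divergence and the indexing of $\data_{<i}$ versus $\data_{<i+1}$ consistent, and being careful that the samples $\theta^i_j$ used in $\hat{\mathcal{L}}$ and $\hat{\mathcal{L}}_k$ come from the correct conditional $P(\theta|\data_{<i})$ so that linearity and Jensen apply cleanly. There are no analytic difficulties beyond a single application of Bayes' rule and Jensen's inequality.
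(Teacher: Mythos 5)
Your proposal is correct and follows essentially the same route as the paper's own proof: Bayes' rule rewrites $\log P(\data_i|\theta)$ as $\log P(\data_i|\data_{<i})$ plus a log posterior ratio whose expectation under $P(\theta|\data_{<i})$ is the negative KL term, and Jensen's inequality handles $\hat{\mathcal{L}}_k$. The only difference is cosmetic — the paper multiplies and divides inside the logarithm rather than rearranging Bayes' rule first — so there is nothing to add.
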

The proof of this result follows from a straightforward application of Jensen's inequality and can be found in Appendix~\ref{sec:proofs-supervised}.
We observe that both lower bound estimators exhibit decreased variance when using multiple posterior samples; however, $\hat{\mathcal{L}}_k$ also exhibits decreasing bias (with respect to the log ML) as $k$ increases; each $k$ defines a distinct lower bound $\mathcal{L}_k = \mathbb{E}[\hat{\mathcal{L}}_k ]$ on $\log P(\data)$. The gap induced by the lower bound $\mathcal{L}(\data)$ is characterized by the
information gain each data point provides to the model about the posterior, as given by the KL divergence between the posterior at iteration $i$ and the posterior at iteration $i+1$. Thus, while $\mathcal{L}$ has a Bayesian interpretation it is arguably more closely aligned with the minimum description length notion of model complexity \citep{hinton1993keeping}. Increasing $k$ in $\mathcal{L}_k$ thus allows us to interpolate between the minimum description length and marginal likelihood maximization principles.

When the posterior predictive distribution of our model is Gaussian, we consider a third approach which, unlike the previous two methods, also applies to noiseless models\footnote{Note that the log probability $\log P(\data_i | \theta^i_j)$ will not necessarily be well-defined for noiseless models and so the previous estimators cannot be naively applied.} given in Equation~\ref{eq:l-s}. We shift focus slightly to the regression setting, where each data point is in the form of an input-label pair $(X, y)$, such that $\trdata =(X_i, y_i)_{i=1}^n$, and $(\theta^i_j)_{j=1}^k$ be $k$ parameter samples from $P(\theta|\data_{<i})$. We assume some uniform distribution over inputs $X$ and a structured conditional distribution consisting of a mapping $f: \Theta \times X \rightarrow Y$ which given a set of parameters and an input induces a Gaussian likelihood $P(\cdot | \theta, X) = \mathcal{N}(f(\theta, X), \sigma_N^2)$ for some variance $\sigma_N^2$ $P(\cdot|\data_{<i}, X_i)$. The likelihood of a given data point $\data_i = (x_i,y_i)$ under the model thus takes the form 
\begin{equation*}
    P_{\model}(\data_i | \data_{<i}) = \int_{\theta} \mathcal{N}(y_i|f(\theta, X_i), \sigma_N^2) P(\theta|\data_{<i})d\theta
\end{equation*}
We can then obtain the following proposition, which characterizes an estimator of a lower bound on $\log \mathcal{P}(\data)$.

\begin{restatable}{proposition}{PropLS}\label{prop:ls}
Let $P(Y_i|\data_{<i}, X_i) = \mathcal{N}(\mu_i, \sigma^2_i)$ for some $\mu_i, \sigma_i^2$. Define the standard mean and variance estimators $\hat{\mu}_i = \frac{1}{N} \sum_{j=1}^N f(\theta^i_j, x_i)$ and $\hat{\sigma}^2_i = \frac{1}{N-1} \sum (f(\theta_{j}^i, x_i) - \hat{\mu})^2$. Then the  estimator
\begin{equation}\label{eq:l-s}
 \hat{\mathcal{L}}_S(\data) = \sum_{i=1}^n \log P(Y_i|\hat{\mu}_i, \hat{\sigma}^2_i) 
\end{equation}
is a lower bound on the log ML: i.e. $\mathbb{E}[\hat{\mathcal{L}}_S(\data)] \leq \log P(\data) $. 
\end{restatable}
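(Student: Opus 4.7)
The plan is to leverage the prequential chain rule of Section~\ref{sec:decomposing_ML} to reduce the bound to a per-datum statement, then verify that statement using classical distributional facts about Gaussian sample moments. Writing $\log P(\data) = \sum_{i=1}^n \log P(Y_i \mid \data_{<i}, X_i)$ and invoking linearity of expectation reduces the claim to showing, for each $i$,
\[
\mathbb{E}\bigl[\log \mathcal{N}(Y_i \mid \hat{\mu}_i, \hat{\sigma}_i^2)\bigr] \leq \log \mathcal{N}(Y_i \mid \mu_i, \sigma_i^2).
\]

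For the noiseless regime in which the estimator is primarily motivated, the stated Gaussian posterior predictive assumption forces $f(\theta, X_i)$ under $P(\theta \mid \data_{<i})$ to itself be distributed as $\mathcal{N}(\mu_i, \sigma_i^2)$, so the samples $\{f(\theta_j^i, X_i)\}_{j=1}^N$ are iid Gaussian observations. Classical sampling theory then supplies all the moments needed: $\hat{\mu}_i$ is independent of $\hat{\sigma}_i^2$, with $\hat{\mu}_i \sim \mathcal{N}(\mu_i, \sigma_i^2/N)$ and $(N-1)\hat{\sigma}_i^2/\sigma_i^2 \sim \chi^2_{N-1}$, yielding $\mathbb{E}[\log \hat{\sigma}_i^2] = \log \sigma_i^2 + \psi((N-1)/2) - \log((N-1)/2)$ and, for $N \geq 4$, $\mathbb{E}[1/\hat{\sigma}_i^2] = (N-1)/(\sigma_i^2(N-3))$. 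For $N \leq 3$ the latter moment is infinite, forcing the left-hand side to $-\infty$, so the bound is trivially satisfied.

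Expanding the Gaussian log density, exploiting the independence to factorise $\mathbb{E}[(Y_i - \hat{\mu}_i)^2/\hat{\sigma}_i^2]$, and collecting terms, I would regroup the per-datum gap into the scalar inequality
\[
\psi\!\left(\tfrac{N-1}{2}\right) - \log\!\left(\tfrac{N-1}{2}\right) + \frac{N-1}{N(N-3)} + \frac{2}{N-3}\cdot \frac{(Y_i - \mu_i)^2}{\sigma_i^2} \geq 0,
\]
of which the last summand is manifestly non-negative. The main obstacle is that the Jensen deficit $\psi((N-1)/2) - \log((N-1)/2) < 0$ and the excess $(N-1)/(N(N-3)) > 0$ are both of order $\Theta(1/N)$, so their leading contributions almost cancel and a crude digamma estimate does not suffice. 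I would close the argument using the sharp two-term lower bound $\psi(x) \geq \log(x) - 1/(2x) - 1/(12x^2)$, after which the remaining statement reduces to the polynomial inequality $2N^2 + 3N - 3 \geq 0$, true for all $N \geq 4$. It is worth emphasising that the naive one-line estimate $\mathbb{E}[\log \mathcal{N}(Y_i \mid \hat{\mu}_i, \hat{\sigma}_i^2)] \leq \log \mathbb{E}[\mathcal{N}(Y_i \mid \hat{\mu}_i, \hat{\sigma}_i^2)]$ does \emph{not} close the argument: the inner expectation can exceed $\mathcal{N}(Y_i \mid \mu_i, \sigma_i^2)$ whenever $Y_i$ lies in the tails, so the proof is genuinely a cancellation between the Jensen deficit in $\log \hat{\sigma}_i^2$ and the bias inflation in $\mathbb{E}[1/\hat{\sigma}_i^2]$ together with the sample-mean variance $\sigma_i^2/N$.
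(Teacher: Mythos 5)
Your proposal is correct, but it takes a genuinely different route from the paper's. The paper reduces to the same per-datum inequality $\mathbb{E}[\log \mathcal{N}(Y_i \mid \hat{\mu}_i, \hat{\sigma}_i^2)] \leq \log \mathcal{N}(Y_i \mid \mu_i, \sigma_i^2)$ and then applies Jensen's inequality twice --- first in $\hat{\sigma}_i^2$ with $\hat{\mu}_i$ held fixed, then in $\hat{\mu}_i$ --- relying on the independence of the sample mean and variance together with the claim that the Gaussian log-density is separately concave in each parameter. You instead evaluate the expectation exactly via the sampling distributions $\hat{\mu}_i \sim \mathcal{N}(\mu_i, \sigma_i^2/N)$ and $(N-1)\hat{\sigma}_i^2/\sigma_i^2 \sim \chi^2_{N-1}$, and close the resulting scalar inequality with the sharp bound $\psi(x) \geq \log x - \tfrac{1}{2x} - \tfrac{1}{12x^2}$; I verified your reduction to $2N^2 + 3N - 3 \geq 0$ and your handling of the $N \leq 3$ edge cases. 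What your longer route buys is rigor at exactly the point where the paper's argument is shakiest: the map $v \mapsto -\tfrac{(y-\hat{\mu})^2}{2v} - \tfrac{1}{2}\log(2\pi v)$ has second derivative $\tfrac{1}{2v^2} - \tfrac{(y-\hat{\mu})^2}{v^3}$ and so is \emph{convex} in $v$ wherever $(y-\hat{\mu})^2 < v/2$, meaning the paper's inner Jensen step is not pointwise valid (at $y = \hat{\mu}$ it reverses strictly). Your closing observation that the bound is really a cancellation between the Jensen deficit of $\mathbb{E}[\log \hat{\sigma}_i^2]$, the inflation of $\mathbb{E}[1/\hat{\sigma}_i^2]$, and the extra variance of $\hat{\mu}_i$ is precisely the content that the paper's double-Jensen shortcut glosses over. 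The one caveat, which you share with (and state more explicitly than) the paper, is that both arguments treat the samples $f(\theta_j^i, x_i)$ as i.i.d.\ draws from the posterior predictive $\mathcal{N}(\mu_i, \sigma_i^2)$ itself, which is literally true only in the noiseless regime.
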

The proof of this result leverages the independence of the sample mean and variance in order to iteratively apply Jensen's inequality to the estimator, and can be found in Appendix~\ref{sec:proofs-supervised}.
We provide an empirical evaluation of the rankings provided by the different estimators in Section~\ref{sec:BMS}. We find empirically that $\hat{\mathcal{L}}_S$ exhibits the least bias in the presence of limited samples from the posterior, though we emphasize its limitation to Gaussian posteriors; for more general posterior distributions, $\hat{\mathcal{L}}_k$ minimizes bias for large $k$ while still estimating a lower bound. 

\subsubsection{Lower bounds via gradient descent trajectories}
\label{sec:lower-bounds-gd}
The bounds on the marginal likelihood we introduced in Section~\ref{sec:unbiased-estimation} required samples from the sequence of posteriors $P(\theta|\data_{<i})$ as data points were incrementally added. However, such bounds were agnostic to the procedure by which the posterior was sampled. We now draw a connection between Bayesian model selection and gradient descent on linear model classes via a result of \citet{matthews2017}:
ensembles of linear models trained with gradient descent yield samples from the model posterior. In particular, we show that we can use these samples to estimate the log ML using the estimators introduced in Section~\ref{sec:unbiased-estimation}.

We will consider the Bayesian linear regression problem of modelling data $\data = (x_i, y_i)_{i=1}^n$ assumed to be generated by the process $Y = \theta^\top \Phi(X) + \epsilon \sim \mathcal{N}(0, \sigma_N^2 I)$ for some unknown $\theta$, known $\sigma_N^2$, and feature map $\Phi$. Typically, a Gaussian prior is placed on $\theta$; this prior is then updated as data points are seen to obtain a posterior over parameters. In the overparmeterized, noiseless linear regression setting, \citet{matthews2017} show that the distribution over parameters $\theta$ obtained by sampling from the prior on $\theta_0$ and running gradient descent to convergence on the data $\data_{<i}$ is equivalent to sampling from the posterior conditioned on $\data_{<i}$. \citet{osband2018randomized} extend this result to posteriors which include observation noise $\sigma^2_N \neq 0$ under the assumption that the targets $y_i$ are themselves noiseless observations. 

\begin{algorithm} 
\SetAlgoLined
\KwIn{A dataset $\data =(x_i, y_i)_{i=1}^n $, parameters $\mu_0, \sigma_0^2, \sigma_N^2$}
\KwResult{An estimate of $\mathcal{L}(\data)$}
$\theta_t \gets \theta_0 \sim \mathcal{N}(\mu_0, \sigma_0^2)$; \quad 
$\tilde{Y} \gets Y + \epsilon \sim \mathcal{N}(0, \sigma_N^2)$;  \quad sumLoss $\gets$ 0 \; $\ell(\data_{\le i}, w) \gets \|\tilde{Y}_{\le i} - \theta^\top X_{\le i} \|_2^2 + \frac{\sigma_N^2}{\theta_0^2}\|\theta - \theta_0\|_2^2 $\;

 \For{$\data_i \in \data$}{
  sumLoss $ = $ sumLoss $ + \; \frac{(\theta_t^\top x_i - y_i)^2}{2\sigma_N^2}$ \;
  $\theta_t \gets$ GradientDescent($ \ell, \theta_t, \data_{\le i}$) \;
 }
 \KwRet sumLoss
 \caption{Marginal likelihood estimation for linear models}
 \label{alg:estimate}
\end{algorithm}

We can use this procedure to obtain posterior samples for our estimators by iteratively running sample-then-optimize on the sets $\data_{<i}$. Algorithm \ref{alg:estimate} outlines our approach, which executes gradient descent optimization on iterative subsets of the data to obtain the necessary posterior samples for our estimator. We note that the GradientDescent subroutine in Algorithm~\ref{alg:estimate} will in idealized settings output the cluster point of the gradient descent algorithm under decreasing step sizes to give an exact posterior sample; in practice a finite step size may be used and this output will only be an approximate sample from the posterior. Theorem \ref{thm:sto} shows that this procedure yields an unbiased estimate of $\mathcal{L}(\data)$ when posterior samples are used to estimate $\mathbb{E} [\log P(\data_i|\theta)]$, and an unbiased estimate of $\mathcal{L}_k(\data)$ when an ensemble of $k$ models are trained in parallel to estimate $\mathbb{E}[P(\data_i | \theta)]$.

\begin{restatable}{theorem}{ThmSTO} \label{thm:sto}
Let $\data = (X_i, Y_i)_{i=1}^n$ and let $(\theta_j^i)_{i,j=1}^{n,J}$ be generated by the procedure outlined in Algorithm~\ref{alg:estimate}. Then the estimators $\hat{\mathcal{L}}, \hat{\mathcal{L}}_S,$ and $ \hat{\mathcal{L}}_k$, applied to the collection $(\theta_j^i)$, are lower bounds on $\log P(\data)$. Further, expressing $-\log P(\data_i|\theta)$ as the $\ell_2$ regression loss plus a constant, we then obtain 
\begin{equation}
    \log P(\data) \geq \sum_{i=1}^n \mathbb{E}_{\theta_i \sim P(\cdot | \data_{<i})}[\log P(\data_i|\theta_i)] = \mathbb{E}\sum_{i=1}^n -\ell_2 (\data_i, \theta_i) + c = \mathcal{L}(\data)
\end{equation}
\end{restatable}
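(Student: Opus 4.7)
The plan is to reduce Theorem~\ref{thm:sto} to the lower-bound estimators already established in Propositions~\ref{prop:lk} and~\ref{prop:ls} by appealing to the sample-then-optimize characterization of Bayesian linear regression posteriors due to \citet{matthews2017} and \citet{osband2018randomized}. In outline: (i) show that the parameters $(\theta_j^i)$ produced by the inner GradientDescent calls of Algorithm~\ref{alg:estimate} are bona fide posterior samples $\theta_j^i \sim P(\theta\mid\data_{<i})$; (ii) invoke the earlier propositions to conclude that the resulting $\hat{\mathcal{L}}$, $\hat{\mathcal{L}}_k$, and $\hat{\mathcal{L}}_S$ are lower bounds on $\log P(\data)$; and (iii) rewrite the Gaussian log-likelihood as a squared-error loss to expose the training-curve form in the closing display.

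For step (i), I would argue that at iteration $i$ Algorithm~\ref{alg:estimate} performs exactly the noise-perturbed sample-then-optimize procedure: the initializer $\theta_0 \sim \mathcal{N}(\mu_0,\sigma_0^2)$ encodes the prior, the perturbation $\tilde Y = Y + \epsilon$ with $\epsilon\sim\mathcal{N}(0,\sigma_N^2)$ injects observation noise consistent with the Gaussian likelihood, and the penalty $(\sigma_N^2/\sigma_0^2)\|\theta - \theta_0\|_2^2$ anchors the cluster point of gradient descent in the direction determined by the prior. By the standard linear-Gaussian calculation carried out in \citet{matthews2017} for the noiseless regime and in \citet{osband2018randomized} for the noisy regime, the (converged) minimizer of this randomized regularized objective is distributed exactly as the posterior $P(\theta\mid\data_{<i})$. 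Applying this fact to each prefix $\data_{<i}$ and each ensemble index $j$ yields the claim that $\theta_j^i \sim P(\theta\mid\data_{<i})$ independently across $j$.

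For step (ii), once the $\theta_j^i$ have the right distribution Propositions~\ref{prop:lk} and~\ref{prop:ls} apply verbatim, so $\mathbb{E}[\hat{\mathcal{L}}]=\mathcal{L}(\data)\le \log P(\data)$, $\mathbb{E}[\hat{\mathcal{L}}_k]=\mathcal{L}_k(\data)\le \log P(\data)$, and $\mathbb{E}[\hat{\mathcal{L}}_S]\le \log P(\data)$. For step (iii), the Gaussian likelihood gives $-\log P(\data_i\mid \theta) = \tfrac{1}{2\sigma_N^2}\bigl(y_i - \theta^\top \Phi(x_i)\bigr)^2 + \tfrac{1}{2}\log(2\pi\sigma_N^2) = \ell_2(\data_i,\theta) + c$ with $c$ independent of $\theta$ and $i$. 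Summing over $i$, taking expectation over $\theta_i\sim P(\cdot\mid\data_{<i})$, and inserting into the definition of $\mathcal{L}(\data)$ from Proposition~\ref{prop:lk} produces the final chain of equalities in the theorem, exhibiting $\mathcal{L}(\data)$ (up to the additive constant $nc$) as the expected cumulative training loss along the sample-then-optimize trajectory.

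The principal obstacle is the first step: justifying that Algorithm~\ref{alg:estimate} really samples from the exact posterior rather than an approximation. This requires care about (a) overparameterization and the use of the minimum-norm gradient descent cluster point rather than an arbitrary stationary point, (b) the precise matching of the regularization scale and the target noise variance to the prior and likelihood, and (c) the idealization that GradientDescent is run to convergence with vanishing step size so that the cluster point is reached (any finite truncation would yield only approximate samples and hence only approximate bounds). A secondary subtlety concerns $\hat{\mathcal{L}}_S$ in the noiseless limit $\sigma_N^2\!\to\!0$, where individual conditional log-likelihoods can diverge; since Proposition~\ref{prop:ls} only requires the sample mean and variance of the predictive distribution to be well defined, the estimator remains well-posed and the argument carries through, but this edge case is worth remarking on explicitly.
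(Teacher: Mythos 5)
Your proposal is correct and follows essentially the same route as the paper's own proof: reduce to Propositions~\ref{prop:lk} and~\ref{prop:ls}, justify that Algorithm~\ref{alg:estimate} yields exact posterior samples via the sample-then-optimize results of \citet{matthews2017} and \citet{osband2018randomized}, and expand the Gaussian log-likelihood as a scaled $\ell_2$ loss plus a constant. The paper even includes the same closing caveat you raise about finite step sizes producing only approximate posterior samples.
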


We highlight that Theorem \ref{thm:sto} precisely characterizes the lower bound on the marginal likelihood as a sum of training losses based on the regression loss $\ell_2(\data_i, \theta_i)$ when the likelihood $P(\data_i | \theta)$ is a Gaussian. 

\subsubsection{Infinite-width neural networks}

\label{sec:ntk-ml}
Beyond linear models, our estimators can further perform model selection in the infinite-width limit of neural networks. Using the optimization procedure described by \citet{he2020bayesian}, we can obtain an exact posterior sample from a GP with kernel equal to the NTK. The iterative training procedure described in Algorithm~\ref{alg:estimate} will thus yield a lower bound on the marginal likelihood of this GP using sampled losses from the optimization trajectory of the neural network. We evaluate this bound in Section \ref{sec:BMS}, and formalize this argument in the following corollary. 
\begin{restatable}{corollary}{CorNTK}\label{cor:ntk}
Let $\trdata$ be a dataset indexed by our standard notation. Let $f_0$ be sampled from an infinitely wide neural network architecture $\mathcal{F}$ under some initialization distribution, and let $f_\infty^i$ be the limiting solution under the training dynamics defined by \citet{he2020bayesian} applied to the initialization $f_0$ and using data $\trdata_{< i}$. Let $K_\infty$ denote the neural tangent kernel for $\mathcal{F}$, and $\mathcal{M}=\mathrm{GP}(\mathbf{0}, K_\infty)$ the induced Gaussian Process. Then $f_\infty^i \sim P(f|\trdata_{< i}, \model)$, and in the limit of infinite training time, the iterative sample-then-optimize procedure yields an unbiased estimate of $\mathcal{L}(\trdata |\model)$. Letting $\ell_2$ denote the scaled squared $\ell_2$ regression loss and $c$ be a constant, we obtain as a direct corollary of Theorem~\ref{thm:sto}
\begin{equation}
   \log P(\data) \geq \mathbb{E}_{f_\infty^i \sim P(\cdot | \data_{<i})}[\log P(\data_i|f_{\infty}^i)] = \mathbb{E}\sum_{i=1}^n -\ell_2 (\data_i, f^i_{\infty}) + c = \mathcal{L}(\data) \; .
\end{equation}
\end{restatable}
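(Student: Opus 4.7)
The plan is to decompose the proof into two essentially independent steps: first, establish that the sample-then-optimize procedure of \citet{he2020bayesian} produces exact GP posterior samples in the infinite-width limit, and second, observe that once this fact is in hand, the corollary follows directly by plugging into Theorem~\ref{thm:sto}. The statement is explicitly advertised as a ``direct corollary,'' so the aim is to articulate precisely which hypothesis of Theorem~\ref{thm:sto} is delivered by the NTK limit, and to confirm that the Gaussian likelihood structure transfers cleanly.

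For the first step, I would invoke \citet{he2020bayesian} to argue that when $f_0$ is drawn from the initialization distribution of $\mathcal{F}$ and then trained on $\trdata_{<i}$ under their prescribed dynamics, the infinite-width, infinite-time limit $f_\infty^i$ is distributed as the posterior $P(f \mid \trdata_{<i}, \model)$ of the GP prior $\model = \mathrm{GP}(\mathbf{0}, K_\infty)$. The intuition to emphasize is the parallel with Section~\ref{sec:lower-bounds-gd}: in the infinite-width regime the network evolves as a linear model in NTK features, so the \citet{matthews2017} argument lifts exactly, with $f_0$ playing the role of a prior function sample and gradient descent acting as projection onto the data-consistent affine subspace, yielding a sample from the conditional Gaussian. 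This is a citation-based step rather than a new derivation.

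For the second step, I would simply apply Theorem~\ref{thm:sto} with the posterior samples $\{f_\infty^i\}_{i=1}^n$ taking the place of the $\{\theta_j^i\}$. Because $\model$ is a GP and the likelihood is Gaussian with noise variance $\sigma_N^2$, the negative log likelihood $-\log P(\data_i \mid f_\infty^i)$ equals $\frac{1}{2\sigma_N^2}(f_\infty^i(x_i) - y_i)^2$ plus a constant independent of $f_\infty^i$, exactly matching the $\ell_2$ regression-loss decomposition used in Theorem~\ref{thm:sto}. Taking expectations over the random initialization and summing over $i$ then yields
\begin{equation*}
\sum_{i=1}^n \mathbb{E}_{f_\infty^i \sim P(\cdot \mid \trdata_{<i})}[\log P(\trdata_i \mid f_\infty^i)] = \mathcal{L}(\trdata),
\end{equation*}
which is a lower bound on $\log P(\trdata \mid \model)$ by Proposition~\ref{prop:lk}.

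The main obstacle is not a computation but rather a careful statement of the limit hypotheses: one must be explicit that the width-to-infinity limit is taken \emph{before} (or jointly with, in the sense made precise by \citet{he2020bayesian}) the training-time-to-infinity limit, and that posterior samples are genuinely exact in this double limit rather than approximate. A secondary subtlety worth flagging is that the estimator inherits \emph{un}biasedness only for $\hat{\mathcal{L}}$ relative to $\mathcal{L}$, not relative to $\log P(\trdata)$ itself; the corollary statement should be read (and proved) in that sense, consistent with Proposition~\ref{prop:lk} and Theorem~\ref{thm:sto}. Once these points are acknowledged, no further argument is required.
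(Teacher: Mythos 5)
Your proposal is correct and follows essentially the same route as the paper's own proof, which likewise cites \citet{he2020bayesian} for the fact that $f_\infty^i$ is an exact posterior sample and then transfers the argument of Theorem~\ref{thm:sto} by substituting kernel gradient descent on $f$ for parameter-space gradient descent on $\theta$. Your additional remarks on the order of limits and on reading ``unbiased'' relative to $\mathcal{L}$ rather than $\log P(\data)$ are sound clarifications but do not change the argument.
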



It is natural to ask if such a Bayesian interpretation of the sum over training losses can be extended to non-linear models trained with stochastic gradient descent.  Although SGD lacks the exact posterior sampling interpretation of our algorithm, we conjecture a similar underlying mechanism connecting the sum over training losses and generalization. Just as the marginal likelihood measures how well model updates based on previous data points generalize to a new unseen data point, the sum of training losses in a stochastic gradient descent trajectory measures how well parameter updates based on one minibatch generalize to the rest of the training data. If the update generalizes well, we expect to see a sharper decrease in the training loss, i.e. for the model to train more quickly and exhibit a lower sum over training losses. This intuition can be related to the notion of `stiffness' proposed by \citet{fort2019stiffness}. We provide empirical evidence supporting our hypothesis in Sections \ref{sec:ts-interference} and \ref{sec:DNN_exp}; for now, we focus on the utility of this estimator in Bayesian model selection.

\subsection{Empirical evaluation} \label{sec:BMS}
Section \ref{sec:ml-estimation} focused on two key ideas: that online training statistics can be used in an estimator of a Bayesian model's marginal likelihood (or a lower bound thereof), and that gradient-based optimization can produce the samples needed for this estimation problem. We further conjectured that similar phenomena may also hold for deep neural networks. We now illustrate these ideas in a range of settings. Section \ref{sec:BMS} provides confirmation and quantification of our results for linear models, the model class for which we have theoretical guarantees, while Section \ref{sec:DNN_exp} provides preliminary empirical confirmation that the mechanisms at work in linear models also appear in DNNs. 

While we have shown that our estimators correspond to lower bounds on the marginal likelihood, in order to be useful for model selection also we need the rankings given by the estimators to agree with those assigned by the marginal likelihood. 
We first evaluate the relative rankings given by the true marginal likelihood with those given by our estimators on a simple feature selection task, consisting of 15 informative features $\phi_1, \dots, \phi_{15}$ and 15 features containing random noise $\phi_{16}, \dots, \phi_{30}$. Each model $\mathcal{M}_{i}$ uses features $\phi_1, \dots, \phi_i$ in its regression objective and ignores the rest. Full experiment details, along with a description of two additional model selection problems evaluated in Appendix~\ref{sec:optimize-then-prune}, can be found in Appendix~\ref{sec:ex_ms_blr_synthetic_data}. Naturally, the optimal model is $\model_{15}$, which uses all informative features and no extraneous ones. We compare $\mathcal{L}_S$, $\mathcal{L}$ and $\mathcal{L}_k$ to see whether each can identify the optimal model. We first observe that all methods agree on the optimal model: this is a consistent finding across all of the model selection tasks we consider. While all methods lower bound the log marginal likelihood, $\mathcal{L}_k(\data)$ and $\mathcal{L}_S(\data)$ exhibit a reduced gap compared to the naive lower bound.
In the rightmost plot of Figure~\ref{fig:lse-estimator}, we further quantify the reduction in the bias of the estimator $\mathcal{L}_k(\data)$ described in Section~\ref{sec:unbiased-estimation}. We use exact posterior samples (which we denote in the figure simply as posterior samples) and approximate posterior samples generated by the gradient descent procedure outlined in Algorithm~\ref{alg:estimate} using a fixed step size and thus inducing some approximation error.  We find that both sampling procedures exhibit decreasing bias as the number of samples $k$ is increased, with the exact sampling procedure exhibiting a slightly smaller gap than the approximate sampling procedure.

\begin{figure}
    \centering
    \includegraphics[width=0.47\linewidth]{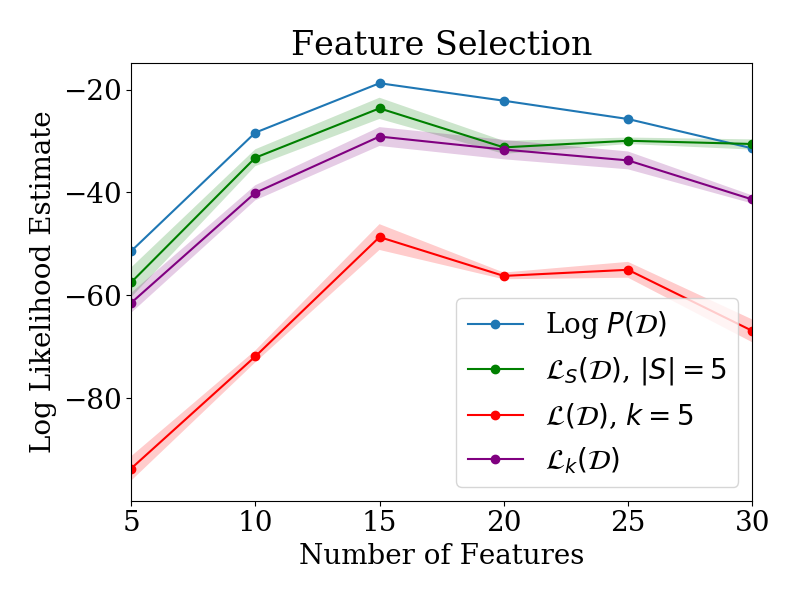}
    \includegraphics[width=0.47\linewidth]{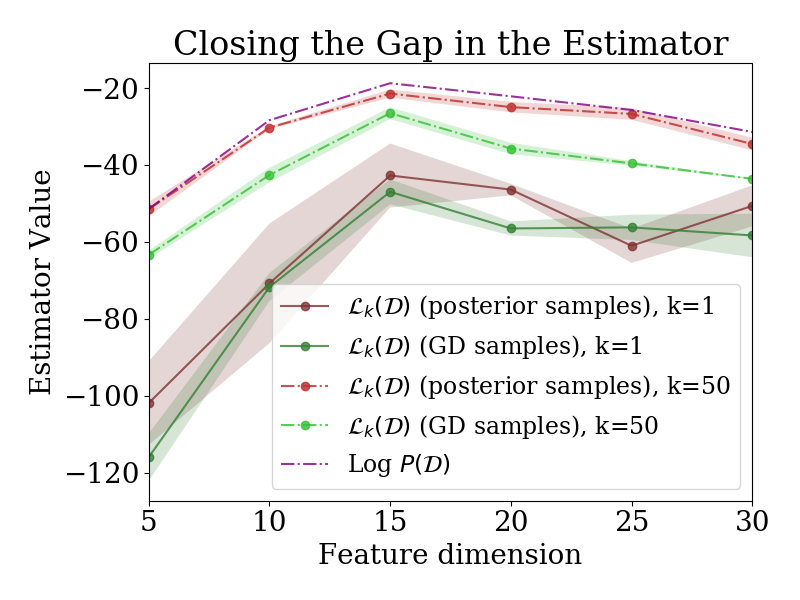}
    \caption[Evaluation of our estimators.]{Left: ranking according to $\log P(\data)$, $\mathcal{L}(\data)$ with exact posterior samples, and $\mathcal{L}(\data)$ computed on samples generated by gradient descent. Right: gap between true marginal likelihood and $\mathcal{L}_k(\data)$ estimator shrinks as a function of $k$ for both exact and gradient descent-generated samples. }
    \label{fig:lse-estimator}
    \vspace{-5pt}
\end{figure}

\begin{figure}
    \centering
    \includegraphics[width=0.41\linewidth]{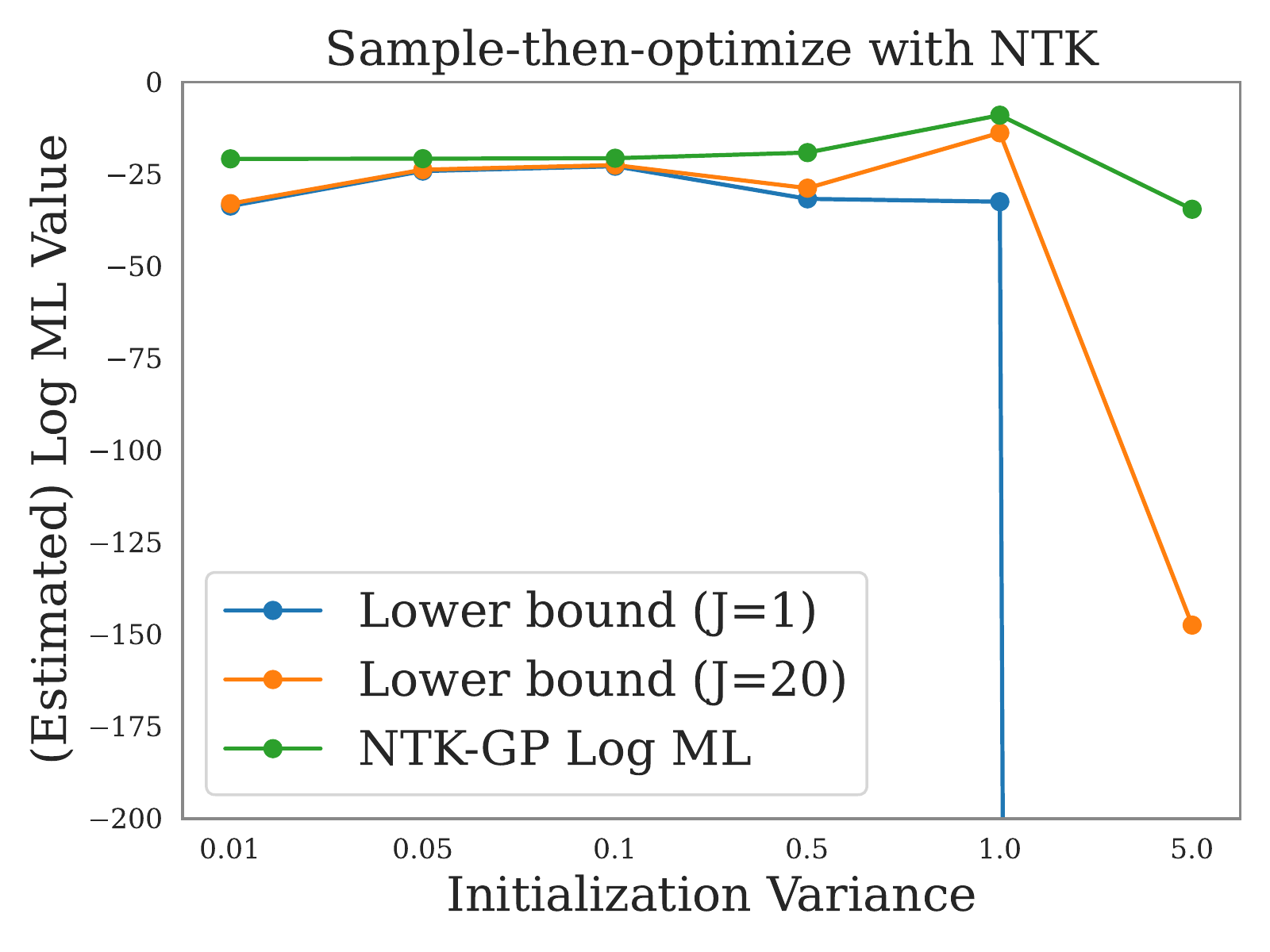}\hfill 
    \includegraphics[width=0.44\linewidth]{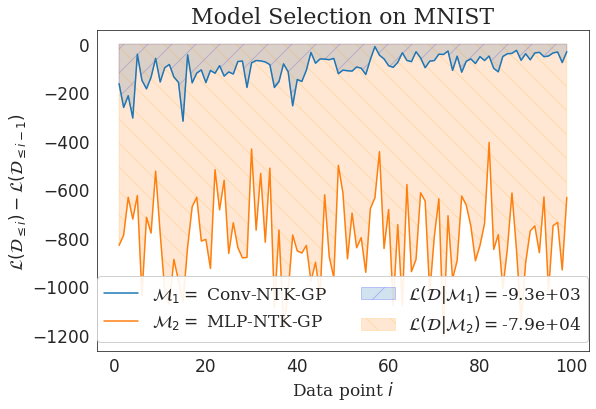}
    
    \caption[Illustration of marginal likelihood estimation via our method on NTK models.]{Left: approximate lower bound on log ML of NTK model on a prior variance selection task. Right: visualizing the interpretation of $\mathcal{L}(\data)$ as the `area under the curve' of training losses: we plot the relative change in the estimator  $\mathcal{L}(\data_{\le i}) - \mathcal{L}(\data_{<i}) $ for convolutional and fully-connected NTK-GP models, and shade their area. 
    }
    \label{fig:ntk_evaluation}
\end{figure}

We further illustrate how our estimator of $\mathcal{L}(\data)$ can select inductive biases in the infinite-width neural network regime in Figure~\ref{fig:ntk_evaluation}. The left hand side of the figure shows the marginal likelihoods of NTK-GP models for a fully-connected MLP architecture given different prior variances, illustrating that the sample-based estimator $\hat{\mathcal{L}}$ does indeed give a lower bound which becomes more accurate as the number of samples grows. In the right hand side of the figure, we evaluate the relative change in the log ML of a Gaussian Process induced by a fully-connected MLP (MLP-NTK-GP) and a convolutional neural network (Conv-NTK-GP) which regress on the MNIST dataset.
The fully-connected model sees a consistent decrease in its log ML with each additional data point added to the dataset, whereas the convolutional model sees the incremental change in its log ML become less negative as more data points are added, as well as a less negative incremental decrease in the log ML at the start of training.
This leads to the Conv-NTK-GP having a higher value of $\mathcal{L}(\data)$, and a higher log marginal likelihood, than the MLP-NTK-GP. This provides reassurance that the estimators we have proposed behave reasonably in simple architecture selection problems.

\section{Training speed and interference}
\label{sec:ts-interference}
Having revealed that statistics from a particular gradient descent training procedure can yield an accurate estimator of the log marginal likelihood, we now turn our attention towards generalizing these findings to the setting of multi-epoch stochastic gradient descent optimization in deterministic function approximators. We begin by relating the posterior predictive likelihoods used in our estimator to the agreement between the gradients of disjoint minibatches (i.e. interference exhibited by the function approximator) in a dataset. This motivates an analogous estimator for SGD trajectories, which is highly correlated with generalization performance. 

\subsection{Generalization and multi-epoch optimization} \label{sec:theory_multi_epoch}
We have thus far studied the relationship between training speed and the marginal likelihood in the context of Bayesian models updated iteratively on successive elements of the training data. This has yielded an effective model selection tool based on computing the area under a training curve for specific updating procedures. Translating this approach to the optimization methods used to train deep neural networks faces two major hurdles: first, although it is possible to train neural networks to approximate a Bayesian posterior over weights \citep{neal2012bayesian}, standard training schemes deal only with a deterministic initialization and so do not directly correspond to probabilistic models. It is possible to argue that many stochastic optimization algorithms sample from an approximate Bayesian posterior \citep{welling2011bayesian, mandt2017stochastic}, but this argument only highlights the second hurdle facing our estimator: most optimization schemes used in practice run over a fixed dataset for several epochs. In contrast, our estimator requires an iterative training procedure whereby each data point is sequentially added to the training set, and we never evaluate the loss on revisited data points.

However, independent of the relationship between gradient descent and Bayesian posterior sampling, the same intuition relating training speed to the marginal likelihood hints at a mechanism by which training speed correlates with generalization error in the gradient descent setting. Recall that the marginal likelihood can be written as the sum

 \begin{equation}
    \log P(\data) = \sum_{i=1}^n \log P(\data_i | \data_{<i}).
\end{equation}

Now consider the term $\log P(\data_i | \data_{<i})$. This quantity characterizes how accurately the model predicts $\data_i$ given that it already knows the values of $\data_{<i}$. The change in the log conditional probability of a data point $\data_{i+1}$ after conditioning on data point $\data_i$ thus presents a Bayesian analogue of interference as defined in \eqref{eq:deltaint}. We now investigate whether interference in minibatch optimization might provide a means of leveraging the insights described for Bayesian models in a broader class of learning algorithms.

We first establish notation. We denote data $\trdata = \{(x_i, y_i)\}_{i=1}^n$, let $\ell$ be a loss function, $f_\theta$ the function induced by parameters $\theta$, and $R(\theta) = \mathbb{E}_{\dgd}[\ell(f_\theta(x), y)]$. The sequence $(\theta_t)_{t=0}^T$ denotes the series of parameters obtained over a gradient descent trajectory. Recalling that $\log P(\trdata)$ has the interpretation of an area under a training curve, we therefore consider the area under the training curve of a stochastic gradient descent trajectory. For the moment, we will consider minibatches of size one to simplify our analysis; it is straightforward to extend this reasoning to larger minibatches. We first consider a single training epoch, denoting by $\theta_k$ the value of the parameters at step $k$ after performing gradient steps $\theta_{t+1} = \theta_t - \alpha \nabla_\theta \ell (x_t, y_t , \theta_t)$. With a slight abuse of notation, we repurpose $\hat{\mathcal{L}}$ to apply to SGD trajectories as follows.

\begin{align}
   \hat{\mathcal{L}}(\trdata) &= \sum_{k=0}^n \ell(x_k, y_k, \theta_k) \\
   &= \sum_{k=0}^n [\ell(x_k, y_k, \theta_0) + \sum_{j=1}^{k} (\ell(x_k, y_k, \theta_j) - \ell(x_k, y_k, \theta_{j-1}) ) ]  \\
   \intertext{Recall that this is precisely our definition of \textit{interference} from Equation~\ref{eq:deltaint}: $\deltaint(x_k, x_j)$.}
   &= \sum_{k=0}^n[ \ell(x_k, y_k, \theta_0) + \sum_{j=1}^k \deltaint(x_j, x_k)] \\
   &= \eRisk(\trdata; \theta_0) + \sum_{j=1}^n \sum_{k>j} \deltaint(x_j, x_k)
\end{align}
The term $\deltaint(x_j, x_k)$ measures the effect of the gradient step computed on data point $(x_j, y_j)$ on the loss at the data point ($x_k, y_k)$, thus $\sum_{j<k} \deltaint(x_j, x_k)$ measures the cumulative effect of gradient updates computed on earlier data on the loss at point $k$. This quantity philosophically resembles the $\log P(\data_i | \data_{ <i})$ term appearing in the expression of the log marginal likelihood. Indeed, in the first epoch of training it is an unbiased estimate of the effect of the gradient updates performed up to step $k$ on the expected risk $R$. 

\begin{align}
    \mathbb{E}_{x, y \sim \dgd}[\ell(x,y; \theta_k)] &= \mathbb{E}_{\trdata \sim \dgd}[\ell(x_k, y_k; \theta_k)] \\
    &= \mathbb{E}_{\trdata \sim \dgd}[\ell(x_k, y_k; \theta_0) + \sum_{j< k} \deltaint(x_j, x_k)]
\end{align}

Unbiased estimation of the change in the true risk is only attainable due to the independence of $\theta_k$ and $x_k, y_k$. After the first epoch of training, this property no longer holds as the pair $(x_k, y_k)$ was used to obtain the network's current parameters.
Applying similar techniques to those of \citet{hardt2015train} may yield bounds on the bias induced by this dependency, but these tend to be overly pessimistic, resulting in loose upper bounds on the expected risk. Concretely, two problems arise in the case of multiple epochs: first, the loss on the training set will decrease, meaning that the raw change in the loss for any given data point will eventually tend to zero and cease to provide informative updates; second, over the course of many epochs the network may overfit not just its predictions but also its gradient structure to the training set. Of course, neither of these problems is guaranteed to prevent effective performance estimation. In the first case, the losses from earlier in the training trajectory are still likely to be informative, and the low magnitude of the later losses when the network has overfit mean that this later period is unlikely to significantly influence the area under the training curve. In the second case, we note that in general changing the structure of gradients is a much more difficult task than changing predictions (recall the invariance-learning experiments of Figure~\ref{fig:cifar-invariances}), and so we expect the bias induced by these second-order effects to be small. Taking into account the large variance of using a minibatch to estimate the change in loss over the whole dataset, this yields the following practical conjecture.


\hypothesis{Bias-variance trade-off in interference estimation}{the interference between minibatches in the training set $I_{\Delta}(x_i, x_j)$ will exhibit low bias as an estimator of the change in the validation loss relative to its variance due to stochastic optimization. \label{hyp:minibatch}}

We evaluate Hypothesis~\ref{hyp:minibatch} in  Figure~\ref{fig:sotl-bias-variance}, where we plot $\deltaint(x_j, x_k)$, for $y_k$ equal to the same minibatch as $x_k$ (`same minibatch'), a larger subset of the training set (`training holdout set'), and a minibatch drawn from the validation set (`validation set'). For the training holdout set and the validation set, we use minibatches of size 5000 to isolate the variance in the estimator due to only the optimizer minibatch sampling. Intriguingly, these results suggest that even after several epochs, $\deltaint(x_j, x_k)$ will exhibit low bias as an estimator of $\risk(\theta_t) - \risk(\theta_{t-1})$ in a range of neural network architectures and optimization schemes. This suggests that smoothing out the variance in the loss estimate induced by stochastic parameter updates will plausibly provide a greater benefit than using an unbiased estimate of performance on the validation set. This is likely to be the case in particular when a large learning rate is used early in training, as is common in practice.

\begin{figure}
    \centering
    \includegraphics[width=\linewidth]{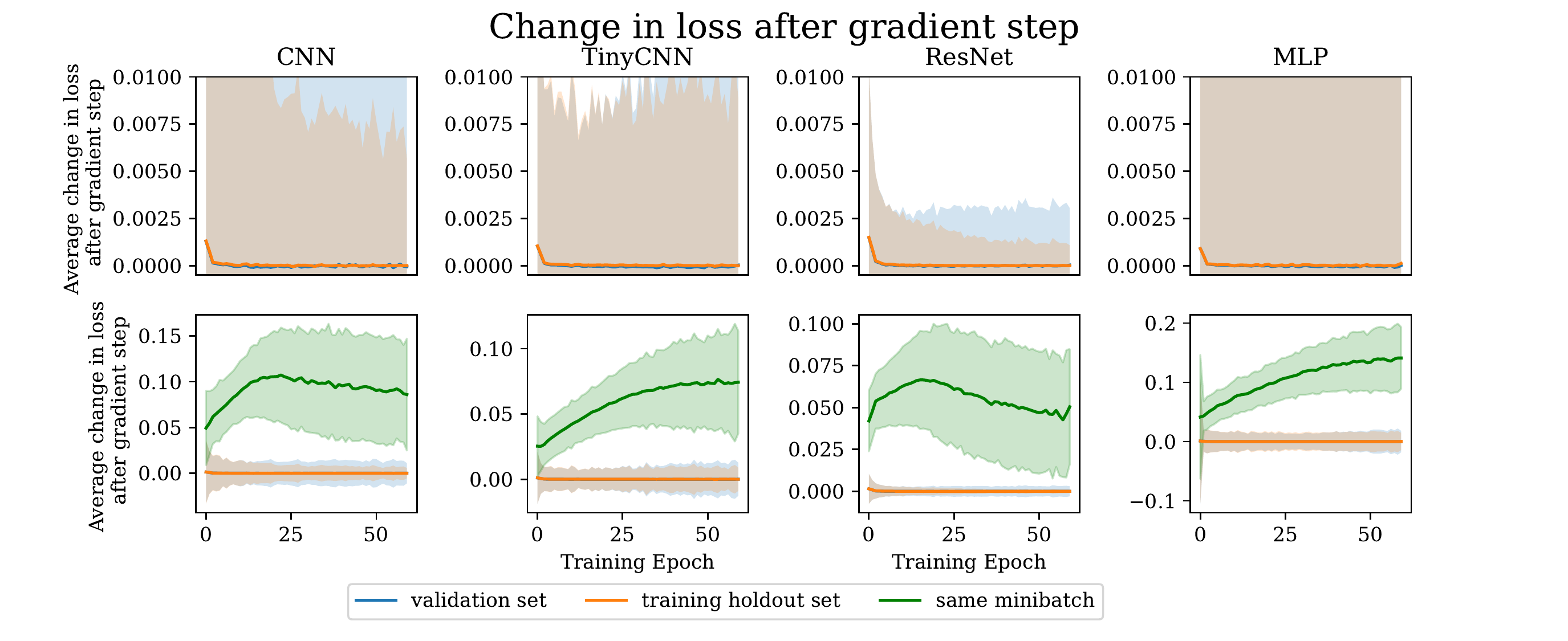}
    \caption[Effect of minibatch gradient steps on train, test, and minibatch loss over the course of training a variety of neural network architectures on CIFAR-10.]{Effect of minibatch gradient steps on train (orange), test (blue), and minibatch (green) loss over the course of training a variety of neural network architectures on CIFAR-10. Top and bottom rows plot the same quantities and differ only in the range of the y axis, which shows the average change in the loss before and after a minibatch gradient step in each epoch. Shaded region indicates one standard deviation. Values for the train and test loss are similar, with the difference in means indistinguishable relative to the variance due to minibatch sampling over the epoch, while the change in loss on the minibatch for which the gradient step was taken dwarfs both. }
    \label{fig:sotl-bias-variance}
\end{figure}

Motivated by these observations, we propose the following generalization measure which directly computes the area under a network's training curve. We refer to it as the SOTL, due to its interpretation as measuring the sum of training losses from the model's training trajectory. 
\begin{equation}
	\mathrm{SOTL }= \sum^T_{t=1} \left[ \frac{1}{B} \sum^B_{i=1} \ell \left( f_{\theta_{t, i}}(\mathbf{X}_i), \mathbf{y}_i \right) \right]
\end{equation}

A more detailed study of the SOTL is outside the scope of this thesis, though validation of its utility in neural architecture search can be found in the related work of \citet{ru2020revisiting}. We perform a small-scale proof of concept in this chapter. 

\subsubsection{Training speed in DNNs} \label{sec:sgd_dnn}

\begin{figure}
    \begin{minipage}{.27\textwidth}
    \includegraphics[ width=\linewidth]{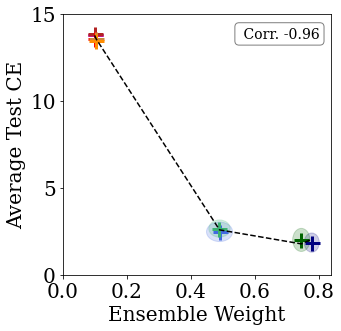}
    \end{minipage}
    \begin{minipage}{.27\textwidth}
    \includegraphics[ width=\linewidth]{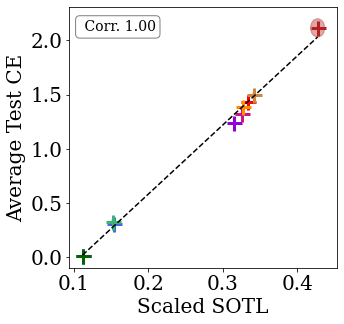}
    \end{minipage}
    \begin{minipage}{.27\textwidth}
    \includegraphics[ width=\linewidth]{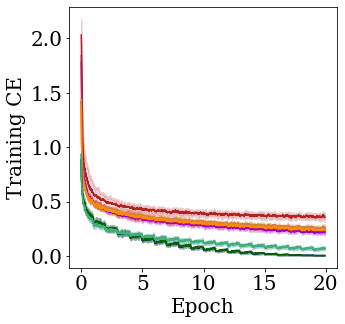}
    \end{minipage}
    \begin{minipage}{.17\textwidth}
    \includegraphics[ width=\linewidth]{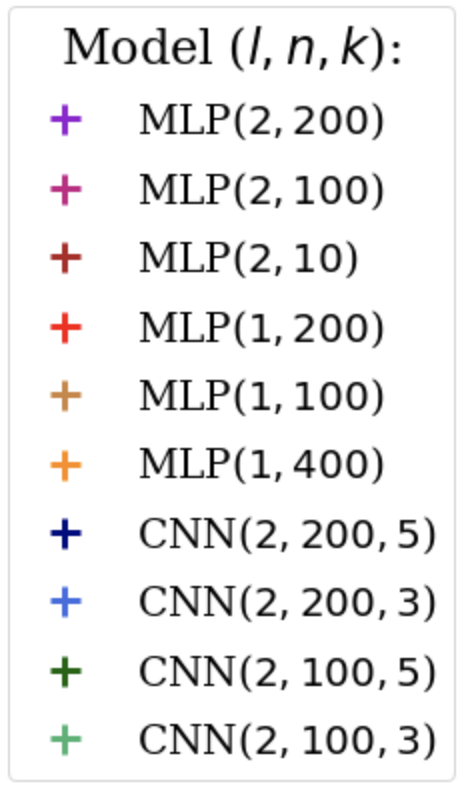}
    \end{minipage}
    \caption[Linear combinations of DNNs on FashionMNIST.]{Left: ensemble weights versus the test loss for concurrent training. Middle: sum over training losses (SOTL), standardized by the number of training samples, versus test loss for parallel training. Right: training curves for the different models trained in parallel. All results are averaged over $10$ runs, and standard deviations are shown by the shaded regions around each observation. The model parameters, given in the parentheses, are the number of layers ($l$), nodes per layer ($n$) and kernel size ($k$), respectively. }
    \label{fig:mod_select_dnn}
\end{figure}

Motivated by the previous discussion, we conjecture that just as the sum of the log posterior likelihoods is useful for Bayesian model selection, the sum of minibatch training losses will be useful to predict generalization error. In this section, we evaluate whether this conjecture holds for a simple convolutional neural network trained on the FashionMNIST dataset. Our results provide preliminary evidence in support of this claim, and suggest that further work investigating this relationship may reveal valuable insights into how and why neural networks generalize. We first evaluate whether the sum over training losses (SOTL) obtained over an SGD trajectory correlates with a model's generalization error, and whether SOTL predicts the weight assigned to a model by a linear ensemble.  A discussion of the link between linear ensemble weight and training speed is provided in Appendix~\ref{sec:optimize-then-prune}; to summarize, we note that (sub-)models with the best sum of training losses will also exhibit the greatest time-averaged correlation with the target, and so in some sense present the `best' feature for the linear model combination to use. To evaluate the connection between these three concepts, we train a linear combination of DNNs with stochastic gradient descent and evaluate a) the SOTL of each model's training trajectory, b) the final validation loss of each model, and c) the weight assigned to each model by a linear ensembling layer trained concurrently with the networks.

We observe a strong correlation between the sum of training losses (SOTL) and average test cross-entropy (see Figure \ref{fig:mod_select_dnn} middle column), validating that the SOTL provides useful model rankings. Further, we find that architectures with lower test error (when trained individually) are given higher weight by the linear ensembling layer -- as can be seen from the left plot in Figure \ref{fig:mod_select_dnn}.  This finding may have intriguing implications on empirical phenomena such as the lottery ticket hypothesis~\citep{frankle2018the}. Further details of the experiment can be found in Appendix \ref{sec:exp_details_sgd_dnn}. Our results are summarized in Figure \ref{fig:mod_select_dnn}. Though the architecture sets in these illustrative experiments are small, these results are replicated in much larger architecture search spaces by \citet{ru2020revisiting}.

\subsection{Bias and variance in performance estimation} \label{sec:DNN_exp}
We now investigate \textit{why} the sum of training losses seems to predict generalization despite our departure from the theoretically grounded regime of Bayesian models.
We have seen in Figure~\ref{fig:mod_select_dnn} that the SOTL is predictive of generalization performance in simple model selection problems -- indeed, further work \citep{ru2020revisiting} has robustly demonstrated that it is often a \textit{better} predictor of generalization than the early-stopping validation loss. This is quite remarkable as most of the received wisdom in deep learning recommends against using the training set for hyperparameter search in favour of a held-out validation set. We recall Figure~\ref{fig:sotl-bias-variance}, which revealed that the effect of one minibatch gradient step on the loss of a disjoint minibatch exhibited low bias as an estimator of the change in the expected risk, while the variance of this estimator due to stochasticity in the gradient descent trajectory was itself significant. This suggests that using training speed as a performance estimator may be performing a bias-variance trade-off: training speed introduces a small amount of bias in the performance estimator as the model overfits, but by marginalizing over all of the parameters visited during an epoch or a trajectory, significantly reduces the variance in the performance estimation. Concretely, we obtain the following hypothesis.
\hypothesis{Bias-variance trade-off in training speed}{training speed, as measured by the sum of training losses, provides a lower-variance estimate of performance than the validation loss in noisy optimization procedures such as stochastic gradient descent.}
\begin{figure}
    \centering
    \includegraphics[width=\linewidth]{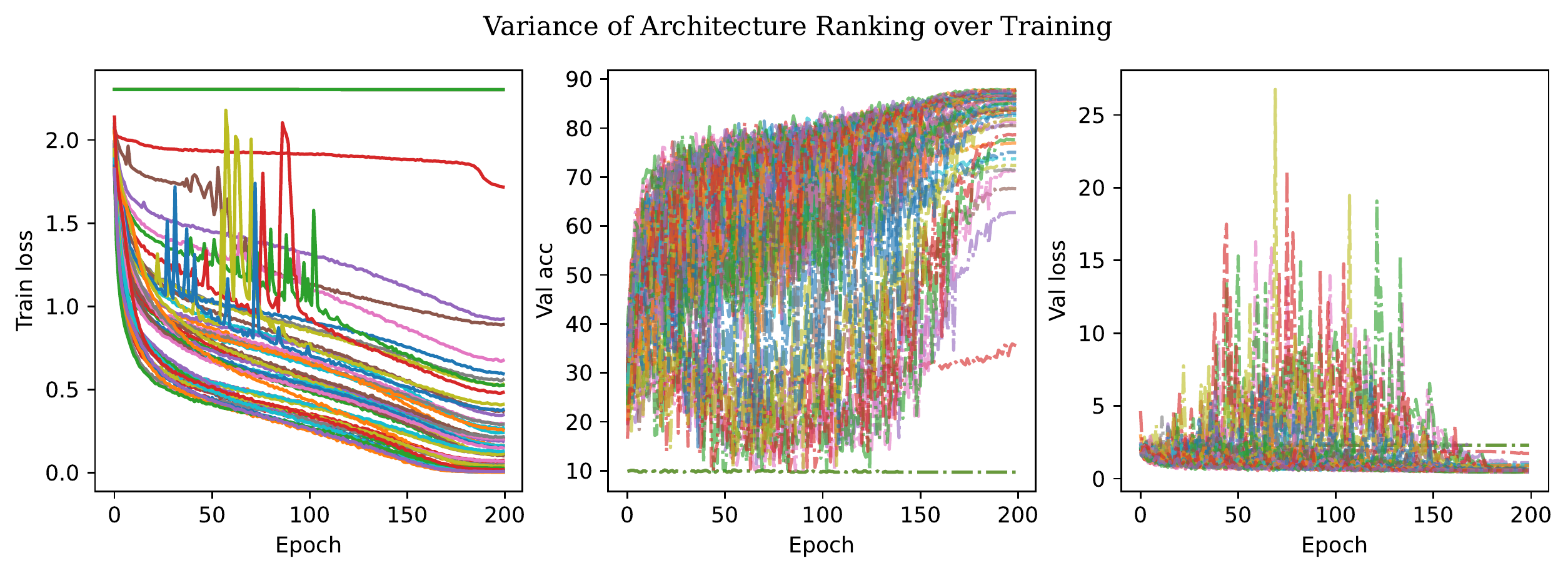}
    \caption[Rankings of a randomly-selected subset of models from the NASBench-201 dataset given by training speed estimators. ]{The ranking of a randomly-selected subset of models from the NASBench-201 dataset given by training speed estimators is consistent across subsequent epochs, whereas the ranking given by a point estimate of the validation accuracy exhibits high variance between epochs for the same architecture, illuminating the mechanism by which training speed can yield more informative performance estimates for neural architecture search.}
    \label{fig:tse-variance}
\end{figure}

We validate this hypothesis in Figure~\ref{fig:tse-variance}, where we evaluate a randomly sampled subset of models from the NASBench-201 dataset \citep{dong2020nasbench201}, the dataset in which variants of the SOTL evaluated by \citet{ru2020revisiting} obtained impressive rank correlation performance, outperforming learning curve extrapolation methods and the early stopping validation loss. We see that the curves corresponding to the validation accuracy and validation loss at each epoch are indistinguishable for most architectures, while the training loss, which is averaged over minibatches sampled during an entire epoch, exhibits much lower variance. This suggests that using the early-stopping validation loss would yield an extremely noisy estimator of final performance, whereas the SOTL, even if only averaged over the losses of a single epoch late in training, yields a relatively consistent ranking.

\keyinsight{The sum of training losses yields an analogous performance estimator to $\mathcal{L}(\data)$ which can be applied to neural networks. It measures, among other things, \textit{interference}: the degree to which a gradient update on one minibatch generalizes to the rest of the training set. } 


\section{Conclusions}

In this chapter, we have proposed a family of estimators of the marginal likelihood which illustrate the connection between training speed and Bayesian model selection. Because gradient descent can produce exact posterior samples in linear models, our result shows that Bayesian model selection can be performed by training a linear model with gradient descent and tracking how quickly it learns. This approach also applies to the infinite-width limit of deep neural networks. The intuition behind this estimator is appealing: a model which can better leverage information from one data point to improve its predictions on other data points will generalize better to new data than one which is not able to do so. We apply similar intuition to propose an analogous performance estimator for deep neural networks, which computes the area under a model's training curve. This estimator obtains high rank correlation with the final generalization error of a neural network, yielding a competitive approach to performance estimation for neural architecture search. 
We provide evidence that the connections shown in linear models have predictive power towards explaining generalization and training dynamics in DNNs. This empirical analysis highlights the importance of generalization between inputs in the training set, i.e. interference, as a factor influencing both training speed and generalization error.

One limitation of the connection illustrated in this chapter is that it can only be straightforwardly applied to problems of identifying a suitable inductive bias or network architecture. Regularization schemes that influence training dynamics, such as dropout or weight regularization, introduce confounding factors into the training process under which the relationship between training speed and generalization breaks down. Training speed alone may also be insufficient for problems of optimizer or hyperparameter selection, where these quantities may reduce training speed via their influence on the ability of each optimization step to reduce the loss on the current minibatch, rather than by influencing generalization between minibatches. 

While generalization between training inputs will play a key role in our analysis of generalization in {reinforcement learning}, the results from this chapter cannot immediately be applied to deep RL agents. This is because the estimators we use require a fixed optimization objective to provide a meaningful notion of training speed. In reinforcement learning, both the optimization objective and the input distribution are constantly changing. A deep RL agent that obtains a low training loss quickly may nonetheless achieve sub-optimal return in the environment. In order to understand the role of interference in deep reinforcement learning, it will first be necessary to understand how the dynamics of RL differ from those of the supervised learning setting. It is only with this understanding that we will be able to leverage the insights developed in this chapter and the previous one to predict and improve generalization in deep reinforcement learning.

\chapter{Dynamics of reinforcement learning}
\label{chp:rl-dynamics}
\minitoc
\section{Introduction}
\label{sec:rl-dynamics-introduction}
\begin{figure}[t!]
    \centering
    \includegraphics[keepaspectratio,width=.79\textwidth]{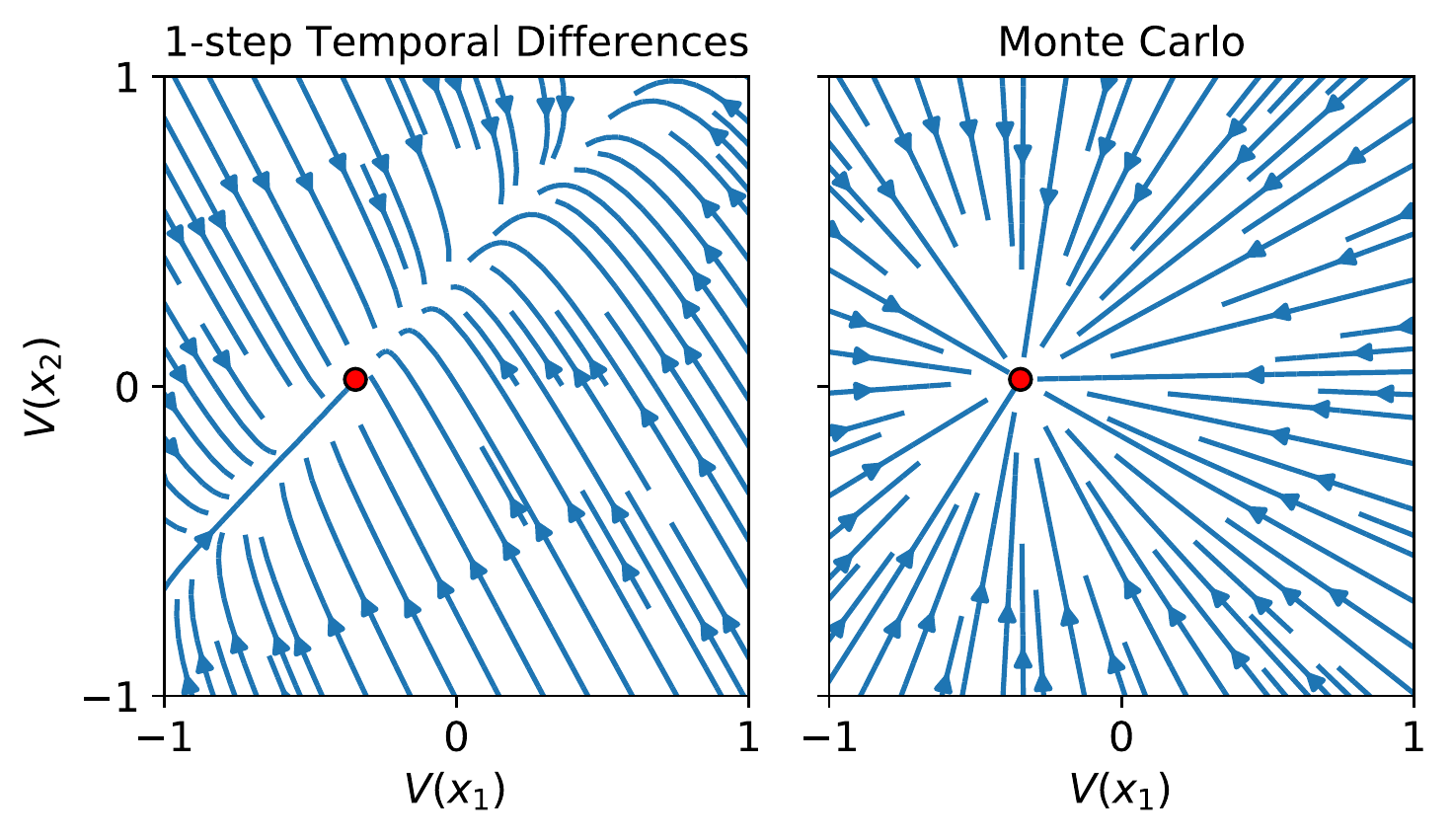}\hfill
    \caption{An example of qualitatively different value function dynamics for a two-state MDP for 1-step temporal difference learning and Monte Carlo learning, with fixed point $V^\pi$ in red.}
    \label{fig:two-state-example}
\end{figure}
The previous chapters have shown that properties of a network's training dynamics can provide useful insights into its generalization performance. The remainder of this thesis will apply similar analysis into the training of \textit{reinforcement learning} agents. As mentioned in the discussion of TD learning in Section~\ref{bkgd:rl}, the dynamics of reinforcement learning agents are less stable than those of supervised learning \citep{baird1993advantage}, leading to divergence even for simple linear function approximation schemes. This complexity necessitates a more detailed study of the learning dynamics of neural networks trained with value-based RL algorithms before it is possible to discuss the implications of these dynamics on generalization in full detail. 
To this end, the current chapter will provide a novel theoretical framework for our study of representation learning and illustrate its utility by analyzing the effect of various auxiliary tasks on these dynamics; Chapter~\ref{chp:rep-learning} will draw on insights from this framework to motivate an empirical study of the representations learned by deep RL agents, while Chapter~\ref{chp:gen-rl} will explore its implications on memorization and overfitting to training observations.

Our theoretical analysis of representation dynamics in RL is motivated in part by the robust improvements given by auxiliary tasks to deep reinforcement learning agents. While supervised learning often benefits from unsupervised pretraining \citep{radford2018improving, erhan2010does, lee2021pebble}, fitting auxiliary labels that are only tangentially related to the principal task has not been incorporated into standard learning pipelines. In contrast, training a network to predict not just the value function, but other properties of the environment such as the value of a pixel at the next timestep or expected state occupancies under the current policy, consistently helps RL agents improve their performance \citep{jaderberg2016reinforcement,mirowski2017learning, lin2019adaptive}. A commonly-held belief is that these benefits are mediated through improved representation learning.
This hypothesis naturally raises a number of questions that, broadly speaking, remain open. What makes a good auxiliary task? Can we predict how an auxiliary task will affect an agent's representation? When should one auxiliary task be used instead of another? More generally, how should this hypothesis about the mechanism of auxiliary tasks itself be tested? 

The complex interacting components of large-scale deep reinforcement learning agents make it difficult to extract general insights. 
In this chapter we aim to shed light on the answers to these questions by distilling the benefits of auxiliary tasks down to their effects on the dynamics of the learned representation.
We begin by considering a \emph{learning dynamics} framework for studying the evolution of an agent's predictions and learned representation; see Figure~\ref{fig:two-state-example} for a toy illustration, with full details given in Section~\ref{sec:learning-dynamics}. The central idea behind this framework is that it is not just \emph{what} an agent learns that dictates how its representation is shaped, but also \emph{how} it learns. 

This framework provides a model for representation learning in RL. Under this model, even in the case of model-free algorithms, agents can be shown to automatically incorporate the transition structure of the environment into their representations. We characterize the dynamics induced by a number of auxiliary tasks, with particular focus on ensemble predictions and random cumulant functions, and prove convergence of the induced representations to subspaces defined by certain decompositions of the environment's transition operator. These results will play a key role in our analysis in Chapter~\ref{chp:gen-rl}.
We then consider the effectiveness of auxiliary tasks in sparse-reward environments, and via the use of the learning dynamics framework, construct a hypothesis as to which auxiliary tasks should be particularly well suited to such environments. We test this conjecture in the Arcade Learning Environment \citep{bellemare2013arcade}, demonstrating strong performance with random cumulant auxiliary tasks. These findings motivate our study of representation learning in sparse-reward environments in Chapter~\ref{chp:rep-learning}.

\section{Mathematical framework}
\label{sec:learning-dynamics}
This section will present a mathematical framework from which to study the learning dynamics of RL agents, with a particular focus on the dynamics of an agent's representation. A description of the reinforcement learning setting along with the basic notation used in this chapter can be found in Chapter~\ref{chp:background}. While we will formulate the representation in terms of the values attained by some feature output, our analysis will focus on the \textit{evolution} of these features over time. In taking this perspective, we lay the groundwork for our later discussion of \textit{capacity} in Chapter~\ref{chp:rep-learning}, where we frame a representation in terms of not just the network's feature outputs under its current parameters, but also the optimization dynamics that these parameters induce.

\subsection{Features and representations}\label{sec:reps}

In many environments, it is impractical to store a value function as a table indexed by states. Even in settings where this is practical, it may not be desirable, as function approximation is necessary for generalization to new observations that the agent may encounter during or after training. Instead, it is typical to parameterize $V \in \mathbb{R}^{\statespace}$ through a \emph{feature map} $\phi : \mathcal{X} \rightarrow \mathbb{R}^\repdim$ and \emph{weight vector} $\mathbf{w} \in \mathbb{R}^\repdim$, leading to a factorization of the form
\begin{align*}
    V(x) = \langle \phi(x), \mathbf{w} \rangle \, .
\end{align*}
Such a parameterization may be amenable to more efficient learning, for example if $\phi$ abstracts away unimportant information, allowing for generalization between similar states.
Even more concisely, writing $\Phi \in \mathbb{R}^{\statespace\times\repdim}$ for the matrix with rows $\phi(x)$ yields
\begin{align}\label{eq:q-phi-w}
    V = \Phi \mathbf{w} \, .
\end{align}
The quantity $\Phi$ is often referred to as the agent's \emph{representation} of the environment \citep{boyan1999least, levine2017shallow, bertsekas2018feature,chung2018two,bellemare2019geometric,dabney2020value}.
In many small- and medium-scale applications, the representation is fixed ahead of time,
and only $\mathbf{w}$ is updated during learning; this is the linear function approximation regime. Many common choices of features relate to various decompositions of mathematical objects associated with the transition operator $P^\pi$. In deep reinforcement learning, however, $\Phi$ and $\mathbf{w}$ are learned simultaneously. In this chapter we will consider the representation to be the output of the penultimate layer of the network, meaning that the mapping from representation to value is linear. While it is possible to model $\Phi$ as the output of any network layer, the resulting dynamics will not necessarily be analytically tractable as the mapping from features to network outputs will be nonlinear.

\subsection{Subspace distances}
\begin{wrapfigure}{l}{0.35\linewidth}
    \centering
    \includegraphics[width=0.95\linewidth]{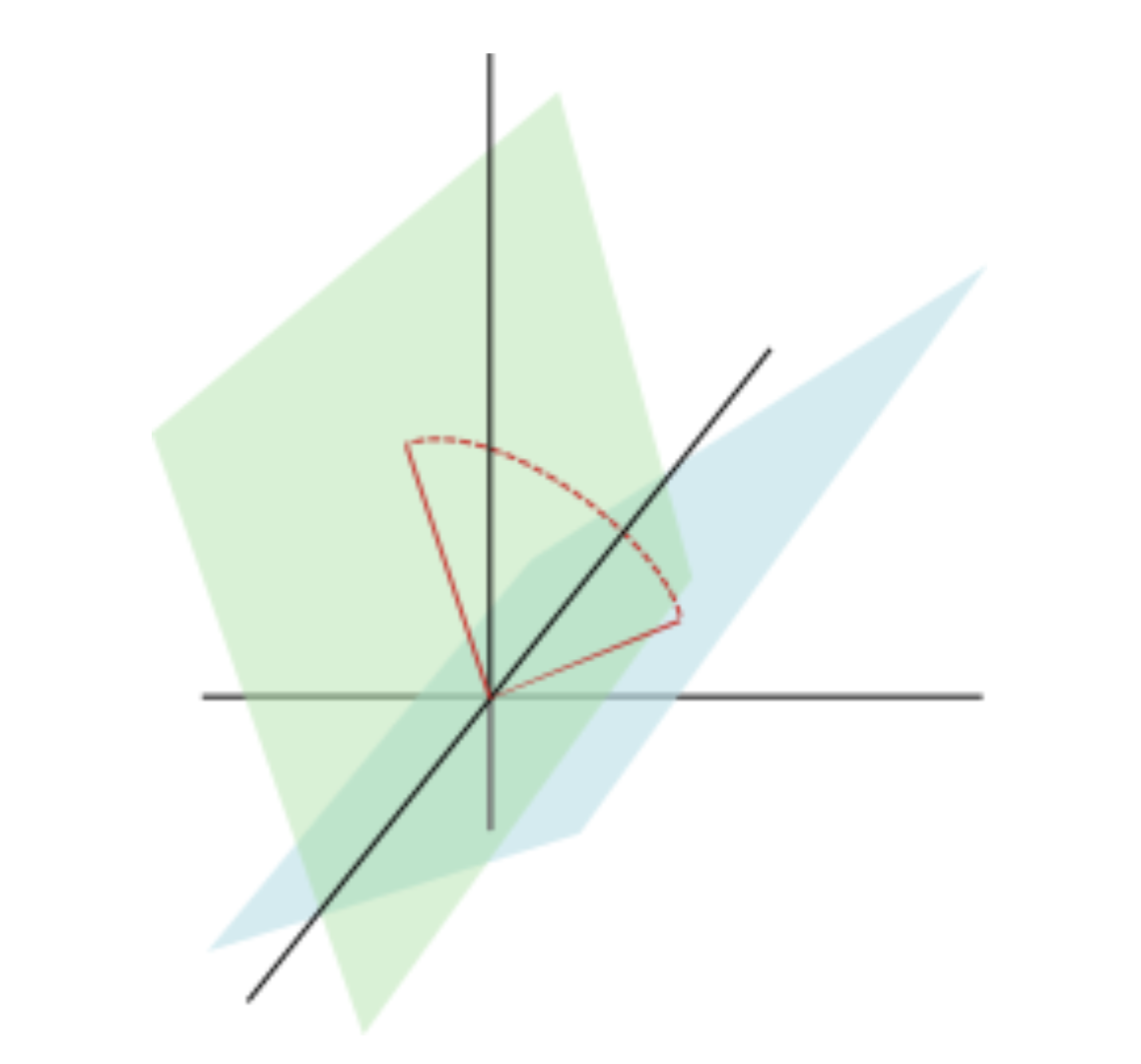}
    \caption{Visualizing the Grassmann distance between two 2-dimensional subspaces.}
    \label{fig:grassmann}
\end{wrapfigure}
The key results of this chapter will depend on characterizing the convergence of the subspace spanned by a set of feature vectors. To do so, we need a notion of distance between subspaces of $\mathbb{R}^{\mathcal{X}}$. 
The following definition follows \citet{ye2016schubert}. Intuitively, it can be thought of as generalizing the notion of an angle between vectors to one between subspaces.

\begin{definition}\label{def:grassmann-distance}
For two $K$-dimensional subspaces $Y_1, Y_2 \leq \mathbb{R}^\mathcal{X}$, the \emph{principal angles} $\theta_1, \ldots, \theta_K \in [0, \pi/2]$ between the subspaces are defined by taking orthonormal matrices $\mathbf{Y}_1 \in \mathbb{R}^{\mathcal{X} \times K}$ and $\mathbf{Y}_2 \in \mathbb{R}^{\mathcal{X} \times K}$ the columns of which span $Y_1$ and $Y_2$ respectively, and defining $\theta_k = \cos^{-1}(\sigma_k(\mathbf{Y}_1^\top \mathbf{Y}_2))$, where $\sigma_k(\mathbf{A})$ is the $k$\textsuperscript{th} singular value of the matrix $\mathbf{A}$. One can check that this definition is independent of the matrices $\mathbf{Y}_1$ and $\mathbf{Y}_2$, depending only on the subspaces $Y_1,Y_2$ themselves. The \emph{Grassmann distance} $d(Y_1, Y_2)$ between $Y_1$ and $Y_2$ is then defined as $\| \theta \|_2 = (\sum_{k=1}^K \theta_k^2)^{1/2}$.
\end{definition}

With these definitions in hand, we are now ready to give a precise version of the statement alluded to by the discussion and figure in Section~\ref{sec:rl-dynamics-introduction}. 

\section{Value function \& representation dynamics}
\label{sec:value-function}
We begin our study of learning dynamics in RL with a discussion of the dynamics followed by a \textit{value function} when Bellman updates are applied. We make some simplifying assumptions to avoid focusing on technicalities here, and give a discussion of the more general case in  Appendix~\ref{sec:more-general-value-function-results}.

\subsection{Value function dynamics}
\label{sec:vf-dynamics}
We consider a continuous-time analogue of one-step temporal difference (TD) learning dynamics. While in practice TD updates are performed discretely, the continuous-time regime lends itself more readily to analysis and has been leveraged frequently in the literature to provide theoretical guarantees on the corresponding discrete-time process \citep{borkar2000ode}. We begin with the following system:
\begin{align*}
    \partial_t V_t(x) = \mathbb{E}_\pi[R_0 + \gamma V_t(X_1)|X_0 = x]- V_t(x) \, ,
\end{align*}
for each $x \in \mathcal{X}$, which may also be written 
\begin{align*}
    \partial_t V_t(x) = R^\pi(x) + \gamma (P^\pi V_t)(x) - V_t(x) \, ,
\end{align*}
or in full matrix notation,
\begin{align}\label{eq:value-function-ode}
    \partial_t V_t = -(I - \gamma P^\pi) V_t + R^\pi \, .
\end{align}
The differential equation in \eqref{eq:value-function-ode} is an affine autonomous system, and is straightforwardly solvable.
\begin{restatable}{lemma}{lemODESoln}\label{lem:ode-soln}
If $(V_t)_{t \geq 0}$ satisfies Equation~\eqref{eq:value-function-ode} with initial condition $V_0$ at time $t=0$, then we have
\begin{align}\label{eq:value-function-ode-solution}
    V_t = \exp( -t (I - \gamma P^\pi ) )(V_0 - V^\pi) + V^\pi \, .
\end{align}
\end{restatable}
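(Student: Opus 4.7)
The plan is to recognize the ODE in \eqref{eq:value-function-ode} as a linear, affine, autonomous system in $\mathbb{R}^{\mathcal{X}}$, and solve it by reducing to a homogeneous linear ODE via a shift to the fixed point $V^\pi$. First, I would identify the equilibrium: setting $\partial_t V_t = 0$ in \eqref{eq:value-function-ode} yields $(I - \gamma P^\pi) V = R^\pi$, whose unique solution is $V^\pi$ since $I - \gamma P^\pi$ is invertible (its spectrum lies in the disk of radius $\gamma < 1$ centred at $1$, using that $P^\pi$ is a stochastic matrix with spectral radius $\leq 1$). Equivalently, $V^\pi$ is the unique fixed point of $T^\pi$, and $R^\pi - (I - \gamma P^\pi) V^\pi = 0$.

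Next I would perform a change of variables $U_t := V_t - V^\pi$. Differentiating and substituting \eqref{eq:value-function-ode} gives
\begin{align*}
    \partial_t U_t &= \partial_t V_t = -(I - \gamma P^\pi) V_t + R^\pi \\
                   &= -(I - \gamma P^\pi)(U_t + V^\pi) + R^\pi \\
                   &= -(I - \gamma P^\pi) U_t,
\end{align*}
where the last equality uses the equilibrium identity. This reduces the problem to a homogeneous linear ODE with constant coefficient matrix $-(I - \gamma P^\pi)$ and initial value $U_0 = V_0 - V^\pi$.

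Finally, the standard theory of linear ODEs gives the unique solution $U_t = \exp(-t(I - \gamma P^\pi)) U_0$, where the matrix exponential is defined by its usual power series. Substituting back, $V_t = U_t + V^\pi = \exp(-t(I - \gamma P^\pi))(V_0 - V^\pi) + V^\pi$, as desired. Uniqueness follows either from the Picard--Lindelöf theorem (the right-hand side of \eqref{eq:value-function-ode} is Lipschitz in $V_t$) or by noting that any other solution $\tilde V_t$ would give a second solution $\tilde U_t$ of the homogeneous equation with the same initial data, which must agree with $U_t$.

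There is no substantial obstacle here; the only mild subtlety is ensuring $I - \gamma P^\pi$ is invertible so that the equilibrium $V^\pi$ is well-defined and unique, which is immediate from $\gamma \in [0,1)$ together with $\|P^\pi\|_\infty = 1$. If one prefers to avoid this step, one can verify the claimed formula directly by differentiation: the derivative of the right-hand side of \eqref{eq:value-function-ode-solution} is $-(I - \gamma P^\pi)\exp(-t(I - \gamma P^\pi))(V_0 - V^\pi)$, which equals $-(I - \gamma P^\pi)(V_t - V^\pi) = -(I - \gamma P^\pi) V_t + R^\pi$, and the initial condition at $t=0$ reduces to $V_0$; this gives a self-contained verification without needing to appeal to the general ODE existence theory.
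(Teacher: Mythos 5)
Your proposal is correct and ultimately rests on the same argument as the paper: the paper's proof simply verifies \eqref{eq:value-function-ode-solution} by direct differentiation and invokes the Picard--Lindel\"of theorem for uniqueness, which is exactly the self-contained verification you give in your final paragraph together with your uniqueness remark. The additional constructive derivation via the shift $U_t = V_t - V^\pi$ is a fine (and more illuminating) way to arrive at the formula, but it does not change the substance of the argument.
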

We recover as a straightforward corollary the well-known result that $V_t \rightarrow V^\pi$ as $t \rightarrow \infty$, since all eigenvalues of $(I - \gamma P^\pi)$ have strictly positive real part. 

However, the solution in Equation~\eqref{eq:value-function-ode-solution} also describes the \emph{trajectory} by which $V_t$ reaches this limiting value.  Figure~\ref{fig:two-state-example} provides an illustration of this in a small MDP; the value functions accumulate along a particular affine subspace of $\mathbb{R}^\mathcal{X}$ prior to convergence.
This phenomenon can be formalized via the Grassmann distance.
\begin{assumption}\label{assume:value-function-conditions}
    $P^\pi$ is real-diagonalizable, with strictly decreasing eigenvalue sequence $1=\lambda_1 > \lambda_2 > \cdots >  \lambda_{|\mathcal{X}|}$, and corresponding right-eigenvectors $U_1, \ldots, U_{|\mathcal{X}|}$. 
\end{assumption}

\begin{restatable}{proposition}{propOneValueFunction}\label{prop:one-value-function}
    Under Assumption~\ref{assume:value-function-conditions}, and $(V_t)_{t \geq 0}$ the solution to Equation~\eqref{eq:value-function-ode}, for almost every\footnote{In the measure-theoretic sense that the set of excluded initial  conditions $V_0$ has Lebesgue measure $0$.} initial condition $V_0$, we have
    \begin{align*}
        d(\langle V_t - V^\pi \rangle, \langle U_1 \rangle) \rightarrow 0 \, .
    \end{align*}
\end{restatable}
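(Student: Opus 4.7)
The plan is to exploit the explicit solution given by Lemma~\ref{lem:ode-soln} together with the spectral decomposition of $P^\pi$ guaranteed by Assumption~\ref{assume:value-function-conditions}. First I would write $V_t - V^\pi = \exp(-t(I - \gamma P^\pi))(V_0 - V^\pi)$, and expand the initial displacement in the eigenbasis as $V_0 - V^\pi = \sum_{i=1}^{|\mathcal{X}|} c_i U_i$. Since $U_i$ is an eigenvector of $P^\pi$ with eigenvalue $\lambda_i$, it is also an eigenvector of $(I - \gamma P^\pi)$ with eigenvalue $(1 - \gamma \lambda_i)$, and therefore an eigenvector of the matrix exponential with eigenvalue $e^{-t(1-\gamma \lambda_i)}$. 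This gives the closed form
\begin{equation*}
    V_t - V^\pi \;=\; \sum_{i=1}^{|\mathcal{X}|} c_i \, e^{-t(1-\gamma \lambda_i)} U_i \, .
\end{equation*}

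Next I would identify $1 - \gamma \lambda_1 = 1 - \gamma$ as the smallest (in real part) of the exponents $\{1 - \gamma \lambda_i\}_i$, using the strict ordering $\lambda_1 > \lambda_2 > \cdots$. Rescaling by the dominant factor $e^{t(1-\gamma)}$ gives
\begin{equation*}
    e^{t(1-\gamma)}(V_t - V^\pi) \;=\; c_1 U_1 + \sum_{i \geq 2} c_i\, e^{-t\gamma(\lambda_1 - \lambda_i)} U_i \, ,
\end{equation*}
and since $\gamma(\lambda_1 - \lambda_i) > 0$ for each $i \geq 2$, every term in the sum vanishes as $t \to \infty$. Provided $c_1 \neq 0$, the rescaled vector converges to $c_1 U_1$, which spans the same line as $U_1$. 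The set of excluded initial conditions is exactly $\{V_0 : \langle V_0 - V^\pi, \tilde U_1 \rangle = 0\}$, where $\tilde U_1$ is the dual (left) eigenvector; this is a hyperplane in $\mathbb{R}^\mathcal{X}$ and hence Lebesgue-null.

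Finally I would translate pointwise convergence of unit vectors into convergence in Grassmann distance. By Definition~\ref{def:grassmann-distance}, for one-dimensional subspaces $\langle v \rangle$ and $\langle U_1 \rangle$ the distance reduces to the single principal angle $\cos^{-1}(|\langle v, U_1 \rangle| / (\|v\|\|U_1\|))$, which is a continuous function of the direction of $v$ away from the origin. Since rescaling does not change the spanned subspace, and the normalized direction of $V_t - V^\pi$ converges to $\pm U_1/\|U_1\|$, continuity of $\cos^{-1}$ yields $d(\langle V_t - V^\pi\rangle, \langle U_1 \rangle) \to 0$, completing the proof. The main subtlety, rather than any hard obstacle, is being careful that the subspace $\langle V_t - V^\pi \rangle$ remains genuinely one-dimensional for all sufficiently large $t$ (i.e.\ $V_t \neq V^\pi$), which follows immediately from $c_1 \neq 0$ and the fact that the leading coefficient $c_1 e^{-t(1-\gamma)}$ never vanishes in finite time.
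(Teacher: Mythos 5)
Your proposal is correct and follows essentially the same route as the paper's proof: expand $V_0 - V^\pi$ in the eigenbasis of $P^\pi$, observe that the $U_1$-component decays slowest so that $V_t - V^\pi = c_1 e^{-t(1-\gamma)}U_1 + o(e^{-t(1-\gamma)})$ when $c_1 \neq 0$, and then pass to Grassmann distance using the explicit one-dimensional principal-angle formula (the paper packages this last step as a separate lemma, while you argue it by continuity and rescaling, but the content is identical). Your additional remark that the excluded set is a hyperplane, hence Lebesgue-null, is a slight sharpening of the paper's "assume $\alpha_1 \neq 0$" but changes nothing of substance.
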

\begin{proof}[Proof sketch]
The full proof of this statement and those that follow can be found in Appendix~\ref{sec:proofs-rl-dynamics}. We provide a sketch as follows: note that we can write the value function trajectory $V_t$ with respect to the eigen-basis $U_1, \dots, U_{|\states|}$, where $V_0 = \sum_{i=1}^{|\states|} \alpha_0^i U_i$, as $V_t = \sum_{i=1}^{|\states|} \alpha_0^i \exp(-\lambda_i t) U_i = \sum_{i=1}^{|\states|} \alpha_t^i U_i$. This then gives the following form of the error $V_t - V^\pi$
    \begin{align}
        V_t - V^\pi &= \exp(-t(I - \gamma P^\pi)) (V_0 - V^\pi) = \sum_{i=1}^{|\mathcal{X}|} \alpha_i \exp(t(\gamma \lambda_i - 1)) U_i \\
        &= \alpha_1 \exp(t(\gamma - 1)) U_1 + o(\exp(t(\gamma \lambda_1 - 1))\label{eq:suff-grass}
    \end{align}
whenever $\lambda_2 < \lambda_1$. Via a lemma in Appendix~\ref{sec:aux-results}, we can show that this condition \eqref{eq:suff-grass} is sufficient to guarantee that $V_t - V^\pi$ converges to $U_1$ in Grassmann distance.
\end{proof}
We can observe this predicted behaviour in Figure~\ref{fig:two-state-example}, where the value function $V_t$ converges to the line $V^\pi + \alpha \mathbbm{1}$, i.e. the value function offset by a constant function under temporal difference learning dynamics. Note that $\mathbbm{1}$ is an eigenvector of any stochastic matrix, corresponding to eigenvalue $1$, the largest eigenvalue (i.e. $U_1$ for any ergodic $P^\pi$ will be the constant function).
A more general version of this statement can also be given with an ensemble of $\repdim$ value functions, which indicates that yet more information about the environment is contained in the learned collection. The proofs of these results relate to the {power method} in linear algebra.

\begin{restatable}{proposition}{propManyValueFunctions}\label{prop:many-value-functions}
    Under Assumption~\ref{assume:value-function-conditions}, and $(V^{(\repix)}_t)_{t \geq 0}$ the solution to Equation~\eqref{eq:value-function-ode} for each $\repix=1,\ldots,\repdim$, for almost every initial condition $(V_0^{(\repix)})_{\repix=1}^\repdim$, we have
    \begin{align*}
        d(\langle V^{(\repix)}_t - V^\pi \mid \repix \in [\repdim] \rangle, \langle U_{1:\repdim} \rangle) \rightarrow 0 \, .
    \end{align*}
\end{restatable}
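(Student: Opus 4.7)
The plan is to mimic the single-value-function argument (Proposition~\ref{prop:one-value-function}) but lift it to a subspace-iteration statement, by stacking the $K$ trajectories into a matrix and reading off the dominant block. Concretely, by Lemma~\ref{lem:ode-soln}, for each $k$ we have
\begin{equation*}
    V_t^{(k)} - V^\pi = \exp(-t(I - \gamma P^\pi))(V_0^{(k)} - V^\pi).
\end{equation*}
Under Assumption~\ref{assume:value-function-conditions}, write $V_0^{(k)} - V^\pi = \sum_{i=1}^{|\mathcal{X}|} \alpha_i^{(k)} U_i$, assemble $A \in \mathbb{R}^{|\mathcal{X}|\times K}$ with entries $A_{i,k} = \alpha_i^{(k)}$, and let $U = [U_1 \mid \cdots \mid U_{|\mathcal{X}|}]$ be the matrix of eigenvectors. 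Writing $D(t) = \mathrm{diag}(\exp(t(\gamma\lambda_i - 1)))_{i=1}^{|\mathcal{X}|}$, the matrix $M_t \in \mathbb{R}^{|\mathcal{X}| \times K}$ whose columns are $V_t^{(k)} - V^\pi$ satisfies $M_t = U\, D(t)\, A$.

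First I would carry out the genericity argument. Partition $U = [U_{[K]} \mid U_{[K']}]$, $D(t) = \mathrm{blockdiag}(D_K(t), D_{K'}(t))$, and $A = [A_{[K]}^\top \mid A_{[K']}^\top]^\top$, where the subscript $[K]$ selects the top $K$ rows/columns. The set of initial conditions $(V_0^{(k)})_{k=1}^K$ for which $A_{[K]}$ is singular is the zero set of a non-trivially vanishing polynomial in $(V_0^{(k)})_{k=1}^K \in (\mathbb{R}^{|\mathcal{X}|})^K$, hence of Lebesgue measure zero. On the complement we may factor
\begin{equation*}
    M_t = \bigl( U_{[K]} + U_{[K']}\, E(t) \bigr)\, D_K(t)\, A_{[K]}, \qquad E(t) = D_{K'}(t)\, A_{[K']}\, A_{[K]}^{-1}\, D_K(t)^{-1}.
\end{equation*}
Since $D_K(t) A_{[K]}$ is an invertible $K\times K$ matrix, the column space of $M_t$ equals the column space of the bracketed factor.

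Next I would exploit the spectral gap. Entrywise,
\begin{equation*}
    [E(t)]_{i,j} = \exp\bigl(t\gamma(\lambda_i - \lambda_j)\bigr)\, [A_{[K']} A_{[K]}^{-1}]_{i,j}, \qquad i > K,\ j \leq K,
\end{equation*}
and by Assumption~\ref{assume:value-function-conditions}, $\lambda_i - \lambda_j \leq \lambda_{K+1} - \lambda_K < 0$, so $\|E(t)\| \to 0$ exponentially in $t$. Hence the columns of $U_{[K]} + U_{[K']} E(t)$ converge to those of $U_{[K]}$, which span $\langle U_{1:K}\rangle$. This is precisely a subspace-iteration (power-method-for-$K$-subspaces) statement applied to $\exp(-(I - \gamma P^\pi))$.

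The main obstacle, as in Proposition~\ref{prop:one-value-function}, is turning this convergence of representatives into Grassmann-distance convergence of the column spaces. I would invoke the auxiliary lemma in Appendix~\ref{sec:aux-results} already used for the single-trajectory result, applied to the sequence of $K$-frames $U_{[K]} + U_{[K']} E(t)$: since these frames have bounded condition number for large $t$ (their columns converge to the orthonormal frame $U_{[K]}$) and their difference from $U_{[K]}$ tends to zero in operator norm, the principal angles between $\mathrm{col}(M_t) = \mathrm{col}(U_{[K]} + U_{[K']} E(t))$ and $\langle U_{1:K}\rangle$ all tend to zero, and therefore $d(\langle V_t^{(k)} - V^\pi \mid k \in [K]\rangle, \langle U_{1:K}\rangle) \to 0$. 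The minor technical point to check here is that the $U_i$ need not be orthogonal (since $P^\pi$ is not symmetric), but since they are linearly independent one can pass to a QR factorization of $U_{[K]}$ without affecting the limit subspace, and the argument goes through.
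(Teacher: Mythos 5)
Your proposal is correct and follows essentially the same route as the paper: expand each $V_0^{(\repix)} - V^\pi$ in the eigenbasis of $P^\pi$, use the spectral gap so the components along $U_{1:\repdim}$ dominate, and reduce to the subspace-convergence lemma (Lemma~\ref{lem:grassmann2}) in the appendix — your matrix factorization $M_t = (U_{[K]} + U_{[K']}E(t))D_K(t)A_{[K]}$ is exactly the row-reduction/rescaling step in that lemma's proof, written more explicitly. If anything, your version is slightly more careful than the paper's about the precise genericity condition (invertibility of the leading $K\times K$ coefficient block) and about the non-orthogonality of the $U_i$.
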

The proof of this statement is analogous to that of Proposition~\ref{prop:one-value-function} and can be found in Appendix~\ref{sec:proofs-rl-dynamics}. It depends on an eigendecomposition of the transition operator, which allows us to characterize the convergence of the value function along each eigenspace. Because the value function converges more slowly along the principal eigenspaces of $P^\pi$, most of the mass of the resulting error term will reside in these spaces as training progresses. The intermediate values of $V_t$ are therefore biased towards the principal subspaces of $P^\pi$, and in this sense give us some information about the transition structure of the environment.
\keyinsight{
Even in an environment with no reward signal at all (in which case $V^\pi = 0$), an agent performing TD learning still picks up information about the transition structure of the environment within its value function.}

Due to the importance of the vectors $U_{1:K}$ in this analysis, we introduce the term \emph{eigen-basis functions} (EBFs) to describe them. Intuitively, the eigen-basis functions $U_{1:K}$ provide a basis for the $K$-dimensional subspace to which $V_t$ converges; we illustrate two demonstrative eigen-basis functions of a random walk on a gridworld in Figure~\ref{fig:ensemble_feature0}. For any MDP $\mdp$ and any policy $\pi$, we will have that $U_1 = \mathbbm{1}$, the constant function. In general, the eigenvectors corresponding to large positive eigenvalues of $P^\pi$ will be `smooth' with respect to the transition dynamics, in the sense that they will not vary much in expected value after an application  of $P^\pi$. This property will be discussed further in Chapter~\ref{chp:gen-rl}. 

We observe that a similar analysis, indicating similar behaviour, is possible for related learning algorithms such as $n$-step temporal difference learning and TD($\lambda$); see Appendix~\ref{sec:beyond-one-step} for further details. In contrast, Monte Carlo learning dynamics correspond to the differential equation
\begin{align*}
    \partial_t V_t = (I - \gamma P^\pi)^{-1}R^\pi - V_t \, ,
\end{align*}
which has the solution
\begin{align*}
    V_t = e^{-t}(V_0 - (I - \gamma P^\pi)^{-1} R^\pi) + (I - \gamma P^\pi)^{-1} R^\pi \, .
\end{align*}
The trajectory associated with this solution simply linearly interpolates between $V_0$ and $V^\pi$, as illustrated in Figure~\ref{fig:two-state-example}, and does not pick up any additional information about the environment in the value function as learning proceeds. See Appendix~\ref{sec:beyond-one-step} for further details.
This example serves to illustrate that it is not just \emph{what} an agent learns ($V^\pi$), but \emph{how} the agent learns (the trajectory $V_t$) that plays a key, measurable role in what environment information is incorporated in its value function. The notion of information used here is characterized by the subspace in which the learned value function lies; this property naturally relates to linear function approximation in determining the set of functions which can be well-approximated using a given set of features, leading us to consider whether similar subspace convergence properties to those shown for value functions arise in an agent's representation. 


\subsection{Representation dynamics}
\label{sec:rep-dynamics}
We now seek to apply similar dynamics analysis to our model of representation learning to better understand the learning dynamics followed by an agent's representation. Recall the parameterization of $V \in \mathbb{R}^{\mathcal{X}}$ from Section~\ref{sec:reps}, taking the form
\begin{align*}
    V = \Phi \mathbf{w} \, ,
\end{align*}
for $\Phi \in \mathbb{R}^{\mathcal{X} \times \repdim}$, $\mathbf{w} \in \mathbb{R}^{\repdim}$. 
Central to deep reinforcement learning is the idea that $\Phi$ and $\mathbf{w}$ are simultaneously learned from a single RL loss. 
As in the value function case, we will focus on the dynamics with single-step temporal difference learning; remarks on other learning algorithms are given in Appendix~\ref{sec:beyond-one-step}. 
The dynamics associated with single-step TD learning are given by
\begin{align}
    \partial_t \Phi_t & = -\alpha \frac{1}{2}\nabla_{\Phi_t} \| R^\pi + \square[\gamma P^\pi \Phi_t \mathbf{w}_t] - \Phi_t \mathbf{w}_t \|^2_2 \, , \label{eq:phi-ode} \\
    \partial_t \mathbf{w}_t & = -\beta\frac{1}{2}\nabla_{\mathbf{w}_t} \| R^\pi + \square[\gamma P^\pi \Phi_t \mathbf{w}_t] - \Phi_t \mathbf{w}_t \|^2_2 \label{eq:w-ode} \, ,
\end{align}
where $\alpha, \beta \in [0, \infty)$ are learning rates, implying that features and weights may be learned at different rates.
Further, $\square[\, \cdot\, ]$ denotes a \emph{stop-gradient} on its argument, indicating that we treat the instances of $\Phi_t$ and $\mathbf{w}_t$ within the expression as constants when computing derivatives; this reflects the fact that temporal difference learning is a \emph{semi-gradient} method \citep{sutton2018reinforcement}.

The use of a single loss to learn both the representation and weights corresponds to the approach taken in deep RL, and we will use these dynamics as an idealized model of the dynamics of deep neural networks. While this model ignores some practicalities of deep RL (such as visitation distributions, implicit bias from the function approximation architecture, and stochasticity introduced by minibatch training), it allows us to obtain valuable insights into representation dynamics which, as we will see in Section \ref{sec:experiments} and in the following chapters, accurately predict the behaviour of deep RL agents. 

\begin{restatable}{lemma}{lemCoupledDynamics}
Let $\Phi_t$ and $\mathbf{w}_t$ parameterize a value function approximator as defined above. Then
\begin{align}
    \partial_t \Phi_t & = \alpha (R^\pi + \gamma P^\pi \Phi_t \mathbf{w}_t - \Phi_t \mathbf{w}_t) \mathbf{w}_t^\top \, , \label{eq:phi-flow} \\
    \partial_t \mathbf{w}_t & = \beta \Phi_t^\top(R^\pi + \gamma P^\pi \Phi_t \mathbf{w}_t - \Phi_t \mathbf{w}_t) \label{eq:w-flow} \, .
\end{align}
\end{restatable}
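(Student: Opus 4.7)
The plan is to derive each of the two ODEs directly by computing the gradient of the stop-gradient-modified squared TD error, treating every occurrence of $\Phi_t$ and $\mathbf{w}_t$ inside the $\square[\,\cdot\,]$ bracket as a constant. Concretely, I would first let $\bar{\Phi} := \Phi_t$ and $\bar{\mathbf{w}} := \mathbf{w}_t$ denote the ``frozen'' copies sitting inside the stop-gradient, and write the loss on which both flows are defined as
\begin{equation*}
L(\Phi, \mathbf{w}) \;=\; \tfrac{1}{2}\,\bigl\| R^\pi + \gamma P^\pi \bar{\Phi}\bar{\mathbf{w}} - \Phi \mathbf{w}\bigr\|_2^2 \, ,
\end{equation*}
so that the right-hand sides of \eqref{eq:phi-ode}--\eqref{eq:w-ode} become $-\alpha\nabla_\Phi L(\Phi_t,\mathbf{w}_t)$ and $-\beta\nabla_\mathbf{w} L(\Phi_t,\mathbf{w}_t)$ evaluated with $\bar{\Phi}=\Phi_t$, $\bar{\mathbf{w}}=\mathbf{w}_t$ after differentiation.

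Next I would compute the two partial derivatives using the standard matrix-calculus identity $\nabla_A \tfrac{1}{2}\|c - Aw\|_2^2 = -(c - Aw)w^\top$ for a constant vector $c$ and vector $w$, and the analogous identity $\nabla_w \tfrac{1}{2}\|c - Aw\|_2^2 = -A^\top(c - Aw)$ for a constant matrix $A$. Setting $c = R^\pi + \gamma P^\pi \bar{\Phi}\bar{\mathbf{w}}$ (a fixed vector, since $\bar{\Phi},\bar{\mathbf{w}}$ are frozen), $A = \Phi_t$, $w = \mathbf{w}_t$ in the first identity and $A = \Phi_t$, $w = \mathbf{w}_t$ in the second, and finally substituting $\bar{\Phi}=\Phi_t$, $\bar{\mathbf{w}}=\mathbf{w}_t$, gives exactly the stated expressions
\begin{align*}
\partial_t \Phi_t &= \alpha\bigl(R^\pi + \gamma P^\pi \Phi_t \mathbf{w}_t - \Phi_t \mathbf{w}_t\bigr)\mathbf{w}_t^\top \, , \\
\partial_t \mathbf{w}_t &= \beta\,\Phi_t^\top\bigl(R^\pi + \gamma P^\pi \Phi_t \mathbf{w}_t - \Phi_t \mathbf{w}_t\bigr) \, .
\end{align*}

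The calculation is essentially routine chain rule, so there is no deep obstacle; the only subtlety worth being explicit about is the bookkeeping around the stop-gradient, which is why I would introduce the auxiliary notation $\bar{\Phi},\bar{\mathbf{w}}$ before differentiating. A secondary, optional clarifying step would be to verify the matrix-gradient identities entrywise, computing $\partial L/\partial (\Phi)_{ij}$ from the expansion $\tfrac{1}{2}\sum_x (c_x - \sum_k \Phi_{xk}w_k)^2$ to confirm the outer-product form $(c - \Phi w)\mathbf{w}^\top$ without relying on tacit sign or transpose conventions.
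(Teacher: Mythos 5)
Your proposal is correct and follows the same route as the paper, which simply states that the lemma "follows immediately by computing the derivatives" and omits the calculation; your explicit bookkeeping of the stop-gradient via frozen copies $\bar{\Phi},\bar{\mathbf{w}}$ and the standard least-squares gradient identities fills in exactly the intended computation, with the signs and transposes coming out right.
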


This joint flow on $\Phi_t$ and $\mathbf{w}_t$ leads to much richer behaviour than the flow considered on value functions in the previous section. Without further assumptions, the evolution of the representation $\Phi_t$ may be complex, and will not necessarily incorporate environment information as described for the case of value functions in Proposition~\ref{prop:many-value-functions}.
In particular, in sparse-reward environments, the agent may learn to predict a near-zero value function by setting the weights $\mathbf{w}_t$ close to zero, which would effectively prevent any further updating of the features $\Phi_t$, ruling out the possibility of a result analogous to Proposition~\ref{prop:many-value-functions}. We will only be able to obtain the upcoming result of Theorem~\ref{thm:infinite-heads} by restricting the rate at which the weights evolve over the course of training.

\section{Auxiliary tasks}\label{sec:aux-dynamics} 
Having studied the temporal difference learning dynamics in Equations~\eqref{eq:phi-flow} \& \eqref{eq:w-flow}, we now examine how auxiliary value-prediction tasks influence the behaviour of the agent's representation during the learning process. 
As described previously, developing a granular description of the joint learning dynamics of the representation and weights of the learner is a complex task, and so we focus on the limiting case in which the number of auxiliary tasks is large relative to the dimensionality of the representation. We conclude that under certain conditions, representations learned in the many-task limit bear a close connection to the \emph{eigen-basis functions} described in Section~\ref{sec:value-function}, and also to the \emph{resolvent singular basis functions}, a new decomposition introduced in Section~\ref{sec:random-cumulants}. The reader may find it useful to refer to Appendix~\ref{sec:feature-selection} for a more detailed discussion of these decompositions.

\begin{table*}
\smaller
    \centering
    \begin{tabular}{c|c|c|c|c}
        \toprule
        Auxiliary task & Dynamics ($r=0$) & $\Phi_\infty$ ($r=0$)  & $\Phi_\infty$ ($r\neq 0$) & $\lim_{t \rightarrow \infty} \langle \Phi_t - \Phi_\infty \rangle$\\
        \midrule
        Ensemble & $-(I - \gamma P^\pi) \Phi_t$ & $0$ & $(I - \gamma P^\pi)^{-1} r\epsilon^\top$ & EBFs of $P^\pi$\\
        Random cumulants & $-(I - \gamma P^\pi) \Phi_t + Z_{\Sigma}$ & $\Psi Z_{\Sigma}$ &   $\Psi Z_{\Sigma}$ & EBFs of $P^\pi$\\
        Additional policies & $  -(I - \gamma P^{\overline{\pi}}) \Phi_t$ & $0$ & $ (I - \gamma P^{\overline{\pi}})^{-1} R^{\overline{\pi}} \epsilon^\top$ & EBFs of $P^{\bar{\pi}}$\\
        Multiple $\gamma$s & $-(I - \overline{\gamma} P^\pi)\Phi_t$ & $0$  & $(I - \overline{\gamma} P^{\pi})^{-1}R^{\pi} \epsilon^\top$ & EBFs of $P^{\pi}$\\
        \bottomrule
    \end{tabular}
    \caption[Summary of dynamics and limiting solutions under some common auxiliary tasks in the limit of infinitely-many prediction outputs.]{Summary of dynamics and limiting solutions under some common auxiliary tasks in the limit of infinitely-many prediction outputs. For additional policies, $\overline{\pi}$ denotes the average of the finite set of policies $\pi_1,\ldots,\pi_L$ under consideration ($L$ fixed and independent of $M$), and for multiple discount factors, $\overline{\gamma}$ denotes the average of the discount factors $\gamma_1,\ldots,\gamma_L$ under consideration. We let $\Psi = (I - \gamma P^\pi)^{-1}.$ See Section~\ref{sec:random-cumulants} and Appendix~\ref{appx:feature_theory} for additional details.
    }
    \label{tab:theory}
\end{table*}
\subsection{Ensemble value prediction}

We begin by considering the auxiliary task of \emph{ensemble value prediction} \citep{osband2016deep, anschel2017averaged, agarwal2019striving}. Rather than making a single prediction of the value function $Q^\pi$, the learner makes $M \in \mathbb{N}$ separate predictions as linear functions of a common representation $\Phi^{M}$, using $M$ independently initialized weights matrices $\mathbf{w}^{m} \in \mathbb{R}^{\repdim}$ ($m=1,\ldots,M$). We note that while at initialization $\Phi^M_0 \in \mathbb{R}^{\statespace \times d}$ is independent of $M$, its dynamics do depend on $M$ through the contribution of the weights. Simultaneous temporal difference learning on all predictions leads to the following dynamics:
\begin{align}
    \partial_t \Phi^{M}_t  \label{eq:ensemble-phi-flow}
    \!=  &   \alpha\! \sum_{m=1}^M (R^\pi\! +\! \gamma P^\pi \Phi^{M}_t \mathbf{w}_t^{m}\! -\! \Phi^{M}_t \mathbf{w}_t^{m})  (\mathbf{w}_t^{m} )^\top \, , \\
    \partial_t \mathbf{w}_t^{m} = & \beta (\Phi^{M}_t)^\top (R^\pi + \gamma P^\pi \Phi^M_t \mathbf{w}^{m}_t - \Phi_t \mathbf{w}^{m}_t ) \, . \label{eq:ensemble-w-flow}
\end{align}

The following result characterizes the representation learned by the agent in the many-tasks limit, again establishing a connection to EBFs; we follow the approach described by \citet{arora2019fine} in fixing the linear weights associated with the value function; this dramatically simplifies our analysis, while still describing practical settings in which the features and weights are trained separately as in \citet{chung2018two}. Analogous results follow when the \textit{learning rate} $\beta$ is scaled appropriately, resembling the results of \citet{jacot2018neural} and \citet{yang2021tensor}.

\begin{restatable}{theorem}{thmInfiniteHeads}\label{thm:infinite-heads}

For $M \in \mathbb{N}$, let $(\Phi^{M}_t)_{t \geq 0}$ be the solution to Equation~\eqref{eq:ensemble-phi-flow}, with each $\mathbf{w}^{m}_t$ for $m=1,\ldots,M$ initialized independently from $N(0, \sigma_M^2)$, and fixed throughout training ($\beta=0$). 
We consider two settings: first, where the learning rate $\alpha$ is scaled as $\frac{1}{M}$
and $\sigma_M^2 = 1$ for all $M$, and second where $\sigma_M^2 = \frac{1}{M}$ and the learning rate $\alpha$ is equal to $1$. These two settings yield the following dynamics, respectively:
\begin{align}
       \lim_{M \rightarrow \infty} \partial_t \Phi_t^{M} \overset{P}{=}& -(I - \gamma P^\pi )\Phi_t^{M}\quad  \text{, and } \\
       \lim_{M \rightarrow \infty} \partial_t \Phi_t^{M} \overset{D}{=}& -(I - \gamma P^\pi )\Phi_t^{M} + R^\pi \epsilon^\top \; \text{,  $\epsilon \sim \mathcal{N}(0, I)\,$.}
\end{align}
The corresponding limiting trajectories for a fixed initialization $\Phi_0 \in \mathbb{R}^{\statespace\times \repdim}$ are therefore given respectively by
\begin{align}
        \lim_{M \rightarrow \infty} \Phi_t^{M} \overset{P}{=}& \exp(-t(I - \gamma P^\pi))\Phi_0 \quad  \text{, and } \\
         \lim_{M \rightarrow \infty} \Phi_t^{M}  \overset{D}{=}& \exp(-t(I - \gamma P^\pi))(\Phi_0 - (I - \gamma P^\pi)^{-1} R^\pi \varepsilon^\top ) \nonumber \\
        & \qquad \ + (I - \gamma P^\pi)^{-1}R^\pi \varepsilon^\top  \, ,\,  \epsilon \sim \mathcal{N}(0, I)\,.
\end{align}
\end{restatable}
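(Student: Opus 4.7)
Since $\beta = 0$, each weight vector $\mathbf{w}^m_t \equiv \mathbf{w}^m$ is fixed at its initialization, so the RHS of \eqref{eq:ensemble-phi-flow} is linear in $\Phi^M_t$ with time-constant (random) coefficients. My first step is to collect the sum over heads into two random objects, namely $W_M := \sum_{m=1}^M \mathbf{w}^m (\mathbf{w}^m)^\top \in \mathbb{R}^{K\times K}$ and $v_M := \sum_{m=1}^M \mathbf{w}^m \in \mathbb{R}^{K}$. A direct rearrangement of \eqref{eq:ensemble-phi-flow} then gives the compact form
\begin{equation}\label{eq:plan-coeff-ode}
    \partial_t \Phi^M_t \;=\; -\alpha\, (I-\gamma P^\pi)\, \Phi^M_t\, W_M \;+\; \alpha\, R^\pi\, v_M^\top.
\end{equation}
This is an affine matrix ODE in $\Phi^M_t$ whose coefficients depend on $M$ only through the scalar $\alpha$ and the random matrices $W_M, v_M$.

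The second step is to take the two stated scaling limits of $(\alpha W_M, \alpha v_M^\top)$. In regime 1 ($\alpha=1/M$, $\sigma_M^2=1$) the matrices $\mathbf{w}^m(\mathbf{w}^m)^\top$ are iid with mean $I_K$ and the vectors $\mathbf{w}^m$ are iid mean-zero with finite variance, so the strong law of large numbers applied entrywise yields $\alpha W_M \to I_K$ and $\alpha v_M \to 0$ in probability. In regime 2 ($\alpha=1$, $\sigma_M^2=1/M$) we have $\mathbb{E}[W_M]=I_K$ with entrywise variance of order $1/M$, giving $W_M \xrightarrow{P} I_K$; meanwhile $v_M$ is an exact sum of $M$ independent $\mathcal{N}(0, M^{-1} I_K)$ variables, hence $v_M \sim \mathcal{N}(0, I_K)$ for every $M$, and I may identify its limit with $\epsilon \sim \mathcal{N}(0, I_K)$. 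Plugging these into \eqref{eq:plan-coeff-ode} gives the two limiting vector fields stated in the theorem.

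The third step is to upgrade convergence of the coefficients to convergence of the trajectories. The solution map $(W, v) \mapsto \Phi_{\bullet}$ of an affine linear ODE of the form \eqref{eq:plan-coeff-ode} with fixed initial condition $\Phi_0$ is continuous on every compact time interval $[0,T]$ (e.g.\ in the sup-norm), as can be seen either by explicit integration via vectorization $\partial_t \mathrm{vec}(\Phi) = -\alpha(W_M^\top \otimes (I-\gamma P^\pi))\mathrm{vec}(\Phi) + \alpha\,\mathrm{vec}(R^\pi v_M^\top)$, or by a Gronwall argument on the ODE directly. Applying the continuous mapping theorem therefore converts the convergence of $(\alpha W_M, \alpha v_M)$ in probability (regime 1) or in distribution (regime 2) into the stated convergence of $\Phi^M_t$.

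The final step is to solve each limiting ODE in closed form. Regime 1 gives the homogeneous linear ODE $\partial_t \Phi_t = -(I-\gamma P^\pi)\Phi_t$, whose solution is $\exp(-t(I-\gamma P^\pi))\Phi_0$ since $(I-\gamma P^\pi)$ acts on the left. Regime 2 gives an affine ODE with fixed point $\Phi^\star = (I-\gamma P^\pi)^{-1}R^\pi \epsilon^\top$; writing $\Psi_t := \Phi_t - \Phi^\star$ reduces it to the homogeneous case, yielding the displayed trajectory. The most delicate point in the argument is the third step, because we must verify that the (random) coefficient convergence propagates uniformly on $[0,T]$ so that the continuous mapping theorem applies at the level of trajectories rather than just pointwise at the level of vector fields; this is where the linearity of \eqref{eq:plan-coeff-ode} and the time-independence of $W_M$, $v_M$ (guaranteed by $\beta=0$) are essential.
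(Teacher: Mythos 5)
Your proposal is correct and follows essentially the same route as the paper: you isolate the time-constant random coefficients $\sum_m \mathbf{w}^m(\mathbf{w}^m)^\top$ and $\sum_m \mathbf{w}^m$, apply the law of large numbers (resp.\ the exact Gaussianity of the sum under the $1/M$-scaled initialization) to identify their limits as $I$ and $0$ (resp.\ $\epsilon\sim\mathcal{N}(0,I)$), and then pass the limit through the linear/affine solution map to obtain the closed-form trajectories, which is precisely the content of the paper's Lemma on $\sum_m \mathbf{w}^m(\mathbf{w}^m)^\top$ and its continuity-of-$\exp(t\mathcal{L}^M)$ argument. Your explicit continuous-mapping/Gronwall step for the affine (nonzero-reward) case is a slightly more careful rendering of the trajectory-convergence argument the paper only spells out for the homogeneous zero-reward case, but it is the same underlying idea rather than a different proof.
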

\begin{proof}[Proof sketch]
The key step in the proof of this result is to show that the limiting dynamics of $\phi^M_t$ depend on the matrix of outer products $\sum_{m=1}^M \bw_m \bw_m^\top$, which under the conditions provided in the theorem statement can be shown to converge to the identity matrix. In the case of scaled initializations, we obtain the following, though the derivation in the case of scaled learning rates is similar.
\begin{align}
    \partial_t \Phi^M_t &= (I - \gamma P^\pi)\Phi_t^M \sum_{m=1}^M \mathbf{w}^m (\mathbf{w}^m)^\top + \sum_{m=1}^M R^{\pi} (\mathbf{w}^m)^\top \\
    \lim_{M \rightarrow \infty} \partial_t \Phi^M_t &= (I - \gamma P^\pi)\Phi_t^M \lim_{M \rightarrow \infty}\sum_{m=1}^M \mathbf{w}^m (\mathbf{w}^m)^\top  + \lim_{M \rightarrow \infty} R^\pi(\sum_{m=1}^M \mathbf{w}^m)^\top \\
    &\overset{D}{=} (I - \gamma P^\pi )\Phi_t^M I + R^\pi \epsilon^\top, \; \epsilon \sim \mathcal{N}(0, I).
\end{align}
In the case of zero rewards, we then get analogous dynamics on $\Phi_t^M$ as we did on the value functions in Proposition~\ref{prop:many-value-functions}.
\end{proof}

In contrast to the case described in Section~\ref{sec:rep-dynamics}, this result indicates that the introduction of auxiliary tasks leads to useful environment information being incorporated into the representation.
Indeed, the dynamics described above imply the following convergence result, analogous to Proposition~\ref{prop:many-value-functions}.

\begin{restatable}{corollary}{propSubspaceConvergence}\label{prop:subspaceconvergence}
Under the feature flow \eqref{eq:ensemble-phi-flow} with $\mathbf{w}^m_t$ fixed at initialization for each $i = 1, \dots, M$, under either the scaled initialization or scaled learning rate assumption of Theorem~\ref{thm:infinite-heads}, and under Assumption~\ref{assume:value-function-conditions}, for almost all initializations $\Phi_0$, we have when $R^\pi = 0$
    \begin{align*}
        d(\langle \Phi_t \rangle, \langle U_{1:\repdim} \rangle) \rightarrow 0 \, ,\quad\text{as } t \rightarrow \infty.
    \end{align*}
\end{restatable}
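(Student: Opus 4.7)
The plan is to reduce the statement to an application of Proposition~\ref{prop:many-value-functions}. By Theorem~\ref{thm:infinite-heads}, in the zero-reward case $R^\pi=0$, under either scaling assumption the limiting feature trajectory takes the closed form
\begin{equation*}
\Phi_t \;=\; \exp\bigl(-t(I-\gamma P^\pi)\bigr)\,\Phi_0 \, .
\end{equation*}
Crucially, this is exactly the solution of the value-function ODE in Equation~\eqref{eq:value-function-ode} with $R^\pi = 0$ (so $V^\pi=0$) applied column-by-column to $\Phi_0$. Hence the $k$\textsuperscript{th} column of $\Phi_t$ is precisely the TD trajectory $V^{(k)}_t$ starting from the $k$\textsuperscript{th} column of $\Phi_0$ in the zero-reward MDP.

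Having made this identification, I would invoke Proposition~\ref{prop:many-value-functions} directly. That proposition states that for almost every initialization $(V_0^{(k)})_{k=1}^K$, the subspace $\langle V_t^{(k)} - V^\pi \mid k \in [K]\rangle$ converges in Grassmann distance to $\langle U_{1:K}\rangle$. Specializing to $V^\pi = 0$ and reading the columns of $\Phi_0$ as the initial conditions $V_0^{(k)}$, this gives exactly
\begin{equation*}
d\bigl(\langle \Phi_t\rangle,\, \langle U_{1:K}\rangle\bigr) \;\longrightarrow\; 0 \qquad \text{as } t\to\infty \, ,
\end{equation*}
for almost every $\Phi_0 \in \mathbb{R}^{\statespace \times K}$, which is the desired conclusion.

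The only mild subtlety is justifying the "almost every initialization" clause in the present setting. The null set excluded in Proposition~\ref{prop:many-value-functions} is the set of $\Phi_0$ whose projection onto the top-$K$ eigenspace $\langle U_{1:K}\rangle$ fails to have rank $K$; this is cut out by the vanishing of a single polynomial (a $K\times K$ minor in the eigen-coordinates), hence has Lebesgue measure zero in $\mathbb{R}^{\statespace\times K}$. Since $\Phi_0$ here is the feature initialization of the deep network (independent of the auxiliary-head construction used to derive the limiting ODE), the same genericity argument applies verbatim.

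I anticipate no substantive obstacle: the content of the corollary is essentially just a repackaging of Proposition~\ref{prop:many-value-functions} after Theorem~\ref{thm:infinite-heads} has eliminated the nonlinear coupling between $\Phi_t$ and $\mathbf{w}_t$. The only thing worth being careful about is that Theorem~\ref{thm:infinite-heads} in the scaled-learning-rate case yields convergence in probability (and in the scaled-initialization case, convergence in distribution of $\partial_t \Phi_t^M$) rather than almost-sure convergence of $\Phi^M_t$ itself; however, since the corollary's conclusion is a deterministic statement about the limiting dynamical system, once one passes to the limiting trajectory the Grassmann-distance claim follows for every sample path where $\mathbf{w}^m$ satisfies the large-$M$ concentration, and in particular holds along the limit.
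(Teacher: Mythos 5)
Your proposal is correct and takes essentially the same route as the paper: the paper's proof likewise uses the closed-form limiting trajectory $\Phi_t = \exp(-t(I-\gamma P^\pi))(\Phi_0 - \Phi_\infty) + \Phi_\infty$ from Theorem~\ref{thm:infinite-heads} (with $\Phi_\infty = 0$ when $R^\pi = 0$) and then applies the column-wise eigenbasis argument of Proposition~\ref{prop:many-value-functions} together with Lemma~\ref{lem:grassmann2}. Your additional remarks on the measure-zero exceptional set and the mode of convergence in the $M\to\infty$ limit are sound and, if anything, slightly more careful than the paper's one-line appendix proof.
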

The proof of this result follows straightforwardly from Theorem~\ref{thm:infinite-heads} and the proof of the convergence of value functions to the subspace spanned by $U_1, \dots, U_k$, and can be found in Appendix~\ref{sec:proofs-rl-dynamics}.
\keyinsight{Under the conditions of Theorem~\ref{thm:infinite-heads} and Corollary~\ref{prop:subspaceconvergence}, the ensemble auxiliary tasks cause the agent's representation $\Phi$ to align with EBFs.}

We show in Appendix \ref{sec:ensemble-dynamics} that this behaviour is observed in practice when $M \gg \repdim$ and the value of $\mathbf{w}^m_t$ is fixed at initialization for all $m$. We additionally compare the representations learned when $\mathbf{w}^m_t$ is allowed to vary over training. Here we find empirically that allowing the weights to vary during training induces dynamics that differ from those predicted by Theorem~\ref{thm:infinite-heads} for the fixed-weights setting.
To illustrate this, we follow the evolution of a single column of $\Phi_t$, i.e. a single feature vector $\phi_t = \Phi_t[ \;  : \,, 1 \; ]$, trained with the ensemble prediction dynamics of Equations~\eqref{eq:ensemble-phi-flow} \& \eqref{eq:ensemble-w-flow} on a simple four-rooms gridworld environment in
Figure~\ref{fig:feature-viz}.

\begin{figure}
    \centering
    \includegraphics[width=0.48\textwidth]{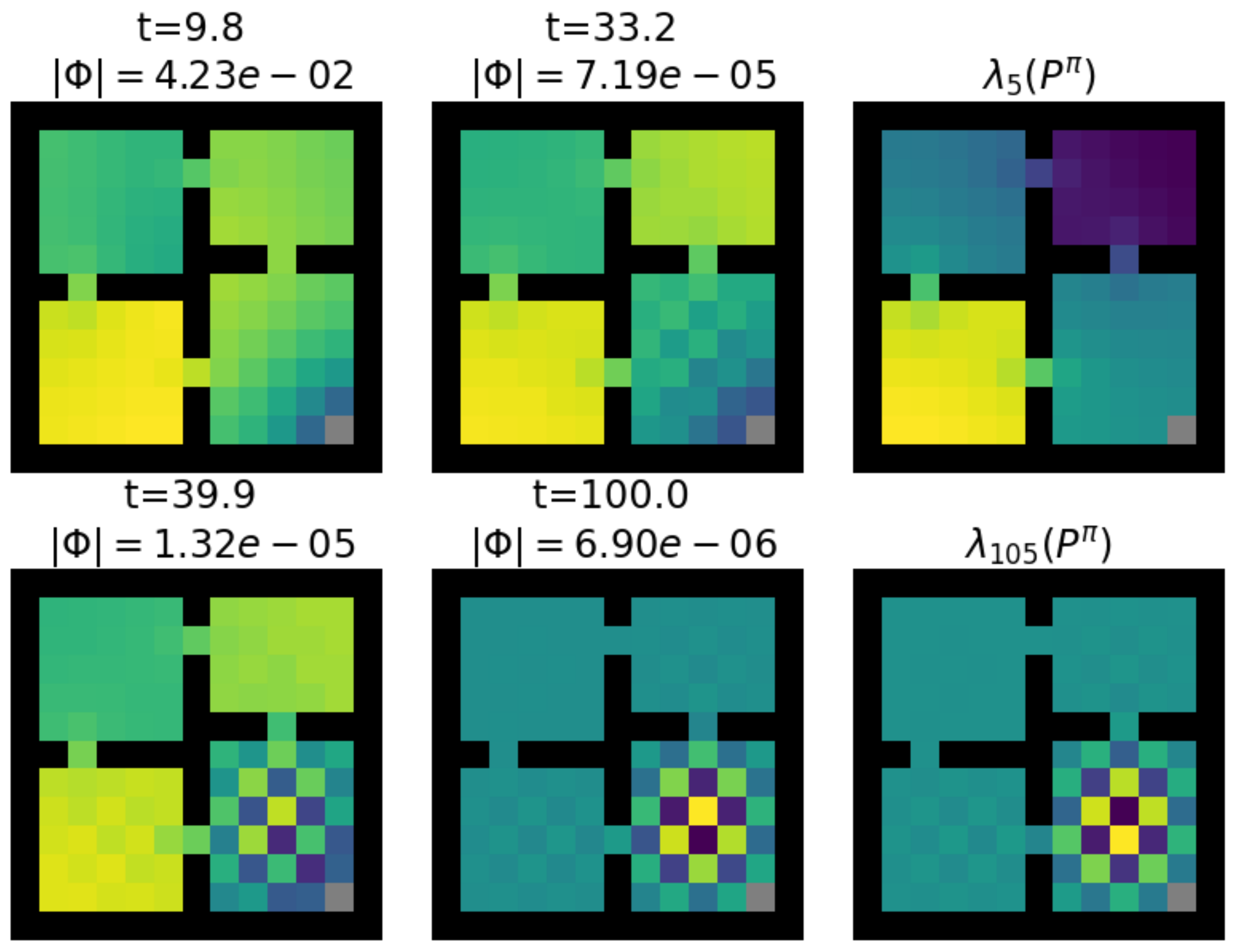}
    \caption[Study of feature dynamics.]{Visualization of a single column of $\Phi_t$ (i.e. feature vector)  after application of the ODE in Equation~\eqref{eq:phi-flow} for $t\in[0,100]$ in the four rooms environment, with $\repdim=10$ and $M = 20$. Early in its trajectory, $\phi_t$ exhibits similarity to smooth eigenfunctions (e.g. the eigenfunction corresponding to the 5$^{th}$ greatest eigenvalue $\lambda_5$ which we plot in the top right) of $P^\pi$, but later converges to non-smooth eigenfunctions (e.g. the eigenfunction corresponding to eigenvalue $\lambda_{105}$, the most negative eigenvalue, plotted in the bottom right).
    }
    \label{fig:feature-viz}
\end{figure}
 
We visualize $\phi_t$ along with two illustrative eigenvectors of the transition matrix $P^\pi$, corresponding to one positive and one negative eigenvalue. 
We observe that while the feature $\phi_t$ quickly evolves to resemble the smooth eigenvector corresponding to the positive eigenvalue for small values of $t$, it later converges to the non-smooth eigenvector corresponding to the most negative eigenvalue of the transition matrix $P^\pi$. While we leave further analysis to future work, this example hints at an intriguing relationship between the EBFs and the joint representation dynamics.

\subsection{Random cumulants}\label{sec:random-cumulants}

In the case of zero rewards, our previous results show that while from the perspective of subspaces the representation approaches the EBF subspace in Grassmann distance, in Euclidean distance the representation is approaching the zero matrix pointwise. This has important implications for the scenario of large-scale sparse-reward environments, in which the agent may not encounter rewards for long periods of time, and indicates that the agent's representation is at risk of collapsing in such cases.

Motivated by this analysis, we consider a means of alleviating this representation collapse, by learning value functions for \emph{randomly generated cumulants} \citep{osband2018randomized,dabney2020value}. Mathematically, the agent again makes many predictions from a common representation, with each prediction indexed by $m=1,\ldots,M$ attempting to learn the value function associated with a randomly drawn reward function $r^m \in \mathbb{R}^{\statespace}$ under the policy $\pi$. Thus, the agent's parameters are the representation $\Phi$ and a set of weights $\mathbf{w}^m$ for each prediction. The learning dynamics are then given by:
\begin{align}
    \partial_t \Phi^{M}_t  \label{eq:rc-phi-flow}
    \!=  &  \alpha \sum_{m=1}^M (r^m\!\! +\!\! \gamma P^\pi \Phi^{M}_t \mathbf{w}_t^{m}\!\! -\!\! \Phi^{M}_t \mathbf{w}_t^{m})  (\mathbf{w}_t^{m} )^\top  , \\
    \partial_t \mathbf{w}_t^{m} = & \beta (\Phi^{M}_t)^\top (r^m + \gamma P^\pi \Phi^M_t \mathbf{w}^{m}_t - \Phi_t \mathbf{w}^{m}_t ) \, . \label{eq:rc-w-flow}
\end{align}

The main result of this section is to show that, even in the absence of reward, the limiting distribution induced by random cumulant auxiliary tasks dynamics described in Equation~\eqref{eq:rc-phi-flow} is a non-zero subspace which, while dependent on the random reward function, is biased towards principal components of the transition matrix. 

\begin{restatable}{theorem}{ThmDistribution}\label{thm:distribution}
For fixed $M \in \mathbb{N}$, let the random rewards $(r^m)_{m=1}^M$ and weights $(\mathbf{w}^m)_{m=1}^M$ be as defined above, let $\alpha=1$, and consider the representation dynamics in Equation~\eqref{eq:rc-phi-flow}, with weights fixed throughout training ($\beta=0$). Let $\Sigma$ denote the covariance matrix of the random cumulant distribution. Then 
\begin{align}
    &\lim_{M \rightarrow \infty} \sum_{m=1}^M r^m   (\mathbf{w}^m)^\top \overset{D}{=} Z_\Sigma \sim \mathcal{N}(0, \Sigma), \text{ and } \nonumber \\
   & \lim_{M \rightarrow \infty} \Phi^M_t \overset{D}{=}  \exp(-t(I - \gamma P^\pi ))(\Phi_0 - (I - \gamma P^\pi)^{-1} Z_\Sigma) \nonumber \\
    & \qquad\qquad+ (I - \gamma P^\pi)^{-1} Z_\Sigma \nonumber \;.
\end{align}

As the columns of $Z_\Sigma$ are mean-zero, uncorrelated, with covariance matrix $\Sigma$, the limiting distribution of each column of $\Phi_\infty = \lim_{t \rightarrow \infty} \lim_{M \rightarrow \infty} \Phi^M_t$ has covariance $\Psi\Sigma \Psi^\top$, where $\Psi$ is the resolvent $(I - \gamma P^\pi)^{-1}$.
\end{restatable}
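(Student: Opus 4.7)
The plan is to follow the same template as the proof sketch for Theorem~\ref{thm:infinite-heads}, adapted to accommodate the random cumulant forcing term. First I would substitute $\beta = 0$ into the dynamics of Equations~\eqref{eq:rc-phi-flow}--\eqref{eq:rc-w-flow} and rearrange to obtain the autonomous linear ODE
\begin{equation*}
    \partial_t \Phi_t^M = -(I - \gamma P^\pi)\Phi_t^M \Big(\sum_{m=1}^M \mathbf{w}^m (\mathbf{w}^m)^\top\Big) + \sum_{m=1}^M r^m (\mathbf{w}^m)^\top \, .
\end{equation*}
The strategy is then to pass to the $M\to\infty$ limit of each of the two $M$-dependent sums separately, interpret the resulting equation as an affine ODE driven by a random constant, and invoke the solution formula used in Lemma~\ref{lem:ode-soln}.

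For the first sum, I would re-use exactly the argument from Theorem~\ref{thm:infinite-heads}: under either scaling choice (variance $1/M$ on the weights with unit learning rate, or unit weight variance with learning rate $1/M$), $\sum_m \mathbf{w}^m(\mathbf{w}^m)^\top \overset{P}{\to} I$ by the law of large numbers for outer products of i.i.d.\ mean-zero isotropic vectors. For the second sum I would apply a multivariate central limit theorem: the $k$-th column is $\sum_{m=1}^M w^m_k r^m$, a sum of independent mean-zero vectors. Using independence of $\mathbf{w}^m$ and $r^m$ together with isotropy of the weights, the covariance of the $k$-th column is $\operatorname{Var}(w^m_k)\,\Sigma$, and cross-column covariances vanish because the $w^m_k$ are uncorrelated across $k$. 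Collecting columns, $\sum_m r^m (\mathbf{w}^m)^\top \overset{D}{\to} Z_\Sigma$ where $Z_\Sigma$ has i.i.d.\ $\mathcal{N}(0,\Sigma)$ columns, establishing the first displayed claim.

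Having identified the two limits, I would apply Slutsky's theorem to combine the convergence-in-probability of the weight Gram matrix with the convergence-in-distribution of the forcing term to obtain a joint limit. The limiting ODE $\partial_t \Phi_t = -(I-\gamma P^\pi)\Phi_t + Z_\Sigma$ is precisely of the form solved in Lemma~\ref{lem:ode-soln} (applied column-wise), yielding the claimed closed-form trajectory
\begin{equation*}
    \Phi_t = \exp(-t(I - \gamma P^\pi))(\Phi_0 - \Psi Z_\Sigma) + \Psi Z_\Sigma \, ,
\end{equation*}
where $\Psi = (I - \gamma P^\pi)^{-1}$. Sending $t \to \infty$ and using that the eigenvalues of $I - \gamma P^\pi$ all have strictly positive real part (as in the proof of Proposition~\ref{prop:one-value-function}) annihilates the transient term, leaving $\Phi_\infty = \Psi Z_\Sigma$. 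Finally, since each column of $Z_\Sigma$ is $\mathcal{N}(0,\Sigma)$, linearity gives that each column of $\Psi Z_\Sigma$ is $\mathcal{N}(0, \Psi \Sigma \Psi^\top)$, proving the covariance claim.

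The main obstacle I anticipate is rigorously justifying the interchange of the $M \to \infty$ limit with the flow of the ODE: one needs continuity of the solution map with respect to the coefficient matrix and forcing term in a mode of convergence compatible with the joint probabilistic limit. I would handle this by noting that on any finite time interval $[0,T]$ the solution depends continuously (indeed analytically) on the finite-dimensional data $(\sum_m \mathbf{w}^m(\mathbf{w}^m)^\top, \sum_m r^m(\mathbf{w}^m)^\top)$, so continuous mapping plus Slutsky transfers the joint convergence to convergence in distribution of $\Phi_t^M$ for each fixed $t$; then taking $t \to \infty$ uses the exponential contraction bound on $\exp(-t(I - \gamma P^\pi))$, which is deterministic and independent of $M$, to conclude.
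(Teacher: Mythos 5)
Your proposal is correct and follows essentially the same route as the paper: the paper likewise sets $\beta=0$, shows $\sum_m \mathbf{w}^m(\mathbf{w}^m)^\top \to I$ in probability (its Lemma~\ref{lem:w-limit}) and $\sum_m r^m(\mathbf{w}^m)^\top \to Z_\Sigma$ via the central limit theorem (its Lemma~\ref{lem:reward-matrix}), then solves the resulting affine ODE as in Theorem~\ref{thm:infinite-heads} and applies the linear-transformation-of-Gaussians property to get the $\Psi\Sigma\Psi^\top$ covariance. Your explicit treatment of the limit interchange via continuity of the solution map on finite time intervals is, if anything, more careful than the paper's terse appendix argument.
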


The proof of this result is obtained via a similar argument as in the case of Theorem~\ref{thm:infinite-heads}, though care must be taken to account for the non-zero and non-uniform rewards, and can be found in Appendix~\ref{sec:proofs-rl-dynamics}. Importantly, while $Z_\Sigma$ may have zero \textit{expected value} over its initialization distribution, with probability one it will not be zero. As a result, we obtain convergence to a limiting representation $\Phi_\infty$ whose value depends on the randomly initialized cumulants and the transition structure of the environment. This limiting value will thus be biased towards the lower-index eigen-basis functions, but will also have non-zero dot product with higher-index EBFs, in contrast to our results in the zero-reward setting. However, while this limiting value might result in non-zero mass assigned to lower EBFs, the \textit{error term} of $\Phi_\infty - \Phi^M_t$ will inherit similar convergence properties as in the ensemble prediction setting, becoming dominated by the EBFs over time.

\begin{restatable}{corollary}{propSubspaceConvergenceRC}\label{prop:subspaceconvergence-rc}
Under the feature flow \eqref{eq:rc-phi-flow} with $\mathbf{w}^m_t$ fixed at initialization for each $i = 1, \dots, M$ and Assumption~\ref{assume:value-function-conditions}, for almost all initializations $\Phi_0$, we have when $R^\pi = 0$
    \begin{align*}
        d(\lim_{M \rightarrow \infty} \langle \Phi^M_t - \Phi_\infty \rangle, \langle U_{1:\repdim} \rangle) \rightarrow 0 \, ,\quad\text{as } t \rightarrow \infty.
    \end{align*}
\end{restatable}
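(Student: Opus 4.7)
The plan is to reduce the statement to the same eigen-decomposition argument already used for Propositions~\ref{prop:one-value-function} and~\ref{prop:many-value-functions}, after first using Theorem~\ref{thm:distribution} to write the ``centred'' trajectory $\lim_{M\to\infty}\Phi^M_t-\Phi_\infty$ in a form identical to the value-function flow in \eqref{eq:value-function-ode-solution}. Concretely, Theorem~\ref{thm:distribution} gives
\[
\lim_{M\to\infty}\Phi^M_t-\Phi_\infty \;\overset{D}{=}\; \exp\!\bigl(-t(I-\gamma P^\pi)\bigr)\bigl(\Phi_0-(I-\gamma P^\pi)^{-1}Z_\Sigma\bigr),
\]
so the problem reduces to showing that the column space of $\exp(-t(I-\gamma P^\pi))\widetilde\Phi_0$, with $\widetilde\Phi_0:=\Phi_0-(I-\gamma P^\pi)^{-1}Z_\Sigma$, converges in Grassmann distance to $\langle U_{1:K}\rangle$ for almost every realization of $\widetilde\Phi_0$.

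Next, I would expand each column of $\widetilde\Phi_0$ in the eigen-basis $U_1,\ldots,U_{|\mathcal{X}|}$ of $P^\pi$ guaranteed by Assumption~\ref{assume:value-function-conditions}, writing $\widetilde\Phi_0=\sum_{i=1}^{|\mathcal{X}|} U_i a_i^\top$ for coefficient row-vectors $a_i\in\mathbb{R}^K$. Applying $\exp(-t(I-\gamma P^\pi))$ acts diagonally in this basis, giving
\[
\exp\!\bigl(-t(I-\gamma P^\pi)\bigr)\widetilde\Phi_0 \;=\; \sum_{i=1}^{|\mathcal{X}|} e^{t(\gamma\lambda_i-1)}\,U_i\,a_i^\top.
\]
Because the $\lambda_i$ are strictly decreasing, the first $K$ terms dominate asymptotically at rates $e^{t(\gamma\lambda_i-1)}$, $i\le K$, which are all strictly larger than $e^{t(\gamma\lambda_{K+1}-1)}$. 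Hence, writing $A_{1:K}\in\mathbb{R}^{K\times K}$ for the matrix with rows $a_1,\ldots,a_K$, the same auxiliary subspace-convergence lemma invoked in the proof sketch of Proposition~\ref{prop:one-value-function} (and used for the many-head case in Proposition~\ref{prop:many-value-functions}) yields
\[
d\!\Bigl(\bigl\langle\exp(-t(I-\gamma P^\pi))\widetilde\Phi_0\bigr\rangle,\;\langle U_{1:K}\rangle\Bigr)\longrightarrow 0
\]
whenever $A_{1:K}$ is invertible.

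It therefore remains to verify the ``almost every'' clause: the set of $\widetilde\Phi_0$ for which $A_{1:K}$ is singular is a proper algebraic subvariety of $\mathbb{R}^{|\mathcal{X}|\times K}$, hence has Lebesgue measure zero. Since $\widetilde\Phi_0=\Phi_0-(I-\gamma P^\pi)^{-1}Z_\Sigma$ is an affine translate of $\Phi_0$ by a random matrix that is independent of $\Phi_0$, Fubini's theorem gives that for almost every $\Phi_0$ the non-degeneracy condition on $A_{1:K}$ holds with probability one over $Z_\Sigma$ (and conversely for almost every $Z_\Sigma$ over $\Phi_0$). This delivers the desired subspace convergence.

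The main obstacle I anticipate is purely bookkeeping: justifying that the ``almost all initializations'' clause transfers cleanly across the random shift by $(I-\gamma P^\pi)^{-1}Z_\Sigma$, and handling the fact that $\lambda_1=1$ so the dominant mode does not decay to zero in Euclidean norm (this is harmless for Grassmann convergence, which depends only on the relative decay rates, but worth stating explicitly). Once those points are handled, the rest of the argument is a direct reuse of the power-method style reasoning already established in Section~\ref{sec:value-function}.
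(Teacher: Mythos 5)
Your proposal is correct and follows essentially the same route as the paper: invoke Theorem~\ref{thm:distribution} to write the centred trajectory as $\exp(-t(I-\gamma P^\pi))\bigl(\Phi_0-(I-\gamma P^\pi)^{-1}Z_\Sigma\bigr)$, expand its columns in the eigen-basis of $P^\pi$, and apply the subspace-convergence lemma (Lemma~\ref{lem:grassmann2}) exactly as in Proposition~\ref{prop:many-value-functions}. Your extra remarks on transferring the ``almost all'' clause across the random shift and on $\lambda_1=1$ being harmless for Grassmann convergence are sensible elaborations of details the paper leaves implicit, not a different argument.
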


Theorem~\ref{thm:distribution} indicates that the left-singular vectors of $\Sigma^{1/2}\Psi$ (or equivalently, the right-eigenvectors of $\Psi \Sigma \Psi^\top$) are key to understanding the effects of random cumulants on representations; we introduce the term \emph{resolvent singular basis functions} (RSBFs) to refer to these vectors in the canonical case $\Sigma = I$. 
\vspace{2mm}

\keyinsight{
With random cumulant auxiliary tasks, under the assumptions of Theorem~\ref{thm:distribution} and Corollary~\ref{prop:subspaceconvergence-rc}, the distribution of the limiting representation does not collapse, and is characterized by the RSBFs of $P^\pi$, while the trajectory it follows to reach this subspace is determined by the EBFs of $P^\pi$. }

These decompositions of $P^\pi$ bear deep connections to prior work on feature learning. EBFs correspond to the eigendecomposition of the successor representation, which can be explicitly related to the proto-value functions described by \citet{mahadevan2009learning} when the transition matrix $P^\pi$ corresponds to that of a random walk policy \citep{machado2017eigenoption}. For symmetric $P^\pi$ we obtain an additional correspondence between EBFs and RSBFs, though we note that when $P^\pi$ is not symmetric the RSBFs may differ from both the EBFs and the singular value decomposition of the transition matrix $P^\pi$. We provide further discussion of RSBFs and comparisons against existing concepts in feature selection in Appendix~\ref{sec:feature-selection}.

The subspace given by the RSBFs has a further appealing connection to General Value Functions (GVFs). We provide additional details in Appendix~\ref{sec:bayes-opt}, where we show that RSBFs can be viewed as Bayes-optimal features in the sense that they minimize the expected value function approximation error given an isotropic Gaussian prior on an unknown reward function.

\subsection{Analysis of additional auxiliary tasks}

The infinite-task limit simplifies the analysis of a broad range of auxiliary tasks, and analogous results to Theorem \ref{thm:infinite-heads} can be easily derived for other tasks that involve value prediction for some discount factor and policy. We provide a summary of these results in Table \ref{tab:theory}, including their full statements and derivations in Appendix~\ref{apx:table-results}.
We consider two additional classes of auxiliary task: predicting the values of multiple policies \citep{dabney2020value}, and predicting the value function of the current policy under multiple discount factors \citep{fedus2019hyperbolic}.

Under the \textbf{multiple policies} auxiliary task, the agent's objective is to learn a set of value functions $V^1, \dots, V^M$ such that $V^i(x) = \mathbb{E}_{\pi_i}[R^{\pi_i}(x) + \gamma P^{\pi_i}V^i(x)]$. 
We consider an ensemble prediction variant of this objective, where given a fixed set of $k$ policies, we train an ensemble of $M$ predictors $V^{1,1}, \dots, V^{m, 1}, \dots, V^{1,k}, \dots, V^{m, k}$, where $m=\frac{M}{k}$ and the value function $V^{i, j}$ is trained on policy $\pi_j$. 

Under the \textbf{multiple discount factors} auxiliary task, the agent's objective is analogously to find $V^i(x) = \mathbb{E}_{\pi_i}[R^{\pi}(x) + \gamma_i P^{\pi}V^i(x)]$ for $\gamma_i \in \gamma_1, \dots, \gamma_k$. As with the multiple policies auxiliary task, we assign to each discount factor objective multiple prediction heads $V^{1,1}, \dots, V^{m, 1}, \dots, V^{1,k}, \dots, V^{m, k}$, where $m=\frac{M}{k}$ and the value function $V^{i, j}$ is trained on discount factor $\gamma_j$. 

In both cases, under the conditions of the previous theorems, the dynamics of the ensemble converge to the dynamics induced by the mean of the set of auxiliary tasks, implying the counter-intuitive result that training with multiple auxiliary tasks does not provide additional utility over the single task setting. This apparent shortcoming stems from the large number of auxiliary prediction objectives all operating on the same `feature subspace' of $\mathbb{R}^d$, essentially forcing any dimension of that subspace to fit the mean target value. It can be modified to more closely resemble practical settings by ensuring that the weights corresponding to each auxiliary task $\pi_i$ or $\gamma_i$ are initialized in \textit{orthogonal subspaces}, so that the vector space $V$ in which the representation evolves can be decomposed as the direct sum of the subspaces $V_i$ associated with each task $i$, $V = \oplus_{i\in[1, k]} V_i$. In this case, we obtain an analogous decomposition of the representation $\Phi$ and its corresponding dynamics, obtaining convergence to a direct sum of the limiting representation of each task. This suggests that the benefits of auxiliary tasks might be maximized by appropriate initialization schemes which encourage the representations learned for each task to be (linearly) independent.

\subsection{Dynamics in the infinite-width limit}

So far, the analysis of this chapter has studied an idealized feature-learning model where the features are represented by an extremely large matrix, and for which an update to the feature vector of one state will not influence the features of other states. This simplifies our analysis, but diverges from the function approximation regimes seen in practice, where a neural network is often used to construct the feature representation. We take a step towards the deep RL setting by considering a limiting case of DNN function approximation.

Recent results on neural networks in the limit of infinite width reveal that their learning dynamics are determined by a kernel, denoted the \emph{neural tangent kernel} (NTK) \citep{jacot2018neural}, described in more detail in Section~\ref{sec:bkgd-trajectory}. This allows us to apply the analysis of the previous sections to the infinite-width limit of neural networks, for which we obtain the following dynamics.
\begin{equation}
    \partial_t V_t = \Phi_{\Theta^{(L)}_\infty} (\gamma P^\pi - I)V_t \; .
\end{equation}
We now see that the kernel matrix $\Phi_{\Theta^{(L)}}$ influences the trajectory of the value function, rather than solely $P^\pi$ as previously. We can now directly quantify how the auxiliary losses influence the trajectory of the value function in the infinite-width regime, as we describe in the following theorem.
\begin{theorem}\label{thm:ntk-dynamics}
Let $f = (f^1, \dots, f^M): \statespace \rightarrow \mathbb{R}^M$ be computed by an $L$-layer neural network with layer widths $n_1, \dots, n_L=M$, parameters $\theta$, and Lipschitz nonlinearity $\sigma$. Let $f^1_t$ follow the Bellman-error minimizing flow of Eq \eqref{eq:value-function-ode}, and let the joint loss $\delta$ be of the form 
\begin{equation*}\delta(f) = \delta_1(f^1) + \delta_2(f^2, \dots, f^k)\;.
\end{equation*}
Under the conditions of Theorem 2 of \citet{jacot2018neural}, with the limiting NTK matrix $\Phi_{\Theta^{(L)}_\infty}$ defined therein, the dynamics $\partial_t f^1_{t}$ are independent of the values of $f^{2}_{t}, \dots, f^{k}_{t}$. Thus, for any set of auxiliary tasks $\delta_2$, number of heads $k > 1$, and auxiliary head values $f^2_t, \dots, f^m_t$,
\begin{equation}
  \lim_{n_1, \dots n_{L-1}\rightarrow \infty} \partial_t f^1_{t} = \Phi_{\Theta^{(L)}_\infty} \nabla_{f^1} \delta_1 (f_t^1) \;.
\end{equation}
\end{theorem}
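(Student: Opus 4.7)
The plan is to invoke the infinite-width NTK framework of \citet{jacot2018neural} to recast the parameter-space gradient flow as a kernel-driven flow in function space, then show that the limiting NTK coupling between different output heads vanishes, so that the flow on $f^1_t$ decouples from the other heads.

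First, under the conditions of Theorem 2 of \citet{jacot2018neural}, gradient flow on $\delta(f_\theta) = \delta_1(f^1) + \delta_2(f^{2},\dots,f^M)$ induces function-space dynamics of the form
\begin{equation*}
\partial_t f^i_t \;=\; -\sum_{j=1}^M \Theta^{ij}_t \, \nabla_{f^j}\delta(f_t),
\end{equation*}
where $\Theta^{ij}_t(x,x') = \langle \nabla_\theta f^i_t(x),\,\nabla_\theta f^j_t(x')\rangle$ is the $(i,j)$ block of the empirical NTK. For the head $f^1$ under the Bellman semi-gradient flow \eqref{eq:value-function-ode}, one replaces the true gradient of $\delta_1$ by the corresponding semi-gradient; the sign convention used in the theorem statement folds this into the written form. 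Since $\delta_2$ does not depend on $f^1$ and $\delta_1$ does not depend on $f^{2},\dots,f^M$, the right-hand side for $i=1$ splits into a self-term driven by $\Theta^{11}_t$ and the $f^1$-update direction, plus cross-head terms $\Theta^{1j}_t$ applied to the $f^j$-update directions for $j\geq 2$.

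Second, I would show that in the limit $n_1,\dots,n_{L-1} \to \infty$ the cross-head blocks $\Theta^{ij}_t$ with $i \neq j$ vanish in probability, while $\Theta^{ii}_t \overset{P}{\to} \Phi_{\Theta^{(L)}_\infty}$. The last-layer contribution is immediate: the parameters in $\theta^{(L)}$ that produce $f^i$ lie in the $i$-th column of the output weight matrix and are disjoint from those producing $f^j$, so the last-layer piece of $\langle \nabla_{\theta^{(L)}} f^i,\, \nabla_{\theta^{(L)}} f^j\rangle$ is identically zero. For the earlier-layer contributions, the backpropagated gradient into layer $\ell < L$ for head $i$ carries a factor of the $i$-th row of the (effective) output weight vector; under the NTK parameterization these rows are independent zero-mean Gaussians whose per-unit cross products concentrate at zero by the law of large numbers. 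Applying the layerwise concentration argument of \citet{jacot2018neural}, together with the lazy-training stability that extends the initialization statement to all $t$, yields the desired block-diagonal limiting NTK.

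Substituting the block-diagonal limit into the function-space flow eliminates every term involving $f^2_t,\dots,f^M_t$ or $\delta_2$, giving $\partial_t f^1_t = \Phi_{\Theta^{(L)}_\infty} \nabla_{f^1}\delta_1(f^1_t)$, as required. The main obstacle is the rigorous justification of the vanishing of the off-diagonal NTK blocks: the last-layer argument falls out of parameter separation, but for earlier layers one must carefully track how the independence of the output-head weight columns propagates backward through the \emph{shared} features of the lower layers, then invoke the hidden-unit concentration machinery used in the single-output NTK proof to conclude that the cross-head inner products concentrate at zero. This is essentially a direct extension of \citet{jacot2018neural} to the multi-output setting, since the same per-unit decorrelation that drives their single-kernel limit is what produces the block-diagonal structure here.
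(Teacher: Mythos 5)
Your proposal is correct and follows essentially the same route as the paper, which states this result as a direct consequence of Theorem 2 of \citet{jacot2018neural}: the limiting NTK for a multi-output network has the form $\Phi_{\Theta^{(L)}_\infty}\otimes \mathrm{Id}$, so the cross-head blocks vanish and the kernel gradient flow on $f^1$ decouples from the auxiliary heads. The paper offers no further argument beyond this citation, so your sketch of why the off-diagonal blocks concentrate to zero (last-layer parameter disjointness plus decorrelation of the output-weight rows propagated through the shared lower layers) simply fills in the detail the paper delegates to \citet{jacot2018neural}, and your handling of the semi-gradient sign convention matches how the paper treats it elsewhere.
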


This result highlights the importance of the shared feature dynamics between the network outputs in order for auxiliary tasks to influence value function learning. The initialization and dynamics of the NTK regime result in each head of the network evolving independently; essentially, each of the auxiliary tasks is operating on a mutually-orthogonal subset of the feature space, and as a result incorporating auxiliary tasks will not accelerate learning of the value function. This is in contrast to most DNN architectures used in deep RL, which tend to be relatively small and so exhibit significant interference (both positive and negative) between tasks. 
Two notable exceptions to this observation are categorical distributional reinforcement learning, where the softmax operator induces dependence between the different heads' loss functions, and random ensemble mixture (REM) \citep{agarwal2019striving}, where a random mixture is taken over an ensemble of value prediction heads before applying the TD loss. The analysis of the effect of this family of auxiliary tasks on representation dynamics in wide neural networks requires additional mathematical tools to account for their nonlinear output layer, however, and falls outside the scope of this chapter. Other parameterizations such as those studied by \citet{yang2021tensor} may further yield more interesting feature-learning dynamics even for the classes of auxiliary tasks studied in this chapter.

\section{Experiments}\label{sec:experiments}

In this section, we complement the previous theoretical results with empirical investigations in both tabular and deep reinforcement learning settings. These empirical results will motivate the directions taken in Chapter~\ref{chp:rep-learning}, where we will further explore both the sparse- and dense-reward setting.

\subsection{Feature generalization across the value-improvement path}\label{sec:feature-generalization}

Having established connections between the representations induced by auxiliary tasks and several decompositions of the environment transition operator, we now turn to the question of how useful these representations are to a reinforcement learning agent. In particular, we address how well representations learned under one policy \textit{generalize} under the policy improvement step to approximate future value functions, with particular attention paid to EBFs and RSBFs, the decompositions that feature in our earlier analysis.

\begin{figure}[!b]
    \centering
    \includegraphics[keepaspectratio,width=.648\textwidth]{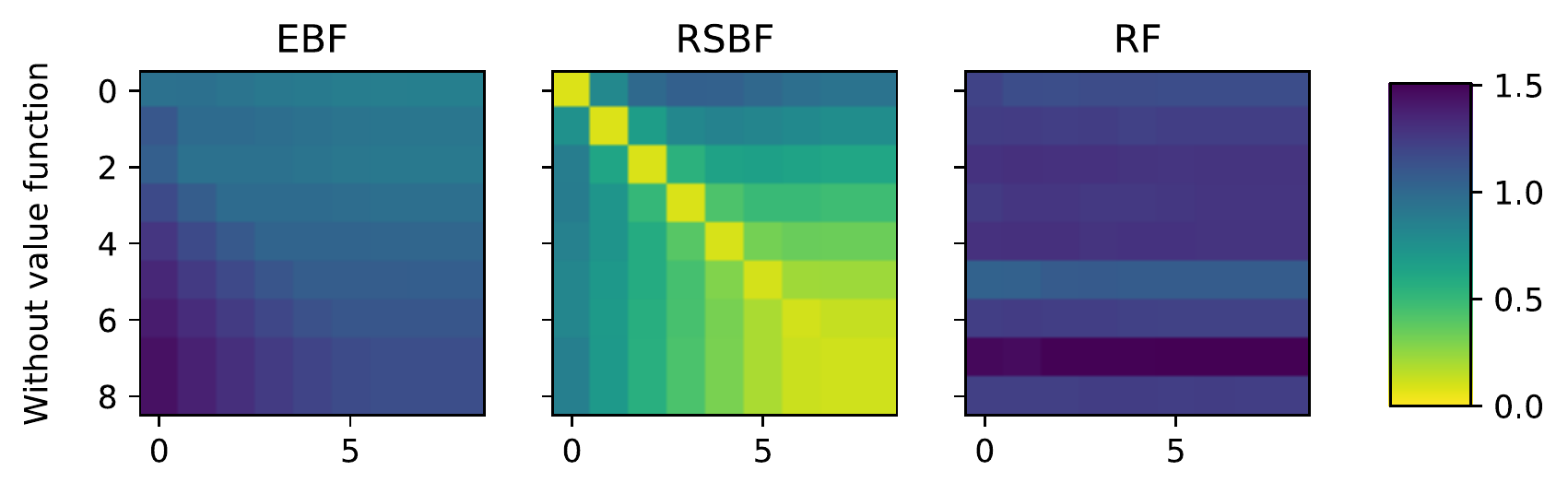}
    
    \includegraphics[keepaspectratio,width=.648\textwidth]{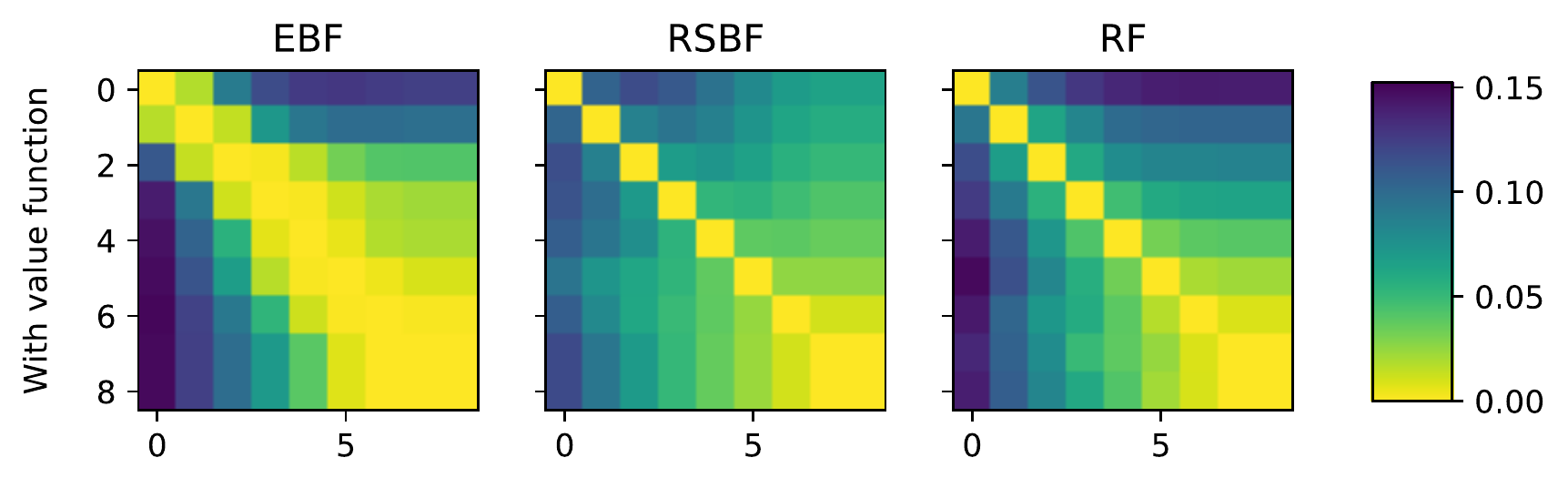}
    \caption{Transfer of EBFs, RSBFs, and RFs across the value-improvement path of a chain MDP, with and without the value function as an additional feature.
    }
    \label{fig:chain-transfer}
    \vspace{0.2cm}
\end{figure}

\begin{figure*}
    \includegraphics[width=\textwidth]{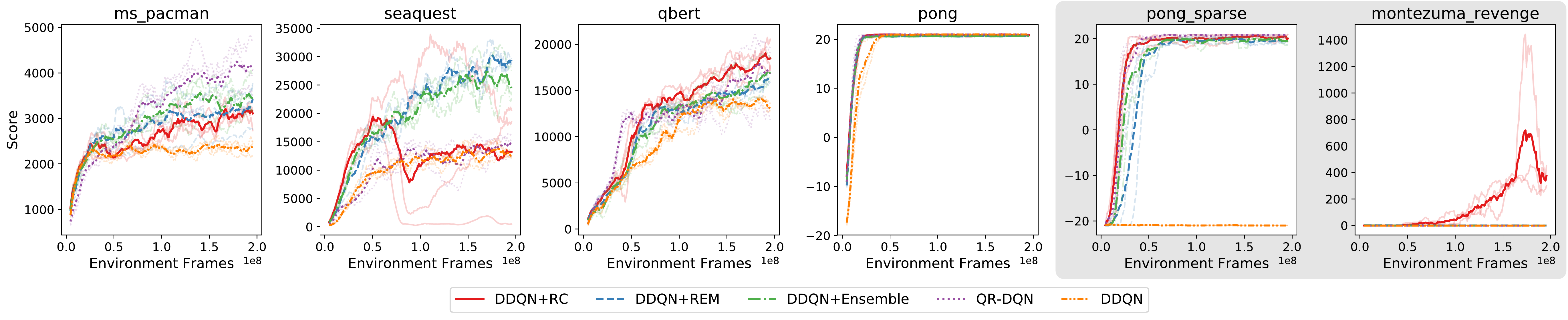}
    \caption[Learning curves for DDQN, DDQN+RC, DDQN+Ensemble, DDQN+REM, and QR-DQN agents on several dense reward ALE environments.]{Learning curves for DDQN, DDQN+RC, DDQN+Ensemble, DDQN+REM, and QR-DQN agents on several dense reward ALE environments. Two games with sparse rewards are shown in the shaded box.}
    \label{fig:deep-rl}
    \vspace{0.5cm}
\end{figure*}
\begin{figure}
    \centering
    \includegraphics[width=.75\textwidth]{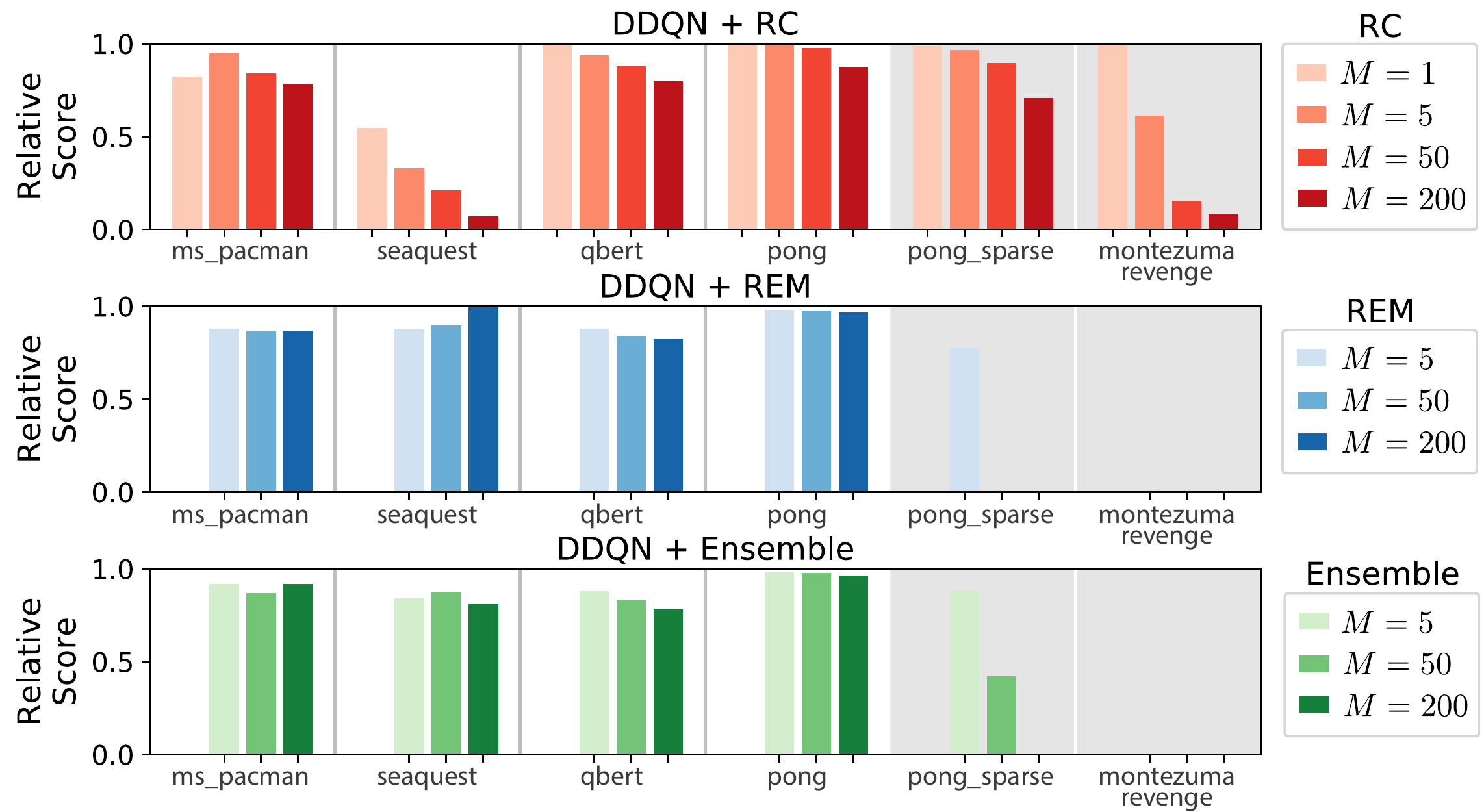}
    \caption[Sweep over number of auxiliary heads for RC, REM and Ensemble.]{Sweep over number of auxiliary heads for RC, REM and Ensemble. Each bar corresponds to a single game, presented in the same order as Figure~\ref{fig:deep-rl}. Relative score is the per-game score divided by the maximum over all algorithms and hyperparameters.}
    \label{fig:naux}
\end{figure}

To address this question empirically we run tabular policy iteration on a stochastic chain MDP, yielding a sequence of policies $(\pi_j)_{j=1}^J$ and associated value functions $(V_j)_{j=1}^J$. We then compute EBFs and RSBFs associated with $P^\pi$, and compute the acute angle between $V_j$ and the subspace spanned by these features, for each $j \in [J]$; this is in fact equal to a generalization of the Grassmann distance for subspaces of unequal dimension \citep{ye2016schubert}. We also compare against a baseline of isotropic randomly-generated features. Full experimental details are provided in Appendix~\ref{sec:experiment-details}. 

Results are given in the top row of Figure~\ref{fig:chain-transfer} for the case of four features; each individual heatmap plots Grassmann distances, with rows indexing the policy that generated the features, and columns indexing the policy yielding the target value function. In general, the RSBFs provide better transfer across policies in the improvement path relative to random features and EBFs. For times $j, j' \in [J]$, we observe that  the Grassmann distance between the RSBFs of $P^{\pi_j}$ and the value function of $j'$, $V^{\pi_{j'}}$, increases as $|j - j'|$ does. 

We also evaluate transfer when the vector $V^{\pi_j}$ is added to the set of features, in the bottom row of Figure~\ref{fig:chain-transfer}. This contains the subspace to which the value functions described in Proposition~\ref{prop:many-value-functions} converge, as the limiting solutions can be described as being of the form $V^{\pi_j} + u$ for $u \in \langle U_{1:\repdim} \rangle$. This results in a significant improvement to the ability of the EBFs to fit future value functions, suggesting that representation learning objectives which \textit{only} seek to incorporate information about the transition dynamics, without including any reward information, may be sub-optimal for representing value functions of interest. Surprisingly, we find that this addition results in the EBFs for $\pi_j$ outperforming RSBFs specifically in predicting $V^{\pi_{j+1}}$. This can be observed in the upper off-diagonal the EBF plot in Figure~\ref{fig:chain-transfer}.
We conclude that the dynamics induced by TD updates may be particularly beneficial to transfer between policies in the value-improvement path, and further study of this phenomenon is a promising avenue for future work. 

\subsection{Auxiliary tasks for large-scale environments with sparse rewards}\label{sec:deep-rl-aux}

We now turn our attention to deep reinforcement learning, particularly in the context of environments with sparse reward structure. Motivated by the theoretical results obtained in earlier sections, we study the effects of a variety of auxiliary tasks in this setting; our analysis indicates that random cumulants may be particularly effective in preventing representation collapse in such environments. Concretely, this section will seek to evaluate whether auxiliary tasks that incorporate non-zero reward signals can improve performance in sparse-reward tasks; Chapter~\ref{chp:rep-learning} will then investigate the mechanisms behind this in more depth.
\hypothesis{Representation collapse}{incorporating non-zero reward in auxiliary tasks will improve the learned representation, and thereby improve performance, by preventing representation collapse in deep RL agents.\label{hyp:collapse}}

To evaluate Hypothesis~\ref{hyp:collapse}, we empirically evaluate the performance of a variety of agents on a representative subset of Atari environments. We modify a Double DQN agent \citep{van2016deep} with a variety of auxiliary tasks, including random cumulants (RC) \citep{dabney2020value}, random ensemble mixtures (REM) \citep{agarwal2019striving}, an ensembling approach \citep{anschel2017averaged}, and also compare with QR-DQN, a distributional agent \citep{dabney2018distributional}. Full details of these agents, including specific implementation details for deep RL versions of these auxiliary tasks, are given in Appendix~\ref{sec:experiment-details}.

We evaluate these agents on a series of Atari games from the Arcade Learning Environment \citep{bellemare2013arcade,machado2018revisiting}, comprising Montezuma's Revenge, Pong, MsPacman, Seaquest, and Q$^*$bert.
In addition, we evaluate on a more challenging, sparse reward, version of Pong in which the agent does not observe negative rewards.\footnote{We attempted a similar modification of the other three dense reward games, but found no agent or configuration that was able to successfully learn on them. Full details, along with hyperparameters and results on these unsuccessful modifications, are given in Appendix~\ref{sec:experiment-details}.}

Figure~\ref{fig:deep-rl} shows the main results from these experiments.
Recall from Section~\ref{sec:aux-dynamics} that the random cumulant auxiliary task causes the agent's representation to converge to the RSBFs of $P^\pi$  in idealized settings. 
We hypothesize that this auxiliary task will therefore improve agent performance over ensemble-based auxiliary tasks in sparse-reward environments.
Our empirical results support our conjecture, with the random cumulant agent (DDQN+RC) generally performing well in the sparse-reward environments. Of particular note is the strong performance in Montezuma's Revenge. We expected reduced performance for DDQN+RC in the dense-reward games, but were surprised to observe improved performance here as well, with the exception of Seaquest.
Finally, Figure~\ref{fig:naux} shows the result of a hyperparameter sweep over the number of auxiliary task heads, revealing relevant differences in the three methods considered. Overall, we find that random cumulants are a promising auxiliary task specifically in sparse-reward environments. In the following chapter, we will investigate an approach which adapts the random cumulant task to avoid interference in dense-reward tasks like Seaquest while preserving its ability to prevent representation collapse in sparse-reward environments.

\section{Conclusions}

This chapter has introduced a framework based on learning dynamics to analyze representations in reinforcement learning. This led to a variety of theoretical results concerning learning with and without the presence of auxiliary tasks, as well as several straightforward models for studying representation learning empirically. In particular, we neatly characterized the \textit{trajectory} taken by value functions and, in some cases, representations under TD learning dynamics. With this, we were able to thoroughly test a new hypothesis on the effectiveness of particular auxiliary tasks in sparse-reward environments, which led to improved understanding of representation learning in RL, as well as practical modifications to deep RL algorithms.

There are many natural follow-up directions to this work, some of which we will explore in the following chapters. One direction is to further develop the theory associated with the learning dynamics perspective, in order to (i) understand how additional types of auxiliary tasks, in particular auxiliary tasks that do not correspond to value functions, affect the representations in the learning models developed in this chapter, (ii) extend the learning models themselves to incorporate further aspects of large-scale learning scenarios, such as sample-based learning and state-visitation distribution corrections, and (iii) investigate other common learning dynamics, such as gradient TD methods \citep{sutton2008convergent} or policy gradient updates. In some cases, as with non-uniform state distributions, such extensions are straightforward. However, other less trivial extensions are crucial to improve the applicability of this model to practical settings. The current formulation of the learning dynamics model does not capture several important factors of practical deep RL training paradigms, leaving a gap between the exact, continuous-time representation updates of Theorem~\ref{thm:infinite-heads} and those of finite-width deep neural networks whose architectures endow the representation dynamics with particular inductive biases. 

The following chapters will bridge this gap to extract useful methodological implications of this framework. First, Chapter~\ref{chp:rep-learning} will capitalize on our observation that sparse-reward environments encourage feature collapse to study the mechanisms by which networks progress along the value improvement path. It will propose an auxiliary task which avoids the pitfalls of the RCDQN objective, which can hinder progress in some dense-reward environments, and to develop a more fine-grained view on the impact of non-stationarity on the plasticity of the learned representation. Next, Chapter~\ref{chp:gen-rl} will study the implications of Theorem~\ref{thm:infinite-heads} and Proposition~\ref{prop:many-value-functions} as they relate to generalization and interference. This discussion will take a complementary view of the presentation in this chapter by considering not the subspace spanned by the predicted value functions, but rather the properties of the approximation error incurred by the agent.


\chapter{Capacity loss}
\label{chp:rep-learning}
\minitoc

\section{Introduction}

The reinforcement learning problem presents a number of difficulties not present in supervised learning. Chief among these is the \textit{non-stationarity} of the learning objective. RL agents must solve a sequence of similar prediction problems as they iteratively improve their prediction accuracy and their policy \citep{dabney2020value}, and sufficient improvement in each subproblem in this sequence is necessary to progress to the next subproblem. Ideally a solution to one subproblem should \textit{generalize} to the subsequent one, in the sense that the learned representation should enable rapid adaptation. At minimum, fitting one prediction problem should not hinder performance on the next. This challenge was alluded to in Chapter~\ref{chp:rl-dynamics}; we now confront it explicitly, focusing particularly on its effect in RL agents using deep neural networks as function approximators. Prior works have shown that the early training period of a network is critical for its ability to achieve optimal performance on a task \citep{frankle2020the, achille2018critical}, and that early exposure to diverse data is crucial for optimal generalization performance \citep{ash2020warm, berariu2021study}. This suggests that the non-stationary target functions prevalent in reinforcement learning may be particularly ill-suited to function approximation by DNNs trained with gradient-based optimization, particularly in sparse-reward environments where the network receives no learning signal in this critical phase. Indeed, several prior works studying the effect of re-initializing network parameters in reinforcement learning have found that this enables agents to break through plateaus \citep{fedus2020catastrophic} and improve generalization performance \citep{igl2021transient}. This chapter will study one mechanism driving these phenomena, and lay the groundwork for a more thorough empirical analysis of the effect of prediction targets on generalization in Chapter~\ref{chp:gen-rl}.

\subsection{Motivating example: the zero function}


We follow Chapter~\ref{chp:rl-dynamics} in framing representation-learning in terms of a learned feature layer $\phi_{\btheta}(\bx)$ which is composed with some simple (usually linear) output function to predict state-action values. Previously, we restricted ourselves to settings where the representation's dynamics did not depend on the particular value of the linear map $\bw$, as the ensuing dynamics became intractable to analyze theoretically when $\Phi$ and $\bw$ were coupled. We now take an empirical approach to study what happens when the dynamics of $\Phi$ take more complex forms. To do so, we will focus on the location of a given set of parameters in the optimization landscape. This can provide some information not available in the outputs of a feature layer, as there are many different sets of parameters which correspond to the same function outputs, but which may occupy locations in the loss landscape with very different properties, making some easier to use as a starting point for learning than others. 

To provide a concrete illustration of this, consider a single hidden layer ReLU network. We let $\bx$, $\bx \in \mathbb{R}^d$ denote the observation, and $\btheta \in \mathbb{R}^{k \times d}$ network's first layer weights. We then write $\phi_{\btheta}(\bx) = \sigma(\btheta^\top \bx)$ where $\sigma(y) = \max(0, y)$ is applied entry-wise to the vector $\btheta ^\top \bx$. The output of the network is then $f_{\btheta}(\bx) = \langle \bw, \phi_{\btheta}(\bx) \rangle$. Suppose we have encountered the (unlikely) scenario that all of the parameters $\btheta$ are initialized to have negative value, the input distribution is only supported on non-negative vectors, and $\bw$ is initialized to be exactly zero. Then, recalling the framework of Chapter~\ref{chp:rl-dynamics}, any regression objective with target $\by$ will face the following dynamics.

\begin{align}
    \nabla_{\btheta} (\by - f_{\btheta}(\bx))^2 &= (\by - f_{\btheta}(\bx) )  \cdot ( \bw \cdot \mathbf{0}) = \mathbf{0} \\
    \nabla_{\bw} (\by - f_{\btheta}(\bx))^2 &= (\by - f_{\btheta}(\bx) )  \cdot  \phi_{\btheta}(\bx) = \mathbf{0} 
\end{align}

As a result, the gradient descent trajectory is locked in at the current parameters and will not be able to escape them. In contrast, if $\btheta^\top \bx$ contains non-negative elements, then even if $\bw$ is initialized to equal zero and the network still outputs the same function value, the gradients for $\bw$ will be non-zero and so the gradient descent trajectory will be able to escape this initialization. Thus, under one parameterization of the zero function the network is never able to update its predictions, while under the other it can do so with relative ease. Relating this back to the analysis of the previous chapter, this condition is equivalent to saying that we would like at least one of $\partial_t \btheta_t$ and $\partial_t \bw_t$ to be non-zero. This failure mode can be considered a special case of \textit{mutually frozen weights} \citep{zilly2021on}, a more general failure mode whereby saturated units prevent networks from making learning progress. 
 
\subsection{Contributions}
The principal hypothesis of this chapter is that over the course of training, deep RL agents lose some of their capacity to quickly fit new prediction tasks, and in extreme cases this capacity loss prevents the agent entirely from making learning progress.
In other words, capacity loss results in representations which \textit{fail to generalize} to the value functions induced by new policies and new TD targets in the value improvement path. 
We will show that the ability of deep RL agents to fit new target functions declines over the course of training in several environments from the Atari suite \citep{bellemare2013arcade} and on a non-stationary supervised prediction tasks. We take a deeper look at the representation collapse phenomenon, where the feature outputs for every state in the environment inhabit a low-dimensional -- or possibly even zero -- subspace, extending the empirical analysis of Chapter~\ref{chp:rl-dynamics}. Finally, we provide evidence that representation collapse is a key factor in agents' failure to make performance improvements in sparse-reward environments. We saw in the previous chapter that predicting \textit{random cumulants} can prevent feature collapse in sparse-reward environments; however, it can also lead to interference in dense-reward environments, harming performance.

To address this limitation we propose a simple regularization technique, Initial Feature Regularization (\pyoi), to prevent capacity loss in both dense- and sparse-reward environments. In this approach, we regress a set of feature projection heads to their values at initialization. While the regression targets used in \pyoi can be viewed as an auxiliary task, the method does not incorporate any additional environment information into the learning objective. This allows us to isolate the effect of capacity loss on agent performance without introducing confounding from the interaction between the auxiliary environment information and the agent's representation. We find that this regularization scheme mitigates capacity loss in sequential supervised prediction tasks, and that RL agents trained with \pyoi avoid egregious cases of representation collapse in sparse reward environments. We further show that \pyoi works by regularizing the network's learning dynamics, and conduct ablation studies to better understand this mechanism.

One striking take-away from these results is that agents trained on so-called `hard exploration' games such as Montezuma's Revenge can attain significant improvements over existing competitive baselines \textit{without} using smart exploration algorithms. This suggests that the poor performance of deep RL agents in sparse-reward environments is not \textit{solely} due to inadequate exploration, but rather also in part due to poor representation learning as the network `overfits' to predicting the zero function. There are thus two levers one can pull to improve performance: increasing the amount of reward signal the agent encounters via improved exploration, and ensuring that it is able to effectively update its predictions when it does receive reward signal. These two levers produce complex feedback loops in practical environments: more accurate predictions produce more informative behaviours, and more informative behaviours provide information needed to improve the network's predictions. The sparse reward settings studied in this chapter highlight the necessity of network plasticity in enabling this virtuous cycle. Chapter~\ref{chp:gen-rl} will study a more nuanced form of overfitting to early targets that arises even in dense-reward environments, extending the insights from this chapter to a broader range of settings.

\section{Learning capacity in neural networks}

\label{sec:learning-capacity}
Each time an RL agent discovers a new source of reward in its environment or improves its ability to obtain this reward, the value function that it seeks to predict changes. Over the course of learning, a value-based RL agent attempts to solve a long sequence of target prediction problems, though in the case of value iteration-style algorithms such agents many only partially solve each problem before the next is constructed. Studies of neural networks in supervised learning suggest that this sequential fitting of new targets may be harmful to a neural network's ability to adapt to new objectives \citep{ash2020warm}. This presents a concern for deep RL, where networks are trained to fit a constantly-changing target and may need to quickly make significant changes to their predictions even late in the training process. In this section, we show that training on a sequence of prediction targets can indeed lead to a reduced ability to fit new objectives in deep neural networks, a phenomenon that we term \textit{capacity loss}, and confirm that capacity loss can arise in value-based deep RL agents. Further, we provide evidence that an agent's inability to quickly update its value function to distinguish states presents a barrier to performance improvement in deep RL agents trained on environments from the Atari suite.

\subsection{Target-fitting capacity}
The parameters of a neural network determine not just the network's current outputs, but also how these outputs will evolve over time via the magnitude and structure of its gradients and the curvature of the loss landscape. This evolution determines the \textit{capacity} of the network to learn to predict new targets by following an optimization trajectory. We therefore view the agent's representation in terms of the optimization dynamics that it induces. In particular, we are interested in identifying when an agent's current parameters are flexible enough to allow it to perform gradient updates that meaningfully change its predictions based on new reward information in the environment or evolving bootstrap targets, a notion formalized in the following definition.

\begin{definition}[Target-fitting capacity]\label{def:tf_cap}
Let $P_X \in \mathscr{P}(X)$ be some distribution over inputs $X$ and $P_\mathcal{F}$ a distribution over a family of functions $\mathcal{F}$ with domain $X$. Let $\mathcal{N} = (g_\theta, \mathbf{\theta}_0)$ represent the pairing of a neural network architecture with some initial parameters $\mathbf{\theta}_0$, and $\mathcal{O}$ correspond to an optimization algorithm. We measure the \textit{capacity} of $\mathcal{N}$ under the optimizer $\mathcal{O}$ to fit the data-generating distribution $\mathcal{D}=(P_X, P_\mathcal{F})$ as follows:

\begin{equation}
    \mathcal{C}(\mathcal{N}, \mathcal{O}, \mathcal{D}) = \mathbb{E}_{f \sim P_{\mathcal{F}}}[ \mathbb{E}_{x \sim P_X}[   (g_{\mathbf{\theta}'}(x) - f(x))^2 ]] \quad \;\text{where} \; \mathbf{\theta}' = \mathcal{O}(\theta_0, P_X, f) \, .
\end{equation}

\end{definition}

\begin{figure}
\centering
        \includegraphics[width=0.75\textwidth,keepaspectratio]{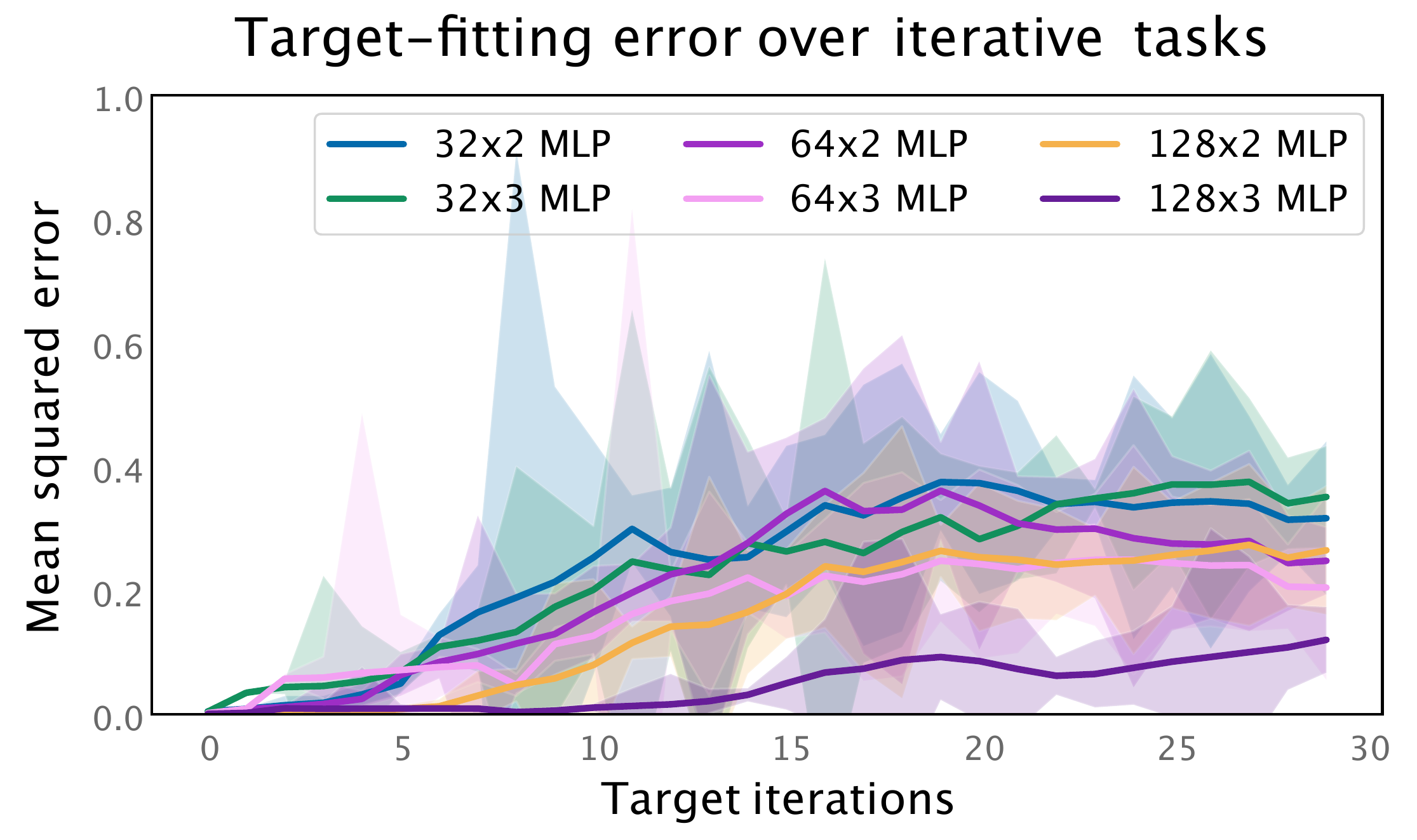}
         \caption{Deeper analysis on a sequential MNIST prediction setting showing that target-fitting capacity sees the greatest reduction in low capacity networks with ReLU units.}
        \label{fig:mnist-cap}
\end{figure}

This definition of capacity measures the ability of a network to reach a new set of targets within a limited optimization budget from its current parameters and optimizer state. The choice of optimization budget and target distribution are left as free variables, and different choices result in different notions of capacity. In reinforcement learning we ultimately care about the network's ability to fit its Bellman targets quickly, particularly to the extent that it allows the greedy policy to obtain high reward. However, only evaluating a network's ability to fit its Bellman targets within a small budget will not necessarily be a useful measure of capacity: for example, a network which can only output the zero function will attain low Bellman error immediately on a sparse-reward environment, but will fail to produce useful updates to improve its policy. Our evaluations of this measure will assign longer training budgets and use target functions that are independent of the current network parameters to avoid these pathologies; the effect of this choice is explored further in Appendix~\ref{appx:cap-loss-supervised}. 

The process of training a neural network to fit a set of labels must by necessity change some properties of the network. Works studying the information bottleneck principle \citep{tishby2015deep} for example, identify a compression effect of training on the latent representation, where inputs with similar labels are mapped to similar feature vectors. This compression can benefit generalization on the current task, but may make learning more difficult if in the future the network must distinguish between these inputs. In the face of the rapidly-changing nature of the targets used in value iteration algorithms, this compression has the potential to harm the learning process by impeding the network's ability to fit new targets rather than to help it. This motivates the first hypothesis of this section.

{\hypothesis{H1}{networks trained to iteratively fit a sequence of dissimilar targets will lose their capacity to fit new target functions within a fixed optimization budget compared to their capacity at initialization.}\label{hyp:capacity-loss}}

To evaluate {Hypothesis~\ref{hyp:capacity-loss}}, we construct a series of iterative prediction problems on the MNIST data set, a widely-used computer vision benchmark which consists of images of handwritten digits and corresponding labels. 
We first fit a series of labels computed by a randomly initialized target neural network $f_\theta$: we transform input-label pairs $(x,y)$ from the canonical MNIST dataset to $(x, f_\theta(x))$, where $f_\theta(x)$ is the network output. To generate a new task, we simply reinitialize the target network. Given a target function, we then train a student neural network for a fixed budget from the parameters obtained at the end of the previous iteration (using a random initialization for the first iteration), and repeat this procedure of target initialization and training thirty times. We use a subset of MNIST inputs of size 1000 to reduce computational cost. In these experiments we focus only on the ability of the network to fit its training data; a study of the effect of prediction targets on generalization is deferred to Chapter~\ref{chp:gen-rl}. We provide additional details in Appendix~\ref{appx:mnist-details}.

In Figure~\ref{fig:mnist-cap} we see that the networks trained on this task exhibit decreasing ability to fit later target functions under a fixed optimization budget. This effect is strongest in the smaller networks, matching the intuition that solving tasks which are more challenging for the network will result in greater capacity loss. We consider two other tasks in Appendix~\ref{appx:cap-loss-supervised}, as well as a wider range of architectures, obtaining similar results. 
We find in this broader set of evaluations that sufficiently over-parameterized networks (on the order of one million parameters for a task with one thousand data points) exhibit positive forward transfer; however, models which are not over-parameterized relative to the task difficulty consistently exhibit increasing error as the number of targets trained on grows. 

This raises a question: are the deep neural networks used by value-based RL agents on popular benchmarks in the over- or under-parameterized regime? 
The examples here differ in an important way from the value iteration methods common in deep RL: we run our optimizer on each task long enough for the loss to converge before turning to the next task. In RL problems, the network will not have converged to the value of the current policy before the changes to the value function induce a policy improvement step. We therefore turn our attention to reinforcement learning, to evaluate whether the phenomena observed in the idealized supervised learning setting also hold in value-based RL agents.

\hypothesis{H2}{the non-stationary prediction problems in value-based deep RL also result in capacity loss.}

\begin{figure}
\centering
        \includegraphics[width=0.849\linewidth]{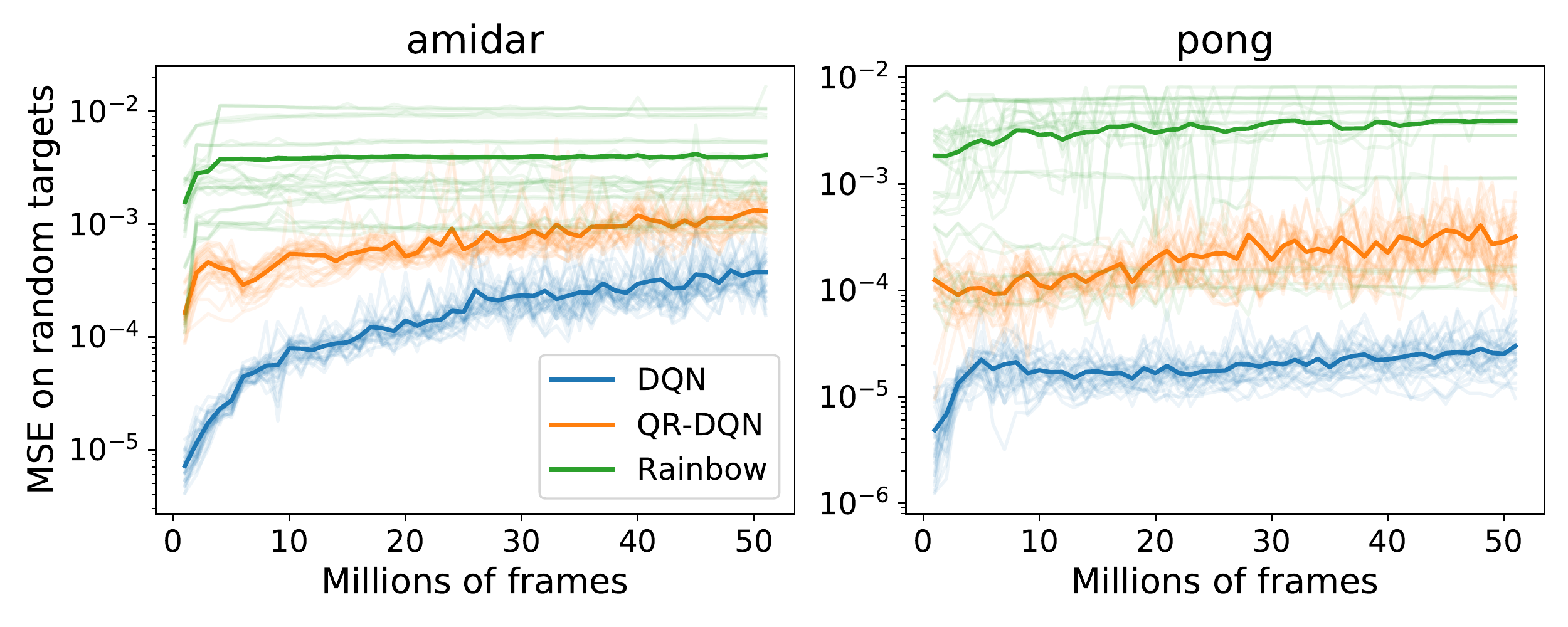}
       \caption{Neural networks exhibit a decline in their ability to fit random network outputs over the course of training in two demonstrative Atari environments.}
         \label{fig:atari-cap}
   
\end{figure}
To evaluate {Hypothesis~\ref{hyp:subspace}}, we train network checkpoints stored over the course of training to fit randomly generated target functions. We provide full details of this procedure in Appendix~\ref{appx:atari}. We generate target functions by randomly initializing a neural network with an identical architecture to the agent's, and use the outputs of this network as targets for regression.
We then load initial parameters from an agent checkpoint at some time $t$, sample inputs from the replay buffer, and regress on the random target function evaluated on these inputs. We then evaluate the mean squared error after training for fifty thousand steps. We consider a DQN \citep{mnih2015human}, a QR-DQN \citep{dabney2018distributional}, and a Rainbow agent \citep{hessel2018rainbow}. 
We observe in all three cases that as training progresses agents' checkpoints obtain increasing error after training to fit randomly generated targets; due to space limitations we only show two representative environments where this phenomenon occurs in Figure~\ref{fig:atari-cap}, and defer the full evaluation to Appendix~\ref{appx:tf-capacity-atari}. 

\keyinsight{When neural networks are trained to predict a sequence of challenging target functions, they get progressively worse at fitting these targets. At least some RL settings exhibit this property, resulting in \textit{capacity loss} over the course of training.} 

\subsection{Representation collapse and performance}
The notion of capacity in Definition~\ref{def:tf_cap} measures the ability of a network to \textit{eventually} represent a given target function. This definition reflects a number of intuitions about capacity: that networks which are good at fitting only a narrow range of targets should have lower capacity than more adaptable ones, and that capacity should decrease over the course of training as the network becomes more specialized. However, this quantity is computationally expensive to compute and is dependent on the choice of target function class.

We now introduce an alternative measure of capacity which captures a network's ability to quickly adapt to changes in the target function, while being significantly cheaper to compute than the more precise measure introduced in Definition~\ref{def:tf_cap}. 
We call this notion of capacity the \effdim, as it corresponds to an approximation of the rank of a feature embedding. Intuitively, the \effdim measures how easily states can be distinguished by updating only the final layer of the network.
This notion of state similarity is particularly relevant to sparse-reward environments, where policy improvement depends on the agent's ability to distinguish a handful of rewarding states from the vast non-rewarding majority. It also bears close resemblance to the implicit underparameterization phenomenon \citep{kumar2021implicit} studied previously, allowing us to evaluate how well these prior notions of capacity capture a network's ability to change its predictions and improve its performance.

\begin{definition}[\Effdim]
Let $\phi: \mathcal{X} \rightarrow \mathbb{R}^d$ be a feature mapping. Let $\mathbf{X}_n \in \mathcal{X}^n$ be a set of $n$ states in $\mathcal{X}$ sampled from some fixed distribution $P$. Fix $\varepsilon \ge 0$, and let $\phi(\mathbf{X}_n) \in \mathbb{R}^{n \times d}$ denote the matrix whose rows are the feature embeddings of states $x \in \mathbf{X}_n$. Let $\textnormal{SVD}(M)$ denote the multiset of singular values of a matrix $M$. Then the $\effdim$ of $\phi$ given input distribution $P$ is defined to be 
\begin{equation}\label{eq:eff-dim}
\rho(\phi, P, \epsilon ) = \lim_{n \rightarrow \infty} \mathbb{E}_{\mathbf{X}_n \sim P}[|\{\sigma \in \textnormal{SVD} \bigg (\frac{1}{\sqrt{n}}\phi(\mathbf{X}_n) \bigg ) | \sigma > \varepsilon \} | ] \, 
\end{equation}
for which a consistent estimator can be constructed as follows
\begin{equation}\label{eq:eff-dim-samples}
\hat{\rho}_n(\phi, \mathbf{X}, \epsilon) = |\{\sigma \in \textnormal{SVD} \bigg (\frac{1}{\sqrt{n}}\phi(\mathbf{X}) \bigg ) | \sigma > \varepsilon \} |  \, .
\end{equation}
\end{definition}
The $\effdim$ is equal to the dimension of the subspace spanned by $\phi(\mathcal{X})= \{\phi(x) \mid x \in \mathcal{X}\}$ when $\varepsilon=0$ and the state space $\cX$ is finite. For $\epsilon > 0$, it throws away small components of the feature matrix. We show that $\rho$ is well-defined and that $\hat{\rho}_n$ is a consistent estimator in Appendix~\ref{appx:consistency}. 
Our analysis of the \effdim resembles that of \citet{kumar2021implicit}, but differs in two important ways: first, our estimator does not normalize by the maximal singular value. This allows us to more cleanly capture \textit{representation collapse}, where the network features, and thus also their singular values, converge to zero. Second, we are interested in the capacity of agents with unlimited opportunity to interact with the environment, rather than in the data-limited regime. We compare our findings on feature rank against the \textit{srank} used in prior work in Appendix~\ref{appx:cap-loss-supervised}. 
\begin{figure}
    \centering
    \includegraphics[width=.995\textwidth,keepaspectratio]{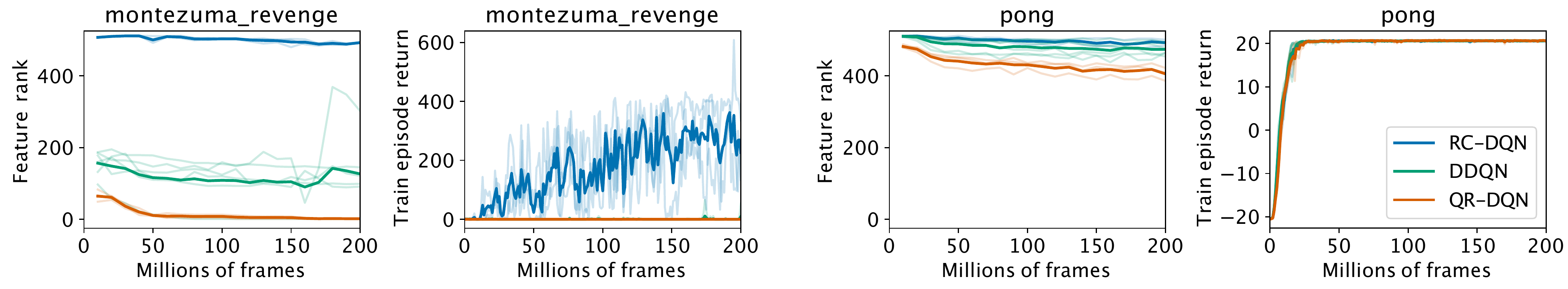}
    \caption[\Effdim and performance over the course of training.]{\Effdim and performance over the course of training. We observe that \effdim is higher for environments and auxiliary tasks which provide denser reward signals than for sparse reward problems. 
   }
   \label{fig:effdim_vanilla}
\end{figure}

We proceed to empirically evaluate the feature rank of various deep RL agents trained on games from the Atari suite to study its evolution over the course of training. 
We train a double DQN (DDQN) agent, a quantile regression (QRDQN) agent, and a double DQN agent with an auxiliary random cumulant prediction task (RC-DQN) \citep{dabney2020value}, on environments from the Atari suite, then evaluate $\hat{\rho}_n$ with $n=5000$ on agent checkpoints obtained during training. We revisit two of the environments studied in Chapter~\ref{chp:rl-dynamics}: Montezuma's Revenge (sparse reward), and Pong (dense reward), deferring two additional environments, a sparsified version of Pong in which the agent does not receive negative reward when its opponent scores, and Seaquest (dense reward, but more challenging than Pong), to Appendix~\ref{appx:feature-rank-atari}. We run 3 random seeds on each environment-agent combination.

We visualize agents' \effdim and performance in Figure~\ref{fig:effdim_vanilla}. Our findings confirm that the RC-DQN objective prevents representation collapse, as previously conjectured. More generally, non-trivial prediction tasks, either value prediction in the presence of environment rewards or auxiliary tasks, lead to higher \effdim. Unlike in the case of target-fitting capacity, \effdim does not decline monotonically, indicating that it measures a subtly different notion of capacity than simply an agent's ability to fit new target functions. This subtlety will be explored further in Chapter~\ref{chp:gen-rl}, where it will be interpreted as incorporating a particular inductive bias into the network structure. In Montezuma's Revenge, the higher \effdim  induced by RC-DQN corresponds to higher performance, but as we saw in Chapter~\ref{chp:rl-dynamics}, this auxiliary loss can have a detrimental effect on learning progress in complex, dense-reward games such as Seaquest, presumably due to interference between the random rewards and the true learning objective. Unlike in target-fitting capacity, we only see a consistent downward trend in sparse-reward environments, where a number of agents, most dramatically QRDQN, exhibit representation collapse. We discuss potential mechanisms behind this trend in Appendix~\ref{appx:feature_theory}. We conjecture that in practice, the \effdim is a better measure of the inductive bias of the network than it is a measure of capacity, but that it can accurately and cheaply detect extreme instances of feature collapse which may impede learning.

\begin{figure}
    \centering
    \includegraphics[width=0.95\linewidth]{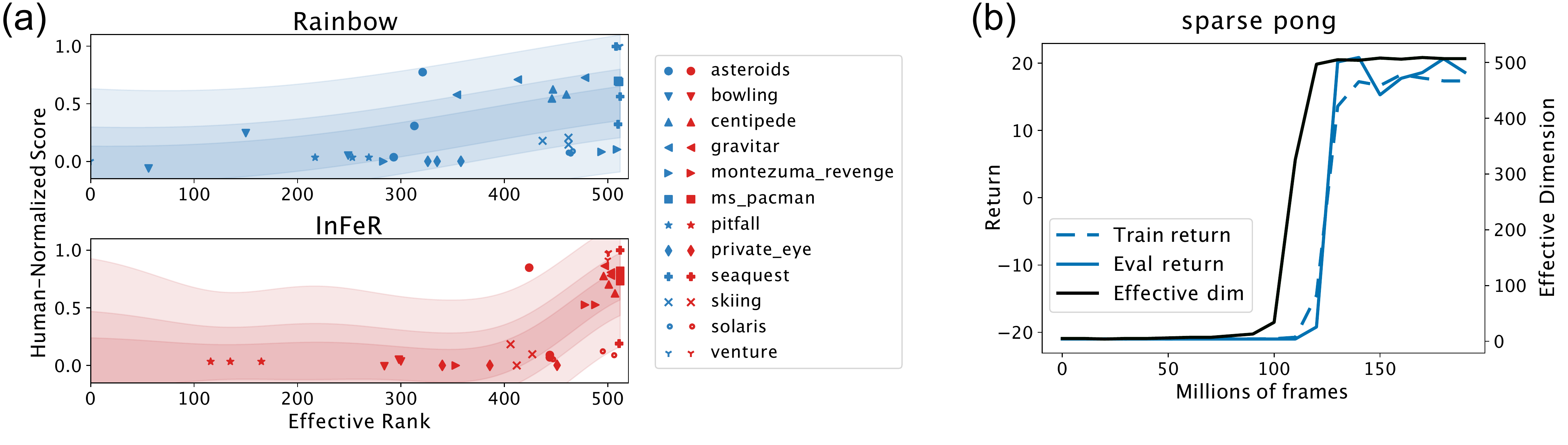}
    \caption[Evaluation of the correlation between \effdim and performance.]{\textbf{(a)}: Agent \effdim vs human-normalized score in games where Rainbow does not achieve superhuman performance. While $\effdim$ does not appear to solely determine agent performance, there is a positive correlation between $\effdim$ and human-normalized score. \textbf{(b)} An `unlucky' seed from our evaluations on the sparsified version of Pong, where learning progress occurs only after the agent recovers from representation collapse.
    }
    \label{fig:capacity-update}
\end{figure}

While the many moving parts in deep RL algorithms make it difficult to isolate the causal effect of a single representation property on performance, Figure~\ref{fig:capacity-update}a reveals a correlation between learning progress and \effdim on challenging games where agents fail to achieve human-level performance. We see this correlation in both a Rainbow \citep{hessel2018rainbow} agent, and an agent trained with the regularizer we will introduce in the coming section whose precise form is not relevant to this discussion. Further, in all of our evaluations we find that agents whose representations have collapsed do not make learning progress. This is best exemplified by the learning curves shown in Figure~\ref{fig:capacity-update}b, which highlights a particularly unlucky random seed that did not observe a point being scored during its initial random exploration period and experienced representation collapse. Eventually, after several million training frames, its $\effdim$ increases dramatically, and shortly \textit{after} this occurs the agent solves the task. We conclude that in its extreme form, representation collapse appears to completely prevent learning progress, but that the relationship between learning progress and \effdim in less extreme cases is complex, as capacity is one of many factors influencing performance in RL. In other words: capacity is a \textit{necessary} but not \textit{sufficient} condition for agents to make progress.

\keyinsight{Sparse-reward environments induce \textit{feature collapse}, whereby the representation converges to a low-dimensional subspace and `overfits' to the zero function. While it is possible to recover from feature collapse, agents fail to take successful policy improvement steps until this recovery occurs.}

\section{\pyoi: mitigating capacity loss with feature regularization}
\label{sec:pyoi}

The previous section showed that capacity loss can arise in online deep RL algorithms, and in some cases appears to be a bottleneck to performance. We now consider how it might be mitigated, and whether explicitly regularizing the network to preserve its initial capacity improves performance in environments where representation collapse occurs. Our approach involves a function-space perspective on regularization, encouraging networks to preserve their ability to output linear functions of their initial features. It further yields insight into the role of capacity loss in deep RL by isolating the effect of capacity on agent performance independent of the agent's exploration policy or auxiliary learning signals.

\subsection{Feature-space regularization}
Much like parameter regularization schemes seek to keep {parameters} close to their initial values, we wish to keep a network's ability to fit new targets close to its initial value. We motivate our approach with the intuition that a network which has preserved the ability to output functions it could represent at initialization should be better able to adapt to new targets. To this end, we will regress a set of network {outputs} towards the values they took at initialization. Our method, Initial Feature Regularization (\pyoi), applies an $\ell_2$ regularization penalty on the output-space level by regressing a set of auxiliary network output heads to match their values at initialization. Similar perspectives have been used to prevent catastrophic forgetting in continual learning \citep{benjamin2018measuring}, though in our case we care about the functional form of the network outputs, and not preserving the outputs associated with specific past inputs.

In our approach, illustrated in Figure~\ref{fig:effdim_pyoi}, we begin with a fixed deep Q-network with parameters $\theta$, and modify the network architecture by adding $k$ auxiliary linear prediction heads $g_i$ on top of the feature representation $\phi_\theta$. We take a snapshot of the agent's parameters at initialization $\theta_0$, and use the outputs of the $k$ auxiliary heads under these parameters as auxiliary prediction targets. We then compute the mean squared error between the outputs of the heads under the current parameters  $g_i(x; \theta_t)$ and their outputs at initialization $g_i(x; \theta_0)$. This approach has the interpretation of preserving subspaces of the features that were present at initialization. Because a randomly initialized network tends to produce low-magnitude outputs, a scaling factor $\beta$ ensures that the regularizer target value is on a comparable scale to that of the TD targets. This results in the following form of our regularization objective, where we let $\mathcal{B}$ denote the replay buffer sampling scheme used by the agent: 
\begin{equation}
    \mathcal{L}_{\pyoi}(\theta, \theta_0; \mathcal{B}, \beta) = \mathbb{E}_{x \sim \mathcal{B}} \bigg [\sum_{i=1}^k (g_i(x; \theta) - \beta g_i(x; \theta_0) )^2 \bigg ] \; .
\end{equation}


\begin{figure}[t]
    \centering
    \includegraphics[width=0.98\linewidth]{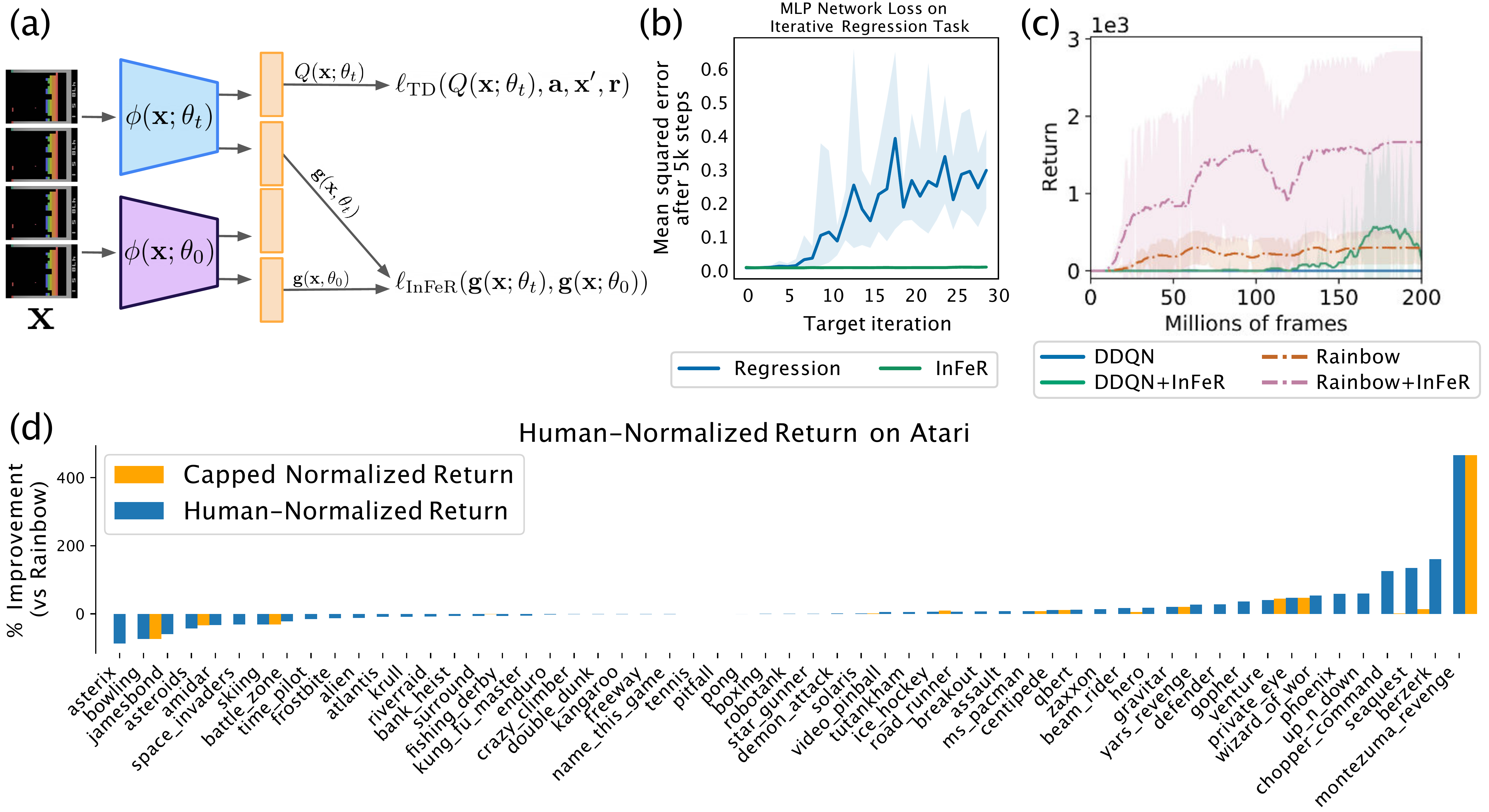}
    \caption[Visualization and evaluation of InFeR.]{\textbf{(a)} Visualization of \pyoi. \textbf{(b)}  Analysis of the effect of InFeR on capacity loss. \textbf{(c)} Effect of InFeR on performance in Montezuma's Revenge with respect to Rainbow and Double DQN baselines. \textbf{(d)} Performance of \pyoi relative to Rainbow on all 57 atari games.}
    \label{fig:effdim_pyoi}
\end{figure}

The loss can then be additively combined with any RL or supervised loss function. 

We first investigate whether this regularization scheme does indeed preserve capacity. To do so, we replicate the non-stationary MNIST prediction task studied in the previous section, but now incorporate \pyoi. We observe in Figure~\ref{fig:effdim_pyoi} that the \pyoi objective almost entirely eliminates capacity loss in a network whose performance would otherwise degrade to that of random guessing; while it is not a silver bullet, we show in Appendix~\ref{appx:infer-mnist} that this effect consistently occurs across a range of architectures and target classes. We show in Appendix~\ref{appx:feature-rank-atari} that similar phenomena occur in RL: \pyoi tends to increase the \effdim of agents trained on the Atari domain over the entire course of training; we study the early training period in Appendix~\ref{appx:tf-capacity-atari}. Our findings in both settings suggest that this form of regularization may prove fruitful in a variety of continual and reinforcement learning contexts beyond the ALE benchmark studied in the remainder of this chapter.

Our analysis of RL agents trained with \pyoi is made particularly insightful by the fact that \pyoi does not incorporate any additional information from the environment or induce any sophisticated exploration behaviour beyond that deployed by the agent it is applied to. As a result, it allows us to isolate the effect of capacity loss on performance. If adding the \pyoi objective improves performance, it can only be due to its effect on the agent's representation, or some knock-on effect thereof. We evaluate the effect of incorporating this loss in both DDQN \citep{van2016deep} and Rainbow \citep{hessel2018rainbow} agents, and include the relative performance improvement obtained by the \pyoi agents over Rainbow on 57 games from the Atari 2600 suite in Figure~\ref{fig:effdim_pyoi}, deferring the comparison to DDQN, where the regularizer improved performance slightly on average but only yielded significant improvements on sparse-reward games, to Appendix~\ref{appx:atari}. We observe a net improvement over the Rainbow baseline by incorporating the \pyoi objective, with significant improvements in games where agents struggle to obtain human performance. The evaluations in Figure~\ref{fig:effdim_pyoi} are for $k=10$ heads with $\beta=100$ and $\alpha=0.1$, and we show the method's robustness to these hyperparameters in Appendix~\ref{appx:hypers}.

The striking improvement obtained in the sparse-reward Montezuma's Revenge environment begs the question of whether such results can be replicated in other RL agents.
We follow the same experimental procedure as before, but now use the DDQN agent; see Figure~\ref{fig:effdim_pyoi}. We find that adding \infer to the DDQN objective produces a similar improvement as does adding it to Rainbow, leading the DDQN agent,
which only follows an extremely naive $\epsilon$-greedy exploration strategy and obtains zero reward at all points in
training, to exceed the performance of the noisy networks approach taken by Rainbow in the last 40 million training frames. 
This leads to two intriguing conclusions: first, that agents which are explicitly regularized to prevent representation collapse \textit{can} make progress in sparse reward problems without the help of good exploration strategies; and second, that this form of regularization yields significantly larger performance improvements in the presence of additional algorithm design choices that are designed to speed up learning progress.

\subsection{Understanding how \pyoi works}

Having observed that our regularizer both improves performance and mitigates capacity loss, we now take a closer look into the mechanisms by which it may enable learning progress. While \pyoi improves performance \textit{on average} across the Atari games, it does not do so uniformly: its improvements are concentrated principally on games where then Rainbow agent performs significantly below the human baseline. It further clearly slows down progress in a subset of environments such as Asteroids and Jamesbond. Without a deeper analysis, it is not immediately obvious what makes games like Jamesbond and Asterix, where \pyoi reduces performance, different from Seaquest and Berzerk. We now take a closer look at the mechanisms by which \pyoi is shaping the agent's representation in the hopes of explaining this differential effect. We consider two hypotheses.

\hypothesis{1}{\pyoi improves performance by preserving a random subspace of the representation that the final linear layer can use to better predict the value function. The effect of the regularizer on other aspects of the representation learning dynamics does not influence performance.\label{hyp:regularizer} }

Hypothesis~\ref{hyp:regularizer} implies that having access to a random feature vector can improve an agent's performance even in the absence of any regularization over deeper network layers. It assumes that having a wider variety of basis functions for the final linear layer to approximate the value function will benefit the network's adaptability. To evaluate Hypothesis~\ref{hyp:regularizer}, we concatenate a single dimension of the output of a randomly initialized network to the feature outputs of the network used to learn the Q-function, and train a linear layer on top of these joint learned and random features. We compare its performance with that of an identical network architecture with a single \pyoi auxiliary head. If Hypothesis~\ref{hyp:regularizer} were true, then we would expect this architecture to perform comparably to the \pyoi agent, as the final linear layer has access to a randomly initialized feature subspace of equal dimension to that regularized by \pyoi. If not, then we would expect the performance of the agents with access to the random features to be comparable to that of the vanilla Rainbow agents. Figure~\ref{fig:double} shows that the latter occurs, confirming that the effect of \pyoi on earlier layers is crucial to its success. 

But what exactly is the effect of \pyoi on earlier layers? In order to preserve its initial outputs, a network is limited in the extent to which it can modify the outputs of intermediate layers. We conjecture that \pyoi limits the degrees of freedom with which a network can overfit its representation in response to early target-fitting objectives, which may also reduce the flexibility of the network to make the changes necessary to fit the current value function. In cases where representation collapse is not a concern, this will have the effect of slowing down progress. In such cases, increasing the dimension of the layer to which we apply \pyoi should give the network more degrees of freedom to fit its targets, and so reduce the performance gap induced by the regularization.
\begin{figure}
        \includegraphics[height=.14\textheight,keepaspectratio]{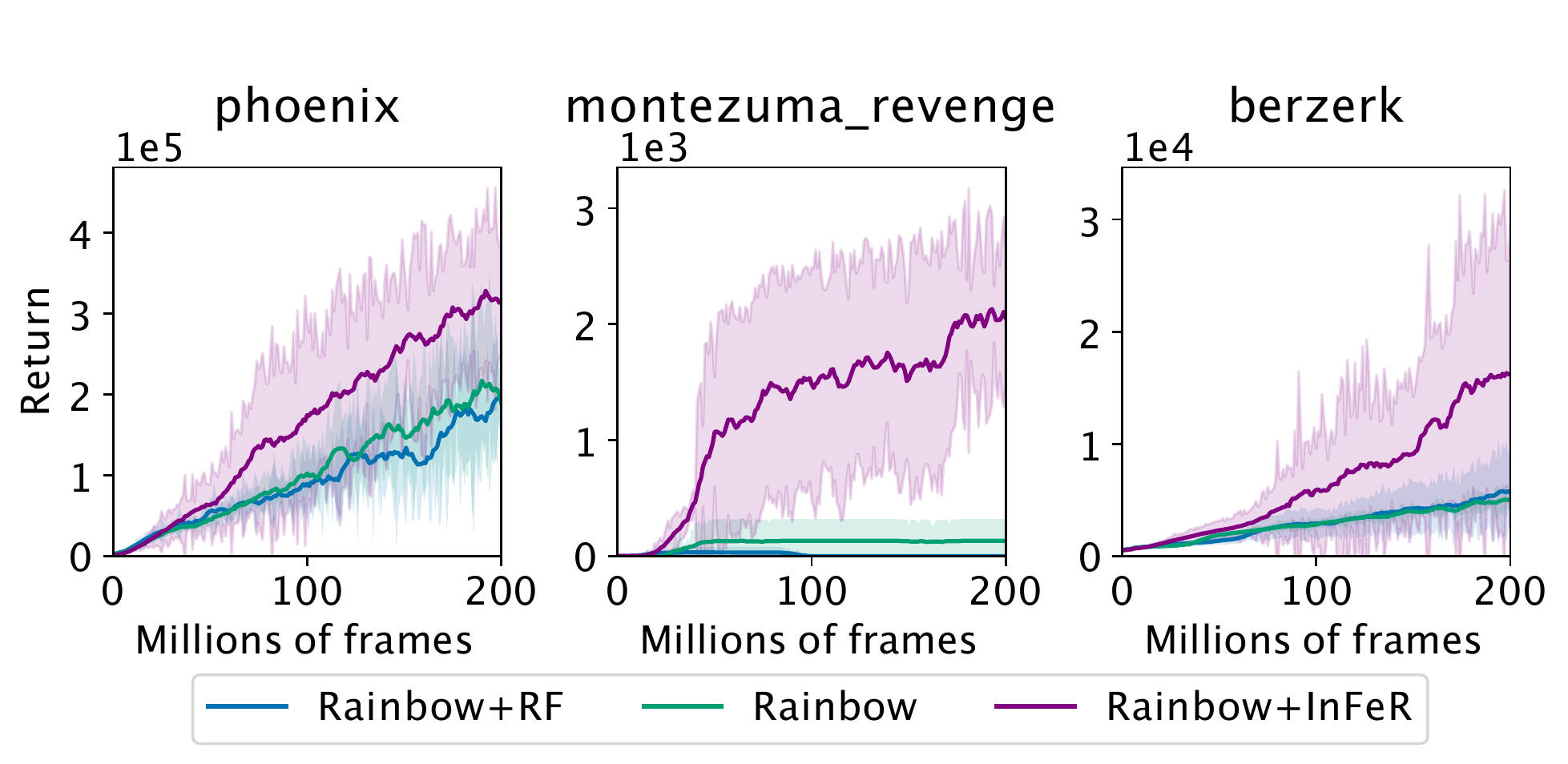}
        \includegraphics[height=.14\textheight,keepaspectratio]{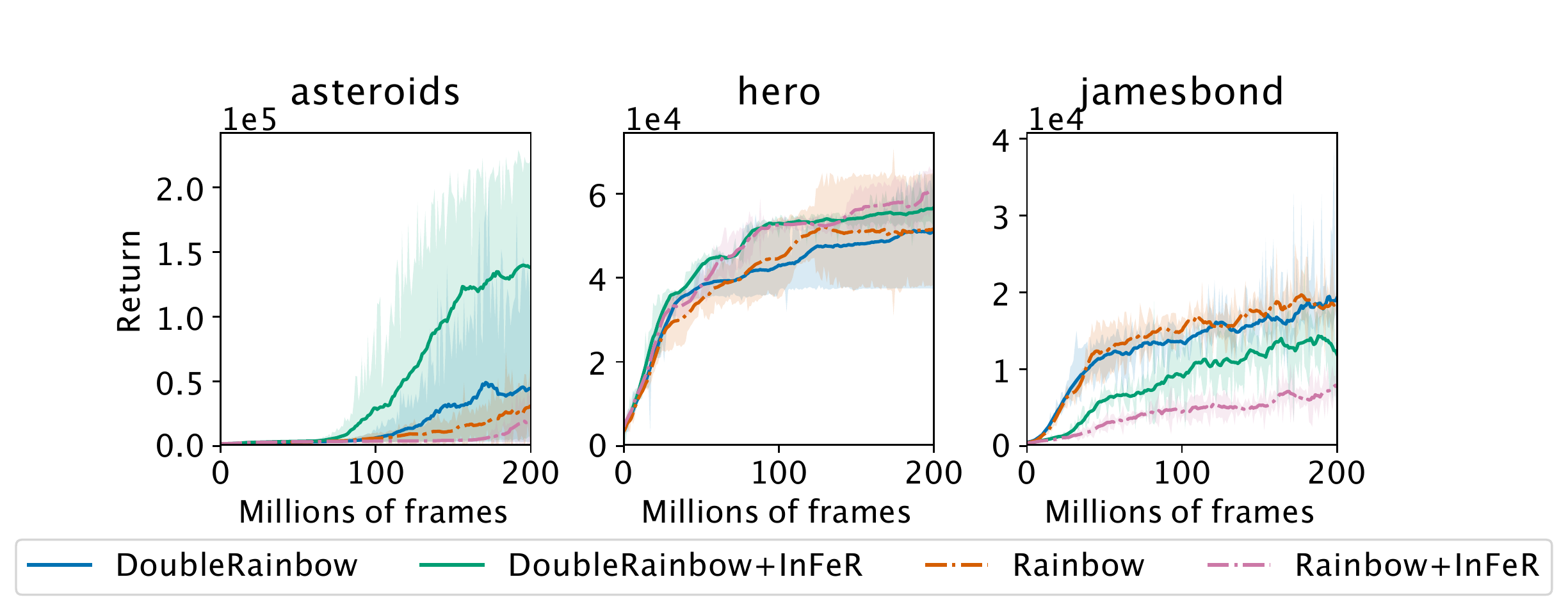}
        \caption[Deeper investigation into the effect of \pyoi.]{Left: agent performance does not improve over baseline when random features are added to the representation. Right: doubling the width of the neural network narrows the performance gap in games on which \pyoi under-performed relative to Rainbow.}
        \label{fig:double}
\end{figure}

\hypothesis{2}{\pyoi slows down the rate at which the learned features at every layer of the network can drift from their initialization in function space, regularizing the learning dynamics of the entire network. The precise subspace spanned by the auxiliary weights is not directly useful.\label{hyp:subspace}}

We test this hypothesis by doubling the width of the penultimate network layer and comparing the performance of \pyoi and Rainbow on games where \pyoi hurt performance in the original network. We see in Figure~\ref{fig:double} that increasing the network's size reduces, eliminates, or in some cases reverses the performance gap induced by \pyoi in the smaller architecture. We therefore conclude that the principal mechanism by which \pyoi affects performance is by regularizing the entire network's optimization dynamics. This finding has intriguing implications on the optimal network size in reinforcement learning problems, suggesting that given suitable regularization, performance in some environments may be improved simply by increasing the network size.

\section{Conclusions}

This chapter has identified a fundamental challenge facing deep RL agents: loss of the capacity to distinguish states and represent new target functions over the course of training. We have shown that this phenomenon is particularly salient in sparse-reward settings, in some cases leading to complete collapse of the representation and preventing the agent from ever making learning progress. To address this, we proposed a novel regularizer, \pyoi, with the goal of preserving capacity, yielding improved performance across a number of tasks in which deep RL agents have historically struggled to match human performance. Further investigation into this method suggests that it is performing a form of function-space regularization on the neural network, and that settings where it appears to reduce performance are actually instances of under-parameterization relative to the difficulty of the task. Particularly notable is the effect of incorporating \infer in the hard exploration game of Montezuma's Revenge: its success here suggests that effective representation learning can allow agents to learn good policies in sparse-reward environments even under naive exploration strategies. 

Our findings open up a number of exciting avenues for future work in reinforcement learning and beyond to better understand how to preserve plasticity in non-stationary prediction tasks. For example, a deeper study into the mechanisms by which capacity is lost similar to that of \citet{zilly2021on} could provide further practically useful insights for the design of optimization algorithms and regularizers in deep learning. Of particular interest is the development of methods which preserve network plasticity. Since the submission of the paper on which this chapter is based, exciting related work has emerged proposing optimization \citep{dohare2021continual} and resetting \citep{nikishin2022primacy, zaidi2022does} methods to improve network plasticity. 

One mysterious finding that we have not yet fully explained is the nuanced relationship between \effdim, capacity, and generalization. We observed that \effdim and target-fitting capacity are often correlated, but that this is not always the case: in some cases, an increase in the \effdim of a network was accompanied by a decreased ability to fit `structured' families of target functions, such as those given by the outputs of randomly initialized neural networks. Recent work has confirmed similarly nuanced relationship between \effdim and performance in offline RL \citep{gulcehre2022empirical}. This observation suggests that the notion of capacity loss we discussed previously is multi-faceted: over the course of training, networks can develop inductive biases which make them better suited to fitting certain function classes than others. But why should the neural network, which is after all trained via bootstrapping on its own outputs, see a \textit{decreased} ability to fit the outputs of other initializations of the same architecture as it improves its ability to linearly disentangle states? Chapter~\ref{chp:gen-rl} will answer this question, bringing to the fore the question of how the learning dynamics discussed thus far come to bear on generalization in deep RL.

\chapter{Interference and generalization}

\label{chp:gen-rl}
\minitoc
\section{Introduction}
Capacity loss can be viewed as an extreme form of overfitting, where a neural network reduces not just its current performance on a set of related targets to the current training task, but also its ability to fit these targets even after many optimization steps. This chapter studies a weaker notion of overfitting, whereby training a neural network on one set of targets results in an inductive bias that is ill-suited to the value function that the network will eventually be tasked with representing. Key to this notion of overfitting is the concept of \textit{interference}, the degree to which an update to a function approximator's output given one input influences its predictions for other input observations. We saw a form of interference presented in Chapter~\ref{chp:supervised}, where we discussed the degree to which a gradient update on one minibatch reduced the loss on other minibatches in a training dataset. Here we will take a more generic view and focus on the magnitude of the change in the predicted value of one state as the result of a semi-gradient update on another. Interference can be viewed as an inductive bias encoding the {similarity} of the value function at different inputs. Function approximation schemes with weaker interference, such as those induced by tabular value functions or tile coding schemes, have been shown empirically to produce more stable behaviour and faster convergence in value-based algorithms on a number of classic control domains \citep{ghassian2020improving}. However, such schemes by construction require treating the value functions for different states independently, limiting the potential for a model’s predictions to generalize to new observations and resulting in \textit{memorization}.  

An array of prior empirical works demonstrate that value-based RL algorithms frequently overfit to their training environment's observations and dynamics \citep{lewandowskigeneralization,farebrother2018generalization,cobbe2021phasic, zhang2018study}, essentially `memorizing' the value function in a way that fails to generalize even to minor perturbations of the input.
While a diverse set of training methodologies seek to mitigate overfitting \citep{igl2019generalization, raileanu2021automatic}, the {source} of this pathology remains under-explored. This can be attributed in part to the challenge of even defining what type of generalization is desirable in RL: unlike in supervised learning, we typically do not have access to the true optimal value function or policy in the environment, which means we cannot directly compare the performance of a policy or the accuracy of a value function to some ground truth objective \citep{liu2020towards}. 

Instead, we consider interference as a proxy for generalization.
While networks with a large degree of interference may not always extrapolate correctly, networks with zero interference between states will not extrapolate at all, making generalization impossible. 
The importance of interference to generalization has been discussed previously in Chapters~\ref{chp:supervised} and~\ref{chp:invariance}, and was hinted at in the analysis of representation dynamics in Chapter~\ref{chp:rep-learning}, where we saw that neural networks tended to evolve representations under which states were easier to disentangle, provided that the environment offered a sufficiently dense reward signal. We now extend this analysis to understand why this phenomenon occurs, and whether excessive disentanglement can harm generalization when the agent is exposed to new observations.

Our primary contributions in this chapter will be twofold: first, to provide a rigorous theoretical and empirical analysis of the relationship between generalization, interference, and the dynamics of temporal difference learning; second, to study the effect of distillation, which avoids the pitfalls of temporal difference learning, on generalization to novel environments. 
Towards this first contribution, we will extend the analysis of Chapters~\ref{chp:rl-dynamics} and~\ref{chp:rep-learning} to show that the dynamics of temporal difference learning accelerate convergence along non-smooth components of the value function first, resulting in implicit regularization towards learned representations that generalize weakly between states. Our findings present an explanation for the observation noted widely across the literature that TD-learning produces representations that are particularly vulnerable to overfitting \citep{raileanu2021decoupling,zhang2018study}.  

We then evaluate whether these findings hold empirically across a range of popular deep RL benchmarks. We measure interference by constructing a summary statistic which evaluates the extent to which optimization steps computed for one state influence predictions on other states, which we call the \textit{update rank}. This metric is similar to the notion of feature rank proposed in Chapter~\ref{chp:rep-learning}, but applies to the updates performed by a network rather than its outputs. We find that value-based agents trained with temporal difference (TD) methods learn representations with weak interference between states, performing updates similar to those of a lookup table, whereas networks trained with policy-gradient losses learn representations for which an update on one state has a larger effect on the policy at other states.
Finally, we show that post-training policy distillation is a cheap and simple approach to improve the generalization and robustness of learned policies. We find that distillation is particularly effective at increasing \textit{smoothness} in the student network's output compared to that of the teacher. This property benefits interpolation and robustness to perturbations, however it is not a panacea: we see that increasing smoothness via distillation does not provide significant performance improvements on generalization to new tasks that are dissimilar to the training environment. This limitation will be addressed in Chapter~\ref{chp:icp}, where we will present a new representation-learning objective that targets generalization to novel environments by leveraging a notion of invariance.

\section{Learning dynamics and smoothness}\label{sec:learning-smoothness}
This section will explore a tension between learning dynamics in neural networks, which tend to `generalize-then-memorize' \citep{kalimeris2019sgd}, and the dynamics of temporal difference learning with tabular value functions, discussed in Section~\ref{sec:vf_gen}, which tend to pick up information about the value function's global structure only late in training.
We go on to study how these learning dynamics may affect the structure of gradient updates in the function approximation setting in Section~\ref{sec:fa_gen}.

\textbf{Eigendecomposition of transition operators.}
An important concept in our theoretical analysis will be that of the eigendecomposition of the environment transition matrix. We will follow the precedent of prior work, and that of Chapter~\ref{chp:rl-dynamics}, in considering diagonalizable transition matrices  \citep{machado2017laplacian, stachenfeld2017hippocampus, mahadevan2005proto}.
The relationship between the smoothness of an eigenvector and its corresponding value has been noted previously \citep{mahadevan2007proto}. However, prior discussion of this connection has defaulted to an intuitive notion of smoothness without providing an explicit definition; this intuitive notion is illustrated by the eigenvectors of the MountainCar environment visualized in Figure~\ref{fig:mdp}. We provide a concrete definition of the smoothness of a function on the state space $\states$ of an MDP $\mdp$ in order to provide an unambiguous characterization to which we will refer throughout this chapter.

\begin{definition}
Given a function $V : \states \rightarrow \mathbb{R}$, MDP $\mdp$, and policy $\pi$, we define its expected variation $\nu(V)$ as follows.
\begin{equation}\nu(V) = \sum_{x \in \states} |V(x) - \mathbb{E}_{P^\pi(x'|x)}V(x')| \; \end{equation}
We say $V$ is \emph{smooth} if $\nu(V)$ is small.
\end{definition}
This expression reveals a straightforward relationship between the eigenvalue $\lambda_i$ associated with a normalized eigenvector $v_i$ and the smoothness of that eigenvector.
\begin{equation}
   \nu(v_i) = \sum_{x\in \states}|v_i(x) - \mathbb{E}_{P^\pi(x'|x)} v_i(x') | = \sum_{x \in \states} |(1-\lambda_i) v_i(x) |
\end{equation}
In other words, the eigenvalue of an eigenvector precisely determines its smoothness. If $\lambda = 1$, for example, then the eigenvector must be constant over the MDP, whereas if $\lambda = -1$, then we have $\mathbb{E}_{P^\pi(x'|x)}[V(x')] = - V(x)$ and the expected value fluctuates between extremes when stepping from one state to another. The \textit{variance} over next-state values can in principle be large even for functions of low variation by our definition, though in our empirical evaluations (see e.g. Figure~\ref{fig:mc-ff}) smooth eigenvectors tended to also exhibit little variance. For our analysis of the \textit{expected} updates performed by TD learning, we will find the smoothness of the expected updates to be a more useful quantity than the variance. 
\begin{figure*}
    \centering
    \includegraphics[width=0.95\linewidth]{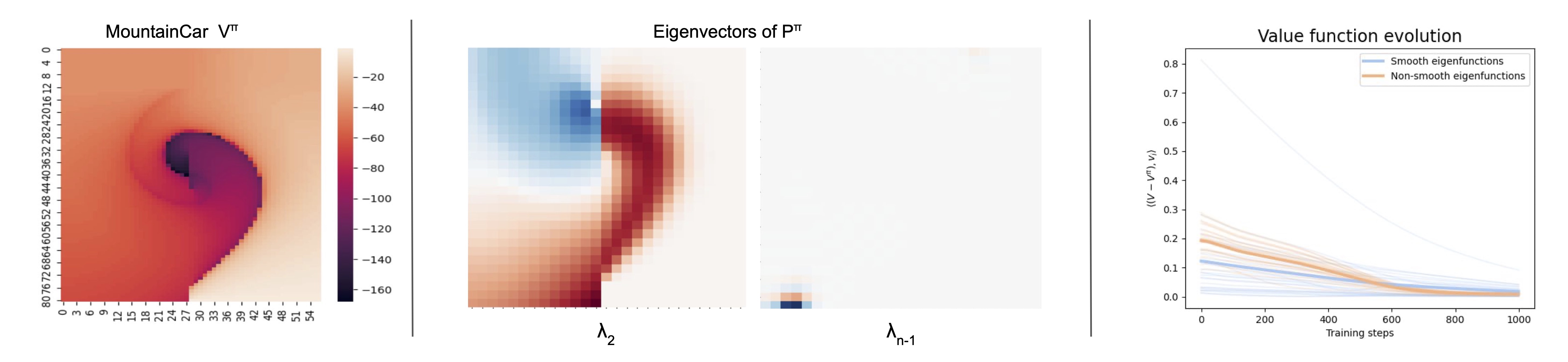}
    \caption[Value function evolution and eigendecomposition on the MountainCar environment.]{Left: value function of a near-optimal policy on MountainCar. States correspond to velocity (x-axis) and position (y-axis). Middle: eigenvectors associated with this policy computed for a discretization of the MountainCar state space. Right: value approximation error by eigen-basis coefficient along a trajectory generated by tabular TD updates with learning rate $\alpha = 0.1$ on the discretized MountainCar MDP. We compare 25 of the most-smooth eigenvectors with 25 eigenvectors corresponding to negative eigenvalues, and normalize the error by the magnitude of the projection of $V^\pi$ onto the basis spanned by each set of vectors. Each transparent line corresponds to the dot product with a different eigenvector, while the solid lines show the mean over each subspace. } 
    \label{fig:mdp}
\end{figure*}
\subsection{Tabular dynamics}
\label{sec:vf_gen}
In light of this more precise definition of smoothness, we now revisit our previous results on the convergence of value functions under temporal difference and Monte Carlo learning. We recall the analysis of Chapter~\ref{chp:rl-dynamics}, which expressed the dynamics of Monte Carlo (MC) updates as a continuous-time differential equation
\begin{equation*}
    \partial_t V_t = V^\pi - V_t
    \end{equation*}
where $V_t \in \mathbb{R}^\states$ is a function on the state space $\states$ of the MDP, resulting in the trajectory
    \begin{equation*}
    V_t = \exp ( -t)(V_0 - V^\pi) + V^\pi \, .
    \end{equation*}

Intuitively, this corresponds to a `straight line' trajectory where the estimated value function $V_t$ converges to $V^\pi$ along the shortest path in $\mathbb{R}^{\states}$. In practice, most deep RL algorithms more closely resemble temporal difference updates, which are expressed as 
\begin{align}
    \partial_t V_t &= -(I-\gamma P^\pi)V_t + R_t \\
    V_t &= \exp ( -t (I-\gamma P^\pi))(V_0 - V^\pi) + V^\pi \, . \label{eq:td_dynamics}
\end{align}

Whereas under Monte Carlo learning the value function converges equally quickly in all dimensions, its convergence under temporal difference learning depends on the environment transition matrix. As in Chapter~\ref{chp:rl-dynamics}, we will consider an MDP with a diagonalizable transition operator $P^\pi$ corresponding to some policy $\pi$. We write the decomposition of a function $V : \states \rightarrow \mathbb{R}$ as a sum of the eigen-basis vectors of $P^\pi$, written $\{v_1, \dots v_{|\states|} \}$, obtaining $V = \sum_{i=1}^{|\states|}\alpha_i v_i$ for some (unique) set of coefficients $(\alpha_i)_{i=1}^{|\states|}$.
Under this decomposition, we can show that a predicted value function trained via TD learning will converge more slowly along smooth eigenvectors of $P^\pi$.

\begin{restatable}{obs}{convergence}\label{obs:convergence}
    Let $P^\pi$ be diagonalizable, with eigenvectors $v_1, \dots, v_{|\states|}$ corresponding to eigenvalues $\lambda_1 > \dots > \lambda_{|\states|}$, and let $V_t$ be defined as in \eqref{eq:td_dynamics}. Write $V_t = \sum_{i=1}^{|\states|} \alpha^t_i v_i$ to express the value function at time $t$ with respect to the eigen-basis $\{v_i\}$. Then the convergence of $V_t$ to the value function $V^\pi = \sum_{i=1}^{|\states|} \alpha^\pi_i v_i$ can be expressed as follows.
    \begin{align*}
       \alpha^t_i - \alpha^\pi_i &=  \exp(-t(1-\gamma \lambda_i)) (\alpha_i^0 - \alpha_i^\pi)
    \end{align*}
\end{restatable}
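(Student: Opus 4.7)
The plan is to invoke the closed-form TD trajectory already stated in \eqref{eq:td_dynamics} and simply diagonalise. Starting from
\[
V_t - V^\pi = \exp\bigl(-t(I - \gamma P^\pi)\bigr)(V_0 - V^\pi),
\]
I would first note that any eigenvector $v_i$ of $P^\pi$ with eigenvalue $\lambda_i$ is also an eigenvector of $I - \gamma P^\pi$ with eigenvalue $1 - \gamma\lambda_i$, and therefore an eigenvector of the matrix exponential $\exp(-t(I - \gamma P^\pi))$ with eigenvalue $\exp(-t(1-\gamma\lambda_i))$. This follows from the power-series definition of the matrix exponential, since each power $(I-\gamma P^\pi)^k$ acts on $v_i$ by multiplication by $(1-\gamma\lambda_i)^k$.

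Next, I would expand $V_0 - V^\pi$ in the eigen-basis. Writing $V_0 = \sum_i \alpha_i^0 v_i$ and $V^\pi = \sum_i \alpha_i^\pi v_i$, linearity gives $V_0 - V^\pi = \sum_i (\alpha_i^0 - \alpha_i^\pi) v_i$. Applying the matrix exponential term by term using the observation above yields
\[
V_t - V^\pi = \sum_{i=1}^{|\states|} (\alpha_i^0 - \alpha_i^\pi)\, \exp\bigl(-t(1-\gamma\lambda_i)\bigr)\, v_i.
\]
On the other hand, by the defining expansion, $V_t - V^\pi = \sum_i (\alpha_i^t - \alpha_i^\pi) v_i$. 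Because the eigenvalues $\lambda_1 > \cdots > \lambda_{|\states|}$ are pairwise distinct, the corresponding eigenvectors $\{v_i\}$ are linearly independent and form a basis of $\mathbb{R}^{|\states|}$, so expansions in this basis are unique. Matching coefficients gives $\alpha_i^t - \alpha_i^\pi = \exp(-t(1-\gamma\lambda_i))(\alpha_i^0 - \alpha_i^\pi)$, which is exactly the claim.

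There is essentially no technical obstacle here; the entire proof is a one-line diagonalisation of the solution in \eqref{eq:td_dynamics}. The only thing worth checking for rigour is that the decomposition into the $\{v_i\}$ basis is well-defined, which is immediate from diagonalisability combined with the strict ordering of eigenvalues assumed in the statement. The value of the observation lies not in its difficulty but in its qualitative reading: since the decay rate along direction $v_i$ is $1 - \gamma\lambda_i$, components along smooth eigenvectors (large $\lambda_i$, close to $1$) decay slowest while components along non-smooth eigenvectors (small or negative $\lambda_i$) decay fastest. This is the observation that will be leveraged throughout the remainder of Section~\ref{sec:learning-smoothness} to argue that TD learning converges along non-smooth components first, in tension with the generalise-then-memorise behaviour of neural networks.
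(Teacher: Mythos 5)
Your proposal is correct and follows essentially the same route as the paper's proof in Appendix~\ref{apx:proofs}: diagonalise $(I-\gamma P^\pi)$, expand $V_0 - V^\pi$ in the eigen-basis, push the matrix exponential through term by term, and match coefficients. Your write-up is in fact slightly cleaner in spelling out why uniqueness of the basis expansion (linear independence of the $v_i$) licenses the coefficient matching, which the paper leaves implicit.
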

\begin{proof}[Proof Sketch]
The full proof of this result can be found in Appendix~\ref{apx:proofs}. The key step lies in expressing the difference between $V_t$ and $V^\pi$ in terms of the eigen-basis $v_1, \dots, v_{|\states|}$, for which we can obtain a closed-form expression based on the eigenvalues $\lambda_1, \dots, \lambda_{|\states|}$. We consider the coefficient of $V_t$ corresponding to the basis vector $v_i$, which can be written out as follows.
\begin{align*}
    |V_t - V^\pi|[i] &= |\alpha^t_i - \alpha^\pi_i| \\
    &=   | \exp(-t(1-\gamma \lambda_i)) (\alpha_i^0 - \alpha_i^\pi) + \alpha_i^\pi  - \alpha^\pi_i| \\
    &=|\exp(-t(1-\gamma \lambda_i))  (\alpha_i^0 - \alpha_i^\pi) |\\
    &= \exp(-t(1-\gamma \lambda_i)) | (\alpha_i^0 - \alpha_i^\pi) | 
\end{align*}
We note that if the basis vectors are not orthogonal, this coefficient will not be equal to the projection of $V_t$ onto the basis vector $v_i$; however, understanding the evolution of coefficients still gives some insight into the convergence of smooth as opposed to non-smooth components of the value function space.
\end{proof}
The implications of Observation~\ref{obs:convergence} on the learned value function depend to some extent on the eigendecomposition of $V^\pi$. If $V^\pi$ is equal to the constant function, then we expect the high-frequency components of $V_t$ to quickly converge to zero. If $V^\pi$ puts weight on non-smooth eigenvectors, then early values of $V_t$ may assign disproportionately large weight to these components relative to their contribution to $V^\pi$. In practice, value functions tend to exhibit a mixture of smooth and discontinuous regions. The corresponding expression of $V^\pi$ with respect to the eigen-basis of $P^\pi$ consequently places non-zero coefficients on eigenvectors corresponding to negative eigenvalues in order to fit this discontinuity, though its spectrum is dominated by smooth eigenvectors. We include some illustrative examples of the spectra of value functions under different assumptions on the reward and transition matrix $P^\pi$ in Appendix~\ref{appx:numerical}, finding in most cases that the smooth eigenvectors of the MDP tend to be assigned greater mass by the value function. 
The following result highlights that non-smooth components of a predicted value function, while contributing relatively little to the Monte Carlo error, contribute disproportionately to the TD error, providing an incentive to fit these components early in training.

\begin{restatable}{thm}{tderror}
Let $P^\pi$ be real diagonalizable with eigenvalues $\lambda_1 > \dots \geq \lambda_n$ and $(v_k)_{k=1}^n$ the corresponding (normalized) eigenvectors. Then for any value function $V$, the TD error $\TD(V_t) = \|V_t - T^{\pi} V_t\|^2$ can be bounded as
\begin{align}
    \|\TD(V_t) \|^2 &= \| T^\pi V_t - V_t \|^2\\
    &= \| \sum (1-\gamma \lambda_i)(\alpha^\pi_i -\alpha^t_i)(v_i)\| ^2  \\
    &\leq \sum_{i=1}^n (\alpha^\pi_i - \alpha^t_i)^2 (1-\gamma \lambda_i)^2  \; 
\end{align}
with equality when $P^\pi$ has orthogonal eigenvectors.
\end{restatable}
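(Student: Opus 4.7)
The plan is to reduce the TD error to the action of the operator $(I-\gamma P^\pi)$ on the error $V_t - V^\pi$, then diagonalize using the eigenbasis of $P^\pi$. First, I would exploit the fact that $V^\pi$ is the fixed point of the Bellman policy-evaluation operator, so $T^\pi V^\pi = V^\pi$. Because $T^\pi V = R^\pi + \gamma P^\pi V$ is affine and the reward terms cancel when we subtract the two Bellman equations, we get
\begin{equation*}
T^\pi V_t - V_t \;=\; (T^\pi V_t - T^\pi V^\pi) - (V_t - V^\pi) \;=\; \gamma P^\pi (V_t - V^\pi) - (V_t - V^\pi) \;=\; -(I-\gamma P^\pi)(V_t - V^\pi).
\end{equation*}

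Next, I would plug in the eigenbasis expansion $V_t - V^\pi = \sum_{i=1}^n (\alpha_i^t - \alpha_i^\pi)\,v_i$ and use $P^\pi v_i = \lambda_i v_i$, so that $(I-\gamma P^\pi)v_i = (1-\gamma\lambda_i)v_i$. Distributing gives
\begin{equation*}
T^\pi V_t - V_t \;=\; -\sum_{i=1}^n (1-\gamma\lambda_i)(\alpha_i^t - \alpha_i^\pi)\,v_i \;=\; \sum_{i=1}^n (1-\gamma\lambda_i)(\alpha_i^\pi - \alpha_i^t)\,v_i,
\end{equation*}
which is the exact (second-line) identity of the theorem. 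This part is just linear algebra and should go through cleanly.

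The final step is to bound the squared norm of this expression. With the eigenvectors taken to be unit-norm, the cleanest case is when $P^\pi$ has orthogonal eigenvectors: then the cross terms $\langle v_i, v_j\rangle$ vanish for $i\neq j$, and the Pythagorean identity yields
\begin{equation*}
\|T^\pi V_t - V_t\|^2 \;=\; \sum_{i=1}^n (1-\gamma\lambda_i)^2(\alpha_i^\pi - \alpha_i^t)^2,
\end{equation*}
giving the claimed equality case. For the inequality in general, the main obstacle is that without orthogonality the Gram matrix $V^\top V$ of stacked eigenvectors introduces cross terms $2\sum_{i<j}c_ic_j\langle v_i,v_j\rangle$ which need not have a definite sign, so the stated bound is not immediate from a naïve expansion. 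I expect to resolve this by either (i) fixing a normalization convention in which the statement reduces to an application of Cauchy--Schwarz or a Gram-matrix spectral-radius bound that tightens to equality precisely when $V^\top V = I$, or (ii) interpreting the right-hand side as a coefficient-norm bound and invoking the upper Bessel-type inequality available for appropriately scaled bases. Either way, the equality case under orthogonality is the substantive content, and the general inequality is an accompanying loose bound on the squared Euclidean norm of a linear combination of unit vectors.
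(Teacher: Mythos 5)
Your derivation of the exact identity is the same as the paper's: the paper adds and subtracts $\gamma P^\pi V^\pi$ and invokes $V^\pi = R^\pi + \gamma P^\pi V^\pi$ to obtain $T^\pi V_t - V_t = -(I-\gamma P^\pi)(V_t - V^\pi)$, then expands in the eigenbasis exactly as you do. That part is correct and complete.

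The inequality is where your proposal stops short, and your suspicion about it is well-founded: the paper's own proof simply writes the final $\leq$ with no justification, and the step cannot in fact be justified in the generality claimed. Writing $c_i = (1-\gamma\lambda_i)(\alpha_i^\pi - \alpha_i^t)$, the claim is $c^\top G c \leq c^\top c$ for the Gram matrix $G$ of the unit eigenvectors, i.e.\ $G \preceq I$. But $G$ is positive definite with unit diagonal, so $\mathrm{tr}(G) = n$; if additionally $G \preceq I$ then every eigenvalue of $G$ equals $1$ and $G = I$. Hence the inequality holds for \emph{all} coefficient vectors only when the eigenbasis is orthonormal — precisely the stated equality case — and for any non-orthogonal diagonalizable $P^\pi$ there exist value functions for which the bound is violated (take $c$ along a top eigenvector of $G$, whose eigenvalue exceeds $1$). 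Neither of your proposed repairs (Cauchy--Schwarz or a Bessel-type bound) can rescue it; those give constants depending on the conditioning of the eigenbasis, not the bare sum of squared coefficients. So you have not missed an idea the paper supplies — you have correctly identified that the theorem's inequality requires either orthogonality of the eigenvectors (e.g.\ symmetric or reversible $P^\pi$) or an extra factor of $\|G\|_{\mathrm{op}}$ on the right-hand side.
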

The proof of this result can be found in Appendix~\ref{apx:proofs}. Monte Carlo updates, which simply regress on the value function, give equal weight to errors along any component of the basis. These incentives provide some intuition for the different trajectories followed by Monte Carlo and TD updates: in order to minimize the TD loss, the predicted value $V_t$ must quickly become accurate along non-smooth components of the value function $V^\pi$; however, its error due to smooth components such as the value function's bias term will have little effect on the loss and so converges more slowly. We provide an illustrative example of the relationship between the eigenvalue associated with a subspace and the convergence rate of the value function in that subspace in Figure~\ref{fig:mdp}.

\subsection{TD learning with function approximation}
\label{sec:fa_gen}
\begin{figure*}
    \centering
    \includegraphics[width=0.55\linewidth]{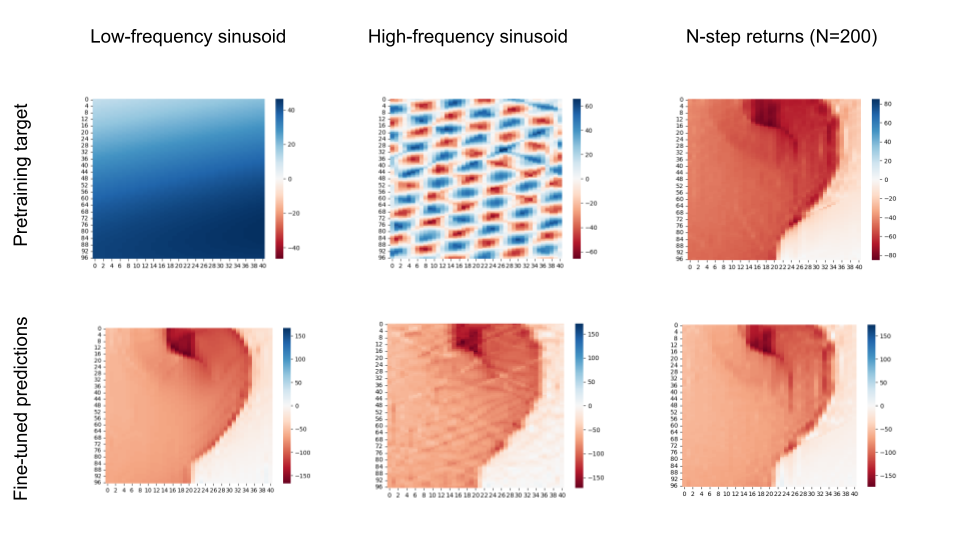}
    \includegraphics[width=0.4\linewidth]{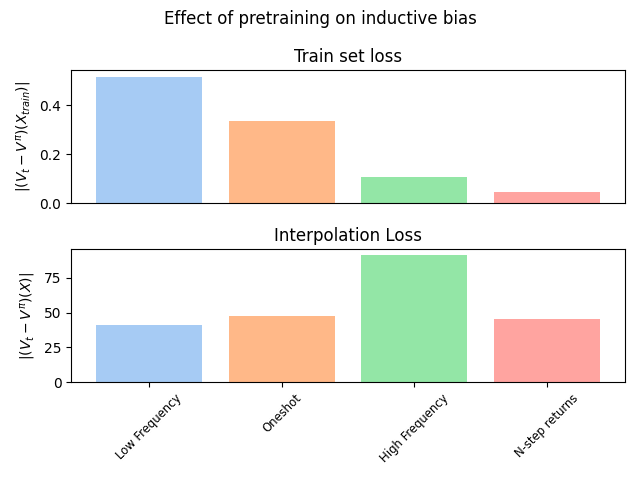}
    \caption[Networks trained to fit high-frequency target functions exhibit pathological interpolation properties when later fine-tuned on a value function.]{Networks trained to fit high-frequency target functions exhibit pathological interpolation properties when later fine-tuned on a value function. Left: visualization of pre-training targets (top) and final value estimate (bottom) after the pre-trained network is fine-tuned on the value function. Right: loss on the set of training states (top) and a finer-grained set of states which interpolate the training set (bottom) of each fine-tuned network.}
    \label{fig:mc-ff}
\end{figure*}
Most function approximation schemes leverage the assumption that states which are close together in observation space are likely to have similar values; i.e. they encode a preference towards smooth (with respect to the observations) functions. This pushes against the tendency of temporal difference updates to encourage the learned value function to fit the components of the value function with large variation first.
To investigate this tension, we consider the \textit{kernel gradient descent} regime.
\subsubsection{Kernel gradient descent}
Formally, a kernel is a positive definite symmetric function $K : \mathcal{X} \times \mathcal{X} \rightarrow \mathbb{R}$. In our case, we will define $\mathcal{X}$ to be the state space of an MDP. Letting $\mathbf{x} \subseteq \mathcal{X}$, we denote by $\tilde{K}$ the (symmetric) matrix $K(\mathbf{x},\mathbf{x})$ with entries $K(\mathbf{x},\mathbf{x})_{i,j} = K(\mathbf{x}_i, \mathbf{x}_j)$. Loosely speaking, a kernel can be thought of as measuring the similarity between two states, allowing us to encode certain forms of inductive bias into the learning dynamics of the agent. Importantly, the similarity of two states under $K$ does not inform us about how similar the states' initial values are, but rather how an update to the value function at one state influences the value of the other; in other words, it is a proxy for the \textit{interference} between two states.
Under kernel gradient descent, the trajectory of a function is defined in terms of a kernel $K$ and the function-space gradient of a cost function. We can translate TD semi-gradient updates into the kernel (semi-)gradient descent regime as follows, where we let $R^\pi$ denote the expected reward under policy $\pi$.
\begin{equation}
    \partial_t V_t = \tilde{K} ((\gamma P^\pi - I)V_t + R^\pi)
\end{equation}
It is straightforward then to obtain analogous results as before on the convergence of $V_t$ to $V^\pi$ based on the eigendecomposition of the matrix $\tilde{K} (\gamma P^\pi - I)$ in cases where this matrix is positive definite, though in general it may have negative eigenvalues. This decomposition will not in general have a closed form in terms of the eigendecompositions of $\tilde{K}$ and $P^\pi$, but special cases have been studied in the setting of linear regression by \citet{ghosh2020representations} and can be related to kernel gradient descent straightforwardly as discussed in Appendix~\ref{apx:proofs}. This setting also describes the dynamics of neural networks in the limit of infinite width \citep{jacot2018neural, fort2020deep, lee2020finite}, which follow kernel gradient descent with respect to the neural tangent kernel.

A more interesting case occurs when we assume some states in the environment are not updated during training. In this case, we can characterize their evolution (and potential influence on the training set via bootstrapping) using the direct sum of $K(\Xtrain, \Xtrain)$ and $K(\Xtest, \Xtrain)$. 
\begin{restatable}{thm}{ntk}\label{thm:ntk}
Let $K$ be a kernel and $\pi$ a fixed policy in an MDP with finite state space $X$. Let $\Xtrain \subset \states$ be the set of states in the support of $\pi$, $\Xtest = \states \setminus \Xtrain$, and let $V_t$ be a value trajectory obtained by applying kernel semi-gradient updates to some initial value function $V_0$ with kernel $K$. Let $K_{\mathrm{all}}$ be defined as
\begin{equation}
    K_{\mathrm{all}} = K(\Xtrain, \Xtrain) \oplus K(\Xtest, \Xtrain)
\end{equation}Then the trajectory of $V_t$ on the entire state space $X$ will be as follows,
    \begin{align}
        \partial_t V_t(X) &= (K_{\mathrm{all}})  [ (T^\pi V_t - V_t) (\Xtrain)]\;.
    \end{align}
\end{restatable}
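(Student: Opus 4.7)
The plan is to reduce this theorem to a short computation combining the kernel semi-gradient TD update rule from Section~\ref{sec:fa_gen} with the observation that only training states contribute to the TD residual. The whole statement should fall out from writing the dynamics in coordinate form and then exploiting the block structure forced by the partition $\states = \Xtrain \sqcup \Xtest$.

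First, I would unroll the expression $\partial_t V_t = \tilde{K}((\gamma P^\pi - I)V_t + R^\pi)$ pointwise over the finite state space. For every $x \in \states$ this gives
\[
\partial_t V_t(x) = \sum_{x' \in \states} K(x,x')\, g_t(x'), \qquad g_t(x') := (T^\pi V_t - V_t)(x'),
\]
where $g_t$ is the per-state TD residual carried by the kernel. The key observation is that $g_t$ is identically zero on $\Xtest$: the TD cost driving the semi-gradient is an expectation under the on-policy visitation distribution $d^\pi$, whose support is $\Xtrain$ by definition, so its functional derivative with respect to $V(x')$ vanishes for $x' \in \Xtest$. This collapses the outer sum to $\partial_t V_t(x) = \sum_{x' \in \Xtrain} K(x,x')\, (T^\pi V_t - V_t)(x')$ for every $x \in \states$, irrespective of whether $x$ itself lies in $\Xtrain$ or $\Xtest$.

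Second, I would split this identity along the partition. Restricting $x$ to $\Xtrain$ gives the $\Xtrain$-rows of the system as $K(\Xtrain,\Xtrain)$ acting on $(T^\pi V_t - V_t)(\Xtrain)$, and restricting $x$ to $\Xtest$ gives the $\Xtest$-rows as $K(\Xtest,\Xtrain)$ acting on the same vector. Stacking these two blocks as a single column-operator is exactly the direct sum $K_{\mathrm{all}} = K(\Xtrain,\Xtrain) \oplus K(\Xtest,\Xtrain)$, yielding the claimed identity $\partial_t V_t(X) = K_{\mathrm{all}}\,[(T^\pi V_t - V_t)(\Xtrain)]$.

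The main obstacle is making the first step, namely the statement that the semi-gradient signal is \emph{supported} on $\Xtrain$, fully rigorous; everything afterwards is linear algebra. This requires committing to a specific form of the TD cost (e.g.\ $\tfrac12\mathbb{E}_{x \sim d^\pi}[(T^\pi V - V)(x)^2]$ with the bootstrap target held fixed), computing its functional derivative, and checking that any state with $d^\pi(x') = 0$ drops out. A secondary but useful observation, which I would flag in the write-up because it clarifies why the theorem is meaningful, is that the bootstrap target $(T^\pi V_t)(\Xtrain)$ only depends on $V_t$ restricted to $\Xtrain$ (since $\pi$ never transitions out of its support). Consequently the $\Xtrain$-block dynamics form a closed autonomous system governed by $K(\Xtrain,\Xtrain)$, while the $\Xtest$ values evolve passively, driven entirely by the coupling $K(\Xtest,\Xtrain)$ to the training set — this is precisely the sense in which generalization to unvisited states is mediated by the kernel.
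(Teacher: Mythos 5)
Your core derivation of the identity is correct and is essentially the paper's own argument: the semi-gradient signal is supported on $\Xtrain$ (in the paper's linear-feature presentation this appears as the parameter update $\Phi_{\train}^\top\bigl((T^\pi V_t - V_t)(\Xtrain)\bigr)$, pushed forward to any state $x$ via $\phi(x)^\top$, i.e.\ via $K(x,\Xtrain)$), and stacking the $\Xtrain$- and $\Xtest$-rows gives the direct sum $K_{\mathrm{all}}$. Your route of writing the full-state-space dynamics and arguing the residual vanishes off $\Xtrain$ is a mild repackaging of the same computation, and your insistence on fixing the semi-gradient cost explicitly is, if anything, more careful than the paper's one-line appeal to the derivation of Jacot et al.

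However, your closing ``secondary observation'' is wrong, and it inverts the point of the theorem. You claim that $(T^\pi V_t)(\Xtrain)$ depends only on $V_t\vert_{\Xtrain}$ because $\pi$ never transitions out of its support, so that the $\Xtrain$-block is a closed autonomous system and $\Xtest$ evolves purely passively. The setting the theorem is built for is precisely the one where this fails: the bootstrap target at a training state is $\gamma (P^\pi V_t)(x) + R^\pi(x)$, and $P^\pi$ may place mass on successor states in $\Xtest$ — e.g.\ under off-policy evaluation where the bootstrap policy differs from the data-collecting policy, or when transitions are dropped from the replay buffer by a prioritization scheme. The paper stresses immediately after the proof that the apparent independence of the dynamics from $V_t(\Xtest)$ is an artefact of the notation $(T^\pi V_t)(\Xtrain) = \Pi_{\Xtrain}\bigl[\gamma P^\pi V_t(\Xtrain \oplus \Xtest) + R^\pi\bigr]$, and that this feedback from test-state predictions into the training dynamics — together with the resulting asymmetry of $K_{\mathrm{all}}$ viewed as an operator on $\mathbb{R}^{\mathcal{X}}$ — is exactly what distinguishes these dynamics from ordinary kernel regression and what can cause instability. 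If the training block really were closed, the theorem would reduce to standard kernel gradient descent plus a passive read-out, and the subsequent discussion of divergence would be vacuous. Drop that paragraph or replace it with the opposite statement.
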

A full derivation is provided in Appendix~\ref{apx:proofs}. These dynamics diverge notably from the standard kernel gradient descent regime in that changes to predictions on the test set have the potential to influence the dynamics of $V_t$ on the training set. This situation can arise when the agent is trained on a collection of sampled transitions which do not correspond to a contiguous trajectory, for example if transitions are removed from a replay buffer via some prioritization scheme, rather than in order of recency. In off-policy learning it can also arise when the policy used to construct the bootstrap target differs from that used to collect the replay buffer data. A large value of $K(\Xtest, \Xtrain)$ implies that updates to the training set hold influence over predictions on the test set, but at the cost of increasing asymmetry in $K_{\mathrm{all}}$ when viewed as an operator on $\mathbb{R}^\mathcal{X}$. In Appendix~\ref{appx:kernel-gd} we illustrate how this asymmetry can harm stability in the case of a simple radial basis function kernel. This can be viewed as a special case of off-policy temporal difference methods where the probability of sampling some states is set to zero \citep{ghosh2020representations}.

Combining insights from Theorem~\ref{thm:ntk} and Observation~\ref{obs:convergence}, we arrive at an intriguing conclusion: in the case of smooth kernels, the components of the value function most suitable to approximation via the kernel $K$ are precisely those which appear in the value estimate of the training set only later in the trajectory. As a result, the kernel does not receive the necessary information to generalize accurately to new observations until late in the training process. This observation runs contrary to the standard kernel regression regime, where one argument in support of early stopping is that kernel gradient descent methods converge along smooth components fastest \citep{jacot2018neural}. At the same time it is an obvious effect of bootstrapping, which requires that the agent update its predictions several times in order to propagate information about the value function through the entire input space. This effect is illustrated in Figure~\ref{fig:kernel-generalization} in Appendix~\ref{appx:kernel-gd}.
\subsubsection{Non-linear function approximation}
We saw in Chapter~\ref{chp:rep-learning} that the evolution of a network's representation is crucial to its ability to adapt to new learning signals. We now turn our attention to how the evolution of a network's features can influence its generalization properties. Our primary object of focus will be the gradient structure of a function approximator, whose analysis depends on the second-order effects of TD semi-gradient updates under finite step sizes. We consider the system
    \begin{equation} \label{eq:discrete_dynamics}
        \theta_{t+1} \gets \theta_t + \alpha \nabla_\theta V(\theta_t) \cdot [(\gamma P^\pi - I)V(\theta_t) + r ] 
    \end{equation} 
which can be viewed as an Euler discretization of the dynamics described in \eqref{eq:td_dynamics}. We will use the notation $f(\theta_t)$ to refer to the semi-gradient update on parameters $\theta_t$ inducing value function $V_{\theta_t}$, and write $\TD(\theta) = \frac{1}{2}\| V_\theta - \square T^\pi V_\theta \|^2$, where the $\square$ denotes a stop-gradient. Using the continuous time system in \eqref{eq:td_dynamics} to approximate these updates for time $t$ will gradually accumulate increasing errors, proportional to $(\alpha n)^2$, as it does not take into account the effect of the discrete step size on higher-order gradients of $V_\theta$. We apply a similar analysis to that of \citet{barrett2021implicit} and \citet{ smith2020origin} to understand the effect of the discrete learning dynamics on the gradient structure of $V_\theta$ itself. We let
\begin{equation} \small \label{eq:second-correction}
    f_1(\theta) = -\frac{1}{2} \nabla_\theta \| \nabla_\theta \TD(\theta) \|^2 + \gamma (\nabla_\theta ^\top V P^\pi \nabla_\theta V) f(\theta)
\end{equation}
to obtain a second-order correction describing the effect of gradient descent on the gradient structure of the learned representation.
\begin{restatable}[Second-order dynamics]{obs}{theoremsecond} \label{thm:second}
    Let $\theta_t$ be defined by the discrete-time system (\ref{eq:discrete_dynamics}) with step size $\alpha$. Let $f_1 (\theta)$ be defined as in (\ref{eq:second-correction}). Let $\ttheta_t$ denote the trajectory obtained by following the dynamics:
    \begin{align}
      \partial_t  \ttheta_t = f(\ttheta_t) + \frac{\alpha}{2} f_1(\ttheta_t)\;.
    \end{align}
    Then we have $ \theta_{n} \approx \ttheta_{n\alpha} + O( (n\alpha) ^3)$, where $\ttheta_{n\alpha}$ denotes the value of $\ttheta_t$ at time $t=n\alpha$.
\end{restatable}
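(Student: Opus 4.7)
The plan is to apply backward error analysis, following the modified-equation technique used by \citet{barrett2021implicit} and \citet{smith2020origin} to study implicit regularization in SGD. The goal is to exhibit a continuous-time vector field whose flow shadows the discrete Euler trajectory $\theta_{n+1} = \theta_n + \alpha f(\theta_n)$ to one order higher than the naive flow $\partial_t\ttheta = f(\ttheta)$, and to identify this modified vector field with $f + (\alpha/2)f_1$ from the theorem.

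First I would Taylor expand the modified flow. Using $\partial_t \ttheta = f(\ttheta) + (\alpha/2)\, f_1(\ttheta)$ together with the chain rule, $\partial_t^2 \ttheta = \nabla f(\ttheta)\cdot f(\ttheta) + O(\alpha)$, so that
\begin{equation*}
\ttheta_{t+\alpha} = \ttheta_t + \alpha\, f(\ttheta_t) + \tfrac{\alpha^2}{2}\bigl[f_1(\ttheta_t) + \nabla f(\ttheta_t)\cdot f(\ttheta_t)\bigr] + O(\alpha^3).
\end{equation*}
Comparing with the exact discrete update $\theta_{n+1} = \theta_n + \alpha f(\theta_n)$, the per-step discrepancy is $O(\alpha^3)$ precisely when $f_1(\theta) = -\,\nabla f(\theta)\cdot f(\theta)$. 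To promote this per-step identity to a global bound, I would invoke a standard discrete Gronwall argument: provided $f$ and $f_1$ are Lipschitz on a ball containing the trajectory, the $O(\alpha^3)$ local errors accumulate with at most a multiplicative factor $\exp(L\cdot n\alpha)$, producing the stated global error.

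The main computational obstacle, and the step I expect to require the most care, is verifying that the $f_1$ exhibited in the theorem indeed equals $-\nabla f \cdot f$. Writing $f(\theta) = J(\theta)^\top \delta(\theta)$ with $\delta = (\gamma P^\pi - I)V + r$ and $J$ the Jacobian of $V$ with respect to $\theta$, the product rule gives
\begin{equation*}
\nabla f \;=\; H(\delta) + J^\top(\gamma P^\pi - I)J,
\end{equation*}
where $H(\delta)$ denotes the symmetric tensor of second derivatives of $V$ contracted against the Bellman residual. Crucially, the semi-gradient convention (the stop-gradient in $\TD$) makes $f$ a non-conservative vector field whenever $P^\pi$ is not symmetric, so $\nabla f$ has a nontrivial antisymmetric component $\tfrac{1}{2}J^\top[\gamma P^\pi - \gamma(P^\pi)^\top]J$. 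Contracting the symmetric part of $\nabla f$ against $f$ reproduces the term $-\tfrac{1}{2}\nabla\|\nabla\TD\|^2$ in the stated $f_1$ via the identity $\nabla\|f\|^2 = 2(\nabla f)^\top f$ together with the self-adjointness of the symmetric piece; the antisymmetric part then accounts for the residual correction $\gamma(\nabla_\theta^\top V\, P^\pi\, \nabla_\theta V)\, f$. Careful bookkeeping of transposes and stop-gradients throughout this expansion constitutes essentially the entire content of the algebraic verification.
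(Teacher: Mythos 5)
Your approach is the same backward-error-analysis / modified-equation argument the paper uses (explicitly following \citet{barrett2021implicit} and \citet{smith2020origin}), so the core is identical; the differences are in how the two remaining steps are handled. For the global bound, the paper does not use a Gronwall argument at all: it performs a single Taylor expansion of the flow in the variable $n\alpha$, treating the whole horizon as the small parameter, which is what produces the $O((n\alpha)^3)$ remainder in the statement. Your per-step $O(\alpha^3)$ local error plus discrete Gronwall is the more rigorous route for multi-step horizons, but note it yields a bound of the form $n\alpha^3 e^{L n\alpha}$ rather than literally $(n\alpha)^3$, so it proves a slightly different (and arguably more honest) estimate than the one stated. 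For the algebraic identification, your requirement $f_1 = -\nabla f \cdot f$ is the sign that actually makes the flow $\partial_t \ttheta = f + \tfrac{\alpha}{2} f_1$ shadow the Euler iterates; the paper's appendix writes $f_1 = +\nabla f \cdot f$ and then matches it to the $+\tfrac{\alpha}{2}f_1$ system, which is internally inconsistent by a sign (and \eqref{eq:second-correction} itself mixes the two conventions between its two terms), so your bookkeeping is if anything cleaner than the paper's. Finally, your symmetric/antisymmetric split of the non-conservative Jacobian $\nabla f$ is an equivalent but differently organized route to the paper's decomposition, which instead peels off the Hessian of the frozen-target loss $\tfrac{1}{2}\|V_\theta - \square T^\pi V_\theta\|^2$ (giving the $\nabla_\theta\|\nabla_\theta\TD(\theta)\|^2$ term) and leaves $\gamma(\nabla_\theta^\top V\, P^\pi\, \nabla_\theta V)f$ as the residual; both arrive at the same two-term structure, and in both cases the entire content is, as you say, careful tracking of transposes and stop-gradients.
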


The proof of this result follows a similar structure as that of \citet{smith2020origin} and can be found in Appendix~\ref{apx:proofs}; the key difference from the gradient descent regime is that we must now take into account the influence of gradient updates on the loss via the Bellman targets. This distinction presents itself in the form of $f_1$ constructed in \eqref{eq:second-correction}: the first term in the sum consists of a semi-gradient norm penalty term with respect to the instantaneous TD error, while the second term partially offsets this penalty along smooth components of the MDP, accounting for the target drift due to gradient updates. The first term is the standard gradient norm penalty observed in supervised regression. The second term has been studied in the setting of offline RL by \citet{kumar2021dr3}, where it plays a greater role than in the online RL problems in this chapter. 

The effect of this penalty term on the evolution of interference in deep neural networks is difficult to analyze in closed form due to the influence of the network Hessian, however our empirical observations from Chapter~\ref{chp:rep-learning} provide an illustrative example.  We recall that the previous section showed that early TD targets will typically be less smooth than the true value function. The gradient norm penalty will thus implicitly discourage interference between states in order to fit this relatively discontinuous function in a manner robust to noise in the optimization process. This observation is foreshadowed in the findings of Chapter~\ref{chp:rep-learning}, which found that deep RL agents learn to map nearby states to distinct feature vectors in dense-reward environments. We illustrate an analogous effect on interference in Figure~\ref{fig:qualitative}, where networks trained in dense-reward games (whose early TD targets will exhibit greater variation) exhibit weaker interference after training than they did at initialization. We can further observe the long-term impact of fitting non-smooth targets on the inductive bias of a network in Figure~\ref{fig:mc-ff}, where networks trained to fit high-frequency sinusoid functions exhibit pathological interpolation behaviour when later fine-tuned on the target value function.

In combination, the findings of this section suggest that the dynamics of temporal difference learning work to discourage interference between states in deep RL by fitting high-frequency components of the value function early in training while also encouraging robustness of the loss to noisy optimization steps. While this may result in more stable learning, as highlighted in Theorem~\ref{thm:ntk}, it has the double-edged effect of reducing the degree to which the network may generalize to novel observations. We now leverage these results to gain insight into deep RL agents trained in rich-observation environments. 

\section{Generalization and interference in deep RL}
\label{sec:rank-exps}
The inductive bias of a neural network is influenced heavily by the initial training period \citep{frankle2020the, achille2018critical, golatkar2019time}. Much like in human development, the data a neural network sees in the early stages of optimization sets the network's inductive bias in a way that can be difficult to unlearn later. The distribution of states and the targets that the agent is trained to fit during this period therefore have an outsize influence on what types of targets the network will be able to easily fit down the road. In Chapter~\ref{chp:rep-learning}, we saw that networks which did not receive any reward signal early in training had greater difficulty distinguishing states later in training. We now dive deeper into how dense rewards shape agents' representations: whereas sparse rewards discourage state disentanglement, we conjecture that dense reward environments may encourage excessive disentanglement -- in other words, \textit{discourage} interference -- at the expense of generalization in value-based deep RL algorithms.
We begin by presenting a quantitative approach to measure the degree to which interference is occurring between states in the agent's visitation distribution. Armed with this metric, we evaluate two concrete hypotheses. First, that deep neural networks trained with TD updates will exhibit weaker interference between states as training progresses compared to their value at initialization (Hypothesis~\ref{hyp:td}). Second, that networks trained with TD learning will exhibit weaker interference than those trained with policy gradient objectives (Hypothesis~\ref{hyp:distill}).

\subsection{Representation evolution in DQN agents}

We begin by presenting a method to track interference in deep RL agents. Given a set of transitions $\tau_1, \dots, \tau_n$ of the form $\tau_i=(\bx_i, a_i, r_i, \bx'_i)$ and a value function $V$ with parameters $\theta$, we let $\theta_i$ denote the network parameters after performing an optimization step with respect to the transition $\tau_i$. We then construct a matrix $\updatematrix$ entry-wise by computing the interference $\deltaint(\bx_i, \bx_j)$ between each pair of states $\bx_i, \bx_j$ in the sampled set. We refer to this object as the \textit{update matrix}, as it consists of the changes in the network outputs as a result of gradient updates. Recalling \eqref{eq:deltaint}, we let $Q_\theta(\bx)$ denote the vectorized action-value function output by a DQN network at state $\bx$ given parameters $\theta$, $g_\ell$ the TD semi-gradient update direction, and $\eta$ the optimizer state.
\begin{equation}
    [\updatematrix]_{i,j} = \deltaint (\bx_i, \bx_j) = \|Q_\theta(\bx_j, \cdot ) - Q_{\theta'}(\bx_j, \cdot)\| \text{ where } \theta' = \theta + \alpha g_\ell (\bx_i; \theta, \eta)
\end{equation}
\begin{figure}
    \centering
    \includegraphics[width=\linewidth]{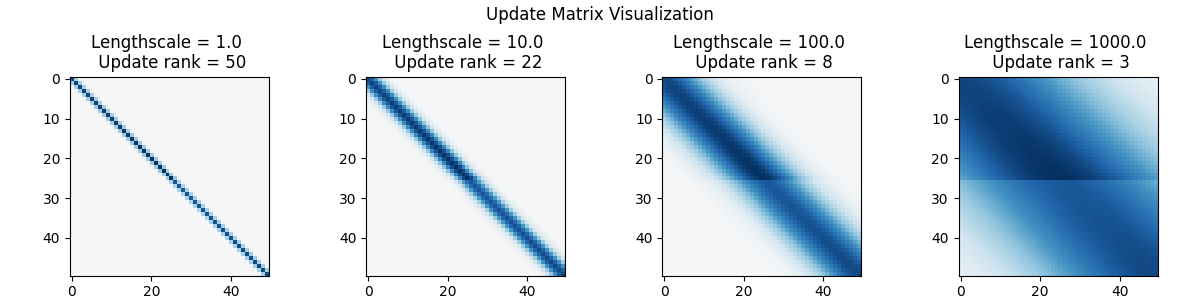}
    \caption[Update matrices for a toy kernel regression problem.]{Update matrices for a toy kernel regression problem using RBF kernels of different lengthscales. The smaller lengthscales discourage generalization, and induce update matrices with a more pronounced diagonal component.}
    \label{fig:update_heatmaps}
\end{figure}
Figure~\ref{fig:update_heatmaps} provides an illustration of the update matrix of a radial basis function kernel regression model where the lengthscale of the kernel is set to different values. In DNNs, the properties of this matrix will depend on the optimizer used to perform updates, leading to notable differences from the neural tangent kernel regime studied elsewhere \citep{yang2022overcoming} in the case of non-linear function approximators trained with adaptive optimizers.
At one extreme, the update matrix $\mathbf{I}_{\Delta}$ for a tabular value function (demonstrated by the short lengthscale kernel plot on the left hand size of Figure~\ref{fig:update_heatmaps}) will have non-zero entries only along the diagonal and the matrix will have full rank. At the other, if the value function is represented by a single parameter $\theta \in \mathbb{R}$, then every row will be identical up to a scalar multiple and the matrix will have rank one. More generally, the rank of this matrix can be interpreted as a proxy for whether an agent tends to \textit{generalize} updates between states (low rank), or whether it \textit{memorizes} the value of each state-action pair independently from other states (high rank). As in Chapters~\ref{chp:invariance} and \ref{chp:rep-learning}, we use an approximate version of the rank that discards negligible components of the matrix based on the singular value decomposition, analogous to the feature rank quantity computed in the previous chapter. We provide full computation details in Appendix~\ref{appx:update-details}
\begin{equation}{}
    \rho(\mathbf{I}_{\Delta}) = | \{ \sigma | \sigma \in \mathrm{SVD}(\mathbf{I}_{\Delta}) \text{ and } \sigma > \epsilon\} |
\end{equation}

We will refer to $\rho(\mathbf{I}_{\Delta})$ as the \textit{update rank}. Typically, $\mathbf{I}_{\Delta}$ will depend on the optimization state $\theta$ (which describes the network architecture, parameters, and optimizer state) of the agent and so we will sometimes write $\mathbf{I}_\Delta(\theta)$ to denote this dependence when it is not obvious. An alternative approach outlined by \citet{daneshmand2021batch} involves computing the Frobenius norm of the difference between the matrix $\mathbf{I}_\Delta$ and the identity, however this may overestimate interference in optimizers which use momentum due to constant terms in the update matrix. In our case, a change in the rank of $\mathbf{I}_\Delta(\theta_t)$ and $\mathbf{I}_\Delta(\theta_{t+n})$ after $n$ steps of optimization indicates that the inductive bias of the network has shifted to generalize more, in the case of a decrease in rank, or less, in the case of an increase in rank, between states. The update rank thus gives us a means of testing our first empirical hypothesis of this chapter. 

\hypothesis{H1}{deep neural networks trained with TD updates exhibit weaker interference between states as training progresses.\label{hyp:td}
}

We measure interference by evaluating $\rho(\mathbf{I}_{\Delta}(\theta))$ for states sampled from the replay buffer of a deep RL agent. Hypothesis~\ref{hyp:td} predicts that the rank $\rho(\mathbf{I}_{\Delta}(\theta_t))$ will increase as a function of the training step $t$. To test this we train a standard DQN architecture on environments from the Atari 2600 suite, and save a range of checkpoints throughout training. We illustrate the evolution of agents' update matrices $\mathbf{I}_{\Delta}(\theta_t)$ over the course of training in Figure~\ref{fig:qualitative}. We observe that RL agents trained in dense-reward environments tend to develop update matrices which resemble those of tabular value functions: they tend to have a pronounced diagonal or block-diagonal structure later in training, compared to a more linear structure early in training. Those trained in the absence of reward, i.e. those for which the target value function has no high-frequency components, maintain low-rank update matrices through training as our theory would predict. We find that similar results hold for a range of update rules, including distributional updates performed in the C51 algorithm \citep{bellemare2017distributional}. Quantitatively, we track the update rank of these checkpoints on the right hand side of Figure~\ref{fig:qualitative}. We include further evaluations in Appendix~\ref{apx:more-results}.

\begin{figure}[h]
\centering
\begin{minipage}{0.645\linewidth}
    \includegraphics[width=0.99\linewidth]{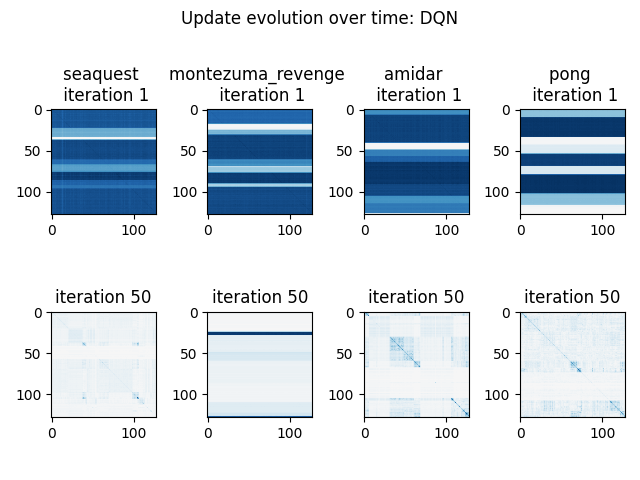}
    \end{minipage}
    \begin{minipage}{0.339 \linewidth}
    \includegraphics[width=0.993\linewidth]{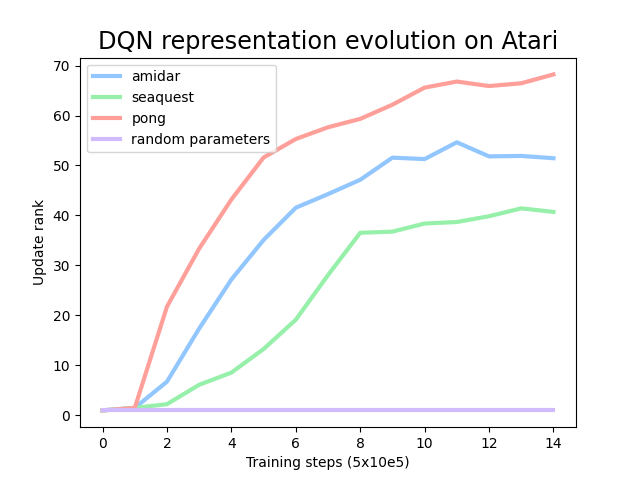}
    \includegraphics[width=0.993\linewidth]{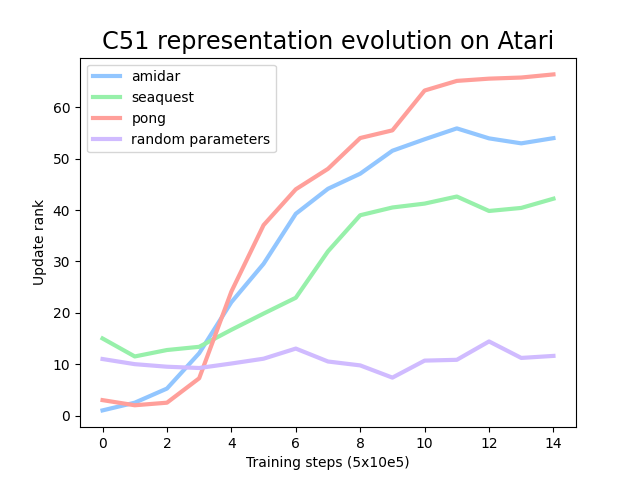}
    \end{minipage}
    \caption[Evolution of interference in DQN agents trained on Atari.]{Left: $\mathbf{I}_{\Delta}(\theta_t)$ for agents trained on games from Atari. The networks initially exhibit low update rank, but after 50 iterations (5M frames of experience), the updates rank increases significantly. This is tracked in the right hand side plots over the course of approximately 7M frames. Random parameters refers to the update rank obtained by a randomly initialized neural network.}\label{fig:qualitative} 
\end{figure}

\subsection{Actor-critic methods}

Hypothesis~\ref{hyp:td} was motivated by the observation that early temporal difference targets predominantly contain information about immediate rewards, implicitly rewarding networks which memorize rather than generalize the learned value function. However, policy gradient methods do not encounter this issue: these methods seek to maximize the probability of selecting an action with large advantage, which is independent of the smoothness of the TD targets. While policy gradient methods encounter their own challenges, in particular high gradient variance, these challenges are largely orthogonal to the particularly adversarial nonstationarity of TD learning. Actor-critic methods, which involve training a policy via policy gradient methods and a value function via temporal difference methods, therefore present an ideal test bed to study whether the decline in generalization observed during training in the previous section is unique to TD updates, or whether it is a more general property of RL agents. A priori, we have no reason to expect the policy gradient losses to exhibit significant increases in structure later in training. We therefore propose the following hypothesis.

\hypothesis{H2}{networks trained with TD learning will exhibit weaker interference than those trained with policy gradient objectives.
}
\begin{figure*}[h]
    \centering
    \includegraphics[width=\linewidth]{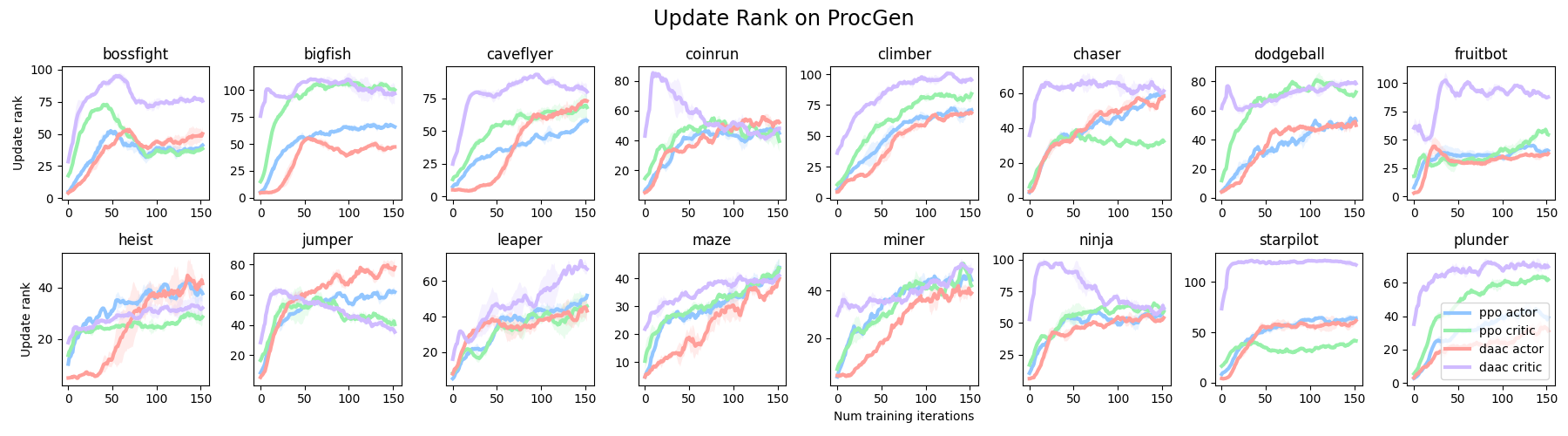}
    \caption[Update dimension of actor-critic methods in ProcGen.]{Update dimension of actor-critic methods in ProcGen. Shading indicates minimum and maximum values over 4 seeds. We observe that the update dimension of the separate critic architecture in DAAC (the lilac line) consistently has the highest update rank early in training, while the actors have the lowest rank in the early training stages and only surpass the DAAC critic later in training. }
    \label{fig:daac}
\end{figure*}
We repeat the analysis of the previous section with actor-critic methods to study whether this is the case, measuring the change in the actor's output policy at a state $\bx$ rather than Q-values. We run our evaluations in the ProcGen environment \citep{cobbe2019quantifying}, which consists of 16 games with procedurally generated levels. While the underlying mechanics of each game remain constant across the different levels, the layout of the environment may vary. The agent is given access to a limited subset of the levels during training, in this case 10, and evaluated on novel randomly generated levels. In this section we study interference only on the training environments. 
We investigate Hypothesis~\ref{hyp:distill} using two different algorithms on the ProcGen suite: PPO \citep{schulman2017proximal}, which uses a shared representation network for both the actor and critic, and DAAC \citep{raileanu2021decoupling}, where there are no shared parameters between the actor and the critic. We then evaluate the update dimension of the two different methods, and plot the results in Figure~\ref{fig:daac}. Additional details can be found in Appendix~\ref{appx:procgen-details}. Omitting the critic gradients from the actor's representation leads to significantly lower update dimensions early in training in a number of environments, including bigfish, heist, and miner. Further, the critic network in DAAC, which receives only TD gradients, exhibits markedly higher update rank in all environments in at least the early stages of training, and often throughout the entire trajectory, than the other networks which have access to the actor gradients.

\section{Post-training distillation and generalization}

The previous section has demonstrated that training value-based deep RL agents results in a bias towards weaker interference between inputs. Arguably, this weak interference may be beneficial to the stability of these learning algorithms, but it comes at the cost of generalization. This bias towards memorization arises when, during the network's crucial early development stage, it is trained to fit target functions that do not capture the global structure of the value function. 
This leaves us with an open question: how might we adapt the training procedures followed in deep RL to obtain agents that can adapt to inputs they have not seen before?

 One simple solution to this problem is to train a freshly initialized network on the final value function obtained by TD learning. If the teacher network was able to fit the low-frequency components of the value function -- even if this was achieved via memorization -- then the freshly initialized network will be able to benefit from incorporating this structure into its predictions from the start of its optimization procedure. In the case of policy distillation in actor-critic algorithms, it allows us to evade the influence of critic gradients without requiring a complete decoupling of the actor and critic during training time. Such approaches have seen success in prior work \citep{igl2019generalization, nikishin2022primacy}; this section presents a deeper study of a mechanism driving this success.

\subsection{Value distillation}
We begin by studying the effect of post-training distillation on robustness of the learned value function in environments from the Atari suite \citep{bellemare2013arcade}. We first consider value distillation as a means of eliminating the counterproductive bias towards memorization induced by early TD targets. We leverage a data collection policy from a pre-trained teacher network $q_t$, and perform distillation of a freshly initialized network $q_s$ on this data. We follow a similar procedure to that of \citet{ostrovski2021the} to perform distillation of the function $q_{\mathrm{s}}$ on data collected sampled from the teacher's replay buffer $\mathcal{B}_T$, leveraging their insight that distillation on \textit{all} action values, rather than only the value of the action taken by the teacher agent, yields significantly higher performance.  We additionally study the effect of behaviour cloning with entropy regularization, obtaining the objectives
\begin{align}\label{eq:value_distill}
    \ell_{\mathrm{VD}}(q_{\mathrm{S}}, q_{\mathrm{T}}) &= \mathbb{E}_{s \sim \mathcal{B}_{\mathrm{T}}} \bigg [\sum_{a \in \mathcal{A}} (q_{\mathrm{S}}(a) - q_{\mathrm{T}}(a) )^2 \bigg ]
    \intertext{and}
    \ell_{\mathrm{BC} }(\theta) &= \mathbb{E}_{s,a\sim \mathcal{B}_{\mathrm{T}}}[ \log \pi_\theta(s,a) + \lambda H(\pi_\theta(s)] \label{eq:policy_distill}
\end{align}
where $H(\cdot)$ denotes the entropy of the policy. We set $\lambda = 0.01$ in our evaluations.
We show results for value distillation \eqref{eq:value_distill}, which regresses on the outputs of the frozen Q-network, and behaviour cloning \eqref{eq:policy_distill}, which predicts the action taken by the frozen Q-network.
We track three quantities: the performance of the learned policy, the robustness of the learned policy to perturbations, and the consistency of the learned policy when interpolating between observations. The performance is measured by following an $\epsilon$-greedy policy in the training environment, with $\epsilon=0.01$. The robustness to perturbations is measured by tracking whether the network takes the same action under a Gaussian perturbation to its input as in the unperturbed observation. Finally, we investigate the network's interpolation behaviour by evaluating whether, given a convex combination of observations $o_1$ and $o_2$, the network takes the same action under the combination as it does in either of the original observations. Additional evaluation details can be found in Appendix~\ref{appx:atari-details}. We are interested in evaluating the folowing hypothesis.

\begin{figure}
    \centering
    \includegraphics[width=0.939\linewidth]{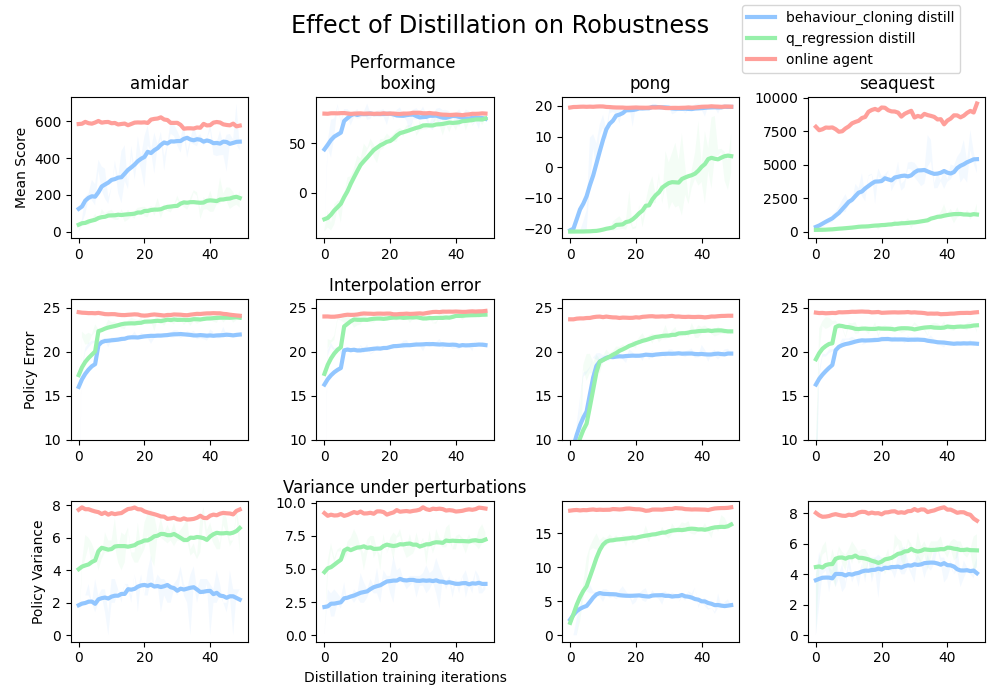}
    \caption[Performance and robustness to perturbations of different distillation approaches in games from the Atari suite.]{Performance and robustness to perturbations of different distillation approaches in games from the Atari suite. Post-training distillation results in policies that are more consistent under perturbations and under interpolation between observations. Axes indicate the $\ell_1$ norm between the policy on the original input batch and on the perturbed input batch.} \vspace{-1em}
    \label{fig:atari-gen}
\end{figure}

\hypothesis{H3}{post-training distillation will reduce overfitting and therefore improve robustness; this will hold more strongly for policy distillation than for value distillation due to the discontinuity of the value distillation targets. }
Figure~\ref{fig:atari-gen} shows that the distillation approaches yield policies that exhibit significant improvements in robustness to perturbations and are more consistent under interpolations between observations. We observe that the behaviour cloning method matches or nearly matches the performance of the pretrained agent in three of the four environments, while also obtaining the best robustness. Both behaviour cloning and value distillation improve upon the robustness of the teacher network that was trained online. We conclude that both value and in particular policy distillation increase smoothness in the form of improved robustness to perturbations and greater interpolation consistency. This finding motivates the next section, where we will dig deeper into policy distillation.

\subsection{Policy distillation}
We have previously shown that decoupling the policy and value function approximators in an actor-critic architecture resulted in weaker interference in the value approximator, and stronger interference in the policy network. However, it is not clear that this should necessarily lead to better final performance on the test environment, as the interaction between the policy and value function training is complex and difficult to predict -- a policy with a small generalization gap but weak training performance may nonetheless perform worse at evaluation time than a policy with a larger generalization gap. An alternate approach to remove the influence of critic gradients on the learned policy is to train a behaviour cloning agent on the final policy obtained after training. Because we only need to distill the policy, we avoid contaminating the actor with value approximation gradients while (hopefully) preserving the performance of the original agent on the training environments. We return to the ProcGen benchmark, with the hypothesis that post-training distillation of PPO agents should produce policies which improve on the ability of the final trained actor to generalize to new levels. 
\begin{figure*}
    \includegraphics[width=\linewidth]{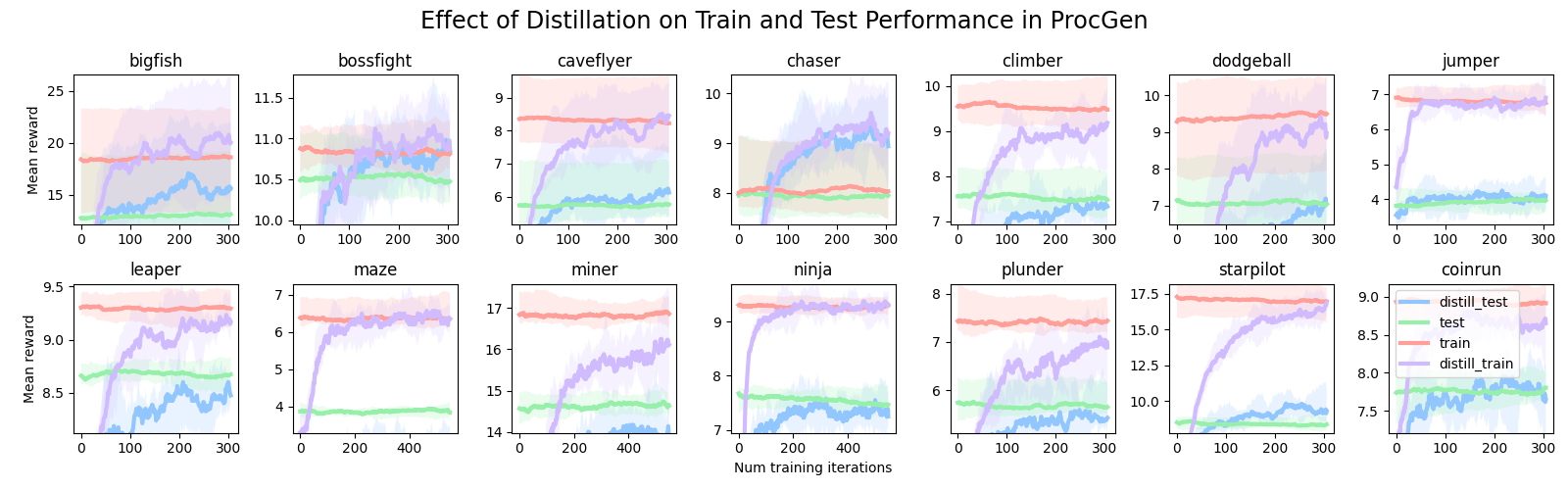}
    \caption[Effect of policy distillation on generalization in environments from the Procgen suite.]{Effect of policy distillation on generalization in environments from the Procgen suite. We plot the pretrained networks train environment and test environment performance, along with the performance of the distilled agent on the test environments. We see significant improvement on test environments in bigfish, caveflyer, chaser, climber, and bossfight. }
    \label{fig:procgen-distill}
\end{figure*}

\hypothesis{}{post-training policy distillation will reduce the generalization gap of actor-critic agents trained on procedurally generated environments.\label{hyp:distill}}

Hypothesis~\ref{hyp:distill} brings with it a subtle but crucial caveat: our discussion of generalization thus far has focused implicitly on \textit{within-environment} generalization, but procedurally generated benchmarks inherently evaluate agents on novel environments, inducing a distribution shift. Our primary tool in studying generalization in the previous section, the update rank, carries with it the assumption that a network's generalization behaviour on training observations will be representative of its behaviour on novel states that may be encountered later. However, the ProcGen benchmark requires generalization \textit{out-of-distribution} to novel environments whose observations may have probability zero under the training environments. A neural network will not necessarily exhibit similar generalization behaviour in a novel environment which may be highly visually dissimilar to its training environment. As such, methods such as policy distillation which increase generalization between observations in the training environment should only be expected to benefit test set performance if the test environments are sufficiently `within-distribution'.

With this caveat in mind, we proceed to evaluate Hypothesis~\ref{hyp:distill}. We train a set of PPO agents on the difficult setting of the ProcGen benchmark, using the same experiment configurations as that 
used by \citet{cobbe2020leveraging}. We then perform behaviour cloning on these agents, using data collected only from the training environments, by training a freshly initialized network (the distillation agent) to minimize an entropy-regularized KL divergence with the teacher's policy on trajectories collected by the teacher. Training was terminated at 50M sampled environment steps, except for Maze, Miner, and Ninja, which were run for 90M sampled steps as the training performance of the student had not yet matched that of the teacher after the initial budget. We then evaluate the distilled agent's performance on the test environments. Results are shown in Figure~\ref{fig:procgen-distill}.

We find a rich heterogeneity in the shape of the learning curves of the student on the train and test environments across the different ProcGen games. Post-training distillation in most settings slightly reduces the generalization gap obtained by the original network, with the notable exception of the `maze' environment. In settings where the gap between train and test performance is larger, we see a more mixed effect from distillation. The naivety of our approach suggests that it is likely that final performance and sample-efficiency of these agents could be improved using tools from the policy distillation literature such as allowing the student to generate training data \citep{czarnecki2019distilling, rusu2016policy, teh2017distral}.

We observe three broad categories of behaviour in the student agent's learning curves. First, in the two environments where PPO achieves a negligible generalization gap, distillation improves not only test performance but also performance on the training environments. This suggests that the memorization behaviour of PPO was limiting not only its test set performance but also its returns on training environments. We further do not observe any evidence of stereotypical overfitting in these environments: the train and test performance increase in lockstep with each other. In environments where there is a moderate gap between train and test environment teacher performance, the test set performance of the student tends to increase more slowly than its training set performance, resulting in slow but fairly consistent convergence of the learned policy to approximately the level of that of the teacher (or slightly exceeding that in the case of bigfish, caveflyer, jumper, and starpilot). In no environments do we see the classic sign of overfitting: a sequence of increasing, saturating, and decreasing training performance. Instead, the final test performance depends more on the relative positive slope of the test and the training performance curves -- in other words, it depends on the degree to which the agent is able to identify shared structure between the training environments that is more sophisticated than can be detected by measuring extrapolation between observations within a single training environment. This notion of shared structure will be explored in greater depth in Chapter~\ref{chp:icp}.

\section{Conclusions}

The analysis of this chapter presents a complementary perspective to that of Chapter~\ref{chp:rep-learning} in studying how learning dynamics can influence generalization, particularly in dense-reward environments. Our key take-away is that the nature of the targets that we ask a neural network to fit early in its training trajectory hold significant influence over the bias of the learning process towards certain types of functions later. In the case of sparse-reward environments, networks which overfit to the task of predicting the zero target struggled to fit any non-trivial target function later in training. In the case of dense rewards, we've shown in this chapter that networks evolve to be more prone to memorization as a result of fitting discontinuous and unstructured targets early in training. Broadly, our analysis suggests that the nature of the learning dynamics of value-based RL discourages generalization in deep neural networks. We have shown that temporal difference learning in the continuous-time dynamics setting of Chapter~\ref{chp:rl-dynamics} fits non-smooth components of the value function first, resulting in an implicit bias towards representations that encode near-tabular updates. In the context of prior work demonstrating that weaker generalization can improve the stability and convergence of RL algorithms, this phenomenon may be beneficial to an agent's stability at the cost of observational overfitting. We further show that post-training distillation improves generalization and robustness, mitigating some of the tendency of value-based RL objectives to encourage overfitting.

The findings presented here are a crucial stepping stone along the path to the principled development of robust and stable deep RL algorithms which are capable of strong generalization performance. Our insights may prove useful in a range of future directions, such as using different architectures during training and distillation, leveraging larger neural network function approximators to minimize harmful interference, and modifying the update rule used in TD learning to adaptively promote or inhibit interference between inputs. Further, the role of the optimizer is fundamental to the phenomena studied in this paper, and RL-specific optimization approaches may benefit from our findings. The notion of update rank that we consider here may also be adapted to other settings to shine a light onto where a network lies on the memorization-generalization spectrum at different points during training, potentially providing a mechanism to trigger early stopping in supervised learning settings, or as a meta-learning objective for hyperparameter tuning and architecture search. 

The experiments presented in this chapter have considered both single-environment (in the case of Atari and MountainCar) and multi-environment (ProcGen) problems. While our findings concerning interpolation and robustness to perturbations in single-environment RL are straightforward, analysis of the multi-environment setting is more subtle. The update rank we compute on agents measures the degree of generalization between observations from the training environments. Because the test environments may differ systematically from the training environments, the update rank may not capture properties of a neural network that enable effective generalization to these out-of-distribution test examples. Indeed, based on our observations from Chapter~\ref{chp:supervised} we have no reason to expect that invariance over training inputs  will generalize to sufficiently out-of-distribution test environment observations. The following chapter will study the multi-environment problem explicitly. It will outline sufficient assumptions on the structure of the data-generating process to ensure generalization is tractable, and present a novel representation-learning approach that is able to identify the shared structure between the training and test environments.

\chapter{Generalization across environments}
\label{chp:icp}
\minitoc
\section{Introduction}

A robot's LiDAR sensor is knocked askew. An autonomous vehicle encounters hail for the first time. In the distant future, a RoboChef discovers that the kitchen in which it usually works has been remodelled.
Reinforcement learning agents will frequently encounter situations at deployment that they did not see during training. Such situations will not necessarily correspond to novel observations from the agent's training environments, but they may share a broader notion of \textit{causal structure}. An agent which has learned to robustly exploit this structure will generalize with ease to the challenges it faces at deployment. 
However, in such cases good generalization between observations from the training environment will not necessarily be sufficient for an agent to pick up on this causal structure. We saw the limitations of such generalization in Chapter~\ref{chp:gen-rl} on the ProcGen benchmark. Reducing the degree of memorization performed by a neural network improved some types of robustness, but did not always result in better performance on test environments.
In the worst case, some training environments may contain spurious correlations that will not be present at test time. An agent which depends on these correlations may then generalize well to new observations in its training environments, but experience catastrophic failure on new environments where the correlation is not present  ~\citep{azhang2018natrl,Song2020Observational}. Two things are clearly necessary in order to obtain effective generalization outside of the training environment. First, we must look beyond interference to more sophisticated notions of invariance. Second, we must consider families of test environments for which the training environments provide sufficient information for generalization to be possible.

Generalization to new environments has been a topic of great interest to the RL community, but this interest has been concentrated on settings that make few explicit assumptions on the shared structure between training and test environments \citep{kirk2021survey}. Recent works~\citep{amit2018mlpacbayes,yin2019meta} have developed generalization bounds for the multi-task problem, but they depend on the number of tasks seen at training time, which can be prohibitively expensive given the sample complexity of RL even in the single task regime. To obtain stronger generalization results, we consider a multi-environment RL problem: like multi-task RL, the agent seeks to maximize return on a set of environments, only some of which are available to the agent during training. 
We make the assumption that there exists some latent {causal structure} in the form of a causal graph that is shared among all of the environments, and that the sources of variability between environments can be modelled by interventions on spurious variables in the causal graph. This family of environments, which we show to be equivalent to a \textit{Block MDP}~\citep{du2019pcid}, allows for observations to vary, but fixes the latent states, dynamics, and reward function.

Chapter \ref{chp:gen-rl} studied multi-environment generalization, but with relatively few assumptions on the shared structure between the different environments; we only required that the environments be drawn from the same data-generating process. We show in this chapter that the added assumption of shared structure allows for much stronger generalization results than have been obtained by prior work with significant sample complexity improvements. Our results are enabled by a key insight: we turn the problem of identifying features that will generalize well to novel environments from a learning problem, for which sample complexity may be prohibitive, to a causal identification problem, which will often be more tractable.
Indeed, where \citet{cobbe2019quantifying} and \citet{azhang2018genrl} find that agents trained using standard methods must see many thousands of training environments in order to successfully generalize to new environments, in our more restricted setting as few as two or three training environments can be sufficient to identify the correct causal structure. 

The main technical contribution of this chapter is the application of tools from {causal inference} to identify (and in some cases learn) representations that will enable an agent to generalize well to new environments. We propose a method motivated by a simple intuition: features which remain {invariant} across the different training environments are likely to be causally related to the reward and transition structure. Policies which depend on these features will therefore be robust to changes in the environment brought about by interventions on spurious state variables. In certain linear function approximation settings, we demonstrate that this method will, with high probability, learn an optimal state abstraction that generalizes correctly to novel environments, requiring many fewer training environments than would be necessary without the block MDP assumption. We then draw a connection between bisimulation and the minimal causal set of variables found by our algorithm, providing bounds on the model error and sample complexity of the method. We further show that using analogous invariant prediction methods for the nonlinear function approximation setting can yield improved generalization performance over multi-task and single-task baselines. 

\section{Background on causality}

A desirable property of a learned representation is that it should enable a policy trained on one environment to generalize well to new environments. A promising approach to learn these general representations, particularly in the partially observable setting, comes from training an agent to learn the {causal} structure of the environment \citep{zhang2019causal, de2019causal}. We refer to Section~\ref{sec:background:stateabstractions} for a discussion of state abstractions. This section will provide the relevant background on causality; the two concepts will be related in Section~\ref{sec:causal-abs}. Causal inference \citep{pearl2000causality}  is a powerful tool that machine learning researchers are beginning to leverage \citep{johansson2016learning, louizos2017causal}, developing explicit connections to generalization bounds and robustness \citep{shalit2017estimating}. While a great deal of work on causality focuses on learning an explicit causal graph \citep{de2019causal}, an alternative approach involving invariant prediction  \citep{Magliacane2018, peters2016causal, rojas2018invariant} is in fact closely aligned with the literature on generalization bounds discussed in Chapter~\ref{chp:background}. 
\subsection{Causal discovery}
\label{sec:causal_inf}
Causal inference concerns itself with identifying causal relationships from data \citep{pearl2000causality}. The central object of study is a Structural Causal Model (SCM), which characterizes the data generating distribution as a set of functions on observed and hidden variables. 
\begin{definition}[\citep{pearl2000causality}]
A structural causal model is a tuple  $(\mathbf{U}, \mathbf{V}, \mathcal{F}, P )$, where $\mathbf{U}$ is a set of exogenous variables (e.g. the unobserved source of stochasticity in the environment) drawn from the distribution $P$, $\mathbf{V}$ is a set of endogenous variables (e.g. the observed state $s$, the reward $r$, and the action $a$ in RL), and $\mathcal{F}$ is the set of functions $f_V: \mathrm{PA}_V \times \mathbf{U} \rightarrow \mathbf{V}$ with $\mathrm{PA}_V \subset \mathbf{V}$, which determine the value of endogenous variable $V$ for each $V \in \mathbf{V}$. 
\end{definition}
An SCM has a representation as a directed acyclic graph, called a causal Bayesian network or simply a causal graph, whose nodes are the variables in $\mathbf{U} \cup \mathbf{V}$ and whose edges are given by $\{(p, v) \mid p \in \mathrm{PA}_V, V \in \mathbf{V} \}$. An edge from variable $V_1$ to $V_2$ indicates that $V_1$ is a causal parent of $V_2$. This mapping is not one-to-one. In general, many SCMs may induce identical causal graphs.
Causal models enable reasoning about how changes to the data-generating process, called \textit{interventions}, affect the resulting distribution over variables. 

\begin{definition}[Do-Intervention]
A do-intervention on a variable $V$, denoted do($V=v$), in a causal model $S=(\mathbf{U}, \mathbf{V}, \mathcal{F}, P )$ is an operation that induces a new SCM $S'=(\mathbf{U}, \mathbf{V}, \mathcal{F}', P )$, where $\mathcal{F}' = \{ f_{W} \in \mathcal{F} \mid W \neq V \} \cup \{ f_{V=v} \}$ and $f_{V=v}(\mathbf{p}, \mathbf{u}) = v \; \forall \mathbf{p} \in \mathrm{PA}_V, \mathbf{u} \in U$.
\end{definition}
The randomized procedure used to assign patients to the treatment and control groups in medical trials is an example of an intervention. Predicting the effect of a do-intervention requires identifying the direction of the edges in the causal graph, a process known as \textit{causal discovery}. Many recent works have explored how to incorporate causal structure into deep RL by learning disentangled representations of the observation space \citep{bengio2017independently, suter2019robustly}.

\subsection{Invariant prediction}
A fundamental property of causal relationships is their invariance under interventions. \textit{Invariant prediction} seeks to identify causal structure from data by evaluating whether the relationships between variables in a prediction problem are invariant across different environments, where each environment corresponds to a different intervention on the data generating process. Relationships which are invariant over these interventions are assumed to be causal, and by applying a straightforward hypothesis-testing approach \citep{peters2016causal}, one can obtain high-probability guarantees on identifying the causal parents of a target variable of interest.

The approach of \citet{peters2016causal} is limited to problems in which the inputs consist of sets of variables whose relationship can be described by a causal graph. A more adaptable framework based on the same principle of invariance has emerged recently:  invariant risk minimization (IRM) \citep{arjovsky2019invariant}. This setting assumes that the learner has access to data which is partitioned into different environments, and that these environments share the same causal structure. Differences in the distribution of data in each environment are attributed to non-causal correlations, which the learner should avoid using for prediction. Unlike invariant causal prediction methods, there is no explicit assumption on the structure of the inputs as corresponding to distinct variables in the causal model. Instead, the learner seeks to identify a non-linear feature map which induces a linear classifier whose errors are invariant over the training environments. The mathematical formalism of this objective yields the minimization problem
\begin{equation}
\min_{\phi} \sum_{e \in \mathcal{E}}  \ell (w^\top \phi(X_e), Y_e) \text{ s.t. } \text{argmin}_{v} \ell(v^\top \phi(X_e), Y_e) = w , \; \forall e \in \mathcal{E}\;.
\end{equation}

The invariant risk minimization framework provides a compelling intuition, but the IRM objective proposed by \citet{arjovsky2019invariant} requires extensive hyperparameter tuning to obtain competitive results, and in many cases fails to capture the invariances motivating its proposal \citep{-kamath2021does}. In spite of these limitations, several works have extended the ideas behind IRM to a variety of settings \citep{ahuja2020invariant, alesiani2021continual}, and a notion of invariance across environments has recently been deployed by \citet{raileanu2021decoupling} to improve generalization to novel environments on the ProcGen suite.

\section{Problem setting}
\label{sec:problem_setup}
We consider a family of environments $\mathcal{M}_\mathcal{E} = \{(\mathcal{X}_e, \mathcal{A}, \mathcal{R}_e, \mathcal{P}_e, \gamma) \mid \; e \in \envs\}$, where $\mathcal{E}$ is some index set. For simplicity of notation, we drop the subscript $e$ when referring to the union over all environments $\mathcal{E}$, e.g. $\mathcal{X} = \cup_{e \in \mathcal{E}} \mathcal{X}_e$. Our goal is to use a subset $\envs_{\text{train}} \subset \envs$ of these environments to learn a policy $\pi$ which generalizes to {every} environment. Concretely, we seek a parameterized policy $\pi$ which maximizes the quantity

\begin{equation}
 \mathbb{E}_{\mathcal{E}_{\test}, \pi}[ \sum_{t=0}^\infty \gamma^t R(x_t, a_t)] \; .
\end{equation}

We denote the number of training environments by $N=|\envs_{\text{train}}|$. We assume that the environments share some structure, and consider different degrees to which this structure may be shared in this section. As we have alluded to previously, it is this presumed structure which will enable significant sample efficiency gains over more naive approaches.

\subsection{Block MDPs}
Block MDPs~\citep{du2019pcid} are described by a tuple $\langle \mathcal{S}, \mathcal{A}, \mathcal{X}, p, q, R \rangle$ with a finite, unobservable state space $\mathcal{S}$, a finite action space $\mathcal{A}$, and a possibly infinite, but observable space $\mathcal{X}$. Here $p$ denotes the latent transition distribution $p(s'|s,a)$ for $s,s'\in\mathcal{S}, a\in\mathcal{A}$, $q$ is the (possibly stochastic) emission function that generates observations from the latent state $q(x|s)$ for $x\in\mathcal{X}, s\in\mathcal{S}$, and $R$ denotes the reward function.
\begin{assumption}[Block structure~\citep{du2019pcid}]
\label{asmp:block}
Each observation $x$ uniquely determines its generating state $s$. That is, the observation space $\mathcal{X}$ can be partitioned into disjoint blocks $\mathcal{X}_s$, each containing the support of the conditional distribution $q(\cdot|s)$.
\end{assumption}
This assumption gives us the Markov property in $\mathcal{X}$, and is crucial for our empirical and theoretical results. \citet{zhang2020learning} discuss the partially observable setting. 

The definition of a block MDP given by \citet{du2019pcid} defines a single environment; our primary interest in this chapter, however is in a multi-environment setting. We will therefore concern ourselves with \textit{block MDP families}, which characterize a set of environments with shared dynamics. We define a block MDP family as a collection of environments $\mathcal{M}_{e}$, where each environment $e$ corresponds to an emission function $q_e$. Each environment $\mathcal{M}_e$ thus has the form $\langle \mathcal{S}, \mathcal{A}, \mathcal{X}, p, q_e, R \rangle$, where all terms except the emission function $q_e$ are shared between environments. We will move the potential randomness from $q_e$ into an auxiliary variable $\eta \in \Omega$, where $\Omega$ is some probability space, and write $q_e(s, \eta)$. Crucially, we enforce Assumption~\ref{asmp:block} on the union of the emission functions $q_e$. This entails that whenever $\text{range}(q_e (s, \cdot)) \cap \text{range}(q_{e'}(s', \cdot)) \neq \emptyset$, then $s = s'$. The decomposition of the state into a latent state $s$ and noise term $\eta$, and the corresponding causal structure, can be seen in Figure~\ref{fig:irm_model_irrelevant}.

Without additional assumptions on shared structure between the emission functions, generalization to novel environments can be an impossible task. We will therefore focus on settings where the $q_e$ overlap in structured ways -- for example, where $q_e$ outputs the concatenation of the noise and state variables, $q_e(s, \eta) = s \oplus f(\eta)$ -- such that it is possible to learn a feature map from observations to latent states that will generalize to new emission functions from this class. 

Our ultimate goal is to find a policy $\pi$ which maximizes cumulative reward over any novel test environment. This chapter will exclusively consider settings where the policy $\pi$ is a function of some state abstraction (also referred to as a feature map) $\phi(\mathcal{X})$. Under the block MDP assumption, if $\phi$ maps observations $\bx$ to the state $s$ that emitted them, then any policy which is optimal on the training environments will trivially be optimal on the test environments. As a result, most of our discussion in the coming sections will focus exclusively on the invariance of a learned state abstraction, with the ensuing optimality of the learned policy left implicit. Section~\ref{sec:relaxations} will characterize environments where learning such an invariant state abstraction is tractable.

\subsection{Relaxations}
\label{sec:relaxations}
\begin{figure}
    \centering
    \includegraphics[width=0.49\linewidth]{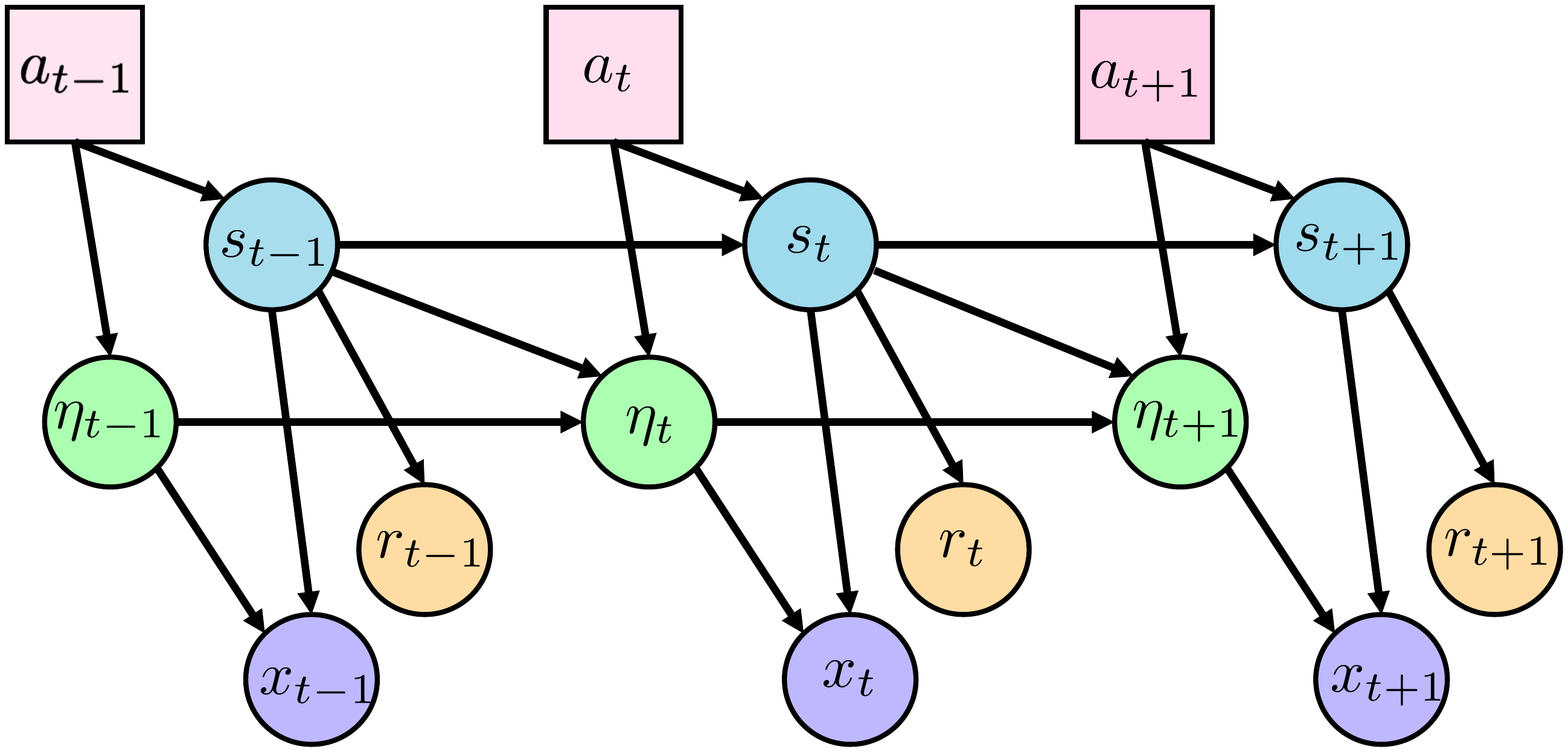}
    \includegraphics[width=0.49\linewidth]{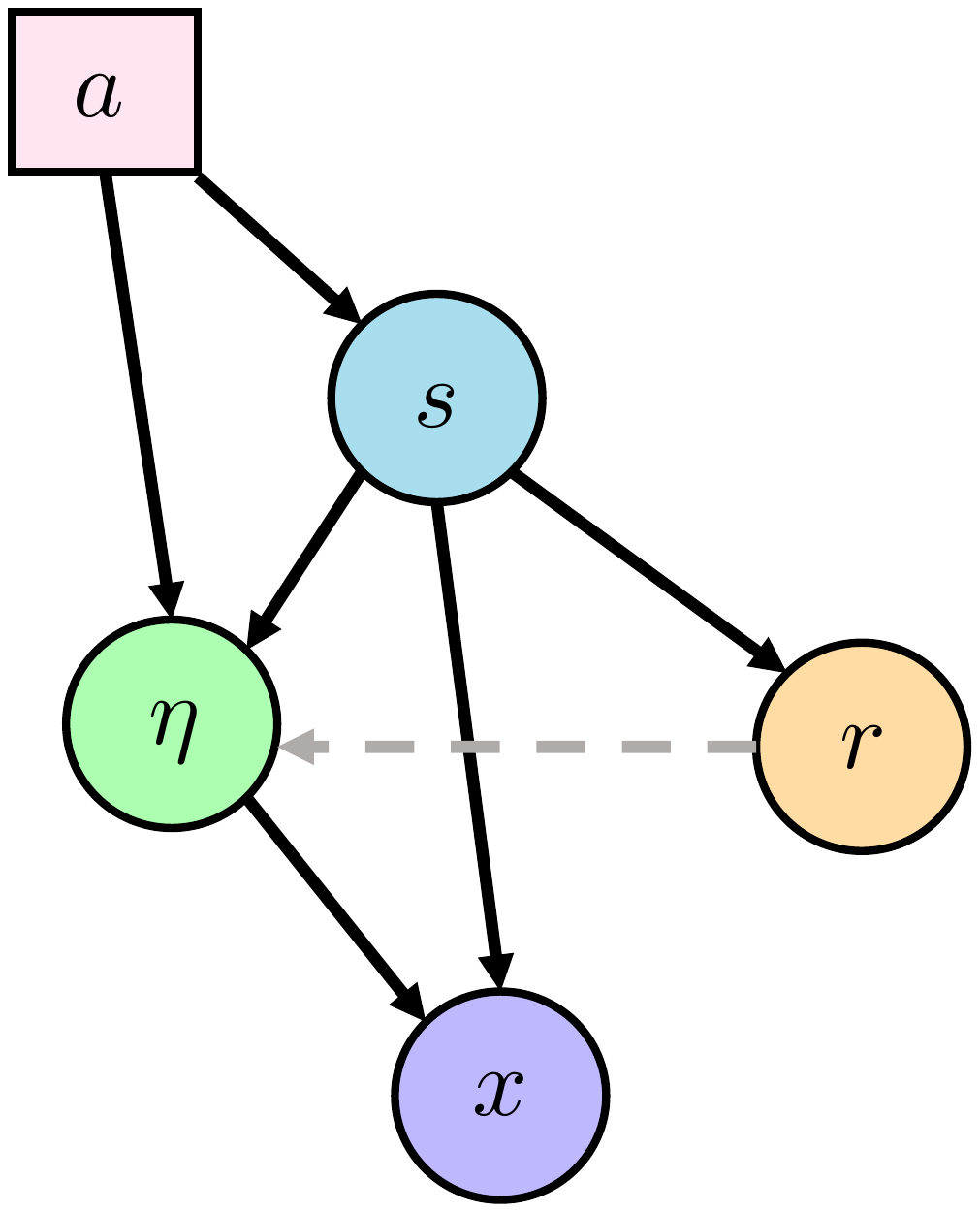}
    \caption[Graphical model of a block MDP with emissions $x$ from a state which can be decomposed into a causal component $s$ and a spurious component $\eta$.]{Graphical model of a block MDP with emissions $x$ from a state which can be decomposed into a causal component $s$ and a spurious component $\eta$. The left hand side illustrates the relationship between the variables at each time step, while the right hand side is a shorthand characterizing the same sequential distribution.}
    \label{fig:irm_model_irrelevant}
\end{figure}
Figure~\ref{fig:irm_model_irrelevant} illustrates the graphical model representing the transition dynamics of a Markov Decision Process under varying assumptions. This graphical model can be interpreted as a stationary causal graph, with arrows indicating causal dependence, as seen on the right hand side of the figure. Under this interpretation, different assumptions on the structure of the graph correspond to different classes of Markov Decision Process. We will be particularly interested in settings where different environments correspond to different correlation structure in the noise variables, as outlined below.

\textbf{Correlated noise variables.} The standard formulation of the block MDP assumes that the noise variable $\eta$ is sampled independently at random at every time step, which prevents multi-timestep correlations. We therefore also consider a more realistic \textit{relaxed block MDP}, where spurious variables may have different transition dynamics across the different environments so long as these dynamics are independent of the return. We introduce the notation $q_S$ to refer to the projection of the inverse mapping $q_e$ onto the invariant component of the latent state $s$, 
\begin{equation}\label{eq:inverse-emmision}
    q_S(x) = s \text{ where } \exists \eta, e \text{ s.t. } : q_e(s, \eta) = x
\end{equation}
and note that this will be invariant over environments. This formulation is equivalent to augmenting each MDP $\mathcal{M}_e \in \mdp$ with a noise variable $\eta_e$, such that for any observations $x = q_e(s, \eta_e)$, any observation $x' = q_e(s', \eta_e')$, and any action $a$, we have

\begin{equation*}p(x'|x, a) = p(s'|s, a) \int_{\eta_e':q(q_S(x'), \eta_e')=x'} p_e(\eta_e'|s, \eta_e) d \eta_e'\, . \end{equation*}

This dependency can be observed in the arrows between $\eta_{t-1}$ and $\eta_t$ in Figure~\ref{fig:irm_model_irrelevant}. We note that $p(s'|s,a)$ remains independent of $\eta$ (i.e. there is no edge between $\eta_t$ and $s_{t+1}$), while $\eta_{t+1}$ is permitted to depend on $s_t$ and $\eta_t$ (i.e. we permit edges between these nodes in the causal graph). The assumption that for each observation $x$ there is a unique generating latent state $s$ is crucial to ensure that the return will be independent of the variable $\eta$, and its absence results in a partially observable MDP. In principle, we might go further to relax the independence of $\eta_t$ on the previous timestep to also causally depend on the reward (the grey dashed line in Figure~\ref{fig:irm_model_irrelevant}); while the techniques shown in the subsequent sections can be applied in this setting, this chapter will focus on dependencies only between components of the state. 

\subsection{Assumptions on causal structure}

Importantly, under Assumption~\ref{asmp:block} or under the correlated noise variables assumption, the spurious variables $\eta_t$ are disjoint from the set of causal ancestors of the reward (i.e. the set of variables $X$ from which there is a directed path in $\mathcal{G}$ from $X$ to $r_t$ for some $t$). The suggestive notation of $s_t$ and $\eta_t$ to describe the environment-specific and environment-invariant components of the state is deliberate: we will show later that identifying the abstraction $(s_t, \eta_t) \rightarrow s_t$ is equivalent to a causal structure identification problem, and that this abstraction guarantees optimal generalization to new environments.

In order to obtain these results, we must enforce the Markov property on the MDP's causal graph. This goes beyond the requirement that the variables in the environment state at time $t$ can only affect the values of the state at time $(t+1)$, and can only affect the reward at time $t$, allowing us to consider the state and action at time $t$ as the only candidate for causal parents of the state at time $(t+1)$ and of the reward at time $t$.  We refer the reader to Figure~\ref{fig:statgm} to demonstrate how causal graphical models can be translated to this setting. 

\begin{assumption}[Independent Causal Mechanisms]\label{assmpt:causal_mechanisms}
Let $x^1$ and $x^2$ be components of the observation $\bx$. Let $X^i_{t}$ denote the value of $x^i$ at time $t$ for $i=1,2$. Then when no intervention is performed on the environment, we have the following independence,
\begin{equation}
     X^1_{t+1} \perp X^2_{t+1} \mid \bx_t \; .
\end{equation} 
\end{assumption}

The assumption of independent causal mechanisms is standard in the causal inference literature \citep{peters2017elements} but is stronger than the assumption of Markovian transition dynamics in the MDP. For example, an MDP whose state space consists of two variables $x_1$ and $x_2$ which are both set at each timestep by the same coin flip will have Markov transition dynamics, but its causal graph will not satisfy the Markov condition. Intuitively, Assumption~\ref{assmpt:causal_mechanisms} requires that the variables in the observation space reflect the environment's underlying data generating process. This assumption is crucial for our results in linear function approximation, though our results on the rich observation setting allow us to relax independence of causal mechanisms in the observation space.

While many of the most challenging technical problems in causal inference stem from more complex causal structures which do not satisfy the Markov property, these structures often arise in the first place because the data collector is not able to see the step-by-step evolution of the system. Put simply: an RL agent can observe that rain always occurs {before} a person opens their umbrella, while a supervised predictor will have access to only an instantaneous snapshot of the world with binary values of rain and umbrella. As a result, RL agents have access to much richer temporal information about the data-generating distribution, and this removes many, but not all, of the ambiguities that causal inference methods seek to resolve.

We have thus far required that the dynamics of the latent states in the environment be independent of the spurious variables; however, the spurious variables may still correlate with the return to an extent that a function approximator may use them to predict the value function or to select an action. For example, the spurious variables may be a copy of the latent state at the previous time step. In order to identify the relevant and irrelevant components of the state, the agent must receive suitable information from the training environments. In this chapter we will assume that the training environments are generated by interventions on the data-generating process shown on the right hand side of Figure~\ref{fig:irm_model_irrelevant}. 

\begin{assumption}[Environment Interventions]\label{assmpt:envs}
Let $\mathcal{X} = X_1 \times \dots \times X_n$, and $\mathcal{S} = X_{i_1} \times \dots X_{i_k}$. Each environment $e \in \mathcal{E}$ corresponds to a do-intervention or soft intervention \citep{eberhardt2007interventions} on a single variable $x_i$ in the observation space. 
\end{assumption}

We note that in order for the training environments to constitute a block MDP family, these interventions may only be applied to the spurious variables in the state if they are applied over multiple timesteps. An intervention on the causal variables -- for example, fixing the angle of a joint in a robotic simulator -- will change the dynamics on the shared latent states $\mathcal{S}$. While such interventions may provide valuable information about the structure of the world, they will also change the value function that the agent is trying to predict. Interventions which only influence the initial state distribution do not encounter this issue, but face the limitation of providing only a single transition where the target variable takes the desired value. As a result, we assume that the test environments will only contain interventions on spurious variables, but allow for potential interventions on causal ancestors of the reward in the training environments.

\section{State abstractions and causal feature sets}
\label{sec:causal-abs}
We begin our analysis with a simplified setting: we assume that the observation space $\states$ of the block MDP is a direct sum of variables $X_1, \dots,  X_n$, and that each of these variables can be represented as a node in a causal graph corresponding to the dynamics of the MDP which satisfies Assumption~\ref{assmpt:causal_mechanisms}. Let $I \subset [n]$ be an index set of the variables which correspond to the latent state $s$ in the block MDP. Then the state can be decomposed into $\bx = (\bx_I) \oplus (\bx_{I^C}) = s \oplus \eta$, where $s = \bx_I$ denotes the latent state and $\eta = \bx_{I^C}$ denotes the spurious variables. This setting admits a natural formulation of invariant causal prediction in order to identify the state abstraction $\phi(\bx) = \bx_I$. 
It is also straightforward to alternate between the causal graph formulation of the MDP and the block MDP formalism, noting that the emission function $q_e$ for an environment $e$ will simply be of the form $q_e(s, \eta) = s \oplus \eta = \bx_I \oplus \bx_{I^C}$. We can further show that the state abstraction $\bx \mapsto \bx_I$ is a model-irrelevance state abstraction (c.f. Definition~\ref{def:misa}) for all $\mdp_{e} \in \mdp$.

\begin{theorem}[Existence of model-irrelevance state abstractions]\label{thm:existence}
Let $\mdp_\mathcal{E}$ denote a block MDP family with joint observation space $\mathcal{X}_\mathcal{E} = \cup_{e \in \mathcal{E}} X_e$, where the observation space $X_e$ is the image of some emission function $q_e(s,\eta)$ s.t. $s \in \mathcal{S}, \eta \in \Omega$ as before. Let $f_e = q_S \mid_{\mathcal{X}_e}$ as defined in \eqref{eq:inverse-emmision}. Then $\phi = \cup_{e \in \mathcal{E}} f_e$ is a model-irrelevance state abstraction for each $\mdp_{e} \in \mdp_{\mathcal{E}}$.
\end{theorem}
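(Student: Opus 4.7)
The plan is to proceed directly from the two conditions in the definition of a model-irrelevance state abstraction (Definition~\ref{def:misa}), verifying each within an arbitrary fixed environment $\mdp_e \in \mdp_\mathcal{E}$. First I would establish that $\phi$ is well-defined as a single-valued map on the joint observation space. By Assumption~\ref{asmp:block} applied to $\mathcal{X}_\mathcal{E}$, for any $x \in \mathcal{X}_\mathcal{E}$ there is a unique latent state $s$ whose emission support contains $x$, so $q_S(x) = s$ is unambiguous; moreover, on the individual slice $\mathcal{X}_e$ the map $f_e$ agrees with $q_S$, so $\phi = \cup_{e \in \mathcal{E}} f_e$ is simply $q_S$ on $\mathcal{X}_\mathcal{E}$, and therefore well-defined.

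Second, I would verify the reward condition. Fix $e \in \mathcal{E}$ and suppose $x_1, x_2 \in \mathcal{X}_e$ satisfy $\phi(x_1) = \phi(x_2) = s$. Since the reward function $R$ is shared across the family and depends only on the latent state (that is, $R$ factors through $q_S$), we have $R(x_1, a) = R(s, a) = R(x_2, a)$ for every $a \in \actionspace$, which is exactly condition $(1)$ of Definition~\ref{def:misa}.

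Third, I would verify the transition aggregation condition. Within environment $e$, the observation-level transition kernel decomposes as $P_e(x' \mid x, a) = p(s' \mid s, a)\, q_e(x' \mid s')$, where $s = q_S(x)$ and $s' = q_S(x')$, because the latent dynamics $p$ are shared across the family and the emission $q_e$ depends only on the next latent state. Pick any $\bar x' \in \mathcal{S}$. Then for $x_1, x_2 \in \mathcal{X}_e$ with $\phi(x_1) = \phi(x_2) = s$,
\begin{equation*}
\sum_{x' \in \phi^{-1}(\bar x')} P_e(x' \mid x_i, a) \;=\; p(\bar x' \mid s, a) \sum_{x' \in \phi^{-1}(\bar x') \cap \mathcal{X}_e} q_e(x' \mid \bar x') \;=\; p(\bar x' \mid s, a)
\end{equation*}
for $i = 1, 2$, where the final equality uses that $q_e(\cdot \mid \bar x')$ is supported on $\phi^{-1}(\bar x') \cap \mathcal{X}_e$ by the block structure assumption and integrates to one. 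Both sides are equal, yielding condition $(2)$.

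I expect the only real subtlety to be bookkeeping around the block structure when combining emission functions across environments: one must ensure that the slices $\mathcal{X}_e$ interact with $\phi^{-1}(\bar x')$ in such a way that the sum above only picks up the contribution from the current environment's emission, and does not cross-contaminate via observations reachable only in other environments. This is precisely where Assumption~\ref{asmp:block} on $\mathcal{X}_\mathcal{E}$ does the work, guaranteeing $q_e(x' \mid \bar x') = 0$ for any $x' \notin \mathcal{X}_e$ and hence restricting the sum correctly. In the relaxed block MDP setting with correlated noise, the same argument goes through after marginalizing the noise transition $p_e(\eta'\mid s,\eta)$, which integrates to one regardless of the conditioning noise, so no additional assumptions are needed.
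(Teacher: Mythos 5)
Your proposal is correct and follows essentially the same route as the paper's proof: direct verification of the two conditions of Definition~\ref{def:misa}, with well-definedness from the block structure, the reward condition from $R$ factoring through $q_S$, and the transition condition from factoring $P_e(x'\mid x,a)$ through the shared latent dynamics and observing that the emission (or noise-transition) measure integrates to one over $\phi^{-1}(\bar{x}')$ regardless of the conditioning observation. Your explicit handling of the cross-environment bookkeeping and the correlated-noise marginalization is slightly more careful than the paper's, but it is the same argument.
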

\begin{proof}
First, note that $\cup_{e \in \mathcal{E}} f_e$ is well-defined by the block MDP assumption. By assumption we have that the reward function satisfies $R(x) = R(q_S(x))=R(\phi(x))$ for all observations $x$ and environments $e$. We further have that for any $x_1, x_2 $ s.t. $\phi(x_1) = \phi(x_2) = s$, and for any $s'$:
\begin{align}
    \int_{x' \in \phi^{-1}(s')}P(x'|x_1, a) &= P(s'|s,a) \int_{x' \in \phi^{-1}(s')} \int_{\eta': q_e(s', \eta) = x'} P(\eta'|s, \eta_1) \\
    &= P(s'|s,a)  \int_{x' \in \phi^{-1}(s')} \int_{\eta': q_e(s', \eta) = x'} P(\eta'|s, \eta_2) \\
    &=  \int_{x' \in \phi^{-1}(s')}P(x'|x_2, a) 
\end{align}
Noting that $\phi( q_e(s, \eta') ) = s$ and $\phi^{-1}(s) = \{ q_e(s, \eta) \mid \eta \in \Omega\}$, we obtain the desired result.
\end{proof}
 
We now consider whether, under Assumptions \ref{asmp:block}, \ref{assmpt:causal_mechanisms}, and \ref{assmpt:envs}, a model-irrelevance state abstraction can be obtained by causal inference methods. Intuitively, one would then expect that the {causal variables} (in this case, the causal \textit{ancestors} of the return, rather than the parents of the prediction target as in the case of regression problems) should have nice properties as a state abstraction: in particular, they should enable generalization to new environments. 
In what follows, we will generalize our definition of a model-irrelevance state abstractions to refer to a block MDP family for which the dynamics of the induced abstract MDP on each environment in the family are equivalent.

\begin{definition}\label{def:block-misa}
A model-irrelevance state abstraction over a block MDP family $\mdp_{\envs}$ is one for which the following two conditions hold. First, $\phi$ is consistent with the reward function $R$, i.e.
\begin{equation}
    R(\phi(\bx)) = R(\phi(\by)) \quad \forall \bx, \by 
    \in \cup_{e \in \envs} \mathcal{X}_e \; .
\end{equation}
Further, for all $e_1, e_2 \in \envs, \bx_1 \in e_1, \bx_2 \in e_2, \by \in \mathcal{X}_{e_1} \cup \mathcal{X}_{e_2}$ such that $\phi(\bx_1) = \phi(\bx_2) $ we have:
\begin{align}
\sum_{\bx' \in  \phi^{-1}(\by)} P_{e_1}(\bx'|\bx_1) = \sum_{\bx' \in  \phi^{-1}(\by)} P_{e_2}(\bx'|\bx_2)
\end{align}
\end{definition}
The following result highlights the connection between the causal ancestor set and model irrelevance: a state abstraction that selects the set of causal variables from the observation space of a block MDP will be a model-irrelevance abstraction for every environment $e \in \mathcal{E}$ in this stronger sense. 
\begin{figure}
    \centering
    \includegraphics[trim=180 80 180 100,clip,width=0.32 \textwidth ] {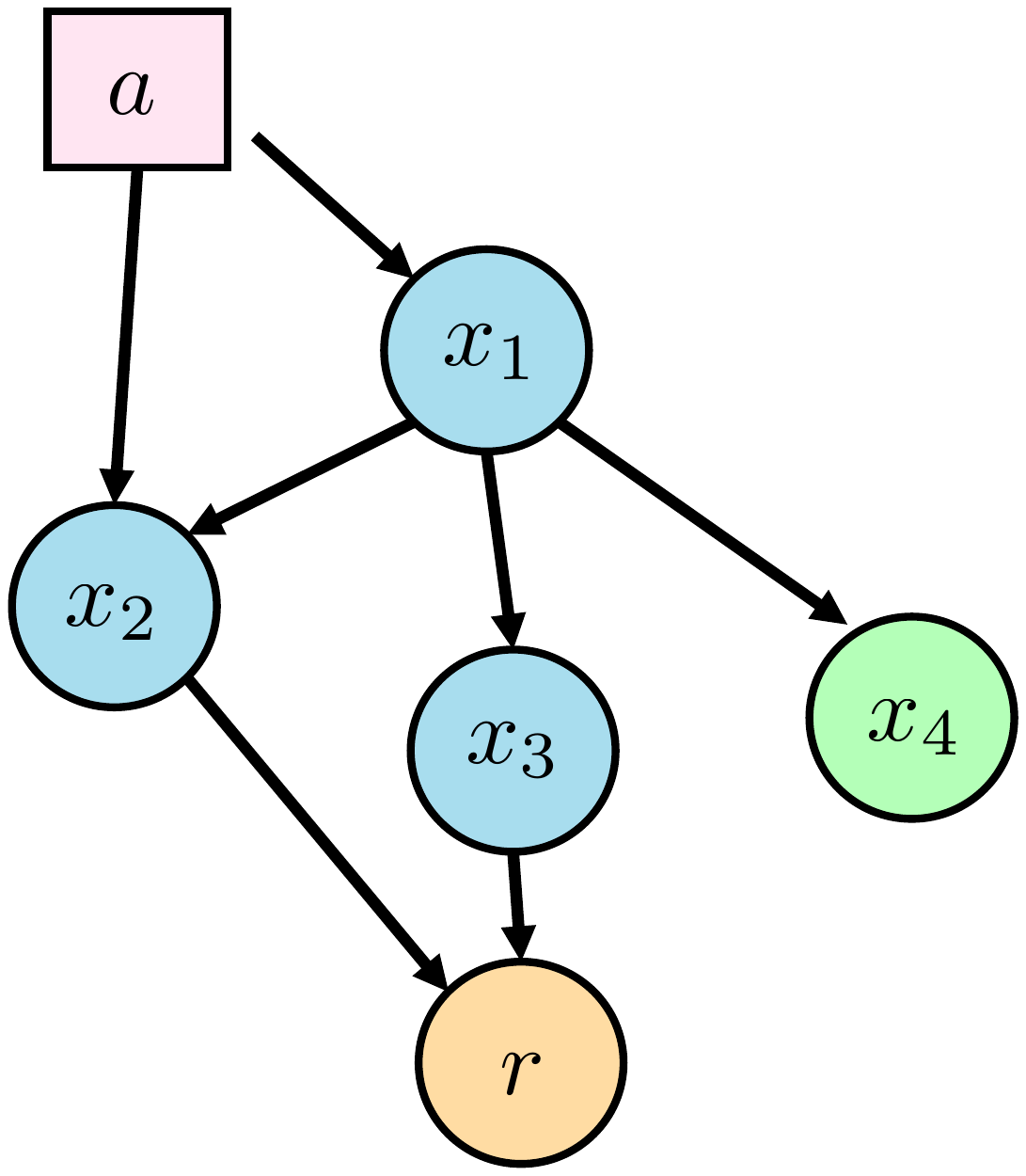}
    \includegraphics[trim=180 80 180 100,clip,width=0.32 \textwidth ] {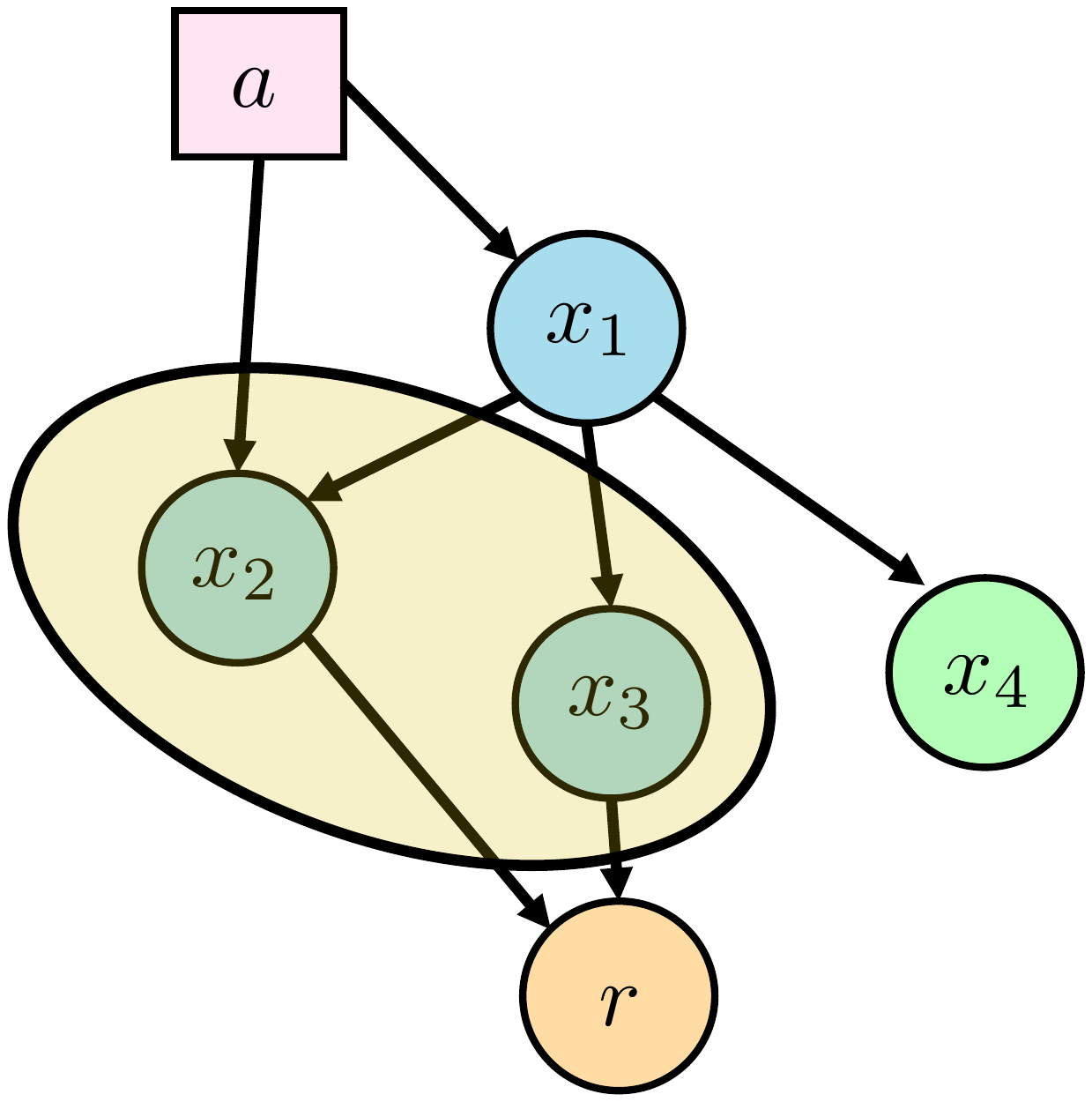}
    \includegraphics[trim=180 80 180 100,clip,width=0.32 \textwidth ] {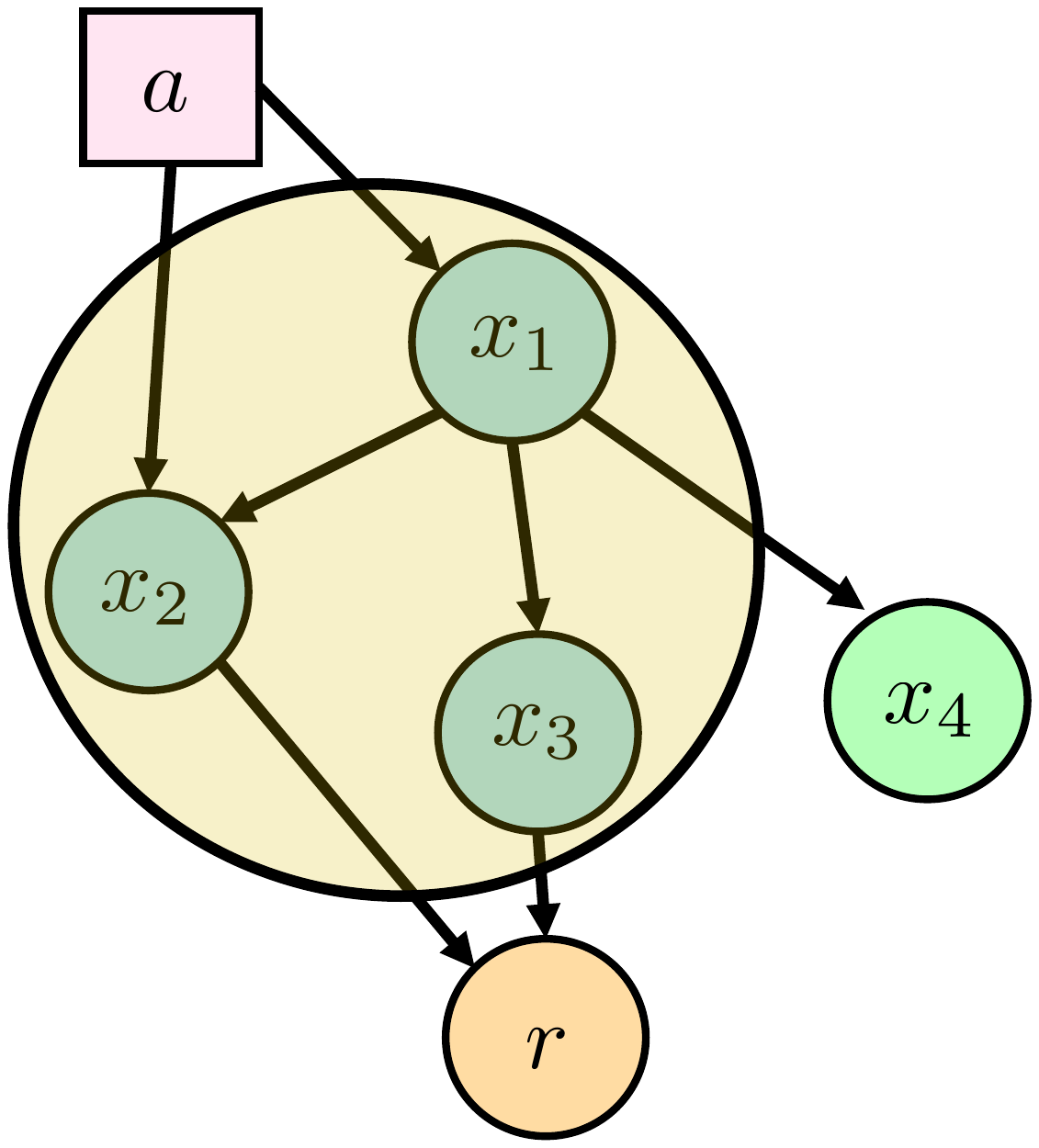}
    \caption[Illustration of a causal state abstraction where the observation is a concatenation of variables from a causal graph.]{Illustrations of a causal state abstraction where the observation is a concatenation of variables from a causal graph. The variables $x_{1:3}$ present the causal ancestors of the reward, and as per Theorem~\ref{thm:causalstate_modelirrelevance} can be interpreted as the latent state $s$ from Figure~\ref{fig:irm_model_irrelevant}. The state $x_4$ thus corresponds to the spurious variable $\eta$.}
    \label{fig:statgm}
\end{figure}

\begin{theorem} 
\label{thm:causalstate_modelirrelevance}
Consider a family of MDPs $\mdp_\mathcal{E} = \{(\mathcal{X}, A, \mathcal{R}, P_e, \gamma)|e \in \mathcal{E} \}$, with $\mathcal{X} = X_1 \times \dots \times X_k$ where each $X_i, i \in 1, \dots, k$ is some collection of sets, for which there exists a SCM $\mathcal{C} = ((x^1_t, \dots, x^k_t, A_t, R_t)_{t=1}^\infty, \{ \eta_1, \dots, \eta_k\}, \mathcal{F}, P)$ such that each $\mathcal{M}_e \in \mdp_{\mathcal{E}}$ corresponds to an intervention on a single variable $X_i$ in $\mathcal{C}$, where $X_i$ is not a causal ancestor of the reward. Let $M_\mathcal{E}$ satisfy Assumptions \ref{asmp:block}-\ref{assmpt:envs}. Let $S = \textbf{AN}(R)$ denote the ancestors of the reward $R$ in $\mathcal{C}$. Then the state abstraction $\phi_S(x) = [x]_S = (x_i \mid i \in S)$  is a \textit{model-irrelevance} abstraction over the block MDP family $\{\mdp_e \in \mathcal{E} \}$. 
\end{theorem}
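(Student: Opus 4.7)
My plan is to verify the two conditions of Definition~\ref{def:block-misa} in turn, namely reward consistency and cross-environment transition consistency, relying on the SCM structure and the fact that $S = \textbf{AN}(R)$ is closed under taking causal parents. The crucial structural observation is that for any $i \in S$, every causal parent of $X^i_{t+1}$ must itself be an ancestor of $R$ (since $X^i_{t+1}$ is), so the parent set of each $S$-variable lies inside $S \cup \{A\}$. Combined with Assumption~\ref{assmpt:causal_mechanisms}, this will give the factorization
\begin{equation*}
P([X_{t+1}]_S \mid X_t, A_t) = \prod_{i \in S} P(X^i_{t+1} \mid \mathrm{PA}(X^i_{t+1})) = P([X_{t+1}]_S \mid [X_t]_S, A_t) \, ,
\end{equation*}
which is precisely the kind of invariance we need.

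For reward consistency, I would argue that because $S$ contains every ancestor of $R$ and the SCM expresses $R_t$ as a deterministic function of its parents (and exogenous noise), $R$ depends on an observation $x$ only through $[x]_S$; hence $\phi_S(x_1) = \phi_S(x_2)$ immediately yields $R(x_1) = R(x_2)$, regardless of which environment each observation came from. For the transition condition, take $x_1 \in \mathcal{X}_{e_1}$ and $x_2 \in \mathcal{X}_{e_2}$ with $\phi_S(x_1) = \phi_S(x_2) = s$, and any abstract next state $\bar y$. I would marginalize the spurious coordinates:
\begin{equation*}
\sum_{x' \in \phi_S^{-1}(\bar y)} P_{e}(x' \mid x) \;=\; P_{e}([X_{t+1}]_S = \bar y \mid x) \, .
\end{equation*}
By Assumption~\ref{assmpt:envs}, the intervention defining $e$ acts only on a variable $X_j$ with $j \notin S$, so it modifies only the mechanisms $f_{X^j}$; it leaves every mechanism $f_{X^i}$ for $i \in S$ intact. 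Combined with the factorization above, this shows $P_{e}([X_{t+1}]_S = \bar y \mid x) = P([X_{t+1}]_S = \bar y \mid [x]_S, a)$, a quantity that depends neither on the spurious coordinates of $x$ nor on the environment. Equating this for $x_1$ and $x_2$ yields the required transition consistency.

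The main obstacle I anticipate is the careful bookkeeping in justifying that interventions on non-ancestor variables leave the conditional distribution of $[X_{t+1}]_S$ unchanged, in particular ruling out any backdoor or selection-style dependence. The key leverage here is that causal ancestry in the unrolled graph is preserved under the stationary dynamics, so once I argue that parents of $S$-variables live in $S$ at every time step (a short inductive argument using the definition of ancestor sets plus acyclicity of the per-step causal graph guaranteed by the independent causal mechanisms assumption), the rest reduces to the standard truncated-factorization argument for do-interventions. I would finish by remarking that the combined family $\{\mathcal{M}_e : e \in \mathcal{E}\}$ trivially satisfies the block MDP property with emission $q_e(s,\eta) = s \oplus \eta$ in the appropriate coordinates, so the conclusion about the block MDP family follows from Theorem~\ref{thm:existence} applied to the abstracted dynamics.
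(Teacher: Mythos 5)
Your proposal is correct and follows essentially the same route as the paper: the paper's proof likewise hinges on $S = \textbf{AN}(R)$ being closed under taking causal parents so that the marginal dynamics $P([X_{t+1}]_S \mid [X_t]_S, A_t)$ are well-defined and invariant across environments, constructs the block MDP family with emission $q_e(s,\eta_e) \mapsto x$ for $s = [x]_S$, $\eta_e = [x]_{S^C}$, and concludes via Theorem~\ref{thm:existence}. The only difference is presentational: you inline the direct verification of the two conditions of Definition~\ref{def:block-misa} (via the truncated factorization under do-interventions on non-ancestors) that the paper delegates to Theorem~\ref{thm:existence}.
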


\begin{proof}
This result follows as a special case of Theorem~\ref{thm:existence} after showing that $\mdp_{\mathcal{E}}$ is a block MDP family. To do so, we construct each component of such a block MDP family as follows: we set $\mathcal{S} = (X_i)_{i \in S}$, $\mathcal{A}=A$, $R = \mathcal{R}$ and $\mathcal{X}=\mathcal{X}$. We define the transition operator $p = P_{S}$, where $P_S$ denotes the marginal transition probability $P(X^{t+1}_S \mid X^{t}_S)$ which is well-defined as all causal parents of nodes in $X_S$ are contained in $X_S$. The emission function $q$ is defined on the joint space $X_S$ and $X_{S^C}$ via concatenation and re-ordering such that $q(s, \eta_e) \rightarrow x$ whenever $s = [x]_S$ and $\eta_e = [x]_{S^C}$. Crucially, the criterion that each environment correspond to an intervention requires that $\eta_e$ evolve over time according to the dynamics induced by $P_e$ for each environment $e$. However, because $S$ is closed under taking causal parents and because Assumptions 1 and 2 apply, this correlation of the noise variable $\eta_e^t$ over time will not interfere with the dynamics on $\mathcal{S}$. We thus obtain a (relaxed) block MDP family which precisely corresponds to $\mdp_{\mathcal{E}}$. This then implies that the map $\phi(x) = [x]_S$ is a model-irrelevance state abstraction.
\end{proof}

An important detail in the previous result is that the model-irrelevance state abstraction incorporates not just the parents of the reward, but also its ancestors. This is because in RL, we seek to model \textit{return} rather than solely rewards, and a variable which is a causal parent of the return may exert influence over several timesteps. We provide an illustration of such a state abstraction in the rightmost column of Figure~\ref{fig:statgm}. As a concrete example from the CartPole environment, only the position $x$ and pole angle $\theta$ are necessary to predict the reward. However, predicting the return requires knowledge of $\dot{\theta}$ and $\dot{x}$, their respective velocities.

\keyinsight{To obtain a state abstraction which will be a model-irrelevance abstraction on all environments $e$ in a block MDP, it suffices to identify the variables which are causal ancestors of the reward in the MDP's causal model. }

Strictly speaking, other subsets of variables which include the set of causal ancestors $S$ will be a model-irrelevance abstraction for each individual environment. What makes the set of causal ancestors $S$ special is that it applies to all environments in the block MDP. This property is crucial for generalization: under $\phi: \bx \rightarrow [\bx]_S$, we have that the union over environment transition operators acting on the latent space $\phi(\states)$ is well-defined. In other words, $\phi$ maps each environment to an abstract MDP whose dynamics and state space are identical across the block MDP family.

Under this definition a model-irrelevance state abstraction must remove all spurious variables from the input. Doing so is a non-trivial task, and requires observing the effects of their variation. It is not in general possible to identify spurious variables from only a single environment, particularly when spurious variables are highly correlated with their causal counterparts. The following proposition highlights the importance of Assumption~\ref{assmpt:envs} to the {identifiability} of the causal feature set. This result will require either that we have at least one environment corresponding to an intervention on each variable in the causal graph, or that we have a particular class of intervention applied to all variables simultaneously \citep{peters2016causal}.
\begin{proposition}[Identifiability of causal state abstractions]\label{prop:identifiability}
Consider a block MDP family whose corresponding SCM and set of training environments $\envs$ satisfy the conditions of Theorem 2 of \citet{peters2016causal}. Then the causal feature set $\phi_S$ is identifiable and corresponds to a model-irrelevance state abstraction for all test environments consisting of interventions on non-ancestors of the reward. Conversely, if these conditions are not satisfied then there may exist a state abstraction $\phi : \cup_{e \in \envs} \statespace_e \rightarrow \bar{\statespace}$ such that $\phi$ is a model-irrelevance abstraction over $\mathcal{E}_{\text{train}}$, but not over $\mathcal{E}$ globally.
\end{proposition}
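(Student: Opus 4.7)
The proof naturally decomposes into two independent claims corresponding to the forward and converse directions of the statement, and I would treat each separately. The forward direction effectively amounts to a translation of Theorem~2 of \citet{peters2016causal} into the language of block MDP families, combined with an application of Theorem~\ref{thm:causalstate_modelirrelevance}. The converse requires constructing a counterexample in which insufficient intervention diversity produces a model-irrelevance abstraction over the training environments that fails to generalize.

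For the forward direction, the plan is as follows. First, I would set up the causal prediction problem to which Peters et al.'s result applies: treat the reward $R_t$ (or, more carefully, a sequence of future rewards used to build the return) as the prediction target, let the candidate predictors be the components $x^1, \dots, x^n$ of the observation, and treat each training environment $e \in \envs$ as a distinct interventional regime on a single variable. Under Assumption~\ref{assmpt:causal_mechanisms} and the hypotheses imported from Peters et al., invariant causal prediction returns exactly the parents of $R$ in the causal graph $\mathcal{C}$. To recover the ancestor set $S = \textbf{AN}(R)$, I would apply the same procedure recursively: having identified $\textbf{PA}(R)$, apply ICP to each identified parent in turn as a prediction target, and iterate until the procedure closes under taking parents. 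Identifiability of $\phi_S$ then follows. The second step is to appeal to Theorem~\ref{thm:causalstate_modelirrelevance}: since any test environment in the class consists of an intervention on a non-ancestor of the reward, the same ancestor set $S$ continues to satisfy the hypotheses of that theorem for the extended family $\envs \cup \{\text{test envs}\}$, and thus $\phi_S$ is a model-irrelevance abstraction on all test environments.

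For the converse direction, the plan is to produce a minimal counterexample. Consider a block MDP family with state variables $x^1, x^2 \in \{0,1\}$, where $x^1$ is a causal ancestor of the reward and $x^2$ is a spurious variable. In every training environment, fix $x^2 = x^1$ deterministically via an intervention (or by never intervening to break the correlation), so that Peters et al.'s faithfulness / intervention-diversity conditions fail. Under this construction both $\phi(\bx) = x^1$ and $\phi'(\bx) = x^2$ induce identical abstract MDPs on every training environment and so are both model-irrelevance abstractions over $\envs_{\train}$ by Definition~\ref{def:block-misa}. However, a test environment that intervenes on $x^2$ to decorrelate it from $x^1$ breaks the invariance of $\phi'$: the rewards and transitions of the abstract MDP induced by $\phi'$ no longer agree with those seen during training, while $\phi$ remains model-irrelevant globally. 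This exhibits the claimed non-uniqueness.

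The main obstacle I anticipate is the translation step in the forward direction: Peters et al.'s theorem is stated for an i.i.d.\ supervised setting with a single scalar target, whereas the RL problem involves a temporally extended reward process with non-trivial auto-regressive structure in $(s_t, \eta_t)$. Care is needed to verify that the Markov and independent-mechanism assumptions of Assumption~\ref{assmpt:causal_mechanisms}, combined with the intervention restrictions of Assumption~\ref{assmpt:envs}, suffice to put the per-timestep prediction problem in the form Peters et al.\ require, and that the iterative application of ICP to recover ancestors rather than parents does not introduce further conditions beyond those stated. The counterexample in the converse direction is then comparatively routine, but should be chosen small enough to make transparent which specific hypothesis of Peters et al.'s theorem fails.
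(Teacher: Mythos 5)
Your forward direction matches the paper's: both reduce identifiability of $\phi_S$ to iterative application of Peters et al.'s Theorem~2 — first to $R$, then recursively to each identified parent until the set closes under taking causal ancestors — followed by an appeal to Theorem~\ref{thm:causalstate_modelirrelevance} to conclude model-irrelevance on test environments that intervene only on non-ancestors. This is exactly the structure of the paper's argument (and of Algorithm~\ref{alg:linear_misa}). The obstacle you flag about translating the i.i.d.\ supervised setting of ICP to the temporally extended RL setting is real; the paper handles it only informally, noting additionally that interventions on \emph{causal} variables (needed for full identifiability) may take the training environments outside the block MDP family, and that the aggregation step of ICP must be modified to return the largest invariant set. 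Your converse differs in the choice of counterexample: you use a spurious variable that perfectly copies a causal one during training and is decorrelated at test time, which breaks the \emph{reward-consistency} condition of Definition~\ref{def:block-misa}; the paper instead takes a non-ancestor $\bx_m$ that is \emph{constant} in every training environment, so that $\bx \mapsto \bx_{S \cup \{m\}}$ is model-irrelevant on $\envs_{\train}$, and then perturbs $\bx_m$ with Bernoulli noise at test time, which breaks the \emph{transition-consistency} condition (the pushforward transition kernels of train and test environments no longer agree under $\phi$). Both constructions are valid and make the same point; the paper's has the minor advantage of not requiring the spurious variable to carry any information about the causal state during training, so it more cleanly isolates which hypothesis of the identifiability theorem fails (lack of interventional variation on $\bx_m$), whereas yours illustrates the more practically salient failure mode of a spuriously correlated feature. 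Either suffices for the stated claim.
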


\begin{proof}
The proof of the first statement follows immediately from the iterative application of the identifiability result of \citet{peters2016causal} to each variable in the causal variables set. These conditions require that we also show the agent interventions on the \textit{causal} variables; the resulting training environments will not necessarily satisfy the block MDP assumption, but may still be used for identification. Once a causal variable set has been identified, it is straightforward to use this variable set to train a value function on environments in the block MDP family. Alternatively, it is possible to show that for environments satisfying Assumption~\ref{assmpt:causal_mechanisms}, so long as there is a do-intervention on each variable in $\textbf{AN}(R)^C$ in the training set $\envs$, we preserve identifiability of the causal variable set. Constructing this set requires a slight modification of the aggregation procedure of \citet{peters2016causal} to return the largest set $S$ whose induced predictor is invariant.

For the converse, we consider a simple counterexample in which one non-ancestor of the return $\bx_m$ is constant in every training environment with value $v_m$, where $\bx = (\bx_i)_{i=1}^d \in \calX$. Then letting $S = \textbf{AN}(R)$, we observe that ${\phi = \bx \mapsto \bx_{S \cup \{m\}} }$ is also a model-irrelevance state abstraction for each training environment $\mdp_e$. However, it is straightforward to show that if $x_m$ is perturbed by a Bernoulli noise variable $Z \sim \mathrm{Ber}(0.5)$ in a new test environment, then these transition dynamics will not satisfy the conditions for $\phi$ to be a model-irrelevance state abstraction in the new environment. Indeed, we now have that the state abstraction $\phi$ violates the second condition of Definition~\ref{def:block-misa}. In particular, let $\bx$ be a state occurring in both a training environment $e_{\mathrm{train}}$ and the test environment $e_{\mathrm{test}}$. Let $\by \in \mathcal{X}_{e_{\mathrm{train}}}$ be such that $\by_m = v_m$ , $P_{e_{\mathrm{train}}}(\by \mid \bx) > 0$, and $P_{e_{\mathrm{test}}}(\by \mid \bx) > 0$. Then 
\begin{equation}
    \sum_{\bx' : \phi(\bx') = \phi(\by)}P_{e_{\mathrm{train}}}(\bx'|\bx) 
    = \sum_{\bx' : \phi(\bx') = \phi(\by)} 2 P_{e_{\mathrm{test}}}(\bx'|\bx) \; .
\end{equation} 
Because $P_{e_\mathrm{test}}$ spreads probability mass equally over states where $\bx_m = v_m$ and $\bx_m = v_m + 1$. This violates our requirement that the union over transition operators $P_{e_{\mathrm{train}}}$ and $P_{e_{\mathrm{test}}}$ yield a well-defined function.
\end{proof}

\subsection{Variable selection for linear predictors}
We have shown that identifying a model-irrelevance state abstraction that will generalize to any environment consistent with the block MDP is possible by identifying the causal feature set. The following \textit{Linear MISA} algorithm (Algorithm~\ref{alg:linear_misa}) provides a concrete approach to do this. We require the presence of a replay buffer $\mathcal{D}$, in which transitions are stored and tagged with the environment from which they came. The algorithm then applies ICP to find all causal ancestors of the reward iteratively. This approach has the benefit of inheriting many desirable properties from ICP -- under suitable identifiability conditions, it will return the exact causal variable set to a specified degree of confidence. 

It also inherits inconvenient properties: the ICP algorithm is exponential in the number of variables, and so this method is not efficient for high-dimensional observation spaces. We are further restricted to considering linear relationships of the observation to the reward and next state. Additionally, because we take the union over iterative applications of ICP, the confidence parameter $\alpha$ used in each call must be adjusted accordingly. Given $n$ observation variables, we use a conservative value of $\frac{\alpha}{n}$ to get an overall confidence of $\alpha$ from the procedure.

\begin{algorithm}[t]
\SetAlgoLined
\hspace*{\algorithmicindent} \textbf{Input:}  $\alpha$, a confidence parameter; $\mathcal{D}$, a replay buffer with observations $\mathcal{X}$. \\
\hspace*{\algorithmicindent} \textbf{Output:} {$S \subset \{1, \dots, k\}$, the causal state variables}. \\
 $S \gets \emptyset$\;
 stack $\gets$ r \;
 \While{stack is not empty}{
  $v$ = stack.pop() \;
  \If{$v \not \in S$}{
  $S' \gets$ \texttt{ICP}(v, $\mathcal{D}$, $\frac{\alpha}{\text{dim}(\mathcal{X})}$) \;
  $S \gets S \; \cup S'$\;
   stack.push($S'$)}
 
 }
\caption{Linear MISA: Model-irrelevance State Abstractions}
\label{alg:linear_misa}
\end{algorithm}

\subsubsection{Evaluation}
\label{sec:model_linear}
We illustrate the benefits of explicitly removing spurious features from the state input in Figure~\ref{fig:icp_result}. We consider a simple family of MDPs with state space $\mathcal{X} = \{ (x_1, x_2, x_3)\}$, with a transition dynamics structure such that $x_1^{t+1} = x_1^t + \epsilon_1^e$, $x_2^{t+1} = x_2^t + \epsilon_2^e$, and $x_3^{t+1} = x_2^t + \epsilon_3^e$. The reward $r(x)$ is set to be a linear function  of $x_1$ and $x_2$. We train on 3 environments with soft interventions on each noise variable. We run the linear MISA algorithm on batch data from these 3 environments to get a state abstraction $\phi(x) = \{x_1, x_2\}$, then train 2 linear predictors on $\phi(x)$ and $x$. We evaluate the {generalization error} on novel environments that correspond to different hard interventions on the value of the $x_3$ variable. We observe that the predictor trained on $\phi(x)$ attains zero generalization error because it zeros out $x_3$ automatically. However, any nonzero weight on $x_3$ in the least-squares predictor will lead to arbitrarily large generalization error, which is precisely what we observe in Figure \ref{fig:icp_result}.
\begin{figure}
    \centering
    \includegraphics[width=0.49\textwidth]{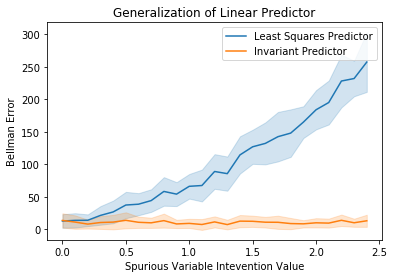}
    \caption[The presence of spurious {uncorrelated} variables in the state can still lead to poor generalization of linear function approximation methods.]{The presence of spurious {uncorrelated} variables in the state can still lead to poor generalization of linear function approximation methods. Figure shows Bellman error of linear function approximators in the presence of interventions. The linear model which is only given access to the invariant features obtained by Linear MISA is robust to these interventions, while the model trained on the entire input space exhibits increasing error as the intervention value grows. }
    \label{fig:icp_result}
\end{figure}

\section{State abstractions for rich observations}

The variable selection problem has been an enlightening regime in which to ground our intuition about the relationship between invariant prediction and generalization. However, most RL problems of interest involve more complex observation spaces where simply identifying an input mask is not sufficient to guarantee generalization to new environments. This section leverages the intuition discussed previously to present a method motivated by invariant prediction which learns approximate model-irrelevance state abstractions in rich observation settings.

\subsection{Block MDP generalization bounds}

Our goal in this section is to \textit{learn} representations that will generalize from the training environments to a novel test environment, as opposed to identifying input masks as we did in the previous section. 
However, normal PAC generalization bounds require a much larger number of environments than one could expect to obtain in the reinforcement learning setting. The appeal of an invariant representation is that it may allow for theoretical guarantees on learning the right state abstraction with many fewer training environments, as discussed by \citet{peters2016causal}. If the learned state abstraction is close to capturing the true latent dynamics of the block MDP family, then the model error, and by extension performance, in the test environment can be bounded by a function of the distance between the test environment's abstract state distribution and that of the training environments. In particular, if the state abstraction is {invariant} and always maps equivalent states in any pair of environments to the same latent representation, then the learned model error on the test environment will be equal to that on the training environments.

A more interesting case arises when the state abstraction $\phi$ is {approximately invariant}: such a state abstraction might map equivalent states from the training environment $M$ and test environment $M'$ to similar but non-identical feature vectors. For simplicity we will consider only deterministic environments, but similar reasoning can be applied to stochastic MDPs. In this case, a looser requirement on $\phi$ can still yield reasonable generalization error bounds for a learned transition dynamics model. Intuitively, the requirement we set is that the training environment dynamics function $T_M$ be Lipschitz with respect to $\phi$, and that the union of the training and test environment dynamics $T_M \cup T_{M'}$ be well-defined and Lipschitz. This is analogous to the requirement on the induced dynamics of a model-irrelevance state abstraction: if states from $M$ and $M'$ map to the same feature vector, then the successor states to each must also map to identical feature vectors. The Lipschitz requirement introduces a geometric notion of \textit{similarity}: if states from $M$ and $M'$ map to nearby feature vectors, then their successors must also be relatively close in feature space. With this requirement, it is possible to bound the generalization error of a learned latent dynamics model trained on $M$ in terms of the Lipschitz constant of $T_{M} \cup \T_{M'}$ and the Wasserstein distance between the state visitation distributions of the two environments in feature space. Intuitively, the Lipschitz constant quantifies the smoothness the environment dynamics with respect to $\phi$, and the Wasserstein distance between the state visitation distributions of the two environments in feature space, which measures how well $\phi$ has been able to discard irrelevant information. The following result assumes a fixed policy and a state-valued input to the transition function, but can be easily adapted to the control setting.

\begin{restatable}{theorem}{thmModelErr}[Model error bound]
\label{thm:model_error}
Let $M = \langle \states, A, R, P, \gamma \rangle$ and $M' = \langle \states', A, R', P', \gamma \rangle$ be two environments from a block MDP family, and let $\phi: \states \rightarrow \mathbb{R}^d$ denote a model-irrelevance state abstraction that maps states from both MDPs to feature vectors. Set $T_M: \states  \rightarrow \states$ as the (deterministic) transition function of $M$.
Suppose that the union of the dynamics of $M$ and $M'$ are $L$-Lipschitz with respect to the embedding $\phi$ and that $T:\mathbb{R}^d  \rightarrow \mathbb{R}^d$ is some approximate transition model satisfying ${\max_{s} \mathbb{E}\|T(\phi(x)) - \phi(T_M(x)) \| < \delta}$, for some $\delta > 0$. Let $W_1(\pi_1, \pi_2)$ denote the 1-Wasserstein distance between distributions $\pi_1, \pi_2$ on $\mathbb{R}^d$. Then
\begin{equation}
    \mathbb{E}_{x \sim M'}[\|T(\phi(x)) - \phi(T_{M'}(x)) \|] \leq \delta + 2LW_1(\pi_{\phi(M)}, \pi_{\phi(M')}).
\end{equation}
\end{restatable}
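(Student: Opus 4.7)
The plan is to use the model-irrelevance property to reduce everything to a single quantity on the latent space, and then bound the change-of-measure from $M'$ to $M$ using a standard coupling argument for the $1$-Wasserstein distance. Because $\phi$ is a model-irrelevance state abstraction over the block MDP family, whenever $x \in \mathcal{X}$ and $x' \in \mathcal{X}'$ satisfy $\phi(x) = \phi(x')$ we have $\phi(T_M(x)) = \phi(T_{M'}(x'))$. This lets us define a single latent transition operator $\bar{T} : \phi(\mathcal{X} \cup \mathcal{X}') \to \mathbb{R}^d$ by $\bar{T}(\phi(x)) = \phi(T_M(x))$ (equivalently $\phi(T_{M'}(x'))$); by the hypothesis that the union of the dynamics is $L$-Lipschitz with respect to $\phi$, $\bar{T}$ inherits Lipschitz constant $L$, and in the same spirit we will treat the learned $T$ as $L$-Lipschitz (this is the implicit meaning of the Lipschitz assumption covering the approximate model, and is the main technical point to be careful about).

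Next, define $f : \mathbb{R}^d \to \mathbb{R}_+$ by $f(z) = \|T(z) - \bar{T}(z)\|$. The quantity we want to bound is $\mathbb{E}_{x \sim M'} f(\phi(x)) = \int f\, d\pi_{\phi(M')}$, while the hypothesis on the model error gives $\int f\, d\pi_{\phi(M)} \leq \delta$. The main step is then to control the difference between these two integrals via Wasserstein duality: let $\gamma$ be any coupling between $\pi_{\phi(M)}$ and $\pi_{\phi(M')}$, write
\begin{align*}
\int f\, d\pi_{\phi(M')} - \int f\, d\pi_{\phi(M)}
  &= \mathbb{E}_{(z, z') \sim \gamma}\bigl[ f(z') - f(z) \bigr],
\end{align*}
and apply the triangle inequality twice inside $f$:
\begin{align*}
  f(z') - f(z)
  &\leq \|T(z') - T(z)\| + \|\bar{T}(z) - \bar{T}(z')\|
  \leq 2 L \|z - z'\|.
\end{align*}
Taking expectation and then the infimum over couplings yields $\int f\, d\pi_{\phi(M')} - \int f\, d\pi_{\phi(M)} \leq 2L\, W_1(\pi_{\phi(M)}, \pi_{\phi(M')})$.

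Combining the two steps gives $\int f\, d\pi_{\phi(M')} \leq \delta + 2L\, W_1(\pi_{\phi(M)}, \pi_{\phi(M')})$, which is the desired inequality. The main obstacle is really a bookkeeping one, namely, verifying that the lifted dynamics $\bar{T}$ is well-defined and that the Lipschitz hypothesis legitimately extends to $T$ so that the factor of $2L$ (rather than $L + L_T$ for some unknown Lipschitz constant of the learner) is justified; everything else is a direct application of the triangle inequality and the Kantorovich--Rubinstein representation of $W_1$. A minor generalization to stochastic dynamics would replace $\bar{T}(\phi(x))$ with a pushforward measure and use $W_1$ on those next-state distributions, but the deterministic case above captures the essential argument.
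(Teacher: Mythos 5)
Your proposal is correct and is essentially the paper's argument: the paper also couples $\pi_{\phi(M)}$ with $\pi_{\phi(M')}$ and applies the triangle inequality to split the error into a $\delta$ term plus two $L\|\phi(x)-\phi(y)\|$ terms, which is exactly your observation that $z \mapsto \|T(z)-\bar{T}(z)\|$ is $2L$-Lipschitz combined with Kantorovich--Rubinstein. The implicit assumption you flag — that the learned model $T$ is itself $L$-Lipschitz, needed for the first of the two transport terms — is made (silently) in the paper's proof as well, so your bookkeeping concern is well placed but does not distinguish your argument from theirs.
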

\begin{proof}
The proof follows from a straightforward decomposition of the error into three components: one which depends on the similarity between each input $x \in X_{M'}$ and the nearest neighbour $y$ in $X_{M}$, the model error on $y$, and the similarity between the representations of the successor states $T_M(x)$ and $T_M(y)$:
\begin{align*}
  & \mathbb{E}_{x \sim M'} [\|T(\phi(x)) - \phi(T_{M'}(x)) \|] \\&= \mathbb{E}_{x \sim M'} \bigg[\min_{y \in X_M} \|T(\phi(x)) - T(\phi(y)) + T(\phi(y)) -\phi( T_{M}(y)) + \phi(T_M(y)) - \phi(T_{M'}(x)) \| \bigg ] \\
    &\leq \mathbb{E}_{x \sim M'} \bigg [\min_{y \in X_M} \|T(\phi(x)) - T(\phi(y))\| + \|T(\phi(y)) - \phi(T_M(y))\|+\| \phi(T_M(y)) - \phi(T_{M'}(x))\| \bigg ]\\
    \intertext{Letting $\gamma$ be a coupling over the distributions of $\phi(\states')$ and $\phi(\states)$ which minimizes the objective of the Wasserstein distance, i.e. ${\mathbb{E}_{\gamma(\phi(x),\phi(y))} \|\phi(x)-\phi(y)\|= W_1(\pi, \pi')}$}
    &\leq \mathbb{E}_{\gamma(\phi(x), \phi(y))} \bigg [\|T(\phi(x)) - T(\phi(y))\| + \delta + L\|x-y\| \bigg ]\\
    &\leq \mathbb{E}_{\gamma(\phi(x), \phi(y))} \bigg [L\|\phi(x) - \phi(y)\|+ \delta + L\|\phi(x)-\phi(y)\| \bigg ] \\
    &=  \mathbb{E}_{\gamma(\phi(x),\phi(y))}\bigg[L\|\phi(x) - \phi(y)\|+ \delta + L\|\phi(x)-\phi(y)\| \bigg ] \\
    &= 2LW_1(\pi, \pi') + \delta
\end{align*}

\end{proof}

Analogous results can be obtained for value error \citep{zhang2020invariant}, which provide a bound on generalization performance that depends on the supremum of the dynamics and reward errors obtained by a learned model, along with the degree of invariance captured by the representation. The former corresponds to the empirical risk in a PAC bound, while the latter corresponds to the \textit{generalization gap}. A state abstraction which perfectly captures the equivalence classes of the block MDP family will attain error on new environments equal to that of its error on the training environments and exhibit no generalization gap. Meanwhile, a state abstraction equal to the identity map can induce arbitrarily large errors on test environments.

\subsection{Learning a model-irrelevance state abstraction}

\label{sec:model_irrelevant}

It is clearly desirable to learn an invariant representation across environments. How to do so is less clear. In this section, we propose one such approach based on a model-learning objective, but note that the principles guiding this method can easily be applied to other representation-learning methods. Our approach aims to learn a representation ${\phi : \mathcal{X} \rightarrow \mathbb{R}^d}$ which preserves the reward- and dynamics-relevant information of the state, while discarding environment-specific information in settings where such information is encoded non-linearly in rich observations. Ideally, a representation which discards environment-specific information in the training environments should also do so in new test environments, though guaranteeing this property is more challenging than in the variable selection setting. 

We now present an objective to learn a dynamics preserving state abstraction $\mathcal{Z}$. This requires disentangling the state space into a minimal representation that causally relates to the reward $s_t:=\phi(x_t)$ and all other features of the observation $\eta_t:=\varphi(x_t)$. 

We decompose our learning objective into two parts:
\begin{itemize}
    \item \textbf{Invariance}: we train a {\color{PineGreen}\textbf{task classifier}} on the shared latent representation ${C:\mathcal{Z}\mapsto [0,1]^N}$ with cross-entropy loss and employ an adversarial loss~\citep{tzeng2017ada} on $\phi$ to maximize the entropy of the classifier output to ensure task specific information is not passing through to $\mathcal{Z}$.
    \item \textbf{Dynamics preservation:} we train a reward model and a latent dynamics model on the state abstraction $\varphi$ to encourage the preservation this information. 
    \subitem The {\color{BrickRed}\textbf{reward model}} is an MLP $R$ trained to predict the sample reward $r$ given input transition $(\phi(\bx), a, \phi(\bx'))$, using the objective
    \begin{equation*}
    J_R(\phi,R)=\sum_i \mathbb{E}_{\pi_{b_i}}\big[(R(\phi(x_i), a,\phi(x_i')) - r_i')^2\big]\;.
\end{equation*}
    \subitem The {\color{Cerulean} \textbf{transition model}} is an MLP $f_s$  acting on the latent space $\mathcal{Z}$. The transition model is trained via a reconstruction loss, using a learned decoder $\phi^{-1}$ which takes as input a task-dependent state embedding $\psi(\bx)$ along with the latent-space model output $f_s(a, \phi(\bx))$, and outputs a predicted next observation. 
    \begin{align*}
    J_D(\phi,\psi, f_s,f_\eta)=\sum_i \mathbb{E}_{\pi_{b_i}}\big[&(\phi^{-1} (f_s(a,\phi(x_i)),
    f_\eta(a,\psi(x_i))) - x_i')^2\big],
\end{align*}
\end{itemize}


\noindent This gives us a final objective
\begin{equation}
    J_{\text{ALL}}(\phi, \psi, f_s, f_\eta, r) = {\color{Cerulean}J_D(\phi,\psi, f_s,f_\eta)} + {\color{BrickRed}\alpha_R J_R(\phi,r)} - {\color{PineGreen}\alpha_C H(C(\phi))},
\end{equation}
where $\alpha_R$ and $\alpha_C$ are hyperparameters and $H$ denotes entropy (Algorithm~\ref{alg:nonlinear_misa}). 

\begin{algorithm}[t]
\SetCustomAlgoRuledWidth{0.6\textwidth}
\SetAlgoLined
\hspace*{\algorithmicindent} \textbf{Input:} $\mathcal{E}$, a set of environments; $\pi_0$, an initial policy; $\phi_0$, $f_0$ an initial invariant feature map and transition model;  $\{\psi_0^e \mid e \in \mathcal{E} \}$ and $f^e_{\eta, 0}$, initial task-specific feature maps and transition models.\\
\hspace*{\algorithmicindent} \textbf{Output:} \text{$\phi$, an invariant state encoder.} \\
 $\pi \gets \pi_0$\;
 $\phi, f_s \gets \phi_0, f_{s,0}$ \;
 $\psi^e, f_\eta^e \gets \psi^e_0,f_{\eta,0}^e $ for $e \in \mathcal{E}$ \;
 $\mathcal{D}_e \gets \emptyset$ for $e \in \mathcal{E}$ \;
 \While{forever}
 {
    \For{$e \in \mathcal{E}$}
    {
        $a \gets \pi(x_e)$\;
        $x'_e, r \gets $ \texttt{step}$(x_e,a)$ \;
        \texttt{store}$(x_e,a,r,x'_e)$ \;
    }
    \For{$e \in \mathcal{E}$}
    {
    Sample batch $X_e$ from $\mathcal{D}_e$ \;
    $f_\eta^e,\psi^e \gets \nabla_{f_{\eta}^e,\psi^e} [J_D(X_e) ]$ \;
    }
    $f_s, \phi, r \gets \sum_{X_e}\nabla_{f_s,\phi} [J_{\text{ALL}}(X_e) ]$\;
    $C \gets \nabla_C$ \texttt{CE\_loss}$(C(\phi(\{x_e\}_{e\in\mathcal{E}}), \{e\}_{e\in\mathcal{E}})$ \; 
 }
 \caption{Nonlinear Model-irrelevance State Abstraction (MISA) Learning}
 \label{alg:nonlinear_misa}
\end{algorithm}

\subsection{Results}
We first evaluate MISA in a rich-observation model-learning task to evaluate whether the learned representation and abstract model do indeed generalize to complex dynamics structures where the factors of variation correspond to non-linear functions of the observation. We next look to imitation learning (Section~\ref{sec:imitation_learning}), where we show that the state abstractions learned by MISA allow the agent to zero-shot generalize to observations generated by new camera angles. Finally, we explore end-to-end reinforcement learning in the low-dimensional observation setting with correlated noise (Section~\ref{sec:reinforcement_learning}) and again show generalization capabilities where single task and multi-task methods fail.

\subsubsection{Model learning: rich observation setting}
\label{sec:model_nonlinear}
We evaluate the gradient-based MISA method (Algorithm~\ref{alg:nonlinear_misa}) in a setting with nonlinear  dynamics and rich observations. We set the agent the task of learning a model which can generalize to new background colours on pixel-valued control tasks. We randomly initialize the background colour of two training environments from Deepmind Control~\citep{deepmindcontrolsuite2018} from the range $[0, 255]^3$. We also randomly initialize another two backgrounds for evaluation. The orange line in Figure~\ref{fig:imitation_learning} shows performance on the evaluation environments in comparison to three baselines. In the first, we train on a single environment and test on another with our method, (\texttt{MISA - 1 env}). Without more than a single experiment to observe at training time, there is no way to identify whether or not the background can safely be ignored, and indeed we observe that the method appears to include information about the background in its representation. In the second baseline, we combine data from the two environments and train a model over all data (\texttt{Baseline - 1 decoder}), but without explicitly encouraging invariance across the two environments. The third is another invariance-based method which uses a gradient penalty, IRM~\citep{arjovsky2019invariant}. In the second case the error is tempered by seeing variance in the two environments at training time, but it is not as effective as MISA with two environments. In the case of IRM, the loss starts much higher but very slowly decreases, and we find it is very brittle to tune in practice. Implementation details are deferred to Appendix~\ref{app:model_nonlinear_implementation}.

\begin{figure}
    \centering
    \includegraphics[height=4.85cm]{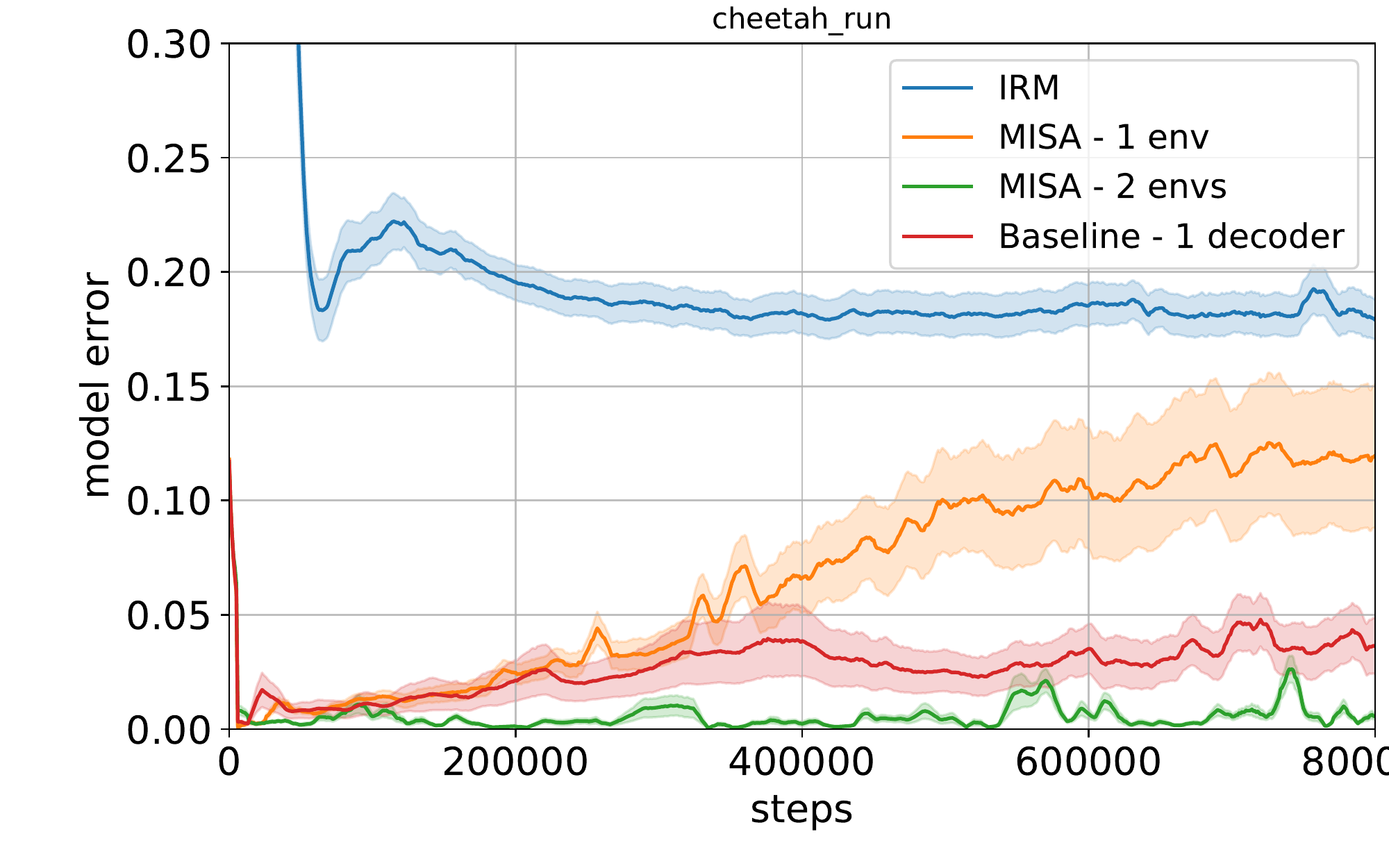}
    \hfill 
    \includegraphics[height=5.0cm]{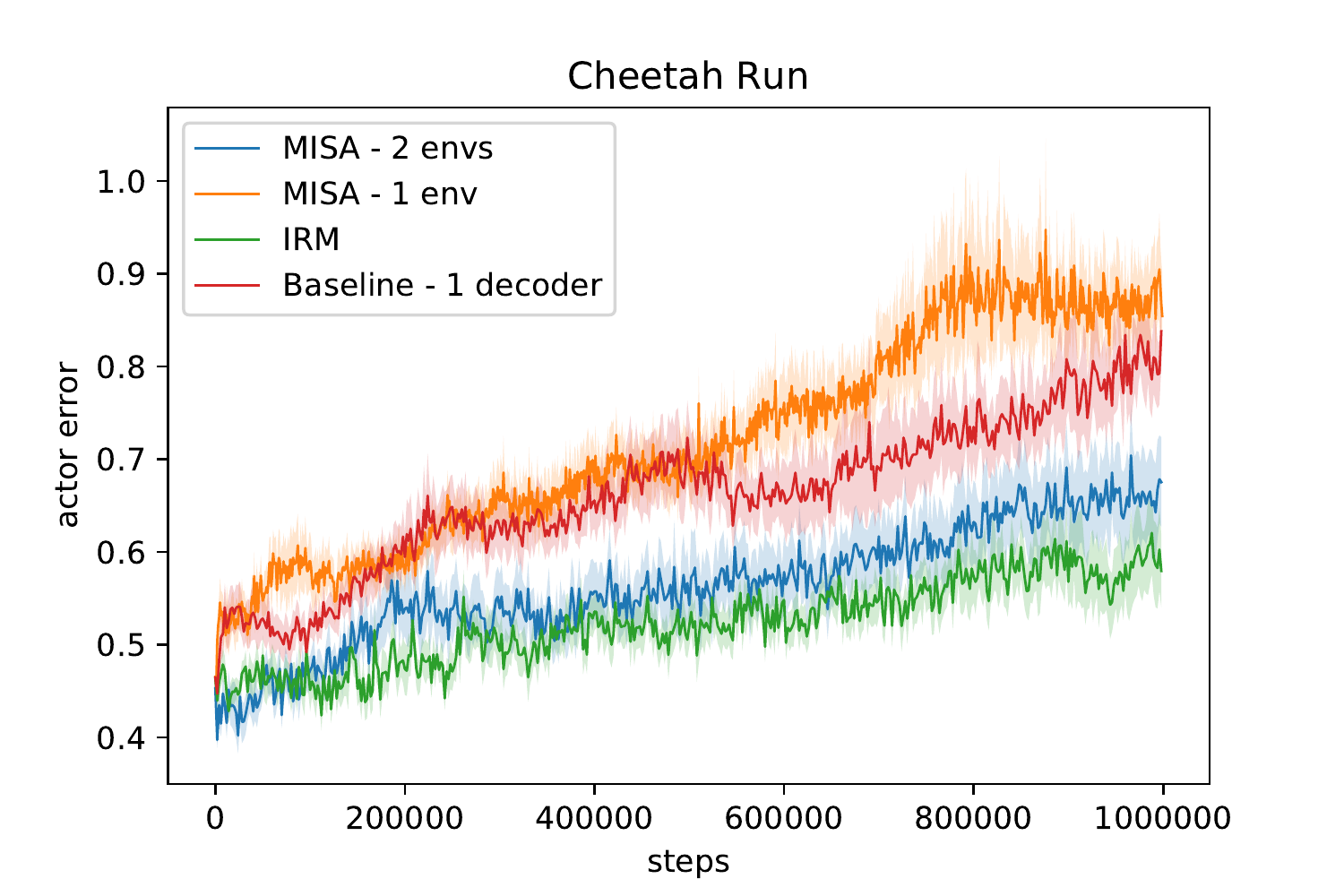}
    \caption[Evaluation of our nonlinear MISA algorithm.]{Evaluations over 10 seeds, with one standard error shaded. Left: Model error on evaluation environments on Cheetah Run from Deepmind Control.  Right: Actor error on evaluation environments on Cheetah Run from Deepmind Control.}
    \label{fig:imitation_learning}
\end{figure}

\subsection{Imitation learning}
\label{sec:imitation_learning}
Model-learning is often a useful auxiliary task in RL problems, but ultimately our the objective in RL is to learn an effective policy. We now evaluate how well a policy trained on top of the invariant representation found by MISA is able to generalize to new environments. We focus on imitation learning, as this setting involves fewer moving parts than an online learning setting, while preserving the policy-learning challenge of the RL problem. In this setup, we first train an expert policy using the proprioceptive state of Cheetah Run from the DeepMind Control suite \citep{deepmindcontrolsuite2018}. We then use this policy to collect a dataset for imitation learning in each of two training environments. When rendering these low dimensional images, we alter the camera angles in the different environments (Figure~\ref{fig:imitation_learning_envs}). We report the generalization performance as the test error when predicting actions in Figure~\ref{fig:imitation_learning}. Model error on the evaluation environment increases significantly over the course of training in all baselines; while we see that test error does increase with MISA as well, the error growth is significantly slower compared to single task and multi-task baselines. 

\begin{figure}

    \centering
    \includegraphics[height=2.2cm]{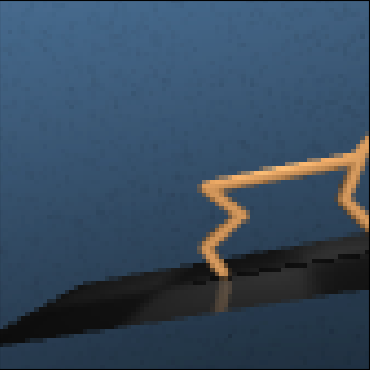} \hspace{20pt}
    \includegraphics[height=2.2cm]{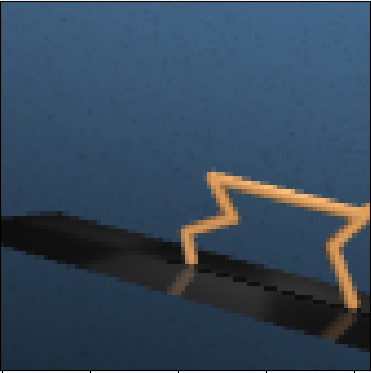} \hspace{20pt}
    \includegraphics[height=2.2cm]{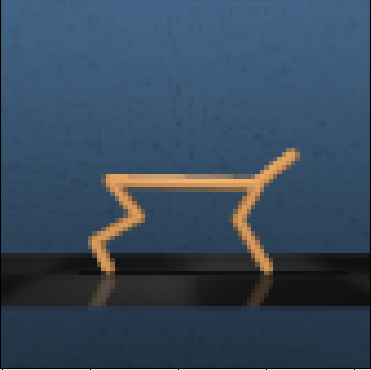}
    \caption{The Cheetah Run environment from Deepmind Control with different camera angles. The first two images are from the training environments and the last image is from evaluation environment.}
    \label{fig:imitation_learning_envs}

\end{figure}

\subsection{Reinforcement learning}
\label{sec:reinforcement_learning}
Finally, we evaluate Algorithm~\ref{alg:nonlinear_misa} on the full RL loop. This presents a greater challenge than was observed in the offline learning settings, as the invariant prediction component of the model must now face the possibility that the learned policy will visit different state distributions in different environments, particularly early in training. We therefore go back to the proprioceptive state in the \texttt{cartpole\_swingup} environment in DeepMind Control~\citep{deepmindcontrolsuite2018} to show that we can in some cases still learn a model-irrelevance state abstraction while training a policy. We use the Soft Actor Critic (SAC) algorithm~\citep{haarnoja2018sac} with an additional linear encoder, and add spurious correlated dimensions which are a multiplicative factor of the original state space. We also add an additional environment identifier to the observation. This factor varies across environments. We train on two environments with multiplicative factors $1\times$ and $2\times$, and test on $3\times$.  Like \citet{arjovsky2019invariant}, we also incorporate noise on the causal state to make the task harder, specifically Gaussian noise $\mathcal{N}(0, 0.01)$ to the true state dimension. This incentivizes the agent to attend to the spuriously correlated dimension instead, which has no noise. 
In Figure~\ref{fig:cartpole_swingup_rl} we see the generalization gap drastically improves with our method in comparison to training SAC with data over all environments in aggregate and with the IRM objective~\citep{arjovsky2019invariant} implemented on the critic loss. Intriguingly, we find that even without noise on the ground truth states, with only two environments, baseline SAC fails (Figure~\ref{fig:cartpole_swingup_rl}). Implementation details and more information about SAC can be found in Appendix~\ref{app:rl_implementation}. 

\begin{figure}
    \centering
    \includegraphics[height=5.5cm]{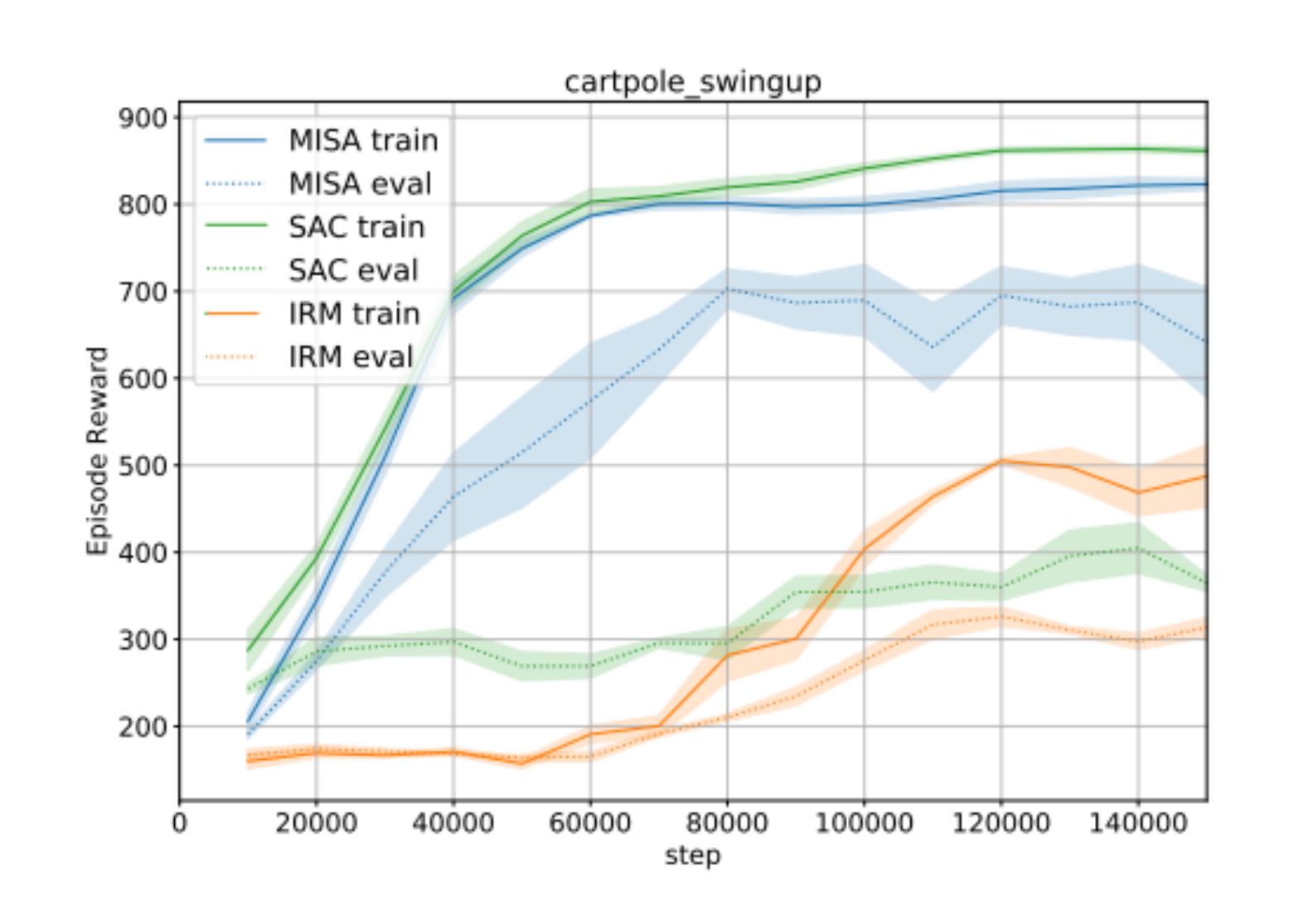} \hfill 
    \includegraphics[height=4.95cm]{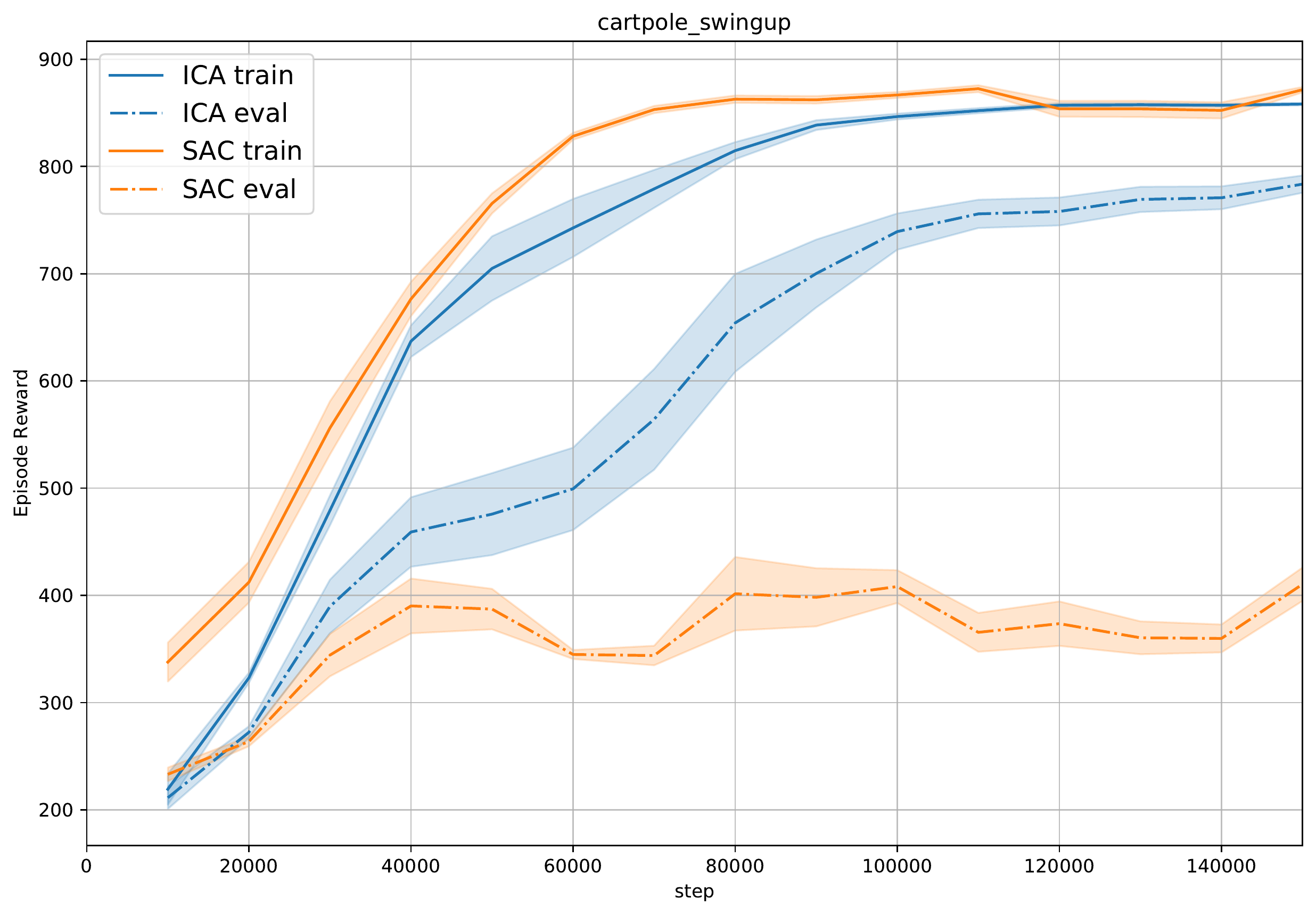}
    
    \caption[Performance of nonlinear MISA on RL tasks.]{Left: generalization gap in SAC performance with 2 training environments on \texttt{cartpole\_swingup} from DMC compared against two invariant prediction methods. Both experiments are evaluated with 10 seeds, standard error shaded. Right: generalization gap comparison between SAC and nonlinear MISA.}
    \label{fig:cartpole_swingup_rl}

\end{figure}

\section{Conclusions}
The findings presented in this chapter illustrate the importance of leveraging explicit assumptions on the causal structure of the environment for sample-efficient multi-environment generalization. We have shown that the application of invariant prediction methods in RL allows agents to learn policies that robustly generalize across environments with a shared causal structure after experiencing only a handful of training environments. We have further provided methods for both the low dimensional linear value function approximation setting and the deep RL setting which leverage invariant prediction to extract a causal representation of the state.

The block MDP family we defined in this chapter provides an explicit characterization of a rich class of MDPs in which generalization from a training task to a novel environment is tractable; however, promising future directions include extending these results to apply to an even broader class of problems. The proposed method in this chapter has not solved the generic problem of generalization to new environments in RL, but it has clearly demonstrated the benefits of applying the principle of invariant prediction to identify features that are useful for generalization in this expressive family of environments. Future work on this problem setting may find variations on invariant prediction which are more robust and which can handle the non-stationary rich-observation input distributions found in many deep RL problems. It may also extend these ideas to related problems including third-person imitation learning and sim-to-real transfer. 

If the reader is to come away with one insight, it is this: just as representations which enable generalization between training inputs enable better generalization in the single task setting (as we saw in Chapters~\ref{chp:supervised}, \ref{chp:invariance}, and \ref{chp:gen-rl}), a representation which enables generalization between different training \textit{environments} is likely to do the same in the multi-environment setting. Translating invariant prediction methods to RL is non-trivial; in large part  this is due to the non-stationarities described in Chapters~\ref{chp:rl-dynamics} and \ref{chp:rep-learning}. However, our findings here present a striking proof-of-concept of the utility of these ideas in reinforcement learning.

\chapter{Conclusion}

It is easy to ensure that a deep neural network learns what we tell it to learn: doing so simply requires ensuring sufficient expressivity for the size of the training set at hand. It is an unsolved problem to ensure that it learns what we \textit{want} it to learn, particularly in the context of deep reinforcement learning, where what we want a network to learn may evolve over the course of training. This thesis has argued that understanding how a network learns, by studying the properties of the learning trajectory taken over the course of training, is crucial to ensure that the learned function captures the underlying structure of the data it was trained on. We have applied this principle to a vast array of application areas, from Bayesian model selection to neural architecture search, from reinforcement learning to computer vision problems. Throughout these domains, a consistent trend has emerged: networks which are better able to use an update to their prediction on one data point to improve their accuracy on other data points train faster, generalize better, and have higher marginal likelihoods. 

\section*{Discussion}
Our primary quantity of interest has been interference, a notion of intra-dataset generalization. The intuition motivating this study is simple: a network which can leverage information from a gradient update on one data point to improve its performance on other data points should generalize better than one which treats the data as being independent or, worse, faces competing interests wherein improving a prediction on one data point reduces performance on others.  A model which experiences positive interference will converge more quickly and in principle generalize better than one which exhibits weak or conflicting interference. This intuition, inspired by a connection to the Bayesian marginal likelihood, motivated our study of training speed and invariance in Chapters~\ref{chp:invariance} and \ref{chp:supervised}. Many implications of this relationship have been leveraged in follow-up work to develop efficient performance estimators for neural architecture search and hyperparameter selection.

Whereas supervised learning involves fitting a single target function that remains fixed over the course of training, the targets that value-based RL objectives seek to fit are more mercurial, evolving alongside the network's predictions. This complicates the story relating training speed and generalization, and requires additional analytical and empirical tools. We developed a notion of subspace convergence to study the representations learned by an idealized model of deep RL agents in the theoretical results of Chapter~\ref{chp:rl-dynamics}. We went on to present a characterization of an agent's representation in terms of the types of functions it is biased towards fitting quickly in Chapter~\ref{chp:rep-learning}. These tools provided us with a number of fruitful insights into the learning dynamics of reinforcement learning agents, setting the stage for our study of generalization in Chapters~\ref{chp:gen-rl} and \ref{chp:icp}.

Our study of generalization in deep RL leveraged these tools to understand the nuanced trade-offs between generalization and stability in value-based RL algorithms. This analysis hinged on a key insight outlined in Chapter~\ref{chp:gen-rl}: that the targets which deep RL agents seek to fit initially contain relatively little information about environment structure, and only accumulate this structure later in training. By asking networks to fit unstructured, discontinuous targets in the critical early phases of training, we bias them towards regions of the parameter space where interference between observations is weaker, reducing their potential for generalization. This analysis presented some obvious work-arounds in the form of post-training distillation which we showed improve generalization and extrapolation, but the benefits of this approach were largely limited to generalization within a single environment. Chapter~\ref{chp:icp} concluded with discussion of generalization via invariant prediction, establishing connections to causality to address these limitations. We showed that by assuming a large degree of shared structure between environments, we can significantly improve upon the sample-efficiency of standard RL methods via an application of invariant prediction algorithms from causal inference.

The initial motivation of this thesis was to understand why neural networks generalize, and how this picture can become complicated in the setting of reinforcement learning. Looking back, it is clear that this was the wrong question: firstly, because as we have seen in this thesis neural networks often \textit{fail} to generalize, particularly in the case of out-of-distribution data and in RL problems; secondly, because there is not in general a single straightforward explanation as to why a given neural network generalizes well or poorly. Instead, a variety of properties of a network architecture and training procedure come into play, and to understand the relationship between these properties and generalization it is crucial to understand the learning dynamics of the training procedure. This thesis has demonstrated the utility of this perspective in both supervised and reinforcement learning settings, and provided a number of theoretical and empirical tools for further analysis.

\section*{Further work}

While there are many straightforward extensions of the results presented in the preceding chapters, a number of deeper questions also emerge concerning how to apply these ideas to modern training regimes. Chief among these is the recent success of pre-training gargantuan models via self-supervised learning on vast swathes of data. How should we evaluate interference in such cases? As opposed to the supervised learning regime of Chapters 3 and 4, state of the art training procedures involve showing the network a vast and diverse array of data and then fine-tuning its predictions on some downstream tasks. In these cases the line between a model's inductive bias and its learned outputs becomes blurred, making the application of the tools connecting training speed and model selection presented here nontrivial. Some interesting connections might arise between the analysis of Chapter~\ref{chp:rl-dynamics} and the self-supervised learning objectives used in pre-training large models: temporal difference learning can be viewed as implicitly training the network to be self-predictive before any 'fine-tuning' reward signal is given. It also raises an interesting question of whether the ideas behind invariance and interference might be used to shape pre-training objectives which build even better inductive biases than what is used by autoregressive models. 

A second cluster of intriguing ideas concern trade-offs between plasticity and generalization, and the precise relationship between loss of plasticity and declining generalization performance. Chapter~\ref{chp:rep-learning} suggests that in many cases the correlation between the rank of a feature embedding and the plasticity of the network which parameterizes it is weak. The degree to which the network disentangles inputs can measure not just its ability to linearly distinguish observations, but also the degree to which its predictions have needed to change in order to fit its previous targets. This presents a conflict between the intuitions of prior literature in reinforcement learning, that higher-dimensional embeddings should correspond to networks with greater flexibility, and that from deep learning, where significant changes to a network's outputs tend to reduce its plasticity. 
This tension raises an interesting question into the existence of trade-offs between plasticity and the network's ability to represent particular function classes. It may be the case that in order to solve complex prediction tasks, a network must necessarily reduce its ability to quickly fit unrelated targets. Analogously, it may be the case that excessive generalization too early in training could destabilize learning, making memorization a feature rather than a bug of the learning process. If such trade-offs exist, they will carry significant implications for the application of RL to increasingly complex tasks, where training trajectories are long and require a persistent ability to adapt to new information and generalize from old experiences. It may be the case that new architectures and training procedures are needed in order to avoid the pitfalls identified in this thesis, which take into account the continual nature of reinforcement learning problems. 

Finally, the role of generalization between data points can likely be leveraged more explicitly in the training process to encourage the network to develop the correct inductive biases for the data set it is trained on. To some degree, the implicit bias of gradient descent appears to already push large models towards solutions which generalize well after the training accuracy has saturated in many natural language tasks, a phenomenon commonly referred to as `grokking'. By more directly encouraging networks to pick up on features which generalize between many training inputs, it may be possible to accelerate this phenomenon to obtain solutions which generalize well earlier in training.

Beyond these concrete research directions, the findings presented in this thesis raise a deeper philosophical question.  We discussed in Section~\ref{sec:background-science} the distinction between mathematical truths and scientific theories, and have seen examples of each throughout the thesis. Our formal guarantees have consistently been forced to trade off between generality of the problem setting (e.g. the exact gradient  updates of Theorem~\ref{thm:infinite-heads}, the linear group action in Theorem~\ref{theorem:symmgd}, and the Lipschitz continuity of the transition operator in Theorem~\ref{thm:model_error}), and precision of the resulting guarantee. In contrast, many chapters have gone on to distill the intuition behind each theoretical result into a principle which yields empirically testable predictions, such as the training speed estimator of Chapter~\ref{chp:supervised} and the nonlinear invariant prediction methods of Chapter~\ref{chp:icp}. Arguably, it is the empirical test of the intuition which has provided the greatest practical insight into the phenomena of interest to this thesis. But is this empirical approach capable of truly giving an explanation of generalization?

On its face, the answer would appear to be negative. Our empirical approach to generalization is almost tautological. A network will generalize well to test data, we have shown, if it generalizes well between data points it encounters during learning.  However, the ultimate reason why a network generalizes on a given dataset is because its inductive bias is a good fit for a data. Developing a detailed understanding of the training trajectory is crucial to be able to obtain a measure of the fit between a network’s inductive bias and the data generating distribution. This detailed understanding of a network’s training trajectory has also shown to bring useful insights into more nuanced notions of generalization and capacity in reinforcement learning, where a network must not only generalize between observations drawn from some distribution, but also to new observations and new prediction targets. The learning dynamics framework has thus already born fruit in both providing insight into generalization under the classical i.i.d. assumption on the training and evaluation data, and broadening this notion of generalization to apply to a richer class of prediction problems arising in reinforcement learning. Whether or not the precise empirical methods and theoretical tools presented in this thesis are leveraged directly in future work, it is clear that the study of learning dynamics promises to provide deep insights into some of the most fundamental questions about generalization in the years to come.

\startappendices
\let\svaddcontentsline\addcontentsline
\renewcommand\addcontentsline[3]{%
  \ifthenelse{\equal{#1}{lof}}{}%
  {\ifthenelse{\equal{#1}{lot}}{}{\svaddcontentsline{#1}{#2}{#3}}}}
\chapter{The role of invariance in generalization}

\section{Proofs}
\label{apx:invar-proofs}   
\propEmpRisk* 

\begin{proof}[Proof of Proposition~\ref{prop:empirical:risk:order}]
  Let $\grp$ be a group with some probability measure $\haar$, and $\fclass$ a class of functions $f : \calX \to \bbR$. Let $\loss : \bbR \times \bbR \to \bbR_+$ be a loss function such that $\loss(f(\argdot),\argdot)\in L_2(\dgd)$ for every $f \in \fclass$. Then the augmented risk of any function $f \in \fclass$ is
  \begin{align*}
    \eRiskAug(f,\trdata) = \frac{1}{n}\sum_{i=1}^n  \bbE_{G\sim\haar}[\loss(f(G  X_i),Y_i)] \;.
  \end{align*}
  If $\loss$ is convex in the first argument, then by Jensen's inequality,
  \begin{align} \label{eq:jensens}
    \bbE_{G\sim\haar}[\loss(f(G X_i),Y_i)] \geq \loss(\bbE_{G\sim\haar}[f(G X_i)],Y_i) \;, \quad i = 1,2,\dotsc,n \;.
  \end{align}
  On the other hand, the $\grp$-symmetrization of $f(X)$ is $\invf{f}(X) = \bbE_{G\sim\haar}[f(G X)]$, with augmented risk
  \begin{align*}
    \eRiskAug(\invf{f},\trdata) & = 
      \frac{1}{n}\sum_{i=1}^n  \bbE_{G\sim\haar}[\loss(\bbE_{G\sim\haar}[f(G X_i)],Y_i)] \\
      & =\frac{1}{n}\sum_{i=1}^n  \loss(\bbE_{G\sim\haar}[f(G X_i)],Y_i) = \eRisk(\invf{f},\trdata) \;.
  \end{align*}
  Combined with \eqref{eq:jensens}, the reduction in empirical augmented risk 
  follows. The reduction in $\eRiskAug(Q,\trdata)$ follows trivially. 

  The variance-reduction 
  is established by extending the argument in the proof of \citet{chen2019invariance}. Specifically, by the conditional Jensen's inequality,
  \begin{align*}
    \Var_{\trdata\sim\dgd^n}\big[  \eRiskAug(f,\trdata)  \big]& = \Var[\bbE[\eRiskAug(f,\trdata) \mid \Orbit^n]] \geq \Var[ \bbE[ \eRisk(\invf{f},\trdata) \mid \Orbit^n]  ] \\
    &= \Var_{\trdata\sim\dgd^n}\big[ \eRisk(\invf{f},\trdata) \big] \;.
  \end{align*}
\end{proof}

\propSymmGD* 
\begin{proof}
Let $w \in V^*$, and suppose that $w$ is not invariant under the action of $\grp$. Let $\invf{w} = \bbE_{G\sim\haar}[\rho^*_G w]$, which is \ginv by construction. Because $\calX$ spans $V$ and , $w - \invf{w} \neq 0$ implies that $w^\top(\calX) \neq {\invf{w}}^\top(\calX)$. 

Consider the minimizer
\begin{align*}
  \hat{\bw} = \argmin_{\bw \in V^*} \eRiskAug(f_\bw,\trdata) = \argmin_{\bw \in V^*} \frac{1}{n}\sum_{i=1}^n \bbE_{G\sim\haar}[\loss(\bw^{\top}\rho_G X_i,Y_i)] \;,
\end{align*}
which is unique because $\loss$ is strictly convex by assumption. Assume that $\hat{\bw}$ is not \ginv. 
Applying Jensen's inequality, we have
\begin{align*}
  \eRiskAug(f_{\hat{\bw}},\trdata) & = \frac{1}{n}\sum_{i=1}^n \bbE_{G\sim\haar}[\loss(\hat{\bw}^{\top} \rho_G X_i,Y_i)] \\
  & > \frac{1}{n}\sum_{i=1}^n \loss(\bbE_{G\sim\haar}[\hat{\bw}^{\top}\rho_G X_i,Y_i)] \\
  & = \frac{1}{n}\sum_{i=1}^n \loss(\bbE_{G\sim\haar}[(\rho_{G^{-1}}^* \hat{\bw})]^{\top} X_i,Y_i)] \\
  & = \frac{1}{n}\sum_{i=1}^n \loss(\invf{\hat{\bw}} X_i, Y_i) = \eRiskAug(f_{\invf{\hat{\bw}}},\trdata) \;,
\end{align*}
which cannot be the case because $\hat{\bw}$ minimizes $\eRiskAug$. Therefore, $\hat{\bw}$ must be \ginv.

\end{proof}

\lempushforwardKL*    
The proof of Lemma~\ref{lem:pushforward:KL} relies on the chain rule of relative entropy. 
Let two probability measures, $\tmu \ll \tnu$ defined on the product space $(E_1 \times E_2, \calE_1 \otimes \calE_2)$, have marginal measures $\tmu_1\ll \tnu_1$ on $(E_1,\calE_1)$ (respectively, $\tmu_2 \ll \tnu_2$ on $(E_2,\calE_2)$) and regular conditional probability measures $\tmu_{2|1}\ll\tnu_{2|1}$ (resp.\ $\tmu_{1|2}\ll\tnu_{1|2}$). Recall the chain rule of relative entropy is
\begin{align} \label{eq:chain:rule}
  \KL{\tmu}{\tnu} = \KL{\tmu_1}{\tnu_1} + \bbE_{\tmu}\bigg[ \log \frac{d\tmu_{2|1}}{d\tnu_{2|1}}  \bigg] = \KL{\tmu_2}{\tnu_2} + \bbE_{\tmu}\bigg[ \log \frac{d\tmu_{1|2}}{d\tnu_{1|2}}  \bigg] \;.
\end{align}
Observe that each of the terms in the equalities is non-negative
In particular, when $\psi$ is non-injective, points of $(E_1,\calE_1)$ become equivalent; $(E_2,\calE_2)$ is a compressed version, and the probability measures $\mu$ and $\nu$ are similarly compressed.

\begin{proof}[Proof of Lemma~\ref{lem:pushforward:KL}]
  Given probability measures on $(E_1,\calE_1)$ $\mu \ll \nu$ (with density $m$ such that $\mu = m\cdot \nu$) and a measurable map $\psi : (E_1,\calE_1) \to (E_2,\calE_2)$, construct the probability measure $\tmu$ on $(E_1 \times E_2, \calE_1 \otimes \calE_2)$ as
  \begin{align*}
    \tmu(A \times B) = \mu(A \cap \psi^{-1}B) = \int_A \mu(dx_1) \int_{B} \delta_{\psi(x_1)}(dx_2) \;, \quad A \in \calE_1,\ B \in \calE_2 \;,
  \end{align*}
  and likewise for $\tnu$. Then in the notation of \eqref{eq:chain:rule}, $\tmu_1 = \mu\ll \nu = \tnu_1$, and $\tmu_{2|1} = \delta_{\psi(x_1)} = \tnu_{2|1}$. Therefore,
  \begin{align}
    \KL{\tmu}{\tnu} = \KL{\tmu_1}{\tnu_1} = \KL{\mu}{\nu} \;.
  \end{align}
  Alternatively, $\tmu_2 = \mu\circ\psi^{-1}$, $\tnu_2 = \nu\circ\psi^{-1}$, and it is straightforward to show that
  \begin{align}
    \bbE_{\tmu}\bigg[ \log \frac{d\tmu_{1|2}}{d\tnu_{1|2}}  \bigg] = \bbE_{\tmu}\bigg[ \log \frac{d\tmu_{1}}{d\tnu_{1}}  \bigg] - \bbE_{\tmu}\bigg[ \log \frac{d\tmu_{2}}{d\tnu_{2}}  \bigg] = \bbE_{\mu}\bigg[ \log \frac{m}{m\circ\psi}  \bigg] = \Delta_{\psi}(\mu\ ||\ \nu) \geq 0. \;.
  \end{align}
  Therefore,
  \begin{align}
    \KL{\tmu}{\tnu} = \KL{\mu}{\nu} = \KL{\mu\circ\psi^{-1}}{\nu\circ\psi^{-1}} + \Delta_{\psi}(\mu\ ||\ \nu) \;.
  \end{align}
\end{proof}
\subsection{Proof of \texorpdfstring{Theorem~\ref{thm:pac:bayes:da}}{Theorem 4}}

\thmpacbayesda*
The proof of our PAC-Bayes bound for data augmentation makes use of the following result due to \citet{leveretal2013tighterPACbayes}.

\begin{theorem}[\citet{leveretal2013tighterPACbayes}, Theorem 1] \label{lem:lever:bound}
  For any functions $A(f, \trdata)$, $B(f)$ over $\fclass$, either of which may be a statistic of the training data $\trdata$, any distribution $P$ over $\fclass$, any $\delta \in (0,1]$, any $t > 0$, and a convex function $\scD : \bbR \times \bbR \to \bbR$, with probability $\dgd^n$ at least $1 - \delta$, for all distributions $Q$ on $\fclass$,
  \begin{align} \label{eq:lever:bound}
    \scD\big( \bbE_{f\sim Q}[A(f, \trdata)],\bbE_{f\sim Q}[B(f)] \big) \leq \frac{1}{t} \bigg( \KL{Q}{P} + \log \frac{\calL_P}{\delta}  \bigg) \;,
  \end{align}
  where $\calL_P: = \bbE_{\trdata\sim \dgd, f\sim P}[e^{t\scD(A(f, \trdata,B(f))}]$ is the Laplace transform of $\scD(A(f),B(f))$.
\end{theorem}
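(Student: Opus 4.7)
The plan is to prove Theorem~\ref{lem:lever:bound} using the standard three-ingredient PAC-Bayes recipe: Jensen's inequality (to pass the expectation over $Q$ inside the convex $\scD$), the Donsker--Varadhan change-of-measure inequality (to trade the expectation under $Q$ for one under the prior $P$ at the price of the $\KL$ term), and Markov's inequality (to control the resulting exponential moment under $P$ uniformly in $Q$ on a high-probability event over $\trdata$). The crucial feature is that the Markov step is carried out for the quantity $\bbE_{f\sim P}[e^{t\scD(A(f,\trdata),B(f))}]$, which does not depend on $Q$, so the resulting event holds simultaneously for all posteriors $Q$.

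First, I would observe that since $\scD$ is convex on $\bbR\times\bbR$, Jensen's inequality applied to the joint distribution of $(A(f,\trdata),B(f))$ under $f\sim Q$ gives
\begin{equation*}
    \scD\bigl(\bbE_{f\sim Q}[A(f,\trdata)],\ \bbE_{f\sim Q}[B(f)]\bigr) \;\leq\; \bbE_{f\sim Q}\bigl[\scD(A(f,\trdata),B(f))\bigr] .
\end{equation*}
Next, I would invoke the Donsker--Varadhan variational representation of the KL divergence: for any measurable $h:\fclass\to\bbR$ with $\bbE_{f\sim P}[e^{h(f)}]<\infty$ and any $Q\ll P$,
\begin{equation*}
    \bbE_{f\sim Q}[h(f)] \;\leq\; \KL{Q}{P} + \log \bbE_{f\sim P}[e^{h(f)}] .
\end{equation*}
Applying this with $h(f)=t\,\scD(A(f,\trdata),B(f))$ and dividing by $t>0$ yields, for every $Q$,
\begin{equation*}
    \bbE_{f\sim Q}[\scD(A(f,\trdata),B(f))] \;\leq\; \tfrac{1}{t}\Bigl(\KL{Q}{P} + \log \bbE_{f\sim P}[e^{t\scD(A(f,\trdata),B(f))}]\Bigr) .
\end{equation*}
Chaining this with the Jensen step produces an inequality whose right-hand side depends on $\trdata$ only through the $P$-expectation $Z(\trdata):=\bbE_{f\sim P}[e^{t\scD(A(f,\trdata),B(f))}]$.

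Finally, I would bound $Z(\trdata)$ in probability. By Fubini (justified since the integrand is nonnegative), $\bbE_{\trdata\sim\dgd^n}[Z(\trdata)] = \calL_P$ as defined in the theorem. Markov's inequality then gives $\Pr_{\trdata}\!\bigl[Z(\trdata)\geq \calL_P/\delta\bigr]\leq \delta$, so on an event of $\dgd^n$-probability at least $1-\delta$ we have $\log Z(\trdata)\leq \log(\calL_P/\delta)$ simultaneously for all $Q$. Combining the three bounds on this event gives exactly \eqref{eq:lever:bound}. The main conceptual subtlety (rather than a genuine obstacle) is the \emph{uniform-in-$Q$} aspect: the high-probability event must be defined using only the $P$-expectation $Z(\trdata)$ and not anything that depends on $Q$, which is precisely why the Donsker--Varadhan step is essential; a more naive approach that took Markov first with respect to an explicit $Q$-dependent quantity would fail to yield a bound valid for all posteriors at once.
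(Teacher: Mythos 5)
Your proof is correct: the Jensen--Donsker--Varadhan--Markov chain is exactly the standard derivation underlying this result, and you correctly identify that the Markov step must be applied to the $Q$-independent quantity $\bbE_{f\sim P}[e^{t\scD(A(f,\trdata),B(f))}]$ so that the bound holds uniformly over all posteriors. Note that the thesis does not prove this statement itself --- it imports it by citation from \citet{leveretal2013tighterPACbayes} and only verifies $\calL_P=1$ for the particular choices recovering Catoni's bound --- so your argument supplies the standard proof of the cited result rather than diverging from anything in the paper.
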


As \citet{leveretal2013tighterPACbayes} discuss, many PAC-Bayes bounds in the literature can be obtained as special cases of Lemma~\ref{lem:lever:bound}, including Catoni's bound in Theorem~\ref{thm:catoni:bound}. In that case, which applies to 0-1 loss, $t=n$, $A(f) = \eRisk(f,\trdata)$, $B(f) = \risk(f)$, $C=n^{-1}$ and 
\begin{align}
  \scD_C(q,p) & := -\log (1-p(1-e^{-C})) - C q \;, \quad q,p \in (0,1), \ C > 0 \\
    & = -\log \bbE_{Z \sim \text{Bern}(p)}[e^{-CZ}] - Cq \;.
\end{align}
Basic calculations show that with these quantities, $\calL_P=1$.
\begin{align}
    \calL_P = \mathbb{E}_{\trdata, f}[e^{t \scD(A(f; \trdata), B(f))}] &= \mathbb{E}_{\trdata, f}[e^{t\scD(\eRisk(f; \trdata), \risk(f))}] \\
    &= \mathbb{E}_{f, \trdata}[e^{t(-\log \mathbb{E}[e^{CZ}] - C \eRisk(f; \trdata))} ]\\
    &= \mathbb{E}_{f}[e^{-t\log \mathbb{E}[e^{-CZ}]} \mathbb{E}_{\trdata}[e^{-tC \eRisk(f; \trdata)}]] \\
    &= \mathbb{E}_{f}[e^{-\log \mathbb{E}[e^{-CZ}]} \mathbb{E}_{Z_{1:n} \sim \text{Bern}(\risk(f))}[e^{-tC\frac{1}{n} \sum_{i=1}^n Z_i}]] \\
    &= \mathbb{E}_{f}[e^{-\log \mathbb{E}[e^{-CZ}]} \Pi_{i=1}^n \mathbb{E}_{Z_i \sim \text{Bern}(\risk(f))}[e^{-\frac{1}{n}Z_i}]] \\
    &= \mathbb{E}_{f}[e^{-t\log \mathbb{E}[e^{-CZ}]} (\mathbb{E}_{Z \sim \text{Bern}(\risk(f))}[e^{-CZ}])^n] \\
    &= \mathbb{E}_{f}[e^{-t\log \mathbb{E}_{Z \sim \text{Bern}(\risk(f))}[e^{-CZ}]} e^{t \log \mathbb{E}_{Z \sim \text{Bern}(\risk(f))}[e^{-CZ}]}] \\
    &= \mathbb{E}_{f} [1] = 1
\end{align}
Recall that
\begin{align} 
  \eRisk(f,\trdata) &:= \frac{1}{n} \sum_{i=1}^n \loss(f(X_i),Y_i) \label{eq:risk:1} \\
  \eRiskAug(f,\trdata) &:= \frac{1}{n} \sum_{i=1}^n \bbE_{G\sim\haar}[\loss(f(G X_i),Y_i)] \label{eq:risk:2} \\
  \eRiskAugMC(f,\trdata) &:= \frac{1}{nm} \sum_{i=1}^n \sum_{j=1}^m \loss(f(G_{ij} X_i),Y_i) \label{eq:risk:3} \;.
\end{align}

Let $(G_{ij})$ denote the collection of $m\cdot n$ random augmentation transformations sampled i.i.d.\ from $\haar$. 
\begin{lemma} \label{lem:div:bounds}
  Let $\loss$ be the binary loss, $P$ any distribution on $\fclass$, and assume that $\dgd$ is \ginv. Then
  \begin{align} \label{eq:div:bound:aug}
    \bbE_{f\sim P}\big[\bbE_{\trdata\sim\dgd}\big[ e^{n\scD_C(\eRiskAug(f,\trdata),\risk(f))}   \big] \big] & \leq \bbE_{f\sim P}\big[\bbE_{\trdata\sim\dgd}[e^{n\scD_C(\eRisk(f,\trdata),\risk(f))}] \big] = 1
  \end{align}
  and
  \begin{align} \label{eq:div:bound:aug:mc}
    \bbE_{f\sim P}\big[\bbE_{\trdata\sim\dgd}\big[ e^{n\scD_C(\eRiskAugMC(f,\trdata),\risk(f))}   \big] \big] & \leq \bbE_{f\sim P}\big[\bbE_{\trdata\sim\dgd}[e^{n\scD_C(\eRisk(f,\trdata),\risk(f))}] \big] = 1 \;.
  \end{align}
\end{lemma}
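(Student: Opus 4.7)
\textbf{Proof plan for Lemma~\ref{lem:div:bounds}.}  The strategy is to exploit three facts in sequence: (i) because $\loss$ is the $0$-$1$ loss, the moment generating function of $\loss(f(X),Y)$ under $\dgd$ has an explicit closed form, (ii) convexity of $x\mapsto e^{-Cx}$ yields Jensen-type domination of the augmented objects by their unaugmented counterparts, and (iii) the $\grp$-invariance of $\dgd$ means that a transformed pair $(GX,Y)$ has the same distribution as $(X,Y)$. The equality $\bbE_{f\sim P}\bbE_{\trdata\sim\dgd^n}[e^{n\scD_C(\eRisk(f,\trdata),\risk(f))}]=1$ is just the known tightness of Catoni's bound, but I will include a quick derivation because the computation is the template used for the two inequalities: expanding $\scD_C$, using independence of the $(X_i,Y_i)$, and observing that $\bbE_{(X,Y)\sim\dgd}[e^{-C\loss(f(X),Y)}]=1-\risk(f)(1-e^{-C})$ cancels against the $(1-\risk(f)(1-e^{-C}))^{-n}$ prefactor.

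For the augmented risk \eqref{eq:div:bound:aug}, the first step is to write
\begin{align*}
\bbE_{\trdata}\bigl[e^{n\scD_C(\eRiskAug(f,\trdata),\risk(f))}\bigr]
= (1-\risk(f)(1-e^{-C}))^{-n}\prod_{i=1}^n \bbE_{(X_i,Y_i)}\bigl[e^{-C\bbE_{G}[\loss(f(GX_i),Y_i)]}\bigr].
\end{align*}
Then I would apply Jensen's inequality to the inner factor, $e^{-C\bbE_G[\loss(f(GX_i),Y_i)]}\leq \bbE_G[e^{-C\loss(f(GX_i),Y_i)}]$, interchange expectations by Fubini, and use $\grp$-invariance of $\dgd$ to conclude $\bbE_G\bbE_{(X,Y)}[e^{-C\loss(f(GX),Y)}] = 1-\risk(f)(1-e^{-C})$. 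The product then cancels the prefactor, giving a bound of $1$ pointwise in $f$; averaging over $f\sim P$ preserves the inequality.

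For the Monte Carlo version \eqref{eq:div:bound:aug:mc}, the key observation is that if we let $\eRisk_j(f,\trdata):=\frac{1}{n}\sum_{i=1}^n\loss(f(G_{ij}X_i),Y_i)$ then $\eRiskAugMC(f,\trdata)=\frac{1}{m}\sum_{j=1}^m \eRisk_j(f,\trdata)$. I would then apply Jensen's inequality to the convex function $x\mapsto e^{x}$ across the outer average in $j$ to obtain
\begin{align*}
e^{-nC\eRiskAugMC(f,\trdata)} \leq \frac{1}{m}\sum_{j=1}^m e^{-nC\eRisk_j(f,\trdata)}.
\end{align*}
For each fixed $j$, the pairs $(G_{ij}X_i,Y_i)_{i=1}^n$ are i.i.d.\ samples from $\dgd$ by $\grp$-invariance and by independence of $G_{ij}$ from $(X_i,Y_i)$, so $\bbE[e^{-nC\eRisk_j(f,\trdata)}] = (1-\risk(f)(1-e^{-C}))^n$, and this same value is recovered for every $j$.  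Substituting back and multiplying by the prefactor $(1-\risk(f)(1-e^{-C}))^{-n}$ gives the desired bound of $1$, again uniformly in $f$.

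The only subtle point — and what I would flag as the main obstacle — is that in the Monte Carlo case the variables $\{\eRisk_j\}_{j=1}^m$ are \emph{not} independent across $j$, since they share the underlying sample $(X_i,Y_i)_{i=1}^n$. This is precisely why I must introduce Jensen's inequality at the outer average before attempting to evaluate anything: applying Fubini naively on the joint exponential would leave a non-product structure that cannot be collapsed. Once that step is taken, the argument for each summand reduces to the unaugmented calculation, and $\grp$-invariance of $\dgd$ does all the remaining work.
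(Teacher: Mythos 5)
Your proof is correct. The argument for \eqref{eq:div:bound:aug} is essentially identical to the paper's: expand $\scD_C$, factorize the expectation over the i.i.d.\ $(X_i,Y_i)$, apply Jensen's inequality to $e^{-C\,\bbE_G[\cdot]}$, swap the order of expectation by Fubini, and invoke $\grp$-invariance of $\dgd$ to reduce each factor to the Bernoulli moment generating function $1-\risk(f)(1-e^{-C})$, which cancels the prefactor.

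For \eqref{eq:div:bound:aug:mc} you take a mildly different route from the paper. The paper keeps the product over data indices $i$ and applies Jensen \emph{inside} each factor, to the average over the $m$ augmentations $(G_{ij})_{j=1}^m$ of a single data point; you instead regroup the augmented sum by augmentation index, writing $\eRiskAugMC = \frac{1}{m}\sum_{j}\eRisk_j$ and applying Jensen to the outer average over $j$, so that each surviving term $\bbE[e^{-nC\eRisk_j}]$ is an ordinary unaugmented computation on the i.i.d.\ sample $(G_{ij}X_i,Y_i)_{i=1}^n\sim\dgd^n$. Both reductions rest on the same two ingredients (convexity of $e^{-x}$ and invariance of $\dgd$ extended to random $G$ via Fubini), and your observation that the $\eRisk_j$ are dependent across $j$ — forcing Jensen to be applied before any expectation is evaluated — is exactly the right subtlety to flag; it mirrors the paper's reason for applying Jensen before Fubini. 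Your grouping has the small advantage that the Monte Carlo case visibly collapses to $m$ copies of the exact identity for $\eRisk$, at the cost of obscuring the per-data-point structure the paper's version keeps explicit.
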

\begin{proof}
  Since the observations $(X_i,Y_i)$ are i.i.d., the expectation over $\trdata$ on the left-hand side of \eqref{eq:div:bound:aug} requires evaluating $\bbE_{\trdata\sim\dgd}\big[ e^{-C\bbE_{G\sim\haar}[ \loss(f(X_i),Y_i))]} \big]$. Using the convexity of $e^{-x}$, Jensen's inequality and Fubini's theorem yield
  \begin{align} \label{eq:augrisk:laplace}
    \bbE_{(X_i,Y_i)\sim\dgd}\big[ e^{-C\bbE_{G\sim\haar}[ \loss(f(G X_i),Y_i))]} \big] 
      & \leq \bbE_{(X_i,Y_i)\sim\dgd}\big[ \bbE_{G\sim\haar} \big[ e^{-C \loss(f(G X_i),Y_i))} \big] \big] \\
      & = \bbE_{G\sim\haar} \big[ \bbE_{(X_i,Y_i)\sim\dgd}\big[  e^{-C \loss(f(G X_i),Y_i))} \big] \big] \;. \nonumber
  \end{align}
  Now, $\grp$-invariance of $\dgd$ implies that $\bbE_{(X_i,Y_i)\sim\dgd}[h(gX_i,Y_i)] = \bbE_{(X_i,Y_i)\sim\dgd}[h(X_i,Y_i)]$ for all measurable functions $h : \calX \times \calY \to \bbR_+$ and all $g\in\grp$, which extends to {independent} random $G$ by Fubini's theorem. Therefore,
  \begin{align*}
    \bbE_{G\sim\haar} \big[ \bbE_{(X_i,Y_i)\sim\dgd}\big[  e^{-C \loss(f(G X_i),Y_i))} \big] \big]
      = \bbE_{(X_i,Y_i)\sim\dgd}\big[  e^{-C \loss(f(X_i),Y_i))} \big] = \bbE_{Z\sim\text{Bern}(\risk(f))}[e^{-CZ}] \;,
  \end{align*}
  which implies \eqref{eq:div:bound:aug}.

  For the second inequality \eqref{eq:div:bound:aug:mc}, observe that by Jensen's inequality,
  \begin{align*}
    \bbE_{\trdata\sim\dgd}\big[ e^{-nC\eRiskAugMC(f,\trdata)} \big]
      & = \prod_{i=1}^n \bbE_{(X_i,Y_i)\sim\dgd}\bigg[ \bbE_{(G_{ij})_{j=1}^m\sim\haar}\bigg[ \exp\bigg(-\frac{C}{m}\sum_{j=1}^m \loss(f(G_{ij}X_i),Y_i) \bigg) \bigg] \bigg] \\
      & \leq \prod_{i=1}^n \bbE_{(X_i,Y_i)\sim\dgd}\bigg[ \bbE_{(G_{ij})_{j=1}^m\sim\haar}\bigg[ \frac{1}{m} \sum_{j=1}^m e^{-C \loss(f(G_{ij}X_i),Y_i) } \bigg] \bigg] \\
      & = \prod_{i=1}^n \bbE_{(X_i,Y_i)\sim\dgd}\big[ \bbE_{G\sim\haar} \big[ e^{-C \loss(f(G X_i),Y_i) } \big] \big]
  \end{align*}
  Using the $\grp$-invariance of $\dgd$ once again, we have
  \begin{align*}
    \bbE_{\trdata\sim\dgd}\big[ e^{-nC\eRiskAugMC(f,\trdata)} \big] \leq \bbE_{\trdata\sim\dgd}\big[ e^{-nC\eRisk(f,\trdata)} \big] = \big(\bbE_{Z\sim\text{Bern}(\risk(f))}[e^{-CZ}] \big)^n \;,
  \end{align*}
  which implies \eqref{eq:div:bound:aug:mc}.
\end{proof}

Note that Lemma~\ref{lem:div:bounds} depends on the observation that for any distribution over functions, the symmetrized risk will be a lower-variance estimator than the empirical risk but will remain unbiased. This is only the case for \textit{invariant} data-generating distributions -- symmetrizing the risk of a non-symmetric $\dgd$ will not necessarily result in a valid PAC-Bayes bound.

\section{Examples, counterexamples, tighter bounds}

\subsection{Counterexamples} \label{appx:counterexamples}

\textbf{Feature averaging and non-convex losses.} We consider the binary classification setting with the zero-one loss and some function class $f$ bounded in $[0,1]$ -- that is $\ell(x, y) = \mathbbm{1}[|f(x) - y| > 1]$. Suppose that there exists some invariance $\grp$ in the data such that $y(x) = y(gx)$ for all $x, g$. Then consider a function which, for some small $\epsilon$, outputs $f(x) = \frac{1}{2} + y\epsilon$ on a $1 - 2\epsilon$ fraction of each equivalence class of the inputs, and $1 - y$ on $2\epsilon$ of the inputs in each equivalence class. Then $\mathbb{E}[f(gx)] = (1 - 2\epsilon) (\frac{1}{2} + y\epsilon) + 2\epsilon (1-y)$. When $y=0$, this expectation is $\frac{1}{2} + \epsilon$, and when $y=1$ it is $\frac{1}{2} [1 - \epsilon - 2\epsilon^2] < \frac{1}{2}$, so the feature-averaged model would have risk 1 whereas the original model had risk 0.

\textbf{Non-uniform data-generating distributions.}  When the data-generating distribution is not uniform over the set $\T$, then performing data augmentation with $\T$ will not necessarily lead to a more accurate estimate of the model's empirical risk. For example, consider the task of learning a function $g$ satisfying $g(x) = g(-x)$, bounded in magnitude by some constant $A$. Suppose, however, that positive numbers are much more likely under the data generating distribution, with $p(\mathbb{R}^+) = 1 - \epsilon$ for small $\epsilon$. Then the function $f(x) = \mathbbm{1}[x>0]g(x)$ will satisfy $\mathbb{E}[\|f(X_S) - g(X_s)\|] \neq \mathbb{E}[ \|f(X_{S^\text{aug}}) - g(X_{S^\text{aug}})\|]$. So the augmented risk is no longer an unbiased estimator of the empirical risk. Further, in this particular case its variance is also higher, as it will be equal to $\frac{1}{2} $Var$(g(x))$, in contrast to $\epsilon \text{Var}(g(x))$.

\subsection{Tighter PAC-Bayes bound for data augmentation} \label{appx:tighter:pacbayes:da}

Although Theorem~\ref{thm:pac:bayes:da} establishes that the i.i.d.\ PAC-Bayes bound \eqref{eq:catoni:bound} is valid for exact DA, the proof of Theorem~\ref{thm:pac:bayes:da} indicates that a tighter bound is possible. In particular, recall that when $\dgd$ is \ginv \citep{invariantdistributions,chen2019invariance},
\begin{align*}
  \bbE_{G\sim\haar}[\loss(f(GX),Y)] = \bbE_{(X,Y)\sim\dgd}[\loss(f(X),Y) \mid \Orbit] := \invf{\loss}_f(\Orbit) \;.
\end{align*}
$\invf{\loss}_f(\Orbit)$ is a random variable, the average loss on the random orbit with representative $\Orbit$, whose distribution is induced by $\dgd$. Therefore, we can write $\mathcal{L}_P$ in \eqref{eq:lever:bound} as
\begin{align*}
  \mathcal{L}_P = \bbE_{f\sim P}\bigg[ \bigg(\frac{ \bbE_{\Phi\sim\dgd} \big[ e^{-C\invf{\loss}_f(\Orbit)} \big]}{\bbE_{Z\sim\text{Bern}(\risk(f))}[e^{-CZ}]} \bigg)^n \bigg] \leq 1 \;.
\end{align*}
In general, this cannot be computed in closed form. However, it might be possible to estimate using the data (with appropriate modifications to the resulting bound) and samples $f \sim P$.


\section{Computation details for PAC-Bayes bounds}
\label{appx:invar-pb}
PAC-Bayes bounds for neural networks are computed via the following procedure: a deterministic neural network is trained to minimize the cross-entropy loss on the dataset. After it has reached a suitable training accuracy, we use these parameters as the initialization for the means and variances of the stochastic neural network weights used for the PAC-Bayes bounds. We directly optimize a surrogate of the PAC-Bayes bound (using the cross-entropy loss instead of the zero-one accuracy and using the reparameterization trick to get the derivatives of the variance parameters). The exact computation of the PAC-Bayes bound uses the union bound and discretization of the PAC-Bayes prior as described in \citep{dziugaite2017nonvacuous}. Reported values are at optimization convergence. 

The experiment code is provided with the paper submission, but we describe here at a high level the different models used in our empirical evaluations.

\textbf{FashionMNIST CNN:} the convolutional network used for FashionMNIST consists of two convolutional layers (with batch norm and max pooling) followed by a single fully connected layer. 

\textbf{LiDAR Permutation-Invariant Network:} we use a scaled-down version of the PointNet architecture \citep{qi2017pointnet}. We include two layers of 1D convolutions followed by a max-pooling layer that selects the maximum over input points for each channel. This layer is followed by two fully-connected layers leading into the final output. 

\textbf{Partially-Invariant Network:} we alter the previous architecture slightly so that it is only invariant to \textit{subgroups} of the permutation group on its inputs. Specifically, we partition the input into 8 disjoint subsets, and apply the previous model's permutation-invariant embedding layers to each partition. The result is a feature representation that is invariant to permutations within each partition of the input, but not between partitions. This representation is then fed through the same architecture. We note that we keep the number of convolutional filters per layer constant, which results in a larger feature embedding by a factor of 8 that is fed into the first fully connected layer. As a result, this model has significantly more parameters than the fully permutation-invariant model.

\textbf{Fully Connected Network:} the max-pooling operator of the previous two architectures is omitted. This network has many more parameters than either of the first two models, and is not invariant to any subgroup of the permutation group.

\chapter{Training speed and model selection}
\section{Linear model combination}

\label{sec:optimize-then-prune}

The estimator $\mathcal{L}(\data)$ reveals an intriguing connection between pruning in linear model combinations and Bayesian model selection. This connection arises when we consider the weight assigned to a model in a linear model combination when models are fit on data points iteratively, as in the marginal likelihood estimation algorithms discussed previously. 

We study a setting that presents a bridge between Bayesian updating and optimization by considering the limiting behaviour of a linear regressor trained on predictions output by Bayesian models fitted iteratively as seen in Algorithm~\ref{alg:estimate}. Concretely, we treat samples from the model posterior $P(Y_i | \data_{<i}, X_i, \model_j)$ as entries $\Phi[i, j]$ in a design matrix $\Phi$, and then consider properties of an optimal linear predictor trained on $\Phi$. While this does not perfectly replicate the setting of deep neural networks trained with gradient descent, it highlights an important property of gradient descent on non-stationary features: the feature which contributes the most to the output of a linear ensemble will often be the one which was most predictive of the target \textit{on average} over the course of training, rather than the one which is most correlated with the target at the end of training.

Our analysis requires a number of assumptions to be formalized. We assume a data set $\data = (X_i, Y_i)_{i=1}^n$ and a collection of $k$ models $\model_1, \dots, \model_k$. We train a linear regressor $w$ to fit the posterior predictive distributions of the models to the target $Y_i$; i.e. to regress on the dataset 
\begin{equation}
    (\Phi, Y) = \bigg (\phi_i=(\hat{Y}^i_1, \dots, \hat{Y}_n^i), Y_i\bigg)_{i=1}^n \text{ with } \hat{Y}_j^i \sim P(\hat{Y}|\data_{<i}, X_i, \model_j).
\end{equation}
The following result shows that the optimal linear regressor on this data generating distribution assigns the highest weight to the model with the highest $\mathcal{L}(\data)$ whenever the model errors are independent. This shows that magnitude pruning in a linear model combination is equivalent to approximate Bayesian model selection, under certain assumptions on the models.

\begin{restatable}{proposition}{PropBMS}\label{prop:modelselect}
Let $\model_1, \dots, \model_k$ be Bayesian linear regression models with fixed noise variance $\sigma_N^2$ and Gaussian likelihoods. Let $\Phi$ be a (random) matrix of posterior prediction samples, of the form $\Phi[i, j] = \hat{y}_i^j \sim P(y_j|\data_{<j}, x_j, \model_i)$. Suppose the following two conditions on the columns of $\Phi$ are satisfied: $\mathbb{E}\langle \Phi[:, i], y \rangle = \mathbb{E}\langle \Phi[:, j], y \rangle$ for all $i, j$, and $\mathbb{E}\langle \Pi_{y^\perp} \phi_i, \Pi_{y^\perp} \phi_j \rangle = 0$. Let $w^*$ denote the least-squares solution to the regression problem $\min_w \mathbb{E}_{\Phi}\|\Phi w - y\|^2$. Then the following holds
\begin{equation}  \argmax_i w^*_i = \argmax_i \mathcal{L}(\data | \model_i)  \qquad \forall w^* = \argmin_w \mathbb{E} \|\Phi w - y\|^2\;. \end{equation}
\end{restatable}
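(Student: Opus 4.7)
The plan is to obtain closed-form expressions for both $w^*$ and $\mathcal{L}(\data \mid \model_i)$ in terms of a shared decomposition of each column $\phi_i$ of $\Phi$ into its projection onto $y$ and its orthogonal complement, then show that both quantities are monotone (in opposite directions) functions of the same per-model scalar.

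First I would write down the first-order optimality condition for the least-squares problem $\min_w \mathbb{E}\|\Phi w - y\|^2$, yielding $w^* = A^{-1}b$ with $A = \mathbb{E}[\Phi^\top \Phi]$ and $b = \mathbb{E}[\Phi^\top y]$. Next, I would decompose each column as $\phi_i = \beta_i y + e_i$, where $\beta_i = \langle \phi_i, y\rangle/\|y\|^2$ and $e_i = \Pi_{y^\perp}\phi_i$, so that $b_i = \mathbb{E}[\beta_i]\|y\|^2$. Assumption~1 gives that $\mathbb{E}[\beta_i]$ is the same value $\beta$ for every $i$, so $b$ is proportional to the all-ones vector $\mathbf{1}$. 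For $A$, I would expand $\langle \phi_i, \phi_j\rangle = \beta_i\beta_j\|y\|^2 + \langle e_i, e_j\rangle$; Assumption~2 kills the orthogonal cross-terms, and independence of posterior samples across the separate models $\model_i$ (which I would state explicitly as an implicit hypothesis stemming from the conditional-independence structure of the sampling procedure in Algorithm~\ref{alg:estimate}) factorizes $\mathbb{E}[\beta_i\beta_j]=\beta^2$ for $i\neq j$. Hence $A = \beta^2\|y\|^2 \mathbf{1}\mathbf{1}^\top + D$ with $D_{ii} = \operatorname{Var}(\beta_i)\|y\|^2 + \mathbb{E}\|e_i\|^2$.

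I would then invert $A$ via the Sherman--Morrison formula. Applied to the right-hand side $b \propto \mathbf{1}$, the rank-one correction contributes only a common positive scalar multiple of $D^{-1}\mathbf{1}$, yielding
\begin{equation*}
  w^*_i \;\propto\; \frac{1}{D_{ii}} \;=\; \frac{1}{\operatorname{Var}(\beta_i)\|y\|^2 + \mathbb{E}\|e_i\|^2}\,,
\end{equation*}
so that $\argmax_i w^*_i = \argmin_i D_{ii}$. In parallel, I would expand the Gaussian log-likelihood term by term to get $\mathcal{L}(\data \mid \model_i) = -\tfrac{1}{2\sigma_N^2}\mathbb{E}\|\phi_i - y\|^2 + C$ with $C$ independent of $i$. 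Using the same decomposition, $\mathbb{E}\|\phi_i - y\|^2 = (\beta-1)^2\|y\|^2 + D_{ii}$, where the first summand is constant across $i$ by Assumption~1. Consequently $\argmax_i \mathcal{L}(\data \mid \model_i) = \argmin_i D_{ii}$, and the two argmaxes coincide.

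The main obstacle I anticipate is the subtle use of independence across models: Assumption~2 only controls the \emph{orthogonal} components $e_i, e_j$, so to factor $\mathbb{E}[\beta_i\beta_j]$ I need an additional (mild but not formally stated) independence of the $\beta_i$'s across models, which I would argue follows from the samples being drawn independently from the per-model posteriors in the construction of $\Phi$. A secondary care point is handling the non-uniqueness of $w^*$ in the degenerate case that $A$ is singular — I would note that as long as some $D_{ii}>0$ the Sherman--Morrison inversion is valid, and otherwise the argmax statement should be interpreted modulo ties, which the assumptions already require.
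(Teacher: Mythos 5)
Your proof is correct and follows essentially the same route as the paper's: both decompose each column $\phi_i$ into its component along $y$ and its orthogonal complement, use the two hypotheses to make the along-$y$ contribution identical across models, and reduce both $\argmax_i w^*_i$ and $\argmax_i \mathcal{L}(\data\mid\model_i)$ to $\argmin_i D_{ii}$ where $D_{ii}$ is the expected squared deviation of $\phi_i$ from its common projection onto $y$. The only differences are cosmetic — you invert the normal equations via Sherman--Morrison where the paper minimizes the decomposed quadratic loss subject to a fixed value of $w^\top\mathbf{1}$ — and the additional cross-model uncorrelatedness of the $\beta_i$'s that you flag as an implicit hypothesis is precisely the ``$\zeta_i^j$ sampled independently for all $i$ and $j$'' condition that the paper itself invokes in its proof.
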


The assumption on the independence of model errors is crucial in the proof of this result: families of models with large and complementary systematic biases may not exhibit this behaviour. We observe in Section \ref{sec:BMS} that the conditions of Proposition 1 are approximately satisfied in a variety of model comparison problems, and running SGD on a linear combination of Bayesian models still leads to solutions that approximate Bayesian model selection.
We conjecture that analogous phenomena occur during training within a neural network. The proof of Proposition~\ref{prop:modelselect} depends on the observation that, given a collection of features, the best least-squares predictor will assign the greatest weight to the feature that best predicts the training data. While neural networks are not linear ensembles of fixed models, we conjecture that, especially for later layers of the network, a similar phenomenon will occur wherein weights from nodes that are more predictive of the target values over the course of training will be assigned higher magnitudes. We empirically investigate this hypothesis in Section \ref{sec:DNN_exp}. 

We consider three model selection problems in our empirical evaluations. In \textbf{prior variance selection} we evaluate a set of BLR models on a synthetic linear regression data set. Each model $\mathcal{M}_i$ has a prior distribution over the $d$ parameters of the form $w \sim \mathcal{N}(0, \sigma_i^2 I_d)$ for some $\sigma_i^2$, and the goal is to select the optimal prior variance (in other words, the optimal regularization coefficient). We additionally evaluate an analogous initialization variance selection method on an NTK network trained on a toy regression dataset. In \textbf{frequency (lengthscale) selection} we use as input a subset of the handwritten digits dataset MNIST given by all inputs labeled with a 0 or a 1. We compute random Fourier features (RFF) of the input to obtain the features for a Bayesian linear regression model, and perform model selection over the frequency of the features (full details on this in the appendix). This is equivalent to obtaining the lengthscale of an approximate radial basis function kernel. In \textbf{feature dimension selection}, we use a synthetic dataset \citep{wilson2020bayesian} of the form $(\textbf{X}, \textbf{y})$, where $x_i = (y_i + \epsilon_1,  y_i + \dots, y_i + \epsilon_{15}, \epsilon_{16}, \dots, \epsilon_{30})$ with $\epsilon_i$ i.i.d. noise variables. We then consider a set of models $\{\model_k\}$ with feature embeddings $\phi_k(x_i) = x_i[1, \dots, k]$. The optimal model in this setting is the one which uses exactly the set of `informative' features $x[1, \dots, 15]$. 

\begin{figure}
    \centering
    \includegraphics[width=0.325\linewidth]{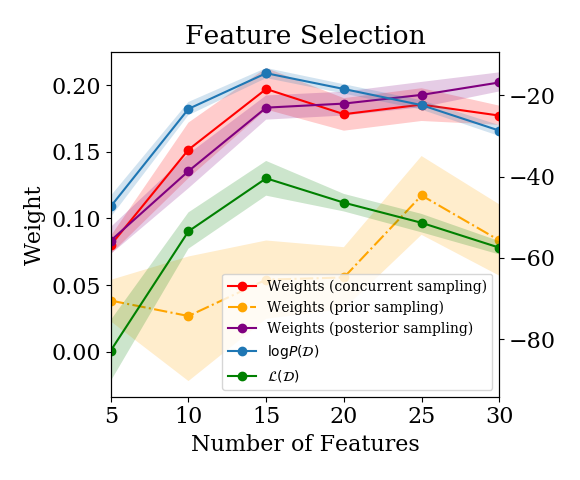}
    \includegraphics[width=0.325\linewidth]{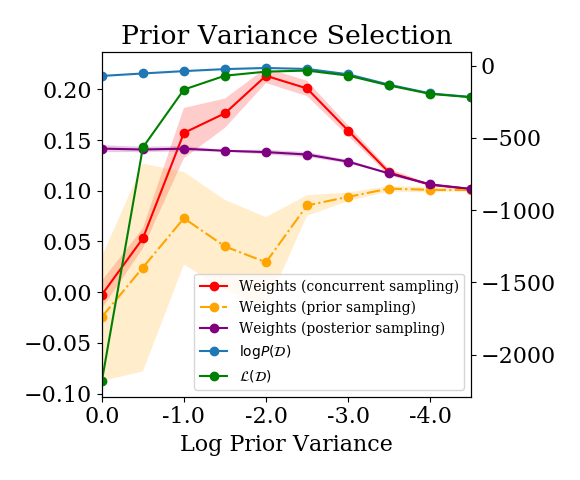}
    \includegraphics[width=0.325\linewidth]{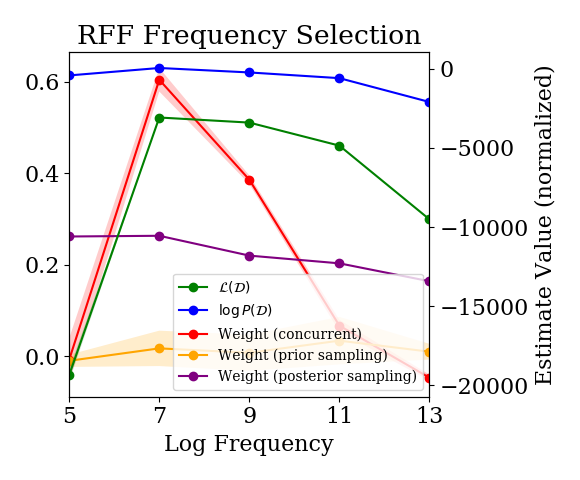}
    \hfill 
    \caption{Relative rankings given by optimize-then-prune, ML, and estimated $\mathcal{L}(\data)$. Left: feature selection. Middle: prior variance selection. Right: RFF frequency selection. Rankings are consistent with what our theoretical results predict. Results are averaged over $5$ runs.
    }
    \label{fig:app_ml_v_weight}
\end{figure}

We empirically evaluate the claims of Proposition~\ref{prop:modelselect} in settings where the assumptions only approximately hold. Concretely, we consider learning problems where the outputs of the models have roughly equal norm and approximately independent errors. We compare the ranking given by the true log marginal likelihood, the estimated $\mathcal{L}(\data)$, and the weight assigned to each model by the trained linear regressor. We consider three variations on how sampled predictions from each model are drawn to generate the features $\phi_i$: sampling the prediction for point $\hat{Y}_i$ from $P(\hat{Y}_i | \data_{<i})$ (`concurrent sampling' -- this is the setting of Proposition \ref{prop:modelselect}), as well as two baselines: the posterior $P(\hat{Y}_i |\data)$ (`posterior sampling'), and  the prior $P(\hat{Y}_i)$ (`prior sampling'). The baselines illustrate the importance of \textit{concurrently} fitting the models and the linear weights; fitting linear weights over a set of trained models simply identifies the one with the best training set performance, and fitting linear weights to the untrained models tends to favour those with lower-magnitude predictions. Their inclusion further highlights that in the model selection problems on which we evaluate the methods, it is not the case that either prior or posterior predictive likelihood is sufficient to correctly identify the best model.

Concretely, we find that the rankings of the marginal likelihood, its lower bound, and of the ranking given by concurrent optimization agree on the best model in all three of the model selection problems outlined previously. The prior and posterior sampling procedure baselines do not exhibit a consistent ranking with the log ML, and indeed exhibit opposite trends in the prior variance selection task. We visualize these results for the feature dimension selection problem in Figure \ref{fig:app_ml_v_weight}.

\subsubsection{Subnetwork selection in neural networks} \label{sec:sgd_submodel}
Finally, we evaluate whether our previous insights apply to submodels within a neural network, suggesting a potential mechanism which may bias SGD towards parameters with better generalization performance. Based on the previous experiments, we expect that nodes that have a lower sum over training errors (if evaluated as a classifier on their own) are favoured by gradient descent and therefore have a larger final weight than those which are less predictive of the data. If so, we can then view SGD followed by pruning (in the final linear layer of the network) as performing an approximation of a Bayesian model selection procedure. We replicate the model selection problem of the previous setting, but replace the individual models with the activations of the penultimate layer of a neural network, and replace the linear ensemble with the final linear layer of the network.  Full details on the experimental set-up can be found in Appendix \ref{sec:exp_details_sgd_submodels}. We find that our hypotheses hold here: SGD assigns larger weights to subnetworks that perform well, as can be seen in Figures~ \ref{fig:sgd_submodel_full} and \ref{fig:sgd_submodel_full_cifar}. This suggests that SGD is biased towards functions that generalize well, even within a single neural network. We find the same trend holds for CIFAR-10, which is shown in Appendix \ref{sec:exp_details_sgd_submodels}.

\begin{figure}
    \centering
    \includegraphics[ width=\linewidth]{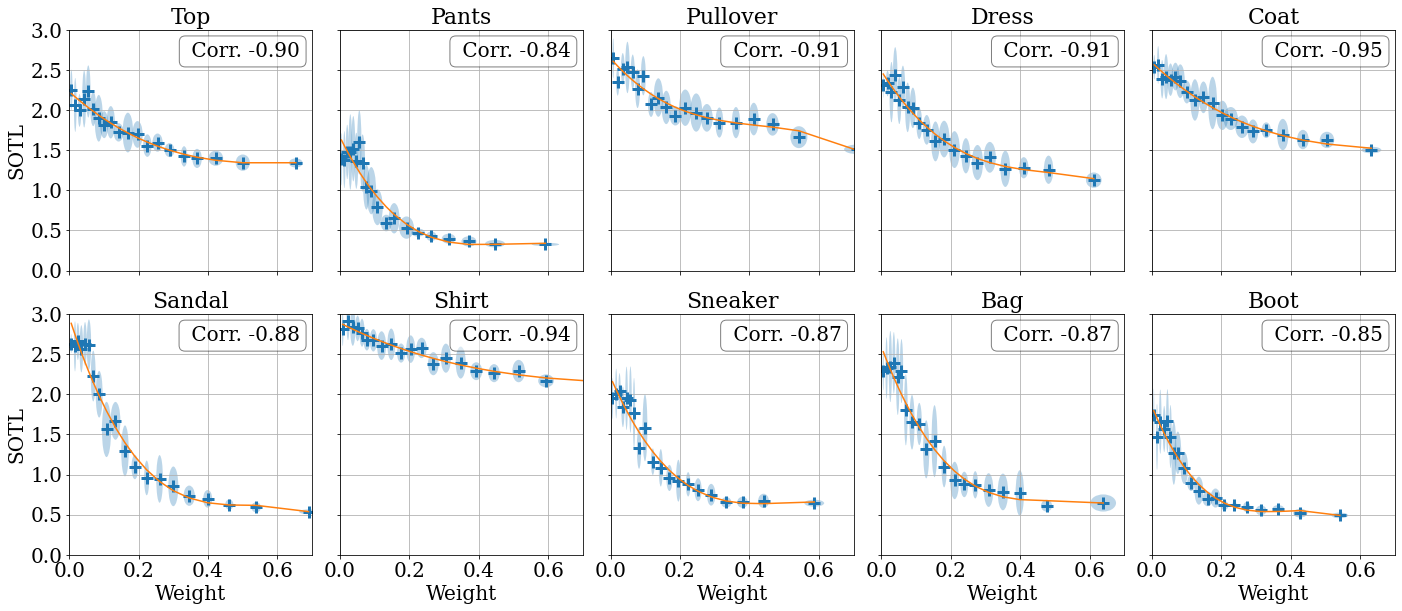}
    \caption{Weight assigned to subnetwork by SGD in a deep neural network (x-axis) versus the subnetwork performance (estimated by the sum of cross-entropy, on the y-axis) for different FashionMNIST classes. The light blue ovals denote depict $95\%$ confidence intervals, estimated over 10 seeds (i.e. 2$\sigma$ for both the weight and SOTL).  The orange line depicts the general trend.}
    \label{fig:sgd_submodel_full}
\end{figure}

\begin{figure}
    \centering
    \includegraphics[ width=\linewidth]{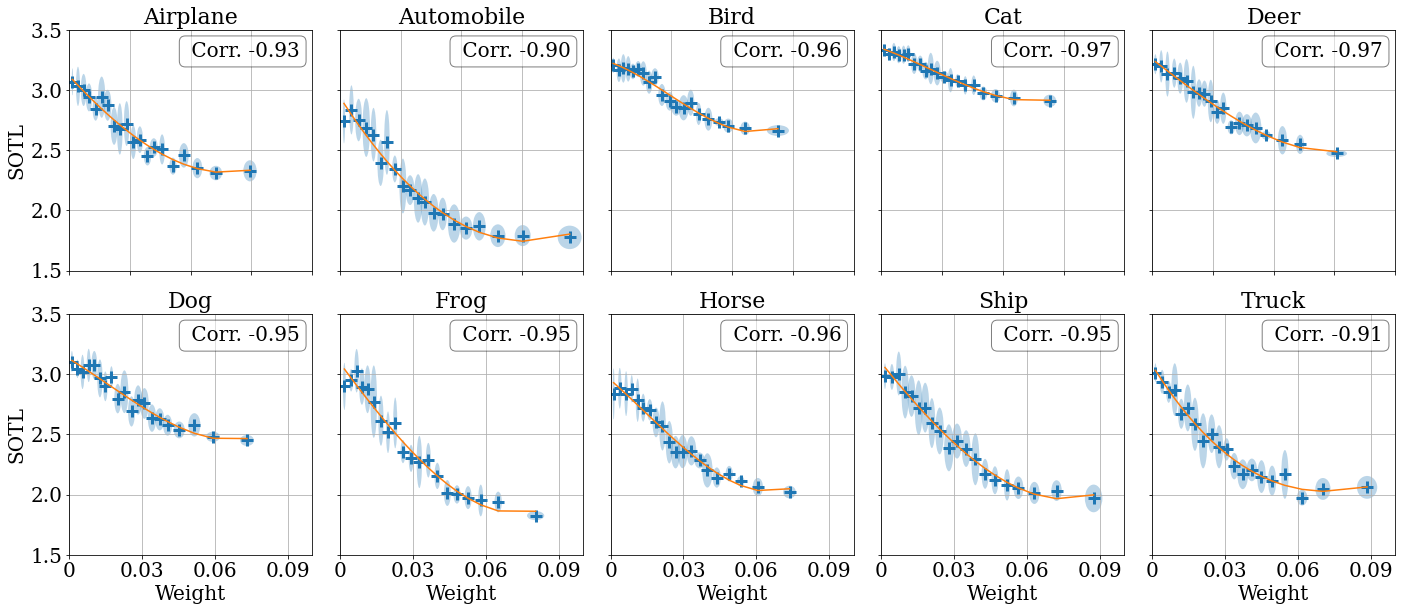}
    \caption{Weight assigned to subnetwork by SGD in a deep neural network (x-axis) versus the subnetwork performance (estimated by the sum of cross-entropy, on the y-axis) for different CIFAR-10 classes. The light blue ovals denote depict $95\%$ confidence intervals, estimated over 10 seeds (i.e. 2$\sigma$ for both the weight and SOTL).  The orange line depicts the general trend.}
    \label{fig:sgd_submodel_full_cifar}
\end{figure}

\section{Proofs of theoretical results} \label{sec:proofs-supervised}

\PropLk*
\begin{proof}
The result for $\mathcal{L}$ follows from a straightforward derivation:
\begin{align}
    \mathcal{L}(\data) &= \sum \int \log P(\data_i|\theta) dP(\theta|\data_{<i})  \\
    &= \sum \int \log [\frac{ P(\data_i|\theta) P(\theta | \data_{<i}) P(\data_i|\data_{<i})}{P(\theta | \data_{<i}) P(\data_i|\data_{<i})}] dP(\theta|\data_{<i})\\
    &= \sum \int \log\frac{ P(\theta|\data_{\leq i}))}{P(\theta|\data_{<i})} dP(\theta|\data_{<i}) + \sum \log P(\data_i| \data_{<i}) \\
    &= \sum  \bigg (  \log P(\data_i|\data_{<i})-\KL(P(\theta|\data_{<i})|| P(\theta|\data_{\leq i})) \bigg ) \\
    &= \log P(\data) - \sum_{i=1}^n \KL(P(\theta|\data_{<i})||P(\theta|\data_{\leq i})).
\end{align}
The result for $\hat{\mathcal{L}}_k$ follows immediately from Jensen's inequality, yielding
\begin{equation}
    \sum \mathbb{E}[\log \sum_{j=1}^k \frac{1}{k} p(\data_i|\theta_j)] \leq \sum \log \mathbb{E}[ \sum_{j=1}^k \frac{1}{k} p(\data_i|\theta_j)] =\sum \log \mathbb{E}[ p(\data_i|\theta_j)] =  \log P(\data) \; .
\end{equation}
Because $\mathcal{L}_k$ applies Jensen's inequality to a random variable with decreasing variance as a function of $k$, we expect the bias of $\mathcal{L}_k$ to decrease as $k$ grows, an observation characterized in Section \ref{sec:BMS}.
\end{proof}
\PropLS*
\begin{proof}
To show that the sum of the estimated log likelihoods is a lower bound on the log marginal likelihood, it suffices to show that each term in the sum of the estimates is a lower bound on the corresponding term in log marginal likelihood expression. Thus, without loss of generality we consider a single data point $\data_i = (x, y)$ and posterior distribution $p(y|x, \data_{<i})=\mathcal{N}(\mu, \sigma^2)$. 

Let $y \in \mathbb{R}$, $\hat{\mu}, \hat{\sigma}$ the standard estimators for sample mean and variance given sample $\hat{Y} \in \mathbb{R}^k$ sampled from $\mathcal{N}(\mu, \sigma^2)$. We want to show

\begin{equation}
\mathbb{E}_{\hat{Y} \sim \mathcal{N}(\mu, \sigma^2)}[\ln p(y|\hat{\mu}, \hat{\sigma}^2)] \leq \ln p(y|\mu, \sigma^2).
\end{equation}
We first note that $\hat{\mu}(\hat{Y}) \perp \hat{\sigma}(\hat{Y})$ for $\hat{Y}$ a collection of i.i.d. Gaussian random variables \citep{basu1955}. 
We also take advantage of the fact that the log likelihood of a Gaussian is concave with respect to its $\mu$ parameter and its $\sigma^2$ parameter. Notably, the log likelihood is \textit{not} concave w.r.t. the joint pair $(\mu, \sigma^2)$, but because the our estimators are independent, this will not be a problem for us. 
We proceed as follows by first decomposing the expectation over the samples $\hat{Y}$ into an expectation over $\hat{\mu}$ and $\widehat{\sigma^2}$
\begin{align}
\mathbb{E}_{X \sim \mathcal{N}(\mu, \sigma^2)}[\ln p(y| \hat{\mu}, \hat{\sigma}^2)] &= \mathbb{E}_{\hat{\mu}, Y_2, \dots, Y_N} \ln p(y|\hat{\mu}, \hat{\sigma}^2) \\
&= \mathbb{E}_{\hat{\mu}} \mathbb{E}_{\hat{\sigma}^2} \ln p(y|\hat{\mu}, \hat{\sigma}^2)
\intertext{We apply Jensen's inequality first to the inner expectation, then to the outer.}
&\leq \mathbb{E}_{\hat{\mu}} \ln p(y|\hat{\mu}, \mathbb{E}[\hat{\sigma}^2]) =  \mathbb{E}_{\hat{\mu} }\ln p(y|\hat{\mu}, \sigma^2)   \\
&\leq \ln p(y|\mu, \sigma^2)
\end{align}
So we obtain our lower bound.
\end{proof}

\ThmSTO*

\begin{proof}
The heavy lifting for this result has largely been achieved by Propositions \ref{prop:lk} and \ref{prop:ls}, which state that provided the samples $\theta^{i}_j$ are distributed according to the posterior, the inequalities will hold. It therefore remains only to show that the sample-then-optimize procedure yields samples from the posterior. The proof of this result can be found in Lemma 3.8 of \citet{osband2018randomized}, who show that the optimum for the gradient descent procedure described in Algorithm \ref{alg:estimate} does indeed correspond to the posterior distribution for each subset $\data_{<i}$. 

Finally, it is straightforward to express the lower bound estimator $\hat{\mathcal{L}}$ as the sum of regression losses. We obtain this result by showing that the inequality holds for each term $\log P(\data_i|\theta_i)$ in the summation. 

\begin{align}
 \log P(\data_i|\theta) &= \log[ \exp \bigg (-\frac{(\theta^\top x_i - y_i)^2 }{2 \sigma^2} \bigg )\frac{1}{\sqrt{2\pi}\sigma} ] \\
 &= -\frac{(\theta^\top x_i - y_i)^2 }{2 \sigma^2} -\frac{1}{2} \log (2 \pi \sigma^2) \\
 &= c_1 \ell_2(\data_i, \theta) + c_2
\end{align}

We note that in practice, the solutions found by gradient descent for finite step size and finite number of steps will not necessarily correspond to the exact local optimum. However, it is straightforward to bound the error obtained from this approximate sampling in terms of the distance of $\theta$ from the optimum $\theta^*$. Denoting the difference $|\theta - \theta^*|$ by $\delta$, we get
\begin{align}
   | \log P(\data_i|\theta^*) - \log P(\data_i|\theta)| &=  |
    \frac{((\theta^*)^\top x_i - y_i)^2 }{2 \sigma^2} -  \frac{((\theta)^\top x_i - y_i)^2 }{2 \sigma^2}| \\
    & \leq 
    \frac{1}{2 \sigma^2} | (\theta^*)^\top x_i - \theta^\top x_i|^2 \\
    &\leq |( (\theta^*)^\top x_i)^2 - (\theta^\top x_i)^2| + |2y||\theta^\top x - (\theta^*)^\top x| \\
    &\leq |(\theta^* - \theta)^\top x + 2((\theta^*)^\top x)((\theta^*-\theta)^\top x)| + |2y||\theta^\top x - (\theta^*)^\top x| \\
    &\leq |\theta^* - \theta||x| + 2|\theta^* x||\theta^* - \theta||x| + |2y||x||\theta - \theta^*|
\end{align}
and so the error in the estimate of $\log P(\data | \theta)$ will be proportional to the distance $|\theta - \theta^*|$ induced by the approximate optimization procedure.
\end{proof}

\CorNTK*
\begin{proof}
Follows immediately from the results of \citet{he2020bayesian} stating that the the limiting distribution of $f^k_\infty$ is precisely $P(f|\data^n_{\le k}, \model)$. We therefore obtain the same result as for Theorem \ref{thm:sto}, plugging in the kernel gradient descent procedure on $f$ for the parameter-space gradient descent procedure on $\theta$.
\end{proof}
The following Lemma will be useful in order to prove Proposition~\ref{prop:modelselect}. Intuitively, this result states that in a linear regression problem in which each feature $\phi_i$ is `normalized' (the dot product $\langle \phi_i, y \rangle = \langle \phi_j, y \rangle = \alpha$ for some $\alpha$ and all $i, j$) and `independent' (i.e. $\langle \Pi_{y^\perp} \phi_i, \Pi_{y^\perp} \phi_j \rangle = 0$), then the optimal linear regression solution assigns highest weight to the feature which obtains the least error in predicting $y$ on its own.
\begin{lemma}
Let $y \in \mathbb{R}^n$, and $\Phi \in \mathbb{R}^{d \times d}$ be a design matrix such that $\Phi[:, j] = \alpha y + \epsilon_j \forall j$ for some fixed $\alpha \geq 0$, with $\epsilon \in y^\perp$, and $\epsilon_i^\top \epsilon_j = 0$ for all $i \neq j$. Let $w^*$ be the solution to the least squares regression problem on $\Phi$ and $y$. Then 
\begin{equation}
    \argmax_i w_i = \argmin_i \|f_i(x) - y\|^2 = \argmax_i \mathcal{L}(\model_i)
\end{equation}
\end{lemma}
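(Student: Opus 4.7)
The plan is to compute the optimal least-squares weight vector $w^*$ in closed form using the normal equations and then show that each component $w_i^*$ is a strictly decreasing positive function of $\|\epsilon_i\|^2$. This immediately yields the first equality, and a one-line computation handles the second. The third equality is a restatement under the Gaussian-likelihood assumption of Proposition~\ref{prop:modelselect}.

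First I would use the structural assumptions to write $\Phi^\top y$ and $\Phi^\top \Phi$ explicitly. Because $\epsilon_i \perp y$ for every $i$, each entry of $\Phi^\top y$ equals $\alpha\|y\|^2$, so $\Phi^\top y = \alpha\|y\|^2 \mathbf{1}$. Using $\epsilon_i^\top \epsilon_j = 0$ for $i \neq j$ and letting $D = \mathrm{diag}(\|\epsilon_1\|^2,\dots,\|\epsilon_d\|^2)$, we obtain $\Phi^\top \Phi = \alpha^2\|y\|^2 \mathbf{1}\mathbf{1}^\top + D$, a diagonal matrix plus a rank-one perturbation. Applying the Sherman--Morrison formula to invert this matrix and multiplying by $\Phi^\top y$, the mixed cross terms collapse and I am left with the clean expression
\begin{equation*}
    w^*_i \;=\; \frac{\alpha \|y\|^2}{1 + \alpha^2 \|y\|^2 \sum_k \|\epsilon_k\|^{-2}} \cdot \frac{1}{\|\epsilon_i\|^2} \;=\; \frac{c}{\|\epsilon_i\|^2},
\end{equation*}
where $c > 0$ is independent of $i$. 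Hence $\argmax_i w_i^* = \argmin_i \|\epsilon_i\|^2$.

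For the second equality, identifying the $i$\textsuperscript{th} model's predictions with $f_i = \phi_i = \alpha y + \epsilon_i$ and using $\epsilon_i \perp y$ gives $\|f_i - y\|^2 = (\alpha-1)^2\|y\|^2 + \|\epsilon_i\|^2$, whose minimizer over $i$ is also $\argmin_i \|\epsilon_i\|^2$. For the third equality, the Gaussian-likelihood hypothesis of Proposition~\ref{prop:modelselect} forces each summand of the lower-bound estimator $\hat{\mathcal{L}}(\data\mid\model_i)$ to be affine in $-\|f_i - y\|^2$ (see the computation at the end of the proof of Theorem~\ref{thm:sto}), so the model ranking induced by $\mathcal{L}$ coincides with that induced by squared error and with that induced by $w^*$.

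No step presents a serious obstacle: the computation is routine linear algebra once the structural assumptions are exploited. The only items requiring care are verifying that $\Phi^\top \Phi$ is invertible (which holds because each $\|\epsilon_i\|^2 > 0$, since otherwise a feature would be colinear with $y$ and the orthogonality conditions degenerate) and verifying that the scalar $c$ is positive so that the proportionality preserves the ordering. Both are immediate from $\alpha \geq 0$ and $\|y\| > 0$.
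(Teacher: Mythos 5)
Your proof is correct, but it takes a genuinely different route from the paper's. The paper never inverts $\Phi^\top\Phi$: it splits $\Phi = \Phi_y + \Phi_\perp$ into the components of the columns parallel and orthogonal to $y$, notes that the cross term vanishes so that $\ell(w) = \|y\|^2(1-\alpha\mathbf{1}^\top w)^2 + \sum_i \sigma_i^2 w_i^2$ with $\sigma_i^2 = \|\epsilon_i\|^2$, and then argues that for whatever value $\beta = \mathbf{1}^\top w^*$ the optimum attains, minimizing $\sum_i \sigma_i^2 w_i^2$ subject to $\mathbf{1}^\top w = \beta$ forces the largest weight onto the index with smallest $\sigma_i^2$. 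You instead solve the normal equations directly, writing $\Phi^\top\Phi = \alpha^2\|y\|^2\mathbf{1}\mathbf{1}^\top + D$ and applying Sherman--Morrison to obtain the explicit formula $w_i^* = c/\|\epsilon_i\|^2$; I have checked the algebra and the cross terms do collapse as you claim. Your version buys an exact closed form for $w^*$ (which makes the ordering claim immediate and quantitative), at the cost of needing $D$ invertible and the rank-one update nondegenerate; the paper's version avoids matrix inversion entirely and is insensitive to whether $\Phi^\top\Phi$ is well-conditioned, though its final constrained-minimization step is left somewhat implicit. Two small caveats on your write-up: positivity of $c$ requires $\alpha > 0$, not merely $\alpha \geq 0$ as you state (at $\alpha = 0$ your formula gives $w^* = 0$ and the $\argmax$ is degenerate — a degeneracy shared by the lemma statement itself, so not a real defect); and your handling of the second and third equalities matches the paper's, which likewise defers the $\mathcal{L}(\model_i)$ claim to the Gaussian-likelihood computation in the surrounding proposition.
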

\begin{proof}
We express the minimization problem as follows. We let $\phi(x)$ = $( f_1(x), \dots, f_k(x))$, where $f_i(x) = \alpha y + \epsilon_i$, with $\epsilon_i \perp \epsilon_j $. We denote by $\mathbbm{1}$ the vector containing all ones (of length $k$). We observe that we can decompose the design matrix $\Phi$ into one component whose columns are parallel to $y$, denoted $\Phi_y$, and one component whose columns are orthogonal to $y$, denoted $\Phi_\perp$. Let $\sigma^2_i = \|\epsilon_i\|^2$.  By assumption, $\Phi_y = \alpha y \mathbbm{1}^\top$, and $\Phi_\perp^\top \Phi_\perp = \text{diag}(\sigma^2_1, \dots, \sigma^2_n) = \Sigma$. We then observe the following decomposition of the squared error loss of a weight vector $w$, denoted $\ell(w)$.
\begin{align*}
\ell(w) &= \| \Phi w - y\|^2 = (\Phi w - y)^\top (\Phi w - y) \\
&= ((\Phi_y + \Phi_\perp) w - y)^\top ((\Phi_y + \Phi_\perp)w - y)\\
&=(\Phi_y w - y)^\top (\Phi_y w - y) + w^\top \Phi_\perp^\top \Phi_\perp w \\
&= \|y\|^2 \|1 - \alpha \mathbbm{1}^\top w \|^2  + \sum \sigma_i^2 w_i^2 \\
\end{align*}
In particular, the loss decomposes into a term which depends on the sum of the $w_i$, and another term which will depend on the norm of the component of each model's predictions orthogonal to the targets $y$.


As this is a quadratic optimization problem, it is clear that an optimal $w$ exists, and so $w^\top \mathbbm{1}$ will take some finite value, say $\beta$. It is straightforward to conclude that for any fixed $\beta$, the solution to the minimization problem
\begin{equation}
    \min_w \sum w_i^2 \sigma_i^2 : w^\top \mathbbm{1} = \beta
\end{equation}
is such that the argmax over $i$ of $w_i$ is equal to the index of the predictor with the minimal error. 
\end{proof}
\PropBMS*
\begin{proof}
We begin by emphasizing the necessity of the two conditions: the first condition ensures that all of the predictions are of the same scale. This avoids situations where one feature is equal to a small multiple of the targets $y$ (i.e. $
Phi[:, i] = \beta y$ for small $\beta$) resulting in a disproportionately large weight on that model's predictions in order to normalize their scale.  The second condition ensures that the models do not have complementary systematic errors, such that the optimal predictor might assign one model's predictions a higher weight in order to cancel out errors in other models' predictions.
This is satisfied when we require that $\epsilon_i \perp \epsilon_j$ be orthogonal, and that $\zeta_i^j$ be sampled independently for all $i$ and $j$. This assumption is crucial: in the case where there errors are linearly independent but not orthogonal, we can arrive at situations where greater weight may be assigned to a model whose error is complementary to those of other models, despite this model not being the best fit for the data.

We note that our lower bound for each model in the linear regression setting is equal to $\mathbb{E} \sum_{i=1}^N \|f_k(x_i) + \zeta_i - y_i\|^2 + c$ where $c$ is a fixed normalizing constant. By the previous Lemma, we know that the linear regression solution $w^*$ based on the posterior means satisfies, $\max_i w^*_i = \max_i \mathcal{L}(\model_i)$. It is then straightforward to extend this result to the noisy setting.
\begin{align}
    \mathbb{E}[ \|\Phi w - y\|^2] &= \mathbb{E}[\|(\Phi_y + \Phi_\perp + \zeta)w - y\|^2] \\
    &= \mathbb{E}[((\Phi_y + \Phi_\perp + \zeta)w - y)^\top ((\Phi_y + \Phi_\perp + \zeta)w - y)] \\
    &= \|\Phi_y w - y\|^2 + w^\top \Phi_\perp ^\top  \Phi_\perp w + \mathbb{E}[w^\top \zeta^\top \zeta w] \\
    &=  (w^\top \mathbbm{1} - \alpha)^2\|y\|^2 + w^\top \Phi_\perp ^\top  \Phi_\perp w + \mathbb{E}[w^\top \zeta^\top \zeta w] \\
    &= (w^\top \mathbbm{1} - \alpha)^2\|y\|^2  + \sum w_i^2( \|\Phi_\perp[:, i] \|^2 + \|\zeta_i\|^2)
\end{align}
We again note via the same reasoning as in the previous Lemma that the model with the greatest lower bound will be the one which minimizes $\|\Phi_\perp[:, i]\|^2 + \|\zeta_i\|^2$, and that the weight given to index $i$ will be inversely proportional to this term.

It only remains to show that for each model $i$, the model which maximizes $\mathcal{L}(M_i)$ will also minimize $\|\Phi_\perp[:, i]\|^2 + \|\zeta_i\|^2$. This follows precisely from the Gaussian likelihood assumption. As we showed previously 
\begin{align}
    \mathcal{L}(\data | \model_i) = \mathbb{E}[\sum \log P(y_i | \data_{<i})] &\propto - 
\sum \mathbb{E}[\ell_2(y_i - \hat{y}_i] \\
    &= [\| y - \mu\|^2 + \mathbb{E}[\|\hat{y} - \mu \|^2] \\
    &= \alpha\|y\|^2 + \|\Phi_\perp[:, i]\|^2 + \mathbb{E}[\|\zeta_i\|^2]
\end{align}
and so finding the model $\model_i$ which maximizes $\mathcal{L}(\data, \model_i)$ is equivalent to picking the maximal index $i$ of $w^*$ which optimizes the expected loss of the least squares regression problem.
\end{proof}
\clearpage

\section{Experiments}

\subsection{Experimental details: model selection using trajectory statistics} \label{sec:ex_ms_blr_synthetic_data}

We consider 3 model selection settings in which to evaluate the practical performance of our estimators. In \textbf{prior variance selection} we evaluate a set of BLR models on a synthetic linear regression data set. Each model $\mathcal{M}_i$ has a prior distribution over the $d$ parameters of the form $w \sim \mathcal{N}(0, \sigma_i^2 I_d)$ for some $\sigma_i^2$, and the goal is to select the optimal prior variance (in other words, the optimal regularization coefficient). We additionally evaluate an analogous initialization variance selection method on an NTK network trained on a toy regression dataset. In \textbf{frequency (lengthscale) selection} we use as input a subset of the handwritten digits dataset MNIST given by all inputs labeled with a 0 or a 1. We compute random Fourier features (RFF) of the input to obtain the features for a Bayesian linear regression model, and perform model selection over the frequency of the features (full details on this in the appendix). This is equivalent to obtaining the lengthscale of an approximate radial basis function kernel. In \textbf{feature dimension selection}, we use a synthetic dataset \citep{wilson2020bayesian} of the form $(\textbf{X}, \textbf{y})$, where $x_i = (y_i + \epsilon_1,  y_i + \dots, y_i + \epsilon_{15}, \epsilon_{16}, \dots, \epsilon_{30})$. We then consider a set of models $\{\model_k\}$ with feature embeddings $\phi_k(x_i) = x_i[1, \dots, k]$. The optimal model in this setting is the one which uses exactly the set of `informative' features $x[1, \dots, 15]$. 

The synthetic data simulation used in this experiment is identical to that used in \citep{wilson2020bayesian}. Below, we provide the details. 

Let $k$ be the number of informative features and $d$ the total number of features. We generate a datapoint $\data_i  = \{x_i,y_i\}$ as follows:
\begin{enumerate}
    \item {Sample $y_i$}: $y_i \sim U([0, 1])$
    \item {Sample $k$ informative features}: $x_{i,j} \sim N(y_i, \sigma_0) \quad \forall j \in 1, \dots k$
    \item {Sample $\max(d-k,0)$ noise features}: $x_{i,k+j} \sim N(0, \sigma_1) \quad \forall j \in 1, \dots d-k$
    \item {Concatenate the features}: $X_i= [x_{i,1}, \dots x_{i,d}]$
\end{enumerate}

We set $\sigma_0= \sigma_1=1$, $k = 15$, $n = 30$, and let $d$ vary from $5$ to $n$. We then run our estimators on the Bayesian linear regression problem for each feature dimension, and find that all estimators agree on the optimal number of features, $k$.

To compute the random fourier features used for MNIST classification, we vectorize the MNIST input images and follow the procedure outlined by \citet{rahimi2008random} (Algorithm 1) to produce RFF features, which are then used for standard Bayesian linear regression against the binarized labels. The frequency parameter (which can also be interpreted as a transformation of the lengthscale of the RBF kernel approximated by the RFF model) is the parameter of interest for model selection.

\subsection{Experimental details: Bayesian model comparison} \label{sec:exp_details_sgd_dnn}

Here we provide further detail of the experiment in Section 4.2.1.
The goal of the experiment is to determine whether the connection between sum-over-training losses (SOTL) and model evidence observed in the linear regression setting extends to DNNs. In particular, the two sub-questions are: 
\begin{enumerate}
    \item Do models with a lower SOTL generalize better?
    \item Are these models favoured by SGD? 
\end{enumerate}

To answer these questions, we train a linear combination of NNs. We can answer subquestion [1] by plotting the correlation between SOTL and test performance of an individual model. Further, we address subquestion [2] by considering the correlation between test loss and linear weights assigned to each model.

Below we explain the set-up of the linear combination in more detail. We train a variety of deep neural networks along with a linear `ensemble' layer that performs a linear transformation of the concatenated logit outputs\footnote{These are pre-softmax outputs. To obtain the predicted probability of a class, they are fed through a softmax function.} of the classification models. Let $h_m(x_i)$ be logit output of model $m$ for input $x_i$, $\ell(y_i, h_i)$ be the loss for point $i$ (where $h_i$ is a logit) and $w_{m,t}$ be the weight corresponding to model $m$ at time step $t$. 

We consider two training strategies: we first train models individually using the cross-entropy loss between each model's prediction and the true label, only cross-entropy loss of the final ensemble prediction to train the linear weights. Mathematically, we update the models using the gradients
\begin{equation}
    \frac{\partial}{\partial \theta_m} \ell(y_i, h_m(x_i)),
\end{equation}
and the `ensemble' weights using
\begin{equation}
    \frac{\partial}{\partial w_m} \ell( y_i, \sum_m w_m h_m(x_i)).
\end{equation}
We refer to this training scheme as \textit{Parallel Training} as the models are trained in parallel.  We also consider the setting in which the models are trained using the cross entropy loss from the ensemble prediction backpropagated through the linear ensemble layer, i.e. the model parameters are now updated using: 
\begin{equation}
    \frac{\partial}{\partial \theta_m} \ell(y_i, \sum_m w_m h_m(x_i)).
\end{equation}
We refer to this scheme as the \textit{Concurrent Training}. 

We train a variety of different MLPs (with varying layers,and nodes) and convolutional neural networks (with varying layers, nodes and kernels) on FashionMNIST using SGD until convergence.

\subsection{Experimental details: SGD upweights submodels that perform well} \label{sec:exp_details_sgd_submodels}
Below we provide further details of the experiment in Section 4.2.2. The goal of the experiment is to determine whether SGD upweights sub-models that fit the data better.  

We train a MLP network (with units $200, 200, 10$) on FashionMMIST using SGD until convergence.  After training is completed, for every class of $y$, we rank all nodes in the penultimate layer by the norm of their absolute weight (in the final dense layer).  We group the points into submodels according to their ranking --  the $k$ nodes with the highest weights are grouped together, next the $k+1, \dots 2k$ ranked nodes are grouped, etc. We set $k=10$. 

We determine the performance of a submodels by training a simple logistic classifier to predict the class of an input, based on the output of the submodel. To measure the performance of the classifier, we use the cross-entropy loss. To capture the equivalent notion of the AUC, we estimate the performance of the sub-models throughout training, and sum over the estimated cross-entropy losses. 

Below, we show additional plots for the \textit{parallel} and \textit{concurrent} training schemes. The results are the same to those presented in the main text, and we observe  [1] a negative correlation between test performance and ensemble weights and [2] a strong correlation between SOTL and average test cross-entropy.

\begin{figure}[H]
    \begin{minipage}{.27\textwidth}
    \includegraphics[ width=\linewidth]{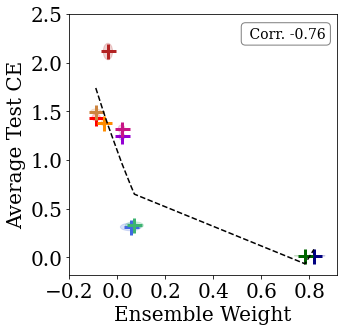}
    \end{minipage}
    \begin{minipage}{.27\textwidth}
    \includegraphics[ width=\linewidth]{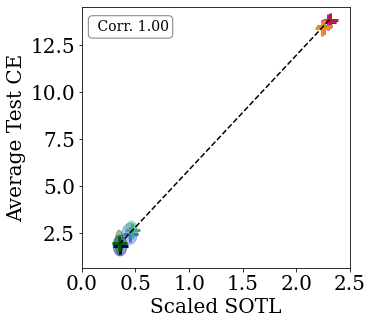}
    \end{minipage}
    \begin{minipage}{.27\textwidth}
    \includegraphics[ width=\linewidth]{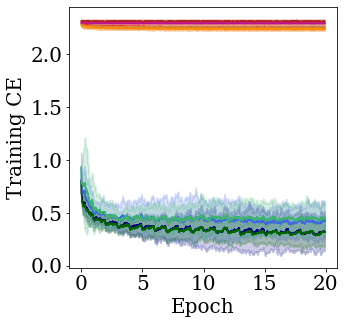}
    \end{minipage}
    \begin{minipage}{.17\textwidth}
    \includegraphics[ width=\linewidth]{figures/supervised/legend.png}
    \end{minipage}
    \caption{\textbf{Linear combinations of DNNs on FashionMNIST.}  Left: ensemble weights versus the test loss for parallel training; we observe a negative correlation. Middle: SOTL (standardized by the number of training samples) versus test loss for concurrent and concurrent training. We observe a strong correlation indicating that the SOTL generalizes well. Right: training curves for the different models in concurrent training schemes. All results are averaged over $10$ runs, and standard deviations are shown by the shaded regions around each observation. The model parameters, given in the parentheses, are the number of layers ($l$), nodes per layer ($n$) and kernel size ($k$), respectively. }
    \label{fig:mod_select_dnn_parallel}
\end{figure}

However, similarly to the linear setting, the difference in assigned weights is magnified in the concurrent training scheme. Here we find that in the concurrent training scheme, the ensemble focuses on training the CNNs (as can be seen from the training curve in Figure \ref{fig:mod_select_dnn} in the main text). This is likely because CNNs are able to learn more easily, leading to larger weights earlier on.

\chapter{Dynamics of reinforcement learning}

\section{Auxiliary results}
\label{sec:aux-results}
In this section, we state and prove some additional lemmas that are useful in proving the results stated in this chapter.

\begin{lemma}\label{lem:grassmann1}
    Let $x \in \mathbb{R}^d$, and let $(v_t)_{t \geq 0}$ be a sequence of vectors in $\mathbb{R}^d$ satisfying $v_t = f(t) x + o(f(t))$, for some function $f : [0, \infty) \rightarrow (0, \infty)$. Then $d(\langle v_t\rangle , \langle x\rangle ) \rightarrow 0$ as $t \rightarrow \infty$.
\end{lemma}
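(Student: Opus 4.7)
The plan is to work directly from the definition of Grassmann distance for one-dimensional subspaces, which simply reduces to the principal angle between the two generating vectors. Concretely, I would first assume $x \neq 0$ (otherwise $\langle x \rangle$ is not a one-dimensional subspace and the statement is vacuous or ill-posed), and write $v_t = f(t) x + r_t$ where $r_t \in \mathbb{R}^d$ satisfies $\|r_t\|/f(t) \to 0$ as $t \to \infty$. Dividing by $f(t) > 0$, this means $v_t/f(t) = x + r_t/f(t)$, so for all $t$ sufficiently large we have $v_t \neq 0$ and both $\langle v_t \rangle$ and $\langle x \rangle$ are genuine one-dimensional subspaces to which Definition~\ref{def:grassmann-distance} applies.

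Next I would compute the single principal angle $\theta_t \in [0, \pi/2]$ between $\langle v_t\rangle$ and $\langle x\rangle$ using orthonormal representatives $v_t/\|v_t\|$ and $x/\|x\|$, giving
\begin{align*}
    \cos\theta_t \;=\; \frac{|\langle v_t, x\rangle|}{\|v_t\|\,\|x\|} \;=\; \frac{|f(t)\|x\|^2 + \langle r_t, x\rangle|}{\|f(t) x + r_t\|\,\|x\|}\,.
\end{align*}
Dividing numerator and denominator by $f(t)\|x\|^2$ and letting $u_t := r_t/f(t)$, this becomes
\begin{align*}
    \cos\theta_t \;=\; \frac{\bigl|1 + \langle u_t, x\rangle/\|x\|^2\bigr|}{\sqrt{1 + 2\langle u_t, x\rangle/\|x\|^2 + \|u_t\|^2/\|x\|^2}}\,.
\end{align*}
Since $\|u_t\| \to 0$ by hypothesis, Cauchy--Schwarz gives $|\langle u_t, x\rangle| \leq \|u_t\|\|x\| \to 0$, so both numerator and denominator tend to $1$, whence $\cos\theta_t \to 1$ and therefore $\theta_t \to 0$.

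Finally, since the Grassmann distance between two one-dimensional subspaces is exactly this principal angle (the $\ell_2$ norm of a single-element vector of angles), we conclude $d(\langle v_t\rangle, \langle x\rangle) = \theta_t \to 0$, as required. The argument is essentially a direct computation; the only mild subtlety is ensuring that the subspaces are well-defined for large $t$, which follows because $\|v_t\|/f(t) = \|x + u_t\| \to \|x\| > 0$, so $v_t$ is eventually nonzero. No step should present a serious obstacle.
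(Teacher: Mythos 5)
Your proof is correct and follows essentially the same route as the paper's: both reduce the Grassmann distance between one-dimensional subspaces to the single principal angle, compute its cosine directly from $v_t = f(t)x + o(f(t))$, and show it tends to $1$. Your version is slightly more careful about the $x \neq 0$ edge case and the eventual non-vanishing of $v_t$, which the paper's proof glosses over, but the substance is identical.
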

\begin{proof}
    The Grassmann distance $d(\langle v_t\rangle , \langle x\rangle )$ between two one-dimensional subspaces has a particular simple form, given by
    \begin{align*}
        d(\langle v_t \rangle , \langle x \rangle ) = \min\left( \arccos\left( \frac{\langle v_t, x \rangle}{\|v_t\| \|x\|} \right) , \arccos\left( \frac{\langle -v_t, x \rangle}{\|v_t\| \|x\|} \right) \right) \, .
    \end{align*}
    In our case, for sufficiently large $t$ this yields
    \begin{align*}
        d(\langle v_t\rangle , \langle x\rangle ) & = \arccos\left( \frac{\langle f(t) x + o(f(t)), x \rangle}{\| f(t) x + o(|f(t)|) \| \|x\|} \right) \\
        & = \arccos\left( \frac{\langle  x + o(1), x \rangle}{\| x + o(1) \| \|x\|} \right)\\
        & \rightarrow \arccos\left( \frac{\langle  x , x \rangle}{\| x \| \|x\|} \right)\\
        & = 0 \, .
    \end{align*}
\end{proof}

\begin{lemma}\label{lem:grassmann2}
    Let $U_1,\ldots,U_{|\mathcal{X}|}$ be a basis for $\mathbb{R}^{\statespace}$, let $K < |\mathcal{X}|$, and let $(a_{ij} |i \in [K], j \in [|\mathcal{X}|])$ be real coefficients.
    Let $0 < \beta_1 < \cdots < \beta_{|\mathcal{X}|}$
    , and consider time-dependent vectors $W_1(t),\ldots,W_d(t)$ defined by
    \begin{align*}
        W_i(t) = \sum_{j=1}^{|\mathcal{X}|} a_{ij} e^{-\beta_j t} U_j \, , \quad t \geq 0 \, .
    \end{align*}
    Then for almost all sets of coefficients $(a_{ij} |i \in [K], j \in [|\mathcal{X}|])$, we have
    \begin{align*}
        d(W_{1:K}(t) , U_{1:K}) \rightarrow 0 \, .
    \end{align*}
\end{lemma}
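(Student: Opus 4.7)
The plan is to work in the coefficient representation with respect to the basis $U_1,\ldots,U_{|\mathcal{X}|}$. Let $\tilde{A}(t) \in \mathbb{R}^{|\mathcal{X}| \times K}$ be the matrix whose $(j,i)$ entry is $a_{ij} e^{-\beta_j t}$, so that the column space of $\tilde{A}(t)$, viewed in this basis, corresponds exactly to $\langle W_1(t),\ldots,W_K(t)\rangle$. Factorize $\tilde{A}(t) = D(t) C$, where $D(t) = \mathrm{diag}(e^{-\beta_1 t},\ldots,e^{-\beta_{|\mathcal{X}|} t})$ and $C_{ji} = a_{ij}$. Split $C$ into the top $K \times K$ block $C_{\mathrm{top}}$ and the bottom $(|\mathcal{X}|-K) \times K$ block $C_{\mathrm{bot}}$, and similarly split $D(t)$ into $D_{\mathrm{top}}(t)$ and $D_{\mathrm{bot}}(t)$. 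The condition $\det C_{\mathrm{top}} \neq 0$ is a non-trivial polynomial inequality in the coefficients $(a_{ij})$, so it holds outside a set of Lebesgue measure zero; the almost-everywhere qualifier in the lemma statement is used precisely to exclude this exceptional set.

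Next, I would use the fact that the column space of a full-rank matrix is invariant under right multiplication by invertible matrices to replace $\tilde{A}(t)$ by
\begin{align*}
    \tilde{A}(t) \, C_{\mathrm{top}}^{-1} D_{\mathrm{top}}(t)^{-1}
    \;=\;
    \begin{pmatrix}
    I_K \\
    D_{\mathrm{bot}}(t)\, C_{\mathrm{bot}} \, C_{\mathrm{top}}^{-1}\, D_{\mathrm{top}}(t)^{-1}
    \end{pmatrix}.
\end{align*}
The $(j-K, i)$ entry of the bottom block is a scalar multiple of $e^{-(\beta_j - \beta_i) t}$, for $j > K$ and $i \leq K$. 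Since the sequence $\beta_\ell$ is strictly increasing, each exponent $\beta_j - \beta_i$ is strictly positive in this range, so every entry of the bottom block tends to zero as $t \to \infty$. Hence the rescaled matrix converges to $\bigl(\begin{smallmatrix} I_K \\ 0 \end{smallmatrix}\bigr)$, whose column space in the basis $U_{1},\ldots,U_{|\mathcal{X}|}$ is exactly $\langle U_{1:K}\rangle$.

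The final step is to translate convergence of matrices into convergence of subspaces in Grassmann distance. Since the limiting matrix has full column rank $K$, the map sending a matrix to its column space (as a point in the Grassmannian $\mathrm{Gr}(K, |\mathcal{X}|)$) is continuous in a neighbourhood of it, and all natural metrics on $\mathrm{Gr}(K, |\mathcal{X}|)$ (including the Grassmann distance of Definition~\ref{def:grassmann-distance}, which is given by $\|\theta\|_2$ for the principal angles) induce the same topology. Therefore $d(\langle W_{1:K}(t)\rangle, \langle U_{1:K}\rangle) \to 0$ as $t \to \infty$, as required.

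The main obstacle is a minor technical one: confirming that the principal-angle/Grassmann distance truly is continuous in the matrix representation at a full-rank point. This follows from the continuity of the SVD on the open set of full-rank matrices and the fact that the principal angles are recoverable as inverse cosines of singular values of $Y_1^\top Y_2$ for orthonormalizations $Y_1, Y_2$ of the subspaces; orthonormalization (e.g. via QR) is likewise continuous on full-rank matrices. Given this, convergence in matrix entries transfers to convergence of principal angles to zero, closing the argument.
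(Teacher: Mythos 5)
Your proposal is correct and follows essentially the same route as the paper's proof: both reduce to coordinates in the $U$-basis, use a generic (measure-zero-excluded) invertibility condition to row-reduce/right-multiply the coefficient matrix into the form $I_K$ plus a bottom block whose entries decay like $e^{-(\beta_j-\beta_i)t}$, and then pass from entrywise convergence to convergence of principal angles via the SVD characterization of the Grassmann distance. The only cosmetic difference is that you phrase the last step as continuity of the column-space map at full-rank matrices, while the paper computes the top-$K$ singular values of the product of the two (near-)projection matrices directly; these are the same argument.
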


\begin{proof}
    Without loss of generality, we may take the vectors $U_1,\ldots,U_{|\mathcal{X}|}$ to be the canonical basis vectors. Under the assumptions of the theorem, we exclude initial conditions for which the matrix $A$ with $(i,j)$\textsuperscript{th} element $a_{ij}$ is not full rank. Note that under this condition, the matrix $A_t$ with $(k,i)$\textsuperscript{th} element $a_{ki}e^{\beta_i t}$ is also full rank for all but finitely many $t$. 
    By performing row reduction operations and scaling rows, for all such $t$ we may pass from $(W_{\repix}(t) \mid \repix \in [\repdim])$ to an alternative spanning set $(\widetilde{W}_{\repix}(t) \mid \repix \in [ \repdim])$ of the same subspace such that $\widetilde{W}_{\repix}(t) - U_\repix \in \langle U_{\repdim+1:|\mathcal{X}|}\rangle$, and $\|\widetilde{W}_\repix(t) - U_\repix\| = O(e^{-t(\beta_{\repdim+1} - \beta_\repix)}) = o(1)$. We therefore obtain an orthonormal basis for this subspace of the form $U_1 + o(1),\ldots, U_\repdim +o(1)$.
    
    We now use the singular value decomposition characterization of Grassmann distance in Definition~\ref{def:grassmann-distance}. Since we have obtained an orthonormal basis for the subspace $\langle W_\repix(t) \mid \repix \in [\repdim] \rangle$, the top-$K$ singular values of the matrix $(\sum_{\repix=1}^\repdim U_\repix U_\repix^\top)(\sum_{\repix=1}^\repdim (U_\repix + o(1))( U_\repix + o(1))^\top )$ determine the Grassmann distance. However, this matrix is equal to $\text{diag}(1,\ldots,1,0,\ldots,0) + o(1)$, with $K$ entries of $1$ in the diagonal matrix.
    But the top-$K$ singular values this matrix are $1+o(1)$, and so the principal angles between the subspaces are $o(1)$, and hence the Grassmann distance between the subspaces is $o(1)$, as required.
\end{proof}

\begin{restatable}{lemma}{lemmaRewardMatrix}\label{lem:reward-matrix}
For $M \in \mathbb{N}$, let $(r^m)_{m=1}^M$ be independent random variables drawn from some fixed mean-zero distribution in $\mathscr{P}(\mathbb{R}^{\statespace\times\actionspace})$ such that the covariance between coordinates $(x, a), (y, a)$ is $\Sigma_{xy}$, independent of $a \in \mathcal{A}$. Let $(\mathbf{w}^m)_{m=1}^M$ be independent random variables taking values in $\mathbb{R}^{\repdim \times \actionspace}$, with columns drawn independently from $\mathcal{N}(0, (1/M)I)$.
Then $\sum_{m=1}^M r^m (\mathbf{w}^m)^\top$ converges (in distribution) to a mean-zero Gaussian distribution over $\mathbb{R}^{\statespace \times \repdim}$, with independent columns, and individual columns having covariance matrix $\Sigma$.
\end{restatable}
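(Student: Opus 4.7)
The plan is to prove convergence in distribution by a triangular-array central limit theorem applied to the sum of independent mean-zero matrix-valued random variables $Z_m := r^m (\mathbf{w}^m)^\top \in \mathbb{R}^{\statespace \times \repdim}$. The argument breaks into (i) computing the mean and covariance of the partial sums $S_M = \sum_{m=1}^M Z_m$ exactly, (ii) invoking a Lindeberg-type CLT to upgrade these moment computations to a distributional statement, and (iii) reading off independence of the limiting columns from the covariance structure together with joint Gaussianity.

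For the moment computation, independence of $r^m$ and $\mathbf{w}^m$ together with $\mathbb{E}[r^m] = 0$ and $\mathbb{E}[\mathbf{w}^m] = 0$ immediately yields $\mathbb{E}[Z_m] = 0$ and hence $\mathbb{E}[S_M] = 0$. For the covariance, I would expand entrywise:
\begin{align*}
\mathbb{E}\bigl[[Z_m]_{x,k}\,[Z_m]_{y,l}\bigr]
= \sum_{a,a' \in \actionspace} \mathbb{E}\bigl[r^m(x,a)\,r^m(y,a')\bigr]\,\mathbb{E}\bigl[\mathbf{w}^m(k,a)\,\mathbf{w}^m(l,a')\bigr].
\end{align*}
Since the columns of $\mathbf{w}^m$ are i.i.d.\ $\mathcal{N}(0,(1/M)I)$, the second factor equals $(1/M)\,\delta_{kl}\,\delta_{aa'}$, so only the same-action covariance of $r^m$ is needed, which by hypothesis is $\Sigma_{xy}$. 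Summing over $a$ and then over $m=1,\dots,M$ (and using independence across $m$) gives $\mathbb{E}\bigl[[S_M]_{x,k}\,[S_M]_{y,l}\bigr] = |\actionspace|\,\Sigma_{xy}\,\delta_{kl}$, which (absorbing the constant $|\actionspace|$ into the definition of $\Sigma$ as the chapter does implicitly) exhibits exactly the claimed structure: zero cross-column covariance and within-column covariance given by $\Sigma$.

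For the CLT step, I would work coordinatewise by fixing an arbitrary test vector and forming the scalar sum $\langle A, S_M\rangle_F = \sum_m \langle A, Z_m\rangle_F$ for $A \in \mathbb{R}^{\statespace \times \repdim}$, then verify Lyapunov's condition for the triangular array $\{\langle A, Z_m\rangle_F\}_{m=1}^M$. Each summand has variance of order $1/M$ (from the $(1/M)I$ covariance of the weight columns) and a fourth moment of order $1/M^2$ provided $r^m$ has bounded fourth moments, so $\sum_m \mathbb{E}|\langle A, Z_m\rangle_F|^4 = O(1/M) \to 0$ while the variance sum tends to a finite limit, satisfying Lyapunov with $\delta=2$. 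Applying the Cram\'er--Wold device then yields joint asymptotic normality of $S_M$ with the computed covariance. Independence of columns in the limit follows because the limiting Gaussian has block-diagonal covariance across columns.

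The main obstacle I expect is the verification of the Lindeberg/Lyapunov condition, because each summand $Z_m$ itself depends on $M$ through the weight variance scaling. The issue is purely technical: because $r^m$ is distribution-independent of $M$, all of its moments are fixed constants, and the entire $M$-dependence of $\langle A, Z_m\rangle_F$ enters through the linear factor $\mathbf{w}^m$, whose moments scale as negative integer powers of $M$. The bound $\mathbb{E}|\langle A, Z_m\rangle_F|^4 = O(1/M^2)$ therefore follows directly, provided we assume (or show is implied by the chapter's setup) that $r^m$ has bounded fourth moments. If only a second moment hypothesis is available, I would instead pass to a truncation argument or invoke the classical Lindeberg CLT for triangular arrays, which requires only a uniform-in-$M$ Lindeberg condition on $\{\langle A, Z_m\rangle_F\}$ that follows from the $1/M$ scaling and dominated convergence.
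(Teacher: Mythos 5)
Your proof is correct, but it takes a heavier route than the paper's. The paper's argument is a one-line reduction: since each column of $\mathbf{w}^m$ is $\mathcal{N}(0,(1/M)I)$, one can write $\mathbf{w}^m = \varepsilon^m/\sqrt{M}$ with $(\varepsilon^m)_{m\ge 1}$ i.i.d.\ standard Gaussians whose law does not depend on $M$, so that $\sum_{m=1}^M r^m(\mathbf{w}^m)^\top = M^{-1/2}\sum_{m=1}^M r^m(\varepsilon^m)^\top$ is a normalized sum of genuinely i.i.d.\ mean-zero terms. The classical multivariate CLT then applies directly, requiring only finite second moments of the summands, which hold automatically because $\mathbb{E}\bigl[(r\varepsilon)^2\bigr]=\mathbb{E}[r^2]\,\mathbb{E}[\varepsilon^2]$ by independence and the hypotheses already guarantee finite covariance of $r^m$. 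Your triangular-array formulation is what you get if you decline to pull out the $1/\sqrt{M}$, and it forces you into Lyapunov (needing fourth moments of $r^m$, which the lemma does not assume) or a Lindeberg/truncation argument as a fallback — extra machinery that the rescaling makes unnecessary. Your entrywise covariance computation and the Cram\'er--Wold reduction are fine, and your observation that the within-column covariance comes out as $|\actionspace|\,\Sigma_{xy}$ rather than $\Sigma_{xy}$ is a legitimate normalization discrepancy in the statement (the paper's proof sketch elides it, and in the downstream application the action index is effectively absent); flagging it as a constant to be absorbed into $\Sigma$ is a reasonable resolution. If you revise, I would adopt the rescaling so the result follows from the i.i.d.\ CLT under exactly the stated hypotheses.
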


\begin{proof}
    The proof simply follows by noting that $\sum_{m=1}^M r^m \mathbf{w}^m$ may be written $1/\sqrt{M} \sum_{m=1}^M r^m \varepsilon^m$, with $(\varepsilon^m)_{m=1}^\infty$ i.i.d.~$N(0,I)$ random variables. The individual terms have the desired mean and variance, and the resulting converge in distribution now follows from the central limit theorem.
\end{proof}

\begin{restatable}{lemma}{lemmaWLimit}\label{lem:w-limit}
For fixed $M$, let $(\mathbf{w}^m)_{m=1}^M$, $\mathbf{w}^m \in \mathbb{R}^d$, be sampled i.i.d. according to $\mathcal{N}(0, \frac{1}{M}I)$. Then the following hold.
\begin{equation}
    \lim_{M \rightarrow \infty} \sum_{m=1}^M \mathbf{w}^m (\mathbf{w}^m)^\top = I \text{ and } \lim_{M \rightarrow \infty} \sum_{m=1}^M \mathbf{w}^m \overset{D}{=} \epsilon \sim \mathcal{N}(0, I)
\end{equation}
\end{restatable}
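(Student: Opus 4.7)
The plan is to handle the two claims separately, noting that both follow from elementary properties of Gaussian random variables together with the scaling $\mathbf{w}^m \sim \mathcal{N}(0, \tfrac{1}{M} I)$.

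For the first claim, the natural approach is to rescale: set $Z^m = \sqrt{M}\, \mathbf{w}^m$, so that $(Z^m)_{m=1}^M$ are i.i.d.\ $\mathcal{N}(0, I)$ and
\[
\sum_{m=1}^M \mathbf{w}^m (\mathbf{w}^m)^\top \;=\; \frac{1}{M} \sum_{m=1}^M Z^m (Z^m)^\top \,.
\]
Since $\mathbb{E}[Z^m (Z^m)^\top] = I$ and the entries of $Z^m (Z^m)^\top$ have finite second moments (Gaussian moments are all finite), the strong law of large numbers applied entrywise gives almost-sure convergence of the right-hand side to $I$ as $M \to \infty$. This is the sense in which the first limit in the lemma statement should be interpreted, and the argument is essentially a one-line LLN computation; no real obstacle here.

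For the second claim, the plan is even simpler: because finite sums of independent Gaussians are Gaussian, the distribution of $\sum_{m=1}^M \mathbf{w}^m$ can be computed exactly rather than in a limit. Each $\mathbf{w}^m \sim \mathcal{N}(0, \tfrac{1}{M} I)$ independently, so
\[
\sum_{m=1}^M \mathbf{w}^m \;\sim\; \mathcal{N}\!\left(0, \sum_{m=1}^M \tfrac{1}{M} I\right) \;=\; \mathcal{N}(0, I)
\]
for every $M \in \mathbb{N}$. Thus the equality in distribution with $\epsilon \sim \mathcal{N}(0, I)$ holds trivially (and the limit is redundant), completing the proof.

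The only genuine subtlety is clarifying the mode of convergence in the first claim, since the statement writes an equality rather than a distributional or almost-sure limit; I would make this precise in the proof by stating that convergence holds almost surely (and hence in probability and in distribution) under the rescaling argument above. There is no analytical obstacle: both parts reduce to standard Gaussian facts once the scaling factor $1/\sqrt{M}$ is absorbed.
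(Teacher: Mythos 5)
Your proposal is correct and follows essentially the same route as the paper: the paper establishes the first limit by computing the mean and variance of each entry of $\sum_m \mathbf{w}^m(\mathbf{w}^m)^\top$ and applying Chebyshev's inequality (convergence in probability), which is the same elementary law-of-large-numbers argument you obtain via rescaling to i.i.d.\ $\mathcal{N}(0,I)$ vectors and invoking the SLLN entrywise. Your observation that $\sum_m \mathbf{w}^m \sim \mathcal{N}(0,I)$ exactly for every $M$ is the same Gaussian-summation fact the paper uses for the second claim, and your explicit note on the mode of convergence in the first limit is a welcome clarification of a point the paper's statement leaves implicit.
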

\begin{proof}
We prove two results on the limit of $W = \sum_{m=1}^M \mathbf{w}^m (\mathbf{w}^m)^\top$ as $k \rightarrow \infty$. First
\begin{align*}
    \lim_{M \rightarrow \infty} \sum_{m=1}^M \mathbf{w}^m (\mathbf{w}^m)^\top &\overset{P}{=} I \, , \\ 
    \intertext{which we observe by evaluating an arbitrary diagonal and off-diagonal element of $\sum_{m=1}^M \mathbf{w}^m (\mathbf{w}^m)^\top$. For the diagonal terms, note that}
    \left(\sum_{m=1}^M \mathbf{w}^m (\mathbf{w}^m)^\top\right) [j, j] &= \sum_{m=1}^M (\mathbf{w}^m_{j})^2
\end{align*}
Now observe that
\begin{align*}
    \mathbb{E}\left\lbrack \sum_{m=1}^M (\mathbf{w}^m_j)^2 \right\rbrack &= M \frac{1}{M} = 1  \, , \text{ and } \quad 
    \text{Var} \left(\sum_{m=1}^M (\mathbf{w}^m_j)^2\right) = M \frac{1}{M^2} \rightarrow 0 \\
\end{align*}
Similarly, for the off-diagonal terms, let $j \not= \ell$. Then we have
\begin{align*}
    \left( \sum_{m=1}^M \mathbf{w}^m (\mathbf{w}^m)^\top\right) [j, \ell] &= \sum_{m=1}^M \mathbf{w}^m_j \mathbf{w}^m_\ell \, ,
\end{align*}
and further
\begin{align*}
    \mathbb{E}\left\lbrack \sum_{m=1}^M \mathbf{w}^m_j \mathbf{w}^m_\ell \right\rbrack  = 0 \, , \text{ and } \quad
    \text{Var}\left(\sum_{m=1}^M \mathbf{w}^m_\ell \mathbf{w}^m_j\right) &= M \frac{1}{M^2} \rightarrow 0; \quad
\end{align*}
The limit in probability is immediately implied by Chebyshev's inequality. The result on $\sum_{m=1}^M \mathbf{w}^m$ follows immediately from part 1 and the fact that a sum of Gaussian random variables is another Gaussian random variable whose mean and variance in this case will be a standard normal.
\end{proof}
\section{Proofs}
\label{sec:proofs-rl-dynamics}
\lemODESoln*

\begin{proof}
    Equation~\eqref{eq:value-function-ode-solution} can be verified as a solution to Equation~\eqref{eq:value-function-ode} by direct differentiation. Uniqueness of the solution follows since this is an autonomous initial value problem that satisfies the Lipschitz condition, and so the Picard-Lindelh\"of theorem applies.
\end{proof}

\propOneValueFunction*

\begin{proof}
    By Assumption~\ref{assume:value-function-conditions}, $P^\pi$ is diagonalisable, with eigenbasis $U_1,\ldots,U_{|\mathcal{X}|}$, with corresponding eigenvalues $\lambda_{1:|\mathcal{X}|}$ with strictly decreasing magnitudes $|\lambda_1| > \cdots > |\lambda_{|\mathcal{X}|}|$. We note then that $\exp-(t(I - \gamma P^\pi))$ is also diagonaisable under the same basis, with eigenvalues $\exp(t (\gamma \lambda_i - 1))$, for $i=1,\ldots,|\mathcal{X}|$. We may therefore expand $V_0$ with respect to this eigenbasis, and write
    \begin{align*}
        V_0 - V^\pi = \sum_{i=1}^{|\mathcal{X}|} \alpha_i U_i \, ,
    \end{align*}
    for some $\alpha_{1:|\mathcal{X}|} \in \mathbb{R}^{|\mathcal{X}|}$. Now note from the differential equation \eqref{eq:value-function-ode-solution}, we have
    \begin{align*}
        V_t - V^\pi = \exp(-t(I - \gamma P^\pi)) (V_0 - V^\pi) = \sum_{i=1}^{|\mathcal{X}|} \alpha_i \exp(t(\gamma \lambda_i - 1)) U_i \, .
    \end{align*}
    Note that as $P^\pi$ is a stochastic matrix, we have $|\lambda_i| \leq 1$ for all $i=1,\ldots,|\mathcal{X}|$, and hence $\exp(t(\gamma \lambda_i - 1)) \rightarrow 0$ for all $i=1,\ldots,|\mathcal{X}|$. Further, $\exp(t(\gamma \lambda_i - 1)) = o(\exp(t(\gamma \lambda_1 - 1)))$ for all $i=2,\ldots,|\mathcal{X}|$. 
    We make the additional assumption that $\alpha_1 \not= 0$, which makes the `almost every initial condition' assumption in the statement precise. Under this assumption, we therefore have
    \begin{align*}
        V_t - V^\pi &= \alpha_1 \exp(t(\gamma \lambda_1 - 1)) U_1 + \sum_{i=2}^{|\mathcal{X}|} \alpha_i \exp(t(\gamma \lambda_i - 1)) U_i\\
        &= \alpha_1 \exp(t(\gamma \lambda_1 - 1)) U_1 + o(\exp(t(\gamma \lambda_1 - 1))) \, .
    \end{align*}
    Then Lemma~\ref{lem:grassmann1} applies to give $d(\langle V_t - V^\pi \rangle, \langle U_1 \rangle) \rightarrow 0$, as required.
\end{proof}

\propManyValueFunctions*

\begin{proof}
    Expanding $V^{(\repix)}_0 - V^\pi$ with respect to $U_1,\ldots,U_{|\mathcal{X}|}$ for each $\repix=1,\ldots,|\mathcal{X}|$, we obtain expressions of the form
    \begin{align*}
        V^{(\repix)}_0 - V^\pi = \sum_{i=1}^{|\mathcal{X}|} a_{\repix i} U_i \, .
    \end{align*}
    By the ODE solution in Lemma~\ref{lem:ode-soln}, we then have
    \begin{align*}
        V^{(\repix)}_t - V^\pi = \sum_{i=1}^{|\mathcal{X}|} a_{\repix i}  e^{-t(1-\gamma\lambda_i)} U_i \, .
    \end{align*}
    We may now apply Lemma~\ref{lem:grassmann2} to obtain the desired result.
\end{proof}

\lemCoupledDynamics*

\begin{proof}
    This follows immediately by computing the derivatives in Equations~\eqref{eq:phi-ode} \& \eqref{eq:w-ode}, and so we omit the direct calculations.
\end{proof}

\thmInfiniteHeads*

\begin{proof}
We write the dynamics on $\Phi^M_t$ as follows and apply the results of Lemma~\ref{lem:w-limit}. We first consider the scaled initialization setting (implicitly setting the learning rate $\alpha=1$), where we find
\begin{align}
    \partial_t \Phi^M_t &= (I - \gamma P^\pi)\Phi_t^M \sum_{m=1}^M \mathbf{w}^m (\mathbf{w}^m)^\top + \sum_{m=1}^M R^{\pi} (\mathbf{w}^m)^\top \\
    \lim_{M \rightarrow \infty} \partial_t \Phi^M_t &= (I - \gamma P^\pi)\Phi_t^M \lim_{M \rightarrow \infty}\sum_{m=1}^M \mathbf{w}^m (\mathbf{w}^m)^\top  + \lim_{M \rightarrow \infty} R^\pi(\sum_{m=1}^M \mathbf{w}^m)^\top \\
    &\overset{D}{=} (I - \gamma P^\pi )\Phi_t^M I + R^\pi \epsilon^\top, \; \epsilon \sim \mathcal{N}(0, I).
\end{align}

We further observe that, for any finite interval, in the setting of zero reward we obtain \textit{uniform} convergence of the induced trajectory $\Phi_t^M$ to the trajectory of the limiting dynamics. We first observe that for a fixed initialization, we have that the induced dynamics are linear (in the zero-reward setting, affine otherwise) function of $\Phi^M_t$, and so 
\begin{align*}
    \partial_t \Phi^M_t &= (I - \gamma P^\pi) \Phi^M_t \sum_{m=1}^M w^m (w^m)^\top = \mathcal{L}^M \Phi^M_t  \\
    &\text{ where $\mathcal{L}^M(A) = (I - \gamma P^\pi) A \sum_{m=1}^M w^m (w^m)^\top$} \\
    \implies \Phi^M_t &= \exp(t \mathcal{L}^M)\Phi^M_0\; .
\intertext{Because the function $t \mapsto \exp (t A)$ is Lipschitz on a bounded interval for any $A$, this implies that for any finite interval $[0, T]$, the functions $t \mapsto \Phi^M_t$, as well as limiting solution, are $L$-Lipschitz for some $L$. Further, since the exponential is continuous, }
\lim_{M \rightarrow \infty} \Phi^M_t &= \lim_{M \rightarrow \infty} \exp(t \mathcal{L}^M) \Phi^M_0 = \exp(t \lim_{M \rightarrow \infty} \mathcal{L}^M) \Phi_0 \\
&= \exp (-t(I - \gamma P^\pi)) \Phi_0 = \Phi^\infty_t \;.
\intertext{Therefore, the functions $t \mapsto \Phi^M_t$ are $L$-Lipschitz and converge to the limit $\Phi^\infty_t$ on the interval $[0, T]$, which implies that they converge uniformly. }
\end{align*}

To evaluate the scaled learning rate setting, we observe that we now have
\begin{align}
    \partial_t \Phi^M_t &= \frac{1}{M} (I - \gamma P^\pi)\Phi_t^M \sum_{m=1}^M \mathbf{w}^m (\mathbf{w}^m)^\top + \sum_{m=1}^M R^{\pi} (\mathbf{w}^m)^\top \\
    \lim_{M \rightarrow \infty} \partial_t \Phi^M_t &= (I - \gamma P^\pi)\Phi_t^M \lim_{M \rightarrow \infty} \frac{1}{M}\sum_{m=1}^M \mathbf{w}^m (\mathbf{w}^m)^\top  + \lim_{M \rightarrow \infty} \frac{1}{M}R^\pi(\sum_{m=1}^M \mathbf{w}^m)^\top \\
    & = (I - \gamma P^\pi )\Phi_t^M I . \\
       \implies \lim_{M \rightarrow \infty} \Phi^M_t & = \exp(-t(I - \gamma P^\pi)) \Phi_0\, ,
\end{align}
almost surely. 
The principal difference between this and the scaled initialization setting is that here we divide the $R^\pi \mathbf{w}^\top$ term by $\frac{1}{M}$, whereas the scaled initialization is equivalent to scaling by $\frac{1}{\sqrt{M}}$. Therefore the scaled learning rate limit can be computed by the law of large numbers and converges in probability to its mean (zero), whereas under the scaled initialization it converges via the central limit theorem to a Gaussian distribution. 
\end{proof}

\propSubspaceConvergence*

\begin{proof}
    As described in the proof of Theorem~\ref{thm:infinite-heads}, we have $\Phi_t = \exp(-t(I - \gamma P^\pi))(\Phi_0 - \Phi_\infty) + \Phi_\infty$. Under Assumption~\ref{assume:value-function-conditions}, we may now apply an analogous argument as in Proposition~\ref{prop:many-value-functions} to the columns of $\Phi_t - \Phi_\infty$, and apply Lemma~\ref{lem:grassmann2} to obtain the desired result.
\end{proof}

\ThmDistribution*

\begin{proof}
We recall from Theorem~\ref{thm:infinite-heads} that the limiting dynamics follow the distribution
\begin{align}
    \lim_{t \rightarrow \infty} \lim_{M \rightarrow \infty} \Phi_t &\overset{D}{=} \lim_{t \rightarrow \infty} \exp(-t(I - \gamma P^\pi)) (\Phi_0 - (I -\gamma P^\pi)^{-1} Z_\Sigma) + (I - \gamma P^\pi)^{-1} Z_\Sigma\\
    & \overset{D}{=} (I - \gamma P^\pi) ^{-1} Z_\Sigma
\end{align}
for which we can straightforwardly apply known properties of Gaussian distributions: namely, that the distribution of a linear transformation $A$ of a Gaussian random variable with parameters $\mu, \Sigma$ is also Gaussian with mean $A\mu$ and covariance $A \Sigma A^\top$. Letting $A=(I - \gamma P^\pi)$ therefore gives the desired result.
\end{proof}

\propSubspaceConvergenceRC*

\begin{proof}
    As described in the proof of Theorem~\ref{thm:distribution}, we have $\Phi_t = \exp(-t(I - \gamma P^\pi))(\Phi_0 - (I - \gamma P^\pi)^{-1} Z_\Sigma ) + (I - \gamma P^\pi)^{-1} Z_\Sigma $. Under Assumption~\ref{assume:value-function-conditions}, we may now apply an analogous argument as in Proposition~\ref{prop:many-value-functions} to the columns of $\Phi_t - (I - \gamma P^\pi)^{-1} Z_\Sigma$, and apply Lemma~\ref{lem:grassmann2} to obtain the desired result.
\end{proof}

\section{Additional results from Table~\ref{tab:theory}}
\label{apx:table-results}
We begin this section by noting the following property of systems following linear dynamics.
\begin{lemma}\label{lem:dynamics-aux}
Let $\Phi_t \in \mathbb{R}^{\statespace \times M}$ follow the dynamics $\partial_t \Phi_t \overset{D}{=} A \Phi_t + B$, where $A$ is a linear operator for which all eigenvalues have negative real part, and $B$ is a vector. Then 
\begin{align}
    \lim_{t \rightarrow \infty} \Phi_t = -A^{-1}B \, .
\end{align}
Further, if $A$ is diagonalisable, with all eigenvalues of different magnitudes, 
\begin{align}
    \lim_{t \rightarrow \infty} d( \langle \Phi_t - \Phi_\infty \rangle, \langle U_{1:\repdim}(A) \rangle) = 0 \, ,
\end{align}
where $U_i(A)$ is the eigenvector of $A$ corresponding to the eigenvalue with $i$\textsuperscript{th} largest magnitude.
\end{lemma}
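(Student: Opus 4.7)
The plan is to treat the two assertions separately, first pinning down the unique equilibrium $\Phi_\infty$ and then analysing the decay of the transient using the spectral structure of $A$. Since every eigenvalue of $A$ has strictly negative real part, $0$ is not an eigenvalue, so $A$ is invertible and $\Phi_\infty := -A^{-1} B$ is a well-defined candidate equilibrium. A direct substitution confirms that $A \Phi_\infty + B = 0$, and setting $\tilde\Phi_t := \Phi_t - \Phi_\infty$ reduces the affine system to the homogeneous linear ODE $\partial_t \tilde\Phi_t = A \tilde\Phi_t$, with the unique solution $\tilde\Phi_t = \exp(tA) \tilde\Phi_0$ by the Picard--Lindel\"of theorem.

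For Part 1, I would show $\|\exp(tA)\| \to 0$ as $t \to \infty$ by passing to Jordan normal form: each Jordan block decays at rate $e^{\mathrm{Re}(\lambda_i) t}$ up to a polynomial prefactor from the nilpotent part, and the strict negativity of $\mathrm{Re}(\lambda_i)$ dominates any such polynomial growth. This immediately yields $\Phi_t \to \Phi_\infty = -A^{-1} B$.

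For Part 2, under the diagonalisability assumption, I would expand each column of $\tilde\Phi_0$ in the eigenbasis $U_1, \ldots, U_{|\mathcal{X}|}$ with corresponding eigenvalues $\mu_1, \ldots, \mu_{|\mathcal{X}|}$, writing the $j$-th column as $\sum_i a_{ij} U_i$. Because the eigenvectors are preserved by $\exp(tA)$ up to the scalar factor $e^{\mu_i t}$, we obtain
\[
    \tilde\Phi_t[\,:\,,j] \;=\; \sum_{i=1}^{|\mathcal{X}|} a_{ij}\, e^{\mu_i t}\, U_i \, ,
\]
which is exactly the functional form required to invoke Lemma~\ref{lem:grassmann2} with $\beta_i = -\mu_i > 0$ after reindexing so that the slowest-decaying modes appear first. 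Lemma~\ref{lem:grassmann2} then delivers $d(\langle \tilde\Phi_t \rangle, \langle U_{1:\repdim}\rangle) \to 0$ for almost every initialisation, the measure-zero excluded set being inherited from the full-rank genericity condition on the coefficient matrix in Lemma~\ref{lem:grassmann2}. The desired claim about $\Phi_t - \Phi_\infty$ then follows.

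The main subtlety, rather than a genuine obstacle, is reconciling the ordering conventions: Lemma~\ref{lem:grassmann2} arranges exponential rates so that the slowest-decaying components (smallest $|\mathrm{Re}(\mu_i)|$) correspond to the subspace that attracts the trajectory, and this must be matched to the stated ``largest magnitude'' ordering of the eigenvalues of $A$. If $A$ has genuinely complex eigenvalues, a mild complex-valued extension of Lemma~\ref{lem:grassmann2} is required to accommodate the oscillatory factors $e^{i\,\mathrm{Im}(\mu_i) t}$, but since bounded oscillations do not affect the asymptotic alignment of principal angles, the row-reduction argument used in the proof of Lemma~\ref{lem:grassmann2} carries over once the eigenvectors are paired with their conjugates and the argument is lifted back to a real invariant subspace basis.
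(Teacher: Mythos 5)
Your proof is correct and follows essentially the same route as the paper's: write the explicit affine trajectory, observe that $\exp(tA)\to 0$ because all eigenvalues of $A$ have strictly negative real part, and invoke Lemma~\ref{lem:grassmann2} on the eigenbasis expansion of $\Phi_t - \Phi_\infty$. You are in fact more careful than the paper's two-sentence proof, which silently assumes real eigenvalues and does not address the ordering mismatch you correctly flag: the attracting subspace is spanned by the slowest-decaying modes, i.e.\ the eigenvalues of $A$ of \emph{smallest} magnitude (largest real part), so the statement's ``largest magnitude'' ordering is a slip that your reindexing via $\beta_i = -\mu_i$ implicitly repairs.
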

\begin{proof}
    We observe that the dynamics $\partial_t \Phi_t = A \Phi_t$ induce the trajectory 
    \begin{equation}
       \Phi_t = \exp(tA)\Phi_0 + (I - \exp(tA))(-A^{-1}B)\; ,
    \end{equation}
    with limit $\Phi_\infty = - A^{-1}B$. When $A$ is diagonalizable, we can therefore straightforwardly apply the results of Lemma~\ref{lem:grassmann2} to get that the limiting subspace will be characterized by the top $k$ eigenvectors of $A$. In the settings we are interested in, $A = - ( I - \gamma P^\pi)$ for some $\pi$ and some $\gamma$, and so the principal eigenvectors of $A$ will be the principal eigenvectors of $P^\pi$.
\end{proof}

The following two theorems characterize the learning dynamics under the past policies and multiple timescale auxiliary tasks listed in Table~\ref{tab:theory}. With these characterizations, it becomes straightforward to deduce $\Phi_\infty$ and the limiting subspace error as a direct consequence of the previous lemma.

\begin{theorem} \label{thm:pastpolicies-aux}
Let $\pi_1, \dots, \pi_L$ be a fixed set of policies. Given fixed $M$ and $L$, we define the indexing function $i_m = \lceil\frac{L}{m} \rceil$ for $m \in [1, M]$. Let $\Phi^M_t$ follow the dynamics
\begin{align}
    \partial_t \Phi_t^M &= \sum_{m=1}^M -( (I - \gamma P^{\pi_{i_m}})\Phi_t^M \mathbf{w}^m_t + R^{\pi_{i_m}})(\mathbf{w}^m_t)^\top
\end{align}
Then $\Phi_t^M$ satisfies the following dynamics and trajectory in the limit as $M \rightarrow \infty$, where $\bar{\pi} = \sum_{i=1}^L \pi_i$ and $\epsilon_i \in \mathbb{R}^d$ is an isotropic Gaussian with variance $\frac{1}{L}$. Note that we cannot naively average the rewards without changing the variance of the induced distribution unless $R^{\pi_i} = R^{\pi_j}$ for all $i,j$.
\begin{align}
    \lim_{M \rightarrow \infty} \partial_t \Phi_t^M &\overset{D}{=} -(I - \gamma P^{\bar{\pi}})\Phi_t + \sum_{i=1}^L R^{\pi_{i}} \epsilon_{i}\\
    \lim_{M \rightarrow \infty} \Phi_t^M &\overset{D}{=} \exp (-t(I - \gamma P^{\bar{\pi}} )) (\Phi_0 - \Phi_\infty ) + (I - \gamma P^{\bar{\pi}} )^{-1} \bigg ( \sum_{i=1}^L R^{\pi_{i}} \epsilon_{i}^\top \bigg )
\end{align}

\end{theorem}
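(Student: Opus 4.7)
The plan is to mirror the proof of Theorem~\ref{thm:infinite-heads}, but stratified by policy index. First, partition the head indices $\{1,\dots,M\}$ into bins $S_i = \{m \in [M] : i_m = i\}$ for $i=1,\dots,L$. By construction of $i_m$, we have $|S_i|/M \to 1/L$ as $M \to \infty$. Using this partition, rewrite the stated dynamics by grouping the head-wise contributions according to which policy they evaluate:
\begin{align*}
\partial_t \Phi_t^M = \sum_{i=1}^L \Bigl[ -(I-\gamma P^{\pi_i})\Phi_t^M \Bigl( \sum_{m\in S_i} \mathbf{w}^m (\mathbf{w}^m)^\top \Bigr) + R^{\pi_i} \Bigl(\sum_{m\in S_i} \mathbf{w}^m\Bigr)^\top \Bigr] \, .
\end{align*}
This decomposition reduces the problem to analysing, for each fixed $i$, the limits of the two random objects $\sum_{m \in S_i} \mathbf{w}^m(\mathbf{w}^m)^\top$ and $\sum_{m \in S_i}\mathbf{w}^m$.

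Second, I would apply Lemma~\ref{lem:w-limit} separately within each bin $S_i$. Because the weights are sampled i.i.d.\ from $\mathcal{N}(0, \tfrac{1}{M} I)$ and $|S_i|/M \to 1/L$, a routine rescaling of the lemma shows that $\sum_{m\in S_i} \mathbf{w}^m(\mathbf{w}^m)^\top \overset{P}{\to} \tfrac{1}{L} I$ and $\sum_{m\in S_i}\mathbf{w}^m \overset{D}{\to} \epsilon_i$ with $\epsilon_i \sim \mathcal{N}(0, \tfrac{1}{L} I)$; independence of the $\epsilon_i$ follows because the bins $S_i$ are disjoint. Substituting these limits term by term gives
\begin{align*}
\lim_{M\to\infty} \partial_t \Phi_t^M \overset{D}{=} -\sum_{i=1}^L \tfrac{1}{L}(I-\gamma P^{\pi_i}) \Phi_t + \sum_{i=1}^L R^{\pi_i} \epsilon_i^\top = -(I - \gamma P^{\bar{\pi}})\Phi_t + \sum_{i=1}^L R^{\pi_i}\epsilon_i^\top \, ,
\end{align*}
identifying $P^{\bar{\pi}} = \tfrac{1}{L}\sum_i P^{\pi_i}$ (the notation $\bar{\pi}$ in the statement should be read as this average).

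Third, since the resulting limiting ODE is affine with linear operator $A = -(I - \gamma P^{\bar{\pi}})$, whose eigenvalues all have strictly negative real part (because $P^{\bar{\pi}}$ is stochastic and $\gamma<1$), Lemma~\ref{lem:dynamics-aux} immediately yields $\Phi_\infty = (I-\gamma P^{\bar{\pi}})^{-1}\sum_{i=1}^L R^{\pi_i}\epsilon_i^\top$ and the closed-form trajectory
\begin{align*}
\lim_{M\to\infty}\Phi_t^M \overset{D}{=} \exp(-t(I-\gamma P^{\bar{\pi}}))(\Phi_0 - \Phi_\infty) + \Phi_\infty \, .
\end{align*}

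The main technical obstacle will be justifying that convergence of the random coefficient matrices at each time $t$ lifts to convergence of the entire trajectory $\Phi_t^M$. The argument I would use is the one already sketched in the proof of Theorem~\ref{thm:infinite-heads}: for fixed initialisation $\Phi_0$, the map from the (random) matrix $\mathcal{L}^M := -(I-\gamma P^\pi)(\cdot)\sum_m\mathbf{w}^m(\mathbf{w}^m)^\top$ and drift vector to the solution $\Phi_t^M$ on any bounded interval $[0,T]$ is continuous (via the matrix exponential / variation-of-constants formula), and the pre-limit trajectories are uniformly Lipschitz in $t$, so convergence in probability (respectively, in distribution) of the coefficients passes through to the trajectory by the continuous mapping theorem. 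With this, the stated distributional identity for the full trajectory follows.
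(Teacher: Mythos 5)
Your proposal is correct and follows essentially the same route as the paper's proof: decompose the sum by policy index into bins $S_i = \{m : i_m = i\}$, apply the weight-limit lemma within each bin to get $\sum_{m\in S_i}\mathbf{w}^m(\mathbf{w}^m)^\top \to \tfrac{1}{L}I$ and $\sum_{m\in S_i}\mathbf{w}^m \to \epsilon_i \sim \mathcal{N}(0,\tfrac{1}{L}I)$, and then read off the trajectory from the affine limiting ODE via Lemma~\ref{lem:dynamics-aux}. Your added remarks on independence of the $\epsilon_i$ across disjoint bins and on lifting pointwise convergence of the coefficients to convergence of the trajectory are refinements the paper leaves implicit, not a different argument.
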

\begin{proof}
The result on the trajectories follows immediately from the result on the dynamics, so it suffices to prove convergence of the dynamics. We approach this problem by decomposing the dynamics of $\Phi^M_t$ as follows.
\begin{equation}
    \partial_t \Phi_t^M = \sum_{m=1}^M - (I - \gamma P^{\pi_{i_m}})\Phi_t^M \mathbf{w}^m_t (\mathbf{w}^m_t)^\top - \sum_{m=1}^M R^{\pi_{i_m}}(\mathbf{w}^m_t)^\top \, .
\end{equation}

We first consider the random variables in the term which includes the rewards $R^\pi$. For this, we can directly apply the results from the previous theorems to the random variables $\epsilon_j = \sum_{m : i_m = j} w^m$, whose limiting variance is easily computed to be 
\begin{equation}
    \lim_{M \rightarrow \infty} \text{Var}\left(\sum_{m : i_m = j} \mathbf{w}^m\right) = \lim_{M \rightarrow \infty} \sum_{\lfloor \frac{j}{n}M \rfloor}^{\lfloor \frac{j+1}{n}M \rfloor} \frac{1}{M}I = \frac{1}{L}I \, . 
\end{equation}
For the term which depends on $\Phi_t$, we see
\begin{align}
    \sum_{m=1}^M - (I - \gamma P^{\pi_{i_m}} )\Phi^M_t \mathbf{w}^m_t (\mathbf{w}^m_t)^\top &= \sum_{i=1}^L \sum_{m : i_m = i}^M - (I - \gamma P^{\pi_{i_m}} )\Phi^M_t \mathbf{w}^m_t (\mathbf{w}^m_t)^\top \\
    &= \sum_{i=1}^L  - (I - \gamma P^{\pi_{i}} )\Phi^M_t \sum_{m : i_m = i}^M \mathbf{w}^m_t (\mathbf{w}^m_t)^\top  \, . \\
    \intertext{Since $L$ is finite and fixed, $\sum_{m:i_m=i}^M\mathbf{w}^m_t(\mathbf{w}^m_t)^\top$ converges to $\frac{1}{L}I$}
    & \underset{M \rightarrow \infty}{\longrightarrow} \sum_{i=1}^L - (I - \gamma P^{\pi_{i}} )\Phi^M_t \frac{1}{L}I \\
    &= -( I - \gamma \frac{1}{L}\sum_{i=1}^L P^{\pi_i})\Phi_t^M \\
    &= -(I - \gamma P^{\bar{\pi}} )\Phi_t^M \, .
\end{align}
And so the limiting distribution becomes
\begin{equation}
    \lim_{M \rightarrow \infty } \partial_t \Phi_t^M = -(I - \gamma P^{\bar{\pi}} )\Phi_t^M - \bigg ( \sum_{i=1}^L R^{\pi_{i}} \epsilon_{i} \bigg ) \, .
\end{equation}
\end{proof}
\begin{corollary}
The above result can be readily adapted to the setting in which each head predicts a randomly selected (deterministic) policy in MDPs with finite state and action spaces. Let $L = |\actionspace| ^{|\statespace|}$, $\{\pi_1, \dots, \pi_L\}$ be an enumeration of $\actionspace ^\statespace$, and $i_m$ denote the index of the policy randomly assigned to head $m$; then the above result still holds, and $\bar{\pi}$ is the uniform policy.
\end{corollary}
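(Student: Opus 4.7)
The plan is to reduce the random-assignment setting to the structured-assignment setting of Theorem~\ref{thm:pastpolicies-aux} by arguing that the empirical distribution of policy assignments concentrates on the uniform distribution over $\{\pi_1,\dots,\pi_L\}$, then identifying the resulting $\bar\pi$ as the uniform policy.

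First, I would introduce $N_i^M = |\{m \in [M] : i_m = i\}|$. Since each $i_m$ is drawn independently and uniformly from $[L]$, the vector $(N_1^M,\dots,N_L^M)$ is multinomial$(M,1/L,\dots,1/L)$, and in particular $\mathbb{E}[N_i^M/M] = 1/L$ with $\mathrm{Var}(N_i^M/M) = (1/L)(1-1/L)/M \to 0$. By Chebyshev, $N_i^M/M \xrightarrow{P} 1/L$ for each $i$. Conditioning on the assignment $(i_m)_{m=1}^M$, the weights $(\mathbf{w}^m)_{m=1}^M$ remain i.i.d.\ $\mathcal{N}(0,(1/M)I)$, so the calculations from Theorem~\ref{thm:pastpolicies-aux} go through verbatim on the event $\{N_i^M/M \approx 1/L \text{ for all } i\}$ which has probability tending to $1$.

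Next, I would track the two terms of $\partial_t \Phi_t^M$ separately as in the proof of Theorem~\ref{thm:pastpolicies-aux}. For the dynamics term, grouping heads by assigned policy gives
\begin{align*}
    \sum_{m=1}^M -(I - \gamma P^{\pi_{i_m}})\Phi_t^M \mathbf{w}^m(\mathbf{w}^m)^\top
    = -\sum_{i=1}^L (I - \gamma P^{\pi_i})\Phi_t^M \Big(\sum_{m : i_m = i}\mathbf{w}^m(\mathbf{w}^m)^\top\Big).
\end{align*}
Conditional on $N_i^M$, the inner sum is a sum of $N_i^M$ independent rank-one Gaussian outer products each with mean $(1/M)I$, so it has mean $(N_i^M/M)I$ and entrywise variance $O(N_i^M/M^2)$; combined with $N_i^M/M \xrightarrow{P} 1/L$, this yields $\sum_{m:i_m=i}\mathbf{w}^m(\mathbf{w}^m)^\top \xrightarrow{P} (1/L)I$. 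For the reward term, $\varepsilon_i := \sum_{m:i_m=i}\mathbf{w}^m$ is, conditional on $N_i^M$, Gaussian with covariance $(N_i^M/M)I$, so its limit is $\mathcal{N}(0,(1/L)I)$ as in Theorem~\ref{thm:pastpolicies-aux}.

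Finally, I would identify $\bar\pi = (1/L)\sum_{i=1}^L \pi_i$ explicitly. When $\{\pi_1,\dots,\pi_L\}$ enumerates all deterministic policies on $\statespace \to \actionspace$, for any fixed $(x,a)$ the number of policies satisfying $\pi_i(x) = a$ is exactly $|\actionspace|^{|\statespace|-1} = L/|\actionspace|$, so $\bar\pi(a\mid x) = 1/|\actionspace|$, which is the uniform policy. Combining the three preceding points gives the limiting dynamics
\begin{align*}
    \lim_{M \to \infty} \partial_t \Phi_t^M \overset{D}{=} -(I - \gamma P^{\bar\pi})\Phi_t^M + \sum_{i=1}^L R^{\pi_i}\varepsilon_i^\top,
\end{align*}
with $\bar\pi$ uniform. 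The main obstacle is the bookkeeping around the two sources of randomness — the uniform assignment $(i_m)$ and the Gaussian weights $(\mathbf{w}^m)$ — which must be combined carefully (e.g.\ by conditioning on the assignment and invoking the tower property, or by a joint Slutsky-type argument) to upgrade the conditional convergences to unconditional convergence in probability/distribution.
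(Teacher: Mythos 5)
Your proposal is correct and follows the route the paper intends: the paper gives no explicit proof of this corollary, merely asserting it follows from Theorem~\ref{thm:pastpolicies-aux}, and your argument is a careful filling-in of exactly that reduction. The two details you supply that the paper glosses over — the multinomial concentration $N_i^M/M \xrightarrow{P} 1/L$ needed to handle the \emph{random} (rather than deterministic, evenly partitioned) assignment $i_m$, and the counting argument showing $\bar\pi(a\mid x) = |\actionspace|^{|\statespace|-1}/L = 1/|\actionspace|$ — are both correct and are precisely what is needed to make "readily adapted" rigorous.
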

\begin{theorem} \label{thm:multiple-timescales}
We consider the task of predicting the value functions of a fixed policy under multiple discount rates $\gamma_1, \dots, \gamma_L$. For fixed $M$, $L$, let $i_m$ denote the indexing function defined in Theorem \ref{thm:pastpolicies-aux} Let $\Phi_t$ follow the dynamics
\begin{align}
    \partial_t \Phi_t^M &= \sum_{m=1}^M -( (I - \gamma_{i_m} P^{\pi})\Phi_t^M \mathbf{w}^m_t + R^{\pi})(\mathbf{w}^m_t)^\top \, .
\end{align}
Then the limiting dynamics as $M\rightarrow \infty$ of $\Phi_t^M$ are as follows, where $\bar{\gamma} = \sum \frac{1}{L} \gamma_i$
\begin{align}
    \lim_{M \rightarrow \infty} \partial_t \Phi_t^M &\overset{D}{=} -(I - \bar{\gamma} P^{\pi})\Phi_t + R^{\pi} \epsilon^\top\\
    \intertext{and}
    \lim_{M \rightarrow \infty} \Phi_t^M &\overset{D}{=} \exp (-t(I - \gamma P^{\bar{\pi}} )) (\Phi_0 - \Phi_\infty ) + (I - \gamma P^{\pi} )^{-1} R^{\pi} \epsilon^\top)\; .
\end{align}
\end{theorem}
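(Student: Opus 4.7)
The plan is to mirror the proof of Theorem~\ref{thm:pastpolicies-aux} closely, exploiting the fact that all heads now share the same policy $\pi$ and the same reward vector $R^\pi$; only the discount factor varies across heads. As before, the trajectory formula follows from Lemma~\ref{lem:dynamics-aux} once the limiting ODE is established, so I focus on characterizing the dynamics in the limit.

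First I would split the right-hand side of the ODE into a ``drift'' piece involving $\Phi_t^M$ and a ``noise'' piece involving $R^\pi$:
\begin{equation*}
\partial_t \Phi_t^M = -\sum_{m=1}^M (I - \gamma_{i_m} P^\pi)\Phi_t^M \mathbf{w}_t^m(\mathbf{w}_t^m)^\top \;-\; R^\pi \Bigl(\sum_{m=1}^M \mathbf{w}_t^m\Bigr)^\top.
\end{equation*}
For the noise piece, Lemma~\ref{lem:w-limit} (with $\mathbf{w}_t^m$ fixed at initialization and $\mathbf{w}^m \sim \mathcal{N}(0, \tfrac{1}{M}I)$) gives $\sum_{m=1}^M \mathbf{w}^m \overset{D}{\to} \varepsilon \sim \mathcal{N}(0,I)$, so this term converges in distribution to $R^\pi \varepsilon^\top$. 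Crucially, because the reward $R^\pi$ does not depend on $m$, only a single Gaussian vector $\varepsilon$ appears, in contrast to the multi-policy case of Theorem~\ref{thm:pastpolicies-aux}.

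Next I would handle the drift piece by partitioning the sum according to which discount factor is assigned to each head:
\begin{equation*}
\sum_{m=1}^M (I - \gamma_{i_m} P^\pi)\Phi_t^M \mathbf{w}^m (\mathbf{w}^m)^\top
= \sum_{i=1}^L (I - \gamma_i P^\pi)\Phi_t^M \Bigl(\sum_{m : i_m = i} \mathbf{w}^m (\mathbf{w}^m)^\top\Bigr).
\end{equation*}
Since $L$ is fixed and each group contains approximately $M/L$ heads with $\mathrm{Var}(\mathbf{w}^m_j) = 1/M$, a straightforward adaptation of Lemma~\ref{lem:w-limit} gives $\sum_{m : i_m = i} \mathbf{w}^m(\mathbf{w}^m)^\top \overset{P}{\to} \tfrac{1}{L} I$ for each $i$. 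Substituting this and using linearity yields
\begin{equation*}
\sum_{i=1}^L (I - \gamma_i P^\pi)\Phi_t^M \cdot \tfrac{1}{L} I = \Bigl(I - \bigl(\tfrac{1}{L}\textstyle\sum_i \gamma_i\bigr) P^\pi\Bigr)\Phi_t^M = (I - \bar{\gamma} P^\pi)\Phi_t^M.
\end{equation*}
Combining the two pieces gives the claimed limiting dynamics $\lim_{M\to\infty}\partial_t \Phi_t^M \overset{D}{=} -(I - \bar{\gamma}P^\pi)\Phi_t^M + R^\pi \varepsilon^\top$. The trajectory statement then follows by applying Lemma~\ref{lem:dynamics-aux} with $A = -(I - \bar{\gamma}P^\pi)$ (whose eigenvalues all have negative real part, since $\bar{\gamma} < 1$ and $P^\pi$ is stochastic) and $B = R^\pi \varepsilon^\top$, giving $\Phi_\infty = (I - \bar{\gamma} P^\pi)^{-1} R^\pi \varepsilon^\top$.

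The only real subtlety is bookkeeping around the indexing function $i_m$: the statement as written assigns roughly $M/L$ heads to each discount factor, and one has to check that the fraction of heads mapped to each $i$ tends to $1/L$ as $M\to\infty$, so that the grouped sums $\sum_{m:i_m=i}\mathbf{w}^m(\mathbf{w}^m)^\top$ genuinely converge to $\tfrac{1}{L}I$. This is a routine calculation but is the only point where the argument departs from the previous theorems, and so it is the place where care is required when transcribing the formal proof.
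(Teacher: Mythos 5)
Your proposal is correct and follows essentially the same route as the paper: the paper likewise treats the $\sum_m R^\pi(\mathbf{w}^m_t)^\top$ term by direct appeal to the earlier ensemble result (your Lemma~\ref{lem:w-limit} argument), partitions the drift term by discount index so that each group $\sum_{m:i_m=i}\mathbf{w}^m(\mathbf{w}^m)^\top$ converges to $\tfrac{1}{L}I$, and recovers $-(I-\bar{\gamma}P^\pi)\Phi_t^M$ by linearity, with the trajectory following from the linear-dynamics lemma. Your explicit note that a single Gaussian $\varepsilon$ suffices because $R^\pi$ is shared across heads, and your closing remark on verifying that the fraction of heads per discount factor tends to $1/L$, are both consistent with (and slightly more careful than) the paper's own presentation.
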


\begin{proof}
We follow a similar derivation as for Theorem~\ref{thm:pastpolicies-aux} in deriving the component of the dynamics which depends on $\Phi^M_t$. The result of Theorem~\ref{thm:infinite-heads} immediately applies to the $\sum R^\pi (\mathbf{w}^m_t)^\top$ term:
\begin{align}
    \sum_{m=1}^M - (I - \gamma_{i_m} P^\pi )\Phi^M_t \mathbf{w}^m_t (\mathbf{w}^m_t)^\top &= \sum_{i=1}^L \sum_{m : i_m = i}^M - (I - \gamma_i P^\pi )\Phi^M_t \mathbf{w}^m_t (\mathbf{w}^m_t)^\top \\
    &= \sum_{i=1}^L  - (I - \gamma_i P^{\pi} )\Phi^M_t \sum_{m : i_m = i}^M \mathbf{w}^m_t (\mathbf{w}^m_t)^\top \, . \\
    \intertext{Since $L$ is finite and fixed, $\sum_{m:i_m=i}^M\mathbf{w}^m_t(\mathbf{w}^m_t)^\top$ converges to $\frac{1}{L}I$ as before:}
    & \underset{M \rightarrow \infty}{\longrightarrow} \sum_{i=1}^L - (I - \gamma_i P^{\pi} )\Phi^M_t \frac{1}{L}I \\
    &= -( I -\sum_{i=1}^L \frac{\gamma_i}{L}  P^{\pi})\Phi_t^M \\
    &= -(I - \bar{\gamma} P^{{\pi}} )\Phi_t^M \, .
\end{align}
\end{proof}

\section{Beyond diagonalisability assumptions}\label{sec:more-general-value-function-results}

In this section, we briefly describe extensions of the results of Chapter~\ref{chp:rl-dynamics} in scenarios where Assumption~\ref{assume:value-function-conditions} does not hold. There are two main cases we consider: (i) those in which $P^\pi$ is still diagonalisable, but does not have all eigenvalues with distinct magnitudes; and (ii) those in which $P^\pi$ is not diagonalisable.

In the former case, we do not have the different convergence rates of coefficients of different eigenvectors as in the proof of Proposition~\ref{prop:many-value-functions}. By similar arguments we can still deduce convergence of $V_t$ to the span of the eigenspaces with highest magnitude eigenvalues, but we can no longer deduce convergence to individual eigenspaces if there are several other eigenvalues with the same magnitude as the eigenvalue concerned. Note also that this includes the case where the matrix $P^\pi$ is complex- but not real-diagonalisable, since in such case non-real eigenvalues must come in conjugate pairs (which are necessarily of the same absolute value).

In the latter case, we no longer have an eigenbasis for $\mathbb{R}^{\statespace}$ based on $P^\pi$. However, we can consider the Jordan normal decomposition, and may still recover analogous results to those in Chapter~\ref{chp:rl-dynamics}, where convergence is now to the subspaces generated by \emph{Jordan blocks} with high absolute value eigenvalues. See \citet{parr2008analysis} for further commentary on Jordan normal decompositions in feature analysis.

\section{Extensions beyond one-step temporal difference learning}\label{sec:beyond-one-step}

Our analysis in Chapter~\ref{chp:rl-dynamics} has focused on the case of learning dynamics under one-step temporal difference learning. This choice is largely because one-step temporal difference learning is such a popular algorithm, not because the results do not hold more generally. In this section, we describe the elements of analogous results for $n$-step learning and TD($\lambda$) for interested readers. We focus on the case of value function dynamics, and believe extensions of the representation dynamics analysis in Chapter~\ref{chp:rl-dynamics} along these lines will be interesting directions for future work.

\subsection{Temporal difference learning with $n$-step returns}

In the case of $n$-step returns, the dynamics on the value function $(V_t)_{t \geq 0}$ are given by
\begin{align*}
    \partial_t V_t(x) = \mathbb{E}_\pi\left\lbrack \sum_{k=0}^{n-1} \gamma^k R_k + \gamma^n V_t(X_n)\middle| X_0 = x \right\rbrack - V_t(x) \, .
\end{align*}
In full vector notation, we have
\begin{align*}
    \partial_t V_t = -(I - \gamma^n (P^\pi)^n) V_t + \left\lbrack \sum_{k=0}^{n-1} (\gamma P^\pi)^k \right\rbrack R^\pi \, .
\end{align*}
The solution to this differential equation is
\begin{align*}
    V_t = \exp( -t (I - (\gamma P^\pi)^n ) )(V_0 - V^\pi) + V^\pi \, .
\end{align*}
This bears a close relationship with the result obtained for $1$-step temporal difference learning in Chapter~\ref{chp:rl-dynamics}. As expected, we obtain the same limit point. Further, under Assumption~\ref{assume:value-function-conditions}, $(P^\pi)^n$ has the same eigenvectors as $P^\pi$, and so results analogous to Propositions~\ref{prop:one-value-function} \& \ref{prop:many-value-functions} hold for $n$-step temporal difference learning too under these conditions.

\subsection{Temporal difference learning with $\lambda$-returns}

In the case of temporal difference learning with $\lambda$-returns (for $\lambda \in [0,1)$), the dynamics on the value function $(V_t)_{t \geq 0}$ are given by
\begin{align*}
    \partial_t V_t(x) = \mathbb{E}_\pi\left\lbrack \sum_{k=0}^{\infty} (\lambda\gamma)^k (P^\pi)^k ( R^\pi + \gamma P^\pi V_t(X_{k+1}) - V_t(X_k))\middle| X_0 = x \right\rbrack - V_t(x) \, .
\end{align*}
In full vector notation, we have
\begin{align*}
    \partial_t V_t = \sum_{k=0}^{\infty} (\lambda\gamma)^k (P^\pi)^k ( R^\pi + \gamma P^\pi V_t - V_t)
\end{align*}
The solution to this differential equation is
\begin{align*}
    V_t = \exp\left(t\left((1-\lambda) \sum_{k=1}^\infty \lambda^{k-1}\gamma^k (P^\pi)^k - I\right)\right) (V_0 - V^\pi) + V^\pi \, .
\end{align*}
As with $n$-step temporal difference learning, this bears a close relationship with the result obtained for $1$-step temporal difference learning in Chapter~\ref{chp:rl-dynamics}. As expected, we obtain the same limit point. Further, under Assumption~\ref{assume:value-function-conditions}, each $(P^\pi)^k$ has the same eigenvectors as $P^\pi$, and so results analogous to Propositions~\ref{prop:one-value-function} \& \ref{prop:many-value-functions} hold for $n$-step temporal difference learning too under these conditions.

\section{Bayes-optimality of RSBFs} \label{sec:bayes-opt}

We can develop the discussion of RSBFs beyond their properties as a matrix decomposition described in Section~\ref{sec:feature-selection} to observe that the RSBFs characterize the Bayes-optimal features for predicting an unknown value function given an isotropic Gaussian prior distribution on the reward, and further characterize a Bayesian posterior over value functions given by conditioning on the known dynamics of the MDP. We will denote by $V_K(\Psi)$ the top $K$ eigenvectors of the matrix $\Psi \Psi^\top$, i.e. the top $K$ left singular vectors of $\Psi$.
\begin{restatable}{corollary}{corrBayesOpt}
Under an isotropic Gaussian prior on reward function $r \in \mathbb{R}^{\statespace}$, the subspace $V_K(\Psi)$ corresponds to the optimal subspace with respect to the following regression problem.
\begin{equation}
    \min_{\Phi \in \mathbb{R}^{\statespace \times K}} \mathbb{E}_{r \sim \mathcal{N}(0, I)} \left\lbrack \| \Pi_{\Phi^\perp} (I - \gamma P^\pi)^{-1} r \|^2 \right\rbrack \, ,
\end{equation}
where $\Pi_{\Phi^\perp}$ denotes orthogonal projection onto the orthogonal complement of $\Phi$.
\end{restatable}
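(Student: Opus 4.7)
The plan is to recognize this as an instance of the classical principal component analysis problem applied to the covariance of the induced value function distribution. Under the prior $r \sim \mathcal{N}(0, I)$, the implied value function $V = \Psi r$ is itself Gaussian with mean zero and covariance $\Psi \Psi^\top$. The objective can therefore be rewritten as an expected projection error of a zero-mean Gaussian random vector, which is well known to be minimized by projecting onto the top eigenvectors of the covariance.

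The first step is to exchange the expectation and trace. Writing $\Pi_{\Phi^\perp}$ for the orthogonal projection, one has
\begin{align*}
\mathbb{E}_{r \sim \mathcal{N}(0,I)}\bigl[\|\Pi_{\Phi^\perp} \Psi r\|^2\bigr]
&= \mathbb{E}\bigl[\operatorname{tr}(\Pi_{\Phi^\perp} \Psi r r^\top \Psi^\top \Pi_{\Phi^\perp})\bigr] \\
&= \operatorname{tr}\bigl(\Pi_{\Phi^\perp} \Psi \Psi^\top \Pi_{\Phi^\perp}\bigr) \\
&= \operatorname{tr}\bigl(\Psi \Psi^\top \Pi_{\Phi^\perp}\bigr),
\end{align*}
using linearity of the trace and expectation, $\mathbb{E}[rr^\top]=I$, idempotence of $\Pi_{\Phi^\perp}$, and the cyclic property of the trace.

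The second step is to convert the minimization to a maximization by writing $\Pi_{\Phi^\perp} = I - \Pi_{\Phi}$, yielding
\begin{align*}
\operatorname{tr}\bigl(\Psi \Psi^\top \Pi_{\Phi^\perp}\bigr)
= \operatorname{tr}(\Psi \Psi^\top) - \operatorname{tr}\bigl(\Psi \Psi^\top \Pi_{\Phi}\bigr),
\end{align*}
where the first term is independent of $\Phi$. Minimizing the left-hand side over $K$-dimensional subspaces is therefore equivalent to maximizing $\operatorname{tr}(\Psi \Psi^\top \Pi_{\Phi})$ over the same set.

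The third step is to invoke the classical variational characterization of eigenvectors (the Ky Fan / Courant--Fischer theorem): for any symmetric positive semidefinite matrix $M$, the maximum of $\operatorname{tr}(\Pi_{\Phi} M)$ over rank-$K$ orthogonal projections $\Pi_\Phi$ is attained when $\Phi$ is spanned by the top $K$ eigenvectors of $M$, and equals the sum of the top $K$ eigenvalues. Applied to $M = \Psi \Psi^\top$, whose top $K$ eigenvectors are by definition $V_K(\Psi)$ (equivalently, the top $K$ left singular vectors of $\Psi$), this immediately identifies the optimal subspace as $V_K(\Psi)$. There is no substantive obstacle here; the argument is essentially a bookkeeping exercise combining Gaussian identities with the standard PCA variational principle, and so the main thing to watch is simply stating the Ky Fan step cleanly for the non-square factor $\Psi$.
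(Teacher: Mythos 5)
Your proposal is correct and follows essentially the same route as the paper's proof: both reduce the Gaussian expectation to $\operatorname{tr}(\Psi^\top \Pi_{\Phi^\perp}\Psi)$ using $\mathbb{E}[rr^\top]=I$, flip the minimization over the complement into a maximization of $\operatorname{tr}(\Psi^\top \Pi_{\Phi}\Psi)$, and invoke the variational characterization of the top $K$ left singular vectors of $\Psi$. The only cosmetic difference is that you evaluate the expectation via the cyclic trace identity while the paper expands the quadratic form in an eigenbasis of $\Psi^\top\Pi_S\Psi$; the substance is identical.
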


\begin{proof}
Let $S$ denote some subspace $S \subset V$.
    \begin{align}
        \mathbb{E}[\|\Pi_s \Psi r\|^2] &= \mathbb{E}[r^\top \Psi^\top \Pi_s^\top \Pi_s \Psi r]
        \intertext{We note that for any real symmetric matrix $A$ we can rewrite $A = \sum \alpha_i v_i v_i^\top$.}
        \mathbb{E}[r^\top \Psi^\top \Pi_s^\top \Pi_S \Psi r] &= \mathbb{E}[ r^\top (\sum \alpha_i v_i v_i^\top) r] = \mathbb{E}[\sum \alpha_i (r^\top v_i) (v_i^\top r)] \\
        &= \mathbb{E}[\sum \alpha_i v_i^\top r r^\top v_i] = \sum \alpha_i v_i^\top \mathbb{E}[r r^\top] v_i \\
        &= \sum \alpha_i v_i^\top v_i = \text{Tr}(\Psi^\top \Pi_S^\top \Pi_S \Psi) =\text{Tr}(\Psi^\top \Pi_S \Psi)
        \intertext{Finally, we can re-express the minimization problem as follows}
        \text{argmin}_{S: \text{Dim}(S) = k} \text{Tr}(\Psi^\top(\Pi_{S^\perp})\Psi) &= \text{argmax}_{S:\text{Dim}(S) = k} \text{Tr}(\Psi^\top \Pi_S \Psi)
        \intertext{Now, because the subspace spanned by the top $k$ left-singular vectors $\{u_1, \dots, u_k\}$ of $\Psi$ is known to be the maximizer of the above equation, we finally obtain}
        &= \langle u_1, \dots, u_k \rangle =  V_K(\Psi) \; .
    \end{align}
\end{proof}

\begin{corollary}
The limiting distribution of $\Phi^M_t$ under the random cumulant auxiliary task described in Theorem~\ref{thm:distribution} is equivalent to the Bayesian posterior over value functions obtained by conditioning on the dynamics $P^\pi$, and given a prior distribution on the reward function equal to $\mathcal{N}(0, \Sigma)$.
\end{corollary}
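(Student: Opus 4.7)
The plan is to match the two distributions directly, treating each column of the limiting feature matrix as one sample from the putative posterior. First, I would identify the Bayesian posterior distribution explicitly: given a prior $r \sim \mathcal{N}(0, \Sigma)$ and the known dynamics $P^\pi$, the value function associated to any sampled reward is the unique solution of the Bellman equation $V = r + \gamma P^\pi V$, namely $V = \Psi r$ with $\Psi = (I - \gamma P^\pi)^{-1}$. Since $r$ is Gaussian and the map $r \mapsto \Psi r$ is linear, the induced distribution on $V$ is Gaussian with mean zero and covariance $\Psi \Sigma \Psi^\top$. Because the statement provides no reward observations, the word ``posterior'' really means this pushforward of the prior through a known deterministic linear map; the phrase ``conditioning on $P^\pi$'' only serves to emphasize that the dynamics are fixed rather than part of the random quantity.

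Next, I would invoke Theorem~\ref{thm:distribution} to extract the distributional form of $\Phi_\infty = \lim_{t \to \infty}\lim_{M \to \infty} \Phi^M_t$. That theorem already established that $\Phi_\infty \overset{D}{=} \Psi Z_\Sigma$, where the columns of $Z_\Sigma$ are mutually independent and each is distributed as $\mathcal{N}(0, \Sigma)$. Applying $\Psi$ columnwise therefore yields a matrix whose columns are independent Gaussians with mean zero and covariance $\Psi \Sigma \Psi^\top$, matching the distribution computed in the first step. The correspondence is immediate once we read each random cumulant $r^m$ as an independent prior draw of a reward function and the linear solve $\Psi r^m$ as the deterministic map from reward to value function.

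I expect the main conceptual subtlety, rather than a technical obstacle, to be clarifying what ``posterior'' means when no data has been observed, and being careful that the two constructions produce the same \emph{joint} distribution over columns rather than merely the same marginals. The latter follows because the $r^m$ are sampled independently across $m$ and because the limit in $M$ in the proof of Theorem~\ref{thm:distribution} preserves this independence (the Gaussian limit has a diagonal cross-column covariance). No additional calculation beyond the identification of the covariance $\Psi \Sigma \Psi^\top$ and the observation of columnwise independence is required, so the corollary follows essentially by definition from Theorem~\ref{thm:distribution}.
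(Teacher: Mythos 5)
Your proposal is correct and takes essentially the same route as the paper: both arguments identify the "posterior" as the pushforward of the reward prior $\mathcal{N}(0,\Sigma)$ through the linear map $\Psi = (I-\gamma P^\pi)^{-1}$ and match it to the limiting distribution $\Psi Z_\Sigma$ from Theorem~\ref{thm:distribution}. The only presentational difference is that the paper formalizes "conditioning on $P^\pi$" by introducing an explicit joint prior over $(r, P^\pi)$ and applying the change-of-variables formula to densities, whereas you compute the covariance $\Psi\Sigma\Psi^\top$ directly; the substance is identical.
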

\begin{proof}
Each column of $Z_\Sigma$ is sampled from an isotropic Gaussian distribution, and therefore each feature $\phi_i \overset{D}{=} (I - \gamma P^\pi) \epsilon_i$. It therefore suffices to show that under a suitable prior distribution, the distribution of $\phi_i$ is equal to a Bayesian posterior. 
For this, it suffices to show that such a posterior can be obtained by conditioning on the transition dynamics $P^\pi$, and looking at the induced pushforward measure on the reward distribution. Noting that $(I - \gamma P^\pi)$ is invertible, we then obtain the following prior over $V^\pi$, assuming an isotropic Gaussian prior on $p_r(r)$ and any arbitrary distribution over potential transition dynamics $p_\pi(P^\pi)$ which covers $\mathbb{R}^{|S| \times d}$.
\begin{align}
    P(V^\pi) &= \int_{(r, P^\pi)} \mathbbm{1}[(I - \gamma P^\pi)^{-1}r = V^\pi] dp_r(r)dp_{\pi}( P^\pi) 
    \intertext{We observe that the random variable $V^\pi$ has conditional distribution $P(V^\pi|P^\pi) = P((I - \gamma P^\pi)^{-1}r)$, whose density is proportional to $p_r( (I - \gamma P^\pi)V)$ by the change of variables formula.}
    P(V^\pi | P^\pi) &= c p_r(r =  (I - \gamma P^\pi)V^\pi) \\
    \intertext{ Because our prior over $r$ is equal to the initialization distribution of $\epsilon_i$, we obtain}
    &= c p_{\text{init}}(\epsilon_i = (I - \gamma P^\pi) V^\pi) \\
    \intertext{ which is precisely the limiting distribution $p_\infty$ of $\phi_i$ (again applying the change of variables formula).}
    &= p_\infty (\phi_i = (I - \gamma P^\pi)^{-1} \epsilon_i = V^\pi)
\end{align}
So we see that the limiting distribution of $\phi_i$ is equal to the prior over value functions conditioned on the transition dynamics.
\end{proof}

\section{Further discussion of features and operator decompositions}\label{sec:feature-selection}

Proto-value functions (PVFs), were first defined by \citet{mahadevan2007proto} as the eigenvectors of the \textit{incidence matrix} induced by the environment transition matrix $P$. In the ensuing years, the term PVF has been used to refer to a number of related but not necessarily equivalent concepts. To clarify our use of the term and the relationship of our decompositions of the resolvent and transition matrices of an MDP, we provide a brief discussion here; a summary is provided in Table~\ref{table:features}.

We will use $A$ to refer to the adjacency matrix of the unweighted, undirected graph induced by the matrix $P$ (i.e. $A[i,j]$ is 1 if there exists some action with nonzero probability of taking the agent from state $i$ to state $j$ or from state $j$ to state $i$, and 0 otherwise). $L_G$ will refer to the graph Laplacian based on this matrix $A$. 

We can additionally consider the Laplacian of the weighted, directed graph defined by $P^\pi$; we will refer to this matrix as $L_{P^\pi}$, in reference to its dependence on the probability of transitioning. $T$ denotes the matrix defined by a collection of sampled transitions indexed by $t$, with entries $T_{it} = -1$ if the transition $t$ leaves $i$ and $+1$ if it enters state $i$.

Our first observation is that eigendecomposition and SVD are equivalent for symmetric matrices because any real symmetric matrix has an orthogonal eigenbasis; this means that performing either decomposition yields the same eigenvectors and easily related eigenvalues. Our second observation is that when $P^\pi$ is \textit{not} symmetric, its singular value decomposition and eigendecomposition may diverge; further, the relationship between the SVD of the resolvent matrix $\Psi = (I - \gamma P^\pi)^{-1}$ and of $P^\pi$ is no longer straightforward, despite the eigenspaces of the two matrices being analogous. This means that analysis of the singular value decomposition of $P^\pi$ does not immediately imply any results about the resolvent matrix. 

\begin{table}[!ht]
    \centering
    \begin{tabular}{c|c|c}
        Matrix & SVD & Eigendecomposition (ED)  \\
        \hline
        $L_G$ & PVFs \citep{mahadevan2007proto} & Equivalent to SVD  \\
        $T$ & sometimes $\equiv$ ED($L_G$) \citep{machado2017laplacian}  & not discussed \\
        $L_{P^\pi}$ & $\neq$ ED($L_{P^\pi}$)  & \citet{stachenfeld2014design} \\
        $(I - \gamma P^\pi)^{-1}$ &  RSBFs & $ \equiv L_{P^\pi}$ \\
        $P^\pi$ & \citet{behzadian2018feature} & $\equiv L_{P^\pi}$ \\
    \end{tabular}
    \caption{Summary of decompositions of various matrices associated with MDP transition operators, and associated features.}
    \vspace{0.5cm}
    \label{table:features}
\end{table}

Finally, we note that applying a uniform random walk policy may not be sufficient to guarantee that $P^\pi$ will be symmetric, and that in general it will not be possible to obtain a policy which will symmetrize the transition matrix. For example: when $G$ is a connected, non-regular graph (as is the case in many environments such as chains), there must be a node $v$ of degree $d$ adjacent to a node $v'$ of degree $d' \neq d$. A random walk policy will assign $p(v, v') = \frac{1}{d}$, while $p(v', v)$ will receive probability $\frac{1}{d'}$; thus, $P^\pi$ will not be symmetric. Fortunately, this is not a barrier to spectral analysis; the eigenvectors and eigenvalues of $P^\pi$ will still be real, as their transition matrix will be \textit{similar} to a symmetric matrix. We defer to \citet{machado2017laplacian} for a more detailed discussion of this relationship.

\section{Learning dynamics for ensemble prediction}
\label{sec:ensemble-dynamics}

We provide some visualizations of the induced behaviour on features as a result of training an ensemble with multiple heads and zero reward, replicating the analysis of Section \ref{sec:reps}, to highlight how the eigendecomposition of $P^\pi$ affects the learned representations. We run our evaluations on the Four-Rooms Gridworld by initializing $\Phi \in \mathbb{R}^{105 \times 10}$ (i.e. $|\statespace| = 105$ and the number of features $d=10$) and simulating the ODE defined in Equation~\ref{eq:ensemble-phi-flow} for time $t=100$ with transition matrix $P^\pi$ defined by the uniform random policy on this Gridworld. In some cases, the features converged to zero quickly and so we show a final $t < 100$ to highlight the behaviour of the representation before it reaches zero.

We consider three variables which we permit to vary: the initialization scheme of features, in one case sampled from an isotropic Gaussian \texttt{rand} or from a randomly initialized 2-layer MLP \texttt{nn}); whether the weight matrix is fixed at initialization \texttt{fix} or permitted to follow the flow defined by Equation~\ref{eq:w-ode} \texttt{train}; and finally the number of `heads', \texttt{M}=1, 20, and 200. 

In Figure \ref{fig:ensemble_predictions}, we plot the output of an arbitrary head $\mathbf{w}^m$ of the ensemble. In Figure \ref{fig:ensemble_feature0} we visualize the value of a single feature (i.e. a single column of $\Phi$).

We observe, as predicted, that for fixed heads in the overparameterized regime, the features (and the value functions they induce) converge to smooth eigenfunctions. We do not see meaningful convergence of the features trained in conjunction with a single weight vector. In contrast, the value functions and features trained in conjunction with ensembles with more heads than the feature dimension consistently resemble the eigenfunctions of $P^\pi$. When $(\mathbf{w}^m)$ are held fixed, we see convergence to smooth eigenfunctions as predicted by our theory; when $(\mathbf{w}^m)$ are permitted to vary according to the flow in Equation~\ref{eq:ensemble-w-flow}, we see convergence to the most eigenfunction corresponding to the most negative eigenfunction of $P^\pi$.

\begin{figure}[!ht]
    \centering
    \includegraphics[width=0.85\linewidth]{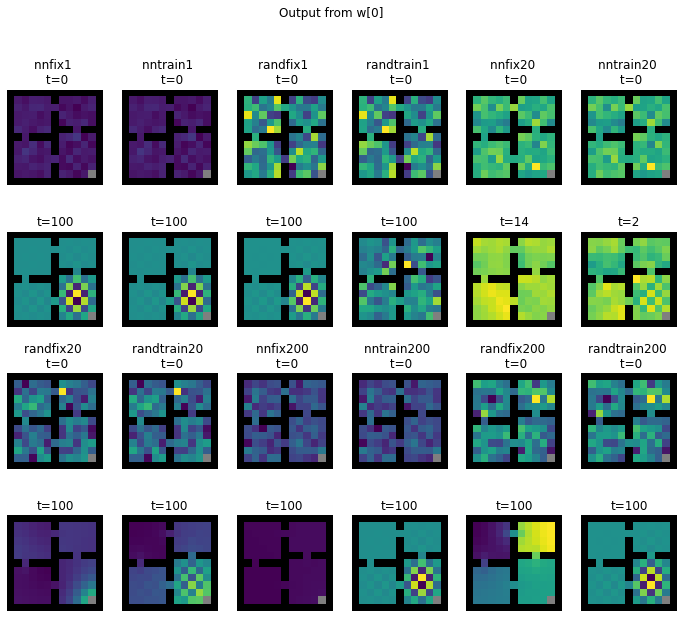}
    \caption{Value functions learned by the ensemble head at index 0 for different training regimes. Plot titles of form (feature initialization scheme, train/fix weight matrix, number of heads in ensemble). Observe that the representation learned with fixed weights tends to converge to smoother eigenfunctions than those learned with weights that are also allowed to train.  f}
    \label{fig:ensemble_predictions}
\end{figure}
\begin{figure}[!ht]
    \centering
    \includegraphics[width=0.85\linewidth]{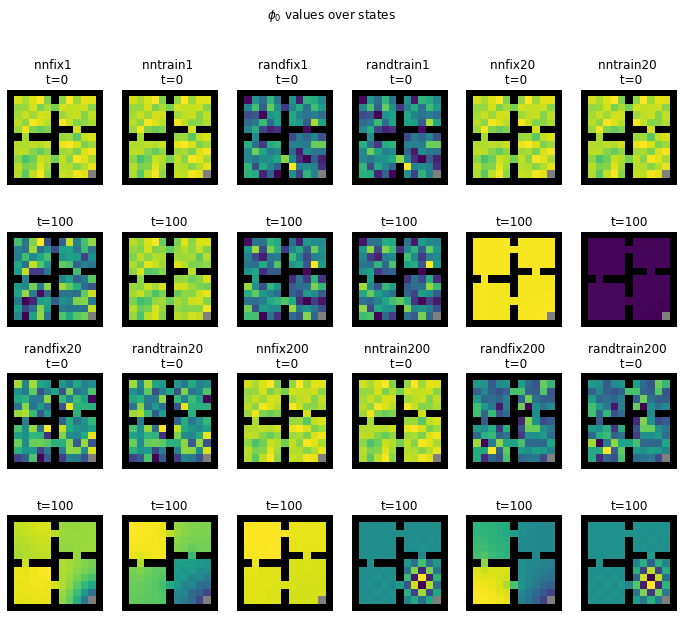}
    \caption{Values of ensemble feature at index 0 for different training regimes. Plot titles of form (feature initialization scheme, train/fix weight matrix, number of heads in ensemble). Observe that the representation learned with fixed weights tends to converge to smoother eigenfunctions than those learned with weights that are also allowed to train.}
    \label{fig:ensemble_feature0}
\end{figure}

\section{Experimental details}
\label{sec:experiment-details}

\subsection{Experimental details for Figure~\ref{fig:feature-viz}}

In our evaluations of the evolution of single feature vectors, we compute the continuous-time feature evolution defined in Equation~\eqref{eq:phi-ode}, using $P^\pi$ defined by a random walk on a simple Four-Rooms Gridworld with no reward. We use a randomly initialized representation $\Phi \in \mathbb{R}^{ |\statespace| \times 10}$, and use a single column of this matrix in our feature visualization (we observed similar behaviour in each feature). To compute trajectories, we use the SciPy ODE solver \texttt{solve\_ivp} \citep{2020SciPy}.

\subsection{Experimental details for Section~\ref{sec:feature-generalization}}

Here, we provide details of the environment used in producing Figure~\ref{sec:feature-generalization}. The environment is a 30-state chain, with two actions, \texttt{left} and \texttt{right}, which move the agent one state to the left or right, respectively. When the agent cannot move further left or right (due to being at an end state of the chain), the result of the corresponding action keeps the agent in the same state. There is additionally environment stochasticity of $0.01$, meaning that with this probability, a uniformly random action is executed instead. This stochasticity ensures that $P^\pi$ satisfies the conditions of Assumption~\ref{assume:value-function-conditions}. Taking the action \text{left} in the left-most state incurs a reward of $+2$, and taking the action \texttt{right} in the right-most state incurs a reward of $+1$; all other rewards are zero.

\subsection{Experimental details for Section~\ref{sec:deep-rl-aux}}

We modify a base Double DQN agent \citep{van2016deep} and evaluate on the ALE without sticky actions \citep{bellemare2013arcade}. Our agents are implemented in Jax \citep{jax2018github}, and are based on the DQN Zoo \citep{dqnzoo2020github}. Unless otherwise mentioned, all hyperparameters are as for the default Double DQN agent, with the exception of the epsilon parameter in the evaluation policy, which is set to 0.001 in all agents, and the optimizer, which for agents using auxiliary tasks CV, REM and Ensemble is Adam with epsilon $0.1/32^2$, and a lightly tuned learning rate; see below for further details.

Experimental results shown in bar plots, such as Figures~\ref{fig:naux}  and~\ref{fig:rc_sweep}, report a ``relative score'' which is the per-game score normalized by the maximum average score achieved by any agent or configuration. The same, per-game, normalization values are used for all such figures.

\textbf{Auxiliary task details.} In this section, we describe the implementations of all auxiliary tasks considered in the main text.
\begin{itemize}
    \item \emph{QR-DQN.} The implementation and hyperparameters match QR-DQN-1 in \citet{dabney2018distributional}.
    \item \emph{DDQN+RC.} We use a many-head DQN network which is identical to the standard neural network used for DQN, except that the output dimension is $(M+1) \times |\actionspace|$ instead of $|\actionspace|$, where $M$ is the number of auxiliary heads. Random cumulants are generated using a separate neural network with the same architecture as a standard DQN, but with output dimension equal to the number of auxiliary heads. The width of the Huber loss for each auxiliary head is equal to the number of auxiliary tasks. Let $\phi(x) \in \mathbb{R}^M$ be the output of the cumulant network given input observation $x$, with $M$ the number of auxiliary heads. Then the cumulant for auxiliary head $m$, at time step $t$, is given by $c_t = s \times (\phi(x_{t+1}) - \phi(x_t))$, where $s \in \mathbb{R}$ is a scaling factor. We performed a small hyperparameter sweep over scaling factors in $\{1, 10, 100, 500\}$, finding $s = 100$ to provide the best performance and use this value for all reported experiments. Note that this auxiliary task and the details are nearly identical to the \emph{CumulantValues} auxiliary task of \citet{dabney2020value}, except that we do not pass the values through a tanh non-linearity as this did not appear to have any impact in practice. We performed a hyperparameter sweep over learning rates and gradient norm clipping for this agent, considering learning rates $\{0.00025, 0.0001, 0.00005\}$ and gradient clipping in $\{10, 40\}$. We found that a learning rate of $0.00005$ and gradient norm clipping of $40$ to work best and use these values for all experiments.
    \item \emph{DDQN+REM.} We use a many-head variant of Double DQN, with heads trained according to the REM loss of \citet{agarwal2019striving}. For the agent's policy, an argmax over a uniform average of the heads is used. We swept over learning rates of $0.0001$ and $0.00005$, generally finding $0.00005$ to perform best. 
    \item \emph{DDQN+Ensemble.} As for the REM auxiliary task, we use a many-head variant of Double DQN. Each head is trained using its own double DQN loss, and the resulting losses are averaged. For the agent's policy, an argmax over a uniform average of the heads is used. We swept over learning rates of $0.0001$ and $0.00005$, generally finding $0.00005$ to perform best.
\end{itemize}

\textbf{Modified dense-reward games.} We modified four Atari games (Pong, MsPacman, Seaquest, and Q*bert) to obtain sparse, harder versions of these games to test the performance of random cumulants and other auxiliary tasks. The details of these games are given below. In each case a low-valued, commonly encountered reward is `censored', which means that during training the agent observes a reward of $0$ instead of the targeted reward. When evaluated, and thus for all empirical results reported, the standard uncensored rewards are reported.
\begin{itemize}
    \item \emph{Sparse Pong.} All negative rewards are censored (i.e. set to 0 before being fed to the agent), so the agent receives a reward of +1 for scoring against the opponent, but no reward when it concedes a point to the opponent. As $0$, $1$, and $-1$ are the only rewards in Pong, this modification makes Pong significantly harder. The agent can no longer learn to `avoid losing points`, but can only improve by learning to score points directly.
    \item \emph{Sparse MsPacman.} All rewards less than or equal to 10 are censored. This corresponds to rewards for the numerous small pellets that MsPacman eats, but not the larger pellets or ghosts. Each level ends when all of the small pellets are consumed, thus, by hiding these from the agent we may have significantly changed the primary incentive for the agent to advance the game.
    \item \emph{Sparse Seaquest.} All rewards less than or equal to 20 are censored. This corresponds to the rewards for shooting the sharks underwater, but not the rewards for picking up divers or surfacing. Additionally, even the rewards for sharks increase beyond this level, and thus become visible, once the agent has surfaced and collected enough divers.
    \item \emph{Sparse Q*bert.} All rewards less than or equal to 25 are censored. These are the rewards for flipping the colour of a tile, which is the primary source of reward and the mechanism for advancing to the next level of the game. Once all tiles are flipped, the agent will go to the next level. However, the agent can still observe rewards for going to the next level and for dispatching the enemies.
\end{itemize}

As described in the main text, we found that the sparse versions of MsPacman, Seaquest, and Q*bert were too difficult for any agent we tested to achieve a reasonable level of performance. In Figure~\ref{fig:sparse-learning-curves}, we display the performance of several auxiliary tasks on these games, noting that the performance achieved is extremely low in comparison to the agents trained on the standard versions of these games (see Section~\ref{sec:deep-rl-aux}).

\begin{figure}[!htb]
    \centering
    \null
    \hfill
    \includegraphics[keepaspectratio,width=\textwidth]{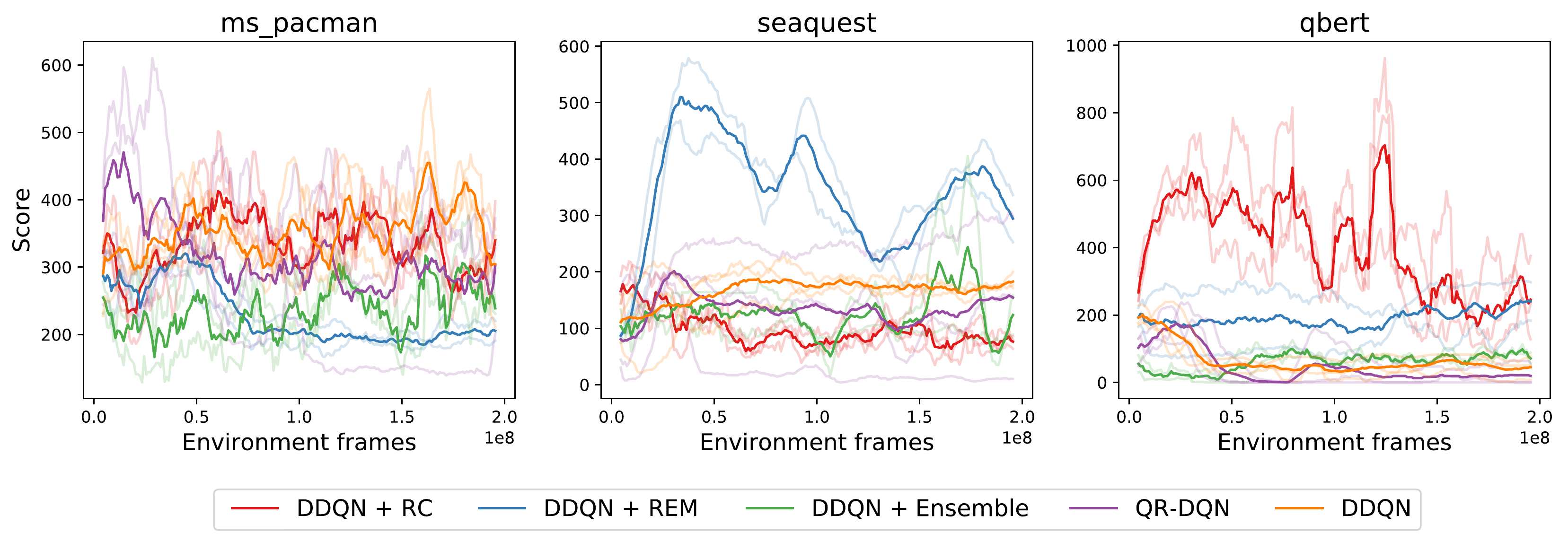}
    \caption{Learning curves on sparsified MsPacman (left), sparsified Seaquest (centre), and sparisifed Q*bert (right).}
    \label{fig:sparse-learning-curves}
\end{figure}

\textbf{Hyperparameter sweeps.} In Figure~\ref{fig:rc_sweep} we vary the weight of the auxiliary loss for the random cumulants agents, with the aim of understanding how this hyperparameters affect each method's performance. Next, in Figures~\ref{fig:ens_sweep} and~\ref{fig:rem_sweep} we present the results of a hyperparameter sweep for Ensemble and REM respectively. For these two, since there is no separate auxiliary loss as in RC, we vary number of heads and the learning rate. Results presented in the main text use the best settings for each algorithm found from these sweeps.

\begin{figure}[!htb]
    \centering
    \includegraphics[keepaspectratio,width=.75\textwidth]{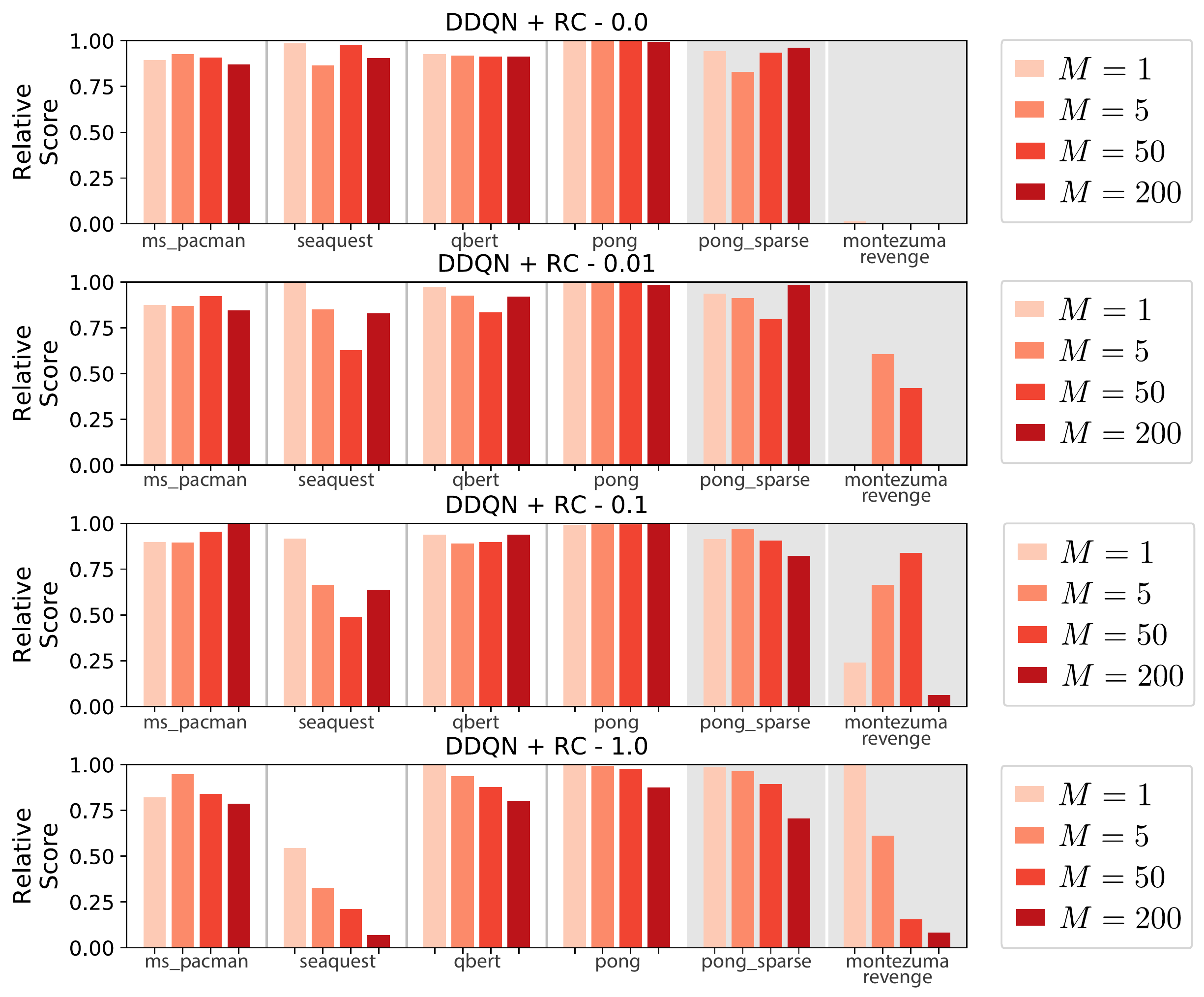}
    \caption{Results of hyper-parameter sweep for Random Cumulant (RC) method, where each row is for a different value of multiplicative scale applied to the auxiliary losses and each bar corresponds to the number of auxiliary heads ($M$). Note that the first row of results corresponds to initializing a network with the auxiliary heads, but setting the weight to zero, effectively disabling the auxiliary task.}
    \label{fig:rc_sweep}
\end{figure}

\begin{figure}[!htb]
    \centering
    \includegraphics[keepaspectratio,width=.75\textwidth]{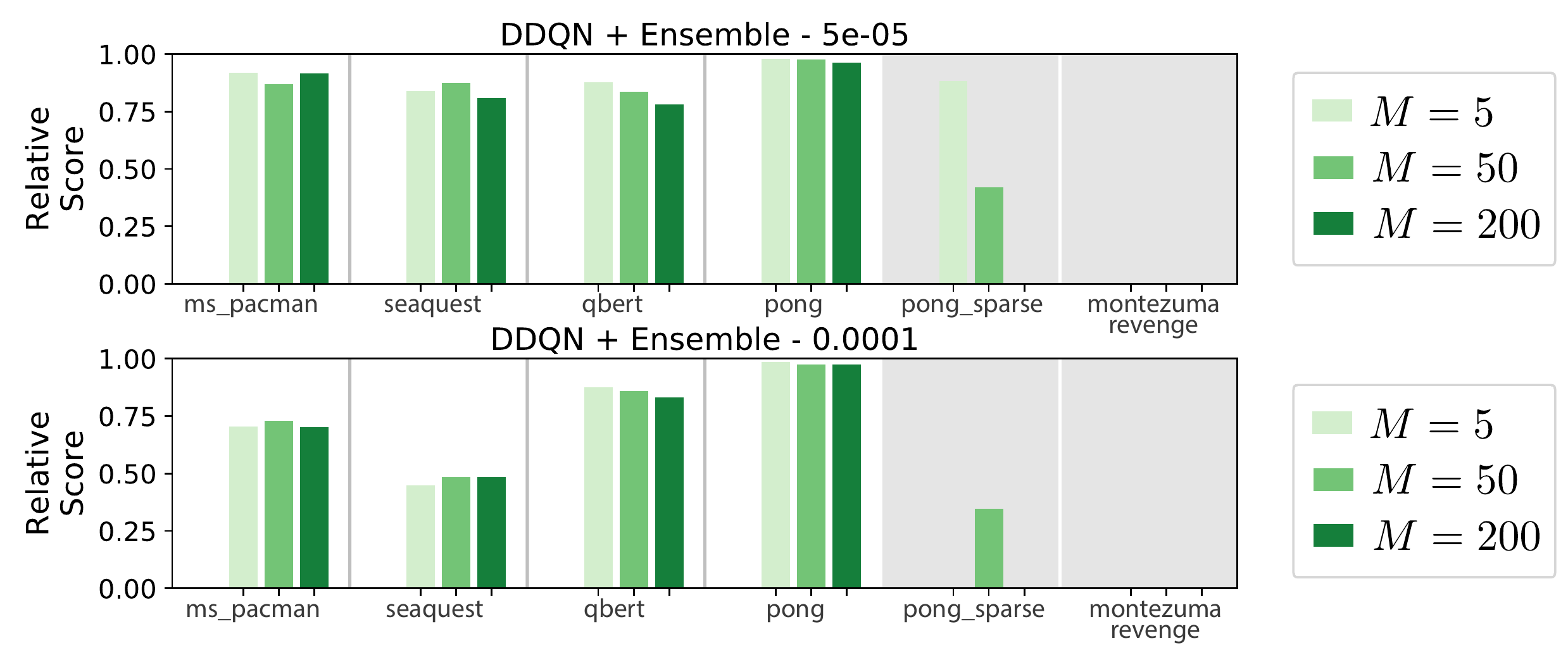}
    \caption{Results of hyper-parameter sweep for the Ensemble method, where each row is for a different learning rate and each bar corresponds to the number of auxiliary heads ($M$).}
    \label{fig:ens_sweep}
\end{figure}

\begin{figure}[!htb]
    \centering
    \includegraphics[keepaspectratio,width=.75\textwidth]{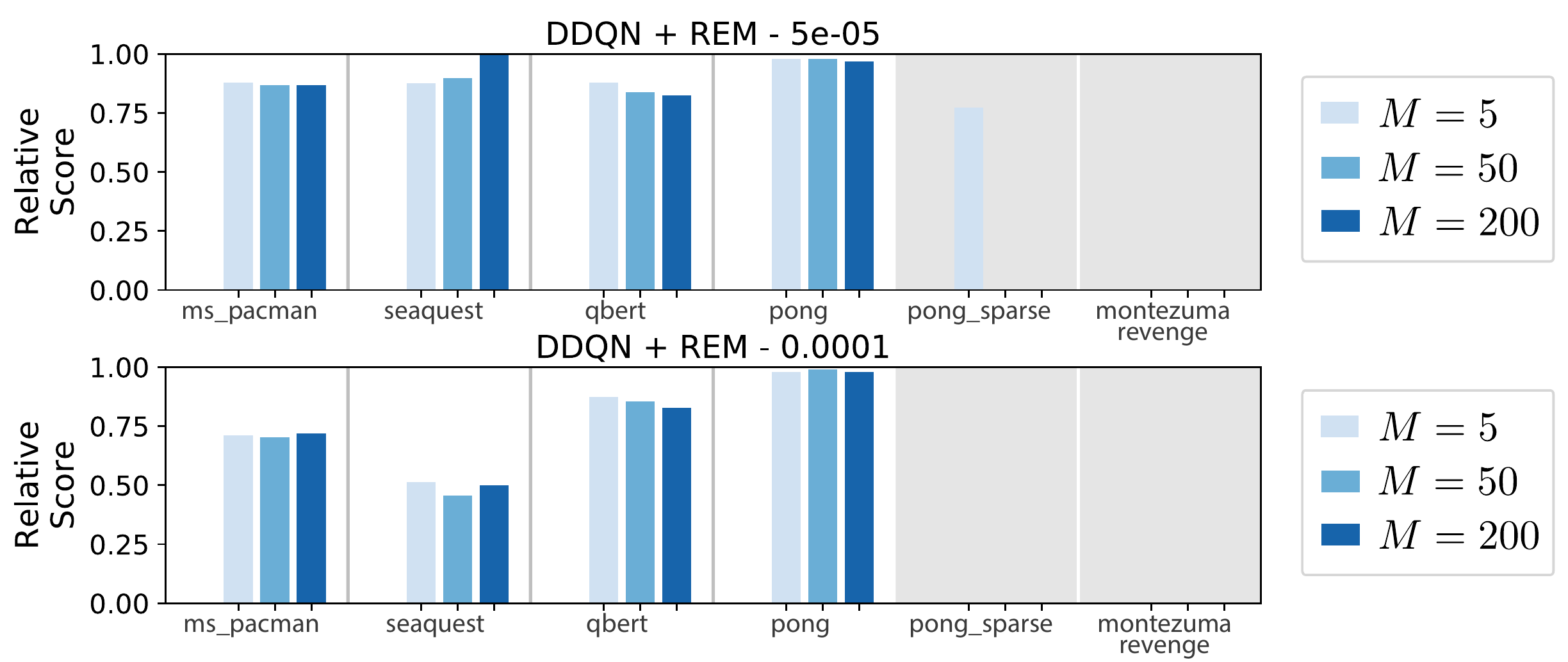}
    \caption{Results of hyper-parameter sweep for the REM method, where each row is for a different learning rate and each bar corresponds to the number of auxiliary heads ($M$).}
    \label{fig:rem_sweep}
\end{figure}

\chapter{Capacity loss}

\section{Proofs}
\subsection{Estimator consistency}
\label{appx:consistency}
We here show that our estimator of the agent's \effdim is consistent. First recall 

\begin{align}
\left(\frac{1}{\sqrt{n}} \Phi_n\right)^\top \left(\frac{1}{\sqrt{n}} \Phi_n\right) &= \frac{1}{n} \sum_{i=1}^n \phi(x_i) \phi(x_i)^\top \, . \\
\intertext{The following property of the expected value holds}
\mathbb{E}_{x \sim P} [ \phi(x)\phi(x)^\top ] &= \mathbb{E}\left\lbrack \frac{1}{n} \sum_{i=1}^n \phi(x_i)\phi(x_i)^\top   \right\rbrack \, . \\
\intertext{It is then straightforward to apply the strong law of large numbers. To be explicit, we consider an element of $M = \mathbb{E}[\phi \phi^\top] $, $M_{ij}$. }
\mathbb{E}[(\phi(x) \phi(x)^\top)_{ij}]  &= M_{ij} = \mathbb{E}[\phi_i(x) \phi_j(x)]
\implies \sum_{k=1}^n \frac{1}{n} \phi_i(x_k)\phi_j(x_k) \overset{a.s.}{\rightarrow} M_{ij}  \, .
\end{align}


Since we have convergence for any $M_{ij}$, we get convergence of the resulting matrix to $M$. Because the singular values of $\Phi$ are the eigenvalues of $M$ and the eigenvalues are continuous functions of that matrix, the eigenvalues of $M_n$ converge to those of $M$ almost surely. Then for almost all values of $\epsilon$, the threshold estimator $N(\lambda_1, \dots, \lambda_k; \epsilon) = | \{\lambda_i > \epsilon\} |$ will converge to $N(\text{spec}(M); \epsilon )$. Specifically, the estimator will be convergent for all values of $\epsilon$ which are not eigenvalues of $M$ itself. 

\subsection{Feature collapse case study: quantile regression}
\label{appx:feature_theory}
We apply similar analysis to that of Chapter~\ref{chp:rl-dynamics} to better understand the effect of sparse-reward environments on representation collapse. To do so, we return to the setting where $\Phi_t$ are non-parametric features and $w_t$ a linear function approximator which jointly parameterize a value function $V_t = \langle \Phi_t(x), w_t \rangle$. We recall the dynamics
\begin{equation}
    \partial_t \Phi_t = \alpha (\gamma P^\pi - I) \Phi_t (w_t w_t^\top) + R^\pi w_t^\top
\end{equation}
and
\begin{equation}
    \partial_t w_t = \beta \Phi_t^\top [(\gamma P^\pi - I)\Phi_t w_t + R^\pi] \, ,
\end{equation}
where $P^\pi \in \mathbb{R}^{\mathcal{X} \times \mathcal{X}}$ is the matrix of state-transition probabilities under $\pi$, and $R^\pi \in \mathbb{R}^{\mathcal{X}}$ is the vector of expected rewards.

In value-based deep RL, we model a Q-function which takes as input an observation $\bx$ and outputs a vector $Q(s, a_i)_{i=1}^{n_a} \in \mathbb{R}^{n_a}$. This resembles the setting of \textit{ensemble prediction}, whose dynamics we also recall here.
\begin{align}
    \partial_t \Phi^{M}_t 
    \!=  &   \alpha\! \sum_{m=1}^M (R^\pi\! +\! \gamma P^\pi \Phi^{M}_t w_t^{m}\! -\! \Phi^{M}_t w_t^{m})  (w_t^{m} )^\top \, , \\
    \partial_t w_t^{m} = & \beta (\Phi^{M}_t)^\top (R^\pi + \gamma P^\pi \Phi^M_t w^{m}_t - \Phi_t w^{m}_t ) \, . 
\end{align}

The ensemble learning regime also bears similarity to the quantile regression DQN (QR-DQN) objective \citep{dabney2018distributional}, which learns to fit a set of quantiles to the distribution taken by the return when treated as a random variable. Each quantile $\tau_i \in (0, 1)$ is trained using a quantile regression loss, of the form
\begin{equation*}
    \mathcal{L}^{\tau}_{\mathrm{QR}}(\theta) := \mathbb{E}_{\hat{Z} \sim Z} [ \rho_\tau(\hat{Z} - \theta)] \quad \text{ where } \rho_\tau(u) = u(\tau - \delta_{u < 0}), \; \forall u \in \mathbb{R}.
\end{equation*}
This loss is known to yield unbiased gradients, however it has the undesirable property that the gradients of $\mathcal{L}^{\tau}_{\mathrm{QR}}$ remain constant even as $\theta \rightarrow \tau$, which can result in pathological convergence properties. The QR-DQN objective addresses this by implementing a Huber loss \citep{huber64robust}, whose gradients tend to zero at its minimum. 
\begin{equation}
\mathcal{L}_{\kappa}(u) = \begin{cases}
\frac{1}{2}u^2 & \text{if }|u| \leq \kappa \\
\kappa(|u| - \frac{1}{2}\kappa) & \text{otherwise}.
\end{cases}
\end{equation}

For a sufficiently large number of quantiles and under similar conditions as in Theorem~\ref{thm:infinite-heads} the dynamics induced by this objective bear marked resemblance to those of the ensemble prediction setting. In particular, when no reward is observed in the environment, we again obtain that the features will converge to the zero vector. The setting of this result is distinct from that of deep neural network representation dynamics, as neural networks use discrete optimization steps, finite learning rates, and typically do not leverage linear ensembles. However, we emphasize two crucial observations that suggest the intuition developed in this setting may be relevant: first, in sparse reward environments the representation will be pushed to zero along dimensions spanned by the linear weights used to compute outputs. Once sufficiently many independent weight vectors are being used to make predictions, this effectively forces \textit{every} dimension of the representation to fit the zero vector output. We would therefore expect representation collapse to be particularly pronounced in the QR-DQN agents trained on sparse-reward environments, as in this setting we obtain many independently initialized heads all identically  trying to fit the zero target. 

\begin{restatable}{theorem}{thmQrdqn}\label{thm:qr-dqn}
Let let $\bm{\tau}=(\tau_i)_{i=1}^M$ be a set of quantile functions $\tau_i : \statespace \rightarrow \mathbb{R}$ with corresponding estimators $(\hat{\tau}^i)_{i=1}^M$ of the form $\hat{\bm{\tau}} = \hat{\tau}^i(\bx) =\langle \phi(\bx),  \bw^i \rangle$ for $\bw^i \in \mathbb{R}^d$, $\phi: \statespace \rightarrow \mathbb{R}^d$. Let $\pi$ be some policy in an MDP $\mdp$, with $R^\pi = \mathbf{0}$, and $P^\pi$ be the corresponding transition function. Let $\Phi^M_t$, $\bw^i_t$ follow the dynamics
\begin{align}
    \partial_t \Phi_t &= \beta^M \nabla_\Phi\qrdqnlosstau\\
    \partial_t \bw_t &= \alpha^M \nabla_{\bw}\qrdqnlosstau\;.
\end{align}
Then if $\alpha^M = 0$ or $\alpha^M = O(\frac{1}{M})$, letting $\Phi_\infty = \lim_{t \rightarrow \infty} \Phi_t$, we obtain
\begin{equation}
    \lim_{M \rightarrow \infty} \Phi_{\infty} \rightarrow \bm{0}.
\end{equation}
\end{restatable}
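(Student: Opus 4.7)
The plan is to reduce the QR-DQN dynamics in the zero-reward setting to a linear flow on $\Phi_t$ that has $\mathbf{0}$ as its unique globally attracting fixed point, mirroring the approach used in Theorem~\ref{thm:infinite-heads} and Corollary~\ref{prop:subspaceconvergence}. First I would unpack $\qrdqnloss$: for each pair of quantile indices $(i,j)$, the Huber quantile term has a gradient in $\Phi$ proportional to $-\psi^{ij}_{\tau_i,\kappa}\!\bigl(\delta^{ij}_t\bigr)(\bw^i)^\top$, where $\delta^{ij}_t$ is the TD error $r + \gamma \langle \phi(\bx'),\square[\bw^j]\rangle - \langle \phi(\bx),\bw^i\rangle$ (with stop-gradient on the bootstrap target) and $\psi^{ij}_{\tau_i,\kappa}(u)$ is the derivative of the asymmetric Huber penalty, which is bounded, sign-preserving, and linear in $u$ on $|u|\le\kappa$. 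Setting $R^\pi=0$ and writing this in matrix form yields
\begin{equation}
    \partial_t \Phi_t \;=\; -\beta^M \sum_{i,j} D^{ij}_t\bigl[(I-\gamma P^\pi)\Phi_t \bw^j\bigr](\bw^i)^\top,
\end{equation}
where $D^{ij}_t$ are diagonal, nonnegative, bounded matrices encoding the (state-dependent) Huber/quantile weighting.

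Next I would take the many-head limit. Under the hypothesis that $\alpha^M=0$ or $\alpha^M = O(1/M)$, the weights $\bw^i$ either stay fixed or change on a slow timescale, so that on any bounded time interval they can be treated as drawn from their initialization distribution and concentration arguments apply. Using the same i.i.d. Gaussian initialization on the $\bw^i$ as in Theorem~\ref{thm:infinite-heads} and invoking Lemma~\ref{lem:w-limit}, the empirical matrix $\tfrac{1}{M}\sum_m \bw^m(\bw^m)^\top$ concentrates on $I$. Because the quantiles $\tau_i$ are spread over $(0,1)$ and the $\psi_{\tau,\kappa}$ are uniformly bounded, averaging $D^{ij}_t$ over $i,j$ contributes only an overall (state-dependent but positive) scalar $c_t(\bx)\in[c_{\min},c_{\max}]$, so after rescaling time the limiting dynamics become
\begin{equation}
    \partial_t \Phi_t \;=\; -C_t(I-\gamma P^\pi)\Phi_t,
\end{equation}
with $C_t$ a positive diagonal matrix bounded away from $0$ and $\infty$.

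Then I would conclude via a spectral/Lyapunov argument. Since $P^\pi$ is a stochastic matrix and $\gamma<1$, every eigenvalue of $I-\gamma P^\pi$ has strictly positive real part; hence $-(I-\gamma P^\pi)$ is Hurwitz, and multiplication on the left by a uniformly positive-definite $C_t$ preserves this property (a standard Lyapunov bound with $V(\Phi)=\tfrac{1}{2}\|\Phi\|_F^2$ gives $\partial_t V \le -c_{\min}\lambda_{\min}(\mathrm{Sym}(I-\gamma P^\pi))\|\Phi_t\|_F^2$). Therefore $\Phi_t\to 0$ as $t\to\infty$, and so $\lim_{t\to\infty}\Phi_t^M \to 0$ in the limiting regime; exchanging the order of limits (justified because the flow is a uniform contraction independent of $M$ in the limit, analogous to the uniform convergence argument in the proof of Theorem~\ref{thm:infinite-heads}) yields $\lim_{M\to\infty}\Phi_\infty = \mathbf{0}$.

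The main obstacle will be Paragraph 1 and the passage to Paragraph 2: the Huber/quantile weights $\psi^{ij}_{\tau_i,\kappa}$ are nonsmooth and \emph{data-dependent}, so one must show that, after averaging over the $M^2$ head pairs, their effect degenerates to a well-behaved (in particular, sign-consistent and lower-bounded) scalar multiplier on $(I-\gamma P^\pi)\Phi_t$. A clean way to handle this is to split the flow into the Huber-quadratic regime $\|\delta\|\le\kappa$ (where the reduction to Theorem~\ref{thm:infinite-heads} is essentially direct) and the Huber-linear regime, showing via a contraction argument on $\|\Phi_t\|_F$ that any trajectory enters and stays in the quadratic regime in finite time, after which the concentration-plus-Lyapunov argument above applies verbatim.
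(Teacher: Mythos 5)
Your proposal is correct and follows essentially the same route as the paper's proof: the paper likewise splits into the Huber-linear regime (where it shows an $\ell_\infty$ contraction at rate $\kappa$ that forces the trajectory into the quadratic regime in finite time) and the Huber-quadratic regime (where the dynamics coincide with the TD flow of Theorem~\ref{thm:infinite-heads}), and in the $M\to\infty$ case notes that the $1/\sqrt{M}$-scaled initialization makes all errors fall below the $\kappa$ cutoff from the outset so the reduction to Theorem~\ref{thm:infinite-heads} applies directly. Your concentration-plus-Lyapunov treatment of the averaged Huber/quantile weights is a slightly more explicit version of the same argument.
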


\begin{proof}
We break the derivation of this result into two cases. We first observe that, when $\alpha=0$ and $\beta=1$, the QR-DQN update reduces $\|V_t\|_\infty$ by either $\kappa$ if $\|(\gamma P^\pi - I) V_t\|_\infty > \kappa$ or $\gamma \|V_t\|_\infty$ if $\|(\gamma P^\pi - I ) V_t\|_\infty < \kappa$. In the latter case, the QR-DQN objective is identical to the TD updates studied in Chapter~\ref{chp:rl-dynamics}, and we obtain convergence of $\Phi$ to zero in the case of infinitely many heads, and in the case of $M=d$ heads with orthogonal initialization aligned with the features.

In the case where $M \rightarrow \infty$, when $w^i$ are initialized to scale with $\frac{1}{\sqrt{M}}$, we obtain straightforwardly that $\|\Phi_0 w^i\| \overset{P}{\rightarrow} 0$. Thus, the limiting probability that any error falls outside the $\kappa$ cutoff of the Huber loss will be zero, and the dynamics on $\Phi$ will be identical to the TD dynamics studied previously, allowing us to apply the result of Theorem~\ref{thm:infinite-heads}.

In the case where $M=d$ and the $w^i$ are initialized such that $\langle \phi^i, w^j \rangle = \delta_{i,j} \phi^i$, we need only show that the dynamics followed by quantile regression are sufficiently well-behaved to push $\|V_t\|_\infty < \kappa$ for some finite $t$. 

We note that in this case we can express the dynamics as follows, and for the rest of this discussion assume without loss of generality that $\phi^i_t(x) > 0 \forall x \in \statespace$.
\begin{align}
\partial_t \Phi_t &= - \sum_{i=1}^M \min ( (I - \gamma P^\pi)\Phi_t w^i, \kappa) (w^i)^\top \\
&= - \sum_{i=1}^M \min ( (I - \gamma P^\pi)\phi^i_t w^i, \kappa) (w^i)^\top \\
\implies \partial_t \phi^k_t &= -  -\min( (I - \gamma P^\pi) \phi^k_t w^k, \kappa) (w^k)^\top \\
&= -\min ((I - \gamma P^\pi) \phi^k_t, \kappa)
\end{align}

Let $\phi^i_t$ be a feature vector at some time $t$. Let $x_{\max}$ maximize $\{|\phi^i_t(x)| :x \in \statespace \}$. Then we know that the time-derivative on $\partial_t \phi^i_t(x_{\max})$ must have the opposite sign to $\phi^i_t(x_{\max})$, as it takes the following form.
\begin{align}
    \partial_t \phi^i_t(x_{\max}) &= - \min( (I - \gamma P^\pi)\phi^i_t(x_{\max}) , \kappa) \\
    \mathrm{sign}(\partial_t \phi^i_t(x_{\max}) &= - \mathrm{sign}(\phi^i_t(x_{\max})
\end{align}

Now, because this applies to the maximal value of $\phi^i_t$ over all states, we get that the function $\tilde{\phi}^{i}(t) = \max_{x \in \statespace} \phi^i_t(x)$ is decreasing. The rate of this decrease is either constant, in which case it is equal to $\kappa$, or we get the standard exponential convergence described in \ref{eq:td_dynamics}. 
\end{proof}

\keyinsight{In the presence of many prediction targets, such as those induced by a quantile regression objective, sparse-reward environments will induce low dimensional feature representations in deep RL agents.}


\section{Sequential supervised learning}
\label{appx:supervised}

\subsection{Details: target-fitting capacity on non-stationary MNIST}
\label{appx:mnist-details}

In addition to our evaluations in the Atari domain, we also consider a variant of the MNIST dataset in which the labels change over the course of training. 
\begin{itemize}[leftmargin=0.5cm]
    \item \textbf{Inputs and Labels:} We use 1000 randomly sampled input digits from the MNIST dataset and assign either binary or random targets.
    \item \textbf{Distribution Shift:} We divide training into $N=30$ or $N=10$ iterations depending on the structure of the target function. In each iteration, a target function is randomly sampled, and the network's parameters obtained at the end of the previous iteration are used the initial values for a new optimization run. We use the Adam \citep{kingma2014adam} optimizer with learning rate \texttt{1e-3}, and train to minimize the mean squared error between the network outputs and the targets for either 3000 or 5000 steps depending on the nature of the target function.
    \item \textbf{Architecture:} we use a standard fully-connected architecture with ReLU activations, and vary the width and depth of the network. The parameters at the start of the procedure are initialized following the defaults in the Jax Haiku library.
\end{itemize}

We note that the dataset sizes, training budgets, and network sizes in the following experiments are all relatively small. This was chosen to enable short training times and decrease the computational budget necessary too replicate the experiments. The particular experiment parameters were selected to be the fastest and cheapest settings in which we could observe the capacity loss phenomenon, while still being nontrivial tasks. In general, we found that capacity loss is easiest to measure in a 'sweet spot' where the task for a given architecture is simple enough for a freshly-initialized network to attain low loss, but complex enough that the network cannot trivially solve the task. In the findings of the following section, we see how some of the larger architectures do not exhibit capacity loss on `easier' target functions, but do on more challenging ones that exhibit less structure. This suggests that replicating these results in larger networks will be achievable, but will require re-tuning the task difficulty to the larger network's capacity.

\subsection{Additional evaluations}
\label{appx:cap-loss-supervised}
\begin{figure}
    \centering
    \includegraphics[width=\linewidth]{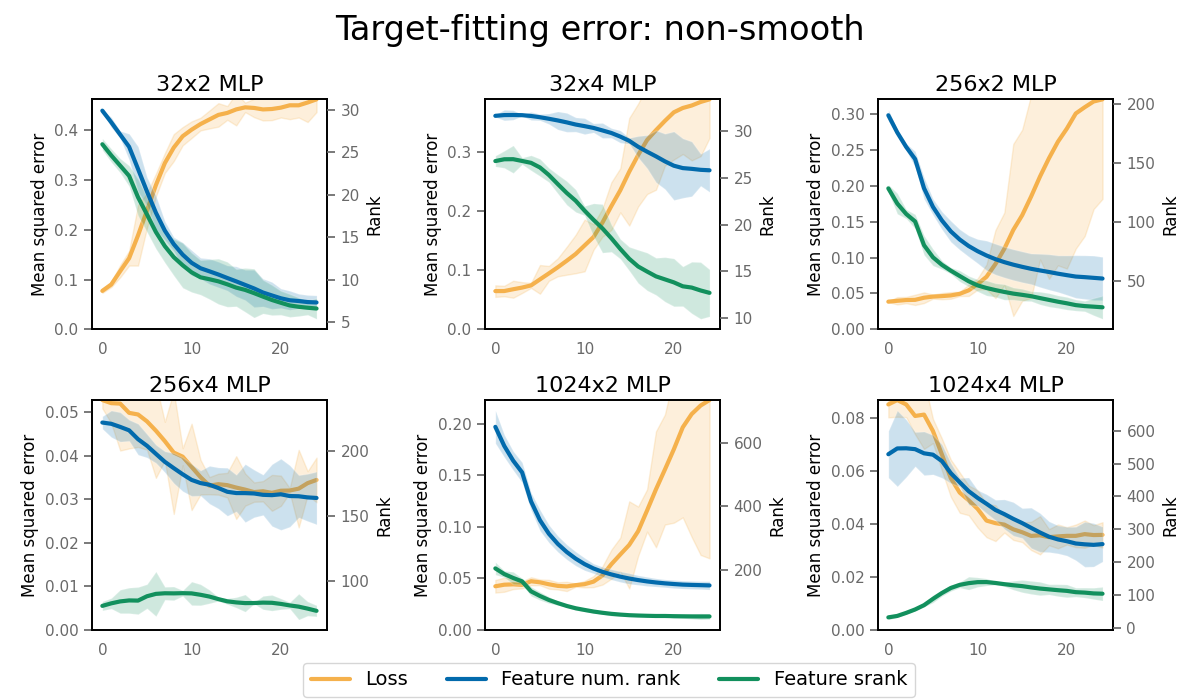}
    \caption{Mean squared error at the end of training on each iteration of the \textbf{hash-MNIST} task. Target-fitting error increases over time in smaller networks, but increasing the depth or width of the network slows down capacity loss, enabling positive transfer in the largest networks we studied.}
    \label{fig:hash-mnist}
\end{figure}
We expand on the MNIST target-fitting task shown in the main text by considering how network size and target function structure influences capacity loss. 

\begin{itemize}
    \item \textbf{Random-MNIST} (smooth) this task uses the images from the MNIST dataset as inputs. The goal is to perform regression on the outputs of a randomly initialized, fixed neural network. We use a small network for this task, consisting of two width-30 fully connected hidden layers with ReLU activations which feed into a final linear layer which outputs a scalar. Because the network outputs are small, we scale them by 10 so that it is not possible to get a low loss by simply predicting the network's bias term. This task, while randomly generated, has some structure: neural networks tend to map similar inputs to similar outputs, and so the inductive bias of the targets will match that of the function approximator we train on them. 
    \item \textbf{Hash-MNIST} (non-smooth) uses the same neural network architecture as the previous task to generate targets, however rather than using the scaled network output as the target, we multiply the output by 1e3 and feed it into a sine function. The resulting targets no longer have the structure induced by the neural network. This task amounts to memorizing a set of labels for the input points.
    \item \textbf{Threshold-MNIST} (sparse) replaces the label of an image with a binary indicator variable indicating whether the label is smaller than some threshold. To construct a sequence of tasks, we set the threshold at iteration $i$ to be equal to $i$. This means that at the first iteration, the labels are of the form $(x,0)$ for all inputs $x$. At the second iteration, they are of the form $(x, \delta(y<1) )$, where $y$ is the digit in the image $x$, and so on.
\end{itemize}
We consider MLP networks of varying widths and depths, noting that the network architecture used to generate the random targets is fixed and independent of the approximating architecture. We are interested in evaluating whether factors such as target function difficulty, network parameterization, and number of target functions previously fit influence the network's ability to fit future target functions. Our results are shown in Figure~\ref{fig:hash-mnist},~\ref{fig:random-mnist}, and \ref{fig:threshold-mnist}. We visualize srank and $\effdim$ of the features output at the network's penultimate layer, in addition to the loss obtained at the end of each iteration. 

\begin{figure}
    \centering
    \includegraphics[width=\linewidth]{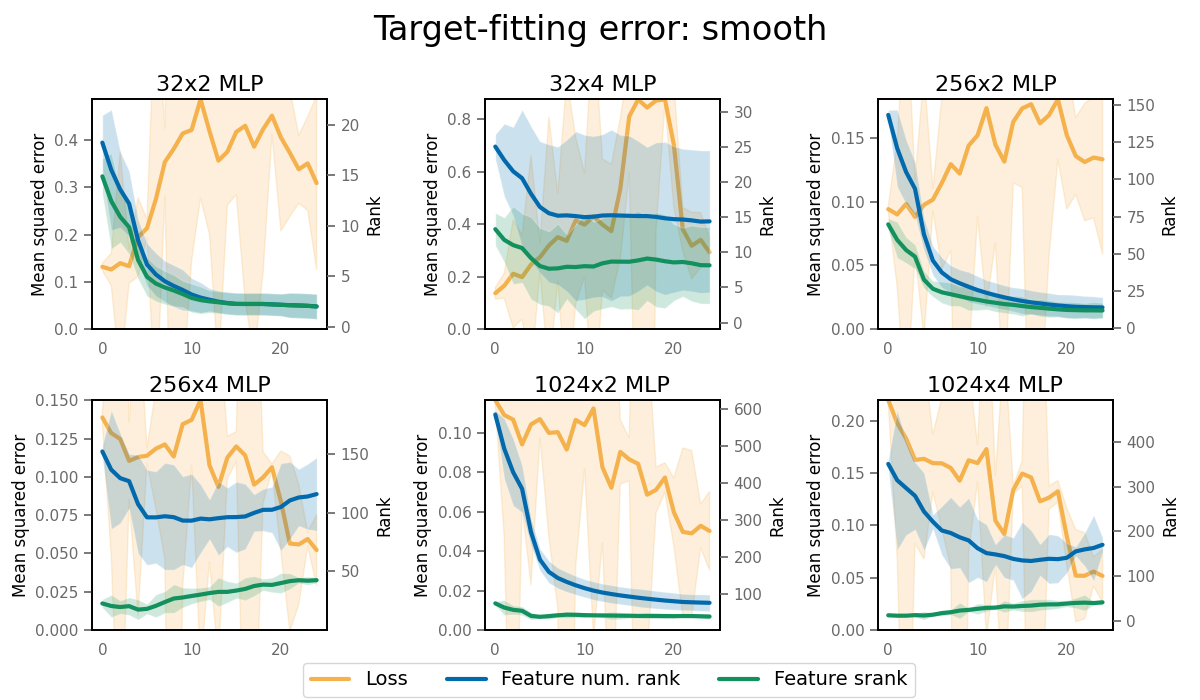}
    \caption{Mean squared error after 2e3 training steps on the \textbf{random-MNIST} task. Target-fitting error increases over time in under-parameterized networks, but increasing the depth or width of the network slows down capacity loss, enabling positive transfer in the largest network we studied.}
    \label{fig:random-mnist}
\end{figure}
\begin{figure}
    \centering
    \includegraphics[width=\linewidth]{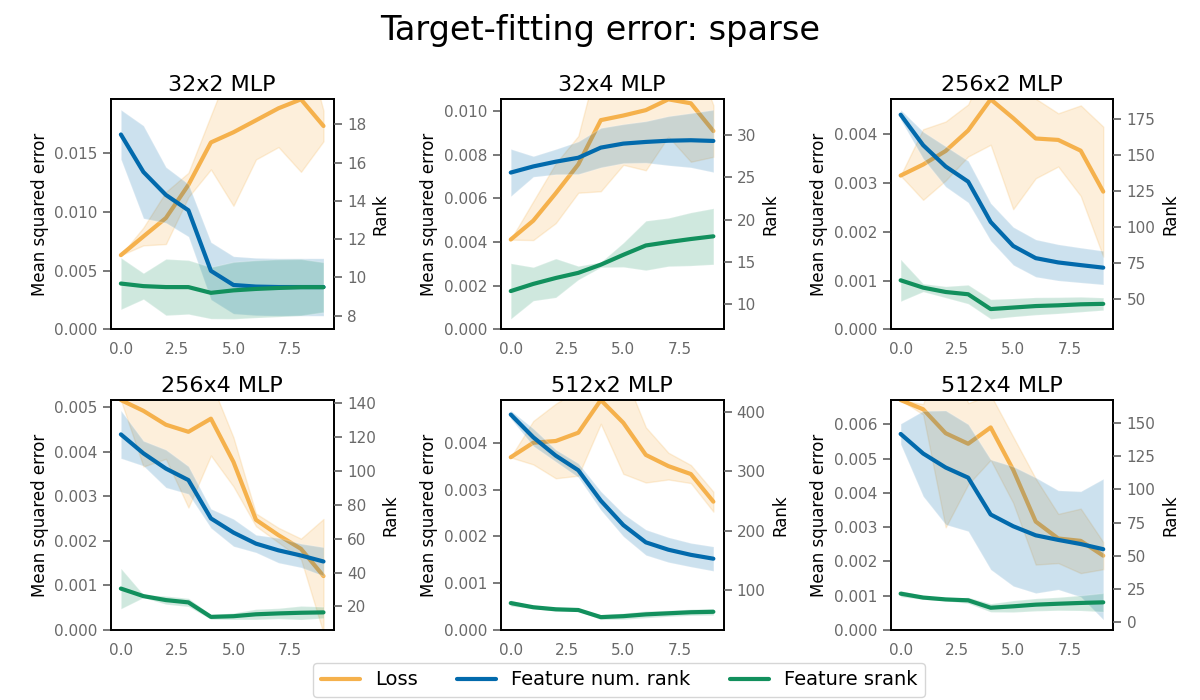}
    \caption{Mean squared error after 2e3 training steps on the \textbf{threshold-MNIST} task. Target-fitting error increases over time in under-parameterized networks, but increasing the depth or width of the network slows down capacity loss, enabling positive transfer in the largest network we studied.}
    \label{fig:threshold-mnist}
\end{figure}
\subsection{Effect of \pyoi on target-fitting capacity in MNIST}
\label{appx:infer-mnist}
In addition to our study of the Atari suite, we also study the effect of \pyoi on the non-stationary MNIST reward prediction task with a fully-connected architecture; see Figure~\ref{fig:pyoi_on_mnist}. We find that it significantly mitigates the decline in target-fitting capacity demonstrated in Figure~\ref{fig:mnist-cap}.

\begin{figure}
    \centering
    \includegraphics[width=\linewidth]{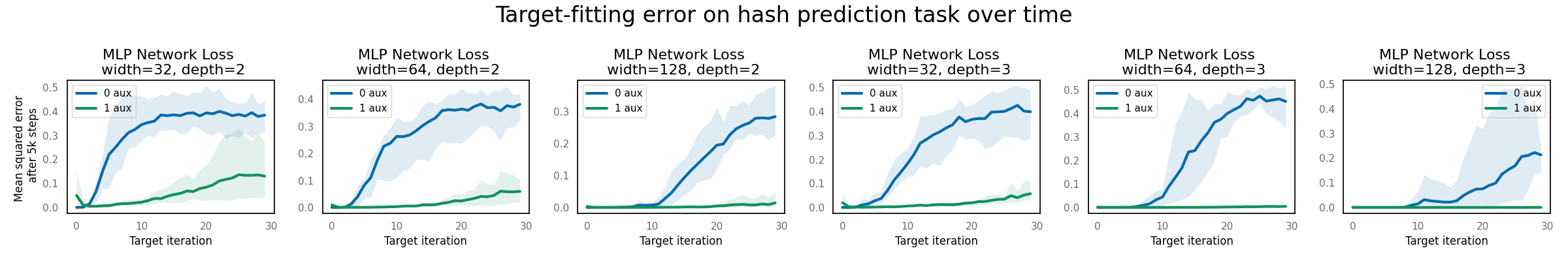}
    \includegraphics[width=\linewidth]{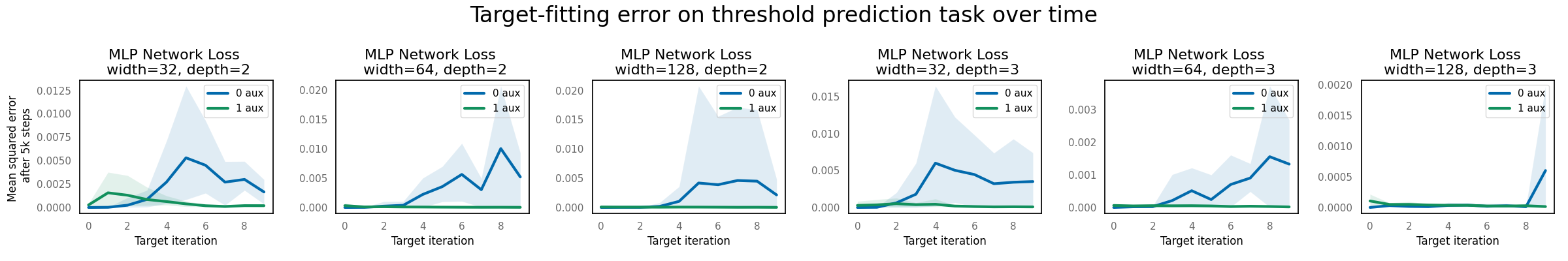}
    \includegraphics[width=\linewidth]{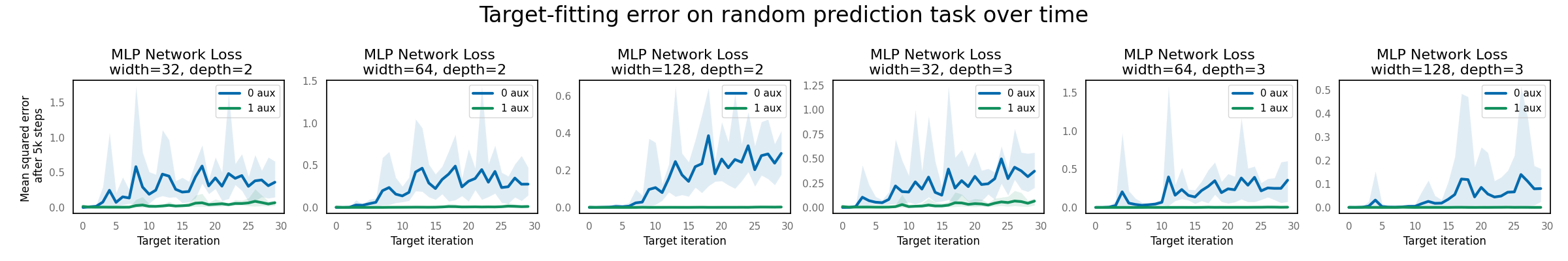}
    \caption{Effect of adding \pyoi to the regression objective in a random reward prediction problem on the non-stationary MNIST environment studied previously. We see that the \pyoi objective produces networks that can consistently outperform those trained with a standard regression objective, exhibiting minimal capacity loss in comparison to the same network architecture trained on the same sequence of targets. }
    \label{fig:pyoi_on_mnist}
\end{figure}
\section{Atari evaluations}
\label{appx:atari}
We now present full evaluations of many of the quantities described in the corresponding chapter, along with a study of the sensitivity of InFeR to its hyperparameters. We use the same training procedure for all of the figures in this section, loading agent parameters from checkpoints to compute the quantities shown.

\subsection{Hyperparameter sensitivity of InFeR in deep reinforcement learning agents}
\label{appx:hypers}
We report results of hyperparameter sweeps over the salient hyperparameters relating to InFeR, so as to assess the robustness of the method. For both the DDQN and Rainbow agents augmented with InFeR, we sweep over the number of auxiliary predictions (1, 5, 10, 20), the cumulant scale used in the predictions (10, 100, 200), and the scale of the auxiliary loss (0.01, 0.05, 0.1, 0.2). We consider the capped human-normalized return across four games (Montezuma's Revenge, Hero, James Bond, and MsPacman), and run each hyperparameter configuration with 3 seeds. Results are shown in Figure~\ref{fig:ddqn-sweep} for the DDQN agent; we compare performance as each pair of hyperparameters varies (averaging across the other hyperparameter, games, and seeds, and the last five evaluation runs of each agent). Corresponding results for Rainbow are given in Figure~\ref{fig:rainbow-sweep}.

\begin{figure}
    \centering
    \null
    \hfill
    \includegraphics[keepaspectratio,width=.32\textwidth]{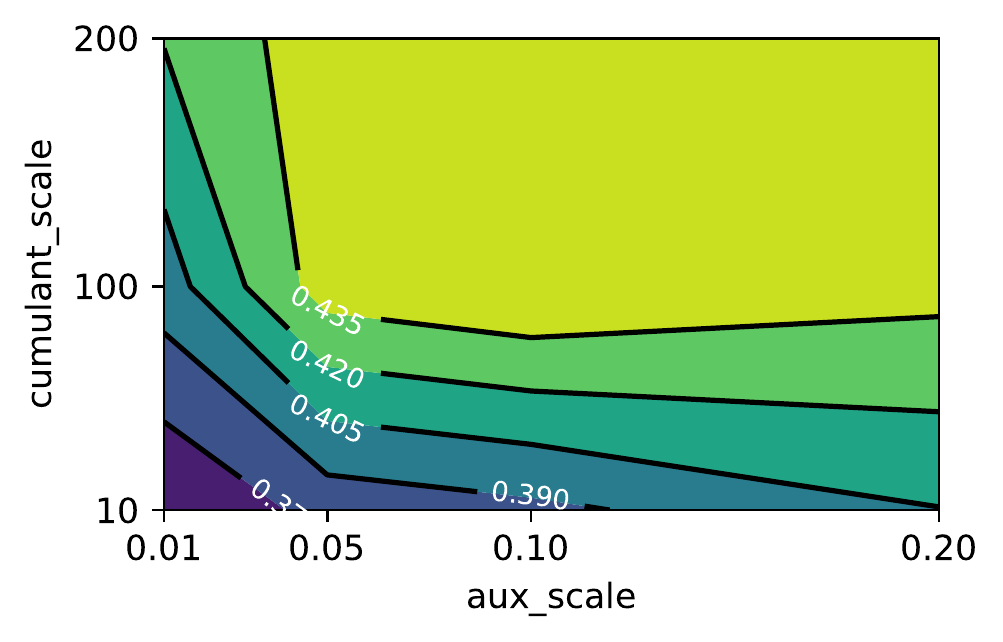}
    \includegraphics[keepaspectratio,width=.32\textwidth]{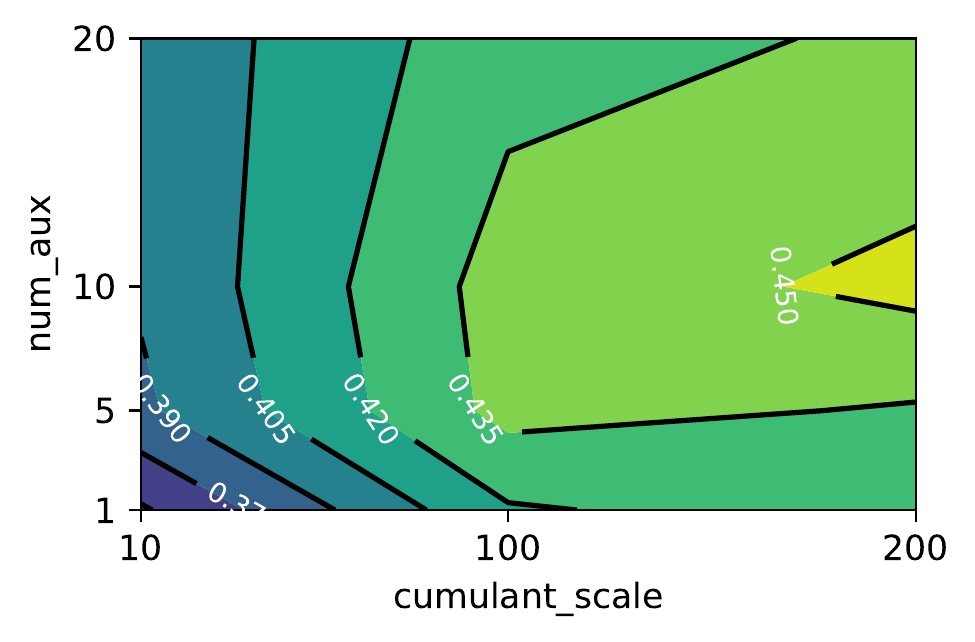}
    \includegraphics[keepaspectratio,width=.32\textwidth]{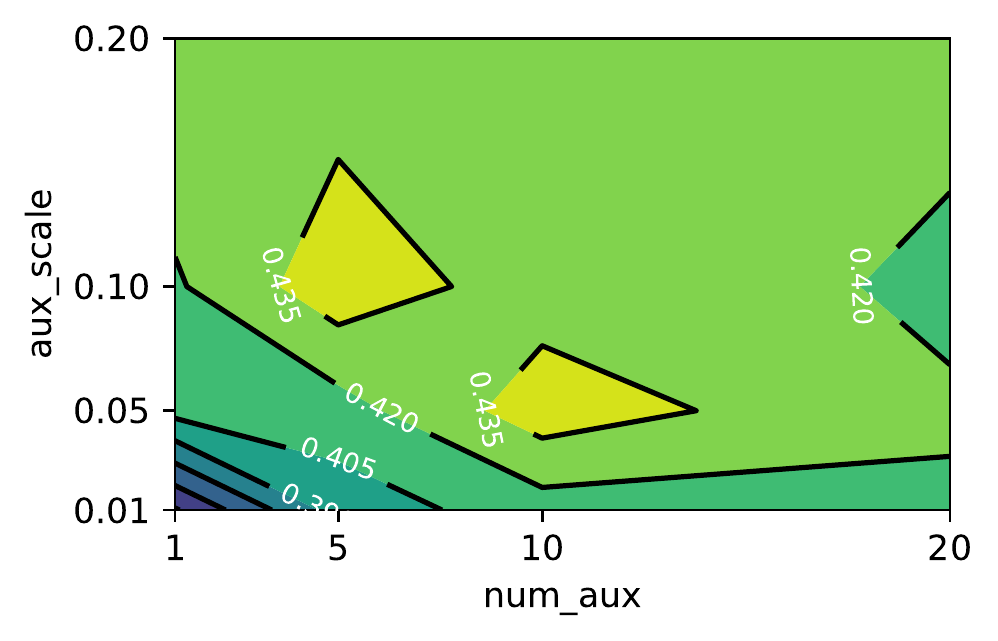}
    \hfill
    \null
    \caption{Hyperparameter sweeps for the DDQN+InFeR agent. Each contour plot shows average capped human-normalized score at the end of training marginalized over all hyperparameters not shown on its axes.}
    \label{fig:ddqn-sweep}
\end{figure}

\begin{figure}
    \centering
    \null
    \hfill
    \includegraphics[keepaspectratio,width=.32\textwidth]{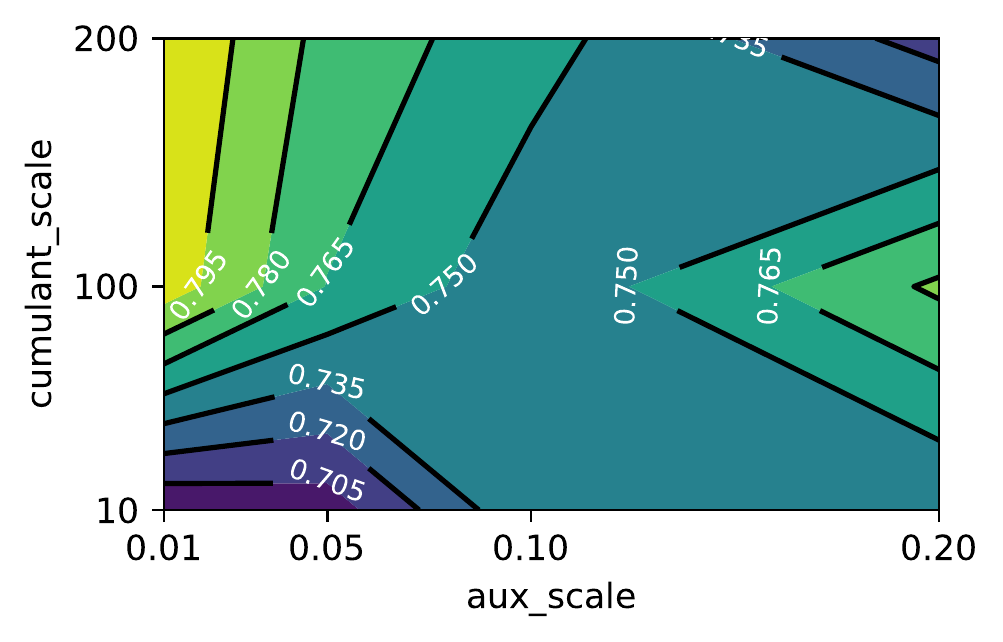}
    \includegraphics[keepaspectratio,width=.32\textwidth]{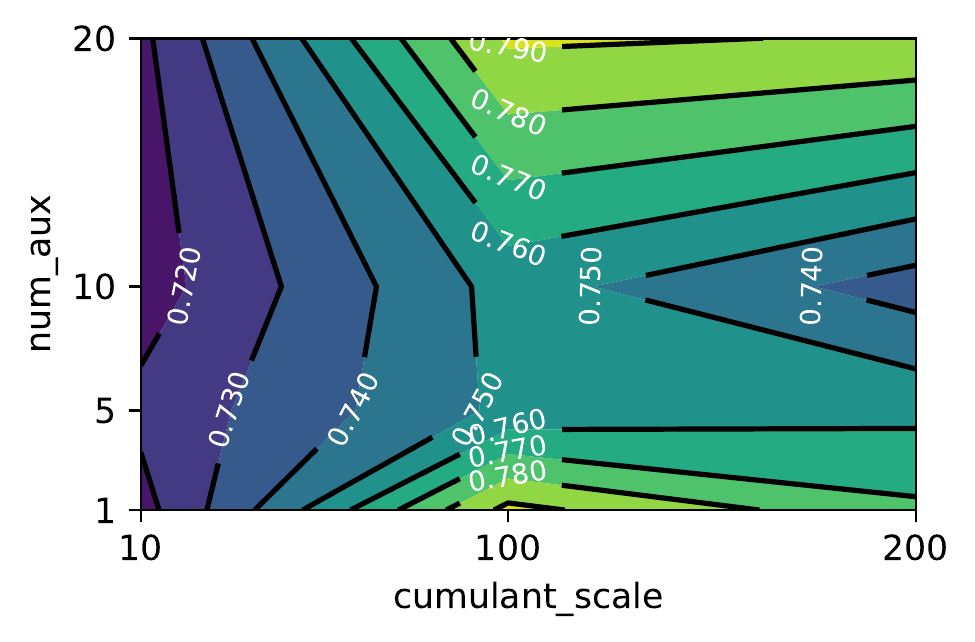}
    \includegraphics[keepaspectratio,width=.32\textwidth]{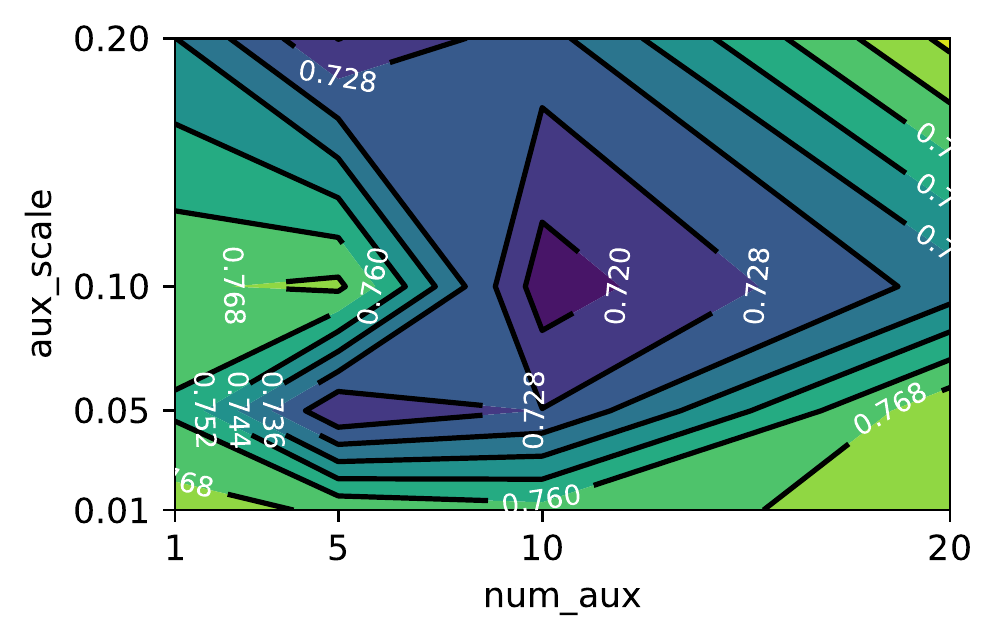}
    \hfill
    \null
    \caption{Hyperparameter sweeps for the Rainbow+InFeR agent. Each contour plot shows average capped human-normalized score at the end of training marginalized over all hyperparameters not shown on its axes.}
    \label{fig:rainbow-sweep}
\end{figure}
\label{appx:evals}

\begin{itemize}[leftmargin=0.5cm]
    \item \textbf{Agent:} We train a Rainbow agent \citep{hessel2018rainbow} with the same architecture and hyperparameters as are described in the open-source implementation made available by \citet{dqnzoo2020github}. We additionally add InFeR, as described in Section~\ref{sec:pyoi}, with 10 heads, gradient weight 0.1 and scale 100. 
    \item \textbf{Training:} We follow the training procedure found in the Rainbow implementation mentioned above. We train for 200 million frames, with 500K evaluation frames interspersed every 1M training frames. We save the agent parameters and replay buffer every 10M frames to estimate feature dimension and target-fitting capacity.
\end{itemize}

\subsection{Feature rank}
\label{appx:feature-rank-atari}

We first extend the results shown in Figure~\ref{fig:effdim_vanilla} to two additional games: Seaquest, and a sparsified version of Pong in which the agent does not receive negative rewards when the opponent scores. In these settings, we stored agent checkpoints once every 10M frames in each 200M frame trajectory, and used 5000 sampled inputs from the agent's replay buffer to estimate the feature rank, using the cutoff $\epsilon=0.01$. Results are shown in Figure~\ref{fig:full-effdim-perf-apx}.
\begin{figure}
    \centering
    \includegraphics[keepaspectratio,width=.9\textwidth]{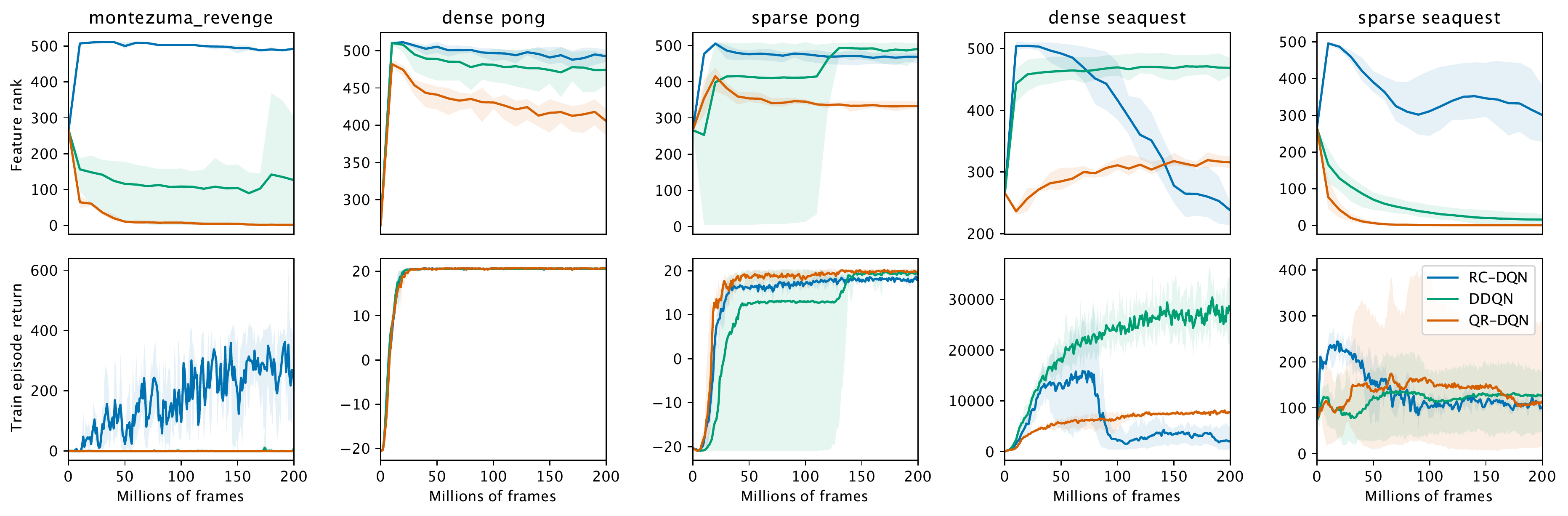}
    \caption{Feature rank and performance of RL agents on demonstrative Atari environments.}
    \label{fig:full-effdim-perf-apx}
\end{figure}

We further evaluate the evolution of feature rank in agents trained on all 57 games in the arcade learning environment in Figure~\ref{fig:effdim_all}. We find that the decline in dimension after the first checkpoint at 10M frames shown across the different agents in the selected games also occurs more generally in Rainbow agents across most environments in the Atari benchmark. We also show that in most cases adding InFeR mitigates this phenomenon. Our observations here do not show a uniform decrease in feature rank or a uniformly beneficial effect of InFeR. The waters become particularly muddied in settings where neither the Rainbow nor Rainbow+InFeR agent consistently make learning progress such as in tennis, solaris, and private eye. It is outside the scope of this work to identify precisely why the agents do not make learning progress in these settings, but it does not appear to be due to the type of representation collapse that can be effectively prevented by InFeR.

\textbf{Procedure.} We compute the feature rank by sampling $n=50000$ transitions from the replay buffer and take the set of origin states as the input set. We then compute a $n \times d$ matrix whose row $i$ is given by the output of the penultimate layer of the neural network given input $S_i$. We then take the singular value decomposition of this matrix and count the number of singular values greater than $0.01$ to get an estimate of the dimension of the network's representation layer.

In most games, we see a decline in feature rank after the first checkpoint at 10M frames. Strikingly, this decline in dimension holds even in the online RL setting where the agent's improving policy presumably leads it to observe a more diverse set of states over time, which under a fixed representation would tend to increase the numerical rank of the feature matrix. This indicates that even in the face of increasing state diversity, agents' representations face strong pressure towards degeneracy. It is worth noting, however, that the agents in dense-reward games do tend to see their feature rank increase significantly early in training; this is presumably due to the network initially learning to disentangle the visually similar states that yield different bootstrap targets.

\begin{figure}
    \centering
    \includegraphics[width=.85\linewidth]{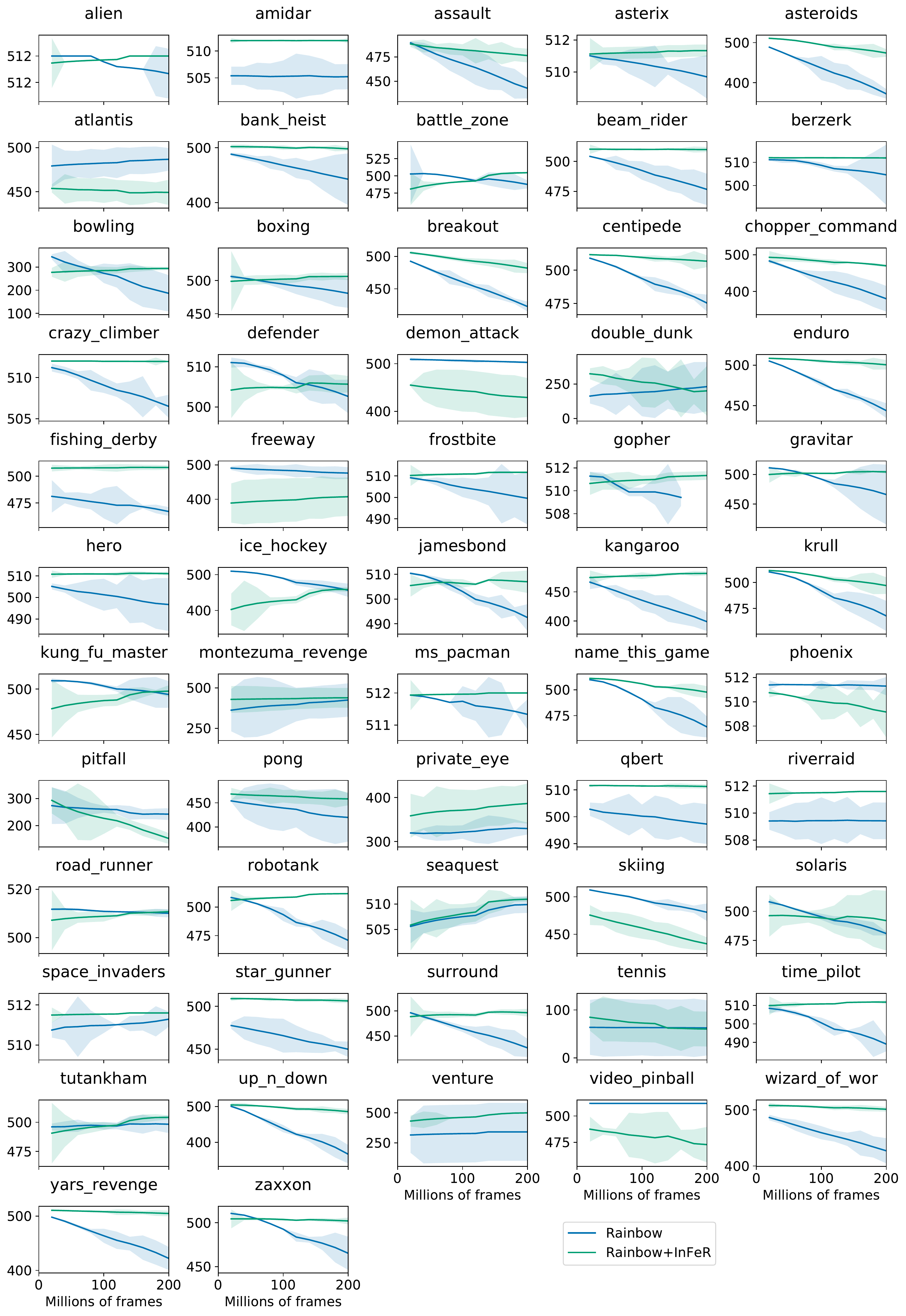}
    \caption{feature rank of agent representations over the course of training on all 57 games in the Atari benchmark. We compare Rainbow against Rainbow+InFeR. }
    \label{fig:effdim_all}
\end{figure}

\subsection{Target-fitting capacity early in training}
\label{appx:tf-capacity-atari}
In this section we examine the target-fitting capacity of neural networks trained with DQN, QR-DQN, and Rainbow over the course of $50$ million environment frames on five games in the Atari benchmark (amidar, montezuma's revenge, pong, bowling, and hero). Every $1$ million training frames we save a checkpoint of the neural network weights and replay buffer. For each checkpoint, we generate a random target network by initializing network weights with a new random seed. We then train the checkpoint network to predict the output of this random target network for $10000$ mini-batch updates (batch size of $32$) under a mean squared error loss, for states sampled from the first $100,000$ frames in the checkpoint's replay buffer. Furthermore, we repeat this for $10$ seeds used to initialize the random target network weights.

The results of this experiment are shown in Figure~\ref{fig:atari_target_fit_cap_comb} (in orange), where the solid lines show means and shaded regions indicate standard deviations over all seeds (both agent seeds ($5$) and target fitting seeds ($10$), for a total of $50$ trials). We also show srank and $\effdim$ of the features output at the network's penultimate layer for each of the checkpointed networks used for target fitting. These are computed using the network features generated from $1000$ states sampled randomly from that checkpoint's replay buffer. For feature rank, averages and standard deviations are only over the $5$ agent seeds.

\begin{figure}[ht]
    \centering
    \includegraphics[width=0.85\linewidth]{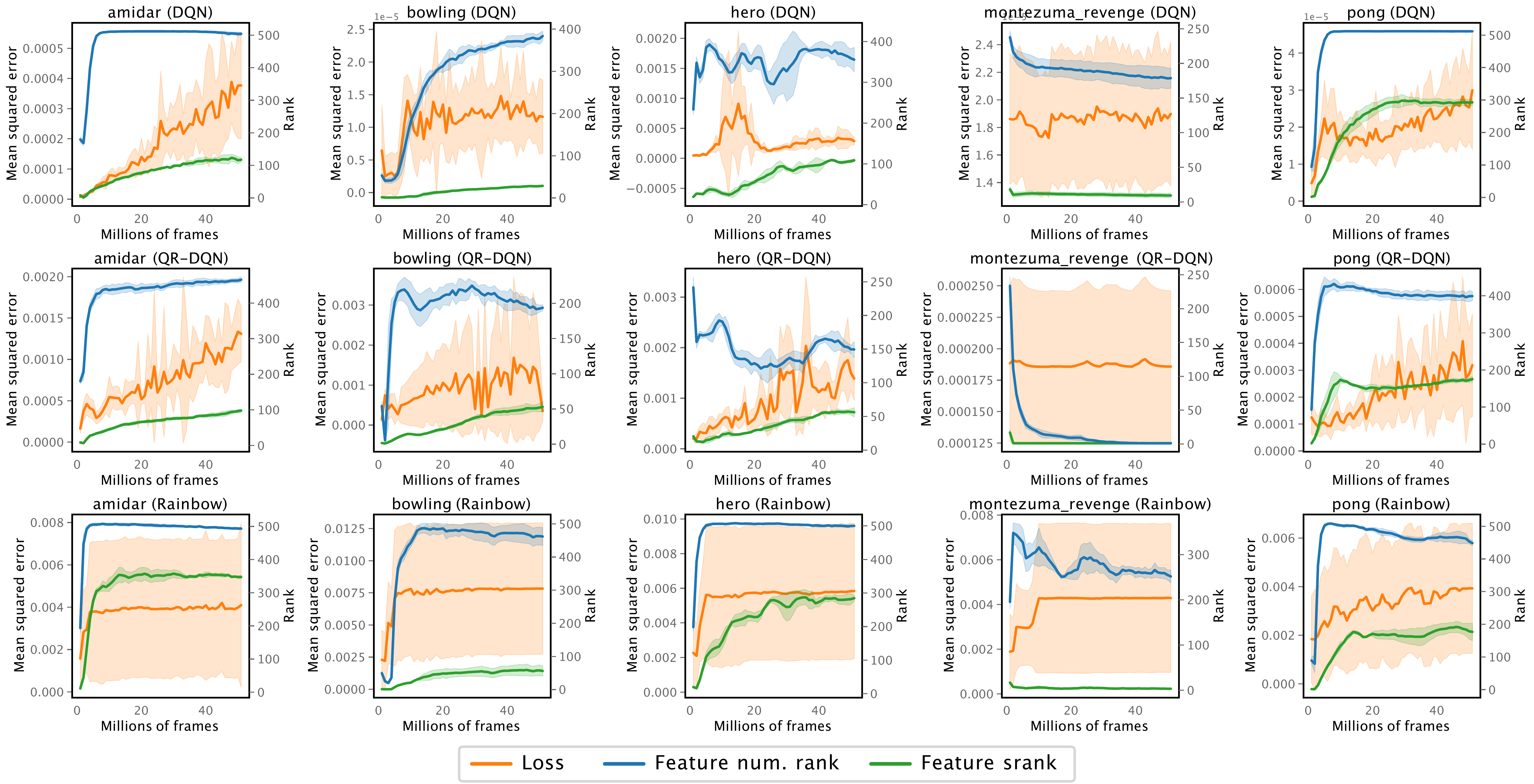}
    \caption{Mean squared error, after $10000$ training steps for the target-fitting on random network targets. We also show the corresponding feature rank of the pre-trained neural network (before target-fitting).}
    \label{fig:atari_target_fit_cap_comb}
\end{figure}

\subsection{Performance}

We provide full training curves for both Rainbow and Rainbow+InFeR on all games in Figures~\ref{fig:pyoirainbowhncap} \& \ref{fig:pyoirainbowhncap-double} (capped human-normalized performance), and \ref{fig:pyoirainboweval} \& \ref{fig:pyoirainboweval-double} (raw evaluation score). We also provide evaluation performance curves for DDQN and DDQN+InFeR agents in Figure~\ref{fig:ddqn-eval}.

\begin{figure}
    \centering
    \includegraphics[width=.85\linewidth]{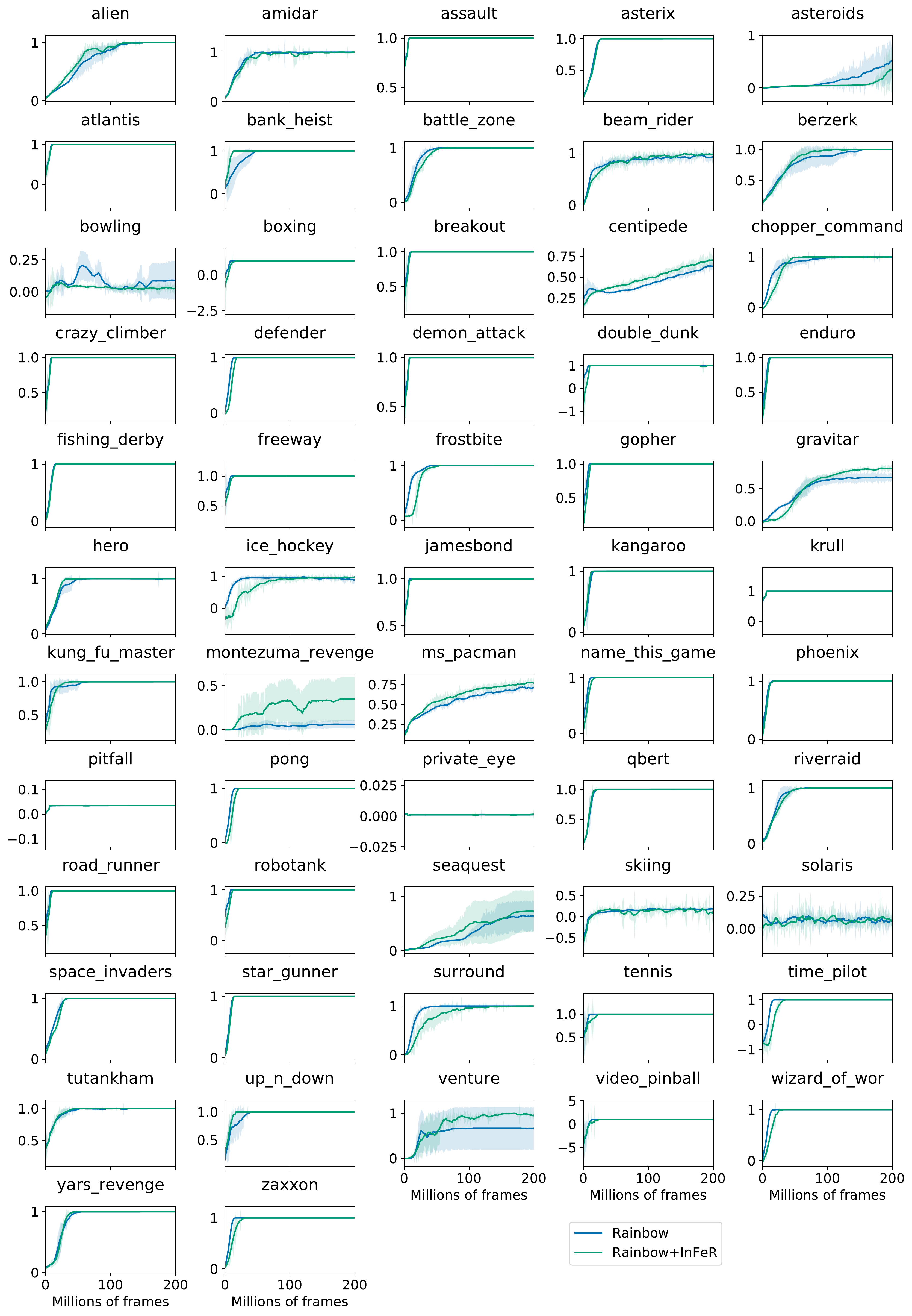}
    \caption{Full evaluation of capped human-normalized performance on Atari benchmarks for the default Rainbow architecture.}
    \label{fig:pyoirainbowhncap}
\end{figure}

\begin{figure}
    \centering
    \includegraphics[width=.85\linewidth]{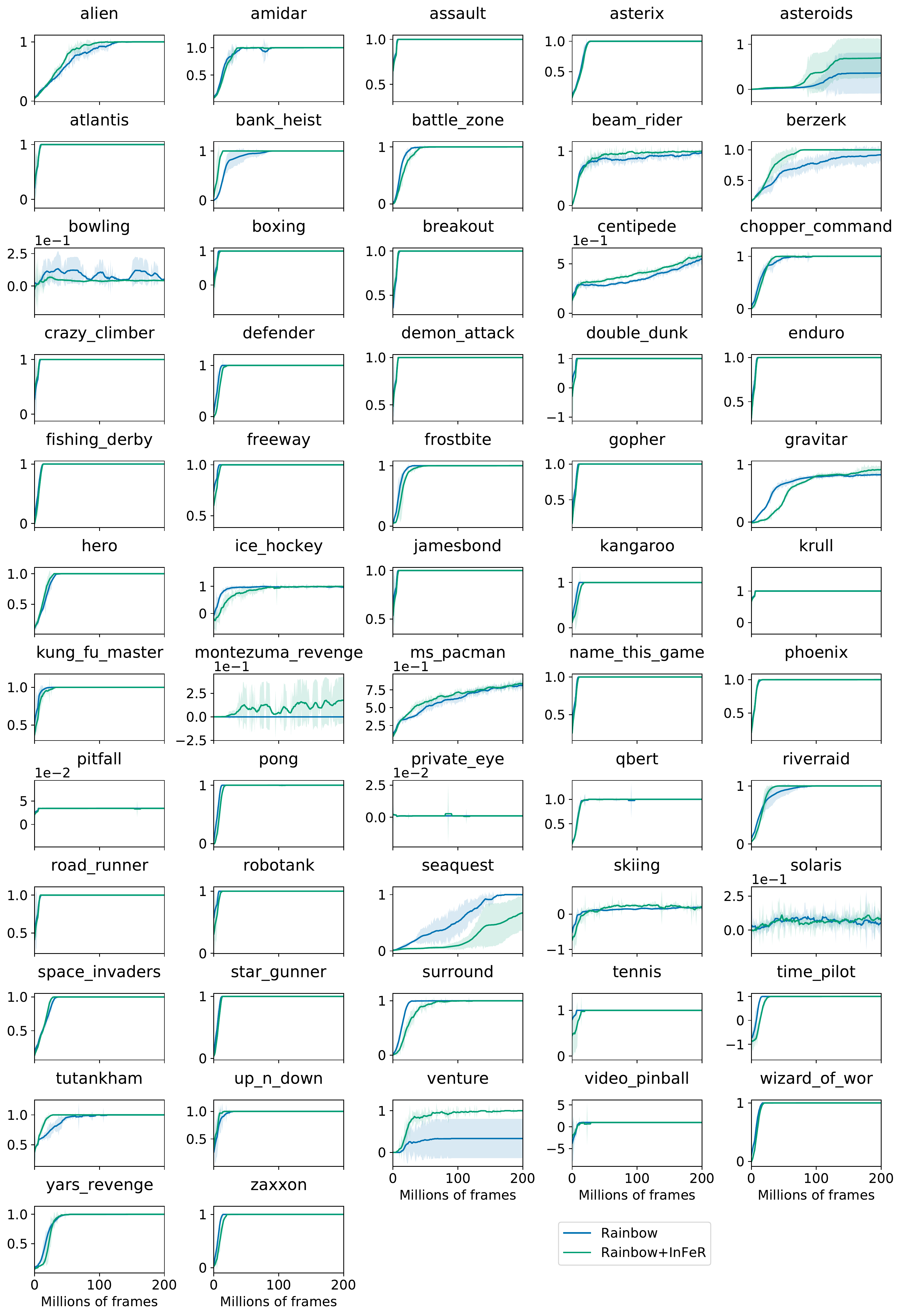}
    \caption{Full evaluation of capped human-normalized performance on Atari benchmarks in the double-width Rainbow architecture.}
    \label{fig:pyoirainbowhncap-double}
\end{figure}

\begin{figure}
    \centering
    \includegraphics[width=.85\linewidth]{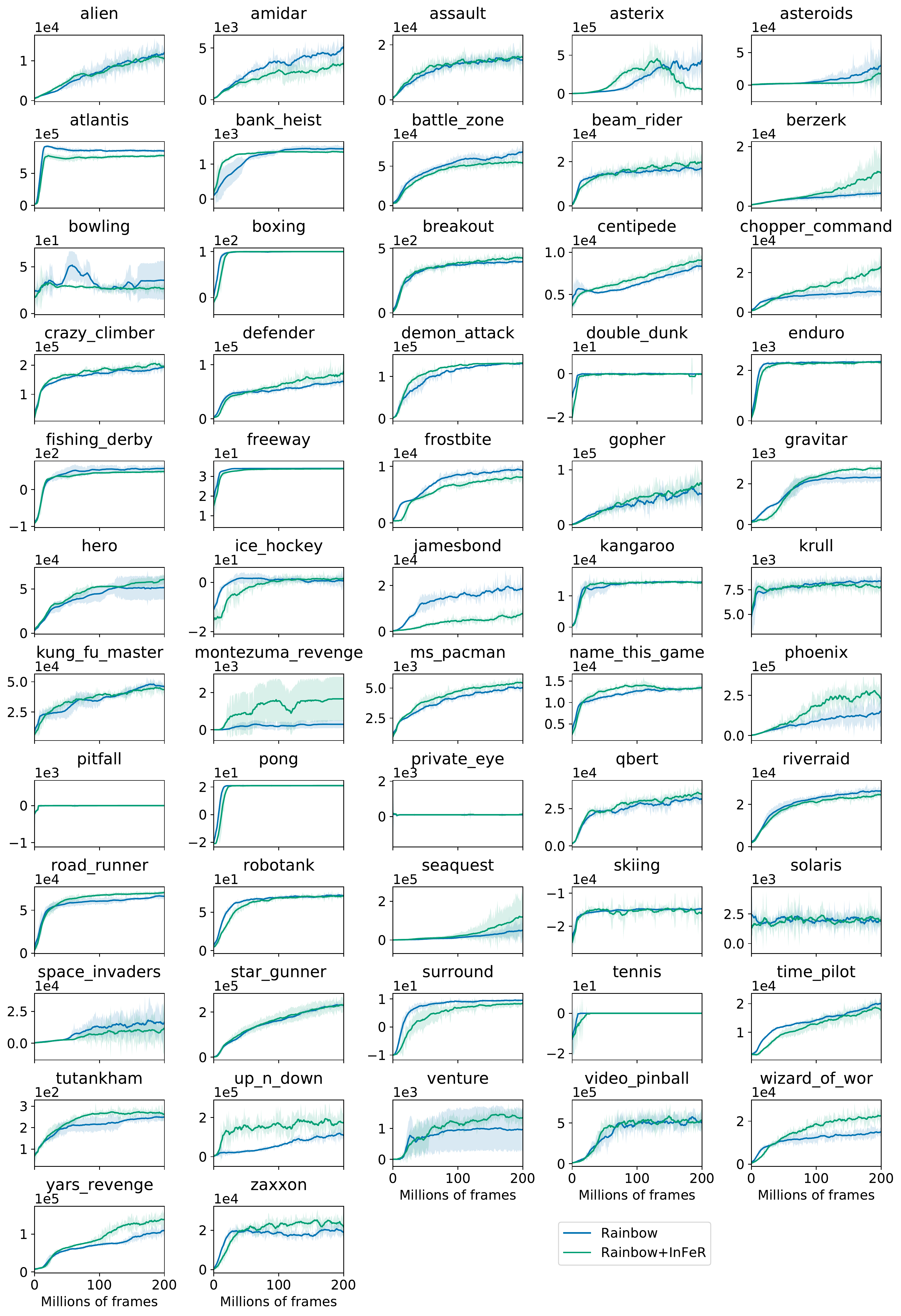}
    \caption{Full evaluation of raw scores on Atari benchmarks for the default Rainbow architecture.}
    \label{fig:pyoirainboweval}
\end{figure}

\begin{figure}
    \centering
    \includegraphics[width=.85\linewidth]{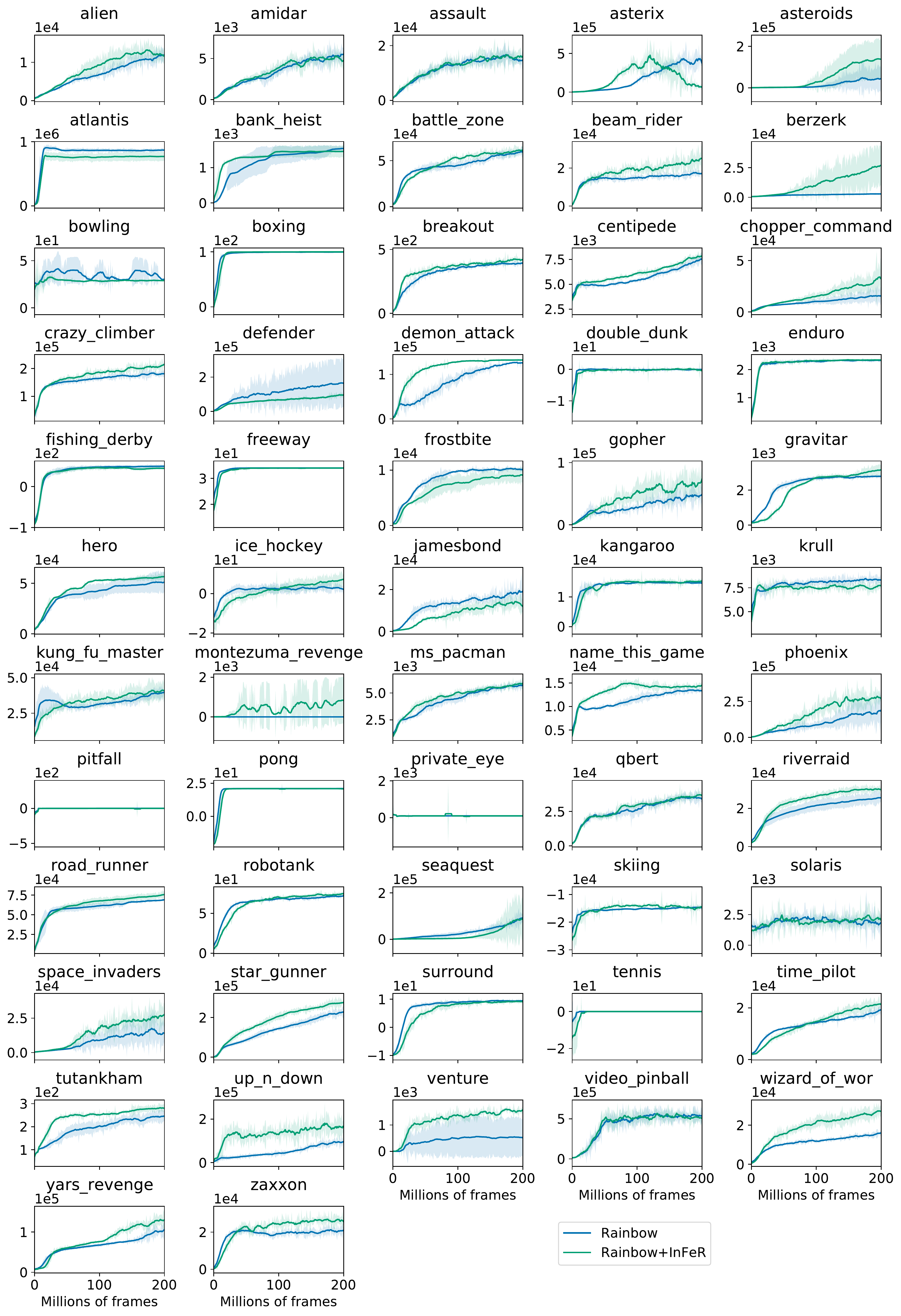}
    \caption{Full evaluation of raw scores on Atari benchmarks for the double-width Rainbow architecture.}
    \label{fig:pyoirainboweval-double}
\end{figure}

\begin{figure}
    \centering
    \includegraphics[width=.85\linewidth]{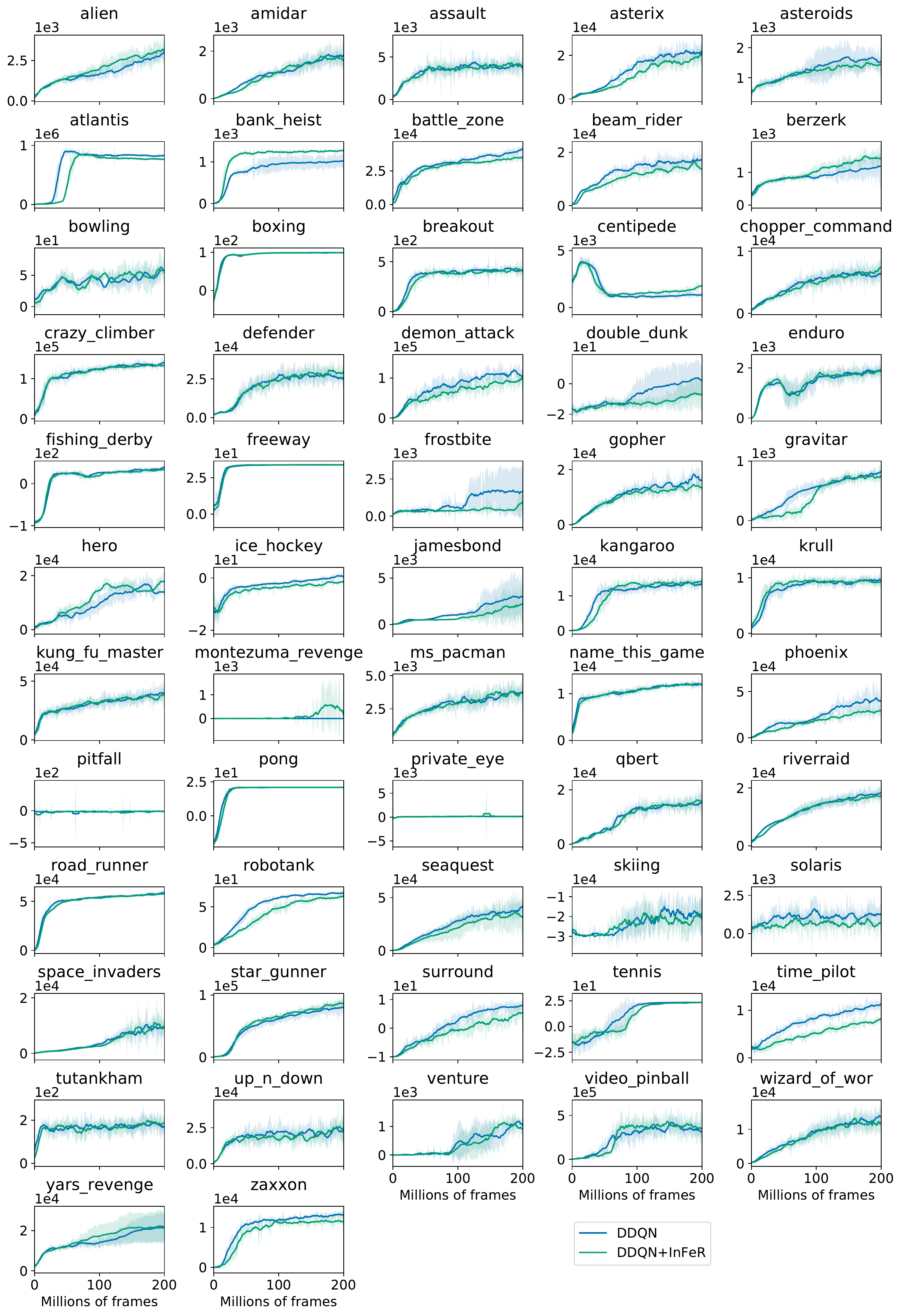}
    \caption{Evaluations of the effect of \pyoi on performance of a Double DQN agent. Overall we see a slight improvement in average performance over all games, alongside a significant improvement in Montezuma's Revenge.}
    \label{fig:ddqn-eval}
\end{figure}
\chapter{Interference and generalization}

\section{Proofs}

\subsection{Proofs of main results}
\label{apx:proofs}
\convergence*

\begin{proof}
Recall we assume the following dynamical system

\begin{align*}
    \partial_t V_t &= -(I - \gamma P^\pi)V_t + R 
    \intertext{Inducing the trajectory}
    V_t &= \exp( - t(I - \gamma P^\pi)) (V_0 - V^\pi) + V^\pi 
    \intertext{As we assume $P^\pi$ is diagonalizable, this implies that $(I - \gamma P^\pi)$ is also diagonalizable. Let $u_1, \dots, u_n$ denote the right eigenvectors of $P^\pi$ with corresponding eigenvalues $\lambda_1 \ge \dots \ge \lambda_n$. Let $V_0 = \sum \alpha^0_i u_i$. }
    V_t &= \sum \alpha_i^t u_i \\
    &= \exp (-t(I - \gamma P^\pi)) (\sum \alpha_i^0 - \alpha^\pi_i u_i) + \sum \alpha^\pi_i u_i \\
    &= \sum  \exp(-t(1-\gamma \lambda_i)) \bigg ( \sum  (\alpha_i^0 - \alpha_i^\pi  ) u_i + \sum \alpha_i^\pi u_i \bigg )
    \intertext{Now, we consider the value of $V_t - V^\pi$ along each coordinate. Note that we have not assumed an orthogonal eigenbasis, thus cannot speak directly to the norm of the projection of this difference onto the eigenspace corresponding to each eigenvector $\lambda_k$. However, treating the eigendecomposition as a basis, we can discuss how the coordinates $\alpha^t_i$ of the value function $V_t$ converge with respect to this basis.}
    |V_t - V^\pi|[i] = |\alpha^t_i - \alpha^\pi_i| &=   | \exp(-t(1-\gamma \lambda_i)) (\alpha_i^0 - \alpha_i^\pi) + \alpha_i^\pi  - \alpha^\pi_i| \\
    &=|\exp(-t(1-\gamma \lambda_i))  (\alpha_i^0 - \alpha_i^\pi) | = \exp(-t(1-\gamma \lambda_i)) | (\alpha_i^0 - \alpha_i^\pi) | 
\end{align*}
We conclude by noting that for large values of $\lambda_i$, the exponential term $\exp ( - t(1 - \gamma \lambda_i)) $ will decay more slowly as a function of $t$ than for smaller values of $\lambda_i$. Thus, these coordinates (which correspond to non-smooth functions over the state space) will converge fastest. When the eigenvectors form an orthogonal basis, as is the case for symmetric $P^\pi$, we can go further and observe that this convergence will apply to the norm of the projection of the value function into the corresponding eigenspace.   Thus for symmetric $P^\pi$, we obtain the following stronger convergence result, where $U_k$ denotes the eigenspace corresponding to the eigenvalue $\lambda_k$.
\begin{equation}
    \| \Pi_{U_k} ( V_t - V^\pi ) \| = \exp(-t (1 - \gamma \lambda_k)) \| \Pi_{U_k} (V_0 - V^\pi ) \|
\end{equation}
\end{proof}
\tderror*
Let $V_0 = \sum  \alpha_i v_i$. Then, letting $V_t$ be defined as in Equation~\ref{eq:td_dynamics}.
\begin{equation}
\TD(V_t) \leq \sum_{i=1}^n \exp(-2t(1-\gamma \lambda_i))( \alpha^\pi_i -  \alpha_i^0)^2 (1-\gamma \lambda_i)^2 \; .
\end{equation}
\begin{proof}

By our assumption on the diagonalizability of $P^\pi$, we can leverage the previous result on the coordinates of $V_t$.
\begin{align*}
    V_t - V^\pi &=  \sum  \exp(-t(1-\gamma \lambda_i)) \bigg ( \sum  (\alpha_i^0 - \alpha_i^\pi  ) u_i \bigg ) \\
    \intertext{We then bound the TD error as follows.}
    \|V_t - \gamma P^\pi V_t - R \|^2 &= \| V_t - \gamma P^\pi V_t + \gamma P^\pi V^\pi -\gamma P^\pi V^\pi -R \| \\
    &= \| V_t - \gamma P^\pi V^\pi - R -  \gamma P^\pi (V_t - V^\pi) \| \\
    \intertext{Since $V^\pi = R + \gamma P^\pi V^\pi$, we obtain the following.}
    &= \| (I -  \gamma P^\pi) (V_t - V^\pi )\|^2 \\
    &= \| \sum (1 - \gamma \lambda_k) (\alpha_i^t - \alpha_i^\pi) u_i \|^2 \\
    &\leq \sum (\alpha^\pi_i - \alpha^t_i)^2(1-\gamma \lambda_i)^2
\end{align*}
The remainder follows a straightforward substitution.
\end{proof}

\theoremsecond* 

\begin{proof}
While our prior analysis has considered the continuous time system $\ttheta_t$, this does not perfectly approximate the discrete system $\theta_t$. When a fixed step size is used, the first-order continuous-time approximation accrues error roughly proportional to $\alpha t$. We then follow the procedure of \citet{barrett2021implicit}, applying a Taylor expansion to the evolution of $\ttheta_t$ with respect to time. We will use the notation $\ttheta(t)$ to denote the explicit dependence of $\ttheta$ as a function of time.
\begin{align}
    \ttheta(\alpha t) &= \ttheta(0) + \sum \frac{(\alpha t)^n}{n!} \theta^{(n)}(0) \\
    &= \ttheta(0) + \alpha t f(\ttheta(0)) + \frac{(\alpha t)^2}{2} \nabla_\theta f \cdot f(\ttheta(0)) +O(\alpha^3)\\
    &= \ttheta(0) + \alpha t f(\ttheta(0)) + \frac{(\alpha t)^2}{2} f_1(\theta(0)) +O(\alpha^3) \\
    \intertext{Relating this back to the discrete system $\theta_t$}
    \theta_{1} &= \theta_0 + \alpha f(\theta_0) = \ttheta(0) + \alpha f(\ttheta(0)) \\
    \theta_1 &= \ttheta(1) - \frac{\alpha^2}{2}f_1(\ttheta(0)) + O(\alpha^3)
    \intertext{Thus, the system $\partial_t \check{\theta}_t = f(\check{\theta}_t) + \alpha^2/2 f_1( \check{\theta}_t)$ satisfies}
    \theta_1 &= \check{\theta}(1) + O(\alpha^3)
\end{align}

We begin by observing that $\nabla_\theta \| V_\theta - \square T V_\theta \|^2 = (V_\theta - T V_\theta) \cdot \nabla_\theta V_\theta = f(\theta)$. 
\begin{align}
   \theta_{} &= \theta_0 + \alpha n f(\theta_0) + (\alpha n) ^2/2 \nabla_\theta f (\theta_0) \cdot f(\theta_0) + O( (\alpha n)^3)  \\
   &= \theta_0  + \alpha n f(\theta_0) + \frac{(\alpha n)^2}{2} f_1(\theta_0) + O((n\alpha)^3)\\
   \intertext{We then express $f_1(\theta)$ as follows.}
   f_1(\theta_0) &= \nabla_\theta[ f (\theta_0)] \cdot [f(\theta_0)] \\
   &= [\nabla^2_w V_\theta \cdot ((\gamma P^\pi - I)V_\theta + r) + \nabla_\theta V_\theta \cdot ((\gamma P^\pi - I) \nabla_\theta V_\theta)][f(\theta)] \\
   &= [\nabla_\theta^2 V_\theta \cdot ( (\gamma P^\pi - I)V_\theta + r) + \nabla_\theta V_\theta \cdot \nabla_\theta V_\theta][f(\theta)] \\ & \quad + \gamma [\nabla_\theta V_\theta P^\pi \nabla_\theta V_\theta][f(\theta)] \\
   \intertext{Noting that the left hand side term is equal to the gradient of the gradient norm penalty for the stop-gradient version of the TD regression problem, we simplify as follows:}
   &= \frac{1}{2}\nabla_\theta \| \nabla_\theta \frac{1}{2}\|V_\theta - \square T^\pi V_\theta\|^2 \|^2 + \gamma [\nabla_\theta V_\theta \cdot P^\pi \cdot \nabla_\theta V_\theta][f(\theta)]
\end{align}
We note that, unlike in the stochastic gradient descent setting, $f_1$ does not correspond to a gradient of any function. Instead,  it corresponds to the second-order we would get for a frozen target, which corresponds to a gradient norm penalty, plus a term that measures the alignment of the gradients at each state and its expected successor. Intuitively, both of these terms minimize the `variance' in the loss induced by noisy, discrete gradient steps. The flatter loss surfaces induced by the gradient norm penalty will naturally lead to greater robustness to parameter perturbations. The gradient alignment term reflects the observation previously that non-smooth functions contribute the most to the TD error, and so encourages the first-order gradient effects on successive states to move in a similar direction. 

We note that this final observation seems to be at odds with the tendency for TD learning to encourage more tabular updates. Why would a second-order correction term which promotes flat minima and gradient alignment result in tabular updates? To answer this, we point to the tendency of TD targets to converge along. the non-smooth components of the value function first. We are therefore faced with finding a flat region of parameter space to fit a discontinuous function. A representation which succeeds at this will benefit from minimizing interference between states, as the gradients for one transition will be on average uncorrelated with even nearby other states. The gradient alignment penalty suggests that, while the implicit regularization will prefer flat minima, smooth interference patterns which move other states in a similar direction to the current state will be penalized less than non-smooth directions.
\end{proof}

\begin{restatable}{cor}{corr-second}
    The second-order dynamics push features towards precisely the worst direction w.r.t. stability. I.p. looking at the set of positive definite representations introduced by \citet{ghosh2020representations} we see
    \begin{equation}
        \{v : v^\top P^\pi v <  \gamma^{-1} \|v\|_{\Xi} \}
    \end{equation}
    whereas the optimal gradients for the second order term implicitly solve the following optimization problem
    \begin{equation}
        \min  \mathbb{E}_{x \sim \eta(x)}[g(x)^\top g(x) - \gamma g(x)^\top (P^\pi g)(x)]
    \end{equation}
    
\end{restatable}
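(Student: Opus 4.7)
The plan is to identify $f_1$ from Theorem~\ref{thm:second} with the gradient of an implicit regularizer on the per-state gradients $g(x) := \nabla_\theta V_\theta(x)$, and then show that this implicit regularizer is precisely the negative of the quadratic form that appears in the stability criterion of \citet{ghosh2020representations}. Recalling
\begin{equation*}
    f_1(\theta) = -\tfrac{1}{2}\nabla_\theta \|\nabla_\theta \TD(\theta)\|^2 + \gamma\,(\nabla_\theta V^\top P^\pi \nabla_\theta V)\,f(\theta),
\end{equation*}
I would first observe that, along a trajectory governed by $\partial_t\tilde\theta = f(\tilde\theta) + \tfrac{\alpha}{2}f_1(\tilde\theta)$, the contribution of the second-order term acts as a velocity proportional to the gradient of an effective potential. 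Writing the two summands in $f_1$ as (a) the semi-gradient of the squared norm of the TD gradient and (b) a correction that reflects how gradient updates shift the bootstrap target along the transition operator, I would rewrite each in terms of $g(x)$ evaluated under the sampling distribution $\eta$ of the training procedure.

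Second, I would expand $\tfrac{1}{2}\|\nabla_\theta \TD(\theta)\|^2$ by interpreting the TD semi-gradient as $\mathbb{E}_{x\sim\eta}[\delta(x)\,g(x)]$ where $\delta(x)$ is the Bellman error, and, after taking the $\theta$-gradient and dropping terms of order $\delta(x)^2$ (which vanish at a TD fixed point and are subdominant near it), collect the leading-order contribution as a functional of $g(x)$. I expect this to reduce, up to constants that can be absorbed into the step size $\alpha$, to a term proportional to $\mathbb{E}_{x\sim\eta}[g(x)^\top g(x)]$. The second summand, $\gamma(\nabla_\theta V^\top P^\pi \nabla_\theta V)$, is the natural pairing $\mathbb{E}_{x\sim\eta}[g(x)^\top (P^\pi g)(x)]$ once one writes the matrix product pointwise. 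Combining these two pieces yields that $f_1$ is (up to positive scaling) the negative gradient of
\begin{equation*}
    \mathcal{J}(g) \;:=\; \mathbb{E}_{x\sim\eta}\big[g(x)^\top g(x) - \gamma\, g(x)^\top (P^\pi g)(x)\big],
\end{equation*}
so that the second-order correction implicitly drives the representation to minimize $\mathcal{J}$, which is the stated claim.

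Finally, I would link $\mathcal{J}$ to the stability condition. The set of representations yielding convergent off-policy TD established by \citet{ghosh2020representations} is characterised by the inequality $v^\top P^\pi v < \gamma^{-1}\|v\|_\Xi$ for all $v$ in the span of the features, i.e.\ $\|v\|_\Xi - \gamma\,v^\top \Xi P^\pi v > 0$ once one rewrites the inner product in the appropriate weighted norm. Viewing $\mathcal{J}(g)$ as the expected value of the quantity $\|g(x)\|^2 - \gamma\,g(x)^\top (P^\pi g)(x)$, making $\mathcal{J}$ small is exactly making the stability margin small (or negative); minimizers push the feature gradients toward the boundary, and potentially the interior, of the non-stable region. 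A short argument comparing $\mathcal{J}(g)$ to the Rayleigh-type quotient $g^\top P^\pi g/\|g\|^2$ then shows that $\mathcal{J}$ is driven toward zero precisely when this ratio approaches $\gamma^{-1}$, which is the threshold in the \citet{ghosh2020representations} condition.

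The main obstacle I anticipate is matching the weighting: the stability condition uses the state-visitation weighted norm $\|\cdot\|_\Xi$, whereas the implicit regularizer inherits whatever sampling distribution $\eta$ governs the stochastic updates, and these only coincide in the on-policy case. To handle this cleanly I would state the result under the assumption $\eta=\Xi$ (on-policy sampling), and separately remark that in the off-policy setting an additional importance-weighting factor appears but the qualitative conclusion — that $f_1$ pulls $g$ toward the boundary of the stability set rather than away from it — is unchanged. A secondary subtlety is that $f_1$ is a gradient in \emph{parameter} space, while $\mathcal{J}$ is a functional in \emph{function} space; the chain rule identification above is only exact up to terms that vanish when the realized function space is closed under the parametrization, and I would note this as a caveat rather than attempting a full NTK-style formalization.
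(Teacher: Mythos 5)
The paper does not actually supply a proof of this corollary: it is stated immediately after the proof of Observation~\ref{thm:second} and is meant to be read directly off the form of $f_1$ derived there. The combined non-Hessian part of $\nabla_\theta f \cdot f$ is $-\bigl[\nabla_\theta V^\top (I - \gamma P^\pi)\nabla_\theta V\bigr] f(\theta)$, whose trace (under the sampling distribution $\eta$) is exactly $\mathbb{E}_{x\sim\eta}[g(x)^\top g(x) - \gamma g(x)^\top (P^\pi g)(x)]$, and the Ghosh--Bellemare stability set is exactly the set on which the corresponding quadratic form is positive. Your overall route --- identify the two terms of $f_1$ with the two terms of $\mathcal{J}$, then observe that $\mathcal{J}$ is the negation of the stability margin --- is the intended reading, and your remark about the $\eta$ versus $\Xi$ weighting is a legitimate caveat the paper glosses over.

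Two steps in your argument are genuinely wrong as written, however. First, you assert that $f_1$ is (up to scaling) the negative gradient of $\mathcal{J}$; the paper explicitly notes that ``$f_1$ does not correspond to a gradient of any function,'' and this is not a technicality: since $P^\pi$ is not symmetric in general, the function-space gradient of $g \mapsto g^\top P^\pi g$ involves $P^\pi + (P^\pi)^\top$, whereas the correction term in $f_1$ contains only $P^\pi$. The corollary's weaker ``implicitly solve'' phrasing is deliberate, and your gradient-flow framing cannot be made exact even in function space. Second, your reduction of $-\tfrac{1}{2}\nabla_\theta\|\nabla_\theta \TD(\theta)\|^2$ to a pure $\mathbb{E}[\|g(x)\|^2]$ penalty by ``dropping terms of order $\delta(x)^2$'' is backwards: the entire gradient-norm penalty is $O(\delta^2)$ (it equals $\|\sum_x \delta(x) g(x)\|^2$), so discarding $O(\delta^2)$ terms annihilates it rather than isolating $\mathbb{E}[\|g\|^2]$. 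The clean way to recover the stated objective is not to Taylor-expand the penalty in $\delta$ but to take the trace of the matrix $\nabla_\theta V^\top(I-\gamma P^\pi)\nabla_\theta V$ appearing in the un-split form of $\nabla_\theta f$, which produces both terms of $\mathcal{J}$ simultaneously and with consistent ($\delta$-free) weighting.
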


\ntk* 
\begin{proof}
We leverage the dynamics $\partial_t V_t = K(X,X) \nabla_\theta V_\theta \cdot ((\gamma P^\pi - I) + r)$ and follow the derivation of Section 5 of \citet{jacot2018neural}. 
\end{proof}

We can develop intuitions for the kernel gradient descent setting by considering the special case of linear function approximation, where $K(x_1, x_2) = \langle \phi(x_1), \phi(x_2) \rangle$ for some feature map $\phi$. For the moment, we will define $\Phi$ to be a matrix consisting of features for every state in the state space $X$ (i.e. we update all states in the mdp at once). We then obtain
\begin{align}
    \partial_t \mathbf{w}_t & = \alpha \Phi^\top (R^\pi + \gamma P^\pi \Phi \mathbf{w}_t - \Phi \mathbf{w}_t)  \, .
\end{align}
We can express the evolution of the value function constructed by multiplication of $\Phi$ and $w$ as follows.
\begin{align}
    \partial_t V_t &= (\partial_w V_t)^\top \partial_t w_t = \Phi \partial_t w_t \\
    &= -\Phi (\Phi^\top (I - \gamma P^\pi) \Phi) w \\
    &= - \Phi \Phi^\top (I - \gamma P^\pi) V_t \\
    &= - K (I - \gamma P^\pi) V_t
    \intertext{We further consider the dynamics of the value function on inputs outside of the set of states on which the Bellman updates are computed as follows.}
    \partial_t V_t(\xtest) &= (\partial_w V_t(\xtest))^\top \partial_t w_t \\
    &= - \phi(\xtest)^\top  \Phi^\top (I - \gamma P^\pi) V_t  \\
    &= - K(\xtest, \Xtrain) K(\Xtrain, \Xtrain)^{-1} \partial_t V_t
\end{align}

We now lift the assumption that all states are updated. In this more general kernel gradient descent setting, we let $K$ be a kernel as before, with $\tilde{K} = K(\Xtrain, \Xtrain)$ and $\kappa_{\xtest} = K(\xtest, \Xtrain)$. We then obtain the following dynamics
\begin{align}
    \partial_t V_t(\xtest) &= \kappa_{\xtest} \tilde{K}^{-1} \partial_t V_t(\Xtrain) \\
    \intertext{In particular, this results in the following trajectory.}
    V_t(\xtest) &= V_0(\xtest) + \kappa_{\xtest} \tilde{K}^{-1} [ V_t(\Xtrain) - V_0(\Xtrain)]
\end{align}

An interesting case study occurs when we consider, e.g., off-policy evaluation where the bootstrap targets used in the TD updates may not have been visited by the agent during training. This will be the case in many offline RL problems, where the action that would be selected by the policy we seek to evaluate was not taken by the behaviour policy, and so the agent leverages bootstrap targets which are not updated as part of the training set of states. In such cases, we will decompose the state space as $X = \Xtrain \oplus \Xtest$. The dynamics we get in this case look quite different from standard kernel regression, as the dynamics of the training states will depend on the predictions on the `test' states. To condense notation, we will use $T^\pi V_t$ to refer to an application of the Bellman operator $V_t \mapsto \gamma P^\pi V_t + R^\pi$.  

\begin{align}
    \partial_t V_t(\Xtrain) &= \Phi_{\train} \Phi_\train^\top (  (T^\pi V_t)(\Xtrain) - V_t(\Xtrain)) \\
    \partial_t V_t(\Xtest) &= \Phi_{\test} \Phi_\train^\top  ((T^\pi V_t)(\Xtrain) - V_t(\Xtrain))
    \intertext{We note that $(T^\pi V_t)(\Xtrain)$ depends on both $V(\Xtrain)$ and $V(\Xtest)$ due to the application of the Bellman operator $T^\pi$. We thus end up with the following joint system.}
    \partial_t V_t(\Xtrain \oplus \Xtest) &= \Phi_{\test} \Phi_\train^\top  ((T^\pi V_t)(\Xtrain) - V_t(\Xtrain)) \oplus \Phi_{\train} \Phi_\train^\top (  (T^\pi V_t)(\Xtrain) - V_t(\Xtrain)) \\
    \partial_t V_t(\Xtrain \oplus \Xtest) &= (\Phi_{\test} \oplus \Phi_{\train} ) \Phi_\train^\top (  (T^\pi V_t)(\Xtrain) - V_t(\Xtrain)) 
    \intertext{Using a non-standard notation of $K_1 \oplus K_2:= X \mapsto K_1(X) \oplus K_2(X)$, we can then rewrite the above in terms of the dot product kernel $K(x,x')$ as follows.}
    \partial_t V_t(X_{\mathrm{all}}) &= (\tilde{K} \oplus \kappa_{\xtest})  [ (T^\pi V_t - V_t) (\Xtrain)]
\end{align}       
We emphasize that while this at first looks as though the dynamics are independent of the value $V_t(\Xtest)$, this is an artefact of the Bellman operator notation $(T^\pi V_t) (X_t)$, which hides the dependence of the Bellman targets $ T^\pi V_t$ on $\Xtest$. In particular, we can write $(T^\pi V_t)(X_t) = \Pi_{\Xtrain}[\gamma P^\pi V_t (\Xtrain \oplus \Xtest) + R^\pi]$, which makes this dependence more explicit but is less succinct.
\section{Experiment details}
\label{apx:details}
\subsection{Estimation of update rank}
\label{appx:update-details}
To estimate the update rank of an agent, we sample $k$ transitions from the agent's replay buffer and compute the matrix $A(\theta)$ as described in Section~\ref{sec:rank-exps}. We use the agent's current optimizer state and its current parameters in this computation. We then take the singular value decomposition of $A$ to obtain $k$ singular values $S = \{\sigma_1, \dots, \sigma_k\}$. We then threshold using the numerical approach taken in prior works \citep{maddox2020rethinking}, and compute the size of the set $S_{\epsilon} = \{ \sigma \in S : \sigma > \epsilon \max(S) \}$. This allows us to ignore directions of near-zero variation in the update matrix. In practice, we use $\epsilon = 0.1$. 

Because the Q-functions learned by value-based deep RL agents are vector- rather than scalar-valued functions of state, and our estimator depends on an 2-dimensional update matrix, we must make a choice on how to represent the change in the state-value function. We considered taking the maximum over actions, the mean over actions, selecting a fixed action index, and selecting the action taken in the transition on which the update was computed, and found that both choices produced similar trends. In all evaluations in this paper, Q-functions are reduced using the max operator. We apply the same approach for distributional agents by taking the expectation over the distribution associated with each state-action pair. 

To evaluate the policy-based agents, whose outputs correspond to distributions over actions, we compute the norm of the difference in the output probability distributions for each state in lieu of taking the difference of output values. I.e., the entry $A_{i,j} = \| p_\theta(x_j) - p_{\theta_i}(x_j) \|$, where the discrete probability distribution $p_\theta$ is taken as a vector. 
\subsection{ProcGen}\label{appx:procgen-details}
\begin{figure}
    \centering
    \includegraphics[width=0.35\linewidth]{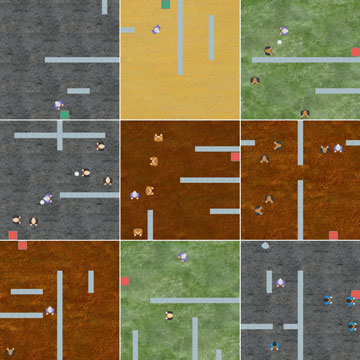}
    \caption{Example levels from the dodgeball environment.}
    \label{fig:procgen-viz}
\end{figure}
The ProcGen benchmark consists of sixteen procedurally generated environments. Each environment consists of a set of randomly generated levels, of which a fixed subset are used for training and a disjoint subset are used for evaluation. Levels differ superficially in their observations and initial sprite layouts but retain the same underlying structure, as can be seen in Figure~\ref{fig:procgen-viz}. The observation space is a box space with the RGB pixels the agent sees in a numpy array of shape (64, 64, 3).

Our PPO and DAAC agents use the same hyperparameters and implementation as is provided by \citet{raileanu2021decoupling}. Our behaviour cloning objective minimizes the KL divergence between the distillation agent the pretrained agent's policies, with an entropy bonus equal to that used to train the original PPO agent.  
\subsection{Atari}\label{appx:atari-details}

We additionally perform evaluations on environments from the Atari benchmarks. Due to computational constraints, we consider only a subset of the entire benchmark. We obtain a mixture of easy games, such as pong and boxing, and more challenging games like seaquest, where we measure difficulty by the time it takes for the agent to meet human performance. For some experiments, we used the sparse-reward environment Montezuma's Revenge.

In our distillation experiments, we train the original agent for 50M frames using $\epsilon$-greedy exploration with $\epsilon = 0.1$, and train the distillation agents for a number of updates equivalent to 10M frames of data collected online. We base our implementation off of the open-source implementations in \citet{ostrovski2021the}. 

For our behaviour cloning objective, we use the same architecture as is used for DQN, but feed the final layer of actions into a softmax to obtain a probability distribution over actions, which we denote as $P_\theta(a|x)$. Given a state-action pair taken by the target agent, we implement the following behaviour cloning loss for distillation
\begin{equation}
    \ell(\theta, x_i, a_i) = -\log P_\theta(a_i | x_i) -0.1 H(P_\theta(\cdot | x_i))
\end{equation}
where $H$ denotes the entropy of a distribution. We use a replay capacity of 1e6 and allow the pre-trained agent to collect additional data during distillation to further increase the training set size of the distilled agents.

\section{Additional numerical evaluations}
\label{appx:numerical}
We provide additional numerical evaluations to provide insight into the theoretical results of Section~\ref{sec:learning-smoothness}.
\subsection{Fourier analysis}
We begin by studying the Fourier decomposition of value and reward functions in popular Atari domains by treating the value function as a function of \textit{time} rather than as a function of \textit{observations}. In this sense, the Fourier decomposition is measuring the continuity of the value function with respect to time and so is a closer approximation of the notion of smoothness we focus on in Section~\ref{sec:vf_gen}. We show our evaluations in Figure~\ref{fig:atari_fourier}.
\begin{figure}
    \centering
    \includegraphics[width=\linewidth]{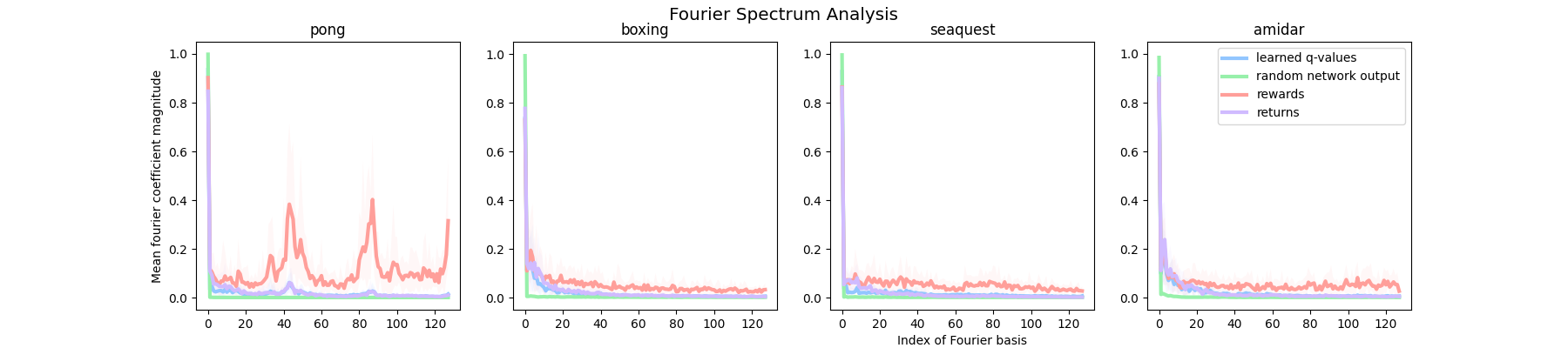}
    \includegraphics[width=\linewidth]{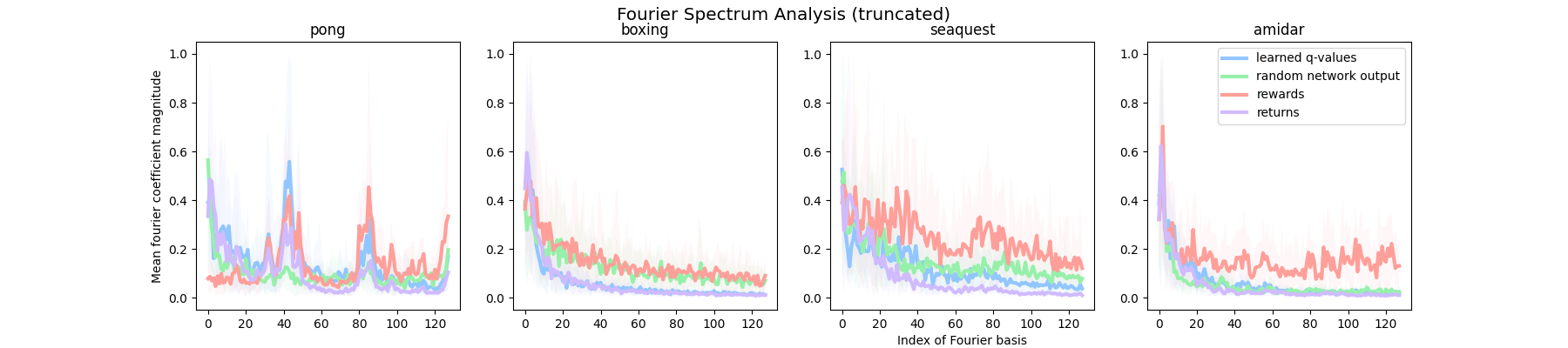}
    \caption{Fourier decomposition of Atari value functions when viewed as a function of time. We sample $k$ consecutive states from the replay buffer and compute the predicted value on each state (fixing an arbitrary action) to get a function $V: \{1, \dots, k \} \rightarrow \mathbb{R}$. We then compute the Fourier decomposition of this function. The top row shows indices $k=0 \dots 50$, while the bottom row omits the $k=0$ index (the constant function) to better illustrate the rate of decay of the spectrum of each function.}
    \label{fig:atari_fourier}
\end{figure}

\subsection{Kernel gradient descent}
\label{appx:kernel-gd}
We include an illustration of the kernel gradient descent dynamics described in Section~\ref{sec:fa_gen} in Figure~\ref{fig:kernel-dynamics}. We run our evaluations using a radial basis function (RBF) kernel of varying lengthscale, with shorter lengthscales corresponding to weaker generalization between states. While the shorter lengthscale corresponds to more stable learning dynamics and better fitting of the value function on the training set, it also induces greater value approximation error on the test states. In contrast, the longer lengthscales result in better generalization to novel test states under Monte Carlo dynamics, but result in divergence for large values of $\gamma$.
\begin{figure}
    \centering
    \includegraphics[width=0.49\linewidth]{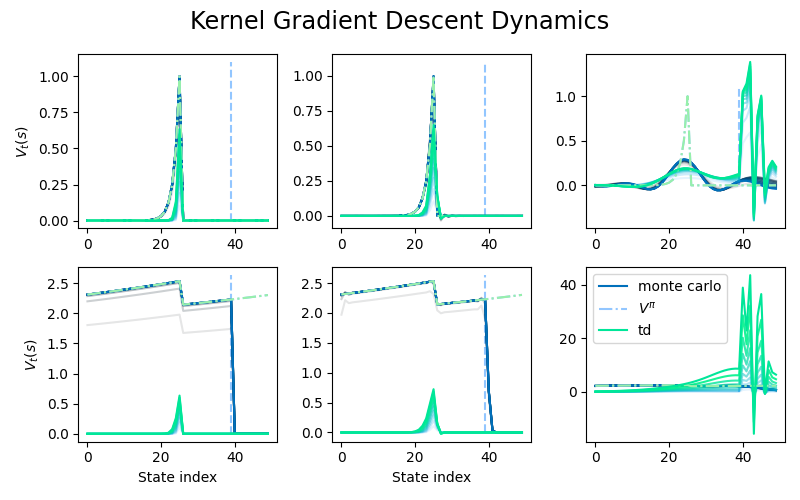} \vline 
    \includegraphics[width=0.49\linewidth]{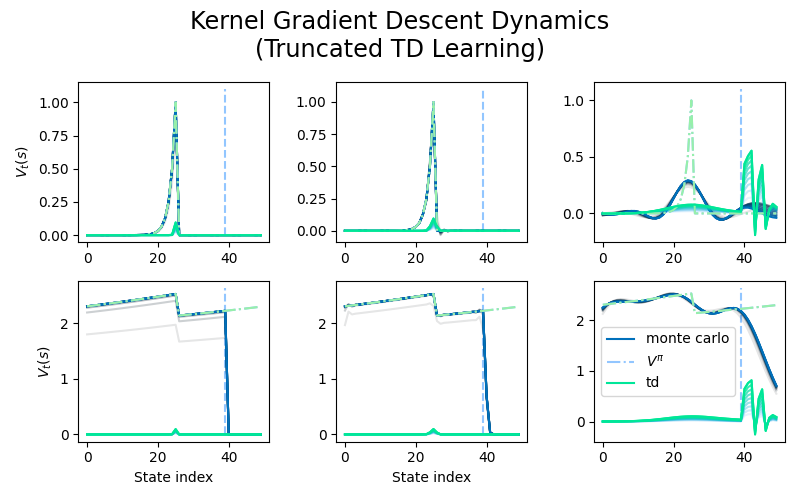}
    \caption{Numerical evaluations of kernel gradient descent with an RBF kernel. The MDP in question is a "circle MDP" whose states are integers $n \in \{1, \dots, 50\}$. We assume the agent is `trained' on states 1 to 40, and does not perform value function updates on the final ten states, use the policy which always takes the agent from state $n$ to $n+1 \mod 50$, and set a single reward at state 25. Each row corresponds to a different value of the discount factor $\gamma$: the top corresponds to $\gamma = 0.5$, and the bottom to $\gamma = 0.99$. Each column corresponds to the lengthscale which parameterizes the kernel, going left to right: 0.01, 1.0, and 100. The left hand side and right hand side are distinguished by the number of update steps which the TD dynamics are evaluated for. The LHS runs TD for  only 20 steps, while the RHS runs it for 100 steps. MC updates are run for 1500 steps on both figures. We see that for $\gamma = 0.99$, the larger-lengthscale kernel predictions diverge under TD dynamics, though not Monte Carlo. The Monte Carlo dynamics further nicely illustrate the trade-off between generalizing out of the training set and ability to fit the discontinuities of the value function on the training set. The larger lengthscale has lower MSE from the value function on the test set, but fails to fit the discontinuity  of the value function at the reward state. Meanwhile, the smaller lengthscales easily fit the value function on the training set but predict zero for all over states. }
    \label{fig:kernel-dynamics}
\end{figure}

Additionally, as promised in Section~\ref{sec:fa_gen}, we illustrate the role of smooth eigenfunctions in generalization in Figure~\ref{fig:kernel-generalization}. To produce this figure, we randomly generate an unweighted graph and then construct an MDP whose dynamics correspond to a random walk on this graph. We consider the generalization error of a kernel regression process where the kernel $K_S$ is of the form $ K_S(x,y) = \sum_{i \in S} v_{\lambda_i}(x) v_{\lambda_i}(y)$ for some $S \subseteq \mathrm{spec}(P^\pi)$. In the right-hand-side plot of Figure~\ref{fig:kernel-generalization}, we set $S=\{1, \dots, 20\}$, so that our analysis concentrates on smooth eigenfunctions. We then consider the generalization error of this smooth kernel when we only regress on a subset of the state space selected uniformly at random\footnote{Because the MDP-generating process is invariant to permutations of the state indices, we sample the indices $\{1, \dots, \lfloor |\states| \times \mathrm{training fraction} \rfloor \}$, and average over randomly generated MDPs. }. We study the effect of varying the size of this set, i.e. the fraction of states in the training set, in Figure~\ref{fig:kernel-generalization}, in order to quantify the degree to which additional information about the value function translates to improved generalization.
We consider three regression problems: regression on $V^\pi$, regression on the projection of $V^\pi$ onto the span of $T = \{v_1, \dots, v_{20} \}$, and $B = \{v_{n-19}, \dots, v_{n} \}$. Unsurprisingly, we see that the smooth kernel is able to improve its generalization performance as the size of the training set increases when it is set to regress $V^\pi$ or $\Pi_{T} V^\pi = V^\pi_T$. However, when the kernel regresses only on the projection of $V^\pi$ onto the non-smooth eigenvectors, we do not see a benefit of adding additional training points: because there is no information about the smooth components of the function in the targets, adding additional data points will not help to improve regression accuracy. The left hand side of the figure shows similarly that fitting local information in the form of $n$-step returns for small $n$ also does not provide the kernel with sufficient information for it to be able to extrapolate and improve its generalization error as the size of the training set increases.

\begin{figure}
    \centering
    \includegraphics[width=0.8\linewidth]{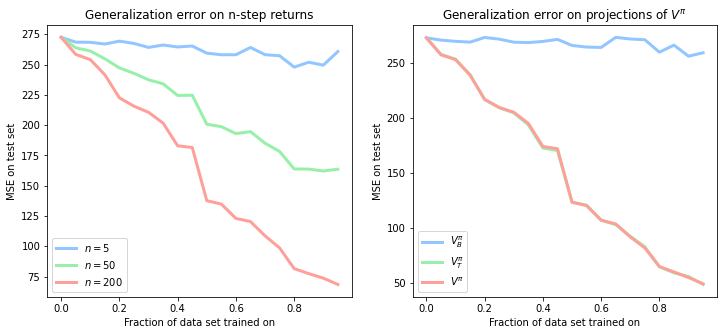}
    \caption{Generalization of predicted function under kernel regression using $n$-step return targets evaluated on a random subset of states (left), and projecting value function onto top or bottom eigenvectors of $P^\pi$ (right). We see a similar trend where for larger $n$ (corresponding to smoother targets), the kernel regression method generalizes better with increasing dataset sizes. For smaller $n$ and for the projection of $V^\pi$ onto non-smooth eigenvectors, adding additional data points does not improve generalization performance.}
    \label{fig:kernel-generalization}
\end{figure}
\section{Additional empirical results}
\label{apx:more-results}
\subsection{Additional value distillation results}
We consider three different types of regression to the outputs of the pre-trained network, along with two more traditional bootstrapping methods for offline RL. \texttt{Q-regression} regresses the outputs of the distilled network to those of the pre-trained network for every action. \texttt{qa-regression} only  does q-value regression on the action taken by the pre-trained agent. \texttt{adv-regression} regresses on the advantage function (computed as the q-value minus the mean over all actions) given  by the pre-trained agent; \texttt{qr} does quantile regression q-learning on the offline data; \texttt{double-q} performs a standard double q-learning update on the offline data.

We find that all of these methods obtain an initial update rank significantly below that of the pre-trained network when they begin training, which increases over time. Regression to the advantages obtains a significantly lower update rank than any other method, suggesting that the advantage function may be much smoother than the action-value function. With respect to performance on the original environment, we see that the methods which use all action values at every update obtain significantly higher performance than those which only update a single action at a time. This improvement in performance is not mediated by an auxiliary task effect or an  increase in the network's ability to distinguish states: the advantage regression network attains low update rank  but high performance, while the qr-regression task provides a great deal of information to the representation but is not competitive with the q-regression network. 
\begin{figure}
    \centering
    \includegraphics[width=0.957\linewidth]{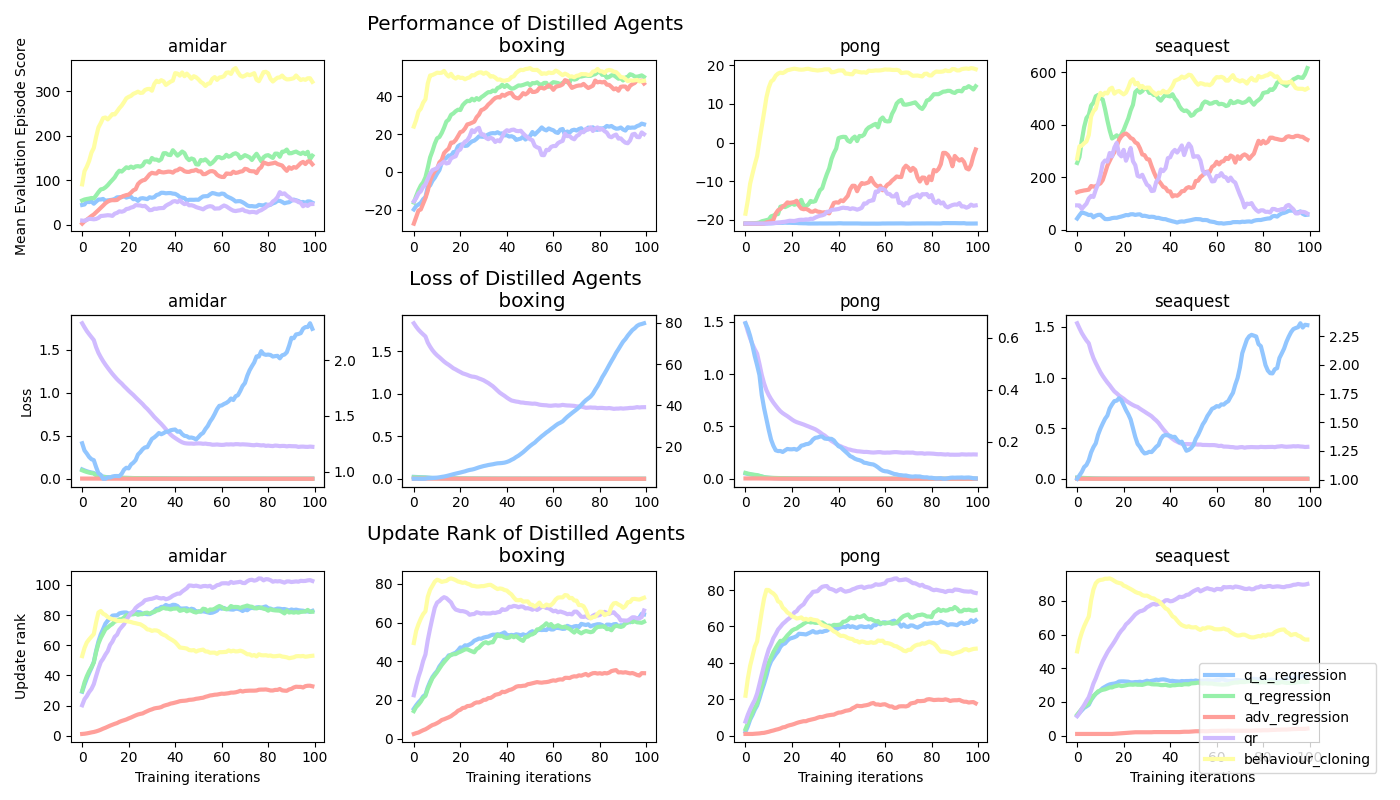}
    \caption{Results from post-training distillation on a variety of objectives. We note that advantage regression tends to exhibit the lowest update rank, with the qr agent tending to exhibit the highest update rank and the q-regression objectives falling somewhere in between. Because the behaviour cloning objective minimizes a cross-entropy loss rather than a regression loss, further investigation is required to understand how the trajectory of its update dimension differs from those of the regression objectives.}
    \label{fig:tandem-apx}
\end{figure}
\subsection{More detailed update trajectories}

We include a more detailed view of the update matrices obtained by DQN and C51 agents during the first 7 million frames of training, roughly 5\% of the training budget, in Figure~\ref{fig:updates-long}. We see that even early in training, the DQN and C51 agents both exhibit significant overfitting behaviour. Note that states are sampled uniformly at random from the replay buffer, and then assigned an index based on the output of a clustering algorithm to improve readability of the figures.

\begin{figure}
    \centering
    \includegraphics[width=0.48\linewidth]{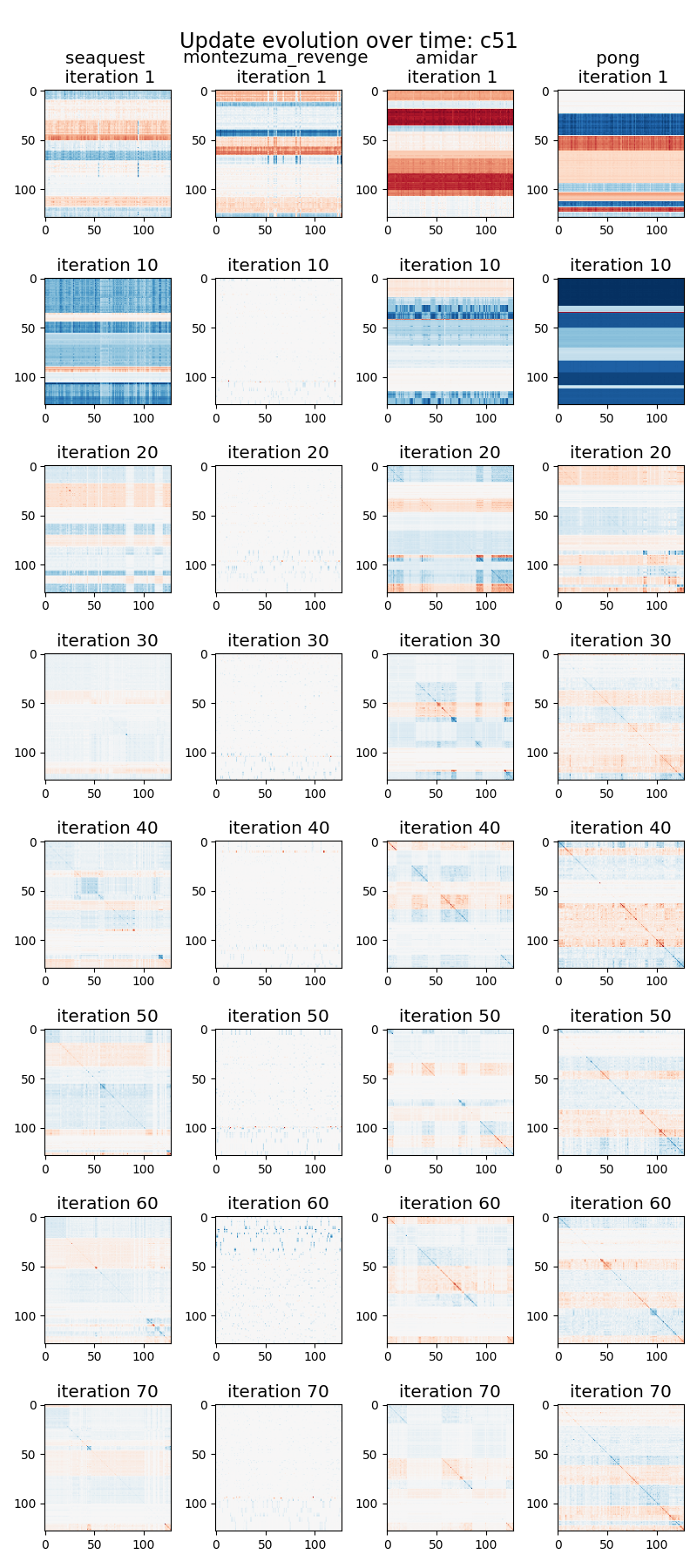}
    \vline
    \includegraphics[width=0.48\linewidth]{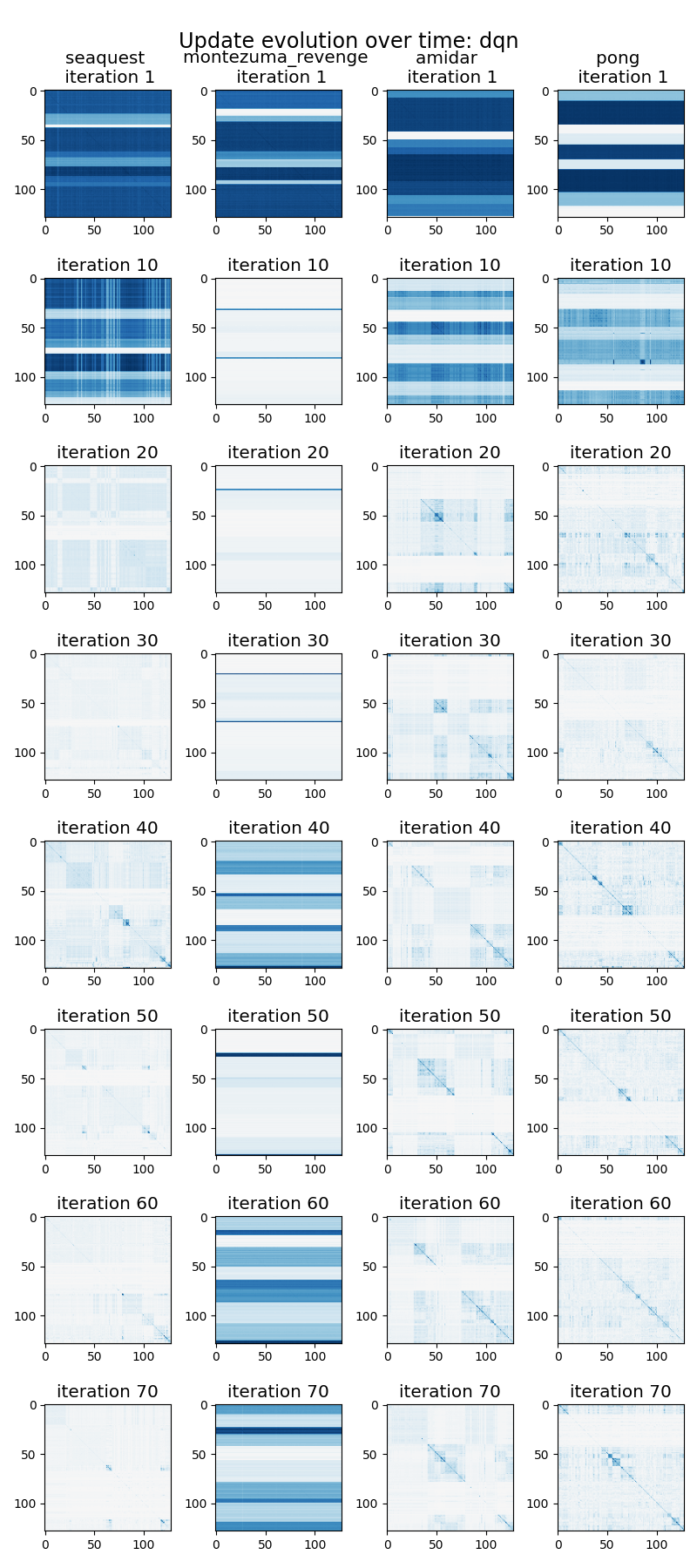}
    \caption{Update matrices for distributional and DQN agents on four games from the Atari suite, chosen to represent a range of reward densities and difficulties. Each iteration corresponds to 1e5 training frames.}
    \label{fig:updates-long}
\end{figure}

\chapter{Generalization across environments}

\section{Implementation details}
\subsection{Model learning: rich observations}
\label{app:model_nonlinear_implementation}
For the model learning experiments we use an almost identical encoder architecture as in~\citet{deepmindcontrolsuite2018}, with two more convolutional layers to the convnet trunk. Secondly, we use \texttt{ReLU} activations after each convolutional layer, instead of \texttt{ELU}. We use kernels of size $3 \times 3$ with $32$ channels  for all the convolutional layers and set stride to $1$ everywhere, except of the first convolutional layer, which has stride $2$. We then take the output of the convolutional net and feed it into a single fully-connected layer normalized by \texttt{LayerNorm}~\citep{ba2016layernorm}. Finally, we add \texttt{tanh} nonlinearity to the $50$ dimensional output of the fully-connected layer.

The decoder consists of one fully-connected layer that is then followed by four deconvolutional layers. We use \texttt{ReLU} activations after each layer, except the final deconvolutional layer that produces pixels representation. Each deconvolutional layer has kernels of size $3 \times 3$ with $32$ channels and stride $1$, except of the last layer, where stride is $2$.

The dynamics and reward models are all MLPs with two hidden layers with 200 neurons each and \texttt{ReLU} activations.

\subsection{Reinforcement learning}
\label{app:rl_implementation}
For the reinforcement learning experiments we modify the Soft Actor-Critic PyTorch implementation by \citet{pytorch_sac} and augment with a shared encoder between the actor and critic, the general model $f_s$ and task-specific models $f_{\eta}^e$. The forward models are multi-layer perceptions with ReLU non-linearities and two hidden layers of 200 neurons each. The encoder is a linear layer that maps to a 50-dim hidden representation. We also use L1 regularization on the $S$ latent representation. We add two additional dimensions to the state space, a spurious correlation dimension that is a multiplicative factor of the last dimension of the ground truth state, as well as an environment id. We add Gaussian noise $\mathcal{N}(0, 0.01)$ to the original state dimension, similar to how \citet{arjovsky2019invariant} incorporate noise in the label to make the task harder for the baseline.

Soft Actor Critic (SAC)~\cite{haarnoja2018sac} is an off-policy actor-critic method that uses the maximum entropy framework to derive soft policy iteration. At each iteration, SAC performs soft policy evaluation and improvement steps. The policy evaluation step fits a parametric soft Q-function $Q(x_t, a_t)$  using transitions sampled from the replay buffer $\mathcal{D}$ by minimizing the soft Bellman residual,
\begin{equation*}
    J(Q) = \mathbb{E}_{(x_t, x_t, r_t, x_{t+1}) \sim \mathcal{D}} \bigg[ \bigg(Q(x_t, a_t) - r_t - \gamma \Bar{V}(x_{t+1})\bigg)^2  \bigg].
\end{equation*}
The target value function $\Bar{V}$ is approximated via a Monte-Carlo estimate of the following expectation,
\begin{equation*}
    \Bar{V}(x_{t+1}) = \mathbb{E}_{a_{t+1} \sim \pi} \big[\Bar{Q}(x_{t+1}, a_{t+1}) - \alpha  \log \pi(a_{t+1}|x_{t+1}) \big],
\end{equation*}
where $\bar{Q}$ is the target soft Q-function parameterized by a weight vector obtained from an exponentially moving average of the Q-function weights to stabilize training. The  policy improvement step then attempts to project a parametric policy $\pi(a_t|x_t)$  by minimizing KL divergence between the  policy and a Boltzmann distribution induced by the Q-function, producing the following objective,
\begin{equation*}
    J(\pi)= \mathbb{E}_{x_t \sim \mathcal{D}} \bigg[ \mathbb{E}_{a_t \sim \pi} [\alpha \log (\pi(a_t | x_t)) - Q(x_t, a_t)] \bigg].
\end{equation*}

We provide the hyperparameters used for the RL experiments in \cref{table:rl_hyper_params}.

\begin{table}[hb!]
\centering
\begin{tabular}{|l|c|}
\hline
Parameter name        & Value \\
\hline
Replay buffer capacity & $1000000$ \\
Batch size & $1024$ \\
Discount $\gamma$ & $0.99$ \\
Optimizer & Adam \\
Critic learning rate & $10^{-5}$ \\
Critic target update frequency & $2$ \\
Critic Q-function soft-update rate $\tau_{\textrm{Q}}$ & 0.005 \\
Critic encoder soft-update rate $\tau_{\textrm{enc}}$ & 0.005 \\
Actor learning rate & $10^{-5}$ \\
Actor update frequency & $2$ \\
Actor log stddev bounds & $[-5, 2]$ \\
Encoder learning rate & $10^{-5}$ \\
Decoder learning rate & $10^{-5}$ \\
Decoder weight  decay & $10^{-7}$  \\
L1 regularization weight & $10^{-5}$ \\
Temperature learning rate & $10^{-4}$ \\
Temperature Adam's $\beta_1$ & $0.9$ \\
Init temperature & $0.1$ \\
\hline
\end{tabular}\\
\caption{\label{table:rl_hyper_params} A complete overview of used hyper parameters.}
\end{table}


\chapter*{Contributions to joint-authored work}
The research in this thesis is the product of a number of valuable collaborations. 

\textbf{Chapter 3} is based on work presented at the NeurIPS 2019 workshop on machine learning with guarantees \citep{lyle2020benefits}. All empirical results in this section were proposed and implemented by me. While I produced an initial analysis of PAC-Bayes bounds under symmetries and showed a preliminary version of Theorem~\ref{lemma:KL:gen} for finite groups independently, I worked closely with Benjamin Bloem-Reddy to generalize these initial results to the form presented in this thesis. 

\textbf{Chapter 4} is principally built on the NeurIPS 2020 paper \citep{lyle2020bayesian}, but also draws on some of the insights developed in a tandem work led by Binxin Ru and myself which was presented at NeurIPS 2021 \citep{ru2020revisiting}. I was responsible for all theoretical and empirical results concerning linear models and gradient alignment, while Lisa Schut extended and improved on my initial toy experiments on neural networks to generate Figures~\ref{fig:mod_select_dnn} and \ref{fig:mod_select_dnn_parallel}. Figure~\ref{fig:tse-variance} was produced independently by me, but was inspired by analysis initially performed by Binxin Ru. 

\textbf{Chapter 5} was written in close collaboration with Mark Rowland and Will Dabney, and the initial research question sparking this project was brainstormed during joint meetings involving all authors. I ran the evaluations for Figure~\ref{fig:feature-viz} and performed the evaluations on linear value function evolution in Appendix~\ref{sec:ensemble-dynamics}. Mark proposed the continuous-time formalism and proved the initial subspace convergence result for a single value function, while I proposed the `exact feature update' framework and proved the results concerning auxiliary tasks and ensemble prediction.  

\textbf{Chapter 6} is based on work I led during an internship at DeepMind \citep{lyle2021understanding}. I conducted additional analysis into the learned representations of the agents used for the empirical results of that work, and eventually formulated the main hypotheses of this chapter based on that analysis. I wrote and ran the experiment code for the supervised and Atari benchmarks. My co-authors ran code I had written during my internship after I returned to Oxford to generate final versions of several figures, and provided helpful discussion throughout the project. 

\textbf{Chapter 7} is based on \citep{lyle2022generalization}, for which I was the lead author. I was responsible for the implementations in all empirical results and for the proposal and initial proofs of the theoretical results. The other authors on the paper provided feedback on early drafts of the paper and recommendations for interesting experimental conditions to pursue. 

\textbf{Chapter 8} is based on a paper co-first-authored by myself and Amy Zhang. I stated and proved all theoretical results included in the thesis, proposed the algorithm for linear state abstractions, and implemented the linear model experiments. Both Amy and I worked together to propose the nonlinear MISA architecture and training objective.

\bibliographystyle{plainnat}
\bibliography{references.bib}
\end{document}